\newlength{\dhatheight}
\newcommand{\hhatDS}[1]{%
    \settoheight{\dhatheight}{\ensuremath{\hat{#1}}}%
    \addtolength{\dhatheight}{-0.35ex}%
    \hat{\vphantom{\rule{1pt}{\dhatheight}}%
    \smash{\hat{#1}}}}
\newcommand{\hhatTS}[1]{%
    \settoheight{\dhatheight}{\ensuremath{\hat{#1}}}%
    \addtolength{\dhatheight}{-0.35ex}%
    \hat{\vphantom{\rule{1pt}{\dhatheight}}%
    \smash{\hat{#1}}}}    
\newcommand{\hhatS}[1]{%
    \settoheight{\dhatheight}{\ensuremath{\scriptstyle{\hat{#1}}}}%
    \addtolength{\dhatheight}{-0.175ex}%
    \hat{\vphantom{\rule{1pt}{\dhatheight}}%
    \smash{\hat{#1}}}}
\newcommand{\hhatSS}[1]{%
    \settoheight{\dhatheight}{\ensuremath{\scriptscriptstyle{\hat{#1}}}}%
    \addtolength{\dhatheight}{-0.07ex}%
    \hat{\vphantom{\rule{1pt}{\dhatheight}}%
    \smash{\hat{#1}}}}
\newcommand{\hhat}[1]{\mathchoice{\hhatDS{#1}}{\hhatTS{#1}}{\hhatS{#1}}{\hhatSS{#1}}}
\newcommand{\Px}{\mathcal{P}}
\newcommand{\PXY}{\mathcal{P}_{XY}}
\newcommand{\tsyba}{\alpha}
\newcommand{\tsybca}{a}
\newcommand{\TN}{{\rm TN}}
\newcommand{\BN}{{\rm BN}}
\newcommand{\RE}{{\rm RE}}
\newcommand{\AG}{{\rm AG}}
\newcommand{\BE}{{\rm BE}}
\newcommand{\DL}{{\rm BC}}
\newcommand{\RR}{{\rm RR}}
\newcommand{\s}{\mathfrak{s}}
\newcommand{\XTD}{{\rm XTD}}
\newcommand{\TD}{{\rm TD}}
\newcommand{\XPTD}{{\rm XPTD}}
\newcommand{\dc}{\theta}
\newcommand{\supdc}{\hhat{\dc}}
\newcommand{\infsplit}{\hhat{\rho}}
\newcommand{\basicsplit}{\bar{\rho}}
\newcommand{\finitesplit}{\mathring{\rho}}
\newcommand{\dd}{D}
\newcommand{\Epsilon}{\mathscr{E}}
\newcommand{\bound}{\beta}
\newcommand{\LC}{\Lambda}
\newcommand{\SC}{\mathcal{M}}
\newcommand{\vc}{d}
\newcommand{\covering}{{\cal{N}}}
\newcommand{\G}{{\cal{G}}}
\newcommand{\Log}{{\rm Log}}
\newcommand{\polylog}{{\rm polylog}}
\newcommand{\BorelX}{{\cal B}_{\X}}
\newcommand{\conf}{\delta}
\newcommand{\eps}{\varepsilon}
\newcommand{\X}{\mathcal X}
\newcommand{\Y}{\mathcal Y}
\newcommand{\alg}{\mathcal A}
\renewcommand{\H}{\mathcal H}
\newcommand{\U}{\mathcal U}
\newcommand{\target}{f^{\star}}
\newcommand{\er}{{\rm er}}
\newcommand{\Ball}{{\rm B}}
\newcommand{\ind}{\mathbbm{1}}
\newcommand{\C}{\mathbb{C}}
\newcommand{\sign}{{\rm sign}}
\newcommand{\T}{\mathcal T}
\renewcommand{\P}{\mathbb P}
\newcommand{\nats}{\mathbb{N}}
\newcommand{\reals}{\mathbb{R}}
\newcommand{\E}{\mathbb{E}}
\newcommand{\Data}{\mathcal{Z}}
\newcommand{\DataX}{\mathbb{X}}
\newcommand{\DataY}{\mathbb{Y}}
\newcommand{\DIS}{{\rm DIS}}
\newcommand{\argmax}{\mathop{\rm argmax}}
\newcommand{\argmin}{\mathop{\rm argmin}}
\newcommand{\Dset}{\mathbb{D}}
\newcommand{\supp}{{\rm supp}}
\newcommand{\RQCAL}{Algorithm 1}
\newcommand{\Filter}{Subroutine 1}
\newcommand{\RealizableAlg}{Algorithm 0}
\newcommand{\MembHalvingII}{{\sc Memb-Halving-2}}
\newcommand{\truV}{V^{\star}}
\newcommand{\Split}{{\rm Split}}
\newsavebox{\savepar}
\newenvironment{bigboxit}{\begin{center}\begin{lrbox}{\savepar}
\begin{minipage}[h]{5.8in}
\normalfont
\begin{flushleft}}
{\end{flushleft}\end{minipage}\end{lrbox}\fbox{\usebox{\savepar}}
\end{center}}
\newcommand{\vast}{\bBigg@{3}}
\newcommand{\Vast}{\bBigg@{4}}
\renewenvironment{proof}[1][]{\par\noindent{\bf Proof #1\ }}{\hfill\BlackBox\\[2mm]}
\begin{document}

\title{Minimax Analysis of Active Learning}

\author{%
\name Steve Hanneke \email steve.hanneke@gmail.com \\
\addr Princeton, NJ 08542\\
\name Liu Yang \email liuy@cs.cmu.edu \\
\addr IBM T. J. Watson Research Center, Yorktown Heights, NY 10598%
}

\editor{}

\maketitle

\begin{abstract}
This work establishes distribution-free upper and lower bounds on the 
minimax label complexity of active learning with general hypothesis classes, 
under various noise models.  
The results reveal a number of surprising facts.
In particular, under the noise model of Tsybakov (2004), 
the minimax label complexity of active learning with a VC class is always
asymptotically smaller than that of passive learning, and is 
typically significantly smaller than the best
previously-published upper bounds in the active learning literature.
In high-noise regimes, it turns out that all active learning
problems of a given VC dimension have roughly the same minimax label complexity, 
which contrasts with well-known results for bounded noise.
In low-noise regimes, we find that the label complexity 
is well-characterized by a simple combinatorial complexity measure we call
the \emph{star number}.
Interestingly, we find that almost all of the complexity measures previously explored 
in the active learning literature have worst-case values exactly equal to the star number.
We also propose new active learning strategies that nearly achieve these minimax label complexities.
\end{abstract}

\begin{keywords}
Active Learning, Selective Sampling, Sequential Design, Adaptive Sampling, Statistical Learning Theory, Margin Condition, Tsybakov Noise, Sample Complexity, Minimax Analysis
\end{keywords}

\section{Introduction}
\label{sec:intro}

In many machine learning applications, in the process of training a high-accuracy classifier,
the primary bottleneck in time and effort is often the annotation of the large quantities of data required
for supervised learning.  
Active learning is a protocol designed to reduce this cost by allowing the 
learning algorithm to sequentially identify highly-informative data points to be annotated.  
In the specific model we study below, called \emph{pool-based} active learning, 
the learning algorithm is initially given access to a large pool of unlabeled data points,
which are considered inexpensive and abundant.
It is then able to select any unlabeled data point from the pool and request to observe its label.
Given the label of this data point, the algorithm can then select another unlabeled data point
to be labeled, and so on.  This interactive process continues until at most some prespecified
number of rounds is reached, at which time the algorithm must halt and produce a classifier.
This contrasts with \emph{passive learning}, where the learning algorithm would be given 
access to a number of \emph{random} labeled data points.  The hope is that, by sequentially
selecting the data points to be labeled, the active learning algorithm can direct the annotation
effort toward only the highly-informative data points, given the information already gathered
by the previously-labeled data points, and thereby reduce the total number of data points 
required to produce a classifier capable of predicting the labels of new instances with at most 
a desired error rate.
This model of active learning has been successfully applied to a variety of learning problems,
often with significant reductions in the number of label observations required to obtain a 
given error rate for the resulting classifier \citep*[see][for a survey of several such applications]{settles:12}.

This article studies the theoretical capabilities of active learning, regarding the number 
of label requests sufficient to learn a classifier to a desired error rate, known as the \emph{label complexity}.  
There is now a substantial literature on this subject \citep*[see][for a survey of known results]{hanneke:survey},
but on the important question of \emph{optimal} performance in the general setting, the gaps present
in the literature are quite substantial in some cases.  In this work, we address this question by carefully
studying the \emph{minimax} performance.  Specifically, we are interested in the \emph{minimax label complexity},
defined as the smallest (over the choice of active learning algorithm) worst-case
number of label requests sufficient for the active learning algorithm to produce a classifier of a specified error rate,
in the context of various noise models (e.g., Tsybakov noise, bounded noise, agnostic noise, etc.).
We derive upper and lower bounds on the minimax label complexity for several noise models,
which reveal a variety of interesting and (in some cases) surprising observations.
Furthermore, in establishing the upper bounds, we propose a novel active learning strategy, 
which often achieves significantly smaller label complexities than the active learning methods 
studied in the prior literature.

\subsection{The Prior Literature on the Theory of Active Learning}

Before getting into the technical details, we first review some background information 
about the prior literature on the theory of active learning.  This will also allow us to 
introduce the key contributions of the present work.

The literature on the theory of active learning began with studies of the \emph{realizable case}, 
a setting in which the labels are assumed to be consistent with some classifier in a known hypothesis class,
and have no noise \citep*{cohn:94,freund:97,dasgupta:04,dasgupta:05}.  In this simple setting, 
\citet*{dasgupta:05} supplied the first general analysis of the label complexity of active learning,
applicable to arbitrary hypothesis classes.  However, \citet*{dasgupta:05} found that there are a range
of minimax label complexities, depending on the structure of the hypothesis class, so that even 
among hypothesis classes of roughly the same minimax sample complexities for passive learning, 
there can be widely varying minimax label complexities for active learning.  In particular, he found
that some hypothesis classes (e.g., interval classifiers) have minimax label complexity essentially no better than that of passive learning,
while others have a minimax label complexity exponentially smaller than that of passive learning (e.g., threshold classifiers).
Furthermore, most nontrivial hypothesis classes of interest in learning theory seem to fall into the former category, 
with minimax label complexities essentially no better than passive learning.  Fortunately, \citet*{dasgupta:05}
also found that in some of these hard cases, it is still possible to show improvements over passive learning
under restrictions on the data distribution.

Stemming from these observations, much of the literature on active learning in the
realizable case has focused on describing various special conditions under which the label complexity of 
active learning is significantly better than that of passive learning: for instance, by placing restrictions 
on the marginal distribution of the unlabeled data \citep*[e.g.,][]{dasgupta:05b,balcan:07,el-yaniv:12,balcan:13,hanneke:survey},
or abandoning the minimax approach by expressing the label complexity with an explicit dependence on the optimal classifier \citep*[e.g.,][]{dasgupta:05,hanneke:10a,hanneke:thesis,hanneke:12a}.
In the general case, such results have been abstracted into various distribution-dependent (or sometimes data-dependent) 
\emph{complexity measures}, such as the \emph{splitting index} \citep*{dasgupta:05}, 
the \emph{disagreement coefficient} \citep*{hanneke:07b,hanneke:thesis},
the \emph{extended teaching dimension growth function} \citep*{hanneke:07a}, 
and the related \emph{version space compression set size} \citep*{el-yaniv:10,el-yaniv:12,hanneke:14a}.
For each of these, there are general upper bounds (and in some cases, minimax lower bounds) on the label complexities
achievable by active learning methods in the realizable case, expressed in terms of the complexity measure.
By expressing bounds on the label complexity in terms of these quantities, the analysis of the label complexities
achievable by active learning methods in the realizable case has been effectively reduced to the problem of 
bounding one of these complexity measures.
In particular, these complexity measures are capable of exhibiting a range of behaviors, corresponding to the 
range of label complexities achievable by active learning.  For certain values of the complexity measures, 
the resulting bounds reveal significant improvements over the minimax sample complexity of passive learning,
while for other values, the resulting bounds are essentially no better than the 
minimax sample complexity of passive learning.

Moving beyond these initial studies of the realizable case, the more-recent literature has developed active
learning algorithms that are provably robust to label noise.  This advance was initiated by the seminal
work of \citet*{balcan:06,balcan:09} on the $A^{2}$ (Agnostic Active) algorithm, and continued by a 
number of subsequent works \citep*[e.g.,][]{dasgupta:07,balcan:07,castro:06,castro:08,hanneke:07a,hanneke:09a,hanneke:thesis,hanneke:11a,hanneke:12a,minsker:12,koltchinskii:10,beygelzimer:09,beygelzimer:10,hsu:thesis,ailon:12,hanneke:12b}.
When moving into the analysis of label complexity in noisy settings, the literature continues to follow the same intuition
from the realizable case: that is, that there should be some active learning problems that are inherently hard, sometimes no better than passive learning, 
while others are significantly easier, with significant savings compared to passive learning.   As such, the general label complexity bounds
proven in noisy settings have tended to follow similar patterns to those found in the realizable case.  In some scenarios, the 
bounds reflect interesting savings compared to passive learning, while in other scenarios the bounds do not reflect any 
improvements at all.  However, unlike the realizable case, these upper bounds on the label complexities of the various proposed methods
for noisy settings lacked complementary minimax lower bounds showing that they were accurately describing the fundamental 
capabilities of active learning in these settings.
For instance, in the setting of Tsybakov noise, there are essentially only two types of general lower bounds on the minimax label complexity in the prior literature: 
(1) lower bounds that hold for all nontrivial hypothesis classes of a given VC dimension,  which therefore reflect a kind of best-case scenario \citep*{hanneke:11a,hanneke:survey}, 
and (2) lower bounds inherited from the realizable case (which is a special case of Tsybakov noise).
In particular, both of these lower bounds are always smaller than the minimax sample complexity of passive learning under Tsybakov noise.
Thus, although the upper bounds on the label complexity of active learning in the literature are sometimes no better than the minimax sample complexity of passive learning,
the existing lower bounds are unable to confirm that active learning truly cannot outperform passive learning in these scenarios.
This gap in our understanding of active learning with noise has persisted for a number of years now, without really receiving a good 
explanation for why the gap exists and how it might be closed.

In the present work, we show that there is a very good reason for why better lower bounds have not been 
discovered in general for the noisy case.  For certain ranges of the noise parameters (corresponding to the high-noise regime), 
these simple lower bounds are actually \emph{tight} (up to certain constant and logarithmic factors):
that is, the upper bounds can actually be reduced to nearly match these basic lower bounds.
Proving this surprising fact requires the introduction of a new type of active learning strategy, which selects its
queries based on both the structure of the hypothesis class and the estimated variances of the labels.
In particular, in these high-noise regimes, we find that \emph{all} hypothesis classes of the same VC dimension
have essentially the same minimax label complexities (up to logarithmic factors), in stark contrast to the 
well-known differentiation of hypothesis classes observed in the realizable case by \citet*{dasgupta:05}.

For the remaining range of the noise parameters (the low-noise regime), we argue that the label complexity 
takes a value sometimes larger than this basic lower bound, yet still typically smaller than the known upper bounds.
In this case, we further argue that the minimax label complexity is well-characterized by a simple combinatorial
complexity measure, which we call the \emph{star number}.  
In particular, these results reveal that for nonextremal parameter values, the minimax label complexity of active
learning under Tsybakov noise with \emph{any} VC class is \emph{always} smaller than that of passive learning,
a fact not implied by any results in the prior literature.

We further find that the star number can be 
used to characterize the minimax label complexities for a variety of other noise models.  Interestingly, 
we also show that almost all of the distribution-dependent or data-dependent complexity measures from the prior literature on the label complexity 
of active learning are exactly \emph{equal} to the star number when maximized over the choice of distribution or data set (including all of those mentioned above).
Thus, the star number represents a unifying core concept within these disparate styles of analysis.

\subsection{Our Contributions} 

Below, we summarize a few of the main contributions and interesting implications of this work.

\begin{itemize}
\item[$\bullet$] We develop a general noise-robust active learning strategy, which unlike previously-proposed general methods, 
selects its queries based on both the structure of the hypothesis class \emph{and} the estimated variances of the labels.
\item[$\bullet$] We obtain the first near-matching general distribution-free upper and lower bounds on the minimax label complexity of active learning, under a variety of noise models.
\item[$\bullet$] In many cases, the upper bounds significantly improve over the best upper bounds implied by the prior literature.
\item[$\bullet$] The upper bounds for Tsybakov noise \emph{always} reflect improvements over the minimax sample complexity of passive learning (for non-extremal noise parameter values), a feat not previously known to be possible.
\item[$\bullet$] In high-noise regimes of Tsybakov noise, our results imply that
all hypothesis classes of a given VC dimension have roughly the same minimax label complexity (up to logarithmic factors),
in contrast to well-known results for bounded noise.  This fact is not implied by any results in the prior literature.  
\item[$\bullet$] We express our upper and lower bounds on the label complexity in terms of a simple combinatorial complexity measure, which we refer to as the \emph{star number}.
\item[$\bullet$] We show that for any hypothesis class, almost every complexity measure proposed to date in the active learning literature has worst-case value exactly \emph{equal} to the star number,
thus unifying the disparate styles of analysis in the active learning literature.  
We also prove that the doubling dimension is bounded if and only if the star number is finite.
\item[$\bullet$] For most of the noise models studied here, we exhibit examples of hypothesis classes spanning the gaps between 
the upper and lower bounds, thus demonstrating that the gaps cannot generally be reduced (aside from logarithmic factors)
without introducing additional complexity measures.
\item[$\bullet$] We prove a separation result for Tsybakov noise vs the Bernstein class condition, establishing
that the respective minimax label complexities can be significantly different.  This contrasts with passive learning, where
they are known to be equivalent up to a logarithmic factor.
\end{itemize}

The algorithmic techniques underlying the proofs of the most-interesting of our upper bounds involve a combination
of the disagreement-based strategy of \citet*{cohn:94} (and the analysis thereof by \citealp*{hanneke:11a}, and \citealp*{hanneke:14a}),
along with a repeated-querying technique of \citet*{kaariainen:06}, modified to account for variations in label variances
so that the algorithm does not waste too many queries determining the optimal classification of highly-noisy points;
this modification represents the main algorithmic innovation in this work.  In a supporting role, we also rely on 
auxiliary lemmas on the construction of $\eps$-nets and $\eps$-covers based on random samples, and the 
use of these to effectively discretize the instance space.
The mathematical techniques underlying the proofs of the lower bounds are largely taken directly from the 
literature.  Most of the lower bounds are established by a combination of a technique originating with \citet*{kaariainen:06}
and refined by \citet*{beygelzimer:09} and \citet*{hanneke:11a,hanneke:survey}, and a technique of \citet*{raginsky:11} for 
incorporating a complexity measure into the lower bounds.

We note that, while the present work focuses on the distribution-free setting, in which the marginal distribution over the 
instance space is unrestricted, our results reveal that low-noise settings can still benefit from distribution-dependent
analysis, as expected given the aforementioned observations by \citet*{dasgupta:05} for the realizable case.  For instance,
under Tsybakov noise, it is often possible to obtain stronger upper bounds in low-noise regimes under
assumptions restricting the distribution of the unlabeled data \citep*[see e.g.,][]{balcan:07}.  We leave for future work the important problem of 
characterizing the minimax label complexity of active learning in the general case for an arbitrary fixed marginal distribution
over the instance space.

\subsection{Outline}

The rest of this article is organized as follows.
Section~\ref{sec:definitions} introduces the formal setting and basic notation used throughout,
followed in Section~\ref{sec:noise-models} with the introduction of the noise models studied in this work.
Section~\ref{sec:star} defines a combinatorial complexity measure -- the star number -- in terms of which we will express the label complexity bounds below.
Section~\ref{sec:main} provides statements of the main results of this work: upper and lower bounds on the minimax label complexities of active learning under each of the noise models defined in Section~\ref{sec:noise-models}.
That section also includes a discussion of the results, and a brief sketch of the arguments underlying the most-interesting among them.
Section~\ref{sec:passive} compares the results from Section~\ref{sec:main} to the known results on the minimax sample complexity of passive learning, revealing which scenarios yield improvements of active over passive.
Next, in Section~\ref{sec:complexities}, we go through the various results on the label complexity of active learning from the literature, along with their
corresponding complexity measures (most of which are distribution-dependent or data-dependent).  
We argue that all of these complexity measures are exactly equal to the star number when maximized over the
choice of distribution or data set.  This section also relates the star number to the well-known concept of \emph{doubling dimension},
in particular showing that the doubling dimension is bounded if and only if the star number is finite.

We note that the article is written with the intention that it be read in-order; for instance,
while Appendix~\ref{app:main-proofs} contains proofs of the results in Section~\ref{sec:main},
those proofs refer to quantities and results introduced in Sections \ref{sec:passive} and \ref{sec:complexities}
(which follow Section~\ref{sec:main}, but precede Appendix~\ref{app:main-proofs}).  

\section{Definitions}
\label{sec:definitions}

The rest of this paper makes use of the following formal definitions.
There is a space $\X$, called the \emph{instance space}.  
We suppose $\X$ is equipped with a $\sigma$-algebra $\BorelX$, and for simplicity we will assume $\{\{x\} : x \in \X\} \subseteq \BorelX$.
There is also a set $\Y = \{-1,+1\}$, known as the \emph{label space}.
Any measurable function $h : \X \to \Y$ is called a \emph{classifier}.
There is an arbitrary set $\C$ of classifiers, known as the \emph{hypothesis class}.
To focus on nontrivial cases, we suppose $|\C| \geq 3$ throughout.

For any probability measure $P$ over $\X \times \Y$ and any $x \in \X$, define $\eta(x; P) = \P( Y = +1 | X=x )$ for $(X,Y) \sim P$, 
and let $\target_{P}(x) = \sign( 2\eta(x;P) - 1 )$ denote the \emph{Bayes optimal classifier},\footnote{Since 
conditional probabilities are only defined up to probability zero differences, there can be multiple valid functions $\eta(\cdot;P)$ and $\target_{P}$,
with any two such functions being equal with probability one.  As such, we will interpret statements such as ``$\target_{P} \in \C$'' to mean that 
\emph{there exists} a version of $\target_{P}$ contained in $\C$, and similarly for other claims and conditions for $\target_{P}$ and $\eta(\cdot;P)$.}
where $\sign(t) = +1$ if $t \geq 0$, and $\sign(t) = -1$ if $t < 0$.
Define the \emph{error rate} of a classifier $h$ with respect to $P$ as
$\er_{P}(h) = P((x,y) : h(x) \neq y)$.

In the learning problem, there is a \emph{target distribution} $\PXY$ over $\X \times \Y$, 
and a \emph{data sequence} $(X_{1},Y_{1}),(X_{2},Y_{2}),\ldots$, which are independent $\PXY$-distributed random variables.
However, in the active learning protocol, the $Y_{i}$ values are initially ``hidden'' until individually requested by the algorithm (see below).
We refer to the sequence $X_{1},X_{2},\ldots$ as the \emph{unlabeled data sequence}.\footnote{Although, in practice, we would expect to have 
access to only a finite number of unlabeled samples, we expect this number would often be quite large (as unlabeled samples are considered 
inexpensive and abundant in many applications).  For simplicity, and to focus the analysis purely on the number of \emph{labels} required
for learning, we approximate this scenario by supposing an \emph{inexhaustible} source of unlabeled samples.  We leave open the question 
of the number of unlabeled samples sufficient to obtain the minimax label complexity; in particular, we expect the number of such samples
used by the methods obtaining our upper bounds to be quite large indeed.}
We will sometimes denote by $\Px$ the marginal distribution of $\PXY$ over $\X$: that is, $\Px(\cdot) = \PXY(\cdot \times \Y)$.

In the \emph{pool-based active learning} protocol,\footnote{Although technically we study the pool-based active learning protocol,
all of our results apply equally well to the stream-based (selective sampling) model of active learning
(in which the algorithm must decide whether or not to request the label $Y_{i}$ before 
observing any $X_{j}$ with $j > i$ or requesting any $Y_{j}$ with $j > i$).}
we define an \emph{active learning algorithm} $\alg$ as an algorithm taking as input a budget $n \in \nats \cup \{0\}$,
and proceeding as follows.
The algorithm initially has access to the unlabeled data sequence $X_{1},X_{2},\ldots$.
If $n > 0$, the algorithm may then select an index $i_{1} \in \nats$ and request to observe the label $Y_{i_{1}}$.
The algorithm may then observe the value of $Y_{i_{1}}$, and if $n \geq 2$, then based on both the unlabeled sequence and this new observation $Y_{i_{1}}$,
it may select another index $i_{2} \in \nats$ and request to observe $Y_{i_{2}}$.
This continues for a number of rounds at most $n$ (i.e., it may request at most $n$ labels),
after which the algorithm must halt and produce a classifier $\hat{h}_{n}$.
More formally, an active learning algorithm is defined by a random sequence $\{i_{t}\}_{t=1}^{\infty}$ in $\nats$, a random variable $N$ in $\nats$, 
and a random classifier $\hat{h}_{n}$, satisfying the following properties.
Each $i_{t}$ is conditionally independent from $\{(X_{i},Y_{i})\}_{i=1}^{\infty}$ given $\{i_{j}\}_{j=1}^{t-1}$, $\{Y_{i_{j}}\}_{j=1}^{t-1}$, and $\{X_{i}\}_{i=1}^{\infty}$.
The random variable $N$ always has $N \leq n$, and for any $k \in \{0,\ldots,n\}$, 
$\ind[ N = k ]$ is independent from $\{(X_{i},Y_{i})\}_{i=1}^{\infty}$ given $\{i_{j}\}_{j=1}^{k}$, $\{Y_{i_{j}}\}_{j=1}^{k}$, and $\{X_{i}\}_{i=1}^{\infty}$.
Finally, $\hat{h}_{n}$ is independent from $\{(X_{i},Y_{i})\}_{i=1}^{\infty}$ given $N$, $\{i_{j}\}_{j=1}^{N}$, $\{Y_{i_{j}}\}_{j=1}^{N}$, and $\{X_{i}\}_{i=1}^{\infty}$.

We are now ready for the definition of our primary quantity of study: the minimax label complexity.
In the next section, we define several well-known noise models as specifications of the set $\Dset$ referenced in this definition.

\begin{definition}
\label{def:LC}
For a given set $\Dset$ of probability measures on $\X \times \Y$, 
$\forall \eps \geq 0$, $\forall \conf \in [0,1]$, 
the \emph{minimax label complexity} (of active learning) under $\Dset$ with respect to $\C$, 
denoted $\LC_{\Dset}(\eps,\conf)$,
is the smallest $n \in \nats \cup \{0\}$ such that there exists an active learning algorithm $\alg$
with the property that, for every $\PXY \in \Dset$,
the classifier $\hat{h}_{n}$ produced by $\alg(n)$ based on the (independent $\PXY$-distributed)
data sequence $(X_{1},Y_{1}),(X_{2},Y_{2}),\ldots$ satisfies
\begin{equation*}
\P\left( \er_{\PXY}\left( \hat{h}_{n} \right) - \inf_{h \in \C} \er_{\PXY}(h) > \eps \right) \leq \conf.
\end{equation*}
If no such $n$ exists, we define $\LC_{\Dset}(\eps,\conf) = \infty$.
\end{definition}

Following \citet*{vapnik:71,anthony:99}, we say a collection of sets $\T \subseteq 2^{\X}$
\emph{shatters} a sequence $S \in \X^{k}$ (for $k \in \nats$) if $\{ A \cap S : A \in \T \} = 2^{S}$.
The \emph{VC dimension} of $\T$ is then defined as the largest $k \in \nats \cup \{0\}$ such that
there exists $S \in \X^{k}$ shattered by $\T$; if no such largest $k$ exists, the VC dimension is 
defined to be $\infty$.
Overloading this terminology, the VC dimension of a set $\H$ of classifiers is defined as
the VC dimension of the collection of sets $\{ \{x : h(x) = +1\} : h \in \H \}$.
Throughout this article,
we denote by $\vc$ the VC dimension of $\C$.
We are particularly interested in the case $\vc < \infty$, in which case $\C$ is called a \emph{VC class}.

For any set $\H$ of classifiers, 
define $\DIS(\H) = \{x \in \X : \exists h,g \in \H \text{ s.t. } h(x) \neq g(x)\}$, the \emph{region of disagreement} of $\H$.
Also, for any classifier $h$, any $r \geq 0$, and any probability measure $P$ on $\X$,
define $\Ball_{P}(h,r) = \{g \in \C : P(x : g(x) \neq h(x)) \leq r\}$, the \emph{$r$-ball centered at $h$}.

Before proceeding, we introduce a few additional notational conventions that help to simplify the theorem statements and proofs.
For any $\reals$-valued functions $f$ and $g$, we write
$f(x) \lesssim g(x)$ (or equivalently $g(x) \gtrsim f(x)$) to express the fact
that there is a \emph{universal} finite numerical constant $c > 0$ such that $f(x) \leq c g(x)$.
For any $x \in [0,\infty]$, we define $\Log(x) = \max\{\ln(x),1\}$, where $\ln(0) = -\infty$ and $\ln(\infty) = \infty$.
For simplicity, we define $\frac{\infty}{\Log(\infty)} = \infty$,
but in any other context, we always define $0 \cdot \infty = 0$, and also define $\frac{a}{0} = \infty$ for any $a > 0$.
For any function $\phi : \reals \to \reals$, we use the notation
``$\lim_{\gamma \to 0} \phi(\gamma)$'' to indicating taking the limit as $\gamma$ approaches $0$ 
\emph{from above}: i.e., $\gamma \downarrow 0$.
For $a,b \in \reals$, we denote $a \land b = \min\{a,b\}$ and $a \lor b = \max\{a,b\}$.
Finally, we remark that some of the claims below technically require additional qualifications
to guarantee measurability of certain quantities (as is typically the case in empirical process theory); 
see \citet*{blumer:89,van-der-Vaart:96,van-der-Vaart:11} for some discussion of this issue.
For simplicity, we do not mention these issues in the analysis below;
rather, we implicitly qualify all of these results with the 
condition that $\C$ is such that all of the random variables and events 
arising in the proofs are measurable.

\section{Noise Models}
\label{sec:noise-models}

We now introduce the noise models under which we will study the minimax label complexity of active learning.
These are defined as sets of probability measures on $\X \times \Y$, corresponding to specifications of the
set $\Dset$ in Definition~\ref{def:LC}.

\begin{itemize}
\item[$\bullet$] (Realizable Case) Define $\RE$ as the collection of $\PXY$ for which $\target_{\PXY} \in \C$ and $2\eta(\cdot; \PXY)-1 = \target_{\PXY}(\cdot)$ (almost everywhere w.r.t. $\Px$).
\item[$\bullet$] (Bounded Noise) For $\bound \in [0,1/2)$, define $\BN(\bound)$ as the collection of joint distributions $\PXY$ over $\X \times \Y$ such that
$\target_{\PXY} \in \C$ and
\begin{equation*}
\Px\left( x : \left| \eta(x;\PXY) - 1/2 \right| \geq 1/2 - \bound \right) = 1.
\end{equation*}
\item[$\bullet$] (Tsybakov Noise)
For $\tsybca \in [1,\infty)$ and $\tsyba \in (0,1)$, define 
$\TN(\tsybca, \tsyba)$ as the collection of joint distributions $\PXY$ over $\X \times \Y$ such that
$\target_{\PXY} \in \C$ and  
$\forall \gamma > 0$,
\begin{equation*}
\Px\left( x : \left| \eta(x;\PXY) - 1/2 \right| \leq \gamma \right) \leq \tsybca^{\prime} \gamma^{\tsyba/(1-\tsyba)},
\end{equation*}
where $\tsybca^{\prime} = (1-\tsyba)(2\tsyba)^{\tsyba/(1-\tsyba)}\tsybca^{1/(1-\tsyba)}$.
\item[$\bullet$]
(Bernstein Class Condition)
For $\tsybca \in [1,\infty)$ and $\tsyba \in [0,1]$, define
$\DL(\tsybca, \tsyba)$ as the collection of joint distributions $\PXY$ over $\X \times \Y$ such that,
$\exists h_{\PXY} \in \C$ for which $\forall h \in \C$, 
\begin{equation*}
\Px( x : h(x) \neq h_{\PXY}(x) ) \leq \tsybca ( \er_{\PXY}(h) - \er_{\PXY}(h_{\PXY}) )^{\tsyba}.
\end{equation*}
\item[$\bullet$] (Benign Noise) For $\nu \in [0,1/2]$, define $\BE(\nu)$ as the collection of all joint distributions $\PXY$ over $\X \times \Y$ such that $\target_{\PXY} \in \C$ and $\er_{\PXY}(\target_{\PXY}) \leq \nu$.
\item[$\bullet$] (Agnostic Noise) For $\nu \in [0,1]$, define $\AG(\nu)$ as the collection of all joint distributions $\PXY$ over $\X \times \Y$ such that $\inf_{h \in \C} \er_{\PXY}(h) \leq \nu$.
\end{itemize}

It is known that $\RE \subseteq \BN(\bound) \subseteq \DL(1/(1-2\bound), 1)$,
and also 
that $\RE \subseteq \TN(\tsybca, \tsyba) \subseteq \DL(\tsybca, \tsyba)$.
Furthermore, $\TN(\tsybca,\tsyba)$ is equivalent to the conditions in $\DL(\tsybca,\tsyba)$ being 
satisfied for \emph{all} classifiers $h$, rather than merely those in $\C$ \citep*{mammen:99,tsybakov:04,boucheron:05}.
All of $\RE$, $\BN(\bound)$, and $\TN(\tsybca,\tsyba)$ are contained in $\bigcup_{\nu < 1/2} \BE(\nu)$,
and in particular, $\BN(\bound) \subseteq \BE(\bound)$.

The realizable case is the simplest setting studied here, corresponding to the ``optimistic case'' of \citet*{vapnik:98}
or the PAC model of \citet*{valiant:84}.  
The bounded noise model has been studied under various names \citep*[e.g.,][]{massart:06,gine:06,kaariainen:06,koltchinskii:10,raginsky:11};
it is sometimes referred to as \emph{Massart's noise condition}.
The Tsybakov noise condition was introduced by \citet*{mammen:99} in a slightly stronger form (in the related context of discrimination analysis)
and was distilled into the form stated above by \citet*{tsybakov:04}.  There is now a substantial 
literature on the label complexity under this condition, both for passive learning and active learning 
\citep*[e.g.,][]{mammen:99,tsybakov:04,bartlett:06,koltchinskii:06,balcan:07,hanneke:11a,hanneke:12a,hanneke:survey,hanneke:12b}.
However, in much of this literature, the results are in fact established under the weaker assumption given 
by the Bernstein class condition \citep*{bartlett:04}, which is known to be implied by the Tsybakov noise condition \citep*{mammen:99,tsybakov:04}.
For passive learning, it is known that the minimax sample complexities under Tsybakov noise and under the Bernstein class condition
are equivalent up to a logarithmic factor.  Interestingly, our results below imply that this is not the case for active learning.
The benign noise condition \citep*[studied by][]{hanneke:thesis} requires only that the Bayes optimal classifier be contained 
within the hypothesis class, and that the Bayes error rate be at most the value of the parameter $\nu$.
The agnostic noise condition (sometimes called \emph{adversarial noise} in related contexts) 
is the weakest of the noise assumptions studied here, and admits any distribution for which 
the best error rate among classifiers in the hypothesis class is at most the value of the parameter $\nu$.
This model has been widely studied in the literature, for both passive and active learning 
\citep*[e.g.,][]{vapnik:71,vapnik:82,vapnik:98,kearns:94a,kalai:05,balcan:06,hanneke:07b,hanneke:07a,awasthi:14}.

\section{A Combinatorial Complexity Measure}
\label{sec:star}

There is presently a substantial literature on distribution-dependent bounds on the 
label complexities of various active learning algorithms.  These bounds are expressed in terms of a variety
of interesting complexity measures, designed to capture the behavior of each of these particular algorithms.
These measures of complexity include the disagreement coefficient \citep*{hanneke:07b}, the reciprocal of 
the splitting index \citep*{dasgupta:05}, the extended teaching dimension growth function \citep*{hanneke:07a},
and the version space compression set size \citep*{el-yaniv:10,el-yaniv:12}.  These quantities have been studied
and bounded for a variety of learning problems (see \citealp*{hanneke:survey}, for a summary).
They each have many interesting properties, and in general can exhibit a wide variety of behaviors,
as functions of the distribution over $\X$ (and in some cases, the distribution over $\X \times \Y$) and $\eps$,
or in some cases, the data itself.
However, something remarkable happens when we maximize each of these complexity measures
over the choice of distribution (or data set): they all become equal to a simple and easy-to-calculate
combinatorial quantity (see Section~\ref{sec:complexities} for proofs of these equivalences).
Specifically, consider the following definition.\footnote{A similar notion previously appeared in a lower-bound
argument of \citet*{dasgupta:05}, including a kind of distribution-dependent version of the ``star set'' idea.  
Indeed, we explore these connections formally in Section~\ref{sec:complexities},
where we additionally prove this definition is exactly equivalent to a quantity studied by \citet*{hanneke:07a}
(namely, the distribution-free version of the extended teaching dimension growth function), and has connections
to several other complexity measures in the literature.}

\begin{definition}
\label{def:star}
Define the \emph{star number} $\s$ as the largest integer $s$ 
such that there exist $x_{1},\ldots,x_{s} \in \X$ and $h_{0},h_{1},\ldots,h_{s} \in \C$
with the property that $\forall i \in \{1,\ldots,s\}$, $\DIS(\{h_{0},h_{i}\}) \cap \{x_{1},\ldots,x_{s}\} = \{x_{i}\}$;
if no such largest integer exists, define $\s = \infty$.
\end{definition}

For any set $\H$ of functions $\X \to \Y$, 
any $t \in \nats$, $x_{1},\ldots,x_{t} \in \X$, and $h_{0},h_{1},\ldots,h_{t} \in \H$,
we will say $\{x_{1},\ldots,x_{t}\}$ is a \emph{star set} for $\H$, \emph{witnessed by} $\{h_{0},h_{1},\ldots,h_{t}\}$,
if $\forall i \in \{1,\ldots,t\}$, $\DIS(\{h_{0},h_{i}\}) \cap \{x_{1},\ldots,x_{t}\} = \{x_{i}\}$.
For brevity, in some instances below, we may simply say that $\{x_{1},\ldots,x_{t}\}$ \emph{is a star set for} $\H$, 
indicating that $\exists h_{0},h_{1},\ldots,h_{t} \in \H$ such that
$\{x_{1},\ldots,x_{t}\}$ is a star set for $\H$, witnessed by $\{h_{0},h_{1},\ldots,h_{t}\}$.
We may also say that $\{x_{1},\ldots,x_{t}\}$ \emph{is a star set for} $\H$ \emph{centered at} $h_{0} \in \H$
if $\exists h_{1},\ldots,h_{t} \in \H$ such that $\{x_{1},\ldots,x_{t}\}$ is a star set for $\H$, witnessed by $\{h_{0},h_{1},\ldots,h_{t}\}$.
For completeness, we also say that $\{\}$ (the empty sequence) is a star set for $\H$ (witnessed by $\{h_{0}\}$ for any $h_{0} \in \H$), for any nonempty $\H$.
In these terms, the star number of $\C$ is the maximum possible cardinality of a star set for $\C$, 
or $\infty$ if no such maximum exists.

\paragraph{Remark:} The star number can equivalently be described as the maximum possible degree in the data-induced one-inclusion graph for $\C$ \citep*[see][]{haussler:94},
where the maximum is over all possible data sets and nodes in the graph.%
\footnote{The maximum degree in the one-inclusion graph was recently studied in the context of teaching complexity by \citet*{fan:12}.
However, using the data-induced one-inclusion graph of \citet*{haussler:94} (rather than the graph based on the full space $\X$) can substantially increase the maximum degree by omitting certain highly-informative points.}
To relate this to the VC dimension, 
one can show
that the VC dimension is the maximum possible degree of a \emph{hypercube} in the data-induced one-inclusion graph
for $\C$ (maximized over all possible data sets).
From this, it is clear that $\s \geq \vc$.  Indeed,
any set $\{x_{1},\ldots,x_{k}\}$ shatterable by $\C$ is also a star set for $\C$,
since some $h_{0} \in \C$ classifies all $k$ points $-1$, and for each $x_{i}$,
some $h_{i} \in \C$ has $h_{i}(x_{i}) = +1$ while $h_{i}(x_{j}) = -1$ for every $j \neq i$
(where $h_{i}$ is guaranteed to exist by shatterability of the set).
On the other hand, there is no general upper bound on $\s$ in terms of $\vc$,
and the gap between $\s$ and $\vc$ can generally be infinite.

\paragraph{Examples:} 
Before continuing, we briefly go through a few simple example calculations of the star number.
For the class of \emph{threshold} classifiers on $\reals$ (i.e., $\C = \{ x \mapsto 2 \ind_{[t,\infty)}(x) - 1 : t \in \reals \}$), 
we have $\s = 2$, as $\{x_{1},x_{2}\}$ is a star set for $\C$ centered at $2 \ind_{[t,\infty)}-1$ if and only if $x_{1} < t \leq x_{2}$,
and any set $\{x_{1},x_{2},x_{3}\}$ cannot be a star set for $\C$ centered at any given $2 \ind_{[t,\infty)}-1$ since, of the (at least) two of these points
on the same side of $t$, any threshold classifier disagreeing with $2 \ind_{[t,\infty)}-1$ on the one further from $t$ must also 
disagree with $2 \ind_{[t,\infty)}-1$ on the one closer to $t$.
In contrast, for the class of \emph{interval} classifiers on $\reals$ (i.e., $\C = \{ x \mapsto 2 \ind_{[a,b]}(x) - 1 : -\infty < a \leq b < \infty \}$),
we have $\s = \infty$, since for \emph{any} distinct points $x_{0},x_{1},\ldots,x_{s} \in \reals$, 
$\{x_{1},\ldots,x_{s}\}$ is a star set for $\C$ witnessed by $\{ 2 \ind_{[x_{0},x_{0}]}-1, 2\ind_{[x_{1},x_{1}]}-1,\ldots,2\ind_{[x_{s},x_{s}]}-1 \}$.
It is an easy exercise to verify that we also have $\s=\infty$
for the classes of \emph{linear separators} on $\reals^{k}$ ($k \geq 2$) and axis-aligned rectangles on $\reals^{k}$ ($k \geq 1$),
since the above construction for interval classifiers can be embedded into
these spaces, with the star set lying within a lower-dimensional manifold in $\reals^{k}$ \citep*[see][]{dasgupta:04,dasgupta:05,hanneke:survey}.

As an intermediate case, where $\s$ has a range of values, 
consider the class of \emph{intervals of width at least $w \in (0,1)$} (i.e., $\C = \{ x \mapsto 2 \ind_{[a,b]}(x) - 1 : -\infty < a \leq b < \infty, b-a \geq w \}$),
for the space $\X = [0,1]$.  In this case, we can show that $\lfloor 2/w \rfloor \leq \s \leq \lfloor 2/w \rfloor + 2$, as follows.
We may note that letting $k = \lfloor 2/(w+\eps) \rfloor + 1$ (for $\eps > 0$), 
and taking $x_{i} = (w+\eps) (i-1) / 2$ for $1 \leq i \leq k$,
we have that $\{x_{1},\ldots,x_{k}\}$ is a star set for $\C$, witnessed by $\{ 2 \ind_{[-2w,-w]}-1, 2\ind_{[x_{1}-w/2,x_{1}+w/2]}-1,\ldots,2\ind_{[x_{k}-w/2,x_{k}+w/2]}-1 \}$.
Thus, taking $\eps \to 0$ reveals that $\s \geq \lfloor 2/w \rfloor$.
On the other hand, for any $k^{\prime} \in \nats$ with $k^{\prime} > 2$, and points $x_{1},\ldots,x_{k^{\prime}} \in [0,1]$,
suppose $\{x_{1},\ldots,x_{k^{\prime}}\}$ is a star set for $\C$ witnessed by $\{h_{0},h_{1},\ldots,h_{k^{\prime}}\}$.
Without loss of generality, suppose $x_{1} \leq x_{2} \leq \cdots \leq x_{k^{\prime}}$.
First suppose $h_{0}$ classifies all of these points $-1$.
Note that, for any $i \in \{3,\ldots,k^{\prime}\}$, since the interval corresponding to $h_{i-1}$ has width at least $w$ and contains $x_{i-1}$ but not $x_{i-2}$ or $x_{i}$, 
we have $x_{i}-x_{i-1} > \max\{ 0, w - (x_{i-1}-x_{i-2}) \}$.
Thus, $1 \geq \sum_{i=2}^{k^{\prime}} x_{i} - x_{i-1} > x_{2} - x_{1} + \sum_{i=3}^{k^{\prime}} \max\{ 0, w - (x_{i-1}-x_{i-2}) \} \geq (k^{\prime}-2) w - \sum_{i=3}^{k^{\prime}-1} x_{i} - x_{i-1} = (k^{\prime}-2) w - ( x_{k^{\prime}-1} - x_{2} )$,
so that $x_{k^{\prime}-1} - x_{2} > (k^{\prime}-2)w - 1$.
But $x_{k^{\prime}-1} - x_{2} \leq 1$, so that 
$k^{\prime} < 2/w + 2$.
Since $k^{\prime}$ is an integer, this implies $k^{\prime} \leq \lfloor 2/w \rfloor + 2$.
For the remaining case, if $h_{0}$ classifies some $x_{i}$ as $+1$,
then let $x_{i_{0}} = \min\{ x_{i} : h_{0}(x_{i}) = +1 \}$ and $x_{i_{1}} = \max\{ x_{i} : h_{0}(x_{i}) = +1 \}$.
Note that, if $i_{0} > 1$, then for any $x < x_{i_{0}-1}$, any $h \in \C$ with $h(x_{i_{0}}) = h(x) = +1 \neq h_{0}(x)$ must have $h(x_{i_{0}-1}) = +1 \neq h_{0}(x_{i_{0}-1})$,
so that $\{x,x_{i_{0}-1}\} \subseteq \DIS(\{h,h_{0}\})$.  Therefore, $\nexists x_{i} < x_{i_{0}-1}$ 
(since otherwise $\DIS(\{h_{i},h_{0}\}) \cap \{x_{1},\ldots,x_{k^{\prime}}\} = \{x_{i}\}$ would be violated),
so that $i_{0} \leq 2$. Symmetric reasoning implies $i_{1} \geq k^{\prime}-1$.
Similarly, if $\exists x \in (x_{i_{0}},x_{i_{1}})$, then any $h \in \C$ with $h(x) = -1 \neq h_{0}(x)$ must have either $h(x_{i_{0}}) = -1 \neq h_{0}(x_{i_{0}})$ or $h(x_{i_{1}}) = -1 \neq h_{0}(x_{i_{1}})$,
so that either $\{x,x_{i_{0}}\} \subseteq \DIS(\{h,h_{0}\})$ or $\{x,x_{i_{1}}\} \subseteq \DIS(\{h,h_{0}\})$.  Therefore, $\nexists x_{i} \in (x_{i_{0}},x_{i_{1}})$
(since again, $\DIS(\{h_{i},h_{0}\}) \cap \{x_{1},\ldots,x_{k^{\prime}}\} = \{x_{i}\}$ would be violated),
so that $i_{1} \in \{i_{0},i_{0}+1\}$.
Combined, these facts imply $k^{\prime} \leq i_{1} + 1 \leq i_{0} + 2\leq 4 \leq \lfloor 2/w \rfloor + 2$.
Altogether, we have $\s \leq \lfloor 2/w \rfloor + 2$.

\section{Main Results}
\label{sec:main}

We are now ready to state the main results of this article: upper and lower bounds on the minimax label complexities under the above noise models.
For the sake of making the theorem statements more concise, we abstract the dependence on logarithmic
factors in several of the upper bounds into a simple ``$\polylog(x)$'' factor, meaning a value $\lesssim \Log^{k}(x)$, for some $k \in [1,\infty)$ (in fact, all of these results hold with values of $k \leq 4$);
the reader is referred to the proofs for a description of the actual logarithmic factors this $\polylog$ function represents,
along with tighter expressions of the upper bounds.
The formal proofs of all of these results are included in Appendix~\ref{app:main-proofs}.

\begin{theorem}
\label{thm:realizable}
For any $\eps \in (0,1/9)$, $\conf \in (0,1/3)$, 
\begin{equation*}
\max\left\{ \min\left\{\s, \frac{1}{\eps}\right\}, \vc, \Log\left(\min\left\{\frac{1}{\eps}, |\C|\right\}\right)\right\}
\lesssim \LC_{\RE}(\eps,\conf) \lesssim 
\min\left\{ \s, \frac{\vc}{\eps}, \frac{\s \vc}{\Log(\s)} \right\} \Log\left(\frac{1}{\eps}\right).
\end{equation*}
\end{theorem}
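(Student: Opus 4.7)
The theorem decomposes into three lower-bound terms and three upper-bound terms, which I plan to establish separately and then combine via $\max$ and $\min$.

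For the lower bound, the $\vc$ term is the classical shattering argument: on a $\vc$-shattered set $S$, any algorithm's response tree has at most $2^{k}$ leaves after $k$ queries, so $k < \vc$ cannot distinguish all $2^{\vc}$ consistent targets supported on $S$. The $\Log(\min\{1/\eps,|\C|\})$ term is a closely related information-theoretic bound, obtained by exhibiting $\min\{1/\eps,|\C|\}$ pairwise $\eps$-separated targets (possible when $\eps < 1/9$ and $|\C|\geq 3$) and applying the same tree-depth pigeonhole. The novel ingredient is $\min\{\s, 1/\eps\}$: I would fix a star set $\{x_{1}, \ldots, x_{s'}\}$ of size $s' = \min\{\s, \lfloor 1/(3\eps) \rfloor\}$ witnessed by $\{h_{0}, h_{1}, \ldots, h_{s'}\}$, let $\Px$ place mass $3\eps$ on each $x_{i}$ (and the remainder on a point agreed on by every $h_{i}$), and draw the target uniformly from $\{h_{0}, \ldots, h_{s'}\}$. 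Since $h_{0}$ and $h_{j}$ disagree only at $x_{j}$, an algorithm that never queries $x_{j}$ cannot tell $h_{0}$ from $h_{j}$, and misclassifying $x_{j}$ costs $3\eps > \eps$; a Yao-style averaging then forces $\Omega(s')$ expected queries.

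For the upper bound I would supply three algorithms, one per term in the outer minimum. The $(\vc/\eps)\Log(1/\eps)$ bound is attained by passive ERM over $O((\vc/\eps)\Log(1/\eps))$ randomly queried labels via the classical VC generalization bound. The $\s\Log(1/\eps)$ bound follows from a splitting-index style algorithm (in the spirit of \citet*{dasgupta:05}), anchored by the equivalence -- established later in Section~\ref{sec:complexities} -- between the worst-case reciprocal splitting index and $\s$: each of $O(\Log(1/\eps))$ halving rounds finds a point shrinking the version space by a constant factor, and a careful amortized accounting keeps the total queries at $O(\s\Log(1/\eps))$. The $(\s\vc/\Log(\s))\Log(1/\eps)$ bound uses a CAL-style disagreement-based algorithm together with the fact (also Section~\ref{sec:complexities}) that the worst-case disagreement coefficient equals $\s$; the naive CAL analysis gives $O(\s\vc\Log^{2}(1/\eps))$, and the $\Log(\s)$ saving comes from replacing the per-round VC sampling budget with a sharper $\eps$-covering-number bound on the labels spent inside the current disagreement region.

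The hardest step I anticipate is this $\Log(\s)$ saving in the third upper bound: shaving it requires a tight covering lemma for the disagreement region plus an argument that the per-round label cost scales with the log-covering number rather than with the uniform VC sampling bound. A secondary subtlety is verifying that the star-set adversarial construction yields a distribution genuinely in $\RE$ -- requiring $2\eta(\cdot;\PXY)-1$ to equal the target classifier almost everywhere -- and that the truncation $s' = \min\{\s, \lfloor 1/(3\eps)\rfloor\}$ matches the $\min$ structure of the stated lower bound in both the large-$\s$ and small-$\s$ regimes.
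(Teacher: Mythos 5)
Your lower-bound sketch is close in spirit to the paper's argument: the $\min\{\s, 1/\eps\}$ term via a star set and an averaging argument is essentially what the paper does, and the $\vc$ and $\Log(\min\{1/\eps,|\C|\})$ terms do come from a binary-query lower bound, though the paper routes them through \citet*{kulkarni:93}'s covering-number lower bound for learning from arbitrary binary queries together with \citet*{kulkarni:89}'s covering-number lower bounds, rather than through a bare shattering/pigeonhole argument --- a $\vc$-shattered set alone does not supply $2^{\vc}$ pairwise $\eps$-separated targets, so you would still need the covering-number machinery to make your claim precise.

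The upper bound is where you have a real gap. Your route to the bare $\s\Log(1/\eps)$ term via the splitting index cannot work as sketched: Dasgupta's splitting-based label complexity bound has the form $\frac{\vc}{\rho}\,\polylog(\cdot)$, and since (Theorem~\ref{thm:splitting-star}) the worst-case reciprocal splitting index $\left\lfloor 1/\infsplit(\eps)\right\rfloor = \s \land \lfloor 1/\eps\rfloor$, plugging in gives $\s\vc\cdot\polylog$, not $\s\cdot\polylog$; there is no "amortized accounting" inside the splitting framework that removes the $\vc$ factor. The paper's actual device for the $\s\Log(1/\eps)$ term is quite different: it runs a CAL-like loop for $O(\Log(1/\eps))$ rounds, and within each round it exploits the fact that the version space after $m$ labels is determined by a \emph{specifying set} of size at most $\min\{\s,m\}$ (Theorem~\ref{thm:xtd}), so the disagreement region is a sample-compression set of size $\s$ and Lemma~\ref{lem:compression-eps-net} yields a $\tfrac{1}{2}$-net of the disagreement region using only $O(\s)$ labels --- this is the step that replaces a $\vc\Log(\cdot)$ per-round cost with $\s$. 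Similarly, the $\frac{\s\vc}{\Log(\s)}\Log(1/\eps)$ term is obtained in the paper via the \MembHalvingII~algorithm of \citet*{hegedus:95,hanneke:07a} and the bound $\frac{\XTD}{\log\XTD}\log|\C[\U]|$ on its query count (again with $\XTD(\C,m)=\min\{\s,m\}$), not via a sharpened covering-number bound inside CAL; the $\Log(\s)$ saving comes from the halving/teaching-dimension structure, and it is unclear how a tighter $\eps$-covering bound on the disagreement region alone would produce it. Finally, a smaller but real issue: the stated upper bound has no $\conf$-dependence, but the classical VC generalization bound you cite for the $\frac{\vc}{\eps}\Log(1/\eps)$ term carries a $\frac{1}{\eps}\Log(\frac{1}{\conf})$ additive term; the paper avoids this via a median-of-$O(\log(1/\conf))$-trials $\eps$-net selection (Lemma~\ref{lem:eps-net}), which you would need to replicate.
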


\begin{theorem}
\label{thm:bounded}
For any $\bound \in [0,1/2)$, $\eps \in (0,(1-2\bound)/24)$, $\conf \in (0,1/24]$,
\begin{multline*}
\frac{1}{(1-2\bound)^{2}} \max\left\{ \min\left\{\s, \frac{1-2\bound}{\eps}\right\} \bound \Log\left(\frac{1}{\conf}\right), \vc\right\}
\\ \lesssim \LC_{\BN(\bound)}(\eps,\conf) \lesssim 
\frac{1}{(1-2\bound)^{2}} \min\left\{ \s, \frac{(1-2\bound) \vc}{\eps} \right\} \polylog\left(\frac{\vc}{\eps\conf}\right).
\end{multline*}
\end{theorem}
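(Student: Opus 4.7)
The plan is to prove the upper and lower bounds of Theorem~\ref{thm:bounded} separately, each decomposing into the two components visible in the statement.

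For the lower bound, the $\vc/(1-2\bound)^{2}$ term follows from an Assouad reduction adapted to active queries: take $\{x_{1},\ldots,x_{\vc}\}$ shattered by $\C$, and for each $\sigma\in\{-1,+1\}^{\vc}$ define $P_{\sigma}\in\BN(\bound)$ with uniform marginal $1/\vc$ on the shattered set and $\eta(x_{i};P_{\sigma})=1/2+\sigma_{i}(1/2-\bound)$. Recovering $\sigma$ up to a constant fraction of coordinates is necessary for excess error $\leq\eps$, and since the $\chi^{2}$-divergence between the two label laws at any $x_{i}$ is $\Theta((1-2\bound)^{2}/(\bound(1-\bound)))$, the standard active-learning lower-bound machinery forces $\Omega(\vc/(1-2\bound)^{2})$ labels. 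The $\min\{\s,(1-2\bound)/\eps\}\bound\Log(1/\conf)/(1-2\bound)^{2}$ term uses a star set instead: pick $\{x_{1},\ldots,x_{s^{*}}\}$ of size $s^{*}=\min\{\s,\lfloor c(1-2\bound)/\eps\rfloor\}$ witnessed by $\{h_{0},\ldots,h_{s^{*}}\}$, and build $\{P_{\sigma}\}_{\sigma\in\{0,1\}^{s^{*}}}$ with $\Px(x_{i})=\eps/(1-2\bound)$ (and remaining mass on a point where all witnesses agree) and labels whose Bayes classifier agrees with $h_{0}$ at $x_{i}$ when $\sigma_{i}=0$ and with $h_{i}$ at $x_{i}$ when $\sigma_{i}=1$, with noise rate exactly $\bound$. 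Any misclassification at some $x_{i}$ incurs excess error $\eps$, so the learner must recover every $\sigma_{i}$; a variance-aware two-point test shows each coordinate costs $\Omega(\bound\Log(1/\conf)/(1-2\bound)^{2})$ labels, and summing across the $s^{*}$ coordinates gives the claimed bound.

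For the upper bound, the $\vc/((1-2\bound)\eps)$ component is achieved by ignoring active selection and running standard ERM on $\O(\vc\,\polylog(\vc/(\eps\conf))/((1-2\bound)\eps))$ i.i.d.\ labels; the fast rate follows from the Bernstein condition satisfied by $\BN(\bound)$. The $\s/(1-2\bound)^{2}$ component requires the variance-aware active learner foreshadowed in the introduction. The plan is to build on the realizable-case active learner (Theorem~\ref{thm:realizable}, which achieves $\O(\s\,\Log(1/\eps))$ labels when $\s$ is the dominant term) by replacing each noisy query with a batch of $\Theta(\Log(1/\conf_{t})/(1-2\bound)^{2})$ repeated queries whose majority vote recovers $\target_{\PXY}(x)$ with probability $\geq 1-\conf_{t}$ (by bounded noise and Hoeffding). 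Concretely, maintain a version space $V\subseteq\C$ and iteratively (i) draw unlabeled points from $\Px$ until one lies in $\DIS(V)$, (ii) query it $\Theta(\Log(1/\conf_{t})/(1-2\bound)^{2})$ times and take the majority vote, (iii) remove from $V$ every hypothesis inconsistent with this inferred label. After $\O(\Log(\vc/(\eps\conf)))$ rounds, the $\Px$-diameter of $V$ around $\target_{\PXY}$ falls below $\eps/(1-2\bound)$, so every $h\in V$ has excess error $\leq\eps$. The per-round label count is controlled by the star number $\s$ uniformly in $\PXY$, via the worst-case disagreement-coefficient equivalence proved in Section~\ref{sec:complexities}. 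Running whichever of the two procedures finishes within the target budget yields the stated minimum.

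The main obstacle will be the upper bound: tuning the confidence schedule $\{\conf_{t}\}$ so accumulated failure stays $\leq\conf$, verifying that the version-space diameter halves at each round under the prescribed budget, bounding per-round labels uniformly by $\s$ rather than by an instance-dependent disagreement coefficient, and ensuring that all union bounds over $\O(\Log(\vc/(\eps\conf)))$ adaptive iterations contribute only $\polylog$ factors. The lower bound is conceptually cleaner but also requires a careful chain-rule KL/Pinsker argument to combine the per-coordinate tests into a joint lower bound that accounts for the adaptivity of active queries across the $s^{*}$ coordinates.
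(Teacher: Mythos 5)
Your proposal contains two substantive gaps, one in the upper-bound algorithm and one in the lower-bound construction.

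The upper-bound gap is the more fundamental: your algorithm at step (ii) says ``query it $\Theta(\Log(1/\conf_{t})/(1-2\bound)^{2})$ times and take the majority vote.'' In the pool-based model defined in Section~\ref{sec:definitions}, the $Y_{i}$ are fixed once the sequence $(X_{i},Y_{i})$ is drawn, so requesting $Y_{i}$ a second time returns the same value; you cannot resample from $\eta(X_{i};\PXY)$. This is precisely the obstruction the paper singles out in Section~\ref{sec:algorithm} as the reason the naive \citet{kaariainen:06} reduction does not apply. The paper's workaround is Lemma~\ref{lem:empirical-approx}: build a data-dependent partition $J_{\tau,\conf}$ of $\X$ on which every $h\in\C$ is nearly constant, so that a batch of \emph{distinct} unlabeled points falling in the same cell as the target point can serve as proxies for repeated queries. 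Without that device (or an equivalent one), your version-space algorithm cannot infer $\target_{\PXY}(x)$ reliably from a single label, and the $\s/(1-2\bound)^{2}$ upper bound does not go through.

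The lower-bound gap is in the star-set term. You index the hard family by $\sigma\in\{0,1\}^{s^{*}}$ and require ``the Bayes classifier agrees with $h_{0}$ at $x_{i}$ when $\sigma_{i}=0$ and with $h_{i}$ at $x_{i}$ when $\sigma_{i}=1$,'' then argue the learner must recover every coordinate. But a star set only guarantees hypotheses $h_{0},h_{1},\ldots,h_{s^{*}}\in\C$ that disagree with $h_{0}$ on \emph{exactly one} point; it does not supply a hypothesis that matches an arbitrary pattern $\sigma$. Hence $\target_{P_{\sigma}}\notin\C$ for generic $\sigma$, so $P_{\sigma}\notin\BN(\bound)$, and your $2^{s^{*}}$-distribution Assouad family collapses to a much smaller valid subset. (Contrast your $\vc/(1-2\bound)^{2}$ lower-bound term, where you work on a \emph{shattered} set, so all $2^{\vc}$ patterns really are realized by $\C$; that piece is sound.) The paper's construction (Lemma~\ref{lem:rr11-star}, following \citet{raginsky:11}) avoids this by using only $k$ distributions $P_{k,\zeta,t}$, one per needle $h_{t}$, and deriving the $\Omega(\bound k\Log(1/\conf)/(1-2\bound)^{2})$ cost from a hypothesis-identification argument among those $k$ alternatives rather than a per-coordinate recovery argument; this requires no realizability beyond what the star-set definition provides.

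The remaining components of your proposal are in the right spirit: for the $\vc/((1-2\bound)\eps)$ upper bound the paper indeed appeals to a passive-learning bound, though it uses the disagreement-coefficient-sharpened sample complexity of \citet{gine:06} together with Theorem~\ref{thm:dc-star} to obtain the $\min\{\s,\cdot\}$ inside the logarithm; and your observation that the $\vc$ lower-bound term and the $\Log(1/\conf)$ term come from complementary constructions matches the paper's combination of the \citet{hanneke:survey} lower bound with the Raginsky--Rakhlin construction.
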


\begin{theorem}
\label{thm:tsybakov}
For any $\tsybca \in [4,\infty)$, $\tsyba \in (0,1)$, $\eps \in (0,1/(24 \tsybca^{1/\tsyba}))$, and $\conf \in (0,1/24]$,
\\if $0 < \tsyba \leq 1/2$, 
\begin{equation*}
\tsybca^{2} \left(\frac{1}{\eps}\right)^{2-2\tsyba} \left(\vc + \Log\left(\frac{1}{\conf}\right)\right)
\lesssim \LC_{\TN(\tsybca,\tsyba)}(\eps,\conf) 
\lesssim \tsybca^{2} \left(\frac{1}{\eps}\right)^{2-2\tsyba} \vc \cdot \polylog\left(\frac{\vc}{\eps\conf}\right)
\end{equation*}
and if $1/2 < \tsyba < 1$,
\begin{multline*}
\tsybca^{2} \left(\frac{1}{\eps}\right)^{2-2\tsyba} \max\left\{ \min\left\{ \s, \frac{1}{\tsybca^{1/\tsyba} \eps}\right\}^{2\tsyba-1} \Log\left(\frac{1}{\conf}\right), \vc\right\}
\\ \lesssim \LC_{\TN(\tsybca,\tsyba)}(\eps,\conf) 
\lesssim \tsybca^{2} \left(\frac{1}{\eps}\right)^{2-2\tsyba} \min\left\{ \frac{\s}{\vc}, \frac{1}{\tsybca^{1/\tsyba}\eps} \right\}^{2\tsyba-1} \vc \cdot \polylog\left(\frac{\vc}{\eps\conf}\right).
\end{multline*}
\end{theorem}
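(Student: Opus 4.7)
The plan is to combine a scale-decomposed variance-aware active learning algorithm (for the upper bounds) with testing-based lower bounds built from shattered sets and star sets. The two-case structure of the theorem -- $0 < \tsyba \leq 1/2$ versus $1/2 < \tsyba < 1$ -- will emerge naturally from whether, at the target excess risk $\eps$, the binding obstruction is the VC dimension alone or the star number.

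For the upper bound, the key observation is that on the event $\{x : |\eta(x;\PXY) - 1/2| \geq \gamma\}$ the conditional problem behaves like $\BN(1/2 - \gamma)$, while the Tsybakov condition bounds the mass of its complement by $\tsybca^{\prime}\gamma^{\tsyba/(1-\tsyba)}$. I would run a disagreement-based procedure in the spirit of \CAL\ which, instead of querying each candidate point once, invokes a repeated-query subroutine that estimates $\eta(x)$ at each candidate and stops querying $x$ as soon as the empirical estimate certifies $|\eta(x) - 1/2| \geq \gamma_{k}$ at some dyadic scale $\gamma_{k} = 2^{-k}$, at which point $x$ is labeled by the empirical sign. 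Processing scales $k = 1, \ldots, \lceil \Log(1/\eps) \rceil$ and shrinking the version space after each scale, the per-scale cost is essentially what Theorem~\ref{thm:bounded} delivers with $\bound = 1/2 - \gamma_{k}$; summing across scales, weighted by the Tsybakov tail, produces a telescoping sum whose dominant contribution is $\tsybca^{2}\eps^{-(2-2\tsyba)}$. The $\min\{\s/\vc,\ 1/(\tsybca^{1/\tsyba}\eps)\}^{2\tsyba-1}$ prefactor appears only when $\tsyba > 1/2$, because in that regime the dyadic mass $\gamma_{k}^{\tsyba/(1-\tsyba)}$ does not decay fast enough to drop the effective number of active disagreement points below the star-number cap before the scale reaches $\eps$.

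For the lower bound I would assemble two independent constructions and take their maximum. The $\tsybca^{2}\eps^{-(2-2\tsyba)}\vc$ term follows from a hypercube-style reduction on a shattered set of $\vc$ points, with $\eta(x_{i}) = 1/2 \pm \gamma$ and $\gamma$ chosen as the largest value compatible with $\TN(\tsybca,\tsyba)$ at target excess $\eps$; standard two-point testing lower bounds then force $\Omega(\vc/\gamma^{2})$ queries. The star-based term in the regime $\tsyba > 1/2$ comes from placing Tsybakov-calibrated noise on a star set of size $k = \min\{\s,\ 1/(\tsybca^{1/\tsyba}\eps)\}$ and showing, via the same testing machinery, that identifying which leaf of the star carries a label flip requires the stated count. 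Finally, the $\Log(1/\conf)$ factor is obtained from a standard confidence-amplification argument along the lines of the prior active-learning lower bounds: one calibrates the noise level so that any algorithm outputting a correct hypothesis with probability at least $1 - \conf$ must make $\Omega(\gamma^{-2}\Log(1/\conf))$ queries at the critical point.

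The main obstacle will lie in the upper bound, specifically in controlling the variance-adaptive subroutine uniformly over all candidate points and all dyadic scales simultaneously, without double-counting queries or wasting labels on points that will subsequently be removed from the disagreement region. Making this precise requires a careful coupling of (i) uniform Bernstein-type concentration for the per-point empirical estimates of $\eta$, (ii) the version-space shrinkage extracted from the bounded-noise analysis underlying Theorem~\ref{thm:bounded}, and (iii) a budget calibration that balances per-scale query cost against the Tsybakov tail. This is precisely where the paper's novel algorithmic contribution -- selecting queries based on both class structure \emph{and} estimated label variance -- must do the work, since a naive one-query-per-point \CAL-style strategy would recover only the weaker bounds already present in the literature, and in particular would not achieve the $\vc$-only (no $\s$) behavior in the $\tsyba \leq 1/2$ regime.
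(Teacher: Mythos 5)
Your lower bound sketch matches the paper's approach: the $\vc + \Log(1/\conf)$ term comes from a hypercube-style testing argument on a shattered set of size $\vc$, and the star-number term for $\tsyba > 1/2$ comes from placing Tsybakov-calibrated bounded-noise distributions on a star set of size $\min\{\s, 1/(\tsybca^{1/\tsyba}\eps)\}$ and invoking the Raginsky--Rakhlin testing lemma (Lemma~\ref{lem:rr11-star}), with the $\Log(1/\conf)$ factor built in. That part is essentially identical in spirit to what the paper does.

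There is, however, a genuine gap in your upper bound sketch. You propose a \CAL-style procedure that ``invokes a repeated-query subroutine that estimates $\eta(x)$ at each candidate and stops querying $x$ as soon as the empirical estimate certifies $|\eta(x)-1/2| \geq \gamma_k$.'' In the pool-based model used here, this is not possible: requesting the label $Y_i$ of the point $X_i$ more than once yields the \emph{same} value $Y_i$, not a fresh draw from the conditional distribution of $Y$ given $X = X_i$. The repeated-query idea you describe is exactly K\"a\"ari\"ainen's, and the paper explicitly notes that it does not apply in this setting. What resolves this --- and what your sketch omits --- is the discretization step: one uses the first batch of unlabeled data to construct a data-dependent finite partition of $\X$ (Lemma~\ref{lem:empirical-approx}, built on the finite approximability property of VC classes of Adams and Nobel) such that every $h \in \C$, and in particular $\target_{\PXY}$, is nearly constant on almost every cell of the partition in $\Px$-measure. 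Then, for a candidate point $X_m^2$ in the disagreement region, the algorithm requests labels of \emph{other} fresh samples $X_{m,\ell}^3$ falling in the same cell as $X_m^2$, and runs the sequential probability ratio test on those. The cell-level noise $\gamma_{J_m}$ substitutes for the pointwise noise $\gamma_{X_m^2}$, and the bulk of the analysis (Lemmas~\ref{lem:E2}--\ref{lem:lb-fraction-in-gamma-range}) is devoted to controlling the distortion this substitution introduces. Without this partition idea your algorithm does not exist in the model, and the ``careful coupling of Bernstein concentration and version-space shrinkage'' you flag as the main obstacle cannot even get started. You gesture at ``the paper's novel algorithmic contribution'' at the end, but the specific missing ingredient is the partition-then-requery mechanism, not merely variance-aware budget calibration.

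A secondary, smaller difference: you describe processing scales $k = 1, \ldots, \lceil\Log(1/\eps)\rceil$ in separate phases, each shrinking the version space before moving to the next. The paper instead runs a single pass over the candidate stream $X_1^2, X_2^2, \ldots$ and decreases the per-point query budget $\tilde{q}_m$ as $m$ grows, so that later points must be in lower-noise cells to survive the filter. The two schedules are in the same spirit and yours could plausibly be made to work, but the single-pass version is what lets the paper apply the \citet*{hanneke:14a} disagreement-region analysis tier by tier to extract the $\s/\vc$ (rather than $\s$) factor in the $\tsyba > 1/2$ regime, via the choice of $\bar{k}$ in Lemma~\ref{lem:total-queries}; your sketch states the $\s/\vc$ prefactor but does not explain why it is $\s/\vc$ rather than $\s$.
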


\begin{theorem}
\label{thm:diameter-localization}
For any $\tsybca \in [4,\infty)$, $\tsyba \in (0,1)$, $\eps \in (0,1/(24 \tsybca^{1/\tsyba}))$, and $\conf \in (0,1/24]$,
\\if $0 \leq \tsyba \leq 1/2$, 
\begin{equation*}
\tsybca^{2} \left(\frac{1}{\eps}\right)^{2-2\tsyba} \left(\vc + \Log\left(\frac{1}{\conf}\right)\right)
\lesssim \LC_{\DL(\tsybca,\tsyba)}(\eps,\conf)
\lesssim \tsybca^{2} \left(\frac{1}{\eps}\right)^{2-2\tsyba} \!\!\min\left\{ \s, \frac{1}{\tsybca \eps^{\tsyba}} \right\} \vc \cdot \polylog\left(\frac{1}{\eps\conf}\right),
\end{equation*}
and if $1/2 < \tsyba \leq 1$, 
\begin{multline*}
\tsybca^{2} \left(\frac{1}{\eps}\right)^{2-2\tsyba} \max\left\{\min\left\{ \s, \frac{1}{\tsybca^{1/\tsyba} \eps}\right\}^{2\tsyba-1} \Log\left(\frac{1}{\conf}\right), \vc\right\}
\\ \lesssim \LC_{\DL(\tsybca,\tsyba)}(\eps,\conf)
\lesssim \tsybca^{2} \left(\frac{1}{\eps}\right)^{2-2\tsyba} \min\left\{ \s, \frac{1}{\tsybca \eps^{\tsyba}} \right\} \vc \cdot \polylog\left(\frac{1}{\eps\conf}\right).
\end{multline*}
\end{theorem}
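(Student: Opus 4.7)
\textbf{Proof Proposal for Theorem~\ref{thm:diameter-localization}.}

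The plan is to handle the lower and upper bounds separately, leveraging the containment $\TN(\tsybca,\tsyba) \subseteq \DL(\tsybca,\tsyba)$ for the former and designing a disagreement-based active learner with careful localization for the latter.

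For the lower bounds, I would argue directly by reduction: since every $\PXY \in \TN(\tsybca,\tsyba)$ also lies in $\DL(\tsybca,\tsyba)$, any algorithm achieving excess error $\eps$ with confidence $1-\conf$ uniformly over $\DL(\tsybca,\tsyba)$ automatically does the same uniformly over $\TN(\tsybca,\tsyba)$, so $\LC_{\DL(\tsybca,\tsyba)}(\eps,\conf) \geq \LC_{\TN(\tsybca,\tsyba)}(\eps,\conf)$. The matching $\tsybca^{2}(1/\eps)^{2-2\tsyba}(\vc + \Log(1/\conf))$ lower bound for $\tsyba \leq 1/2$ and the star-number-augmented lower bound for $\tsyba > 1/2$ therefore transfer verbatim from Theorem~\ref{thm:tsybakov}.

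For the upper bounds, I would analyze a disagreement-based algorithm operating in rounds indexed by $k$, in the spirit of Cohn--Atlas--Ladner as refined for noisy settings. At round $k$, maintain a version space $V_{k} \subseteq \C$ consisting of classifiers whose empirical excess risk is within a carefully chosen slack of the minimum; the doubling schedule sets the target excess risk $\eps_{k} \approx 2^{-k}$. The core observation is that under $\DL(\tsybca,\tsyba)$, every $h \in V_{k}$ satisfies $\Px(h \neq h_{\PXY}) \leq \tsybca \eps_{k}^{\tsyba}$, hence $\Px(\DIS(V_{k})) \leq 2\tsybca \eps_{k}^{\tsyba}$ in the sense of diameter-based localization. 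To translate a disagreement-mass budget into a label budget, I would invoke the fact (to be proven via a star-set argument on $V_{k}$) that the ratio $\Px(\DIS(V_{k}))/\sup_{h \in V_{k}} \Px(h \neq h_{\PXY})$ is controlled by $\min\{\s, 1/(\tsybca \eps_{k}^{\tsyba})\}$ in the worst case — the first term comes from the combinatorial definition of the star number (a star set in $V_{k}$ yields disjoint-up-to-overlap disagreement regions), the second from the trivial bound $\Px(\DIS(V_{k})) \leq 1$. Applying a Bernstein/Bernoulli concentration inequality to queried labels inside $\DIS(V_{k})$ and summing the per-round label counts $\tsybca \eps_{k}^{\tsyba} \cdot \min\{\s, 1/(\tsybca \eps_{k}^{\tsyba})\} \cdot \vc / \eps_{k}^{2}$ (with logarithmic correction factors absorbed into $\polylog(1/(\eps\conf))$), and noting that the geometric schedule makes the final round dominant, produces the claimed upper bound $\tsybca^{2} \eps^{2\tsyba-2} \min\{\s, 1/(\tsybca \eps^{\tsyba})\} \vc \cdot \polylog(1/(\eps\conf))$ for both regimes.

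The main obstacle, which is the source of the weaker rate compared to Theorem~\ref{thm:tsybakov} in the $\tsyba > 1/2$ regime, is that $\DL(\tsybca,\tsyba)$ does \emph{not} constrain the pointwise distribution of $|\eta(\cdot;\PXY) - 1/2|$ — it only controls an averaged moment through the excess risk. Consequently, the variance-aware repeated-querying device that drives the sharper Tsybakov bound cannot exploit point-by-point noise rates here, and one must fall back on a uniform $O(1/\eps_{k}^{2-2\tsyba})$ sample size per disagreement point. Handling this carefully — in particular verifying that the doubling schedule and version-space update retain $h_{\PXY} \in V_{k}$ with high probability under only the Bernstein hypothesis, and that the star-number localization survives the step from $\TN$-style diameter control to $\DL$-style excess-risk control — is the technical crux and requires the auxiliary $\eps$-net constructions referenced in Section~\ref{sec:intro}.
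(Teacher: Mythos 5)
Your lower-bound argument matches the paper's exactly: since $\TN(\tsybca,\tsyba)\subseteq\DL(\tsybca,\tsyba)$, we have $\LC_{\DL(\tsybca,\tsyba)}(\eps,\conf)\geq\LC_{\TN(\tsybca,\tsyba)}(\eps,\conf)$, and the lower bounds from Theorem~\ref{thm:tsybakov} transfer verbatim.

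For the upper bound, however, your route diverges substantially from the paper's, and is much heavier. The paper does \emph{not} re-derive a disagreement-based algorithm for $\DL(\tsybca,\tsyba)$: it explicitly observes that the repeated-querying machinery behind Theorems~\ref{thm:bounded}, \ref{thm:tsybakov}, and \ref{thm:benign} cannot be used here, because $\DL(\tsybca,\tsyba)$ does not guarantee $\target_{\PXY}\in\C$ and the majority-vote/filter idea requires the Bayes classifier to live in the class. Instead, the paper simply cites an existing $A^2$-style label complexity bound of the form $\tsybca^{2}\eps^{2\tsyba-2}\,\dc_{\PXY}(\tsybca\eps^{\tsyba})\,\vc\cdot\polylog(\cdot)$ from the prior literature, and then applies Theorem~\ref{thm:dc-star} to replace $\dc_{\PXY}(\tsybca\eps^{\tsyba})$ by $\min\{\s,\,1/(\tsybca\eps^{\tsyba})\}$. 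Your proposal re-derives this disagreement-based bound from scratch, which is defensible (it is essentially the proof of the cited result), but two things go wrong as written. First, your per-round label count $\tsybca\eps_{k}^{\tsyba}\cdot\min\{\s,1/(\tsybca\eps_{k}^{\tsyba})\}\cdot\vc/\eps_{k}^{2}$ is off by a factor of the version-space radius $\tsybca\eps_{k}^{\tsyba}$: the Bernstein-type variance control inside the conditional disagreement region gives a per-round cost of order $(\tsybca\eps_{k}^{\tsyba})^{2}\cdot\min\{\s,1/(\tsybca\eps_{k}^{\tsyba})\}\cdot\vc/\eps_{k}^{2}=\tsybca^{2}\eps_{k}^{2\tsyba-2}\min\{\s,1/(\tsybca\eps_{k}^{\tsyba})\}\vc$, which is what actually sums to the stated bound; your formula, in the large-$\s$ regime, would yield $\vc/\eps^{2}$, which exceeds the claimed bound by $1/(\tsybca\eps^{\tsyba})$. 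Second, the assertion that $\Px(\DIS(V_{k}))/\sup_{h\in V_{k}}\Px(h\neq h_{\PXY})\leq\min\{\s,1/(\tsybca\eps_{k}^{\tsyba})\}$ is precisely the nontrivial direction of Theorem~\ref{thm:dc-star} applied at scale $r=\tsybca\eps_{k}^{\tsyba}$, and should be invoked as such rather than re-proved in passing via an ad hoc star-set argument on $V_{k}$. Finally, the paper also handles the corner case $\tsybca\eps^{\tsyba}>1$ separately using the passive-learning sample complexity bound (since the disagreement-coefficient factor would otherwise be vacuous), an edge case your sketch silently skips. Your closing observation that $\DL(\tsybca,\tsyba)$ controls only an averaged moment and not the pointwise noise level, which is why the variance-adaptive repeated querying of Theorem~\ref{thm:tsybakov} cannot help, is correct and is exactly the conceptual reason the paper gives for falling back on the generic disagreement-based bound here.
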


\begin{theorem}
\label{thm:benign}
For any $\nu \in [0,1/2)$, $\eps \in (0,(1-2\nu)/24)$, and $\conf \in (0,1/24]$,
\begin{equation*}
\frac{\nu^{2}}{\eps^{2}} \left(\vc + \Log\left(\frac{1}{\conf}\right)\right) + \min\left\{\s,\frac{1}{\eps}\right\}
\lesssim \LC_{\BE(\nu)}(\eps,\conf) 
\lesssim \left(  \frac{\nu^{2}}{\eps^{2}} \vc + \min\left\{\s,\frac{\vc}{\eps}\right\}\right) \polylog\left(\frac{\vc}{\eps\conf}\right).
\end{equation*}
\end{theorem}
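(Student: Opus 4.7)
The bound decomposes additively into a realizable-like summand $\min\{\s,\vc/\eps\}$ and a noise-dominated summand $\nu^2\vc/\eps^2$, capturing two distinct sources of hardness; I would prove each direction (upper and lower) by combining separate arguments for these two pieces.

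\textbf{Lower bound.} The summand $\min\{\s,1/\eps\}$ follows at once from $\RE\subseteq\BE(\nu)$ and Theorem~\ref{thm:realizable}. For the $(\nu^2/\eps^2)(\vc+\Log(1/\conf))$ summand I would build an Assouad-style biased-coin family: fix a set $S=\{x_1,\ldots,x_\vc\}$ shattered by $\C$ with witnesses $\{h_\sigma\}_{\sigma\in\{-1,+1\}^\vc}$ and a clean anchor point $x_0\notin S$. Define a family $\{P^\sigma\}_\sigma$ over $\X\times\Y$ sharing the marginal $\Px(x_0)=1-p,\ \Px(x_i)=p/\vc$ with $p=\Theta(\nu)$, and with $\eta(x_0;P^\sigma)=1$ and $\eta(x_i;P^\sigma)=1/2+\sigma_i\gamma$ for $\gamma=\Theta(\eps/\nu)$. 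Then $\target_{P^\sigma}=h_\sigma\in\C$ and $\er_{P^\sigma}(h_\sigma)=p(1/2-\gamma)\leq\nu$, so $P^\sigma\in\BE(\nu)$. Any classifier with excess error $\leq\eps$ must match $\sigma$ on a constant fraction of $S$; because the conditional $\Px(\cdot\mid S)$ is uniform and $|2\eta-1|$ is constant on $S$, adaptive querying within $S$ carries no information advantage over i.i.d.\ conditional sampling, and standard Assouad/Le~Cam lower bounds for detecting the signs of $\vc$ independent biased Bernoullis of bias $\gamma$ give $\Omega((\vc+\Log(1/\conf))/\gamma^2)=\Omega((\vc+\Log(1/\conf))\nu^2/\eps^2)$.

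\textbf{Upper bound.} I would run an epoch-based, disagreement-based active learner in the spirit of $A^2$, with epochs $k=1,\ldots,K\approx\Log(1/\eps)$ and target excess-risk thresholds $r_k=2^{-k}$, maintaining (with high probability) the invariant $\target_{\PXY}\in V_k$ and $\er_{\PXY}(h)-\er_{\PXY}(\target_{\PXY})\leq r_k$ for every $h\in V_k$. Within epoch $k$, labels are requested only from $\DIS(V_{k-1})$, and a hypothesis is pruned when its estimated excess error exceeds a Bernstein-calibrated threshold. A localized analysis combining the benign-noise variance estimate $\mathrm{Var}[\ind(h(X)\neq Y)-\ind(\target_{\PXY}(X)\neq Y)]\leq 4\min(\nu,\Px(h\neq\target_{\PXY}))$ with the distribution-free disagreement-coefficient bound $\theta\lesssim\s$ (proved in Section~\ref{sec:complexities}) yields per-epoch label counts which, when summed geometrically over $k$, telescope to $(\vc\nu^2/\eps^2+\min\{\s,\vc/\eps\})\polylog(\vc/(\eps\conf))$: the realizable-shrinking portion mirrors the proof of Theorem~\ref{thm:realizable}, while the noise-variance portion rides on the $\min(\nu,r_{k-1})$ factor.

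\textbf{Main obstacle.} The delicate point is producing the noise term as $\vc\nu^2/\eps^2$ rather than the weaker $\vc\nu/\eps^2$ of a passive Bernstein bound or the $\s\vc\nu^2/\eps^2$ of a naive disagreement-based analysis. This requires interlacing the version-space shrinking with noise-aware sampling so as to exploit both (i) the $\min(\nu,r_{k-1})$ cap on the benign-noise variance and (ii) the disagreement-mass bound $\Px(\DIS(V_{k-1}))\lesssim\s r_{k-1}$, so that the additive form $\nu^2\vc/\eps^2+\min\{\s,\vc/\eps\}$ emerges around the ``cross-over'' epoch $r_k\sim\nu$: below that epoch the variance cap saturates at $\nu$, above it the cap is $r_{k-1}$, and the $\s$ factor is confined to the realizable-shrinking contribution. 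Beyond this, the main technical work is budgeting $O(K)$ per-epoch failure probabilities so that the total confidence loss remains $\leq\conf$, and invoking the distribution-free worst-case equivalence $\theta=\s$ established in Section~\ref{sec:complexities}.
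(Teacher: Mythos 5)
Your lower-bound sketch is essentially the approach taken in the paper: the realizable term $\min\{\s,1/\eps\}$ is inherited from Theorem~\ref{thm:realizable} via $\RE\subseteq\BE(\nu)$, and the $\frac{\nu^2}{\eps^2}(\vc+\Log(1/\conf))$ term comes from a biased-coin/Assouad construction of exactly the type you describe (the paper invokes an existing bound of this form from \citealp*{hanneke:survey}, built from the Kääriäinen--Beygelzimer--Hanneke technique, and uses a heavy clean anchor point $x_0$ with $\eta=1$ precisely as you propose to keep the Bayes error at $\nu$). Your remark that adaptivity within $S$ confers no advantage because $|2\eta-1|$ is constant there is the right intuition, and the cited results make it rigorous.

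\textbf{Upper bound.} Here there is a genuine gap, and it is the crux of the theorem. You propose an $A^2$-style, disagreement-based, epoch-by-epoch algorithm with a Bernstein-calibrated pruning threshold, and you argue that combining the variance cap $\min(\nu,r_{k-1})$ with $\Px(\DIS(V_{k-1}))\lesssim\s r_{k-1}$ will make the additive bound ``emerge around the cross-over epoch.'' That combination \emph{is} the refined disagreement-based analysis that already existed in the literature, and --- as the paper states explicitly in Section~\ref{sec:dc} --- what that analysis yields is
$\bigl(\tfrac{\nu^2}{\eps^2}+1\bigr)\,\supdc(\nu+\eps)\,\vc\cdot\polylog$, i.e.\ $\bigl(\tfrac{\nu^2}{\eps^2}+1\bigr)\min\{\s,\tfrac{1}{\nu+\eps}\}\vc$ after Theorem~\ref{thm:dc-star}. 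The structural obstruction is that in a disagreement-based epoch the two quantities you want to separate --- the per-epoch sample count $\vc\,\min(\nu,r_{k-1})/r_k^2$ and the disagreement mass $\Px(\DIS(V_{k-1}))$ --- arise together as a \emph{product} in the localized Bernstein analysis: one samples conditional on $\DIS(V_{k-1})$, the conditional threshold is $r_k/\Px(\DIS(V_{k-1}))$, and the factors of $\Px(\DIS(V_{k-1}))$ do not cancel in the direction you need. No choice of cross-over epoch or variance cap makes the $\s$-dependent and $\nu^2/\eps^2$-dependent contributions separate additively; e.g.\ when $\eps<\nu<1/\s$ your scheme would still pay $\tfrac{\nu^2}{\eps^2}\s\vc$, strictly worse than the claimed $\tfrac{\nu^2}{\eps^2}\vc+\s$.

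The paper achieves the additive form by a genuinely different algorithm, \RQCAL, which does \emph{not} simply pass noisy queries through a disagreement-based learner. Its three ingredients are: (i) a data-dependent partition of $\X$ into cells on which $\target_{\PXY}$ is nearly constant (Lemma~\ref{lem:empirical-approx}), which allows a pool-based analogue of the repeated-querying reduction of \citet*{kaariainen:06}; (ii) a sequential probability ratio test within each cell to infer the Bayes-optimal label, with a budget threshold $\tilde{q}_m$ that \emph{decreases} as the data grows, so high-noise cells are discarded rather than paid for; and (iii) a CAL-style realizable-case learner run on the inferred labels, tiered by noise level. It is (ii) that decouples the noise cost from the structural ($\s$-controlled) cost: the filter ensures that only $\tilde{O}(\nu^2/\eps^2)$ queries per surviving point are spent, while the CAL layer ensures only $\tilde{O}(\min\{\s,\vc/\eps\})$ points are queried at all. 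The formal accounting is Lemma~\ref{lem:total-queries}, applied with two different choices of the cutoff parameter $\bar{k}$ and taking the minimum. Your proposal is missing this repeated-query-and-filter mechanism entirely, and without it the disagreement-based route you describe is provably stuck at the multiplicative bound.
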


\begin{theorem}
\label{thm:agnostic}
For any $\nu \in [0,1/2)$, $\eps \in (0,(1-2\nu)/24)$, and $\conf \in (0,1/24]$,
\begin{multline*}
\frac{\nu^{2}}{\eps^{2}} \left(\vc + \Log\left(\frac{1}{\conf}\right)\right) + \min\left\{\s,\frac{1}{\eps}\right\}
\\ \lesssim \LC_{\AG(\nu)}(\eps,\conf)
\lesssim \min\left\{ \s, \frac{1}{\nu+\eps} \right\} \left( \frac{\nu^{2}}{\eps^{2}} + 1 \right) \vc \cdot \polylog\left(\frac{1}{\eps\conf}\right).
\end{multline*}
\end{theorem}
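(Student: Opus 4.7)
The plan has two parts. For the lower bound, I would simply observe that $\BE(\nu) \subseteq \AG(\nu)$: every $\PXY \in \BE(\nu)$ has $\target_{\PXY} \in \C$ with $\er_{\PXY}(\target_{\PXY}) \leq \nu$, so in particular $\inf_{h \in \C} \er_{\PXY}(h) \leq \nu$. This yields $\LC_{\AG(\nu)}(\eps,\conf) \geq \LC_{\BE(\nu)}(\eps,\conf)$, so the lower bound of Theorem~\ref{thm:benign} transfers verbatim and supplies the claimed $\frac{\nu^{2}}{\eps^{2}}(\vc+\Log(1/\conf))+\min\{\s,1/\eps\}$ expression.

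For the upper bound I would analyze a disagreement-based agnostic active learner in the $A^{2}$/DHM/CAL tradition. The algorithm maintains a confidence-based version space $V_{t} \subseteq \C$ guaranteed with probability at least $1-\conf$ to always contain a fixed minimizer $h^{\star}$ of $\er_{\PXY}$ over $\C$; it processes the unlabeled stream sequentially, requesting $Y_{i}$ only when $X_{i} \in \DIS(V_{t})$; and it eliminates any $h \in V_{t}$ whose empirical excess risk provably exceeds the empirically best element of $V_{t}$ by more than the current uniform-deviation width.

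The analysis has two pieces. First, uniform Bernstein-type deviation bounds over the excess-loss class, together with the agnostic identity $\Px(h \neq h^{\star}) \leq (\er_{\PXY}(h) - \er_{\PXY}(h^{\star})) + 2\nu$, show that once the algorithm has consumed $m \gtrsim (\nu^{2}/\eps^{2}+1)\vc \cdot \polylog(1/(\eps\conf))$ unlabeled points, one has $V_{t} \subseteq \Ball_{\Px}(h^{\star}, O(\nu+\eps))$ and every $h \in V_{t}$ has excess risk $O(\eps)$. Second, since labels are requested only inside $\DIS(V_{t})$ and the radii shrink geometrically, a telescoping estimate of $\sum_{t} \Px(\DIS(V_{t}))\,m_{t}$ bounds the total label count by $\theta_{\PXY}(\nu+\eps)\cdot(\nu^{2}/\eps^{2}+1)\vc \cdot \polylog(1/(\eps\conf))$, where $\theta_{\PXY}(r)$ denotes the disagreement coefficient at scale $r$. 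Finally, to convert this into a distribution-free bound, I would invoke the equivalences proved in Section~\ref{sec:complexities}, according to which the supremum of $\theta_{\PXY}(r)$ over the choice of $\Px$ equals $\min\{\s,1/r\}$. Substituting $r=\nu+\eps$ produces the claimed $\min\{\s,1/(\nu+\eps)\}$ prefactor.

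The main obstacle is the agnostic-noise localization. Unlike the Tsybakov or bounded-noise settings, the relation $\Px(h \neq h^{\star}) \leq (\er_{\PXY}(h)-\er_{\PXY}(h^{\star}))+2\nu$ is weaker by an additive $\nu$, so the version-space shrinkage cannot be driven below radius $O(\nu+\eps)$, and the elimination rule must rely on variance-aware (Bernstein) confidence bounds run across a carefully scheduled sequence of rounds to prevent spurious factors of $\nu/\eps$ from creeping into the label count beyond the $\nu^{2}/\eps^{2}$ factor already accounted for. Beyond this, everything reduces to standard VC concentration plus the star-number identification of the disagreement coefficient from Section~\ref{sec:complexities}.
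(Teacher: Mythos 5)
Your proposal is correct and follows essentially the same route as the paper: the lower bound is inherited from Theorem~\ref{thm:benign} via the inclusion $\BE(\nu)\subseteq\AG(\nu)$, and the upper bound is obtained by taking a $\dc_{\PXY}(\nu+\eps)$-dependent label complexity bound for a disagreement-based agnostic active learner and replacing the disagreement coefficient by its distribution-free worst case $\min\{\s,1/(\nu+\eps)\}$ via Theorem~\ref{thm:dc-star}. The only difference is that the paper simply invokes the existing $A^2$-style bound of $\ddot{c}\,\dc_{\PXY}(\nu+\eps)\bigl(\tfrac{\nu^2}{\eps^2}+\Log\tfrac{1}{\eps}\bigr)\bigl(\vc\Log\dc_{\PXY}(\nu+\eps)+\Log\tfrac{\Log(1/\eps)}{\conf}\bigr)$ as a black box rather than re-deriving it.
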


Here, we mention a few noteworthy 
observations and comments regarding the above theorems.
We sketch the main innovations underlying the active learning algorithm achieving these upper bounds in Section~\ref{sec:algorithm}.
Sections \ref{sec:passive} and \ref{sec:complexities} include further detailed and thorough comparisons 
of each of these results to those in the prior literature on passive and active learning.

\paragraph{Comparison to the previous best known results:}
Aside from Theorems \ref{thm:diameter-localization} and \ref{thm:agnostic}, each of the above 
results offers some kind of refinement over the previous best known results on the label complexity of active learning.  
Some of these refinements are relatively mild, such as those for the realizable case and bounded noise.
However, our refinements under Tsybakov noise and benign noise are far more significant.
In particular, 
perhaps the most surprising and interesting of the above results are the upper bounds in Theorem~\ref{thm:tsybakov},
which can be considered the primary contribution of this work.

As discussed above, the prior literature on noise-robust active learning is largely rooted in the intuitions and techniques developed for the realizable case.
As indicated by Theorem~\ref{thm:realizable}, there is a wide spread of label complexities
for active learning problems in the realizable case, depending on the structure of the hypothesis class.  In particular,
when $\s < \infty$, we have $O(\Log(1/\eps))$ label complexity in the realizable case, representing a nearly-exponential 
improvement over passive learning, which has $\tilde{\Theta}(1/\eps)$ dependence on $\eps$.  On the other hand,
when $\s = \infty$, we have $\Omega(1/\eps)$ minimax label complexity for active learning, which is the same
dependence on $\eps$ as known for passive learning (see Section~\ref{sec:passive}).  Thus, for active learning in the realizable case,
some hypothesis classes are ``easy'' (such as threshold classifiers), offering strong improvements over passive learning,
while others are ``hard'' (such as interval classifiers), offering almost no improvements over passive.

With the realizable case as inspiration, the results in the prior literature on general noise-robust active learning have all continued to reflect
these distinctions, and the label complexity bounds in those works continue to exhibit this wide spread.  In the case of Tsybakov
noise, the best general results in the prior literature \citep*[from][]{hanneke:12b,hanneke:survey} correspond to an upper bound of roughly 
$\tsybca^{2} \left(\frac{1}{\eps}\right)^{2-2\tsyba} \min\left\{ \s, \frac{1}{\tsybca \eps^{\tsyba}} \right\} \vc \cdot \polylog\left(\frac{1}{\eps\conf}\right)$
(after converting those complexity measures into the star number via the results in Section~\ref{sec:complexities} below).
When $\s < \infty$, this has dependence $\tilde{\Theta}(\eps^{2\tsyba-2})$ on $\eps$, which reflects a strong improvement
over the $\tilde{\Theta}(\eps^{\tsyba-2})$ minimax sample complexity of passive learning for this problem (see Section~\ref{sec:passive}).
On the other hand, when $\s = \infty$, this bound is $\tilde{\Theta}(\eps^{\tsyba-2})$, so that as in the realizable case, 
the bound is no better than that of passive learning for these hypothesis classes.  Thus, the prior results in the literature 
continue the trend observed in the realizable case, in which the ``easy'' hypothesis classes admit strong improvements
over passive learning, while the ``hard'' hypothesis classes have a bound that is no better than the sample complexity of passive learning.

With this as background, it comes as quite a surprise that the upper bounds in Theorem~\ref{thm:tsybakov} are 
\emph{always} smaller than the corresponding minimax sample complexities of passive learning,
in terms of their asymptotic dependence on $\eps$ for $0 < \tsyba < 1$.  Specifically, these upper bounds
reveal a label complexity $\tilde{O}( \eps^{2\tsyba-2} )$ when $\s < \infty$, and $\tilde{O}( \eps^{2\tsyba-2} \lor (1/\eps) )$ when $\s = \infty$.
Comparing to the $\tilde{\Theta}(\eps^{\tsyba-2})$ minimax sample complexity of passive learning, the improvement for active learning
is by a factor of $\tilde{\Theta}( \eps^{-\tsyba} )$ when $\s < \infty$, and by a factor of $\tilde{\Theta}( \eps^{-\min\{\tsyba,1-\tsyba\}} )$ when $\s = \infty$.
As a further surprise, when $0 < \tsyba \leq 1/2$ (the high-noise regime),
we see that the distinctions between active learning problems of a given VC dimension essentially \emph{vanish} (up to logarithmic factors), so that the familiar
spread of label complexities from the realizable case is no longer present.  Indeed, in this latter case, \emph{all} hypothesis classes
with finite VC dimension exhibit the strong improvements over passive learning, previously only known to hold for the ``easy'' 
hypothesis classes (such as threshold classifiers): that is, $\tilde{O}(\eps^{2\tsyba-2})$ label complexity.

Further examining these upper bounds, we see that the spread of label complexities between ``easy'' and ``hard'' hypothesis classes
increasingly re-emerges as $\tsyba$ approaches $1$, beginning with $\tsyba = 1/2$.  This transition point is quite sensible, since
this is precisely the point at which the label complexity has dependence on $\eps$ of $\tilde{\Theta}(1/\eps)$,
which is roughly the same as the minimax label complexity of the ``hard'' hypothesis classes in the realizable case,
which is, after all, included in $\TN(\tsybca,\tsyba)$.  Thus, as $\tsyba$ 
increases above $1/2$, the ``easy'' hypothesis classes (with $\s < \infty$) exhibit stronger improvements over passive learning,
while the ``hard'' hypothesis classes (with $\s = \infty$) continue to exhibit precisely this $\tilde{\Theta}\left(\frac{1}{\eps}\right)$
behavior.  In either case, the label complexity exhibits an improvement in dependence on $\eps$ compared to passive learning for the same $\tsyba$ value.
But since the label complexity of passive learning decreases to $\tilde{\Theta}\left(\frac{1}{\eps}\right)$ as $\tsyba \to 1$, we naturally have that 
for the ``hard'' hypothesis classes, the gap between the passive and active label complexities shrinks as $\tsyba$ approaches $1$.
In contrast, the ``easy'' hypothesis classes exhibit a gap between passive and active label complexities that 
becomes more pronounced as $\tsyba$ approaches $1$ (with a near-exponential improvement over passive learning exhibited in the limiting case, corresponding to bounded noise).

This same pattern is present, though to a lesser extent, in the benign noise case.
In this case, the best general results in the prior literature \citep*[from][]{dasgupta:07,hanneke:07a,hanneke:survey}
correspond to an upper bound of roughly $\min\left\{\s, \frac{1}{\nu+\eps}\right\} \left(\frac{\nu^{2}}{\eps^{2}}+1\right) \vc \cdot \polylog\left(\frac{1}{\eps\conf}\right)$
(again, after converting those complexity measures into the star number via the results in Section~\ref{sec:complexities} below).
When $\s < \infty$, the dependence on $\nu$ and $\eps$ is roughly $\tilde{\Theta}\left(\frac{\nu^{2}}{\eps^{2}}\right)$ (aside from logarithmic factors and constants, and for $\nu > \eps$).
However, when $\s = \infty$, this dependence becomes roughly $\tilde{\Theta}\left(\frac{\nu}{\eps^{2}}\right)$, 
which is the same as in the minimax sample complexity of passive learning (see Section~\ref{sec:passive}).
Thus, for these results in the prior literature, we again see that the ``easy'' hypothesis classes have a bound reflecting improvements over passive learning,
while the bound for the ``hard'' hypothesis classes fail to reflect any improvements over passive learning at all.

In contrast, consider the upper bound in Theorem~\ref{thm:benign}.
In this case, when $\nu \geq \sqrt{\eps}$ (again, the high-noise regime), for \emph{all} hypothesis classes with finite VC dimension,
the dependence on $\nu$ and $\eps$ is roughly $\tilde{\Theta}\left(\frac{\nu^{2}}{\eps^{2}}\right)$ (aside from logarithmic factors and constants).
Again, this makes almost no distinction between ``easy'' hypothesis classes (with $\s < \infty$) and ``hard'' hypothesis classes (with $\s=\infty$),
and instead always exhibits the strongest possible improvements (up to logarithmic factors), 
previously only known to hold for the ``easy'' classes (such as threshold classifiers):
namely, reduction in label complexity by roughly a factor of $1/\nu$ compared to passive learning.
The improvements in this case are typically milder than we found in Theorem~\ref{thm:tsybakov},
but noteworthy nonetheless.  Again, as $\nu$ decreases below $\sqrt{\eps}$, the distinction
between ``easy'' and ``hard'' hypothesis classes begins to re-emerge, with the harder classes maintaining a $\tilde{\Theta}\left(\frac{1}{\eps}\right)$
dependence 
(which is equivalent to the realizable-case label complexity for these classes, up to logarithmic factors),
while the easier classes continue to exhibit the $\tilde{\Theta}\left(\frac{\nu^{2}}{\eps^{2}}\right)$ behavior,
approaching $O\left( \polylog\left(\frac{1}{\eps}\right) \right)$ as $\nu$ shrinks.

\paragraph{The dependence on $\boldsymbol{\conf}$:}
One remarkable fact about $\LC_{\RE}(\eps,\conf)$ is that there is \emph{no} significant dependence on 
$\conf$ in the optimal label complexity for the given range of $\conf$.\footnote{We should expect 
a more significant dependence on $\conf$ near $1$, since one case easily prove that $\LC_{\RE}(\eps,\conf) \to 0$ as $\conf \to 1$.} 
Note that this is not the case in noisy settings, where the lower bounds have an explicit dependence on $\conf$.
In the proofs, this dependence on $\conf$ is introduced via randomness of the labels.  
However, as argued by \citet*{kaariainen:06},
a dependence on $\conf$ is sometimes still required in $\LC_{\Dset}(\eps,\conf)$, 
even if we restrict $\Dset$ to those $\PXY \in \AG(\nu)$ inducing \emph{deterministic} labels: 
that is, $\eta(x;\PXY) \in \{0,1\}$ for all $x$.

\paragraph{Spanning the gaps:}
All of these results have gaps between the lower and upper bounds.
It is interesting to note that one can construct examples of hypothesis classes spanning these gaps,
for Theorems \ref{thm:realizable}, \ref{thm:bounded}, \ref{thm:tsybakov}, and \ref{thm:benign} (up to logarithmic factors).
For instance, for sufficiently large $\vc$ and $\s$ and sufficiently small $\eps$ and $\conf$, these upper bounds are tight (up to logarithmic factors) in the case where
$\C = \{ x \mapsto 2\ind_{S}(x)-1 : S \subseteq \{1,\ldots,\s\}, |S|\leq\vc\}$, for $\X = \nats$ 
(taking inspiration from a suggested modification by \citealp*{hanneke:survey}, of the proof of a related result of \citealp*{raginsky:11}).
Likewise, these lower bounds are tight (up to logarithmic factors) in the case that $\X = \nats$ and
$\C = \{ x \mapsto 2\ind_{S}(x)-1 : S \in 2^{\{1,\ldots,\vc\}} \cup \{ \{i\} : \vc+1 \leq i \leq \s \} \}$.\footnote{Technically, 
for Theorems~\ref{thm:bounded} and \ref{thm:benign}, we require slightly stronger versions of the lower bound to establish 
tightness for $\bound$ or $\nu$ near $0$: namely, adding the lower bound from Theorem~\ref{thm:realizable}
to these lower bounds.  The validity of this stronger lower bound follows immediately from the facts that $\RE \subseteq \BN(\bound)$ and $\RE \subseteq \BE(\nu)$.}
Thus, these upper and lower bounds cannot be significantly refined (without loss of generality)
without introducing additional complexity measures to distinguish these cases.
For completeness, we include proofs of these claims in Appendix~\ref{app:gaps}.
It immediately follows from this (and monotonicity of the respective noise models in $\C$) 
that the upper and lower bounds in Theorems \ref{thm:realizable}, \ref{thm:bounded}, \ref{thm:tsybakov}, and \ref{thm:benign}
are each sometimes tight in the case $\s = \infty$, as limiting cases of the above constructions: that is, the upper bounds are tight (up to logarithmic factors) for
$\C = \{ x \mapsto 2\ind_{S}(x)-1 : S \subseteq \nats, |S| \leq \vc \}$,
and the lower bounds are tight (up to logarithmic factors) for 
$\C = \{ x \mapsto 2\ind_{S}(x)-1 : S \in 2^{\{1,\ldots,\vc\}} \cup \{ \{i\} : \vc+1 \leq i < \infty \} \}$.
It is interesting to note that the above space $\C$ for which the upper bounds are tight can be embedded in a variety of 
hypothesis classes in common use in machine learning (while maintaining VC dimension $\lesssim \vc$ and star number $\lesssim \s$):
for instance, in the case of $\s=\infty$, this is true of linear separators in $\reals^{3\vc}$ and axis-aligned rectangles in $\reals^{2\vc}$.
It follows that the upper bounds in these theorems are tight (up to logarithmic factors) for each of these hypothesis classes.

\paragraph{Separation of $\boldsymbol{\TN(\tsybca,\tsyba)}$ and $\boldsymbol{\DL(\tsybca,\tsyba)}$:}
Another interesting implication of these results is a separation between the noise models $\TN(\tsybca,\tsyba)$ and $\DL(\tsybca,\tsyba)$
not previously noted in the literature.  Specifically, if we consider any class $\C$ comprised of only the $\s+1$ classifiers in Definition~\ref{def:star}, 
then one can show\footnote{Specifically, this follows by taking $\zeta = \frac{\tsybca}{2} (4\eps)^{\tsyba}$, $\bound = \frac{1}{2} - \frac{2}{\tsybca 4^{\tsyba}} \eps^{1-\tsyba}$,
and $k = \min\left\{ \s-1, \lfloor 1/\zeta \rfloor \right\}$ in Lemma~\ref{lem:rr11-star} of Appendix~\ref{sec:rr-lemma},
and noting that the resulting set of distributions $\RR(k,\zeta,\bound)$ is contained in $\DL(\tsybca,\tsyba)$ for this $\C$.}
that (for $\s \geq 3$), for any $\tsyba \in (0,1]$, $\tsybca \in [4,\infty)$, $\eps \in (0,1/(4\tsybca^{1/\tsyba}))$, and $\conf \in (0,1/16]$,
\begin{equation*}
\LC_{\DL(\tsybca,\tsyba)}(\eps,\conf) \gtrsim \tsybca^{2} \left(\frac{1}{\eps}\right)^{2-2\tsyba}\min\left\{\s,\frac{1}{\tsybca\eps^{\tsyba}}\right\} \Log\left(\frac{1}{\conf}\right).
\end{equation*}
In particular, when $\s > \frac{1}{\tsybca \eps^{\tsyba}}$, we have $\LC_{\DL(\tsybca,\tsyba)}(\eps,\conf) \gtrsim \tsybca \eps^{\tsyba-2} \Log(1/\conf)$,
which is larger than the upper bound on $\LC_{\TN(\tsybca,\tsyba)}(\eps,\conf)$.
Furthermore, when $\s = \infty$, this lower bound has asymptotic dependence on $\eps$ that is $\Omega( \eps^{\tsyba-2} )$,
which is the same dependence
found in the sample complexity of passive learning, up to a logarithmic factor (see Section~\ref{sec:passive} below).
Comparing this to the upper bounds in Theorem~\ref{thm:tsybakov}, 
which exhibit asymptotic dependence on $\eps$ as $\LC_{\TN(\tsybca,\tsyba)}(\eps,\conf) = \tilde{O}( \eps^{\min\{2\tsyba-1,0\}-1} )$
when $\s = \infty$, we see that for this class, any $\tsyba \in (0,1)$ has 
$\LC_{\TN(\tsybca,\tsyba)}(\eps,\conf) \ll \LC_{\DL(\tsybca,\tsyba)}(\eps,\conf)$.
One reason this separation is interesting is that most of the existing
literature on active learning under $\TN(\tsybca,\tsyba)$ makes use
of the noise condition via the fact that it implies $\Px(x : h(x) \neq \target_{\PXY}(x)) \leq \tsybca (\er_{\PXY}(h) - \er_{\PXY}(\target_{\PXY}))^{\tsyba}$ for all $h \in \C$:
that is, $\TN(\tsybca,\tsyba) \subseteq \DL(\tsybca,\tsyba)$.  This 
separation indicates that, to achieve the optimal performance under $\TN(\tsybca,\tsyba)$,
one needs to consider more-specific properties of this noise model, beyond
those satisfied by $\DL(\tsybca,\tsyba)$.
Another reason this separation is quite interesting is that it 
contrasts with the known results for \emph{passive} learning, where 
(as we discuss in Section~\ref{sec:passive} below)
the sample complexities under these two noise models are \emph{equivalent}
(up to an unresolved logarithmic factor).

\paragraph{Gaps in Theorems~\ref{thm:diameter-localization} and \ref{thm:agnostic}, and related open problems:}
We conjecture that the dependence on $\vc$ and $\s$ in the upper bounds of 
Theorem~\ref{thm:diameter-localization} can be refined in general
(where presently it is linear in $\s \vc$).  
More specifically, we conjecture that the upper bound can be improved to 
\begin{equation*}
\LC_{\DL(\tsybca,\tsyba)}(\eps,\conf) \lesssim \tsybca^{2} \left(\frac{1}{\eps}\right)^{2-2\tsyba} \min\left\{ \s, \frac{\vc}{\tsybca \eps^{\tsyba}} \right\} \polylog\left(\frac{1}{\eps\conf}\right),
\end{equation*}
though it is unclear at this time as to how this might be achieved.
The above example (separating $\DL(\tsybca,\tsyba)$ from $\TN(\tsybca,\tsyba)$)
indicates that we generally cannot hope to reduce the upper bound on the label complexity for $\DL(\tsybca,\tsyba)$
much beyond this.

As for whether the form of the upper bound on $\LC_{\AG(\nu)}(\eps,\conf)$ in Theorem~\ref{thm:agnostic} can 
generally be improved to match the form of the upper bound for $\LC_{\BE(\nu)}(\eps,\conf)$, this remains a fascinating open question.
We conjecture that at least the dependence on $\vc$ and $\s$ can be improved to some extent (where presently it is linear in $\vc \s$).

\paragraph{Minutiae:}
We note that the restrictions to the ranges of $\eps$ and $\conf$ in the above results are required only for the lower bounds (aside from $\conf \in (0,1]$, $\eps > 0$),
as are the restrictions to the ranges of the parameters $\tsybca$, $\tsyba$, and $\nu$, aside from the constraints in the definitions in Section~\ref{sec:noise-models};
the upper bounds are proven without any such restrictions in Appendix~\ref{app:main-proofs}.
Also, several of the upper bounds above (e.g., Theorems~\ref{thm:tsybakov} and \ref{thm:benign}) are slightly looser (by logarithmic factors)
than those actually proven in Appendix~\ref{app:main-proofs}, which are typically stated in a different form (e.g., with factors of 
$\vc\Log\left(\frac{1}{\eps}\right)+\Log\left(\frac{1}{\conf}\right)$, rather than simply $\vc \cdot \polylog\left(\frac{1}{\eps\conf}\right)$).  
We state the weaker results here purely to simplify the theorem statements, referring the interested reader to the proofs for the refined versions.  
However, aside from Theorem~\ref{thm:realizable}, we believe it is possible to further optimize the logarithmic factors in all of these upper bounds.

We additionally note that we can also obtain results by the subset relations between the noise models.
For instance, since $\RE \subseteq \BN(\bound) \subseteq \BE(\bound) \subseteq \AG(\bound)$, 
in the case $\bound$ is close to $0$ we can increase the lower bounds in Theorems \ref{thm:bounded}, \ref{thm:benign}, and \ref{thm:agnostic}
based on the lower bound in Theorem~\ref{thm:realizable}: that is, for $\nu \geq \bound \geq 0$, 
\begin{equation*}
\LC_{\AG(\nu)}(\eps,\conf) \geq \LC_{\BE(\nu)}(\eps,\conf) \geq \LC_{\BN(\bound)}(\eps,\conf) \geq \LC_{\RE}(\eps,\conf) \gtrsim \max\left\{ \min\left\{ \s, \frac{1}{\eps}\right\},\vc\right\}.
\end{equation*}
Similarly, since $\RE$ is contained in all of the noise models studied here,
$\Log\left(\min\left\{\frac{1}{\eps}, |\C|\right\}\right)$ can also be included as a lower bound in each of these results.
Likewise, in the cases that $\tsybca$ is very large or $\tsyba$ is very close to $0$, we can get a more informative upper bound in Theorem~\ref{thm:tsybakov}
via Theorem~\ref{thm:benign}, since $\TN(\tsybca,\tsyba) \subseteq \BE(1/2)$.
For simplicity, in many cases we have not explicitly included the various compositions of the above results that can be obtained in this way
(with only a few exceptions).

\subsection{The Strategy behind Theorems~\ref{thm:tsybakov} and \ref{thm:benign}}
\label{sec:algorithm}

The upper bounds in Theorems \ref{thm:tsybakov} and \ref{thm:benign} represent the main results of this work,
and along with the upper bound in Theorem~\ref{thm:bounded}, are based on a general argument with essentially
three main components.  The first component is a more-sophisticated variant of a basic approach introduced
to the active learning literature by \citet*{kaariainen:06}: namely, reduction to the realizable case via repeatedly 
querying for the label at a point in $\X$ until its Bayes optimal classification can be determined (based
on a sequential probability ratio test, as studied by \citealp*{wald:45,wald:47}).  
Of course, in the present model of active learning,
repeatedly requesting a label $Y_{i}$ yields no new information beyond requesting $Y_{i}$ once, since we are 
not able to resample from the distribution of $Y_{i}$ given $X_{i}$ (as \citealp*{kaariainen:06}, does).  To resolve this,
we argue that it is possible to partition the space $\X$ into cells, in a way such that $\target_{\PXY}$ is nearly constant
in the vast majority of cells (without direct knowledge of $\target_{\PXY}$ or $\Px$); this is essentially a data-dependent
approximation to the recently-discovered finite approximability property of VC classes \citep*{adams:12}.
Given this partition, for a given point $X_{i}$, we can find many other points $X_{j}$ in the same cell of the partition with $X_{i}$,
and request labels for these points until we can determine what the majority label for the cell is.  We show that,
with high probability, this value will equal $\target_{\PXY}(X_{i})$, so that we can effectively use these majority
labels in an active learning algorithm for the realizable case.

However, we note that in the case of $\TN(\tsybca,\tsyba)$, if we simply apply 
this repeated querying strategy to random $\Px$-distributed samples, the resulting label complexity would be too large,
and we would sometimes expect to exhaust most of the queries determining the optimal labels in very \emph{noisy} regions
(i.e., in cells of the partition where $\eta(\cdot;\PXY)$ is close to $1/2$ on average).  This is because Tsybakov's condition
allows that such regions can have non-negligible probability, and the number of samples required to determine the
majority value of a $\pm 1$ random variable becomes unbounded as its mean approaches zero.  However, we can note
that it is also less important for the final classifier $\hat{h}$ to agree with $\target_{\PXY}$ on these high-noise points than it is for low-noise
points, since classifying them opposite from $\target_{\PXY}$ has less impact on the excess error rate $\er_{\PXY}(\hat{h}) - \er_{\PXY}(\target_{\PXY})$.
Therefore, as the second main component of our active learning strategy, we take
a tiered approach to learning, effectively shifting the distribution $\Px$ to favor points in cells with average $\eta(\cdot;\PXY)$
value further from $1/2$.
We achieve this by discarding a point $X_{i}$ if the number of queries exhausted toward determining the majority label in
its cell of the partition becomes excessively large, and we gradually decrease this threshold as the data set grows,
so that the points making it through this filter have progressively less and less noisy labels.  
By choosing $\hat{h}$ to agree with the inferred $\target_{\PXY}$ classification of every point passing this filter,
and combining this with the standard analysis of learning in the realizable case \citep*{vapnik:82,vapnik:98,blumer:89},
this allows us to provide a bound
on the fraction of points in $\X$ at a given level of noisiness (i.e., $|\eta(\cdot;\PXY)-1/2|$) on which the produced classifier $\hat{h}$ disagrees with $\target_{\PXY}$,
such that this bound decreases as the noisiness decreases (i.e., as $|\eta(\cdot;\PXY)-1/2|$ increases).  Furthermore, by discarding many of the 
points in high-noise regions without exhausting too many label requests trying to determine their $\target_{\PXY}$ classifications, we are able to reduce the total
number of label requests needed to obtain $\eps$ excess error rate.

Already these two components comprise the essential strategy that achieves these upper bounds in the case 
of $\s=\infty$.
However, to obtain the stated dependence on $\s$ in these bounds when $\s < \infty$, we need to 
introduce a third component: namely, using the inferred values of $\target_{\PXY}(X_{i})$ in the context 
of an active learning algorithm for the realizable case.  For this, we specifically use the disagreement-based 
strategy of \citet*{cohn:94} (known as CAL), which processes the unlabeled data in sequence, and requests to observe
the classification $\target_{\PXY}(X_{i})$ if and only if $X_{i}$ is in the region of disagreement of the set of 
classifiers in $\C$ consistent with all previously-observed $\target_{\PXY}(X_{j})$ values.
Using a modification of a recent analysis of this algorithm by \citet*{hanneke:14a}
(applied to each tier of label-noise separately), 
combined with the results below (in Section~\ref{sec:xtd}) relating the complexity measure 
used in that analysis to the star number, we obtain the dependence on $\s$ stated in the above results.

\section{Comparison to Passive Learning}
\label{sec:passive}

The natural baseline for comparison in active learning is the \emph{passive learning} protocol, 
in which the labeled data are i.i.d. samples with common distribution $\PXY$:
that is, the input to the passive learning algorithm is $(X_1,Y_1),\ldots,(X_n,Y_n)$.
In this context, the minimax sample complexity of passive learning, denoted $\SC_{\Dset}(\eps,\conf)$,
is defined as the smallest $n \in \nats \cup \{0\}$ for which there exists a passive learning rule
mapping $(X_1,Y_1),\ldots,(X_n,Y_n)$ to a classifier $\hat{h} : \X \to \Y$ such that,
for any $\PXY \in \Dset$, with probability at least $1-\conf$, $\er_{\PXY}(\hat{h}) - \inf_{h \in \C} \er_{\PXY}(h) \leq \eps$.

Clearly $\LC_{\Dset}(\eps,\conf) \leq \SC_{\Dset}(\eps,\conf)$ for any $\Dset$, since for every passive learning algorithm $\alg$,
there is an active learning algorithm that requests $Y_{1},\ldots,Y_{n}$ and then runs $\alg$ with $(X_{1},Y_{1}),\ldots,(X_{n},Y_{n})$
to determine the returned classifier.
One of the main interests in the theory of 
active learning is determining the size of the gap between these two complexities, for various sets $\Dset$.
For the purpose of this comparison, we now review several results known to hold for $\SC_{\Dset}(\eps,\conf)$, for 
various sets $\Dset$.
Specifically, the following bounds are known to hold for any choice of hypothesis class $\C$, 
and for $\bound$, $\tsybca$, $\tsyba$, $\nu$, $\eps$, and $\conf$ as in the respective theorems from Section~\ref{sec:main}
\citep*{vapnik:71,vapnik:82,vapnik:98,blumer:89,ehrenfeucht:89,haussler:94,massart:06,hanneke:survey}.

\begin{itemize}
\item $\frac{1}{\eps}\left( \vc + \Log\left(\frac{1}{\conf}\right) \right) \lesssim \SC_{\RE}(\eps,\conf) \lesssim \frac{1}{\eps}\left( \vc \Log\left( \frac{1}{\max\{\eps,\conf\}} \right) + \Log\left(\frac{1}{\conf}\right) \right)$.
\item $\frac{1}{(1-2\bound)\eps}\left( \vc + \Log\left(\frac{1}{\conf}\right) \right) \lesssim \SC_{\BN(\bound)}(\eps,\conf) \lesssim \frac{1}{(1-2\bound)\eps}\left( \vc \Log\left( \frac{1-2\bound}{\eps} \right) + \Log\left(\frac{1}{\conf}\right) \right)$.
\item $\frac{\tsybca}{\eps^{2-\tsyba}} \left( \vc + \Log\left(\frac{1}{\conf}\right) \right) \lesssim \SC_{\TN(\tsybca,\tsyba)}(\eps,\conf) \leq \SC_{\DL(\tsybca,\tsyba)} \lesssim \frac{\tsybca}{\eps^{2-\tsyba}} \left( \vc \Log\left( \frac{1}{\tsybca \eps^{\tsyba}} \right) + \Log\left(\frac{1}{\conf}\right) \right)$.
\item $\frac{\nu+\eps}{\eps^{2}}\left( \vc + \Log\left(\frac{1}{\conf}\right) \right) \lesssim \SC_{\BE(\nu)}(\eps,\conf) \leq \SC_{\AG(\nu)}(\eps,\conf) \lesssim \frac{\nu+\eps}{\eps^{2}} \left( \vc \Log\left(\frac{1}{\nu+\eps}\right) + \Log\left(\frac{1}{\conf}\right) \right)$.
\end{itemize}

Let us compare these to the results for active learning in Section~\ref{sec:main} on a case-by-case basis.
In the realizable case, we observe clear improvements of active learning over passive learning in the case $\s \ll \frac{\vc}{\eps}$ (aside from logarithmic factors).
In particular, based on the upper and lower bounds for both passive and active learning, we may conclude that 
$\s < \infty$ is necessary and sufficient for the asymptotic dependence on $\eps$ to satisfy
$\LC_{\RE}(\eps,\cdot) = o(\SC_{\RE}(\eps,\cdot))$; specifically, when $\s < \infty$, $\LC_{\RE}(\eps,\cdot) = O(\Log(\SC_{\RE}(\eps,\cdot)))$,
and when $\s = \infty$, $\LC_{\RE}(\eps,\cdot) = \Theta(\SC_{\RE}(\eps,\cdot))$.
For bounded noise, we have a similar asymptotic behavior.  When $\s < \infty$, again $\LC_{\BN(\bound)}(\eps,\cdot) = O(\polylog(\SC_{\BN(\bound)}(\eps,\cdot)))$,
and when $\s = \infty$, $\LC_{\BN(\bound)}(\eps,\cdot) = \tilde{\Theta}(\SC_{\BN(\bound)}(\eps,\cdot))$.
In terms of the constants, to obtain improvements over passive learning (aside from the effects of logarithmic factors),
it suffices to have $\s \ll \frac{(1-2\bound) \vc}{\eps}$, which is somewhat smaller (depending on $\bound$)
than was sufficient in the realizable case.

Under Tsybakov's noise condition, every $\tsyba \in (0,1/2]$ shows an improvement in the upper bounds for active learning 
over the lower bound for passive learning by a factor of roughly $\frac{1}{\tsybca \eps^{\tsyba}}$ (aside from logarithmic factors). 
On the other hand, when $\tsyba \in (1/2,1)$, 
if $\s < \frac{\vc}{\tsybca^{1/\tsyba} \eps}$, the improvement of active upper bounds over the passive lower bound is by a factor of roughly 
$\frac{1}{\tsybca \eps^{\tsyba}} \left(\frac{\vc}{\s}\right)^{2\tsyba-1}$, while for $\s \geq \frac{\vc}{\tsybca^{1/\tsyba} \eps}$,
the improvement is by a factor of roughly $\frac{1}{\tsybca^{\frac{1-\tsyba}{\tsyba}} \eps^{1-\tsyba}}$ (again, ignoring logarithmic factors in both cases).
In particular, for \emph{any} $\tsyba \in (0,1)$, when $\s < \infty$, the asymptotic dependence
on $\eps$ satisfies $\LC_{\TN(\tsybca,\tsyba)}(\eps,\cdot) = \tilde{\Theta}\left( \eps^{\tsyba} \SC_{\TN(\tsybca,\tsyba)}(\eps,\cdot) \right)$,
and when $\s = \infty$, the asymptotic dependence on $\eps$ satisfies
$\LC_{\TN(\tsybca,\tsyba)}(\eps,\cdot) = \tilde{\Theta}\left( \eps^{\min\{\tsyba,1-\tsyba\}} \SC_{\TN(\tsybca,\tsyba)}(\eps,\cdot) \right)$.
In either case, we have that for any $\tsyba \in (0,1)$, 
$\LC_{\TN(\tsybca,\tsyba)}(\eps,\cdot) = o( \SC_{\TN(\tsybca,\tsyba)}(\eps,\cdot) )$.

For the Bernstein class condition, the gaps in the upper and lower bounds of Theorem~\ref{thm:diameter-localization}
render unclear the necessary and sufficient conditions for $\LC_{\DL(\tsybca,\tsyba)}(\eps,\cdot) = o(\SC_{\DL(\tsybca,\tsyba)}(\eps,\cdot))$.
Certainly $\s < \infty$ is a sufficient condition for this, in which case the improvements are by a factor of roughly $\frac{1}{\tsybca \eps^{\tsyba}}$.
However, in the case of $\s = \infty$, the upper bounds do not reveal any improvements over those given above for $\SC_{\DL(\tsybca,\tsyba)}(\eps,\conf)$.
Indeed, the example given above in Section~\ref{sec:main} reveals that, in some nontrivial cases, $\LC_{\DL(\tsybca,\tsyba)}(\eps,\conf) \gtrsim \SC_{\DL(\tsybca,\tsyba)}(\eps,\conf) / \Log(1/\eps)$,
in which case any improvements would be, at best, in the constant and logarithmic factors.  Note that this example also presents an interesting contrast between active and passive learning,
since it indicates that in some cases $\LC_{\DL(\tsybca,\tsyba)}(\eps,\conf)$ and $\LC_{\TN(\tsybca,\tsyba)}(\eps,\conf)$ are quite different, 
while the above bounds for passive learning reveal that $\SC_{\DL(\tsybca,\tsyba)}(\eps,\conf)$ is equivalent to $\SC_{\TN(\tsybca,\tsyba)}(\eps,\conf)$ up to constant and logarithmic factors.

In the case of benign noise, comparing the above bounds for passive learning to Theorem~\ref{thm:benign}, we see
that (aside from logarithmic factors) the upper bound for active learning improves over the lower bound for passive learning
by a factor of roughly $\frac{1}{\nu}$ when $\nu \geq \sqrt{\eps}$.  When $\nu < \sqrt{\eps}$, if $\s > \frac{\vc}{\eps}$, 
the improvements are by a factor of roughly $\frac{\nu+\eps}{\eps}$, and if $\s \leq \frac{\vc}{\eps}$, the improvements
are by roughly a factor of $\min\left\{\frac{1}{\nu}, \frac{(\nu+\eps) \vc}{\eps^{2} \s}\right\}$ (again, ignoring logarithmic factors).
However, as has been known for this noise model for some time \citep*{kaariainen:06}, there are no gains in terms of 
the asymptotic dependence on $\eps$ for fixed $\nu$.  However, if we consider $\nu_{\eps}$ such that $\eps \leq \nu_{\eps} = o(1)$, 
then for $\s < \infty$ we have $\LC_{\BE(\nu_{\eps})}(\eps,\cdot) = \tilde{\Theta}( \nu_{\eps} \SC_{\BE(\nu_{\eps})}(\eps,\cdot) )$,
and for $\s = \infty$ we have $\LC_{\BE(\nu_{\eps})}(\eps,\cdot) = \tilde{O}\left( \max\left\{\nu_{\eps}, \frac{\eps}{\nu_{\eps}} \right\} \SC_{\BE(\nu_{\eps})}(\eps,\cdot) \right)$.

Finally, in the case of agnostic noise, similarly to the Bernstein class condition, the gaps between the upper and lower
bounds in Theorem~\ref{thm:agnostic} render unclear precisely what types of improvements we can expect when $\s > \frac{1}{\nu+\eps}$,
ranging from the lower bound, which has the behavior described above for $\LC_{\BE(\nu)}$, to the upper bound, which reflects
no improvements over passive learning in this case.  When $\s < \frac{1}{\nu+\eps}$, the upper bound for active learning
reflects an improvement over the lower bound for passive learning by roughly a factor of $\frac{1}{(\nu+\eps)\s}$ (aside from logarithmic factors).
It remains an interesting open problem to determine whether the stronger improvements observed for benign noise
generally also hold for agnostic noise.

\paragraph{A remark on logarithmic factors:}
It is known that the terms of the form ``$\vc \Log( x )$'' in each of the above upper bounds for passive learning can be refined
to replace $x$ with the maximum of the disagreement coefficient (see Section~\ref{sec:dc} below) over the distributions in $\Dset$ 
\citep*{gine:06,hanneke:12b,hanneke:survey}.
Therefore, based on the results in Section~\ref{sec:dc} relating the disagreement coefficient to the star number, we can 
replace these ``$\vc \Log(x)$'' terms with ``$\vc \Log(\s \land x)$''.
In the case of $\BN(\bound)$, \citet*{massart:06} and \citet*{raginsky:11} have argued that, at least in some cases,
this logarithmic factor can also be included in the lower bounds.  It is presently not known whether this is the case
for the other noise models studied here.

\section{Connections to the Prior Literature on Active Learning}
\label{sec:complexities}

\begin{table}[t]
\centering
\begin{tabular}{|l|c|c|}
\hline
technique & source & relation to $\s$ \\ \hline\hline
disagreement coefficient & \citep*{hanneke:07b} & $\sup\limits_{P} \dc_{P}(\eps) = \s \land \frac{1}{\eps}$\\ \hline
splitting index & \citep*{dasgupta:05} & $\sup\limits_{h,P} \lim\limits_{\tau\to0} \left\lfloor \frac{1}{\rho_{h,P}(\eps;\tau)} \right\rfloor = \s \land \left\lfloor \frac{1}{\eps} \right\rfloor$ \\ \hline
teaching dimension & \citep*{hanneke:07a} & $\XTD(\C,m) = \s \land m$\\ \hline
version space compression & \citep*{el-yaniv:10} & $\max\limits_{h \in \C} \max\limits_{\U \in \X^{m}} \hat{n}_{h}(\U) = \s \land m$\\ \hline
doubling dimension & \citep*{long:07} & $\sup\limits_{h,P} \dd_{h,P}(\eps) \!\in\! [1,O(\vc)] \log\!\left(\s \land \frac{1}{\eps}\right)$\\ \hline
\end{tabular}
\caption{Many of the complexity measures from the literature are related to the star number.}
\label{tab:complexities}
\end{table}

As mentioned, there is already a substantial literature bounding the label complexities of various active
learning algorithms under various noise models.  
It is natural to ask how the results in the prior literature compare to those stated above.
However, as most of the prior results are $\PXY$-dependent, the appropriate comparison 
is to the worst-case values of those results: that is, maximizing the bounds over $\PXY$ in 
the respective noise model.
This section makes this comparison.  In particular, we will see that the label complexity upper bounds above
for $\RE$, $\BN(\bound)$, $\TN(\tsybca,\tsyba)$, and $\BE(\nu)$ all show some improvements over the 
known results, with the last two of these showing the strongest improvements.

The general results in the prior literature each express 
their label complexity bounds in terms of some kind of complexity measure.  There are now
several such complexity measures in use, each appropriate for studying some family of active learning
algorithms under certain noise models.  Most of these quantities are dependent on the distribution $\PXY$
or the data, and their definitions are quite diverse.  For some pairs of them, there are known inequalities
loosely relating them, while other pairs have defied attempts to formally relate the quantities.
The dependence on $\PXY$ in the general results in the prior literature is
typically isolated to the various complexity measures they are expressed in terms of.  Thus, the natural first
step is to characterize the worst-case values of these complexity measures, for any given
hypothesis class $\C$.  Plugging these worst-case values into the original bounds then allows
us to compare to the results stated above.

In the process of studying the worst-case behaviors of these complexity measures,
we also identify a \emph{very} interesting fact that has heretofore gone unnoticed: namely, that 
almost all of the complexity measures in the relevant prior literature on the label complexity of active learning 
are in fact \emph{equal} to the star number when maximized over the choice of distribution or data set.
In some sense, this fact is quite surprising, as this seemingly-eclectic collection of complexity measures includes disparate
definitions and interpretations, corresponding to entirely distinct approaches to the analysis of the respective 
algorithms these quantities are used to bound the label complexities of.
Thus, this equivalence is interesting in its own right; additionally, it plays an important role in our proofs 
of the main results above, since it allows us to build on these diverse techniques from the prior literature
when establishing these results.

Each subsection below is devoted to a particular complexity measure from the prior literature on active learning,
each representing an established technique for obtaining label complexity bounds.  Together, they represent a 
summary of the best-known general results from the prior literature relevant to our present discussion.  
In each case, we show the equivalence of the worst-case value of the complexity
measure to the star number, and then combine this fact with the known results to obtain the corresponding
bounds on the minimax label complexities implicit in the prior literature.  In each case, we then compare this result to those
obtained above.

We additionally study the \emph{doubling dimension}, a quantity which has been used to bound the sample
complexity of passive learning, and can be used to provide a loose bound on the label complexity of certain 
active learning algorithms.  Below we argue that, when maximized over the choice of distribution, the doubling
dimension can be upper and lower bounded in terms of the star number.  One immediate implication of these
bounds is that the doubling dimension is bounded if and only if the star number is finite.

Our findings on the relations of these various complexity measures to the star number are summarized in Table~\ref{tab:complexities}.

\subsection{The Disagreement Coefficient}
\label{sec:dc}

We begin with, what is perhaps the most well-studied complexity measure in the active learning literature: the \emph{disagreement coefficient} \citep*{hanneke:07b,hanneke:thesis}.

\begin{definition}
\label{def:dc}
For any $r_{0} \geq 0$, any classifier $h$, and any probability measure $\Px$ over $\X$,
the disagreement coefficient of $h$ with respect to $\C$ under $\Px$ is defined as
\begin{equation*}
\dc_{h,\Px}(r_{0}) = \sup_{r > r_{0}} \frac{\Px\left( \DIS\left( \Ball_{\Px}\left(h, r \right) \right) \right)}{r} \lor 1.
\end{equation*}
Also, for any probability measure $\PXY$ over $\X \times \Y$, letting $\Px$ denote the marginal distribution of $\PXY$ over $\X$,
and letting $h^{*}_{\PXY}$ denote a classifier with $\er_{\PXY}(h^{*}_{\PXY}) = \inf_{h \in \C} \er_{\PXY}(h)$ and $\inf_{h \in \C} \Px(x : h(x) \neq h^{*}_{\PXY}(x)) = 0$,\footnote{See \citet*{hanneke:12a} for a proof that such a classifier always exists (though not necessarily in $\C$).}
define the disagreement coefficient of the class $\C$ with respect to $\PXY$ as $\dc_{\PXY}(r_{0}) = \dc_{h^{*}_{\PXY},\Px}(r_{0})$.
\end{definition}

The disagreement coefficient is used to bound the label complexities of a family of active learning algorithms, described as \emph{disagreement-based}.
This line of work was initiated by \citet*{cohn:94}, who propose an algorithm effective in the realizable case.
That method was extended to be robust to label noise by \citet*{balcan:06,balcan:09}, 
which then inspired a slew of papers studying variants of this idea; the interested reader is referred to 
\citet*{hanneke:survey} for a thorough survey of this literature.
The general-case label complexity analysis of disagreement-based active learning (in terms of the disagreement coefficient) was initiated in the work of \citet*{hanneke:07b,hanneke:thesis},
and followed up by many papers since then \citep*[e.g.,][]{dasgupta:07,hanneke:09a,hanneke:11a,hanneke:12a,koltchinskii:10,hanneke:12b}, as well as many works
characterizing the value of the disagreement coefficient under various conditions \citep*[e.g.,][]{hanneke:07b,friedman:09,hanneke:10a,wang:11,balcan:13,hanneke:survey};
again, see \citet*{hanneke:survey} for a thorough survey of the known results on the disagreement coefficient.

To study the worst-case values of the label complexity bounds expressed in terms of the disagreement coefficient, let us define
\begin{equation*}
\supdc(\eps) = \sup_{\PXY} \dc_{\PXY}(\eps).
\end{equation*}
In fact, a result of \citet*[][Theorem 7.4]{hanneke:survey} implies that $\supdc(\eps) = \sup_{\Px} \sup_{h \in \C} \dc_{h,\Px}(\eps)$,
so that this would be an equivalent way to define $\supdc(\eps)$, which can sometimes be simpler to work with.
We can now express the bounds on the minimax label complexity implied by the best general results to date in the prior literature on
disagreement-based active learning \citep*[namely, the results of][]{hanneke:11a,dasgupta:07,koltchinskii:10,hanneke:12b,hanneke:survey},
summarized as follows (see the survey of \citealp*{hanneke:survey}, for detailed descriptions of the best-known logarithmic factors in these results).
\begin{itemize}
\item $\LC_{\RE}(\eps,\conf) \lesssim \supdc(\eps) \vc \cdot \polylog\left(\frac{1}{\eps\conf}\right)$.
\item $\LC_{\BN(\bound)}(\eps,\conf) \lesssim \frac{1}{(1-2\bound)^{2}} \supdc(\eps/(1-2\bound)) \vc \cdot \polylog\left(\frac{1}{\eps\conf}\right)$.
\item $\LC_{\TN(\tsybca,\tsyba)}(\eps,\conf) \lesssim \tsybca^{2} \left(\frac{1}{\eps}\right)^{2-2\tsyba} \supdc( \tsybca \eps^{\tsyba} ) \vc \cdot \polylog\left(\frac{1}{\eps\conf}\right)$.
\item $\LC_{\DL(\tsybca,\tsyba)}(\eps,\conf) \lesssim \tsybca^{2} \left(\frac{1}{\eps}\right)^{2-2\tsyba} \supdc( \tsybca \eps^{\tsyba} ) \vc \cdot \polylog\left(\frac{1}{\eps\conf}\right)$.
\item $\LC_{\BE(\nu)}(\eps,\conf) \lesssim \left(\frac{\nu^{2}}{\eps^{2}}+1\right) \supdc(\nu+\eps) \vc \cdot \polylog\left(\frac{1}{\eps\conf}\right)$.
\item $\LC_{\AG(\nu)}(\eps,\conf) \lesssim \left(\frac{\nu^{2}}{\eps^{2}}+1\right) \supdc(\nu+\eps) \vc \cdot \polylog\left(\frac{1}{\eps\conf}\right)$.
\end{itemize}

In particular, these bounds on $\LC_{\TN(\tsybca,\tsyba)}(\eps,\conf)$, $\LC_{\DL(\tsybca,\tsyba)}(\eps,\conf)$, $\LC_{\BE(\nu)}(\eps,\conf)$, and $\LC_{\AG(\nu)}(\eps,\conf)$
are the best general-case bounds on the label complexity of active learning in the prior literature (up to logarithmic factors), so that any improvements over these should be considered
an interesting advance in our understanding of the capabilities of active learning methods.
To compare these results to those stated in Section~\ref{sec:main}, we need to relate $\supdc(\eps)$ to the star number.
Interestingly, we find that these quantities are \emph{equal} (for $\eps = 0$).
Specifically, the following result describes the relation between these two quantities;
its proof is included in Appendix~\ref{app:dc}.
This connection also plays a role in the proofs of some of our results from Section~\ref{sec:main}.

\begin{theorem}
\label{thm:dc-star}
$\forall \eps \in (0,1]$, 
$\supdc(\eps) = \s \land \frac{1}{\eps}$ and $\supdc(0) = \s$.
\end{theorem}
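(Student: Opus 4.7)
The plan is to prove the equality $\supdc(\eps) = \s \land (1/\eps)$ by establishing matching upper and lower bounds; the case $\supdc(0) = \s$ follows by the same arguments, interpreting $1/0$ as $+\infty$.

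For the lower bound, fix any finite $k \leq \s$. Definition~\ref{def:star} supplies $x_{1},\ldots,x_{k} \in \X$ and $h_{0},h_{1},\ldots,h_{k} \in \C$ with the star property. I will define $\Px$ to place mass $1/k$ on each $x_{i}$, with any leftover mass placed on an extra point where all of $h_{0},\ldots,h_{k}$ agree. Then $\Px(x : h_{i}(x) \neq h_{0}(x)) = 1/k$ for $i \geq 1$, so $h_{i} \in \Ball_{\Px}(h_{0},r)$ whenever $r \geq 1/k$, and $\DIS(\Ball_{\Px}(h_{0},r)) \supseteq \{x_{1},\ldots,x_{k}\}$ has $\Px$-measure $1$. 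Pushing $r \downarrow \max(\eps, 1/k)$ gives $\dc_{h_{0},\Px}(\eps) \geq \min(k,1/\eps)$. Taking $k=\s$ when $\s<\infty$, or letting $k \to \infty$ when $\s=\infty$, yields $\supdc(\eps) \geq \s \land (1/\eps)$; the $\eps = 0$ case is identical with the $1/\eps$ constraint vacuous.

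For the upper bound, the bound $\supdc(\eps) \leq 1/\eps$ (for $\eps > 0$) is immediate since any $r > \eps$ satisfies $\Px(\DIS(\Ball_{\Px}(h_{0},r)))/r \leq 1/r < 1/\eps$. The substantive bound is $\supdc(\eps) \leq \s$, which reduces to showing $\Px(\DIS(\Ball_{\Px}(h_{0},r))) \leq \s r$ for arbitrary $\Px$, $h_{0} \in \C$, and $r > 0$. The strategy is by contradiction: assume $\Px(\DIS(V)) > \s r$ for $V = \Ball_{\Px}(h_{0},r)$, and extract a star set of size $\s+1$ in $\C$, contradicting the definition of $\s$. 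Writing $D(h) = \{x : h(x) \neq h_{0}(x)\}$, one passes (by separability/measurability) to a countable $\{g_{1},g_{2},\ldots\} \subseteq V$ whose disagreement regions cover $\DIS(V)$ up to $\Px$-null sets, and forms the disjoint pieces $D'_{i} = D(g_{i}) \setminus \bigcup_{j<i} D(g_{j})$ with $\Px(D'_{i}) \leq r$. Because the pieces are disjoint and sum to more than $\s r$, more than $\s$ of them have positive $\Px$-mass, yielding candidate points $x_{1},\ldots,x_{\s+1}$ together with witnessing classifiers $g_{1},\ldots,g_{\s+1} \in V$.

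The main obstacle is to convert this greedy construction, which only produces a \emph{lower-triangular} incidence matrix ($g_{i}(x_{j}) = h_{0}(x_{j})$ is guaranteed only for $j<i$, not for $j > i$), into the full \emph{diagonal} pattern required by Definition~\ref{def:star}. I plan to handle this by replacing the enumeration-based greedy with an inductive selection: at step $i+1$, draw $x_{i+1}$ and $g_{i+1}$ from the restricted version space $V_{i} = \{h \in V : h(x_{j}) = h_{0}(x_{j}) \text{ for all } j \leq i\}$, insisting simultaneously that $g_{i+1} \in V_{i}$ (so that $g_{i+1}$ agrees with $h_{0}$ on every earlier $x_{j}$) and that $x_{i+1} \notin \bigcup_{j \leq i} D(g_{j})$ (so that every earlier $g_{j}$ agrees with $h_{0}$ on $x_{i+1}$); this maintains the full star property inductively. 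The delicate part is to verify that the admissible set $\DIS(V_{i}) \setminus \bigcup_{j \leq i} D(g_{j})$ retains positive $\Px$-mass for all $i \leq \s$, which requires carefully accounting for how restricting $V$ to $V_{i}$ shrinks the disagreement region relative to $\bigcup_{j \leq i} D(g_{j})$. An alternative and possibly cleaner route, which I would fall back on if the direct induction proves fragile, is to bound $\supdc(\eps)$ by the extended teaching dimension growth function and then invoke the identity $\XTD(\C,m) = \s \land m$ recorded in Table~\ref{tab:complexities}, effectively outsourcing the combinatorial extraction to the proof of that identity.
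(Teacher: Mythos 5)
Your lower bound construction is exactly the paper's: place uniform mass on the star-set points and note that all witnesses fall in the ball, so $\dc_{h_0,\Px}(\eps) \geq \min\{k,1/\eps\}$. The $\supdc(\eps) \leq 1/\eps$ half of the upper bound is also the same one-line observation. The genuine content is $\supdc(\eps) \leq \s$ when $\s < 1/\eps$, and here your proposal has a gap that I do not think the sketch closes.

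Your plan is a direct extraction: assume $\Px(\DIS(\Ball_{\Px}(h_0,r))) > \s r$ and inductively pick $x_{i+1} \in \DIS(V_i) \setminus \bigcup_{j\le i} D(g_j)$ where $V_i$ is the version space restricted to agree with $h_0$ on $x_1,\ldots,x_i$. You correctly identify that the step needing justification is positivity of $\Px\bigl(\DIS(V_i) \setminus \bigcup_{j\le i} D(g_j)\bigr)$. But the mass removed in passing from $\DIS(V_{i-1})$ to $\DIS(V_i)$ is not controlled by $r$: it is contained in $\bigcup \{ D(h) : h \in V_{i-1},\, h(x_i) \neq h_0(x_i) \}$, and this union can be far larger than $r$, since many classifiers in the ball can simultaneously flip on $x_i$ while their disagreement regions $D(h)$ cover disjoint chunks elsewhere. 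Thus removing the single constraint ``$h(x_i)=h_0(x_i)$'' can uncover an unbounded amount of $\DIS$, and the bookkeeping $\Px(\DIS(V_i)) > i r$ does not follow from $\Px(\DIS(V_0)) > \s r$. The earlier, simpler greedy gives only lower-triangular agreement (as you note), and the refined induction trades the triangularity problem for an uncontrolled shrinkage problem. Your fallback --- routing through $\XTD(\C,m)=\s\land m$ --- also does not close the gap, because that identity alone supplies no inequality of the form $\supdc(\eps) \lesssim \XTD(\C,\cdot)$; there is no direct bridge from teaching dimension to the disagreement coefficient in the paper, and the two equalities in Table~\ref{tab:complexities} are proved independently, each by its own argument.

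The paper takes a quite different route: it first reduces to finitely discrete marginals (Lemma~\ref{lem:dc-star-finitely-discrete}), and then inducts on the \emph{support size} $m$, not on the number of extracted points. The base case $m\le\s$ is a pigeonhole count of atoms with mass $\le r$, and the inductive step ($m>\s$) splits into three cases that each \emph{transfer mass} to a measure with support size $m-1$ without decreasing $\Px(\DIS(\Ball_{\Px}(h,r)))/r$: (a) drop an atom outside $\DIS$, (b) merge a dominated atom into its dominator, and (c) --- the crucial case --- use the failure of $\{z_1,\ldots,z_m\}$ to be a star set to find an atom $z_i$ that is always accompanied by some $z_j$, then merge $z_i$ into $z_j$. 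The combinatorial failure of the star-set property appears as the \emph{source} of the mass merge, not as the target of an extraction, which is what lets the induction close. If you want to repair your direct approach, you would need either an a~priori lower bound on $\Px(\DIS(V_i))$ in terms of $\Px(\DIS(V))$ that is independent of how the restriction is chosen (which I do not think holds in this generality), or a recursion on something other than the count of extracted points; the paper's recursion on support size is one such choice.
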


With this result in hand, we immediately observe that several of the upper bounds from Section~\ref{sec:main} offer 
refinements over those stated in terms of $\supdc(\cdot)$ above.  For simplicity, we do not discuss differences in the logarithmic factors here.
Specifically, the upper bound on $\LC_{\RE}(\eps,\conf)$ in Theorem~\ref{thm:realizable} refines that stated here by replacing the factor 
$\supdc(\eps) \vc = \min\left\{ \s \vc, \frac{\vc}{\eps} \right\}$ with the sometimes-smaller factor $\min\left\{ \s, \frac{\vc}{\eps} \right\}$.  
Likewise, the upper bound on $\LC_{\BN(\bound)}(\eps,\conf)$ in Theorem~\ref{thm:bounded} refines the result stated here,
again by replacing the factor $\supdc(\eps/(1-2\bound)) \vc = \min\left\{ \s \vc, \frac{(1-2\bound) \vc}{\eps} \right\}$
with the sometimes-smaller factor $\min\left\{ \s, \frac{(1-2\bound)\vc}{\eps} \right\}$.
On the other hand, Theorem~\ref{thm:tsybakov} offers a much stronger refinement over the result stated above.
Specifically, in the case $\tsyba \leq 1/2$, the upper bound in Theorem~\ref{thm:tsybakov} completely \emph{eliminates}
the factor of $\supdc(\tsybca \eps^{\tsyba})$ from the upper bound on $\LC_{\TN(\tsybca,\tsyba)}(\eps,\conf)$ stated here
(i.e., replacing it with a universal constant).  For the case $\tsyba > 1/2$, the upper bound on $\LC_{\TN(\tsybca,\tsyba)}(\eps,\conf)$
in Theorem~\ref{thm:tsybakov} replaces this factor of $\supdc(\tsybca \eps^{\tsyba}) = \min\left\{ \s, \frac{1}{\tsybca \eps^{\tsyba}} \right\}$
with the factor $\min\left\{ \frac{\s}{\vc}, \frac{1}{\tsybca^{1/\tsyba} \eps} \right\}^{2\tsyba-1}$,
which is always smaller (for small $\eps$ and large $\vc$).
The upper bounds on $\LC_{\DL(\tsybca,\tsyba)}(\eps,\conf)$ and $\LC_{\AG(\nu)}(\eps,\conf)$ in Theorems \ref{thm:diameter-localization} and \ref{thm:agnostic}
are equivalent to those stated here; indeed, this is precisely how these results are obtained in Appendix~\ref{app:main-proofs}.
We have conjectured above that at least the dependence on $\vc$ and $\s$ can be refined, analogous to the refinements for the 
realizable case and bounded noise noted above.
However, we \emph{do} obtain refinements for the bound on $\LC_{\BE(\nu)}(\eps,\conf)$ in Theorem~\ref{thm:benign}, 
replacing the factor of $\left(\frac{\nu^{2}}{\eps^{2}} + 1\right) \supdc(\nu+\eps) \vc = \left(\frac{\nu^{2}}{\eps^{2}}+1\right)\min\left\{\s \vc, \frac{\vc}{\nu+\eps} \right\}$ in the upper bound here
with a factor $\frac{\nu^{2}}{\eps^{2}} \vc + \min\left\{\s,\frac{\vc}{\eps}\right\}$, which is sometimes significantly smaller (for $\eps \ll \nu \ll 1$ and large $\vc$).

\subsection{The Splitting Index}
\label{sec:splitting}

Another, very different, approach to the design and analysis of active learning algorithms
was proposed by \citet*{dasgupta:05}: namely, the \emph{splitting} approach.
In particular, this technique has the desirable property that it yields distribution-dependent
label complexity bounds for the realizable case which, even when the marginal distribution $\Px$ 
is held fixed, (almost) imply near-minimax performance.
The intuition behind this technique is that the objective in the realizable case (achieving error rate
at most $\eps$) is typically well-approximated by the related objective of reducing the \emph{diameter}
of the version space (set of classifiers consistent with the observed labels) to size at most $\eps$.
From this perspective, at any given time, the impediments to achieving this objective are clearly 
identifiable: pairs of classifiers $\{h,g\}$ in $\C$ consistent with all labels observed thus far,
yet with $\Px(x : h(x) \neq g(x)) > \eps$.  Supposing we have only a finite number of such classifiers
(which can be obtained if we first replace $\C$ by a fine-grained finite \emph{cover} of $\C$),
we can then estimate the \emph{usefulness} of a given point $X_{i}$ by the number of these
pairs it would be guaranteed to eliminate if we were to request its label (supposing the worse
of the two possible labels); by ``eliminate,'' we mean that at least one of the two classifiers 
will be inconsistent with the observed label.  If we always request labels of points guaranteed
to eliminate a large fraction of the surviving $\eps$-separated pairs, we will quickly arrive
at a version space of diameter $\eps$, and can then return any surviving classifier.
\citet*{dasgupta:05} further applies this strategy in tiers, first eliminating at least one classifier 
from every $\frac{1}{2}$-separated pair, then repeating this for the remaining $\frac{1}{4}$-separated 
pairs, and so on.  This allows the label complexity to be \emph{localized}, in the sense that
the surviving $\Delta$-separated pairs we need to eliminate will be composed of classifiers 
within distance $2\Delta$ of $\target_{\PXY}$ (or the representative thereof in the initial 
finite cover of $\C$).
The analysis of this method naturally leads to the following definition from \citet*{dasgupta:05}.

For any finite set $Q \subseteq \{\{h,g\} : h,g \in \C\}$ of unordered pairs of classifiers in $\C$,
for any $x \in \X$ and $y \in \Y$, let $Q_{x}^{y} = \{\{h,g\} \in Q : h(x) = g(x) = y\}$, and define
\begin{equation*}
\Split(Q,x) = |Q| - \max_{y \in \Y} |Q_{x}^{y}|.
\end{equation*}
This represents the number of pairs guaranteed to be eliminated (as described above) by 
requesting the label at a point $x$.
The splitting index is then defined as follows.

\begin{definition}
\label{def:splitting}
For any $\rho,\Delta,\tau \in [0,1]$, a set $\H \subseteq \C$ is said to be $(\rho,\Delta,\tau)$-splittable under a probability measure $\Px$ over $\X$ if,
for all finite $Q \subseteq \{\{h,g\} \subseteq \H : \Px(x : h(x) \neq g(x)) \geq \Delta \}$, 
\begin{equation*}
\Px(x : \Split(Q,x) \geq \rho |Q|) \geq \tau.
\end{equation*}
For any classifier $h : \X \to \Y$, any probability measure $\Px$ over $\X$, and any
$\eps,\tau \in [0,1]$, the \emph{splitting index} is defined as
\begin{equation*}
\rho_{h,\Px}(\eps;\tau) = \sup\left\{ \rho \in [0,1] : \forall \Delta \geq \eps, \Ball_{\Px}(h,4\Delta) \text{ is } (\rho,\Delta,\tau)\text{-splittable under } \Px \right\}.
\end{equation*}
\end{definition}

\citet*{dasgupta:05} proves a bound on the label complexity of a general active learning algorithm based on the above strategy, in the realizable case,
expressed in terms of the splitting index.  Specifically, for any $\tau > 0$, letting $\rho = \rho_{\target_{\PXY},\Px}(\eps/4;\tau)$,
\citet*{dasgupta:05} finds that for that algorithm to achieve error rate at most $\eps$ with probability at least $1-\conf$,
it suffices to use a number of label requests
\begin{equation}
\label{eqn:dasgupta-splitting-bound}
\frac{\vc}{\rho} \polylog\left(\frac{\vc}{\eps\conf\tau\rho}\right).
\end{equation}

The $\tau$ argument to $\rho_{h,\Px}(\eps;\tau)$ captures the trade-off between the number of label requests and the number of unlabeled samples available,
with smaller $\tau$ corresponding to the scenario where more unlabeled data are available, and a larger value of $\rho_{h,\Px}(\eps;\tau)$.  
Specifically, \citet*{dasgupta:05} argues that $\tilde{O}\left(\frac{\vc}{\tau \rho}\right)$ unlabeled samples suffice to achieve the above result.
In our present model, we suppose an abundance of unlabeled data, and as such, we are interested in the behavior for very small $\tau$.
However, note that the logarithmic factors in the above bound have an inverse dependence on $\tau$, so that taking $\tau$ too small can
potentially increase the value of the bound.  It is not presently known whether or not this is necessary (though intuitively it seems not to be).
However, for the purpose of comparison to our results in Section~\ref{sec:main}, we will ignore this logarithmic dependence on $1/\tau$,
and focus on the leading factor.  In this case, we are interested in the value $\lim\limits_{\tau \to 0} \rho_{h,\Px}(\eps;\tau)$.
Additionally, to convert \eqref{eqn:dasgupta-splitting-bound} into a distribution-free bound for the purpose of comparison to the results in Section~\ref{sec:main},
we should minimize this value over the choice of $\Px$ and $h \in \C$.
Formally, we are interested in the following quantity, defined for any $\eps \in [0,1]$.
\begin{equation*}
\infsplit(\eps) = \inf_{P} \inf_{h \in \C} \lim_{\tau \to 0} \rho_{h,P}(\eps;\tau).
\end{equation*}
In particular, in terms of this quantity, the maximum possible value of the bound \eqref{eqn:dasgupta-splitting-bound} for a given hypothesis class $\C$
is at least
\begin{equation*}
\frac{\vc}{\infsplit(\eps/4)} \polylog\left(\frac{\vc}{\eps\conf}\right).
\end{equation*}
To compare this to the upper bound in Theorem~\ref{thm:realizable}, we need to 
relate $\frac{1}{\infsplit(\eps)}$ to the star number.  Again, we find that these quantities are 
essentially \emph{equal} (as $\eps \to 0$), as stated in the following theorem.

\begin{theorem}
\label{thm:splitting-star}
$\forall \eps \in (0,1]$, 
$\left\lfloor \frac{1}{\infsplit(\eps)} \right\rfloor = \s \land \left\lfloor \frac{1}{\eps} \right\rfloor$.
\end{theorem}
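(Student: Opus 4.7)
The plan is to prove the equality by separately establishing $\infsplit(\eps) \leq 1/m$ and $\infsplit(\eps) > 1/(m+1)$, where $m = \s \land \lfloor 1/\eps \rfloor$. Together these bounds pin down $\lfloor 1/\infsplit(\eps) \rfloor$ to be exactly $m$.

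For the upper bound $\infsplit(\eps) \leq 1/m$, I would exhibit a single adversarial instance using the star. Pick a star set $\{x_1,\ldots,x_m\}$ for $\C$ witnessed by some $\{h_0,h_1,\ldots,h_m\} \subseteq \C$; such a set exists because $m \leq \s$. Let $P$ be the uniform probability measure on these $m$ points, so each $P(\{x_i\}) = 1/m \geq \eps$ (using $m \leq \lfloor 1/\eps \rfloor$). Set $\Delta = 1/m$ and $Q = \{\{h_0,h_i\} : 1\leq i \leq m\}$. By the defining property of the star, each $h_i$ disagrees with $h_0$ only at $x_i$ among the support of $P$, so every $h_i$ lies in $\Ball_{P}(h_0, 4\Delta)$, every pair in $Q$ is $\Delta$-separated, and for each $x_j$ the only pair in $Q$ splitting at $x_j$ is $\{h_0,h_j\}$ (all other pairs $\{h_0,h_i\}$ with $i\neq j$ have $h_0(x_j)=h_i(x_j)$). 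Hence $\Split(Q,x_j) = 1$ for every $j$, and for any $\rho > 1/m$ we have $\rho|Q| > 1$, so $\{x : \Split(Q,x) \geq \rho|Q|\}$ is $P$-null. This rules out $(\rho,\Delta,\tau)$-splittability for every $\tau > 0$, giving $\rho_{h_0,P}(\eps;\tau) \leq 1/m$ uniformly in $\tau$, and hence $\infsplit(\eps) \leq 1/m$.

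For the lower bound $\infsplit(\eps) > 1/(m+1)$, the goal is to show that for every $P$ and every $h \in \C$, some $\rho > 1/(m+1)$ and $\tau > 0$ witness $(\rho,\Delta,\tau)$-splittability of $\Ball_{P}(h,4\Delta)$ at every $\Delta \geq \eps$. Fix $P$ and $h$, and let $(Q,\Delta)$ be admissible with $\Delta \geq \eps$. Since $\Split(Q,x)$ is at least the number of pairs in $Q$ that disagree at $x$, integrating against $P$ yields
\begin{equation*}
\int \Split(Q,x)\,dP(x) \;\geq\; \sum_{\{h_a,h_b\}\in Q} P\!\left(x : h_a(x)\neq h_b(x)\right) \;\geq\; |Q|\Delta \;\geq\; |Q|\eps.
\end{equation*}
Combining this with the trivial bound $\Split(Q,x) \leq |Q|$ via a Markov-style computation gives $P(x : \Split(Q,x) \geq \rho|Q|) \geq (\eps-\rho)/(1-\rho)$ uniformly in $(Q,\Delta)$, for any $\rho < \eps$. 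Since $\lfloor 1/\eps\rfloor + 1 > 1/\eps$, we always have $1/(\lfloor 1/\eps\rfloor + 1) < \eps$; thus when $m = \lfloor 1/\eps \rfloor$ (equivalently, when $\eps > 1/(m+1)$), choosing any $\rho \in (1/(m+1),\eps)$ produces a uniform positive lower bound on the measure, establishing $\infsplit(\eps) > 1/(m+1)$ in this regime.

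The main obstacle is the remaining regime $m = \s \leq \lfloor 1/\eps\rfloor$ with $\eps \leq 1/(m+1)$, where the integral argument above is vacuous (any $\rho > 1/(m+1)$ already satisfies $\rho \geq \eps$). Here I would argue by contrapositive: if for arbitrarily small $\tau > 0$ there exists an admissible $(Q,\Delta)$ with $P(x : \Split(Q,x) \geq \rho|Q|) < \tau$ for some $\rho > 1/(\s+1)$, then one should be able to extract from $Q$ a configuration $\{y_1,\ldots,y_{\s+1}\} \subseteq \X$ and classifiers $\{g_0,g_1,\ldots,g_{\s+1}\} \subseteq \C$ forming a star set of size $\s+1$, contradicting the maximality in Definition~\ref{def:star}. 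The extraction exploits that the smallness of $\Split(Q,\cdot)$ forces $P$-almost every $x$ to have almost all pairs of $Q$ agreeing on a single common label at $x$; a careful inductive pigeonhole selection of pair-witnesses $y_i$, together with a common center classifier $g_0$ obtained by pruning $Q$ until a single classifier appears in nearly all surviving pairs, should then yield the star. Verifying that this selection produces disagreement only at the intended witnesses within $\{y_1,\ldots,y_{\s+1}\}$ is the most intricate step, and is where I expect the bulk of the technical difficulty to lie.
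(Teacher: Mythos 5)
Your upper bound argument ($\infsplit(\eps) \leq 1/m$ via the uniform distribution on a star set) and your Markov-style lower bound ($\basicsplit(\eps) \geq \eps$, covering the case $m = \lfloor 1/\eps\rfloor$) are both correct and match the paper's route; these trace back to Dasgupta's Corollary~3 and Lemma~1 respectively. The real content of the theorem is the remaining inequality $\left\lfloor 1/\infsplit(\eps)\right\rfloor \leq \s$, and there your proposal has a genuine gap rather than a proof.

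Two concrete obstacles are not addressed. First, the quantity $\infsplit(\eps)$ is defined via a limit $\tau\to 0$, with the adversarial $Q$ allowed to depend on $\tau$; to argue from ``failure of splittability at every $\tau$'' to ``there is a single $Q$ with $\Split(Q,\cdot)$ small $P$-a.e.,'' one needs to fix the target. The paper handles this by first reducing to finitely discrete measures (Lemma~\ref{lem:splitting-star-finitely-discrete}); once $\Px$ has a minimal atom of mass $\tau_{\Px}$, taking $\tau<\tau_{\Px}$ collapses the condition to the purely combinatorial statement $\max_{x\in\supp(\Px)}\Split(Q,x) < |Q|/\s$ for a single finite $Q$. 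Your sketch skips this step, and without it the phrase ``$P$-almost every $x$ has almost all pairs agreeing'' does not follow cleanly. Second, the extraction itself is the delicate part, and your description of it --- selecting witnesses by ``inductive pigeonhole'' and obtaining a common center $g_0$ by ``pruning $Q$ until a single classifier appears in nearly all surviving pairs'' --- does not work as stated: the classifiers appearing in $Q$ can all be distinct, so no pruning of pairs produces a dominant classifier. The paper instead runs a double induction (on the support size $m$ of $\Px$, and a nested induction on $Q$): a greedy set-cover over disagreement-witness pairs $\{f_{z_i},g_{z_i}\}$ selects $K$ pairs whose removal from $Q$ provably decreases $\Split(Q,\cdot)$ by exactly one at every support point (this is inequality~\eqref{eqn:split-reduced-by-one}), and the key lemma $K\leq\s$ is established by exhibiting a star set of size $K$ whose center is simply one member of the first selected pair, with consistency on the chosen points deduced from the cover structure. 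Your proposal recognizes where the difficulty lies but does not supply the combinatorial argument; to complete it you would essentially have to reconstruct this set-cover induction.
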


The proof of this result is included in Appendix~\ref{app:splitting}.
We note that the inequalities $\s \land \left\lfloor\frac{1}{\eps}\right\rfloor \leq \left\lfloor \frac{1}{\infsplit(\eps)} \right\rfloor \leq \left\lfloor \frac{1}{\eps} \right\rfloor$ 
were already implicit in the original work of \citet*[][Corollary 3 and Lemma 1]{dasgupta:05}.
For completeness (and to make the connection explicit), we include these arguments in the proof given in Appendix~\ref{app:splitting},
along with our proof that $\left\lfloor \frac{1}{\infsplit(\eps)} \right\rfloor \leq \s$ (which was heretofore unknown).

Plugging this into the above bound, we see that the maximum possible value of the bound \eqref{eqn:dasgupta-splitting-bound} for a given hypothesis class $\C$
is at least
\begin{equation*}
\min\left\{ \s \vc, \frac{\vc}{\eps} \right\} \polylog\left(\frac{\vc}{\eps\conf}\right).
\end{equation*}
Note that the upper bound in Theorem~\ref{thm:realizable} refines this by reducing the first term in the ``$\min$'' from $\s \vc$ to simply $\s$.

\citet*{dasgupta:05} also argues for a kind of lower bound in terms of the splitting index, 
which was reformulated as a lower bound on the minimax label complexity (for a fixed $\Px$) 
in the realizable case by \citet*{hanneke:12c,hanneke:survey}.
In our present distribution-free style of analysis, the implication of that result is the following lower bound.
\begin{equation*}
\LC_{\RE}(\eps,\conf) \gtrsim \frac{1}{\infsplit(4\eps)}.
\end{equation*}
Based on Theorem~\ref{thm:splitting-star}, we see that the $\min\left\{ \s, \frac{1}{\eps} \right\}$ term in the lower bound
of Theorem~\ref{thm:realizable} follows immediately from this lower bound.  For completeness, in Appendix~\ref{app:main-proofs}, 
we directly prove this term in the lower bound, based on a more-direct argument than that used to establish the above lower bound.
We note, however, that \citet*[][Corollary 3]{dasgupta:05} also describes a technique for obtaining lower bounds, which is 
essentially equivalent to that used in Appendix~\ref{app:main-proofs} to obtain this term (and furthermore, makes use of 
a distribution-dependent version of the ``star'' idea).

The upper bounds of \citet*{dasgupta:05} have also been extended to the bounded noise setting.
In particular, \citet*{hanneke:12c} and \citet*{hanneke:survey} have proposed variants of the 
splitting approach, which are robust to bounded noise.  They have additionally bounded the label 
complexities of these methods in terms of the splitting index.  Similarly to the above discussion 
of the realizable case, the worst-case values of these bounds for any given hypothesis class $\C$ are larger 
than those stated in Theorem~\ref{thm:bounded} by factors related to the VC dimension (logarithmic
factors aside).  We refer the interested readers to these sources for the details of those bounds.

\subsection{The Teaching Dimension}
\label{sec:xtd}

Another quantity that has been used to bound the label complexity of certain active learning 
methods is the \emph{extended teaching dimension growth function}.  This quantity was introduced
by \citet*{hanneke:07a}, inspired by analogous notions used to tightly-characterize the query complexity 
of \emph{Exact} learning with membership queries \citep*{hegedus:95,hellerstein:96}. The term
\emph{teaching dimension} takes its name from the literature on Exact teaching \citep*{goldman:95},
where the teaching dimension characterizes the minimum number of well-chosen labeled data points
sufficient to guarantee that the only classifier in $\C$ consistent with these labels is the target function.
\citet*{hegedus:95} extends this to target functions not contained in $\C$, in which case the objective
is simply to leave at most one consistent classifier in $\C$; he refers to the minimum number of 
points sufficient to achieve this as the \emph{extended teaching dimension}, and argues that this
quantity can be used to characterize the minimum number of \emph{membership queries} by a 
learning algorithm sufficient to guarantee that the only classifier in $\C$ consistent with the 
returned labels is the target function (which is the objective in the \emph{Exact} learning model).

\citet*{hanneke:07a} transfers this strategy to the statistical setting studied here (where the objective
is only to obtain excess error rate $\eps$ with probability $1-\conf$, rather than exactly identifying
a target function).  That work introduces empirical versions of the teaching dimension and 
extended teaching dimension, and defines distribution-dependent bounds on these quantities.
It then proves upper and lower bounds on the label complexity in terms of these quantities.  
For our present purposes, we will be most-interested in a particular distribution-free
upper bound on these quantities, called the \emph{extended teaching dimension growth function},
also introduced by \citet*{hanneke:06,hanneke:07a}.  Since both this quantity and the star number are distribution-free, 
they can be directly compared.

We introduce these quantities formally as follows.
For any $m \in \nats \cup \{0\}$ and $S \in \X^{m}$, and for any $h : \X \to \Y$,
define the \emph{version space} $V_{S,h} = \{ g \in \C : \forall x \in S, g(x) = h(x) \}$ \citep*{mitchell:77}.
For any $m \in \nats$ and $\U \in \X^{m}$, 
let $\C[\U]$ denote an arbitrary subset of classifiers in $\C$ such that, $\forall h \in \C$, $|\C[\U] \cap V_{\U,h}| = 1$:
that is, $\C[\U]$ contains exactly one classifier from each equivalence class in $\C$ induced by the classifications of $\U$.
For any classifier $h : \X \to \Y$, define 
\begin{equation*}
\TD(h,\C[\U],\U) = \min\{ t \in \nats \cup \{0\} : \exists S \in \U^{t} \text{ s.t. } |V_{S,h} \cap \C[\U]| \leq 1 \},
\end{equation*}
the \emph{empirical teaching dimension} of $h$ on $\U$ with respect to $\C[\U]$.
Any $S \in \bigcup_{t} \U^{t}$ with 
$|V_{S,h} \cap \C[\U]| \leq 1$ is called a \emph{specifying set} for $h$ on $\U$ with respect to $\C[\U]$;
thus, $\TD(h,\C[\U],\U)$ is the size of a \emph{minimal specifying set} for $h$ on $\U$ with respect to $\C[\U]$.
Equivalently, $S \in \bigcup_{t} \U^{t}$ is a specifying set for $h$ on $\U$ with respect to $\C[\U]$ if and only if 
$\DIS(V_{S,h}) \cap \U = \emptyset$.
Also define $\TD(h,\C,m) = \max\limits_{\U \in \X^{m}} \TD(h,\C[\U],\U)$,
$\TD(\C,m) = \max\limits_{h \in \C} \TD(h,\C,m)$ (the \emph{teaching dimension growth function}),
and $\XTD(\C,m) = \max\limits_{h : \X \to \Y} \TD(h,\C,m)$ (the \emph{extended teaching dimension growth function}).

\citet*{hanneke:07a} proves two upper bounds on the label complexity of active learning relevant to our present discussion.
They are summarized as follows (see the original source for the precise logarithmic factors).\footnote{Here we have simplified
the arguments $m$ to the $\XTD(\C,m)$ instances compared to those of \citet*{hanneke:07a}, 
using monotonicity of $m \mapsto \XTD(\C,m)$, combined with the basic observation that $\XTD(\C,m k) \leq \XTD(\C,m) k$ for any integer $k \geq 1$.}
\begin{itemize}
\item[$\bullet$] $\LC_{\RE}(\eps,\conf) \lesssim \XTD\left( \C, \left\lceil \frac{1}{\eps} \right\rceil \right) \vc \cdot \polylog\left(\frac{\vc}{\eps\conf}\right)$.
\item[$\bullet$] $\LC_{\AG(\nu)}(\eps,\conf) \lesssim \left(\frac{\nu^{2}}{\eps^{2}}+1\right) \XTD\left( \C, \left\lceil \frac{1}{\nu+\eps} \right\rceil \right) \vc \cdot \polylog\left(\frac{\vc}{\eps\conf}\right)$.
\end{itemize}
Since $\BE(\nu) \subseteq \AG(\nu)$, we have the further implication that 
\begin{equation*}
\LC_{\BE(\nu)}(\eps,\conf) \lesssim \left(\frac{\nu^{2}}{\eps^{2}}+1\right) \XTD\left( \C, \left\lceil \frac{1}{\nu+\eps} \right\rceil \right) \vc \cdot \polylog\left(\frac{\vc}{\eps\conf}\right).
\end{equation*}
Additionally, by a refined argument of \citet*{hegedus:95}, the ideas of \citet*{hanneke:07a} can be applied (see \citealp*{hanneke:06,hanneke:thesis}) to show that
\begin{equation*}
\LC_{\RE}(\eps,\conf) \lesssim \frac{\XTD( \C, \lceil \vc/\eps \rceil )}{\log_{2}(\XTD(\C, \lceil \vc/\eps \rceil))} \vc \cdot \polylog\left(\frac{\vc}{\eps\conf}\right).
\end{equation*}

To compare these bounds to the results stated in Section~\ref{sec:main}, we will need to relate the quantity $\XTD(\C,m)$ to the star number.
Although it may not be obvious from a superficial reading of the definitions,
we find that these quantities are \emph{exactly equal} (as $m\to\infty$).  Thus, the
extended teaching dimension growth function is simply an alternative way of referring to the star number (and vice versa),
as they define the same quantity.\footnote{In this sense, the star number is not really a \emph{new} quantity
to the active learning literature, but rather a simplified definition for the already-familiar
extended teaching dimension growth function.}
This equivalence is stated formally in the following theorem, the proof of which is included in Appendix~\ref{app:xtd}.

\begin{theorem}
\label{thm:xtd}
$\forall m \in \nats$, $\XTD(\C, m) = \TD(\C,m) = \min\{\s, m\}$.
\end{theorem}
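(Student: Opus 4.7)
The plan is to establish the chain $\min\{\s,m\} \leq \TD(\C,m) \leq \XTD(\C,m) \leq \min\{\s,m\}$, from which all three quantities are equal. The middle inequality is immediate from the definitions, and $\XTD(\C,m) \leq m$ is also trivial: for any $h:\X\to\Y$ and any $\U \in \X^m$, the full set $\U$ is always a specifying set, since by definition $\C[\U]$ contains at most one classifier per equivalence class modulo $\U$, so $|V_{\U,h} \cap \C[\U]| \leq 1$.

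For the lower bound $\TD(\C,m) \geq k := \min\{\s,m\}$, I would use the star set directly. Take $\{x_1,\ldots,x_k\}$ to be a star set for $\C$ witnessed by $\{h_0,h_1,\ldots,h_k\}$, set the target to $h = h_0 \in \C$, and build $\U$ by listing $x_1,\ldots,x_k$ and padding with repetitions to length $m$. For any $S \in \U^{k-1}$, the distinct points in $S$ miss at least one star point $x_i$; then $h_i$ agrees with $h_0$ on every star point except $x_i$, hence on $S$, while $h_0$ and $h_i$ induce distinct equivalence classes on $\U$ (they disagree at $x_i \in \U$). Thus $|V_{S,h_0} \cap \C[\U]| \geq 2$ for every $|S| < k$, giving $\TD(h_0,\C[\U],\U) \geq k$.

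For the non-trivial upper bound $\XTD(\C,m) \leq \s$, fix $h$ and $\U$, let $t = \TD(h,\C[\U],\U) \geq 1$, and let $S^* = \{x_1,\ldots,x_t\} \subseteq \U$ be a minimum-size specifying set. By minimality, $|V_{S^* \setminus \{x_i\}, h} \cap \C[\U]| \geq 2$ for every $i$. In the principal case $|V_{S^*,h} \cap \C[\U]| = 1$, let $h^*$ be its unique element; for each $i$, any $g_i \in V_{S^* \setminus \{x_i\}, h} \cap \C[\U] \setminus \{h^*\}$ necessarily has $g_i(x_i) \neq h(x_i) = h^*(x_i)$ (otherwise $g_i$ would lie in $V_{S^*,h} \cap \C[\U]$), while agreeing with $h^*$ on $S^* \setminus \{x_i\}$. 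So $\DIS(\{h^*,g_i\}) \cap S^* = \{x_i\}$, and $S^*$ is a star set of size $t$ for $\C$ witnessed by $h^*,g_1,\ldots,g_t$, giving $\s \geq t$. In the residual case $|V_{S^*,h} \cap \C[\U]| = 0$, every element of $V_{S^* \setminus \{x_i\}, h} \cap \C[\U]$ must flip the $x_i$-bit of $h$, so the two elements guaranteed for $i=1$ produce distinct classifiers $g,g' \in \C$ that agree on all of $S^*$ but must differ at some $u \in \U \setminus S^*$; using $g$ as center and star set $\{x_2,\ldots,x_t,u\}$, the witness at $u$ is $g'$ and the witnesses at the $x_i$ are obtained from $V_{S^* \setminus \{x_i\}, h} \cap \C[\U]$, selected from the available classifiers so as to agree with $g$ at $u$.

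The main obstacle will be the second case above: verifying that the witness classifiers for $x_2,\ldots,x_t$ can be simultaneously chosen to coincide with the star center $g$ at the auxiliary coordinate $u$. I expect to handle this by exploiting the flexibility in having $\geq 2$ classifiers in each $V_{S^* \setminus \{x_i\}, h} \cap \C[\U]$ and, if necessary, iterating the selection (choosing $u$ last, among the several disagreement points available, to match the common value of already-selected witnesses) or invoking the minimality of $S^*$ to show that the pattern obstruction would produce a smaller specifying set, contradicting $t$ being the minimum.
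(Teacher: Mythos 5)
Your lower bound $\TD(\C,m) \geq \min\{\s,m\}$, the trivial bound $\XTD(\C,m) \leq m$, and the principal case $|V_{S^*,h} \cap \C[\U]| = 1$ are all correct; the latter is essentially the paper's Lemma~\ref{lem:spec-star-set} together with the observation that the unique survivor $h^* \in \C$ can replace $h$ as the star-set center. The residual case $V_{S^*,h} = \emptyset$ is a genuine gap, and your own framing of it as ``the main obstacle'' is apt. That case really does occur: there are configurations with $|\U|=5$, $\TD(h,\C[\U],\U)=3$, and a minimum specifying set $S^*=\{x_1,x_2,x_3\}$ with $V_{S^*,h}=\emptyset$, where moreover your construction with $i=1$ fails for \emph{every} choice of center $g \in V_{S^*\setminus\{x_1\},h}\cap\C[\U]$: the two elements of that set can disagree at a single point $u\in\U\setminus S^*$, with $V_{S^*\setminus\{x_2\},h}\cap\C[\U]$ consisting entirely of classifiers taking the $g'$-value at $u$ and $V_{S^*\setminus\{x_3\},h}\cap\C[\U]$ consisting entirely of classifiers taking the $g$-value at $u$, so whichever center you pick, one required witness does not exist. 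Varying the dropped index can rescue that particular example, but nothing in your proposal proves some index always works, and the two fallback heuristics (``choose $u$ last,'' ``invoke minimality'') are not developed into arguments. Note also that what you \emph{do} get unconditionally --- that $\{x_2,\ldots,x_t\}$ is a star set for $\C$ centered at $g$ --- only yields $\s \geq t-1$, one short of the target.

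The paper closes this gap by a different route. It states Lemma~\ref{lem:spec-star-set} in the uniform form ``every minimal specifying set for $h$ on $\U$ is a star set for $\C\cup\{h\}$ centered at $h$'' (covering both your cases at once, at the cost of allowing $h\notin\C$ as center), and then inducts on $m=|\U|$: a minimum specifying set $S$ for $\U_{m-1}$ either already specifies $\U_m$, or is extended by the single new point $x_m$, and the extension step is only invoked when $x_m\in\DIS(V_{S,h})$ --- which forces $V_{S\cup\{x_m\},h}\neq\emptyset$, so some $g_0\in\C$ agrees with $h$ on $S\cup\{x_m\}$ and can serve as the center. Effectively, the induction shows that one may always \emph{choose} a minimum specifying set lying in your principal case; your argument, starting from an arbitrary minimum specifying set $S^*$, cannot assume this and must either establish that such a choice exists (which is provable by the same induction on $|\U|$) or handle the residual case directly.
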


We note that the inequalities $\min\{\s,m\} \leq \TD(\C,m) \leq \XTD(\C,m) \leq m$ follow readily from previously-established facts about the teaching dimension.
For instance, \citet*{fan:12} notes that the teaching dimension of any class is at least the maximum degree of its one-inclusion graph;
applying this fact to $\C[\U]$ and maximizing over the choice of $\U \in \X^{m}$, this maximum degree becomes $\min\{\s,m\}$ (by definition of $\s$).
However, the inequality $\XTD(\C,m) \leq \s$ and the resulting fact that $\XTD(\C,m) = \TD(\C,m)$ are apparently new.

In fact, in the process of proving this theorem, we establish another remarkable fact:
that \emph{every} minimal specifying set is a star set.  This is stated formally
in the following lemma, the proof of which is also included in Appendix~\ref{app:xtd}.

\begin{lemma}
\label{lem:spec-star-set}
For any $h : \X \to \Y$, $m \in \nats$, and $\U \in \X^{m}$,
every minimal specifying set for $h$ on $\U$ with respect to $\C[\U]$
is a star set for $\C \cup \{h\}$ centered at $h$.
\end{lemma}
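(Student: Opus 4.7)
The plan is to directly construct the witnessing classifiers required by the star set condition, using a clean counting argument on the sets $V_{S,h} \cap \C[\U]$ and $V_{S_i,h} \cap \C[\U]$, where $S_i := S \setminus \{x_i\}$. First I would dispense with the trivial case: if the minimal specifying set is $S = \emptyset$, then by the paper's convention the empty set is already a star set for $\C \cup \{h\}$ (witnessed by $\{h\}$), so there is nothing to prove. For the remainder, assume $S = \{x_1,\ldots,x_t\}$ with $t \geq 1$.

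Next, for each $i \in \{1,\ldots,t\}$ I would extract three facts in sequence. Fact (a): since $S$ is a specifying set, $|V_{S,h} \cap \C[\U]| \leq 1$. Fact (b): since $S$ is \emph{minimal}, $S_i$ fails to be a specifying set, which forces $|V_{S_i,h} \cap \C[\U]| \geq 2$ (note that even $|\cdot| = 0$ would contradict minimality, so the inequality is strict). Fact (c): the obvious inclusion $V_{S,h} \subseteq V_{S_i,h}$ yields $V_{S,h} \cap \C[\U] \subseteq V_{S_i,h} \cap \C[\U]$. Combining (a), (b), and (c) furnishes some classifier $h_i \in (V_{S_i,h} \cap \C[\U]) \setminus V_{S,h}$.

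The crucial observation, which is the one insight the argument really needs, is the following: because $h_i \in V_{S_i,h}$, we have $h_i(x_j) = h(x_j)$ for every $j \neq i$; because $h_i \notin V_{S,h}$, there must be at least one point of $S$ where $h_i$ and $h$ disagree; the only remaining candidate is $x_i$, so $h_i(x_i) \neq h(x_i)$. Equivalently, $\DIS(\{h,h_i\}) \cap S = \{x_i\}$, which is exactly the defining property of a star set centered at $h$. Since $h_i \in \C[\U] \subseteq \C \subseteq \C \cup \{h\}$, the collection $\{h,h_1,\ldots,h_t\}$ serves as a valid witnessing family, completing the proof.

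I do not anticipate a serious obstacle here: the argument is essentially a pigeonhole observation. The only mildly subtle point is recognizing that one should compare the two sets $V_{S,h} \cap \C[\U]$ and $V_{S_i,h} \cap \C[\U]$ (rather than, say, trying to manipulate elements of $V_{S_i,h}$ directly), since it is precisely the mismatch between the upper bound $1$ in Fact (a) and the lower bound $2$ in Fact (b) that forces the existence of an $h_i$ with the correct disagreement pattern.
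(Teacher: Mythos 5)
Your proof is correct and takes essentially the same approach as the paper. The paper proves a slightly more general statement (Lemma~\ref{lem:local-min-spec-star-set}, for locally minimal specifying sets) via a contrapositive phrasing, whereas you phrase the extraction of each $h_i$ as a cardinality count on $V_{S_i,h}\cap\C[\U]$ versus $V_{S,h}\cap\C[\U]$, but the core argument --- that non-specifyingness of $S\setminus\{x_i\}$ forces a classifier agreeing with $h$ on $S\setminus\{x_i\}$ and disagreeing at $x_i$ --- is identical.
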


Using Theorem~\ref{thm:xtd}, we can now compare the results above to those in Section~\ref{sec:main}.
For simplicity, we will not discuss the differences in logarithmic factors here.
Specifically, Theorem~\ref{thm:realizable} refines the results here on $\LC_{\RE}(\eps,\conf)$,
replacing a factor of $\min\left\{ \XTD(\C,\lceil 1/\eps \rceil) \vc, \frac{\XTD(\C,\lceil \vc/\eps \rceil) \vc}{\log(\XTD(\C,\lceil \vc/\eps \rceil))} \right\} \approx \min\left\{ \s \vc, \frac{\vc}{\eps}, \frac{\s \vc}{\log(\s)}, \frac{\vc^{2}}{\eps \log(\vc/\eps)}  \right\}$ implied by the above results
with a factor of $\min\left\{ \s, \frac{\vc}{\eps}, \frac{\s \vc}{\log(\s)} \right\}$, thus reducing the first term in the ``$\min$'' by a factor of $\vc$
(though see below, as \citealp*{hanneke:14a}, have already shown this to be possible, directly in terms of $\XTD(\C,m)$).
Theorem~\ref{thm:xtd} further reveals that the above bound on $\LC_{\AG(\nu)}(\eps,\conf)$ is equivalent (up to logarithmic factors)
to that stated in Theorem~\ref{thm:agnostic}.  However, the bound on $\LC_{\BE(\nu)}(\eps,\conf)$ in Theorem~\ref{thm:benign} 
refines that implied above, replacing a factor $\left(\frac{\nu^{2}}{\eps^{2}}+1\right) \XTD\left(\C, \left\lceil \frac{1}{\nu+\eps} \right\rceil \right) \vc \approx \left(\frac{\nu^{2}}{\eps^{2}}+1\right) \min\left\{ \s \vc, \frac{\vc}{\nu+\eps}\right\}$
with a factor $\frac{\nu^{2}}{\eps^{2}}\vc + \min\left\{ \s, \frac{\vc}{\eps} \right\}$, which can be significantly smaller for $\eps \ll \nu \ll 1$ and large $\vc$.

\citet*{hanneke:06,hanneke:07a} also proves a \emph{lower bound} on the label complexity
of active learning in the realizable case, based on the following modification of 
the extended teaching dimension.
For any set $\H \subseteq \C$, classifier $h : \X \to \Y$, $m \in \nats$, $\U \in \X^{m}$, and $\conf \in [0,1]$,
define the \emph{partial teaching dimension} as
\begin{equation*}
\XPTD(h,\H[\U],\U,\conf) = \min\{ t \in \nats \cup \{0\} : \exists S \in \U^{t} \text{ s.t. } |V_{S,h} \cap \H[\U]| \leq \conf |\H[\U]|+1 \},
\end{equation*}
and let $\XPTD(\H,m,\conf) = \max\limits_{h : \X \to \Y} \max\limits_{\U \in \X^{m}} \XPTD(h,\H[\U],\U,\conf)$.
\citet*{hanneke:06,hanneke:07a} proves that
\begin{equation*}
\LC_{\RE}(\eps,\conf) \geq \max_{\H \subseteq \C} \XPTD\left(\H, \left\lceil \frac{1-\eps}{\eps} \right\rceil, \conf\right).
\end{equation*}
The following result relates this quantity to the star number.

\begin{theorem}
\label{thm:xptd}
$\forall m \in \nats$, $\forall \conf \in [0,1/2]$, 
\begin{equation*}
\left\lceil (1-2\conf) \min\{ \s, m \} \right\rceil
\leq \max_{\H \subseteq \C} \XPTD(\H,m,\conf) 
\leq \left\lceil \left(1-\frac{\conf}{1+\conf}\right) \min\{\s,m\} \right\rceil.
\end{equation*}
\end{theorem}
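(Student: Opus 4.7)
The lower bound proceeds by exhibiting an explicit witness built from a maximum star set. Set $s = \min\{\s, m\}$, take points $x_1, \ldots, x_s$ forming a star set for $\C$ witnessed by $h_0, h_1, \ldots, h_s \in \C$, and let $\H = \{h_0, \ldots, h_s\}$, $h = h_0$, and $\U \in \X^m$ be any sequence containing $x_1, \ldots, x_s$. The star condition makes the $h_i$ pairwise distinct on $\U$, so $\H[\U] = \H$ has cardinality $n = s+1$. For any $S \subseteq \U$, the star structure gives $|V_{S, h_0} \cap \H[\U]| = 1 + |\{i \in \{1,\ldots,s\} : x_i \notin S\}| \geq 1 + s - |S|$. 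Imposing $|V_{S, h_0} \cap \H[\U]| \leq \conf(s+1) + 1$ thus forces $|S| \geq (1-\conf)s - \conf$, which is at least $(1-2\conf)s$ whenever $s \geq 1$; taking the ceiling yields $\XPTD(h_0, \H[\U], \U, \conf) \geq \lceil (1-2\conf)s \rceil$, as required.

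For the upper bound, fix any $\H \subseteq \C$, classifier $h$, and $\U \in \X^m$, and write $k = \min\{\s,m\}$ and $n = |\H[\U]|$. Since $\H \subseteq \C$ has star number at most $\s$, applying Theorem~\ref{thm:xtd} to $\H$ supplies a specifying set $S^{\star}\subseteq \U$ for $h$ with respect to $\H[\U]$ of size $k' \leq k$; in particular $|V_{S^{\star},h} \cap \H[\U]| \leq 1$. Let $T = \lceil k/(1+\conf) \rceil$. If $k' \leq T$, take $S = S^{\star}$ and we are done. Otherwise, the plan is a probabilistic argument: choose $S$ uniformly at random among the $T$-element subsets of $S^{\star}$. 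Writing $D(g) = \{x : g(x) \neq h(x)\}$, we have $\Pr[D(g) \cap S = \emptyset] = \binom{k'-|D(g) \cap S^{\star}|}{T}/\binom{k'}{T}$, which is decreasing in $|D(g) \cap S^{\star}|$. The specifying-set property bounds the number of $g \in \H[\U]$ with $D(g) \cap S^{\star} = \emptyset$ by $1$, and every other $g$ contributes at most $(k'-T)/k'$, yielding $\E |V_{S,h} \cap \H[\U]| \leq 1 + (n-1)(k'-T)/k'$. Since $T \geq k/(1+\conf) \geq k'/(1+\conf)$, we have $(k'-T)/k' \leq \conf/(1+\conf)$, and the elementary inequality $(n-1)\conf/(1+\conf) \leq n\conf$ (equivalent to $-1 \leq n\conf$) then gives $\E |V_{S,h} \cap \H[\U]| \leq \conf n + 1$, so the probabilistic method produces an $S$ of size $T$ with $|V_{S,h} \cap \H[\U]| \leq \conf n + 1$.

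The main obstacle will be the careful tracking of ceilings to ensure $T \leq \lceil (1 - \conf/(1+\conf)) \min\{\s,m\} \rceil$ (using the identity $1 - \conf/(1+\conf) = 1/(1+\conf)$), together with verifying that the argument goes through in the mild edge case where no representative of $h$ sits in $\H[\U]$ (where the ``$+1$'' in the expectation bound becomes unnecessary and the argument only strengthens). A secondary technical point is establishing the monotonicity of the hypergeometric ratio used above, which is routine but must be checked to justify bounding $\Pr[D(g) \cap S = \emptyset]$ uniformly by $(k'-T)/k'$ whenever $|D(g) \cap S^{\star}| \geq 1$.
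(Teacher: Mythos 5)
Your lower bound is the same construction as the paper's: take the first $\min\{\s,m\}$ points of a maximal star set, let $\H$ be the witnessing classifiers, and observe that each unqueried $x_i$ leaves $h_i$ alive in the version space, forcing $|V_{S,h_0}\cap\H[\U]|\ge 1+s-|S|$. Both you and the paper then pass from $(1-\conf)s - \conf$ to $(1-2\conf)s$ using $s\ge 1$. No difference worth noting there.

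Your upper bound is correct but uses a genuinely different mechanism. The paper's argument is deterministic: starting from a minimal specifying set $S$, it chops $S$ into $k=\lfloor |S|/\lfloor\tfrac{\conf}{1+\conf}|S|\rfloor\rfloor$ disjoint blocks $R_1,\dots,R_k$ of equal size $\lfloor\tfrac{\conf}{1+\conf}|S|\rfloor$, observes that the sets $(V_{S\setminus R_i,h}\setminus V_{S,h})\cap\H[\U]$ are pairwise disjoint (because $R_j\subseteq S\setminus R_i$), and picks, by the pigeonhole principle, the block $R_{i^*}$ whose removal releases at most $|\H[\U]|/k\le\conf|\H[\U]|$ new classifiers. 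You instead sample a uniformly random $T$-element subset $S\subseteq S^\star$ with $T=\lceil k/(1+\conf)\rceil$, bound $\Pr[D(g)\cap S=\emptyset]\le (k'-T)/k'\le\conf/(1+\conf)$ for every $g$ disagreeing with $h$ on $S^\star$ via the monotone hypergeometric ratio, apply linearity of expectation together with the cap $|V_{S^\star,h}\cap\H[\U]|\le 1$, and invoke the probabilistic method. The arithmetic lands in the same place because the paper's removed block has size $\lfloor\tfrac{\conf}{1+\conf}|S|\rfloor$, leaving $\lceil\tfrac{1}{1+\conf}|S|\rceil$ points, which is exactly your $T$ after bounding $|S|\le\min\{\s,m\}$. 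The trade-off: the paper's argument is elementary and constructive (an explicit averaging argument with no appeal to probability or monotonicity of binomial ratios), while yours is shorter conceptually once the hypergeometric monotonicity is granted and is perhaps easier to generalize if one wanted to remove a different fraction or handle weighted versions. One small caveat in your write-up is that you should make explicit, as you do in passing, that an integer-valued random variable with expectation at most $\conf n+1$ attains a value at most $\conf n+1$ with positive probability; the step is routine but is what actually closes the probabilistic argument.
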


The proof is in Appendix~\ref{app:xtd}.
Note that, combined with the lower bound of \citet*{hanneke:06,hanneke:07a}, 
this immediately implies the part of the lower bound in Theorem~\ref{thm:realizable}
involving $\s$.  In Appendix~\ref{app:main-proofs}, we provide a direct proof for
this term in the lower bound, based on an argument similar to that of \citet*{hanneke:07a}.

\subsubsection{The Version Space Compression Set Size}
\label{sec:hatn}

More-recently, \citet*{el-yaniv:10,el-yaniv:12,hanneke:14a} have studied a quantity $\hat{n}_{h}(\U)$
(for a sequence $\U \in \bigcup_{m} \X^{m}$ and classifier $h$),
termed the minimal \emph{version space compression set size}, defined as the size of the 
smallest subsequence $S \subseteq \U$ for which $V_{S,h} = V_{\U,h}$.\footnote{The quantity 
studied there is defined slightly differently, but is easily seen to be equivalent to this definition.}

It is easy to see that, when $h \in \C$, the version space compression set size is equivalent to the empirical teaching dimension:
that is, $\forall h \in \C$, 
\begin{equation*}
\hat{n}_{h}(\U) = \TD(h,\C[\U],\U).
\end{equation*}
To see this, note that since $|V_{\U,h} \cap \C[\U]| = 1$, 
any $S \subseteq \U$ with $V_{S,h} = V_{\U,h}$ has $|V_{S,h} \cap \C[\U]| = 1$,
and hence is a specifying set for $h$ on $\U$ with respect to $\C[\U]$.
On the other hand, for any $S \subseteq \U$, we (always) have $V_{S,h} \supseteq V_{\U,h}$,
so that if $|V_{S,h} \cap \C[\U]| \leq 1$, then $V_{S,h} \cap \C[\U] \supseteq V_{\U,h} \cap \C[\U]$
and $|V_{S,h} \cap \C[\U]| \geq |V_{\U,h} \cap \C[\U]| = 1 \geq |V_{S,h} \cap \C[\U]|$, which together imply 
$V_{S,h} \cap \C[\U] = V_{\U,h} \cap \C[\U]$; thus, $V_{S,h} \subseteq \{ g \in \C : \forall x \in \U, g(x) = h(x) \} = V_{\U,h} \subseteq V_{S,h}$,
so that $V_{S,h} = V_{\U,h}$: that is, $S$ is a version space compression set.
Thus, in the case $h \in \C$, any version space compression set $S$ is a specifying set
for $h$ on $\U$ with respect to $\C[\U]$ and vice versa.
That $\hat{n}_{h}(\U) = \TD(h,\C[\U],\U)$ $\forall h \in \C$ follows immediately from this equivalence.

In particular, combined with Theorem~\ref{thm:xtd}, this implies that $\forall m \in \nats$,
\begin{equation}
\label{eqn:hatn-star}
\max_{\U \in \X^{m}} \max_{h \in \C} \hat{n}_{h}(\U) = \TD(\C,m) = \min\{\s,m\}.
\end{equation}

Letting $\hat{n}_{m} = \hat{n}_{\target_{\PXY}}(\{X_{1},\ldots,X_{m}\})$, 
\citet*{hanneke:14a} have shown that, in the realizable case, for the CAL active learning algorithm \citep*[proposed by][]{cohn:94} 
to achieve error rate at most $\eps$ with probability at least $1-\conf$, it suffices to use a budget $n$ of any size at least
\begin{equation*}
\max_{1 \leq m \leq M_{\eps,\conf}}
\hat{n}_{m} \cdot \polylog\left(\frac{1}{\eps\conf}\right),
\end{equation*}
where $M_{\eps,\conf} \lesssim \frac{1}{\eps}\left(\vc\Log\left(\frac{1}{\eps}\right)+\Log\left(\frac{1}{\conf}\right)\right)$ is a bound
on the sample complexity of passive learning by returning an arbitrary classifier in the version space
\citep*{vapnik:82,vapnik:98,blumer:89}.
They further provide a distribution-dependent bound (to remove the dependence on the data here)
based on confidence bounds on $\hat{n}_{m}$ (analogous to the aforementioned distribution-dependent
bounds on the empirical teaching dimension studied by \citealp*{hanneke:07a}).
For our purposes (distribution-free, data-independent bounds), we can simply take the maximum over possible data sets and possible $\target_{\PXY}$ functions,
so that the above bound becomes
\begin{multline*}
\max_{x_{1},x_{2},\ldots \in \X} \max_{h \in \C} \max_{1 \leq m \leq M_{\eps,\conf}} \hat{n}_{h}(\{x_{1},\ldots,x_{m}\}) \polylog\left(\frac{1}{\eps\conf}\right)
\\ = \TD\left(\C, M_{\eps,\conf}\right) \polylog\left(\frac{1}{\eps\conf}\right)
\lesssim \TD\left(\C, \left\lfloor \frac{\vc}{\eps} \right\rfloor \right) \polylog\left(\frac{1}{\eps\conf}\right).
\end{multline*}
Combining this with \eqref{eqn:hatn-star}, we find that the label complexity of CAL in the realizable case is at most
\begin{equation*}
\min\left\{\s, \frac{\vc}{\eps}\right\} \polylog\left(\frac{1}{\eps\conf}\right),
\end{equation*}
which matches the upper bound on the minimax label complexity from Theorem~\ref{thm:realizable} up to logarithmic factors.

\subsection{The Doubling Dimension}
\label{sec:doubling}

Another quantity of interest in the learning theory literature is the \emph{doubling dimension},
also known as the \emph{local metric entropy} \citep*{lecam:73,yang:99b,gupta:03,long:09}.
Specifically, for any set $\H$ of classifiers, a set of classifiers $\G$ is an $\eps$-cover of $\H$ 
(with respect to the $\Px(\DIS(\{\cdot,\cdot\}))$ pseudometric) if 
\begin{equation*}
\sup_{h \in \H} \inf_{g \in \G} \Px(x : g(x) \neq h(x)) \leq \eps.
\end{equation*}
Let $\covering(\eps,\H,\Px)$ denote the minimum cardinality $|\G|$ over all $\eps$-covers $\G$ of $\H$,
or else $\covering(\eps,\H,\Px) = \infty$ if no finite $\eps$-cover of $\H$ exists.
The doubling dimension (at $h$) is defined as follows.

\begin{definition}
\label{def:doubling}
For any $\eps \in (0,1]$, any probability measure $P$ over $\X$, and any classifier $h$, 
define 
\begin{equation*}
\dd_{h,P}(\eps) = \max_{r \geq \eps} \log_{2}\left( \covering\left(r/2, \Ball_{P}(h,r), P \right) \right).
\end{equation*}
\end{definition}

The quantity $\dd_{\eps} = \dd_{\target_{\PXY},\Px}(\eps)$ is known to be useful in bounding the sample complexity of passive learning.
Specifically, \citet*{long:07,long:09} have shown that there is a passive learning algorithm achieving
sample complexity $\lesssim \frac{\dd_{\eps/4}}{\eps} + \frac{1}{\eps} \log\left(\frac{1}{\conf}\right)$ 
for $\PXY \in \RE$.
Furthermore, though we do not go into the details here, by a combination of the ideas from \citet*{dasgupta:05}, \citet*{balcan:09}, and \citet*{hanneke:07b}, 
it is possible to show that a certain active learning algorithm achieves a label complexity $\lesssim 4^{\dd_{\eps}} \dd_{\eps} \cdot \polylog(\frac{1}{\eps\conf})$
for $\PXY \in \RE$, though this is typically a very loose upper bound.

To our knowledge, the question of the worst-case value of the doubling dimension for a given hypothesis class $\C$
has not previously been explored in the literature (though there is an obvious $O(\vc \log(1/\eps))$ upper bound).
Here we obtain upper and lower bounds on this worst-case value, expressed in terms of the star number.
While this relation generally has a wide range (roughly a factor of $\vc$), it does have the interesting implication 
that the doubling dimension is \emph{bounded} if and only if $\s < \infty$.  
Specifically, we have the following theorem, the proof of which is included in Appendix~\ref{app:doubling}.

\begin{theorem}
\label{thm:dd-star}
$\forall \eps \in (0,1/4]$,
$\max\left\{ \vc, \Log\left( \s \land \frac{1}{\eps} \right) \right\} \lesssim \sup\limits_{P} \sup\limits_{h \in \C} \dd_{h,P}(\eps) 
\lesssim \vc \Log\left( \s \land \frac{1}{\eps} \right)$.
\end{theorem}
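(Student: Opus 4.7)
\textbf{Proof plan for Theorem~\ref{thm:dd-star}.}
The plan has two halves. For the upper bound, I will reduce the covering of a ball to a covering problem on a small-mass region, using Theorem~\ref{thm:dc-star} and a VC-dimension--based covering bound. For the lower bound, I will exhibit a distribution $P$ and classifier $h$ separately witnessing each of the two terms in $\max\{\vc,\Log(\s\land 1/\eps)\}$.

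\emph{Upper bound.} Fix a probability measure $P$, classifier $h\in\C$, and $r\geq\eps$. By Theorem~\ref{thm:dc-star}, $\dc_{h,P}(r_0)\leq\supdc(r_0)\leq\s$ for every $r_0>0$; letting $r_0\downarrow 0$ gives $P(\DIS(\Ball_{P}(h,r)))\leq \s\cdot r$, and combining with the trivial bound $1$ yields $P(\DIS(\Ball_P(h,r)))\leq(\s\land 1/r)\,r$. Every $g\in\Ball_P(h,r)$ agrees with $h$ off the region $R:=\DIS(\Ball_P(h,r))$, so the $L_1(P)$-pseudometric on $\Ball_P(h,r)$ coincides with the $L_1(P\vert_R)$-pseudometric on the restrictions. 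The restricted class has VC dimension at most $\vc$ and the restricted measure has total mass $\mu\leq(\s\land 1/r)\,r$, so Haussler's covering bound gives $\covering(r/2,\Ball_P(h,r),P)\lesssim(\vc+1)(4e(\s\land 1/r))^{\vc}$. Taking logarithms and maximizing over $r\geq\eps$ (the worst case being $r=\eps$, since $r\mapsto\s\land 1/r$ is nonincreasing), we obtain $\dd_{h,P}(\eps)\lesssim\vc\Log(\s\land 1/\eps)$; this holds uniformly in $h$ and $P$.

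\emph{Lower bound, $\Log(\s\land 1/\eps)$ part.} Set $s=\min\{\s,\lfloor 1/\eps\rfloor\}$ and let $\{x_1,\ldots,x_s\}$ be a star set for $\C$ witnessed by $\{h_0,h_1,\ldots,h_s\}$ (Definition~\ref{def:star}). Take $P$ uniform on $\{x_1,\ldots,x_s\}$ and choose $r=1/s\geq\eps$. The star property forces $P(\DIS(\{h_0,h_i\}))=1/s=r$ for every $i\in\{1,\ldots,s\}$, so $h_0,h_1,\ldots,h_s\in\Ball_P(h_0,r)$; and for $i\neq j$ in $\{1,\ldots,s\}$ the same property forces $h_i(x_i)\neq h_j(x_i)$ and $h_i(x_j)\neq h_j(x_j)$, hence $P(\DIS(\{h_i,h_j\}))=2/s>r/2$, while also $P(\DIS(\{h_0,h_i\}))=r>r/2$. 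Thus the $s+1$ classifiers $h_0,\ldots,h_s$ are pairwise at $L_1(P)$-distance strictly greater than $r/2$, so any $r/2$-cover of $\Ball_P(h_0,r)$ has size at least $s+1$. This gives $\dd_{h_0,P}(\eps)\geq\log_2(s+1)\gtrsim\Log(\s\land 1/\eps)$.

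\emph{Lower bound, $\vc$ part.} Let $\{x_1,\ldots,x_\vc\}$ be shattered by $\C$, $P$ the uniform measure on this set, and $h\in\C$ the witness classifying all points $-1$. Shatterability gives $\Ball_P(h,1/2)$ at least $\sum_{k\leq\vc/2}\binom{\vc}{k}\geq 2^{\vc-1}$ classifiers, while any $L_1(P)$-ball of radius $1/4$ contains at most $\sum_{k\leq\vc/4}\binom{\vc}{k}\leq 2^{H(1/4)\vc}$ classifiers (by the standard entropy bound, with $H(1/4)<1$). Taking $r=1/2\geq\eps$ (allowed since $\eps\leq 1/4$) and dividing shows $\covering(1/4,\Ball_P(h,1/2),P)\geq 2^{(1-H(1/4))\vc-1}$, so $\dd_{h,P}(\eps)\gtrsim\vc$.

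The main obstacle is the upper bound: while the reduction to a small-mass region via the disagreement coefficient is clean once Theorem~\ref{thm:dc-star} is in hand, one must be careful to apply a covering-number bound that scales with the mass $\mu$ of the disagreement region (rather than the trivial Haussler bound, which would only yield $\vc\Log(1/\eps)$ and lose the $\s$ improvement); this reduction to a rescaled measure on $R$ is the technical step that must be done explicitly.
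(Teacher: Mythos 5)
Your upper bound takes a route genuinely different from the paper's, and it is correct: you restrict attention to $R := \DIS(\Ball_P(h,r))$, note that every classifier in $\Ball_P(h,r)$ agrees with $h$ off $R$, so the covering problem reduces to one over $R$ under the normalized measure $P(\cdot\,|\,R)$ at rescaled radius $r/(2\mu)$ where $\mu = P(R) \leq \left(\s \land \frac{1}{r}\right)r$ by Theorem~\ref{thm:dc-star}; Haussler's packing/covering bound then gives $\covering\!\left(r/2,\Ball_P(h,r),P\right) \lesssim \left(\s \land \frac{1}{r}\right)^{\vc}$. The paper instead draws a random sample, shows via Chernoff and VC--Sauer that a maximal $(r/2)$-packing $G_r$ has size controlled by the number of distinct behaviors realized on the sample points falling in $\DIS(\Ball_P(h,r))$, and solves the self-referential inequality $\ln|G_r| \lesssim \vc\ln\!\left(\dc_{h,P}(r)\ln|G_r|\right)$. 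Your argument avoids the recursion at the cost of importing Haussler's covering theorem where the paper gets by with VC--Sauer; the paper's route also yields the intermediate inequality $\dd_{h,\Px}(\eps) \leq 2\vc\log_2\!\left(22e^2\dc_{h,\Px}(\eps)\right)$ quoted in Section~\ref{sec:doubling}, which is of independent interest. Your $\vc$ lower bound is also a different and simpler argument: you compare volumes of Hamming balls on the shattered set via the binomial-entropy bound, whereas the paper explicitly constructs a large $\eps$-separated subset by a probabilistic (Chernoff) culling argument; both work, modulo the case of small $\vc$, handled in both by noting $\vc \lesssim \Log\!\left(\s\land\frac{1}{\eps}\right)$ there.

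There is a genuine gap in the $\Log\!\left(\s\land\frac{1}{\eps}\right)$ part of your lower bound. From ``$h_0,\ldots,h_s$ are pairwise at $L_1(P)$-distance strictly greater than $r/2$'' you conclude ``any $r/2$-cover of $\Ball_P(h_0,r)$ has size at least $s+1$.'' That inference does not hold: a $\gamma$-cover has size at least the cardinality of any set whose pairwise separations exceed $2\gamma$, so to lower-bound an $(r/2)$-cover you need $>r$ separation, not merely $>r/2$. In your construction the pairs $\{h_0,h_i\}$ sit at distance exactly $r=1/s$, so the required strict $>r$ separation fails. Two easy repairs exist: either discard $h_0$ and use only $\{h_1,\ldots,h_s\}$, which are pairwise at distance $2/s>r$ (giving an $(r/2)$-cover of size $\geq s$, still $\gtrsim\Log\!\left(\s\land\frac{1}{\eps}\right)$ after taking logs); or argue from atomicity as the paper does -- since each atom has mass $1/s>r/2$, any classifier within $L_1(P)$-distance $r/2$ of $h_i$ must agree with $h_i$ on every $x_j$, and the $s+1$ classifications of $\{x_1,\ldots,x_s\}$ are distinct, so the induced assignment of cover elements is injective, recovering the size-$(s+1)$ bound exactly.
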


One can show that the gap between the upper and lower bounds on $\sup_{P} \sup_{h \in \C} \dd_{h,P}(\eps)$ in this result 
cannot generally be improved by much without sacrificing generality or introducing additional quantities.
Specifically, for the class $\C$ discussed in Appendix~\ref{app:lb-tight}, we have
$\sup_{P} \sup_{h \in \C} \dd_{h,P}(\eps)$ $\leq \sup_{P} \log_{2}( \covering(\eps/2,\C,P) ) \lesssim \max\left\{ \vc, \Log\left(\s \land \frac{1}{\eps}\right) \right\}$,
so that the lower bound above is sometimes tight to within a universal constant factor.
For the class $\C$ discussed in Appendix~\ref{app:ub-tight}, based on a result of \citet*[][Lemma 4]{raginsky:11},
one can show $\sup_{P} \sup_{h \in \C} \dd_{h,P}(\eps) \gtrsim \vc \Log\left( \frac{\s}{\vc} \land \frac{1}{\eps} \right)$, 
so that the above upper bound is sometimes tight, aside from a small difference in the logarithmic factor (dividing $\s$ by $\vc$).

Interestingly, in the process of proving the upper bound in Theorem~\ref{thm:dd-star}, we also establish the following 
inequality relating the doubling dimension and the disagreement coefficient, holding for any classifier $h$, any probability measure $\Px$ over $\X$, and any $\eps \in (0,1]$.
\begin{equation*}
\dd_{h,\Px}(\eps) \leq 2 \vc \log_{2}\left( 22 e^{2} \dc_{h,\Px}(\eps) \right).
\end{equation*}
This inequality may be of independent interest, as it enables comparisons between
results in the literature expressed in terms of these quantities.  For instance, 
it implies that in the realizable case, the passive learning sample complexity 
bound of \citet*{long:09} is no larger than that of \citet*{gine:06} (aside 
from constant factors).

\section{Conclusions}
\label{sec:conclusions}

In this work, we derived upper and lower bounds on the minimax label complexity of active 
learning under several noise models.  In most cases, these new bounds offer refinements
over the best results in the prior literature.  Furthermore, in the case of Tsybakov
noise, we discovered the heretofore-unknown fact that the minimax label complexity of 
active learning with VC classes is \emph{always} smaller than that of passive learning.
We expressed each of these bounds in terms of a simple combinatorial complexity measure,
termed the \emph{star number}.  We further found that almost all of the distribution-dependent
and sample-dependent complexity measures in the prior active learning literature are exactly
equal to the star number when maximized over the choice of distribution or data set.

The bounds derived here are all distribution-free, in the sense that they are expressed without
dependence or restrictions on the marginal distribution $\Px$ over $\X$.  They are also worst-case 
bounds, in the sense that they express the maximum of the label complexity over the distributions
in the noise model $\Dset$, rather than expressing a bound on the label complexity achieved by
a given algorithm as a function of $\PXY$.  As observed by \citet*{dasgupta:05}, there are some 
cases in which smaller label complexities can be achieved under restrictions on the marginal 
distribution $\Px$, and some cases in which there are achievable label complexities which 
exhibit a range of values depending on $\PXY$ \citep*[see also][for further exploration of this]{hanneke:10a,hanneke:12a}.
Our results reveal that in some cases, such as Tsybakov noise with $\tsyba \leq 1/2$,
these issues might typically not be of much significance (aside from logarithmic factors).
However, in other cases, particularly when $\s = \infty$, the issue of expressing 
distribution-dependent bounds on the label complexity is clearly an important one.  
In particular, the question of the minimax label complexity of active learning under the
restrictions of the above noise models that explicitly fix the marginal distribution 
$\Px$ remains an important and challenging open problem.  In deriving such bounds, the 
present work should be considered a kind of guide, in that we should restrict our focus to 
deriving distribution-dependent label complexity bounds with worst-case values that are 
never worse than the distribution-free bounds proven here.

\appendix

\section{Preliminary Lemmas}
\label{app:proofs}

Before presenting the proofs of the main results above, 
we begin by introducing some basic lemmas, which will
be useful in the main proofs below.

\subsection{$\boldsymbol{\eps}$-nets and $\boldsymbol{\eps}$-covers}
For a collection $\T$ of measurable subsets of $\X$, a value $\eps \geq 0$, and a probability measure $\Px$ on $\X$,
we say a set $N \subseteq \X$ is an $\eps$-net of $\Px$ for $\T$ if
$N \cap A \neq \emptyset$ for every $A \in \T$ with $\Px(A) > \eps$ \citep*{haussler:87}.
Also, a finite set $\H$ of classifiers is called an $\eps$-cover of $\C$ (under the $\Px(\DIS(\{\cdot,\cdot\}))$ pseudometric)
if $\sup_{g \in \C} \min_{h \in \H} \Px( x : h(x) \neq g(x) ) \leq \eps$.

The following lemma
bounds the probabilities and empirical probabilities of sets in a collection 
in terms of each other.
This result is based on the work of \citet*{vapnik:74}
(see also \citealp*[][Theorem A.3]{vapnik:82});
this version is taken from \citet*[][Theorem 7]{bousquet:04}, in combination with the VC-Sauer Lemma \citep*{vapnik:71,sauer:72} and a union bound.

\begin{lemma}
\label{lem:vc-relative}
For any collection $\T$ of measurable subsets of $\X$, letting $k$ denote the VC dimension of $\T$,
for any $\conf \in (0,1)$,
for any integer $m > k$,
for any probability measure $\Px$ over $\X$, if $X_{1}^{\prime},\ldots,X_{m}^{\prime}$ are independent $\Px$-distributed random variables,
then with probability at least $1-\conf$, it holds that $\forall A \in \T$, 
letting $\hat{\Px}(A) = \frac{1}{m} \sum_{i=1}^{m} \ind_{A}(X_{i}^{\prime})$,
\begin{align*}
& \Px(A) \leq \hat{\Px}(A) + 2 \sqrt{ \Px(A) \frac{k \Log\left(\frac{2em}{k}\right) + \Log\left(\frac{8}{\conf}\right)}{m} }
\\ \text{and } & \hat{\Px}(A) \leq \Px(A) + 2 \sqrt{ \hat{\Px}(A) \frac{k \Log\left(\frac{2em}{k}\right) + \Log\left(\frac{8}{\conf}\right)}{m} }.
\end{align*}
\end{lemma}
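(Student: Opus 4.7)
The plan is to prove this by the classical three-step recipe for VC relative deviation inequalities: ghost-sample symmetrization, projection via the VC--Sauer lemma, and a conditional Bernstein-type estimate. First I would introduce an independent ghost sample $X_1'',\ldots,X_m''$ with the same law as $X_1',\ldots,X_m'$, writing $\hat\Px''(A) = \tfrac{1}{m}\sum_i \ind_A(X_i'')$. The key symmetrization claim is that, writing $u = k\Log(2em/k) + \Log(8/\conf)$, the event
\[
\exists A \in \T : \Px(A) - \hat\Px(A) > 2\sqrt{\Px(A)\,u/m}
\]
is dominated (up to a factor of $2$ in probability) by the purely empirical event
\[
\exists A \in \T : \hat\Px''(A) - \hat\Px(A) > \sqrt{(\hat\Px(A) + \hat\Px''(A))\,u/(2m)}.
\]
The one-sided Chebyshev argument for this reduction exploits $\Var(\hat\Px''(A)) \leq \Px(A)/m$, which is exactly what preserves the $\sqrt{\Px(A)}$ factor; concretely, on the ``bad'' event and when $\Px(A) \geq u/m$, the ghost sample satisfies $\hat\Px''(A) > \Px(A) - \sqrt{\Px(A)u/m}$ with probability at least $1/2$.

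Next I would condition on the multiset $\DataX = \{X_1',\ldots,X_m',X_1'',\ldots,X_m''\}$ of combined size $2m$ and invoke VC--Sauer: the trace $\{A \cap \DataX : A \in \T\}$ has cardinality at most $(2em/k)^k$. So after conditioning it suffices to union-bound over at most $(2em/k)^k$ distinct sets. Moreover, conditional on $\DataX$ as a multiset, the assignment of each element to the ``real'' half versus the ``ghost'' half is a uniformly random balanced partition, so for each fixed $A$ the quantity $m\bigl(\hat\Px''(A) - \hat\Px(A)\bigr)$ equals a sum of exchangeable $\pm 1$-valued increments whose conditional variance is proportional to $m(\hat\Px(A) + \hat\Px''(A))/2$ and whose individual magnitudes are at most $1$. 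A conditional Bernstein inequality then gives, for each fixed $A$, a failure probability at most $\exp(-c u)$, and a union bound over the $(2em/k)^k$ patterns together with the choice of $u$ collapses this into the target $\conf/8$, which absorbs the symmetrization factor of $2$ and the two-sidedness.

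Finally, to recover both stated inequalities, I would invert algebraically: from $\Px(A) - \hat\Px(A) \leq 2\sqrt{\Px(A) u/m}$, solving the quadratic in $\sqrt{\Px(A)}$ and using $\sqrt{a+b} \leq \sqrt{a}+\sqrt{b}$ yields the second inequality with $\hat\Px(A)$ on the right-hand side (and vice versa, by running the ghost-sample argument in the opposite direction). The main obstacle I anticipate is bookkeeping the constants in the symmetrization step: to land on exactly the factor $2$ in front of the square root, rather than a loose universal constant, one has to verify the Chebyshev step carefully, including the side-condition $\Px(A) \geq u/m$ and its trivial verification (the bound holds vacuously on the complement because the right-hand side then exceeds $1$). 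The rest of the argument --- VC--Sauer projection, conditional Bernstein, union bound --- is largely routine given the symmetrization.
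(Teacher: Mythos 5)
The paper does not actually prove this lemma: it is obtained entirely by citation, combining Bousquet et al. (2004, Theorem 7) --- which is Vapnik's relative--deviation inequality for both tail directions --- with the VC--Sauer lemma and a union bound. Your proposal instead re-derives the cited theorem from scratch via ghost-sample symmetrization, VC--Sauer projection, and a conditional concentration bound on a random balanced partition of the combined sample. That is exactly the standard proof of the theorem the paper invokes, so the overall route is sound. But your sketch has one genuine gap and one misstatement that you should repair before it can be turned into a complete argument.

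The gap is in the Chebyshev step of the symmetrization. You invoke only the one-sided bound $\P\left(\hat{\Px}''(A) \leq \Px(A) - \sqrt{\Px(A)u/m}\right) \leq 1/2$, and then assert that on the bad event $\Px(A) - \hat{\Px}(A) > 2\sqrt{\Px(A)u/m}$ the empirical ghost event $\hat{\Px}''(A) - \hat{\Px}(A) > \sqrt{(\hat{\Px}(A)+\hat{\Px}''(A))u/(2m)}$ follows. That implication requires $\hat{\Px}(A) + \hat{\Px}''(A) \leq 2\Px(A)$, which in turn needs an \emph{upper} bound on $\hat{\Px}''(A)$; the one-sided Chebyshev cannot supply it. You need the two-sided Chebyshev bound $\P\left(|\hat{\Px}''(A) - \Px(A)| > \sqrt{\Px(A)u/m}\right) \leq 1/u \leq 1/2$ (still valid for $u \geq 2$, which holds here since $u \geq 1 + \ln 8$), giving $\hat{\Px}''(A) \leq \Px(A) + \sqrt{\Px(A)u/m}$ on the same event, whence $\hat{\Px}(A) + \hat{\Px}''(A) < 2\Px(A)$. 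Without this the symmetrization does not close. The misstatement is in your final paragraph: the second inequality of the lemma is not an algebraic consequence of the first. Solving $\Px(A) \leq \hat{\Px}(A) + 2\sqrt{\Px(A)u/m}$ as a quadratic in $\sqrt{\Px(A)}$ gives a bound on $\Px(A)$ of the form $\hat{\Px}(A) + 2\sqrt{\hat{\Px}(A)u/m} + 4u/m$ --- the wrong quantity on the left and an extra additive $O(u/m)$ term --- not $\hat{\Px}(A) \leq \Px(A) + 2\sqrt{\hat{\Px}(A)u/m}$. The latter is a genuinely separate relative-deviation inequality, with empirical normalization $\sqrt{\hat{\Px}(A)}$ rather than $\sqrt{\Px(A)}$, and as your parenthetical correctly hints it must be proven by a second symmetrization argument with the roles of the two halves and the direction of the tail swapped; the ``solving the quadratic'' framing should be dropped.
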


In particular, with a bit of algebra, this implies the following corollary.

\begin{corollary}
\label{cor:vc-relative}
There exists a finite universal constant $c_{0} \geq 1$ such that,
for any collection $\T$ of measurable subsets of $\X$, letting $k$ denote the VC dimension of $\T$,
for any $\eps,\conf \in (0,1)$,
for any integer $m \geq \frac{c_{0}}{\eps} \left( k \Log\left(\frac{1}{\eps}\right) + \Log\left(\frac{1}{\conf}\right) \right)$,
for any probability measure $\Px$ over $\X$, if $X_{1}^{\prime},\ldots,X_{m}^{\prime}$ are independent $\Px$-distributed random variables,
then with probability at least $1-\conf$, it holds that $\forall A \in \T$, 
letting $\hat{\Px}(A) = \frac{1}{m} \sum_{i=1}^{m} \ind_{A}(X_{i}^{\prime})$,
\begin{itemize}
\item[$\bullet$] $\hat{\Px}(A) \leq \frac{3}{4} \eps \implies \Px(A) < \eps$.
\item[$\bullet$] $\Px(A) \leq \frac{1}{2} \eps \implies \hat{\Px}(A) < \frac{3}{4} \eps$.
\end{itemize}
\end{corollary}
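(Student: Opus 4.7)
The corollary is a clean quantitative repackaging of Lemma~\ref{lem:vc-relative}, so the plan is to apply that lemma and convert its two-sided square-root inequalities into the threshold form stated. Let
\[
\alpha_{m} \;=\; \frac{k\,\Log(2em/k) + \Log(8/\conf)}{m},
\]
and condition on the event of probability at least $1-\conf$ on which both conclusions of Lemma~\ref{lem:vc-relative} hold simultaneously for every $A\in\T$.

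For the first bullet, I would argue by contrapositive: fix $A\in\T$ with $\Px(A)\geq \eps$ and show $\hat{\Px}(A) > (3/4)\eps$. From Lemma~\ref{lem:vc-relative}, $\Px(A) - \hat{\Px}(A) \leq 2\sqrt{\Px(A)\,\alpha_{m}}$, so $\hat{\Px}(A) \geq \Px(A) - 2\sqrt{\Px(A)\,\alpha_{m}}$. Since $\Px(A)\geq \eps$, this will exceed $(3/4)\eps$ provided $2\sqrt{\Px(A)\,\alpha_{m}} \leq \Px(A) - (3/4)\eps$, which after squaring reduces to showing $\alpha_{m} \leq c_{1}\eps$ for an absolute constant $c_{1}$ (one can verify $c_{1} = 1/64$ works by splitting into the cases $\Px(A) \geq 2\eps$ and $\eps \leq \Px(A) < 2\eps$ and using the elementary inequality $u - 2\sqrt{u\beta} \geq u/2$ when $\beta \leq u/16$). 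For the second bullet I would do the symmetric argument using the other half of Lemma~\ref{lem:vc-relative}: assuming $\hat{\Px}(A)\geq (3/4)\eps$, obtain $(3/4)\eps \leq \Px(A) + 2\sqrt{\hat{\Px}(A)\,\alpha_{m}} \leq \eps/2 + 2\sqrt{\hat{\Px}(A)\,\alpha_{m}}$, then solve the resulting quadratic in $\sqrt{\hat{\Px}(A)}$ to get $\hat{\Px}(A) \leq c_{2}\alpha_{m}$, contradicting $\hat{\Px}(A) \geq (3/4)\eps$ once $\alpha_{m} \leq c_{3}\eps$.

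So both bullets reduce to a single requirement: choose $c_{0}$ large enough that $m \geq \frac{c_{0}}{\eps}\bigl(k\Log(1/\eps) + \Log(1/\conf)\bigr)$ forces $\alpha_{m} \leq c_{4}\eps$ for a suitable absolute constant $c_{4}$. This is a standard bootstrapping step: writing $m = (c_{0}/\eps)\bigl(k\Log(1/\eps)+\Log(1/\conf)\bigr)$, we need $k\Log(2em/k) + \Log(8/\conf) \leq c_{4}\,c_{0}\bigl(k\Log(1/\eps) + \Log(1/\conf)\bigr)$. The $\Log(8/\conf)$ piece is absorbed trivially. For the $k\Log(2em/k)$ piece, I would substitute the definition of $m$, observe that $\Log(2em/k) \lesssim \Log(c_{0}/\eps) + \Log\Log(1/\eps) + \Log(\Log(1/\conf)/k)$, and use $\Log\Log(x) \leq \Log(x)$ to bound everything by a constant multiple of $\Log(1/\eps) + \Log(1/\conf)/k$. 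Choosing $c_{0}$ larger than the resulting absolute constant gives the bound.

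The only mildly delicate step is this last log-manipulation, since one must be careful that the implicit $\Log(m)$ inside $\alpha_{m}$ does not blow up the required $c_{0}$; this is the usual self-bounding/bootstrapping issue in VC-type relative-deviation arguments, and is handled by the monotonicity of $x\mapsto \Log(x)/x$ for $x$ past a small absolute threshold. Everything else is elementary algebra once the event from Lemma~\ref{lem:vc-relative} has been fixed.
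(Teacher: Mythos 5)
Your overall plan matches the paper's proof: apply Lemma~\ref{lem:vc-relative}, show that the relative-deviation term $\alpha_{m}$ (the paper's $\Epsilon(m)/4$) is at most a small constant multiple of $\eps$ via the same bootstrapping on the logarithms, and then do elementary algebra on the two square-root inequalities; the paper does this by explicitly solving the two quadratics (its~\eqref{eqn:cor-vc-relative-1}--\eqref{eqn:cor-vc-relative-2}) rather than arguing by contrapositive, but that is only a presentational difference.

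One slip worth fixing in your sketch of the second bullet: from $\hat{\Px}(A) \leq \Px(A) + 2\sqrt{\hat{\Px}(A)\,\alpha_{m}}$ with $\Px(A)\leq\eps/2$, solving the quadratic in $\sqrt{\hat{\Px}(A)}$ does \emph{not} give $\hat{\Px}(A)\leq c_{2}\alpha_{m}$ --- the leading term of the bound is $\Px(A)\leq\eps/2$, not $\alpha_{m}$. What you actually get is $\sqrt{\hat{\Px}(A)}\leq\sqrt{\alpha_{m}}+\sqrt{\alpha_{m}+\eps/2}$, hence $\hat{\Px}(A)\leq\eps/2+O\!\left(\sqrt{\eps\,\alpha_{m}}+\alpha_{m}\right)$, and this is $<\tfrac{3}{4}\eps$ once $\alpha_{m}\lesssim\eps$ with an appropriate absolute constant (the paper uses $\Epsilon(m)\leq\eps/64$ and obtains the explicit bound $\left(\tfrac{1}{2}+\tfrac{1}{128}+\tfrac{1}{8}\right)\eps$). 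Also, as you note yourself, a little care is needed in the first bullet: with $\alpha_{m}=\eps/64$ exactly and $\Px(A)=\eps$ your contrapositive inequality is tight rather than strict, so to get the strict conclusion $\Px(A)<\eps$ you either want a slightly smaller constant or the direct quadratic-solution route as in the paper, which leaves comfortable slack. Neither of these affects the correctness of your strategy.
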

\begin{proof}
Let $\Epsilon(m) = 4\frac{k \Log\left(\frac{2em}{k}\right) + \Log\left(\frac{8}{\conf}\right)}{m}$,
and note that for $m \geq \frac{c_{0}}{\eps} \left( k \Log\left(\frac{1}{\eps}\right)+\Log\left(\frac{1}{\conf}\right)\right)$,
\begin{equation}
\label{eqn:cor-vc-relative-Epsilon-bound}
\Epsilon(m) 
\leq \frac{4 \eps}{c_{0}} \frac{k \Log\left(\frac{2e c_{0}}{k \eps} \left( k \Log\left( \frac{1}{\eps}\right) + \Log\left(\frac{1}{\conf}\right)\right)\right) + \Log\left(\frac{8}{\conf}\right)}{k \Log\left(\frac{1}{\eps}\right) + \Log\left(\frac{1}{\conf}\right)}
\end{equation}
If $k \Log\left(\frac{1}{\eps}\right) \geq \Log\left(\frac{1}{\conf}\right)$, then 
\begin{align*}
& k \Log\left( \frac{2ec_{0}}{k\eps} \left( k \Log\left(\frac{1}{\eps}\right) + \Log\left(\frac{1}{\conf}\right)\right)\right) + \Log\left(\frac{8}{\conf}\right)
\\ & \leq k \Log\left( \frac{4ec_{0}}{\eps} \Log\left(\frac{1}{\eps}\right)\right) + \Log\left(\frac{8}{\conf}\right)
\leq k \Log\left( \frac{4ec_{0}}{\eps^{2}}\right) + \Log\left(\frac{8}{\conf}\right)
\\ & \leq 2 k \Log\!\left(\frac{1}{\eps}\right) + k \Log\!\left( 4 e c_{0} \right) + \Log(8) + \Log\!\left(\frac{1}{\conf}\right)
\leq \Log\!\left( 32 e^{3} c_{0} \right) \left( k \Log\!\left(\frac{1}{\eps}\right) + \Log\!\left(\frac{1}{\conf}\right) \right)\!.
\end{align*}
Otherwise, if $k \Log\left( \frac{1}{\eps} \right) < \Log\left( \frac{1}{\conf} \right)$, then 
\begin{align*}
& k \Log\left( \frac{2ec_{0}}{k\eps} \left( k \Log\left(\frac{1}{\eps}\right) + \Log\left(\frac{1}{\conf}\right)\right)\right) + \Log\left(\frac{8}{\conf}\right)
\\ & \leq k \Log\left( \frac{4ec_{0}}{k\eps} \Log\left(\frac{1}{\conf}\right) \right) + \Log\left(\frac{8}{\conf}\right)
\leq k \Log\left( \frac{4ec_{0}}{\eps} \right) + k \Log\left(\frac{1}{k}\Log\left(\frac{1}{\conf}\right) \right) + \Log\left(\frac{8}{\conf}\right),
\end{align*}
and since $x \mapsto x \Log\left( \frac{1}{x}  \Log\left( \frac{1}{\conf} \right) \right)$ is nondecreasing for $x > 0$,
and $k \leq k \Log\left(\frac{1}{\eps}\right) \leq \Log\left(\frac{1}{\conf}\right)$, the above is at most
\begin{align*}
& k \Log\left( \frac{4ec_{0}}{\eps} \right) + \Log\left(\frac{1}{\conf}\right) + \Log\left(\frac{8}{\conf}\right)
\\ & \leq k \Log\!\left( \frac{1}{\eps} \right) + k\Log(4ec_{0}) + \Log(8) + 2\Log\!\left(\frac{1}{\conf}\right)
\leq \Log\!\left(32e^{2}c_{0}\right) \!\left( k \Log\!\left( \frac{1}{\eps} \right) + \Log\!\left(\frac{1}{\conf}\right) \right)\!.
\end{align*}
In either case, we have that the right hand side of \eqref{eqn:cor-vc-relative-Epsilon-bound} is at most
$\frac{4 \eps}{c_{0}} \Log\left(32 e^{3} c_{0} \right)$.
In particular, taking $c_{0} = 2^{14}$ suffices to make $\frac{4}{c_{0}} \Log\left(32 e^{3} c_{0} \right) \leq \frac{1}{64}$,
so that \eqref{eqn:cor-vc-relative-Epsilon-bound} implies
$\Epsilon(m) \leq \frac{\eps}{64}$.

Lemma~\ref{lem:vc-relative} implies that with probability at least $1-\conf$, every $A \in \T$ has
\begin{equation*}
\Px(A) \leq \hat{\Px}(A) + \sqrt{ \Px(A) \Epsilon(m) }
\end{equation*}
and 
\begin{equation*}
\hat{\Px}(A) \leq \Px(A) + \sqrt{ \hat{\Px}(A) \Epsilon(m) }.
\end{equation*}
Solving these quadratic expressions in $\sqrt{\Px(A)}$ and $\sqrt{\hat{\Px}(A)}$, respectively, we have
\begin{equation}
\label{eqn:cor-vc-relative-1}
\Px(A) \leq \hat{\Px}(A) + \frac{1}{2} \Epsilon(m) + \frac{1}{2} \sqrt{ \Epsilon(m)^{2} + 4 \Epsilon(m) \hat{\Px}(A) }
\end{equation}
and 
\begin{equation}
\label{eqn:cor-vc-relative-2}
\hat{\Px}(A) \leq \Px(A) + \frac{1}{2} \Epsilon(m) + \frac{1}{2} \sqrt{ \Epsilon(m)^{2} + 4 \Epsilon(m) \Px(A) }.
\end{equation}
Therefore, if $\hat{\Px}(A) \leq \frac{3}{4} \eps$, then \eqref{eqn:cor-vc-relative-1} implies
\begin{align*}
\Px(A) 
& \leq \frac{3}{4} \eps + \frac{1}{2} \Epsilon(m) + \frac{1}{2} \sqrt{ \Epsilon(m)^{2} + 3 \Epsilon(m) \eps }
\\ & \leq \left(\frac{3}{4} + \frac{1}{128} + \frac{1}{2} \sqrt{ \frac{1}{64^{2}} + \frac{3}{64}} \right) \eps
< \left(\frac{3}{4} + \frac{1}{128} + \frac{1}{8}\right) \eps
< \eps,
\end{align*}
and likewise, if $\Px(A) \leq \frac{1}{2} \eps$, then \eqref{eqn:cor-vc-relative-2} implies
\begin{align*}
\hat{\Px}(A) 
& \leq \frac{1}{2} \eps + \frac{1}{2} \Epsilon(m) + \frac{1}{2} \sqrt{ \Epsilon(m)^{2} + 2 \Epsilon(m) \eps }
\\ & \leq \left( \frac{1}{2} + \frac{1}{128} + \frac{1}{2} \sqrt{ \frac{1}{64^{2}} + \frac{1}{32} } \right) \eps
< \left( \frac{1}{2} + \frac{1}{128} + \frac{1}{8} \right) \eps
< \frac{3}{4} \eps.
\end{align*}
\end{proof}

We will be interested in applying these results to the collection of sets 
$\{\DIS(\{h,g\}) : h,g \in \C\}$.
For this, the following lemma of \citet*[][Theorem 4.5]{vidyasagar:03} will be useful.

\begin{lemma}
\label{lem:vc-of-pairs}
The VC dimension of the collection $\{\DIS(\{h,g\}) : h,g \in \C\}$ is at most $10\vc$.
\end{lemma}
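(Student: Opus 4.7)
The plan is to reduce the statement to a direct counting argument via Sauer's lemma. The key observation is that $\DIS(\{h,g\}) = A_{h} \triangle A_{g}$, where $A_{h} = \{x \in \X : h(x) = +1\}$ and $\triangle$ denotes symmetric difference, so the collection in question is precisely the family of pairwise symmetric differences of sets in $\mathcal{A} = \{A_{h} : h \in \C\}$. By the definition of the VC dimension of $\C$, the collection $\mathcal{A}$ has VC dimension exactly $\vc$.

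Now suppose $\{x_{1},\ldots,x_{n}\} \subseteq \X$ is shattered by $\{A_{h} \triangle A_{g} : h,g \in \C\}$; I may assume $n \geq \vc$, since otherwise $n \leq \vc \leq 10 \vc$ is trivial. Every one of the $2^{n}$ subsets of $\{x_{1},\ldots,x_{n}\}$ must then be expressible as $(A_{h} \cap \{x_{1},\ldots,x_{n}\}) \triangle (A_{g} \cap \{x_{1},\ldots,x_{n}\})$ for some $h,g \in \C$. By Sauer's lemma, the number of distinct restrictions $A_{h} \cap \{x_{1},\ldots,x_{n}\}$ is at most $(en/\vc)^{\vc}$, hence the number of distinct symmetric differences arising from pairs of such restrictions is at most $(en/\vc)^{2\vc}$. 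Comparing with $2^{n}$ yields the necessary condition $n \leq 2\vc \log_{2}(en/\vc)$.

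It then remains to verify that this inequality forces $n \leq 10\vc$. Setting $x = n/\vc$, the condition becomes $x \leq 2\log_{2}(ex)$. The function $\phi(x) = x - 2\log_{2}(ex)$ has derivative $1 - 2/(x \ln 2)$, which is positive for $x \geq 10$, and $\phi(10) = 10 - 2\log_{2}(10e) > 0$ since $2\log_{2}(10e) \approx 9.53 < 10$. Hence $\phi(x) > 0$ for all $x \geq 10$, so the inequality above fails whenever $n/\vc \geq 10$. This forces $n < 10\vc$, completing the argument.

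The only subtle point, such as it is, lies in pinning down the constant. The squaring step in the counting argument costs a factor of two in the Sauer exponent, so one is left with a transcendental inequality of the form $x \leq 2\log_{2}(ex)$, which only just barely permits a linear bound with constant $10$; a cruder analysis would yield only $n \lesssim \vc \log \vc$ or an unspecified $O(\vc)$. It is the fortunate numerical fact that $2\log_{2}(10e) < 10$ that allows the clean stated bound.
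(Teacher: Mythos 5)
Your proof is correct, and it's worth noting that the paper does not actually prove this lemma — it simply cites it as Theorem 4.5 of Vidyasagar's textbook. You have supplied a self-contained argument where the paper gives only a pointer, using the standard device for bounding the VC dimension of a derived class: identify $\DIS(\{h,g\})$ with the symmetric difference $A_h \triangle A_g$, observe that restrictions to a candidate shattered set of size $n$ are determined by pairs of restrictions of the base class, apply Sauer's lemma to bound the latter by $(en/\vc)^{2\vc}$, and compare with $2^n$. Your numerical check that $\phi(x) = x - 2\log_2(ex)$ is positive and increasing at $x = 10$ is accurate ($\phi(10) \approx 0.47$, $\phi'(x) = 1 - 2/(x\ln 2) > 0$ once $x > 2/\ln 2 \approx 2.89$), so $n < 10\vc$ follows. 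One minor stylistic point: the Sauer bound $\sum_{i \le \vc}\binom{n}{i} \le (en/\vc)^{\vc}$ needs $n \ge \vc \ge 1$, which you handle correctly by treating $n < \vc$ as trivial and noting that the paper's standing assumption $|\C| \ge 3$ gives $\vc \ge 1$. This is almost certainly the same counting argument underlying Vidyasagar's cited theorem, so there is no methodological divergence to report — just a difference between citing the bound and proving it.
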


Together, these results imply the following lemma (see also \citealp*{vapnik:74,vapnik:82,blumer:89,haussler:87}).

\begin{lemma}
\label{lem:vc-cover}
There exists a finite universal constant $c \geq 1$ such that,
for any $\eps,\conf \in (0,1)$,
for any integer $m \geq \frac{c}{\eps} \left( \vc \Log\left(\frac{1}{\eps}\right) + \Log\left(\frac{1}{\conf}\right) \right)$,
for any probability measure $\Px$ over $\X$, if $X_{1}^{\prime},\ldots,X_{m}^{\prime}$ are independent $\Px$-distributed random variables,
then with probability at least $1-\conf$, 
it holds that $\forall h,g \in \C$, 
if $(g(X_{1}^{\prime}),\ldots,g(X_{m}^{\prime})) = (h(X_{1}^{\prime}),\ldots,h(X_{m}^{\prime}))$,
then $\Px(x : g(x) \neq h(x)) \leq \eps$.
\\In particular, this implies that with probability at least $1-\conf$,
letting $\C[(X_{1}^{\prime},\ldots,X_{m}^{\prime})]$ be as in Section~\ref{sec:xtd}, 
$\C[(X_{1}^{\prime},\ldots,X_{m}^{\prime})]$ is an $\eps$-cover of $\C$ (under the $\Px(\DIS(\{\cdot,\cdot\}))$ pseudometric),
and $\{X_{1}^{\prime},\ldots,X_{m}^{\prime}\}$ is an $\eps$-net of $\Px$ for $\{\DIS(\{h,g\}) : h,g \in \C\}$.
\end{lemma}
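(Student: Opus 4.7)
The plan is to apply Corollary~\ref{cor:vc-relative} to the collection $\T = \{\DIS(\{h,g\}) : h,g \in \C\}$, using Lemma~\ref{lem:vc-of-pairs} to control its VC dimension. Specifically, since $\T$ has VC dimension at most $10\vc$, I would take $c = 10 c_{0}$, where $c_{0}$ is the universal constant from Corollary~\ref{cor:vc-relative}. Then for $m \geq \frac{c}{\eps}(\vc \Log(1/\eps) + \Log(1/\conf))$, Corollary~\ref{cor:vc-relative} (applied with the set family $\T$ and the given $\eps,\conf$) yields the event $E$ of probability at least $1-\conf$ on which, for every $A \in \T$, $\hat{\Px}(A) \leq \frac{3}{4}\eps \Rightarrow \Px(A) < \eps$.

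Next, I would observe that for any $h,g \in \C$, the condition $(g(X_{1}^{\prime}),\ldots,g(X_{m}^{\prime})) = (h(X_{1}^{\prime}),\ldots,h(X_{m}^{\prime}))$ is equivalent to saying that none of $X_{1}^{\prime},\ldots,X_{m}^{\prime}$ lies in $A := \DIS(\{h,g\})$, i.e., $\hat{\Px}(A) = 0 \leq \frac{3}{4}\eps$. On the event $E$, the corollary then gives $\Px(\DIS(\{h,g\})) = \Px(A) < \eps$, which is the first claim (in fact with strict inequality).

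For the two ``in particular'' consequences, I would argue on the same event $E$. For the $\eps$-cover claim, fix $g \in \C$; by definition of $\C[(X_{1}^{\prime},\ldots,X_{m}^{\prime})]$ (Section~\ref{sec:xtd}), there exists $h \in \C[(X_{1}^{\prime},\ldots,X_{m}^{\prime})]$ which agrees with $g$ on every $X_{i}^{\prime}$, and then the first part gives $\Px(x : g(x) \neq h(x)) \leq \eps$, so $\C[(X_{1}^{\prime},\ldots,X_{m}^{\prime})]$ is an $\eps$-cover. For the $\eps$-net claim, take any $A = \DIS(\{h,g\}) \in \T$ with $\Px(A) > \eps$; if $A \cap \{X_{1}^{\prime},\ldots,X_{m}^{\prime}\} = \emptyset$, then $h$ and $g$ agree on all $X_{i}^{\prime}$ and the first part forces $\Px(A) \leq \eps$, a contradiction. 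Hence $\{X_{1}^{\prime},\ldots,X_{m}^{\prime}\}$ meets every such $A$, i.e., it is an $\eps$-net.

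There is no substantive obstacle here; the argument is a routine chaining of the uniform Bernstein-type bound in Corollary~\ref{cor:vc-relative} with the VC-dimension bound for pairs (Lemma~\ref{lem:vc-of-pairs}) and the definitional property of $\C[\cdot]$. The only minor choices are (i) selecting $c$ large enough so that the lower bound on $m$ required by Corollary~\ref{cor:vc-relative} (with VC dimension $10\vc$ in place of $k$) is absorbed into the stated lower bound $\frac{c}{\eps}(\vc\Log(1/\eps) + \Log(1/\conf))$, and (ii) noting that the strict inequality $\Px(A) < \eps$ we obtain trivially implies $\Px(A) \leq \eps$ in the cover statement while the net statement uses the contrapositive. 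Both are immediate.
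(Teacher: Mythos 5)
Your proof is correct and follows essentially the same route as the paper's: apply Corollary~\ref{cor:vc-relative} to the family $\{\DIS(\{h,g\}) : h,g \in \C\}$, bound its VC dimension by $10\vc$ via Lemma~\ref{lem:vc-of-pairs}, set $c = 10 c_{0}$, and then read off the cover and net claims from the uniform bound (noting $\hat{\Px}(\DIS(\{h,g\})) = 0$ when $h,g$ agree on the sample). The only difference is cosmetic ordering of the two ``in particular'' consequences.
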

\begin{proof}
Let $c_{0}$ be as in Corollary~\ref{cor:vc-relative}, and let $k$ denote the VC dimension of
$\{ \DIS(\{h,g\}) : h,g \in \C \}$.
Corollary~\ref{cor:vc-relative} implies that, if $m \geq \frac{c_{0}}{\eps} \left( k \Log\left(\frac{1}{\eps}\right) + \Log\left(\frac{1}{\conf}\right) \right)$, 
then there is an event $E$ of probability at least $1-\conf$, on which every $h,g \in \C$ with $\sum_{t=1}^{m} \ind_{\DIS(\{h,g\})}(X_{t}^{\prime}) = 0$ satisfy
$\Px(\DIS(\{h,g\})) < \eps$; in particular, this proves that on the event $E$,
$\{X_{1}^{\prime},\ldots,X_{m}^{\prime}\}$ is an $\eps$-net of $\Px$ for $\{ \DIS(\{h,g\}) : h,g \in \C \}$.
Furthermore, by definition of $\C[(X_{1}^{\prime},\ldots,X_{m}^{\prime})]$,
for every $h \in \C$, $\exists g \in \C[(X_{1}^{\prime},\ldots,X_{m}^{\prime})]$ with $\sum_{t=1}^{m} \ind_{\DIS(\{h,g\})}(X_{t}^{\prime}) = 0$,
which (on the event $E$) therefore also satisfies $\Px(\DIS(\{h,g\})) < \eps$.  Thus, on the event $E$, 
$\C[(X_{1}^{\prime},\ldots,X_{m}^{\prime})]$ is an $\eps$-cover of $\C$ (under the $\Px(\DIS(\{\cdot,\cdot\}))$ pseudometric).
To complete the proof, we note that Lemma~\ref{lem:vc-of-pairs} implies $k \leq 10 \vc$, so that by choosing $c = 10 c_{0}$, 
the condition $m \geq \frac{c_{0}}{\eps} \left( k \Log\left(\frac{1}{\eps}\right) + \Log\left(\frac{1}{\conf}\right) \right)$ 
will be satisfied for any $m \geq \frac{c}{\eps}\left( \vc \Log\left(\frac{1}{\eps}\right) + \Log\left(\frac{1}{\conf}\right) \right)$.
\end{proof}

Based on this result, it is straightforward to construct an $\eps$-net of $\Px$ for $\{\DIS(\{h,g\}) : h,g \in \C\}$
of size $\lesssim \frac{\vc}{\eps} \Log\left(\frac{1}{\eps}\right)$, based on a relatively small number of random samples.
Specifically, we have the following lemma.

\begin{lemma}
\label{lem:eps-net}
There exists a finite universal constant $c^{\prime} \geq 1$ such that, 
for any probability measure $\Px$ on $\X$, 
if $X_{1}^{\prime},X_{2}^{\prime},\ldots$ are independent $\Px$-distributed random variables,
then $\forall \eps,\conf \in (0,1)$, 
for any integers $m \geq \frac{c^{\prime} \vc}{\eps} \Log\left( \frac{1}{\eps} \right)$
and $\ell \geq \frac{c^{\prime}}{\eps}\left( \vc \Log\left( \frac{1}{\eps} \right) + \Log\left( \frac{1}{\conf} \right) \right)$,
defining $N_{i} = \{X_{m(i-1)+1}^{\prime},\ldots,X_{mi}^{\prime}\}$ for each $i \in \{1,\ldots,\lceil \log_{2}(2/\conf) \rceil \}$,
letting
\begin{multline*}
\hat{i} = \argmin_{i \in \{1,\ldots,\lceil \log_{2}(2/\conf) \rceil\}} 
\max\vast\{  \sum_{j = m \lceil \log_{2}(2/\conf) \rceil + 1}^{m \lceil \log_{2}(2/\conf) \rceil + \ell} \ind_{\DIS(\{h,g\})}(X_{j}^{\prime}) : 
\\ h,g \in \C, \sum_{j = m (i-1) + 1}^{m i} \ind_{\DIS(\{h,g\})}(X_{j}^{\prime}) = 0 \vast\},
\end{multline*}
and $\hat{N} = N_{\hat{i}}$,
with probability at least $1-\conf$,
$\hat{N}$ is an $\eps$-net of $\Px$ for $\{\DIS(\{h,g\}) : h,g \in \C\}$.
\end{lemma}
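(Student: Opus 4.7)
The plan is a sample-splitting / empirical-validation argument: the first $m \lceil \log_2(2/\conf) \rceil$ samples are partitioned into $\lceil \log_2(2/\conf) \rceil$ independent candidate sets $N_1,\ldots,N_{\lceil \log_2(2/\conf)\rceil}$ of size $m$ (each a random ``candidate'' $\eps$-net), and the remaining $\ell$ samples serve as an independent validation set used only to select the index $\hat{i}$. The idea is that with $m$ large enough, each $N_i$ has a constant probability of being a good net, so at least one of the $\lceil \log_2(2/\conf) \rceil$ candidates is good with probability $\geq 1-\conf/2$; and with $\ell$ large enough, the test-set empirical measure uniformly approximates $\Px$ on the class $\T := \{\DIS(\{h,g\}) : h,g \in \C\}$ well enough to distinguish good nets from bad ones.

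Concretely, I would first apply Lemma~\ref{lem:vc-cover} at scale $\eps/2$ with confidence $1/2$: for $c'$ chosen appropriately, the hypothesis $m \geq \frac{c' \vc}{\eps} \Log(1/\eps)$ subsumes the quantitative requirement $m \geq \frac{c}{\eps/2}(\vc \Log(2/\eps) + \Log 2)$, so each $N_i$ is independently an $(\eps/2)$-net of $\Px$ for $\T$ with probability at least $1/2$. By independence across blocks, with probability at least $1 - 2^{-\lceil \log_2(2/\conf)\rceil} \geq 1 - \conf/2$ there is some index $i^*$ for which $N_{i^*}$ is an $(\eps/2)$-net. Next, since $\T$ has VC dimension at most $10\vc$ by Lemma~\ref{lem:vc-of-pairs}, I would apply Corollary~\ref{cor:vc-relative} to the test set at scale $\eps$ and confidence $\conf/2$, for which $\ell \geq \frac{c'}{\eps}(\vc \Log(1/\eps) + \Log(1/\conf))$ suffices (absorbing the factor of $10$). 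This yields an event of probability at least $1-\conf/2$ on which, writing $\hat{\Px}_{\text{test}}$ for the test-set empirical measure, every $A \in \T$ satisfies (i) $\Px(A) \leq \eps/2 \Rightarrow \hat{\Px}_{\text{test}}(A) < \tfrac{3}{4}\eps$ and (ii) $\hat{\Px}_{\text{test}}(A) \leq \tfrac{3}{4}\eps \Rightarrow \Px(A) < \eps$.

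On the intersection of these two events (probability at least $1-\conf$), I would conclude as follows. For the good index $i^*$, every pair $(h,g)$ with $\sum_{j=m(i^*-1)+1}^{mi^*} \ind_{\DIS(\{h,g\})}(X_j^{\prime}) = 0$ has $N_{i^*} \cap \DIS(\{h,g\}) = \emptyset$, hence $\Px(\DIS(\{h,g\})) \leq \eps/2$ by the $(\eps/2)$-net property, hence by (i) the test count is at most $\tfrac{3}{4}\eps \cdot \ell$. Therefore the inner ``max'' in the definition of $\hat{i}$ is at most $\tfrac{3}{4}\eps \cdot \ell$ at $i = i^*$, and by the $\argmin$ property also at $i = \hat{i}$. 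Then for every $(h,g)$ with $N_{\hat{i}} \cap \DIS(\{h,g\}) = \emptyset$, the test count is at most $\tfrac{3}{4}\eps \cdot \ell$, so by (ii) $\Px(\DIS(\{h,g\})) < \eps$; contrapositively, any $A = \DIS(\{h,g\})$ with $\Px(A) > \eps$ must intersect $\hat{N}$, which is exactly the $\eps$-net property. The only real obstacle is bookkeeping: matching the constants in $m$ and $\ell$ to the quantitative hypotheses of Lemma~\ref{lem:vc-cover} and Corollary~\ref{cor:vc-relative} at the two scales $\eps/2$ and $\eps$, which is routine.
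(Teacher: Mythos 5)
Your proposal is correct and follows essentially the same route as the paper's own proof: split the first $m\lceil\log_2(2/\conf)\rceil$ samples into independent candidate blocks, invoke Lemma~\ref{lem:vc-cover} at scale $\eps/2$ to get each block an $(\eps/2)$-net with probability $\geq 1/2$ and hence some good block $i^*$ with probability $\geq 1-\conf/2$, invoke Corollary~\ref{cor:vc-relative} (via Lemma~\ref{lem:vc-of-pairs}) on the held-out $\ell$ samples to get two-sided empirical/true comparison at thresholds $\eps/2$ and $3\eps/4$ and $\eps$, and then pass through $i^*$, the argmin, and $\hat{i}$. The only differences are cosmetic (order of the two applications, phrasing of the comparison inequalities as (i)/(ii)); the constants-bookkeeping you defer is exactly what the paper carries out.
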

\begin{proof}
Let $k$ denote the VC dimension of the collection of sets $\{\DIS(\{h,g\}) : h,g \in \C\}$.
Letting $c_{0}$ be as in Corollary~\ref{cor:vc-relative}, taking $c^{\prime} \geq 10 c_{0}$, we have
$\ell \geq \frac{c_{0}}{\eps} \left( 10 \vc \Log\left(\frac{1}{\eps}\right) + \Log\left(\frac{2}{\conf}\right) \right)$,
which is at least $\frac{c_{0}}{\eps} \left( k \Log\left(\frac{1}{\eps}\right) + \Log\left(\frac{2}{\conf}\right) \right)$
by Lemma~\ref{lem:vc-of-pairs}.  Therefore, Corollary~\ref{cor:vc-relative} implies there exists an event $E^{\prime}$
of probability at least $1-\conf/2$ such that, on $E^{\prime}$, $\forall h,g \in \C$,
\begin{align}
\sum_{m \lceil \log_{2}(2/\conf) \rceil + 1}^{m \lceil \log_{2}(2/\conf) \rceil + \ell} \ind_{\DIS(\{h,g\})}(X_{j}^{\prime}) \leq \frac{3}{4} \eps \ell
& \implies \Px(\DIS(\{h,g\})) \leq \eps, \label{eqn:eps-net-Eprime-1}
\\ \Px(\DIS(\{h,g\})) \leq \frac{\eps}{2} & \implies \sum_{m \lceil \log_{2}(2/\conf) \rceil + 1}^{m \lceil \log_{2}(2/\conf) \rceil + \ell} \ind_{\DIS(\{h,g\})}(X_{j}^{\prime}) \leq \frac{3}{4} \eps \ell. \label{eqn:eps-net-Eprime-2}
\end{align}

Let $c$ be as in Lemma~\ref{lem:vc-cover}.
Taking $c^{\prime} \geq 6c$, we have $m \geq \frac{2c}{\eps} \left( \vc \Log\left(\frac{2}{\eps}\right) + \Log\left(2\right)\right)$,
so that Lemma~\ref{lem:vc-cover} implies that, for each $i \in \left\{ 1,\ldots, \lceil \log_{2}(2/\conf) \rceil \right\}$, 
$N_{i}$ is an $\frac{\eps}{2}$-net of $\Px$ for $\{ \DIS(\{h,g\}) : h,g \in \C \}$ with probability at least $1/2$.
Since the $N_{i}$ sets are independent, there is an event $E$ of probability at least $1 - (1-1/2)^{\lceil \log_{2}(2/\conf) \rceil} \geq 1-\conf/2$,
on which $\exists i^{*} \in \left\{1,\ldots,\lceil \log_{2}(2/\conf) \rceil \right\}$ such that $N_{i^{*}}$ is an $\frac{\eps}{2}$-net of $\Px$ for $\{ \DIS(\{h,g\}) : h,g \in \C \}$.
In particular, this implies that on $E$, 
\begin{equation}
\label{eqn:eps-net-istar-probs}
\sup\left\{ \Px(\DIS(\{h,g\})) : h,g \in \C, \sum_{j=m (i^{*}-1)+1}^{m i^{*}} \ind_{\DIS(\{h,g\})}(X_{j}^{\prime}) = 0 \right\}
\leq \frac{\eps}{2}.
\end{equation}

Therefore, on the event $E^{\prime} \cap E$, we have
\begin{align*}
& \max\left\{ \sum_{j= m\lceil \log_{2}(2/\conf) \rceil + 1}^{m \lceil \log_{2}(2/\conf) \rceil + \ell} \ind_{\DIS(\{h,g\})}(X_{j}^{\prime}) : h,g \in \C, \sum_{j=m (\hat{i}-1)+1}^{m\hat{i}} \ind_{\DIS(\{h,g\})}(X_{j}^{\prime}) = 0 \right\}
\\ & \leq \max\left\{ \sum_{j= m\lceil \log_{2}(2/\conf) \rceil + 1}^{m \lceil \log_{2}(2/\conf) \rceil + \ell} \ind_{\DIS(\{h,g\})}(X_{j}^{\prime}) : h,g \in \C, \sum_{j=m (i^{*}-1)+1}^{m i^{*}} \ind_{\DIS(\{h,g\})}(X_{j}^{\prime}) = 0 \right\}
\leq \frac{3}{4}\eps\ell,
\end{align*}
where the first inequality is by definition of $\hat{i}$, and the second inequality is by a combination of \eqref{eqn:eps-net-istar-probs} with \eqref{eqn:eps-net-Eprime-2}.
Therefore, by \eqref{eqn:eps-net-Eprime-1}, on the event $E^{\prime} \cap E$, we have
\begin{equation*}
\max\left\{ \Px(\DIS(\{h,g\})) : h,g \in \C, \sum_{j=m (\hat{i}-1)+1}^{m\hat{i}} \ind_{\DIS(\{h,g\})}(X_{j}^{\prime}) = 0 \right\} \leq \eps,
\end{equation*}
or equivalently, $N_{\hat{i}}$ is an $\eps$-net of $\Px$ for $\{\DIS(\{h,g\}) : h,g \in \C\}$.
To complete the proof, we take $c^{\prime} = \max\{ 10 c_{0}, 6c \}$, 
and note that the event $E^{\prime} \cap E$ has probability at least $1-\conf$ by a union bound.
\end{proof}

There are also variants of the above two lemmas applicable to sample compression schemes.
Specifically, the next lemma is due to \citet*{littlestone:86,floyd:95}.

\begin{lemma}
\label{lem:compression-bound}
There exists a finite universal constant $\tilde{c} \geq 1$ such that,
for any collection $\T$ of measurable subsets of $\X$,
any $n \in \nats \cup \{0\}$, and any
function $\phi_{n} : \X^{n} \to \T$, for any $\eps,\conf \in (0,1)$, for any integer 
$m \geq \frac{\tilde{c}}{\eps}\left( n \Log\left(\frac{1}{\eps}\right) + \Log\left(\frac{1}{\conf}\right)\right)$,
for any probability measure $\Px$ over $\X$, if $X_{1}^{\prime},\ldots,X_{m}^{\prime}$ are independent $\Px$-distributed random variables,
then with probability at least $1-\conf$, it holds that every $i_{1},\ldots,i_{n} \in \{1,\ldots,m\}$ with $i_{1} \leq \cdots \leq i_{n}$ and
$\{X_{1}^{\prime},\ldots,X_{m}^{\prime}\} \cap \phi_{n}(X_{i_{1}}^{\prime},\ldots,X_{i_{n}}^{\prime}) = \emptyset$ has 
$\Px\left( \phi_{n}(X_{i_{1}}^{\prime},\ldots,X_{i_{n}}^{\prime}) \right) \leq \eps$: that is, 
$\{X_{1}^{\prime},\ldots,X_{m}^{\prime}\}$ is an $\eps$-net of $\Px$ for $\{ \phi_{n}(X_{i_{1}}^{\prime},\ldots,X_{i_{n}}^{\prime}) : i_{1},\ldots,i_{n} \in \{1,\ldots,m\}, i_{1} \leq \cdots \leq i_{n}\}$.
\end{lemma}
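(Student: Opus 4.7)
The strategy is the classical sample-compression argument of \citet*{littlestone:86,floyd:95}. First, I would fix a deterministic index sequence $i_{1} \leq \cdots \leq i_{n}$ in $\{1,\ldots,m\}$ and condition on the realization of $(X_{i_{1}}^{\prime},\ldots,X_{i_{n}}^{\prime})$. Under this conditioning the set $A := \phi_{n}(X_{i_{1}}^{\prime},\ldots,X_{i_{n}}^{\prime}) \in \T$ is a fixed measurable subset of $\X$, and at least $m-n$ of the remaining samples are i.i.d.\ $P$-distributed and independent of $A$. Consequently, if $P(A) > \eps$, the conditional probability that every one of the $m$ samples avoids $A$ is at most $(1-\eps)^{m-n} \leq e^{-\eps(m-n)}$, since at minimum the $\geq m-n$ unchosen samples would each have to miss a set of mass exceeding $\eps$.

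Second, I would apply a union bound over all $\binom{m+n-1}{n} \leq (2m)^{n}$ weakly increasing index sequences $i_{1} \leq \cdots \leq i_{n}$ in $\{1,\ldots,m\}$. This bounds the probability of the ``bad'' event -- that there exist some such indices with $\{X_{1}^{\prime},\ldots,X_{m}^{\prime}\} \cap \phi_{n}(X_{i_{1}}^{\prime},\ldots,X_{i_{n}}^{\prime}) = \emptyset$ yet $P(\phi_{n}(X_{i_{1}}^{\prime},\ldots,X_{i_{n}}^{\prime})) > \eps$ -- by
\begin{equation*}
(2m)^{n} e^{-\eps(m-n)}.
\end{equation*}

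Third, I would select the universal constant $\tilde{c}$ large enough that the hypothesis $m \geq \frac{\tilde{c}}{\eps}\bigl(n\Log(1/\eps) + \Log(1/\conf)\bigr)$ forces the displayed quantity to be at most $\conf$. After taking logarithms this amounts to verifying $n\log(2m) + \eps n + \log(1/\conf) \leq \eps m$. Using $m \geq n/\eps$ (so that $\eps n \leq \eps m /2$ for $\tilde{c}$ moderately large), and plugging $m \leq (\tilde{c}/\eps)(n\Log(1/\eps) + \Log(1/\conf))$ into the $\log(2m)$ term to obtain $\log m \lesssim \log(1/\eps) + \log n + \log\log(1/\conf)$, the $n\log(2m)$ term is then dominated by a fraction of $\tilde{c} n\Log(1/\eps)$ and the $\log(1/\conf)$ term is absorbed into a fraction of $\tilde{c}\Log(1/\conf)$ for $\tilde{c}$ taken sufficiently large.

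The only real obstacle is the bookkeeping in Step~3: one has to confirm that a single universal $\tilde{c}$ handles all triples $(n,\eps,\conf)$, despite the presence of an iterated logarithm $\log\log(1/\conf)$ when one naively bounds $\log m$. This is handled by exploiting that the hypothesized lower bound on $m$ already forces $\log m = O(\log(n/\eps) + \log\log(1/\conf))$, so that after multiplying by $n$ the resulting term is strictly dominated by the sum $\tilde{c} n \Log(1/\eps) + \tilde{c}\Log(1/\conf)$ on the right-hand side for $\tilde{c}$ a large enough absolute constant. The independence argument of Step~1 and the union bound of Step~2 are otherwise entirely standard.
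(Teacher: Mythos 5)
Your overall architecture — fix a weakly increasing index sequence, condition on the designated samples, bound the conditional probability that the remaining samples all miss a heavy set, union bound over index sequences, then do algebra — is exactly the standard Littlestone--Warmuth / Floyd--Warmuth argument, and Steps 1 and 2 (up to the union bound) are set up correctly. Two points, however, keep the proof from going through as written.

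The substantive gap is the choice of bound on the number of index sequences in Step 2. Bounding $\binom{m+n-1}{n}$ by $(2m)^{n}$ discards the $1/n!$ factor, and that factor is not cosmetic here: after taking logarithms your target inequality becomes $n\log(2m) + \eps n + \log(1/\conf) \leq \eps m$, and when you substitute the minimum admissible $m$, the term $n\log m$ expands to roughly $n\log(1/\eps) + n\log n + n\log\Log(1/\conf) + \dots$ The $n\log n$ piece cannot be absorbed by $\tilde{c}\,n\Log(1/\eps)$ for any fixed universal $\tilde{c}$: take, e.g., $\eps = \conf = 1/2$ and $n \to \infty$, in which case the hypothesis on $m$ is just $m \gtrsim \tilde{c} n$ and $(2m)^{n} e^{-\eps(m-n)}$ actually tends to $\infty$ once $n > e^{O(\tilde{c})}$. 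The fix is to keep the $1/n$ inside the logarithm, i.e., use $\binom{m+n-1}{n} \leq \bigl(e(m+n-1)/n\bigr)^{n} \leq (2em/n)^{n}$ (valid since $m \geq n$ under your constraint). Then the quantity to control is $n\log(2em/n)$, and substituting the minimum $m$ gives $\log(2em_{0}/n) \leq O(1) + \log(1/\eps) + \log\bigl(\Log(1/\eps) + \tfrac{1}{n}\Log(1/\conf)\bigr)$; multiplying by $n$ and using $\log x \leq x$ on the last term shows $n\log(2em_{0}/n) \lesssim n\Log(1/\eps) + \Log(1/\conf)$ with a universal implicit constant, which is what you need.

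A smaller issue: the inequality $\log m \lesssim \log(1/\eps) + \log n + \log\log(1/\conf)$ does not follow from the hypothesis, which gives only a \emph{lower} bound on $m$. What you actually want to say is that $(2em/n)^{n} e^{-\eps(m-n)}$ is nonincreasing in $m$ on the range $m \geq n/\eps$ (its log has derivative $n/m - \eps \leq 0$ there), so it suffices to verify the bound at the minimum admissible $m$; only then is the upper bound on $\log m$ justified. This monotonicity step is implicit in your proposal but should be made explicit since, without it, the substitution is a non sequitur.
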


This implies the following result.

\begin{lemma}
\label{lem:compression-eps-net}
There exists a finite universal constant $\tilde{c}^{\prime} \geq 1$ such that, 
for any collection $\T$ of measurable subsets of $\X$,
any $n \in \nats$, and any function $\phi_{n} : \X^{n} \times \Y^{n} \to \T$,
for any probability measure $\Px$ on $\X$, 
if $X_{1}^{\prime},X_{2}^{\prime},\ldots$ are independent $\Px$-distributed random variables,
then for any $\eps,\conf \in (0,1)$, 
for any integers $m \geq \frac{\tilde{c}^{\prime} n }{\eps} \Log\left( \frac{1}{\eps} \right)$
and $\ell \geq \frac{\tilde{c}^{\prime}}{\eps}\left( n \Log\left( \frac{m}{n} \right) + \Log\left( \frac{1}{\conf} \right) \right)$,
defining $N_{i} = \{X_{m(i-1)+1}^{\prime},\ldots,X_{mi}^{\prime}\}$ for each $i \in \{1,\ldots,\lceil \log_{2}(2/\conf) \rceil \}$,
letting
\begin{multline*}
\hat{i} = \argmin_{i \in \{1,\ldots,\lceil \log_{2}(2/\conf) \rceil\}} 
\max\vast\{
\sum_{j = m \lceil \log_{2}(2/\conf) \rceil + 1}^{m \lceil \log_{2}(2/\conf) \rceil + \ell} \ind_{\phi_{n}(X_{i_{1}}^{\prime},\ldots,X_{i_{n}}^{\prime},y_{1},\ldots,y_{n})}(X_{j}^{\prime}) :
y_{1},\ldots,y_{n} \in \Y, 
\\ m (i-1) < i_{1} \leq \cdots \leq i_{n} \leq m i, \sum_{j = m (i-1) + 1}^{m i} \ind_{\phi_{n}(X_{i_{1}}^{\prime},\ldots,X_{i_{n}}^{\prime},y_{1},\ldots,y_{n})}(X_{j}^{\prime}) = 0
\vast\} \cup \{0\},
\end{multline*}
and $\hat{N} = N_{\hat{i}}$,
with probability at least $1-\conf$,
$\hat{N}$ is an $\eps$-net of $\Px$ for
$\{\phi_{n}(X_{i_{1}}^{\prime},\ldots,X_{i_{n}}^{\prime},y_{1},\ldots,y_{n}) : m(\hat{i}-1) < i_{1} \leq \cdots \leq i_{n} \leq m \hat{i}, y_{1},\ldots,y_{n} \in \Y\}$.
\end{lemma}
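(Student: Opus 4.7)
My plan is to mirror the two-stage structure of Lemma \ref{lem:eps-net}, replacing the VC-based concentration inequality used there (Corollary \ref{cor:vc-relative}) with a compression-scheme analog built from Lemma \ref{lem:compression-bound} and an auxiliary Chernoff estimate. The first stage uses the $\lceil \log_2(2/\conf) \rceil$ independent blocks $N_1,\ldots,N_{\lceil \log_2(2/\conf)\rceil}$ to generate candidate $\eps/2$-nets, and the second stage uses the disjoint validation block of $\ell$ samples to select a candidate that is, with high probability, actually an $\eps$-net.

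For the first stage, I apply Lemma \ref{lem:compression-bound} to each block $N_i$ individually at accuracy $\eps/2$ and confidence $1/2$. Since $\phi_n$ takes a label sequence, I split over the $2^n$ choices of $y \in \Y^n$ and treat each $\phi_n(\cdot,y)$ as a separate compression scheme, paying a union-bound price of $2^n$ in the confidence. The hypothesis $m \geq \frac{\tilde{c}^{\prime}n}{\eps}\Log(1/\eps)$ then suffices to make each $N_i$ an $\eps/2$-net for its own block-$i$ labeled compression family with probability at least $1/2$, provided $\tilde{c}^{\prime}$ is large enough relative to the constant $\tilde{c}$ of Lemma \ref{lem:compression-bound}. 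By independence across blocks, some index $i^{*}$ produces such an $\eps/2$-net with probability at least $1 - \conf/2$.

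For the second stage, I condition on the blocks $N_1,\ldots,N_{\lceil \log_2(2/\conf)\rceil}$ and use that the validation samples are independent of them. For each block $i$, the family of compression sets obtained from indices in $N_i$ and labels in $\Y^n$ has cardinality at most $\binom{m}{n} 2^n \leq (2em/n)^n$. A two-sided Chernoff bound, union bounded over this family and over all $\lceil \log_2(2/\conf) \rceil$ blocks, shows that for $\ell \geq \frac{\tilde{c}^{\prime}}{\eps}(n\Log(m/n)+\Log(1/\conf))$, with probability at least $1-\conf/2$ every such compression set $A$ satisfies: its validation count exceeds $(3/4)\eps\ell$ whenever $\Px(A) > \eps$, and is at most $(3/4)\eps\ell$ whenever $\Px(A) \leq \eps/2$. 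On the joint good event, every compression set in block $i^{*}$ with zero hits in $N_{i^{*}}$ has measure at most $\eps/2$ and hence validation count at most $(3/4)\eps\ell$; by the definition of $\hat{i}$ as an $\argmin$, the same upper bound $(3/4)\eps\ell$ must then hold for every compression set in block $\hat{i}$ with zero hits in $N_{\hat{i}}$, which forces each such set to have measure at most $\eps$; this is precisely the $\eps$-net property claimed.

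The main technical care needed is the log-factor bookkeeping: the $\Log(m/n)$ rather than $\Log(m)$ in the requirement on $\ell$ comes from the tighter $\binom{m}{n} \leq (em/n)^n$ estimate in the union bound, and the $2^n$ label factor contributes only $n\Log 2$, which is absorbed once $m \geq 2n$. One also needs to verify that the Chernoff bound applies uniformly in both directions, but this is immediate because conditionally on the blocks each fixed set has i.i.d.\ Bernoulli validation counts, so both tails are standard. The ``$\cup\{0\}$'' in the definition of the max simply ensures the $\argmin$ is well-defined when some block has no compression set with zero hits, in which case the claimed $\eps$-net property for that block is vacuous.
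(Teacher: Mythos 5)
Your proposal is correct and follows essentially the same two-stage structure as the paper's own proof: apply Lemma~\ref{lem:compression-bound} to each block at accuracy $\eps/2$ with per-label confidence $2^{-n-1}$ (union-bounded over the $2^n$ label sequences to get $1/2$ per block), invoke independence across blocks to guarantee a good block $i^{*}$ with probability $\geq 1-\conf/2$, then use a two-sided Chernoff bound on the validation set with threshold $(3/4)\eps\ell$ union-bounded over the $O(2^n(2em/n)^n\log(2/\conf))$ candidate compression sets, and finally conclude via the $\argmin$. The only cosmetic discrepancy is that you cite $\binom{m}{n}$ where the paper uses $\binom{n+m-1}{n}$ (the indices $i_1\leq\cdots\leq i_n$ may repeat), but both are $\leq(2em/n)^n$ once $m\geq n$, so this does not affect the argument.
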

\begin{proof} 
Let $\tilde{c}$ be as in Lemma~\ref{lem:compression-bound}, define $\tilde{c}^{\prime} = \max\left\{8\tilde{c},128\right\}$,
and let $m$ and $\ell$ be as described in the lemma statement.
Noting that 
$\frac{2 \tilde{c}}{\eps}\left( n \Log\left(\frac{2}{\eps}\right) + \Log\left(2^{n+1}\right) \right) 
\leq \frac{8 \tilde{c} n}{\eps} \Log\left(\frac{1}{\eps}\right)$,
we have that $m \geq \frac{2\tilde{c}}{\eps}\left( n \Log\left(\frac{2}{\eps}\right) + \Log\left(2^{n+1}\right) \right)$.
Thus, by Lemma~\ref{lem:compression-bound}, 
for each $i \in \left\{1,\ldots,\lceil \log_{2}(2/\conf) \rceil\right\}$ and $y_{1},\ldots,y_{n} \in \Y$, 
with probability at least $1 - 2^{-n-1}$, 
$\left\{ X_{m (i-1) + 1}^{\prime}, \ldots, X_{m i}^{\prime} \right\}$ is an $\frac{\eps}{2}$-net of $\Px$ for 
$\left\{ \phi_{n}(X_{i_{1}}^{\prime},\ldots,X_{i_{n}}^{\prime},y_{1},\ldots,y_{n}) : m (i-1) < i_{1} \leq \cdots \leq i_{n} \leq m i \right\}$.
By a union bound, this holds simultaneously for all $y_{1},\ldots,y_{n} \in \Y$ with probability at least $\frac{1}{2}$.
In particular, since the $\left\{X_{m (i-1)+1}^{\prime}, \ldots, X_{m i}^{\prime}\right\}$ subsequences are independent over values of $i$, we have that
there is an event $E$ of probability at least $1 - \left(\frac{1}{2}\right)^{\lceil \log_{2}(2/\conf) \rceil} \geq 1 - \frac{\conf}{2}$,
on which $\exists i^{*} \in \left\{ 1, \ldots, \lceil \log_{2}(2/\conf) \rceil \right\}$ such that
$\left\{ X_{m (i^{*}-1)+1}^{\prime}, \ldots, X_{m i^{*}}^{\prime} \right\}$ is an $\frac{\eps}{2}$-net of $\Px$ for 
$\big\{ \phi_{n}(X_{i_{1}}^{\prime},\ldots,X_{i_{n}}^{\prime},y_{1},\ldots,y_{n}) : m (i^{*}-1) < i_{1}$ $\leq \cdots \leq i_{n} \leq m i^{*}, y_{1},\ldots,y_{n} \in \Y \big\}$.

For any $i \in \left\{ 1,\ldots, \lceil \log_{2}(2/\conf) \rceil \right\}$, any $i_{1},\ldots,i_{n} \in \{ m (i-1)+1,\ldots,m i\}$ with $i_{1} \leq \cdots \leq i_{n}$, and any $y_{1},\ldots,y_{n} \in \Y$,
Chernoff bounds (applied under the conditional distribution given $X_{i_{1}}^{\prime},\ldots,X_{i_{n}}^{\prime}$) and the law of total probability imply that,
with probability at least 
$1 - \exp\left\{ - \eps \ell / 32 \right\}$, 
if $\Px\left( \phi_{n}(X_{i_{1}}^{\prime},\ldots,X_{i_{n}}^{\prime},y_{1},\ldots,y_{n}) \right) \leq \frac{\eps}{2}$,
then 
\begin{equation*}
\sum_{j = m \lceil \log_{2}(2/\conf) \rceil + 1}^{m \lceil \log_{2}(2/\conf) \rceil + \ell} \ind_{\phi_{n}(X_{i_{1}}^{\prime},\ldots,X_{i_{n}}^{\prime},y_{1},\ldots,y_{n})}(X_{j}^{\prime}) \leq \frac{3}{4} \eps \ell,
\end{equation*}
while if $\Px\left( \phi_{n}(X_{i_{1}}^{\prime},\ldots,X_{i_{n}}^{\prime},y_{1},\ldots,y_{n}) \right) > \eps$,
then
\begin{equation*}
\sum_{j = m \lceil \log_{2}(2/\conf) \rceil + 1}^{m \lceil \log_{2}(2/\conf) \rceil + \ell} \ind_{\phi_{n}(X_{i_{1}}^{\prime},\ldots,X_{i_{n}}^{\prime},y_{1},\ldots,y_{n})}(X_{j}^{\prime}) > \frac{3}{4} \eps \ell.
\end{equation*}
The number of distinct nondecreasing sequences $(i_{1},\ldots,i_{n}) \in \{ m (i-1) + 1, \ldots, m i \}^{n}$ is $\binom{ n + m-1 }{n}$ $\leq \left(\frac{2em}{n}\right)^{n}$.
Therefore, by a union bound, there exists an event $E^{\prime}$ of probability at least 
\begin{equation*}
1 - 2^{n} \left( \frac{2 e m}{n} \right)^{n} \lceil \log_{2}(2/\conf) \rceil \exp\left\{ - \eps \ell / 32 \right\},
\end{equation*}
on which this holds for every such $y_{1},\ldots,y_{n}, i, i_{1},\ldots,i_{n}$.
Noting that 
\begin{equation*}
\frac{32}{\eps} \Log\left( 2^{n} \lceil \log_{2}(2/\conf) \rceil \left(\frac{2 e m}{n}\right)^{n} \frac{2}{\conf} \right)
\leq \frac{128}{\eps}\left( n \Log\left( \frac{m}{n} \right) + \Log\left( \frac{1}{\conf} \right) \right)
\leq \ell,
\end{equation*}
we have that $E^{\prime}$ has probability at least $1 - \frac{\conf}{2}$.

In particular, defining for each $i \in \left\{ 1, \ldots, \lceil \log_{2}(2/\conf) \rceil \right\}$, 
\begin{multline*}
\hat{p}_{i} = 
\max\vast\{
\sum_{j = m \lceil \log_{2}(2/\conf) \rceil + 1}^{m \lceil \log_{2}(2/\conf) \rceil + \ell} \ind_{\phi_{n}(X_{i_{1}}^{\prime},\ldots,X_{i_{n}}^{\prime},y_{1},\ldots,y_{n})}(X_{j}^{\prime})
: y_{1},\ldots,y_{n} \in \Y, 
\\ m (i-1) < i_{1} \leq \cdots \leq i_{n} \leq m i, \sum_{j = m (i-1) + 1}^{m i} \ind_{\phi_{n}(X_{i_{1}}^{\prime},\ldots,X_{i_{n}}^{\prime},y_{1},\ldots,y_{n})}(X_{j}^{\prime}) = 0
\vast\} \cup \{0\},
\end{multline*}
we have that, on $E \cap E^{\prime}$,
$\hat{p}_{i^{*}} \leq \frac{3}{4} \eps \ell$.
Furthermore, for every $i \in \left\{1,\ldots,\lceil \log_{2}(2/\conf) \rceil \right\}$ for which $\left\{ X_{m (i-1) +1}^{\prime}, \ldots, X_{m i}^{\prime} \right\}$
is \emph{not} an $\eps$-net of $\Px$ for $\big\{ \phi_{n}(X_{i_{1}}^{\prime},\ldots,X_{i_{n}}^{\prime},y_{1},\ldots,y_{n}) : m (i-1) < i_{1} \leq \cdots \leq i_{n} \leq m i, y_{1},\ldots,y_{n} \in \Y \big\}$,
by definition $\exists i_{1},\ldots,i_{n} \in \{ m (i-1)+1,\ldots, m i \}$ with $i_{1} \leq \cdots \leq i_{n}$, and $y_{1},\ldots,y_{n} \in \Y$, such that 
$\Px\left( \phi_{n}(X_{i_{1}}^{\prime},\ldots,X_{i_{n}}^{\prime},y_{1},\ldots,y_{n}) \right) > \eps$
while $\sum_{j= m(i-1)+1}^{m i} \ind_{\phi_{n}(X_{i_{1}}^{\prime},\ldots,X_{i_{n}}^{\prime},y_{1},\ldots,y_{n})}(X_{j}^{\prime})=0$;
thus, on the event $E^{\prime}$, 
\begin{equation*}
\sum_{j = m \lceil \log_{2}(2/\conf) \rceil + 1}^{m \lceil \log_{2}(2/\conf) \rceil + \ell} \ind_{\phi_{n}(X_{i_{1}}^{\prime},\ldots,X_{i_{n}}^{\prime},y_{1},\ldots,y_{n})}(X_{j}^{\prime}) > \frac{3}{4} \eps \ell
\end{equation*}
for this choice of $i_{1},\ldots,i_{n},y_{1},\ldots,y_{n}$.
In particular, this implies that $\hat{p}_{i} > \frac{3}{4} \eps \ell$.
Altogether, we have that on the event $E \cap E^{\prime}$, any such $i$ has $\hat{p}_{\hat{i}} \leq \hat{p}_{i^{*}} \leq \frac{3}{4} \eps \ell < \hat{p}_{i}$, 
so that $\hat{i} \neq i$.  Therefore, on the event $E \cap E^{\prime}$, 
$\left\{ X_{m (\hat{i}-1) +1}^{\prime}, \ldots, X_{m \hat{i}}^{\prime} \right\}$
is an $\eps$-net of $\Px$ for $\big\{ \phi_{n}(X_{i_{1}}^{\prime},\ldots,X_{i_{n}}^{\prime},y_{1},\ldots,y_{n}) : m (\hat{i}-1) < i_{1} \leq \cdots \leq i_{n} \leq m \hat{i}, y_{1},\ldots,y_{n} \in \Y \big\}$.

To complete the proof, we note that the event $E \cap E^{\prime}$ has probability at least $1-\conf$ by a union bound.
\end{proof}

\subsection{Lower Bound Constructions for Noisy Settings}
\label{sec:rr-lemma}

Fix any $\zeta \in (0,1]$, $\bound \in [0,1/2)$,
and $k \in \nats$ with $k \leq 1 / \zeta$.
Let $\X_{k} = \{x_{1},\ldots,x_{k+1}\}$ be any $k+1$ distinct elements of $\X$ (assuming $|\X| \geq k+1$),
and let $\C_{k} = \{ x \mapsto 2\ind_{\{x_{i}\}}(x)-1 : i \in \{1,\ldots,k\}\}$,
a set of functions mapping $\X$ to $\{-1,+1\}$.
Let $\Px_{k,\zeta}$ be a probability measure over $\X$
with $\Px_{k,\zeta}(\{x_{i}\}) = \zeta$ for each $i \in \{1,\ldots,k\}$,
and $\Px_{k,\zeta}(\{x_{k+1}\}) = 1 - \zeta k$.
For each $t \in \{1,\ldots,k\}$, let $P_{k,\zeta,t}^{\prime}$ denote the probability measure over $\X \times \Y$
having marginal distribution $\Px_{k,\zeta}$ over $\X$, such that if $(X,Y) \sim P_{k,\zeta,t}^{\prime}$,
then every $i \in \{1,\ldots,k\}$ has $\P(Y=2\ind_{\{x_{t}\}}(X)-1 | X=x_{i})=1-\bound$, and furthermore $\P(Y=-1|X=x_{k+1})=1$.
Finally, define $\RR^{\prime}(k,\zeta,\bound) = \left\{ P_{k,\zeta,t}^{\prime} : t \in \{1,\ldots,k\}\right\}$.
\citet*{raginsky:11} prove the following result (see the proof of their Theorem 2).\footnote{Technically,
the proof of \citet*[][Theorem 2]{raginsky:11} relies on a lemma (their Lemma 4), with various conditions
on both $k$ and a parameter ``$d$'' in their construction.  However, one can easily verify that the conclusions
of that lemma continue to hold (in fact, with improved constants) in our special case (corresponding to $d=1$ 
and arbitrary $k\in\nats$) by defining $\mathcal{M}_{k,1}=\{0,1\}^{k}_{1}$ in their construction.}

\begin{lemma}
\label{lem:rr11}
For $\zeta,\bound,k$ as above, if $k \geq 2$
and $\C_{k} \subseteq \C$, then for any $\conf \in (0,1/4)$,
\begin{equation*}
\LC_{\RR^{\prime}(k,\zeta,\bound)}((\zeta/2) (1-2\bound),\conf) 
\geq \frac{\bound k\ln\left(\frac{1}{4\conf}\right)}{3(1-2\bound)^{2}}.
\end{equation*}
\end{lemma}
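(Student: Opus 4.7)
The plan is to reduce the learning problem to a noisy best-arm-identification task and then apply a change-of-measure lower bound. First I would compute excess errors directly under $P'_{k,\zeta,t}$: the singleton $h_t = 2\ind_{\{x_t\}}-1$ has $\er(h_t) = k\zeta\beta$, and any classifier $\hat{h}$ whose restriction to $\{x_1,\ldots,x_{k+1}\}$ differs from that of $h_t$ on even one point adds at least $\zeta(1-2\beta)$ to the error (flipping $\hat{h}(x_t)$ replaces $\zeta\beta$ by $\zeta(1-\beta)$; flipping $\hat{h}(x_j)$ for $j\in\{1,\ldots,k\}\setminus\{t\}$ does the same; flipping $\hat{h}(x_{k+1})$ adds the even larger $1-\zeta k$). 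Since the target tolerance is $(\zeta/2)(1-2\beta)$, every $(\eps,\conf)$-correct algorithm must output a classifier agreeing with $h_t$ on $\{x_1,\ldots,x_{k+1}\}$ with probability at least $1-\conf$, so in effect it must identify the index $t$. Queries at $x_{k+1}$ deterministically return $-1$ and are uninformative, so the remaining queries reduce the task to identifying the single Bernoulli source with bias $1-\beta$ toward $+1$ among $k$ sources, the rest having bias $\beta$.

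Next I would apply an information-theoretic lower bound under the uniform prior on $t\in\{1,\ldots,k\}$. Let $\P_t$ denote the distribution of the full observation record $(i_j,Y_{i_j})_{j\leq n}$ under $P'_{k,\zeta,t}$, and write $N_i$ for the random number of queries at $x_i$. The chain rule for KL divergence along the filtration generated by the interaction yields the Wald-type identity
\begin{equation*}
\mathrm{KL}(\P_t \Vert \P_s) \;=\; \bigl(\E_{\P_t}[N_t] + \E_{\P_t}[N_s]\bigr)\cdot \mathrm{KL}\bigl(\mathrm{Ber}(1-\beta)\,\Vert\,\mathrm{Ber}(\beta)\bigr),
\end{equation*}
because the query-selection kernel at each step is a shared functional of past observations and contributes zero divergence, while the label divergence at queries on $x_i\notin\{t,s\}$ is zero. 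The per-sample bound $\mathrm{KL}(\mathrm{Ber}(1-\beta)\Vert\mathrm{Ber}(\beta)) = (1-2\beta)\log\tfrac{1-\beta}{\beta} \leq (1-2\beta)^2/(\beta(1-\beta))$, combined with the Bretagnolle--Huber two-point inequality (which forces $\mathrm{KL}(\P_t\Vert\P_s)\geq \log(1/(4\conf))$ whenever the algorithm correctly identifies both $t$ and $s$ with probability at least $1-\conf$), gives a pairwise lower bound on $\E_{\P_t}[N_t]+\E_{\P_t}[N_s]$. Averaging over $s\neq t$, using $\sum_{s\neq t}\E_{\P_t}[N_s]\leq n$ to select $s^\star$ with $\E_{\P_t}[N_{s^\star}]\leq n/(k-1)$, and combining with $\E_{\P_t}[N_t]\leq n$, then produces $n\gtrsim \beta k\log(1/(4\conf))/(1-2\beta)^2$; careful tracking of the constants recovers the stated factor $1/3$.

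The main obstacle is the adaptive change-of-measure identity above. Because $i_{j+1}$ may depend (randomly) on $Y_{i_1},\ldots,Y_{i_j}$, the joint law $\P_t$ is not a product measure, so ``total KL $=$ $n\cdot$per-sample KL'' is not literally true. The resolution is to expand $\mathrm{KL}(\P_t\Vert\P_s)$ step-by-step along $\sigma(i_1,Y_{i_1},\ldots,i_j,Y_{i_j})$: the conditional law of $i_{j+1}$ given the past is a fixed functional of past observations and is therefore identical under $\P_t$ and $\P_s$, so it drops out, leaving only the label conditionals, which differ only when the queried index is $t$ or $s$. This manipulation, standard in the bandit-identification literature (Mannor--Tsitsiklis, Kaufmann--Capp\'e--Garivier), is the only delicate step; the remaining ingredients — pairwise averaging and the two-point Bretagnolle--Huber bound — are routine bookkeeping.
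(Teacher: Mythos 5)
The paper does not prove this lemma itself; it cites Raginsky and Rakhlin (2011, Theorem~2) with a footnote about specializing their Lemma~4 (their packing set $\mathcal{M}_{k,1}$) to the case $d=1$. Your reduction of the problem to noisy identification of the single $\mathrm{Ber}(1-\bound)$ source, the adaptive change-of-measure decomposition $\mathrm{KL}(\P_t\,\|\,\P_s) = (\E_{\P_t}[N_t]+\E_{\P_t}[N_s])\,\mathrm{KL}(\mathrm{Ber}(1-\bound)\,\|\,\mathrm{Ber}(\bound))$, and the Bretagnolle--Huber step are all sound. The error is in the concluding ``averaging'' paragraph, and it is fatal to the stated constant.

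From the constraint $\E_{\P_t}[N_t] + \E_{\P_t}[N_{s^\star}] \geq C$ (with $C \asymp \bound\ln(1/(4\conf))/(1-2\bound)^2$), together with $\E_{\P_t}[N_t]\leq n$ and your choice of $s^\star$ guaranteeing $\E_{\P_t}[N_{s^\star}] \leq n/(k-1)$, one obtains only
\begin{equation*}
n + \frac{n}{k-1} \;\geq\; C \quad\Longrightarrow\quad n \;\geq\; \frac{k-1}{k}\,C,
\end{equation*}
which \emph{loses} (rather than gains) a factor: it yields $n \gtrsim \bound\ln(1/(4\conf))/(1-2\bound)^2$ with no $k$-dependence at all. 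Picking the \emph{least}-sampled $s^\star$ only weakens the constraint; and summing the constraint over all $s\neq t$ fares no better, since $\sum_{s\neq t}\E_{\P_t}[N_s]\leq n$ caps the right-hand side. Intuitively, the pairwise constraints alone cannot force a $k$ factor: an algorithm that under $\P_t$ concentrates its budget on arm $t$ trivially satisfies every constraint $\E_{\P_t}[N_t]+\E_{\P_t}[N_s]\geq C$ with $n\approx C$. Your final sentence claiming the derivation ``produces $n\gtrsim \bound k\ln(1/(4\conf))/(1-2\bound)^2$'' does not follow from the preceding steps.

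The factor of $k$ requires a genuinely different ingredient, which is the content of Raginsky--Rakhlin's Lemma~4 (and is also how Mannor--Tsitsiklis-type bounds are proved): compare each $\P_t$ to a \emph{common reference} (e.g.\ the all-$\mathrm{Ber}(\bound)$ ``null'' instance $\P_0$, which lives outside $\RR'(k,\zeta,\bound)$ but is still a legal change-of-measure pivot), note that $\mathrm{KL}(\P_0\,\|\,\P_t)$ charges \emph{only} $\E_{\P_0}[N_t]$, and use the fact that the $k$ decision events $A_t=\{\hat h\!\restriction_{\{x_1,\ldots,x_k\}}=h_t\}$ are pairwise disjoint so that $\sum_t \P_0(A_t)\leq 1$. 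Applying Bretagnolle--Huber to each pair $(\P_0,\P_t)$, summing over $t$, and using Jensen's inequality on $t\mapsto e^{-\E_{\P_0}[N_t]\cdot\mathrm{KL}}$ together with $\sum_t\E_{\P_0}[N_t]\leq n$ is what forces $n/k$ to be as large as the per-arm sample complexity, and hence forces the $k$ factor. Without this pivot-plus-disjointness structure, pairwise swap comparisons cannot recover the bound.
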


This has the following immediate implication for general $\X$ and $\C$.
Fix any $\zeta \in (0,1]$ and $\bound \in [0,1/2)$, let $k \in \nats \cup \{0\}$ satisfy $k \leq \min\left\{\s-1,\lfloor 1/\zeta \rfloor \right\}$,
and let $x_{1},\ldots,x_{k+1}$ and $h_{0},h_{1},\ldots,h_{k}$ be as in Definition~\ref{def:star}.
Let $\Px_{k,\zeta}$ be as above (for this choice of $x_{1},\ldots,x_{k+1}$),
and for each $t \in \{1,\ldots,k\}$, let $P_{k,\zeta,t}$ denote the probability 
measure over $\X \times \Y$ having marginal distribution $\Px_{k,\zeta}$ over $\X$,
such that if $(X,Y) \sim P_{k,\zeta,t}$,
then every $i \in \{1,\ldots,k\}$ has $\P(Y=h_{t}(X)|X=x_{i})=1-\bound$, and furthermore $\P(Y=h_{t}(X)|X=x_{k+1})=1$.
Define $\RR(k,\zeta,\bound) = \left\{P_{k,\zeta,t} : t \in \{1,\ldots,k\}\right\}$.
We have the following result.

\begin{lemma}
\label{lem:rr11-star}
For $k,\zeta,\bound$ as above, for any $\conf \in (0,1/4)$,
\begin{equation*}
\LC_{\RR(k,\zeta,\bound)}((\zeta/2)(1-2\bound),\conf)
\geq \frac{\bound (k-1) \ln\left(\frac{1}{4\conf}\right)}{3(1-2\bound)^{2}}.
\end{equation*}
\end{lemma}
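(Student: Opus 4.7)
The plan is to reduce the lower bound for $\RR(k,\zeta,\bound)$ to the one already in Lemma~\ref{lem:rr11} for $\RR^{\prime}(k,\zeta,\bound)$, applied with the ambient hypothesis class taken to be $\C_{k}$ itself. First observe that if $k \leq 1$ the claimed bound $\frac{\bound(k-1)\ln(1/(4\conf))}{3(1-2\bound)^2}$ is nonpositive, so the conclusion is vacuous; assume $k \geq 2$.

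The reduction rests on a deterministic per-point sign flip. Write $\sigma_j = h_0(x_j)$ for $j \in \{1,\ldots,k+1\}$. Since $\{x_1,\ldots,x_{k+1}\}$ is contained in a star set witnessed by $\{h_0,h_1,\ldots,h_{\s}\}$, the defining property $\DIS(\{h_0,h_i\}) \cap \{x_1,\ldots,x_{\s}\} = \{x_i\}$ (used for $i \leq k$, with $k+1 \leq \s$) forces $h_t(x_j) = \sigma_j$ for every $j \neq t$ (including $j=k+1$) and $h_t(x_t) = -\sigma_t$. A one-line check then gives $-\sigma_j h_t(x_j) = 2\ind_{\{x_t\}}(x_j)-1 = g_t(x_j)$ for all $j \leq k+1$, where $g_t \in \C_k$ is the singleton target $2\ind_{\{x_t\}}-1$. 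Consequently, if $(X,Y)\sim P_{k,\zeta,t}$ and $X=x_j$, then $(X,\,-\sigma_j Y) \sim P_{k,\zeta,t}^{\prime}$: at the star points the noise level $\bound$ is preserved, and at $x_{k+1}$ the label becomes deterministically $-1$, matching the $\RR^{\prime}$ construction.

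Given any active learner $\alg$ for $\C$ on $\RR(k,\zeta,\bound)$ that produces $\hat{h}_n$ with excess error at most $(\zeta/2)(1-2\bound)$ with probability $\geq 1-\conf$ using $n$ label queries, I would build an active learner $\alg^{\prime}$ for $\C_k$ on $\RR^{\prime}(k,\zeta,\bound)$ as follows: feed the shared unlabeled stream to an internal simulation of $\alg$, and whenever $\alg$ queries the label of $X_i = x_j$, request $Y_i$ from the $\RR^{\prime}$ oracle and pass $-\sigma_j Y_i$ back to the simulation; upon $\alg$'s termination, return the classifier $\hat{g}_n$ given by $\hat{g}_n(x_j) = -\sigma_j \hat{h}_n(x_j)$ on $\{x_1,\ldots,x_{k+1}\}$ and $\hat{g}_n \equiv -1$ elsewhere. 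By the equivalence above, the joint distribution of $\alg^{\prime}$'s behavior under $P_{k,\zeta,t}^{\prime}$ coincides with that of $\alg$ under $P_{k,\zeta,t}$, and the $n$ query budget is preserved. Since $h_t$ (respectively $g_t$) is Bayes optimal and hence attains $\inf_{h \in \C} \er_{P_{k,\zeta,t}}(h)$ (respectively $\inf_{g \in \C_k} \er_{P_{k,\zeta,t}^{\prime}}(g)$), and since $\hat{g}_n(x_j) \neq g_t(x_j)$ if and only if $\hat{h}_n(x_j) \neq h_t(x_j)$ for each $j \leq k+1$, the two excess errors agree pointwise as random variables. Hence $\LC_{\RR(k,\zeta,\bound)}((\zeta/2)(1-2\bound),\conf)$ (with respect to $\C$) lower-bounds the same quantity for $\RR^{\prime}(k,\zeta,\bound)$ with respect to $\C_k$; applying Lemma~\ref{lem:rr11} with ``$\C$''$=\C_k$ (for which $\C_k \subseteq \C_k$ trivially) yields $\frac{\bound k \ln(1/(4\conf))}{3(1-2\bound)^2} \geq \frac{\bound(k-1)\ln(1/(4\conf))}{3(1-2\bound)^2}$. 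The one step requiring genuine care is the excess-error identity: verifying that both Bayes optima actually lie in their respective hypothesis classes and that the reduction preserves the event $\{\hat{h}_n(x_j) \neq h_t(x_j)\}$ at every $j$, in particular at $x_{k+1}$ where the behaviors are deterministic. Everything else is essentially notation-chasing.
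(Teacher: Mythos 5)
Your proposal is correct and takes essentially the same approach as the paper: it constructs the same label sign-flip reduction (pass $-h_{0}(X_{i})Y_{i}$ to a simulated $\alg$ and invert the output via $\hat{g}_{n}(x) = -h_{0}(x)\hat{h}_{n}(x)$), verifies that $(X,-h_{0}(X)Y)$ transports $P_{k,\zeta,t}^{\prime}$ to $P_{k,\zeta,t}$, and then invokes Lemma~\ref{lem:rr11} with ambient class $\C_{k}$. One small wording slip: you say $\LC_{\RR(k,\zeta,\bound)}$ ``lower-bounds'' the corresponding quantity for $\RR^{\prime}$ --- the reduction actually shows the reverse, that $\LC_{\RR^{\prime}}$ (w.r.t.\ $\C_{k}$) lower-bounds $\LC_{\RR(k,\zeta,\bound)}$, which is what the surrounding chain of inequalities correctly uses.
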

\begin{proof}
First note that if $k \leq 1$, then the lemma trivially holds (since $\LC_{\RR(k,\zeta,\bound)}(\cdot,\cdot) \geq 0$).  
For this same reason, the result also trivially holds if $\bound = 0$.
Otherwise, suppose $k \geq 2$ and $\bound > 0$, and 
fix any $n$ less than the right hand side of the above inequality.
Let $\alg$ be any active learning algorithm, and consider the following modification $\alg^{\prime}$ of $\alg$.
For any given sequence $X_{1},X_{2},\ldots$ of unlabeled data, $\alg^{\prime}(n)$ simulates the execution of $\alg(n)$,
except that when $\alg(n)$ would request the label $Y_{i}$ of a point $X_{i}$ in the sequence, 
$\alg^{\prime}(n)$ requests the label $Y_{i}$,
but proceeds as $\alg(n)$ would if the label value had been 
$- Y_{i} h_{0}(X_{i})$ instead of $Y_{i}$.
When the simulation of $\alg(n)$ concludes, if $\hat{h}$ is its return value, 
$\alg^{\prime}(n)$ instead returns the function $x \mapsto \hat{h}^{\prime}(x) = -\hat{h}(x) h_{0}(x)$.

Now fix a $P_{k,\zeta,t}^{\prime} \in \RR^{\prime}(k,\zeta,\bound)$ minimizing the probability that
$\er_{P_{k,\zeta,t}^{\prime}}(\hat{h}^{\prime}) - \inf_{h \in \C_{k}} \er_{P_{k,\zeta,t}^{\prime}}(h)$ $\leq (\zeta/2)(1-2\bound)$
when $\alg^{\prime}$ is run with $\PXY = P_{k,\zeta,t}^{\prime}$,
and let $(X,Y) \sim P_{k,\zeta,t}^{\prime}$.
Note that the marginal distribution of $P_{k,\zeta,t}^{\prime}$ over $\X$ is $\Px_{k,\zeta}$,
that for any $i \in \{1,\ldots,k\}$, 
$\P(-Y h_{0}(X) = h_{t}(X) | X=x_{i}) = \P( Y = 2\ind_{\{x_{t}\}}(X)-1 | X = x_{i} ) = 1-\bound$,
and that $\P(-Y h_{0}(X) = h_{t}(X) | X = x_{k+1}) = \P(Y = -1 | X = x_{k+1}) = 1$.
In particular, this implies $(X,-Y h_{0}(X)) \sim P_{k,\zeta,t}$.
Therefore, running the active learning algorithm $\alg^{\prime}(n)$ with a sequence 
$(X_{1},Y_{1}),(X_{2},Y_{2}),\ldots$ of independent $P_{k,\zeta,t}^{\prime}$-distributed samples,
the algorithm behaves as $\alg(n)$ would under $P_{k,\zeta,t}$, except that its returned classifier
is $\hat{h}^{\prime}$ instead of $\hat{h}$.
Next, note that 
\begin{align*}
\er_{P_{k,\zeta,t}^{\prime}}(\hat{h}^{\prime}) 
& = \P( - \hat{h}(X) h_{0}(X) \neq Y ) 
\\ & = \E[ \P( \hat{h}(X) \neq -Y|X)\ind[h_{0}(X)=1] + \P(\hat{h}(X) \neq Y|X)\ind[h_{0}(X)=-1] ]
\\ & = \P( \hat{h}(X) \neq -Y h_{0}(X)) 
= \er_{P_{k,\zeta,t}}(\hat{h}),
\end{align*}
and furthermore 
\begin{equation*}
\inf_{h \in \C_{k}} \er_{P_{k,\zeta,t}^{\prime}}(h) = \er_{P_{k,\zeta,t}^{\prime}}(2 \ind_{\{x_{t}\}}-1) = \bound \zeta k = \er_{P_{k,\zeta,t}}(h_{t}) = \inf_{h \in \C} \er_{P_{k,\zeta,t}}(h).
\end{equation*}
Thus, if $\er_{P_{k,\zeta,t}}(\hat{h}) - \inf_{h \in \C} \er_{P_{k,\zeta,t}}(h) \leq (\zeta/2)(1-2\bound)$, 
then we must also have $\er_{P_{k,\zeta,t}^{\prime}}(\hat{h}^{\prime}) - \inf_{h \in \C_{k}} \er_{P_{k,\zeta,t}^{\prime}}(h) \leq (\zeta/2)(1-2\bound)$.
Since $n < \frac{\bound k\ln\left(\frac{1}{4\conf}\right)}{3(1-2\bound)^{2}}$, 
Lemma~\ref{lem:rr11} implies that (for this choice of $P_{k,\zeta,t}^{\prime}$)
$\alg^{\prime}(n)$ achieves the latter guarantee with probability strictly less than $1-\conf$, 
and therefore the corresponding $P_{k,\zeta,t} \in \RR(k,\zeta,\bound)$ is such that $\alg(n)$ has probability strictly less than $1-\conf$
of achieving $\er_{P_{k,\zeta,t}}(\hat{h}) - \inf_{h \in \C} \er_{P_{k,\zeta,t}}(h) \leq (\zeta/2)(1-2\bound)$.  Since this argument applies 
to any active learning algorithm $\alg$, the result follows.
\end{proof}

\subsection{Finite Approximation of VC Classes}
\label{sec:approximability}

For a given probability measure $\Px$ over $\X$, \citet*{adams:12} have proven that for any $\tau > 0$, if $\vc < \infty$, 
there exist disjoint measurable sets $A_{1},\ldots,A_{k}$ (for some $k \in \nats$) with $\bigcup_{i} A_{i} = \X$
such that, $\forall h \in \C$, $\Px\left( \bigcup \{A_{i} : \exists x,y \in A_{i} \text{ s.t. } h(x) \neq h(y)\} \right) < \tau$:
that is, every $h \in \C$ is constant on all of the sets $A_{i}$, except a few of them whose total probability is at most $\tau$.
This property has implications for bracketing behavior in VC classes, and was proven in the context of establishing uniform
laws of large numbers for VC classes under stationary ergodic processes (see also \citealp*{adams:10,van-handel:13}).

For our purposes, this result has the appealing feature that it allows one to effectively \emph{discretize} the space $\X$
by partitioning it into subsets, with the guarantee that with high probability over the random choice of a point $x$, 
any other point $y$ in the same cell in the partition as $x$ will have $\target_{\PXY}(x) = \target_{\PXY}(y)$,
for any $\PXY \in \bigcup_{\nu \in [0,1/2)} \BE(\nu)$.
However, before we can make use of this property, we must first address the fact that the construction of these sets $A_{i}$ 
by \citet*{adams:12} requires a strong dependence on $\Px$, to the extent that it is not obvious that this dependence can
be supplanted by a data-dependent construction.  However, it turns out that if we relax the requirement that the classifiers
be \emph{constant} in these cells, instead settling for being \emph{nearly-constant}, then it is straightforward to 
construct a partition $A_{1},\ldots,A_{k}$ satisfying the requirement.
Specifically, we have the following result.

\begin{lemma}
\label{lem:empirical-approx}
Fix any $\tau,\conf \in (0,1)$, 
and let $m_{\tau,\conf} = \left\lceil \frac{c}{\tau} \!\left( \vc \Log\!\left(\frac{1}{\tau}\right) + \Log\!\left(\frac{1}{\conf}\right) \right) \right\rceil$
(for $c$ as in Lemma~\ref{lem:vc-cover}).
For any probability measure $\Px$ over $\X$, 
if $X_{1}^{\prime},\ldots,X_{m_{\tau,\conf}}^{\prime}$ are independent $\Px$-distributed random variables,
then with probability at least $1-\conf$, 
letting $\C_{\tau,\conf} = \C[(X_{1}^{\prime},\ldots,X_{m_{\tau,\conf}}^{\prime})]$ (as defined in Section~\ref{sec:xtd}),
the collection of disjoint sets 
\begin{equation*}
J_{\tau,\conf} = \left\{ \bigcap_{g \in \C[(X_{1}^{\prime},\ldots,X_{m_{\tau,\conf}}^{\prime})]} \X_{g} : \forall g \in \C_{\tau,\conf}, \X_{g} \in \{ \{x : g(x) = +1\}, \{x : g(x) = -1\} \}  \right\} 
\end{equation*}
is a partition of $\X$ with the property that, 
$\forall h \in \C$,
\begin{equation*}
\sum_{A \in J_{\tau,\conf}} \min_{y \in \Y} \Px( x \in A : h(x) = y ) \leq \tau,
\end{equation*}
and $\forall \eps > 0$, $\forall h \in \C$,
\begin{equation*}
\Px\left( \bigcup\left\{ A \in J_{\tau,\conf} : \min_{y \in \Y} \Px(x \in A : h(x) = y) > \eps \Px(A) \right\} \right) \leq \frac{\tau}{\eps}.
\end{equation*}
\end{lemma}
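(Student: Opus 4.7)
The plan is to reduce the lemma to the $\tau$-cover property from Lemma~\ref{lem:vc-cover}. The size $m_{\tau,\conf}$ is chosen precisely so that applying Lemma~\ref{lem:vc-cover} with $\eps=\tau$ produces an event $E$ of probability at least $1-\conf$ on which $\C_{\tau,\conf}=\C[(X_1',\ldots,X_{m_{\tau,\conf}}')]$ is a $\tau$-cover of $\C$ under the $\Px(\DIS(\{\cdot,\cdot\}))$ pseudometric. I would argue on $E$ throughout.

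Next I would verify the structural facts about $J_{\tau,\conf}$: each $x\in\X$ lies in exactly one cell, namely the one indexed by the sign pattern $(g(x))_{g\in\C_{\tau,\conf}}$, so $J_{\tau,\conf}$ is indeed a partition of $\X$; and by construction each $g\in\C_{\tau,\conf}$ is constant on every $A\in J_{\tau,\conf}$. Fixing any $h\in\C$, I would use the cover property to pick $g\in\C_{\tau,\conf}$ with $\Px(x:h(x)\neq g(x))\leq\tau$, and for each $A\in J_{\tau,\conf}$ let $y_A\in\Y$ denote the constant value that $g$ takes on $A$. Then $-y_A$ is one of the two candidates in the ``$\min_{y}$'', so
\begin{equation*}
\min_{y\in\Y}\Px(x\in A:h(x)=y)\leq\Px(x\in A:h(x)\neq y_A)=\Px(x\in A:h(x)\neq g(x)).
\end{equation*}
Summing over the disjoint cells of $J_{\tau,\conf}$ yields
\begin{equation*}
\sum_{A\in J_{\tau,\conf}}\min_{y\in\Y}\Px(x\in A:h(x)=y)\leq\Px(x:h(x)\neq g(x))\leq\tau,
\end{equation*}
which is the first claim.

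For the second claim I would use a Markov-style averaging argument: setting $B=\bigcup\{A\in J_{\tau,\conf}:\min_y\Px(x\in A:h(x)=y)>\eps\Px(A)\}$ and summing the strict defining inequalities over the cells in $B$, the first claim gives $\eps\Px(B)<\sum_{A\subseteq B}\min_y\Px(x\in A:h(x)=y)\leq\tau$, whence $\Px(B)<\tau/\eps$. There is no genuine obstacle in the argument beyond invoking Lemma~\ref{lem:vc-cover}; the only subtlety worth flagging is that although the approximator $g$ depends on $h$, the partition $J_{\tau,\conf}$ does not, so the cell-by-cell comparison above is legitimate precisely because $g$ is piecewise constant on this \emph{common} partition, which is what makes the union-bound over $h\in\C$ unnecessary.
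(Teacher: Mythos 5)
Your proof is correct and follows essentially the same route as the paper's: invoke Lemma~\ref{lem:vc-cover} to get the $\tau$-cover event, exploit that each $g\in\C_{\tau,\conf}$ is constant on every cell of $J_{\tau,\conf}$ to bound the cell-wise minimum, and apply a Markov-type averaging argument for the second claim. The only cosmetic difference is that you fix a single approximator $g$ for each $h$ where the paper carries a $\min_g$ inside the sum and then exchanges $\min$ with $\sum$; these are equivalent.
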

\begin{proof}
By Lemma~\ref{lem:vc-cover}, with probability at least $1-\conf$, 
$\C_{\tau,\conf}$ is a $\tau$-cover of $\C$.
Furthermore, note that for every $g \in \C_{\tau,\conf}$ and every $A \in J_{\tau,\conf}$, 
either every $x \in A$ has $g(x) = +1$ or every $x \in A$ has $g(x) = -1$ (i.e., $g$ is constant on $A$).
Therefore, $\forall h \in \C$,
\begin{multline*}
\sum_{A \in J_{\tau,\conf}} \min_{y \in \Y} \Px( x \in A : h(x) = y ) 
\leq \sum_{A \in J_{\tau,\conf}} \min_{g \in \C_{\tau,\conf}} \Px( x \in A : h(x) \neq g(x) ) 
\\ \leq \min_{g \in \C_{\tau,\conf}} \sum_{A \in J_{\tau,\conf}} \Px( x \in A : h(x) \neq g(x) ) 
= \min_{g \in \C_{\tau,\conf}} \Px( x : h(x) \neq g(x) )
\leq \tau.
\end{multline*}
The final claim follows by Markov's inequality, since on the above event, $\forall \eps > 0$, $\forall h \in \C$, 
\begin{align*}
& \Px\left( \bigcup\left\{ A \in J_{\tau,\conf} : \min_{y \in \Y} \Px(x \in A : h(x) = y) > \eps \Px(A) \right\} \right)
\\ & = \Px\left( \bigcup\left\{ A \in J_{\tau,\conf} : \Px(A) > 0, \min_{y \in \Y} \Px(x \in A : h(x) = y) > \eps \Px(A) \right\} \right)
\\ & = \Px\left( \bigcup\left\{ A \in J_{\tau,\conf} : \Px(A) > 0, \min_{y \in \Y} \frac{\Px(x \in A : h(x) = y)}{\Px(A)} > \eps\right\} \right)
\\ & \leq \frac{1}{\eps} \sum_{A \in J_{\tau,\conf}} \Px(A) \min_{y \in \Y} \frac{\Px(x \in A : h(x) = y)}{\Px(A)}
= \frac{1}{\eps} \sum_{A \in J_{\tau,\conf}} \min_{y \in \Y} \Px(x \in A : h(x) = y)
\leq \frac{\tau}{\eps}.
\end{align*}
\end{proof}

\section{Proofs for Results in Section~\ref{sec:main}}
\label{app:main-proofs}

This section provides proofs of the main results of this article.

\subsection{The Realizable Case}
\label{app:realizable}

We begin with the particularly-simple case of Theorem~\ref{thm:realizable}.

\begin{proof}[of Theorem~\ref{thm:realizable}]
The lower bounds proportional to $\vc$ and $\Log\left(\min\left\{\frac{1}{\eps},|\C|\right\}\right)$ 
are due to \citet*{kulkarni:93} (lower bound in terms of the covering numbers)
in conjunction with \citet*{kulkarni:89,kulkarni:93} (lower bounds on the worst-case covering numbers).
Specifically, \citet*{kulkarni:93} study the problem of learning from arbitrary binary-valued queries.
Since active learning receives binary responses in the binary classification setting, it is a special 
case of this type of algorithm.  In particular, 
for any probability measure $\Px$ over $\X$, and $\eps \in (0,1)$, 
let $\covering(\eps,\C,\Px)$ denote the minimum cardinality $|\H|$ over all $\eps$-covers $\H$ of $\C$ (under the $\Px(\DIS(\{\cdot,\cdot\}))$ pseudometric),
or else $\covering(\eps,\C,\Px) = \infty$ if no finite $\eps$-cover of $\C$ exists.
Then the lower bound of \citet*[][Theorem 3]{kulkarni:93}
implies that, $\forall \eps,\conf \in (0,1/2)$,
\begin{equation}
\label{eqn:kulkarni-lower}
\LC_{\RE}(\eps,\conf) \geq \sup_{\Px} \left\lceil \log_{2}\left( (1-\conf) \covering(2\eps,\C,\Px) \right) \right\rceil.
\end{equation}
Furthermore, the construction in the proof of \citet*[][Lemma 2]{kulkarni:93} implies that 
$\sup_{\Px} \covering(2\eps,\C,\Px) \geq \min\left\{ \left\lfloor \frac{1}{4\eps} \right\rfloor, |\C| \right\}$,
so that combined with \eqref{eqn:kulkarni-lower}, we have 
\begin{equation*}
\LC_{\RE}(\eps,\conf) \geq \left\lceil \log_{2}\left( (1-\conf) \min\left\{ \left\lfloor \frac{1}{4\eps} \right\rfloor, |\C| \right\} \right) \right\rceil.
\end{equation*}
For $\conf \in (0,1/3)$ and $\eps \in (0,1/8)$, and since $|\C| \geq 3$ (by assumption, intended to focus on nontrivial cases to simplify the expressions),
the right hand side is at least
$\frac{1}{4} \Log\left( \min\left\{ \frac{1}{\eps}, |\C| \right\} \right)$.
Furthermore, if $\vc < 162$, this already implies that for any $\eps \in (0,1/3)$ and $\conf \in (0,1/3)$, 
$\LC_{\RE}(\eps,\conf) \geq \frac{1}{4} \ln(3) \geq \frac{\vc}{648}$.
Otherwise, in the case that $\vc \geq 162$,
\citet*[][Proposition 3]{kulkarni:89} proves that, if $\eps \in (0,1/9)$, 
$\sup_{\Px} \covering(2\eps,\C,\Px) \geq \exp\left\{ 2 \left(\frac{1}{2}-4\eps\right)^{2} \vc \right\} \geq \exp\left\{ \vc / 162 \right\}$.
Combined with \eqref{eqn:kulkarni-lower}, this implies that for $\eps \in (0,1/9)$ and $\conf \in (0,1/3)$,
if $\vc \geq 162$, then
\begin{equation*}
\LC_{\RE}(\eps,\conf) 
\geq \left\lceil \log_{2}\left( \frac{2}{3} e^{\vc/162} \right) \right\rceil 
\geq \frac{\vc}{162} \log_{2}(e) - \log_{2}\left( \frac{3}{2} \right)
\geq \frac{\vc}{162} \log_{2}\left( \frac{2e}{3} \right)
\geq \frac{\vc}{189}.
\end{equation*}
Thus, regardless of the value of $\vc$, we have $\LC_{\RE}(\eps,\conf) \geq \frac{\vc}{648}$.

For the final part of the proof of the lower bound, a lower bound proportional to $\s \land \frac{1}{\eps}$ may be credited to \citet*{dasgupta:05,dasgupta:04}.
It can be proven as follows.  Let $x_{1},\ldots,x_{\s}$ and $h_{0},h_{1},\ldots,h_{\s}$ be as in Definition~\ref{def:star},
let $t = \s \land \left\lceil \frac{1-\eps}{\eps} \right\rceil$,
and let us restrict the discussion to those $t+1$ distributions $\PXY \in \RE$
such that the marginal distribution $\Px$ of $\PXY$ over $\X$ is uniform on $\{x_{1}, \ldots, x_{t}\}$,
and $\target_{\PXY} \in \{h_{0},h_{1},\ldots,h_{t}\}$.
Then for any active learning algorithm $\alg$, for any $n \leq t/2$, 
let $Q_{i}$ denote the (possibly random) set of (at most $n$) points $X_{i}$ that $\alg(n)$ requests the labels of, 
given that $\target_{\PXY} = h_{i}$ (for $i \in \{0,\ldots,t\}$), and let $\hat{h}_{i}$ denote the classifier returned by $\alg(n)$ in this case.
Since the marginal distribution of $\PXY$ over $\X$ is fixed to $\Px$ for all $t+1$ of these $\PXY$ distributions,
we may consider the sequence $X_{1},X_{2},\ldots$ of i.i.d. $\Px$-distributed random variables to be identical over these 
$t+1$ possible choices of $\PXY$, without affecting the distributions of $Q_{i}$ and $\hat{h}_{i}$ \citep*[see][]{kallenberg:02}.
Thus, we may note that $\hat{h}_{i} = \hat{h}_{0}$ whenever $x_{i} \notin Q_{0}$,
since $x_{i} \notin Q_{0}$ implies that all of the labels observed by the algorithm are identical 
to those that would be observed if $\target_{\PXY} = h_{0}$ instead of $\target_{\PXY} = h_{i}$.
Now, if it holds that $\P\left( \Px\left(x : \hat{h}_{0}(x) \neq h_{0}(x)\right) > \eps \right) \leq \conf$, 
then since every $x_{i}$ with $i \leq t$ has $\Px(\{x_{i}\}) > \eps$, we have that
$\P\left( \forall i \in \{1,\ldots,t\}, \hat{h}_{0}(x_{i}) = h_{0}(x_{i}) \right) \geq 1 - \conf$.
But if this holds, then it must also be true that
\begin{align*}
& \max_{i \in \{1,\ldots,t\}} \P\left( \Px(x : \hat{h}_{i}(x) \neq h_{i}(x)) > \eps \right)
\geq \frac{1}{t} \sum_{i = 1}^{t} \P\left( \Px(x : \hat{h}_{i}(x) \neq h_{i}(x)) > \eps \right)
\\ & \geq \frac{1}{t} \sum_{i = 1}^{t} \P\left( \hat{h}_{i}(x_{i}) = h_{0}(x_{i}) \right)
= \frac{1}{t} \E\left[ \sum_{i=1}^{t} \ind\left[ \hat{h}_{i}(x_{i}) = h_{0}(x_{i}) \right] \right]
\\ & \geq \frac{1}{t} \E\left[ \sum_{i=1}^{t} \ind\left[ x_{i} \notin Q_{0} \right] \ind\left[ \hat{h}_{i}(x_{i}) = h_{0}(x_{i}) \right] \right]
= \frac{1}{t} \E\left[ \sum_{i=1}^{t} \ind\left[ x_{i} \notin Q_{0} \right] \ind\left[ \hat{h}_{0}(x_{i}) = h_{0}(x_{i}) \right] \right]
\\ & \geq \frac{1}{t} \E\left[  \ind\left[\forall i \in \{1,\ldots,t\}, \hat{h}_{0}(x_{i}) = h_{0}(x_{i}) \right]  \sum_{i=1}^{t} \ind\left[ x_{i} \notin Q_{0} \right] \right]
\\ & \geq \frac{1}{t} \E\left[  \ind\left[\forall i \in \{1,\ldots,t\}, \hat{h}_{0}(x_{i}) = h_{0}(x_{i}) \right] (t-n) \right]
\\ & = \frac{t-n}{t} \P\left(  \forall i \in \{1,\ldots,t\}, \hat{h}_{0}(x_{i}) = h_{0}(x_{i}) \right)
\geq \frac{t-n}{t} (1-\conf) 
\geq \frac{1-\conf}{2} 
\geq \frac{1}{3}
> \conf.
\end{align*}
Thus, when $n \leq t/2$, at least one of these $t+1$ distributions $\PXY$ (all of which are in $\RE$) 
has $\P\left( \er_{\PXY}(\alg(n)) > \eps \right) > \conf$.
Since this argument holds for any $\alg$, we have that 
$\LC_{\RE}(\eps,\conf) > t/2 = \frac{1}{2} \min\left\{ \s, \left\lceil \frac{1-\eps}{\eps} \right\rceil \right\} \geq \frac{4}{9} \min\left\{ \s, \frac{1}{\eps} \right\}$.
Combined with the lower bounds proportional $\vc$ and $\Log\left( \min\left\{ \frac{1}{\eps}, |\C| \right\} \right)$ established above,
this completes the proof of the lower bound in Theorem~\ref{thm:realizable}.

The proof of the upper bound is in three parts.  The first part, establishing the $\frac{\vc}{\eps}\Log\left(\frac{1}{\eps}\right)$
upper bound, is a straightforward application of Lemma~\ref{lem:eps-net}.  The second part, 
establishing the $\frac{\s\vc}{\Log(\s)}\Log\left(\frac{1}{\eps}\right)$ upper bound, is directly
based on techniques of \citet*{hanneke:07a,hegedus:95}.  Finally, and most involved, is the third part,
establishing the $\s \Log\left(\frac{1}{\eps}\right)$ upper bound.  This part is partly based on a recent technique of \citet*{hanneke:14a}
for analyzing disagreement-based active learning (which refines an earlier technique of \citealp*{el-yaniv:10,el-yaniv:12}).
Here, we modify this technique by using an $\eps$-net in place of random samples,
thereby refining logarithmic factors, and entirely eliminating the dependence on $\conf$ in the label complexity.

Fix any $\eps,\conf \in (0,1)$.  
We begin with the $\frac{\vc}{\eps}\Log\left(\frac{1}{\eps}\right)$ upper bound.
Let $m = \left\lceil \frac{c^{\prime} \vc}{\eps}\Log\left(\frac{1}{\eps}\right) \right\rceil$
and $\ell = \left\lceil \frac{c^{\prime}}{\eps}\left( \vc \Log\left(\frac{1}{\eps}\right) + \Log\left(\frac{1}{\conf}\right) \right) \right\rceil$,
for $c^{\prime}$ as in Lemma~\ref{lem:eps-net}.
Define
\begin{equation*}
\hat{i} = \argmin_{i \in \{1,\ldots,\lceil \log_{2}(2/\conf)\rceil \}} \max_{\substack{h,g \in \C : \\ \sum_{j=m(i-1)+1}^{m i} \ind_{\DIS(\{h,g\})}(X_{j}) = 0}} \sum_{j=m\lceil \log_{2}(2/\conf)\rceil+1}^{m\lceil \log_{2}(2/\conf)\rceil+\ell} \ind_{\DIS(\{h,g\})}(X_{j}).
\end{equation*}
Consider an active learning algorithm which, given a budget $n \in \nats$, 
requests the labels $Y_{t}$ for $t \in \left\{ m \left(\hat{i}-1\right)+1, \ldots, m \left(\hat{i}-1\right)+\min\left\{m,n\right\}\right\}$,
and returns any classifier $\hat{h}_{n} \in \C$ with $\sum_{t= m \left(\hat{i}-1\right)+1}^{m\left(\hat{i}-1\right)+\min\left\{m, n\right\}} \ind\left[ \hat{h}_{n}(X_{t}) \neq Y_{t} \right] = 0$
if such a classifier exists (and otherwise returns an arbitrary classifier).
Note that, for $\PXY \in \RE$, $\sum_{t= m \left(\hat{i}-1\right)+1}^{m\left(\hat{i}-1\right)+\min\left\{m, n\right\}} \ind\left[ \target_{\PXY}(X_{t}) \neq Y_{t} \right] = 0$ with probability one,
and since $\target_{\PXY} \in \C$, the classifier $\hat{h}_{n}$ will have $\sum_{t= m \left(\hat{i}-1\right)+1}^{m\left(\hat{i}-1\right)+\min\left\{m, n\right\}} \ind\left[ \hat{h}_{n}(X_{t}) \neq Y_{t} \right] = 0$
with probability one.  Furthermore, this implies $\sum_{t= m \left(\hat{i}-1\right)+1}^{m\left(\hat{i}-1\right)+\min\left\{m, n\right\}} \ind\left[ \hat{h}_{n}(X_{t}) \neq \target_{\PXY}(X_{t}) \right] = 0$ with probability one.
Additionally, Lemma~\ref{lem:eps-net} implies that, with probability at least $1-\conf$, 
the set $\left\{ X_{t} : t \in \left\{ m \left(\hat{i}-1\right)+1, \ldots, m\hat{i} \right\} \right\}$ 
is an $\eps$-net of $\Px$ for $\{ \DIS(\{h,g\}) : h,g \in \C \}$.
Since both $\hat{h}_{n}, \target_{\PXY} \in \C$, this implies that if $n \geq m$, then with probability at least $1-\conf$,
$\Px\left( \DIS\left( \left\{ \hat{h}_{n}, \target_{\PXY} \right\} \right) \right) \leq \eps$.
Since $\PXY \in \RE$, $\er_{\PXY}\left( \hat{h}_{n} \right) = \Px\left( \DIS\left( \left\{ \hat{h}_{n}, \target_{\PXY} \right\} \right) \right)$.
Thus, if $n \geq m$, then with probability at least $1-\conf$, $\er_{\PXY}\left( \hat{h}_{n} \right) \leq \eps$.
Since this holds for any $\PXY \in \RE$, we have established that $\LC_{\RE}(\eps,\conf) \leq m \leq \frac{2 c^{\prime} \vc}{\eps} \Log\left(\frac{1}{\eps}\right)$.
This also completes the proof of the entire upper bound in Theorem~\ref{thm:realizable} in the case $\s=\infty$;
for this reason, for the remainder of the proof below, we restrict our attention to the case $\s < \infty$.

Next, we turn to proving the $\frac{\s\vc}{\Log(\s)}\Log\left(\frac{1}{\eps}\right)$ upper bound,
based on a technique of \citet*{hanneke:07a,hegedus:95} (see also \citealp*{hellerstein:96} for related ideas),
except using an $\eps$-net in place of the random samples used by \citet*{hanneke:07a}.
Let $m$ and $\hat{i}$ be defined as above, and denote $\U = \left\{ X_{t} : t \in \left\{ m \left(\hat{i}-1\right)+1,\ldots,m\hat{i} \right\} \right\}$.
The technique is based on using a general algorithm for \emph{Exact} learning with membership queries, 
treating $\U$ as the instance space, and $\C[\U]$ as the concept space (where $\C[\U]$ is as defined in Section~\ref{sec:xtd}).
Specifically, for any finite set $V \subseteq \C$ and any $x \in \X$, let $h_{{\rm maj}(V)}(x) = \argmax_{y \in \Y} |\{ h \in V : h(x) = y \}|$
(breaking ties arbitrarily);
$h_{{\rm maj}(V)}$ is called the \emph{majority vote classifier}.
In this context, the following algorithm is due to \citet*{hegedus:95} (see Section~\ref{sec:xtd} for the definition of ``specifying set'').

\begin{bigboxit}
\MembHalvingII\\
Input: label budget $n$\\
Output: classifier $\hat{h}_{n}$\\
{\vskip -2mm}\line(1,0){420}\\
0. $V \gets \C[\U]$, $t \gets 0$ \\
1. While $|V| \geq 2$ and $t < n$\\
2. \quad $\hat{h} \gets h_{{\rm maj}(V)}$\\
3. \quad Let $k = \TD(\hat{h},\C[\U],\U)$\\
4. \quad Let $\{X_{j_{1}},\ldots,X_{j_{k}}\} \in \U^{k}$ be a minimal specifying set for $\hat{h}$ on $\U$ with respect to $\C[\U]$\\
5. \quad Repeat\\
6. \qquad Let $\hat{j} = \argmin\limits_{j \in \{ j_{1},\ldots,j_{k} \}} | \{ g \in V : g(X_{j}) = \hat{h}(X_{j}) \}|$\\
7. \qquad Request the label $Y_{\hat{j}}$, let $t \gets t+1$\\
8. \qquad $V \gets \{ h \in V : h(X_{\hat{j}}) = Y_{\hat{j}} \}$\\ 
9. \quad Until $\hat{h}(X_{\hat{j}}) \neq Y_{\hat{j}}$ or $|V| \leq 1$ or $t=n$\\
10. Return any $\hat{h}_{n}$ in $V$ (or $\hat{h}_{n}$ arbitrary if $V = \emptyset$)
\end{bigboxit}

Fix any $\PXY \in \RE$, and note that we have $\target_{\PXY} \in \C$, so that $\exists h^{*} \in \C[\U]$ with $h^{*}(x) = \target_{\PXY}(x)$, $\forall x \in \U$.
Since $Y_{j} = \target_{\PXY}(X_{j})$ for every $j$ with probability one in this case, we have that with probability one
the set $V$ will be nonempty in Step 10, so that $\hat{h}_{n}$ is chosen from $V$; in particular, 
we have $h^{*}(X_{j}) = Y_{j}$ for every $X_{j} \in \U$, and hence $h^{*} \in V$ in Step 10.
Furthermore, when this is the case, \citet*{hegedus:95} proves that, letting $\XTD(\C[\U],\U) = \max\limits_{h : \X \to \Y} \TD(h,\C[\U],\U)$ (see Section~\ref{sec:xtd}),
if 
\begin{equation*}
n \geq 2 \frac{\XTD(\C[\U],\U)}{1 \lor \log_{2}( \XTD(\C[\U],\U) )} \log_{2}( |\C[\U]| ),
\end{equation*}
then the classifier $\hat{h}_{n}$ returned by \MembHalvingII~satisfies
$\hat{h}_{n} = h^{*}$, so that $\hat{h}_{n}(x) = \target_{\PXY}(x)$ for every $x \in \U$.\footnote{The two cases not
covered by the theorem of \citet*{hegedus:95} are the case $|\C[\U]| = 1$, for which 
the algorithm returns the sole element of $\C[\U]$ (which must agree with $\target_{\PXY}$ on $\U$) 
without requesting any labels, and the case $|\C[\U]| = 2$, for which one can easily verify that $\XTD(\C[\U],\U)=1$
and that the algorithm returns a classifier with the claimed property after requesting exactly one label.}
Since $\XTD(\C[\U],\U) \leq \XTD(\C,m)$, and Theorem~\ref{thm:xtd} implies $\XTD(\C,m) = \s \land m \leq \s$, 
and since $\Log(\XTD(\C[\U],\U)) \leq 1 \lor \log_{2}(\XTD(\C[\U],\U))$
and $x \mapsto \frac{x}{\Log(x)}$ is nondecreasing on $\nats\cup\{0\}$,
and the VC-Sauer Lemma \citep*{vapnik:71,sauer:72} implies $|\C[\U]| \leq \left(\frac{e m}{\vc}\right)^{\vc}$,
we have that for any $n \geq 2 \frac{\s \vc}{\Log( \s )} \log_{2}\left( \frac{e m}{\vc} \right)$,
if $\forall j, \target_{\PXY}(X_{j}) = Y_{j}$, then 
$\hat{h}_{n}(x) = \target_{\PXY}(x)$ for every $x \in \U$.
Thus,
for
$n \geq 2 \frac{\s \vc}{\Log( \s )} \log_{2}\left( \frac{e m}{\vc} \right)$,
with probability one
the classifier $\hat{h}_{n}$ returned by \MembHalvingII~has
$\hat{h}_{n}(x) = \target_{\PXY}(x)$ for every $x \in \U$.  
Furthermore, as proven above, with probability at least $1-\conf$, 
$\U$ is an $\eps$-net of $\Px$ for $\{\DIS(\{h,g\}) : h,g \in \C\}$.  Thus, since $\target_{\PXY},\hat{h}_{n} \in \C$,
by a union bound we have that for any $n \geq 2 \frac{\s \vc}{\Log( \s )} \log_{2}\left( \frac{e m}{\vc} \right)$, 
with probability at least $1-\conf$, $\Px( \DIS(\{ \target_{\PXY}, \hat{h}_{n} \}) ) \leq \eps$.
Since $\PXY \in \RE$, this implies $\er_{\PXY}(\hat{h}_{n}) = \Px( \DIS(\{\target_{\PXY},\hat{h}_{n}\})) \leq \eps$ as well.
Thus, since this reasoning holds for any $\PXY \in \RE$, we have established that 
\begin{equation*}
\LC_{\RE}(\eps,\conf) \leq 2 \frac{\s \vc}{\Log( \s )} \log_{2}\left( \frac{e m}{\vc} \right) 
\leq 16\Log\left( 2ec^{\prime} \right) \frac{\s \vc}{\Log(\s)} \Log\left(\frac{1}{\eps}\right).
\end{equation*}

Finally, we establish the $\s \Log\left(\frac{1}{\eps}\right)$ upper bound, as follows.
Note that, since $|\C| \geq 2$, we must have $\s \geq 1$.
Fix any $\PXY \in \RE$.
Let $\T = \{ \DIS(V_{S,h}) : S \in \bigcup_{m \in \nats} \X^{m}, h \text{ a classifier} \}$,
and for each $x_{1},\ldots,x_{\s} \in \X$ and $y_{1},\ldots,y_{\s} \in \Y$, define
\begin{equation*}
\phi_{\s}(x_1,\ldots,x_{\s},y_{1},\ldots,y_{\s}) = \DIS(\{g \in \C : \forall i \leq \s, g(x_{i}) = y_{i}\}) \in \T.
\end{equation*}
Let $\tilde{c}^{\prime}$ be as in Lemma~\ref{lem:compression-eps-net},
and define 
$\conf^{\prime} = \conf / \left( 2 \lceil \log_{2}(1/\eps) \rceil \right)$,
$\ell = \left\lceil 2\tilde{c}^{\prime} \left( \s \Log( 3 \tilde{c}^{\prime} ) + \Log(1 / \conf^{\prime}) \right) \right\rceil$,
$m = \left\lceil 2 \tilde{c}^{\prime} \s \right\rceil$,
and $\tilde{j} = \left\lceil (2 m \lceil \log_{2}(2/\conf^{\prime}) \rceil + 2 \ell ) / \eps \right\rceil$.
Consider the following algorithm.

\begin{bigboxit}
\RealizableAlg\\
Input: label budget $n$\\
Output: classifier $\hat{h}_{n}$\\
{\vskip -2mm}\line(1,0){420}\\
0. $V_{0} \gets \C$, $\bar{j}_{0} = 0$ \\
1. For $k = 1,2,\ldots, \lfloor n / m \rfloor$\\
2. \quad If $|\{ j \in \{\bar{j}_{k-1}+1,\ldots,\bar{j}_{k-1}+\tilde{j}\} : X_{j} \in \DIS(V_{k-1}) \}| < m \lceil \log_{2}( 2/\conf^{\prime} ) \rceil + \ell$\\
3. \qquad Return any $\hat{h}_{n} \in V_{k-1}$ (or an arbitrary classifier $\hat{h}_{n}$ if $V_{k-1} = \emptyset$)\\
4. \quad Let $j_{k,1},\ldots,j_{k,m\lceil\log_{2}(2/\conf^{\prime})\rceil+\ell}$ denote the $m\lceil\log_{2}(2/\conf^{\prime})\rceil+\ell$ smallest indices in the set\\
\qquad $\{ j \in \{ \bar{j}_{k-1} + 1, \ldots, \bar{j}_{k-1} + \tilde{j} \} : X_{j} \in \DIS(V_{k-1}) \}$ (in increasing order)\\
5. \quad Let $\bar{j}_{k} = j_{k, m \lceil \log_{2}(2/\conf^{\prime}) \rceil + \ell}$\\
6. \quad For each $i \in \nats$, let 
\begin{multline*}
I_{i} = \vast\{ (i_{1},\ldots,i_{\s},y_{1},\ldots,y_{\s}) \in \nats^{\s}\times\Y^{\s} : m (i-1) < i_{1} \leq \cdots \leq i_{\s} \leq m i,
\\ \sum_{t=m(i-1)+1}^{m i} \ind_{\phi_{\s}(X_{j_{k,i_{1}}},\ldots,X_{j_{k,i_{\s}}}, y_{1},\ldots,y_{\s})}(X_{j_{k,t}})=0 \vast\}
\end{multline*}
7. \quad Let 
\[\hat{i}_{k} = \argmin\limits_{i \in \{1,\ldots,\lceil \log_{2}(2/\conf^{\prime}) \rceil \}} \max\limits_{(i_{1},\ldots,i_{\s},y_{1},\ldots,y_{\s}) \in I_{i}} \sum\limits_{t= m \lceil \log_{2}(2/\conf^{\prime}) \rceil+1}^{m \lceil \log_{2}(2/\conf^{\prime}) \rceil + \ell} \ind_{\phi_{\s}(X_{j_{k,i_{1}}},\ldots,X_{j_{k,i_{\s}}},y_{1},\ldots,y_{\s})}(X_{j_{k,t}})\]
8. \quad Request the label $Y_{j_{k,t}}$ for each $t \in\left\{ m \left(\hat{i}_{k}-1\right)+1,\ldots, m \hat{i}_{k} \right\}$\\
9. \quad Let $V_{k} \gets \left\{ g \in V_{k-1} : \forall t \in \left\{ m \left(\hat{i}_{k}-1\right)+1,\ldots, m \hat{i}_{k} \right\}, g(X_{j_{k,t}}) = Y_{j_{k,t}} \right\}$\\
10. Return any $\hat{h}_{n} \in V_{\lfloor n / m \rfloor}$
\end{bigboxit}

Fix any $k \in \{1,\ldots,\lfloor n/m \rfloor\}$.
In the event that $V_{k-1}$ is defined, 
let 
\begin{equation*}
M_{k} = \left|\left\{ j \in \left\{ \bar{j}_{k-1}+1,\ldots,\bar{j}_{k-1}+\tilde{j} \right\} : X_{j} \in \DIS(V_{k-1}) \right\} \right|.
\end{equation*}
By a Chernoff bound (applied under the conditional distribution given $V_{k-1}$ and $\bar{j}_{k-1}$)
and the law of total probability (integrating out $V_{k-1}$ and $\bar{j}_{k-1}$), there is an event 
$E_{k}^{\prime}$ of probability at least $1-\conf^{\prime}$, on which,
if $V_{k-1}$ is defined and satisfies 
\begin{equation}
\label{eqn:Vk1-constraint}
\Px(\DIS(V_{k-1})) \geq 2 \tilde{j}^{-1}\left( m \lceil \log_{2}(2/\conf^{\prime}) \rceil + \ell \right),
\end{equation}
then 
$M_{k} \geq (1/2) \tilde{j}  \Px(\DIS(V_{k-1})) \geq m \lceil \log_{2}(2/\conf^{\prime}) \rceil + \ell$,
in which case the algorithm will execute Steps 4-9 for this particular value of $k$,
and in particular, the set $V_{k}$ is defined.
In this case, denote $\U_{k} = \left\{X_{j_{k,t}} : t \in \left\{ m \left( \hat{i}_{k} - 1\right) + 1, \ldots, m \hat{i}_{k} \right\}\right\}$,
which is well-defined in this case.

Next note that, on the event that $V_{k-1}$ is defined,
the $M_{k}$ samples
\begin{equation*}
\left\{ X_{j} : j \in \left\{ \bar{j}_{k-1} + 1, \ldots, \bar{j}_{k-1} + \tilde{j} \right\}, X_{j} \in \DIS(V_{k-1}) \right\}
\end{equation*}
are conditionally independent
given $V_{k-1}$, $\bar{j}_{k-1}$, and $M_{k}$,
each having conditional distribution $\Px(\cdot | \DIS(V_{k-1}))$.
Thus, applying Lemma~\ref{lem:compression-eps-net} under the conditional distribution given $V_{k-1}$, $\bar{j}_{k-1}$, and $M_{k}$,
combined with the law of total probability (integrating out $V_{k-1}$, $\bar{j}_{k-1}$, and $M_{k}$),
we have that there exists an event $E_{k}$ of probability at least $1-\conf^{\prime}$,
on which, if $V_{k-1}$ is defined, and $M_{k} \geq m \lceil \log_{2}(2/\conf^{\prime}) \rceil + \ell$, 
then $\U_{k}$
is a $\frac{1}{2}$-net of $\Px(\cdot | \DIS(V_{k-1}))$ for 
\begin{equation}
\label{eqn:phii-set-netted}
\left\{ \phi_{\s}(X_{j_{k,i_{1}}},\ldots,X_{j_{k,i_{\s}}},y_{1},\ldots,y_{\s}) : m\left( \hat{i}_{k} - 1 \right) + 1 < i_{1} \leq \cdots \leq i_{\s} \leq m\hat{i}_{k}, y_{1},\ldots,y_{\s} \in \Y \right\}.
\end{equation}
Together, we have that on $E_{k} \cap E_{k}^{\prime}$, if $V_{k-1}$ is defined and satisfies \eqref{eqn:Vk1-constraint},
then
$\U_{k}$
is a $\frac{1}{2}$-net of $\Px(\cdot | \DIS(V_{k-1}))$ for the collection \eqref{eqn:phii-set-netted}.

In particular, Theorem~\ref{thm:xtd} implies that, for any $x_1,\ldots,x_m \in \X^{m}$ and classifier $f \in \C$,
$\exists i_{1},\ldots,i_{\s} \in \{1,\ldots,m\}$
such that $\{ g \in \C : \forall j \leq \s, g(x_{i_{j}}) = f(x_{i_{j}}) \} = \{ g \in \C : \forall i \leq m, g(x_{i}) = f(x_{i}) \}$
(see the discussion in Section~\ref{sec:hatn}),
and since the left hand side is invariant to permutations of the $i_{j}$ values,
without loss of generality we may take $i_{1} \leq \cdots \leq i_{\s}$.
This implies that on $E_{k} \cap E_{k}^{\prime}$, if $V_{k-1}$ is defined and satisfies \eqref{eqn:Vk1-constraint},
then $\exists i_{1}^{\prime},\ldots,i_{\s}^{\prime} \in \left\{ m\left(\hat{i}_{k}-1\right)+1,\ldots,m\hat{i}_{k} \right\}$ with 
$i_{1}^{\prime} \leq \cdots \leq i_{\s}^{\prime}$ such that 
\begin{multline*}
\phi_{\s}(X_{j_{k,i_{1}^{\prime}}},\ldots,X_{j_{k,i_{\s}^{\prime}}},f(X_{j_{k,i_{1}^{\prime}}}),\ldots,f(X_{j_{k,i_{\s}^{\prime}}})) 
\\ = \DIS\left( \left\{ g \in \C : \forall t \in \left\{ m\left(\hat{i}_{k}-1\right)+1,\ldots,m\hat{i}_{k}\right\}, g(X_{j_{k,t}}) = f(X_{j_{k,t}})\right\} \right)
= \DIS(V_{\U_{k},f}),
\end{multline*}
so that 
\begin{multline*}
\DIS(V_{\U_{k},f}) \in
\\ \left\{ \phi_{\s}(X_{j_{k,i_{1}}},\ldots,X_{j_{k,i_{\s}}},y_{1},\ldots,y_{\s}) : m \left(\hat{i}_{k}-1\right) < i_{1} \leq \cdots \leq i_{\s} \leq m \hat{i}_{k}, y_{1},\ldots,y_{\s} \in \Y\right\}.
\end{multline*}
But we certainly have 
$\DIS( V_{\U_{k},f} ) \cap \U_{k} = \emptyset$.
Thus, by the $\frac{1}{2}$-net property,
on the event $E_{k} \cap E_{k}^{\prime}$, if $V_{k-1}$ is defined and satisfies \eqref{eqn:Vk1-constraint},
then every $f \in \C$ has 
\begin{equation}
\label{eqn:conditional-dis}
\Px\left(\DIS( V_{\U_{k},f} ) \Big| \DIS(V_{k-1}) \right) \leq \frac{1}{2}.
\end{equation}
Also note that, since $\PXY \in \RE$, we have $\target_{\PXY} \in \C$,
and furthermore that there is an event $E$ of probability one, on which $\forall j, Y_{j} = \target_{\PXY}(X_{j})$.
In particular, on $E$, if $V_{k-1}$ and $V_{k}$ are defined, then $V_{k} = V_{\U_{k},\target_{\PXY}} \cap V_{k-1}$,
which implies $\DIS(V_{k}) = \DIS\left(V_{\U_{k},\target_{\PXY}} \cap V_{k-1}\right) \subseteq \DIS(V_{k-1})$.
Thus, applying \eqref{eqn:conditional-dis} with $f = \target_{\PXY}$, 
we have that on the event $E \cap E_{k} \cap E_{k}^{\prime}$, if $V_{k-1}$ is defined and satisfies \eqref{eqn:Vk1-constraint},
then $V_{k}$ is defined and satisfies
\begin{align*}
\Px(\DIS(V_{k})) 
& = \Px(\DIS(V_{k}) | \DIS(V_{k-1})) \Px(\DIS(V_{k-1})) 
\\ & \leq \Px\left(\DIS\left(V_{\U_{k},\target_{\PXY}}\right) \Big| \DIS(V_{k-1})\right) \Px(\DIS(V_{k-1})) \leq \frac{1}{2} \Px(\DIS(V_{k-1})).
\end{align*}

Now suppose $\lfloor n/m \rfloor \geq \lceil \log_{2}(1/\eps) \rceil$.
Applying the above to every $k \leq \lceil \log_{2}(1/\eps) \rceil$, 
we have that there exist events $E_{k}^{\prime}$ and $E_{k}$ for each $k \in \left\{1,\ldots, \lceil \log_{2}(1/\eps) \rceil \right\}$,
each of probability at least $1-\conf^{\prime}$, such that 
on the event $E \cap \bigcap_{k=1}^{\lceil \log_{2}(1/\eps) \rceil} E_{k}^{\prime} \cap E_{k}$, 
every $k \in \left\{1,\ldots,\lceil \log_{2}(1/\eps) \rceil \right\}$ with $V_{k-1}$ defined
either has $\Px(\DIS(V_{k-1})) < 2 \tilde{j}^{-1} \left( m \lceil \log_{2}(2/\conf^{\prime}) \rceil + \ell \right)$
or else $V_{k}$ is defined and satisfies $\Px(\DIS(V_{k})) \leq \frac{1}{2} \Px(\DIS(V_{k-1}))$.
Since $V_{0} = \C$ is defined, by induction we have that on the event 
$E \cap \bigcap_{k=1}^{\lceil \log_{2}(1/\eps) \rceil} E_{k}^{\prime} \cap E_{k}$, 
either some $k \in \left\{ 1, \ldots, \lceil \log_{2}(1/\eps) \rceil \right\}$ has $V_{k-1}$ defined and satisfies 
$\Px(\DIS(V_{k-1})) < 2 \tilde{j}^{-1} \left( m \lceil \log_{2}(2/\conf^{\prime}) \rceil + \ell \right)$,
or else every $k \in \left\{ 1, \ldots, \lceil \log_{2}(1/\eps) \rceil \right\}$ has $V_{k}$ defined
and satisfying $\Px(\DIS(V_{k})) \leq \frac{1}{2} \Px(\DIS(V_{k-1}))$.  In particular, in this latter case, 
since $\Px(\DIS(V_{0})) \leq 1$, by induction we have $\Px(\DIS(V_{\lceil \log_{2}(1/\eps) \rceil})) \leq 2^{-\lceil \log_{2}(1/\eps) \rceil} \leq \eps$.

Also note that $2 \tilde{j}^{-1} \left( m \lceil \log_{2}(2/\conf^{\prime}) \rceil + \ell \right) \leq \eps$.
Thus, denoting by $\hat{k}$ the largest $k \leq \lfloor n/m \rfloor$ for which $V_{k}$ is defined
(which also implies $V_{k}$ is defined for every $k \in \{0,\ldots,\hat{k}\}$),
on the event $E \cap \bigcap_{k=1}^{\lceil \log_{2}(1/\eps) \rceil} E_{k}^{\prime} \cap E_{k}$, 
either some $k \leq (\hat{k}+1) \land \lceil \log_{2}(1/\eps) \rceil$ has $\Px(\DIS(V_{k-1})) < \eps$,
so that (since $k \mapsto V_{k}$ is nonincreasing for $k \leq \hat{k}$) $\Px(\DIS(V_{\hat{k}})) \leq \Px(\DIS(V_{k-1})) < \eps$,
or else $\hat{k} \geq \lceil \log_{2}(1/\eps) \rceil$, so that $\Px(\DIS(V_{\hat{k}})) \leq \Px(\DIS(V_{\lceil \log_{2}(1/\eps) \rceil})) \leq \eps$.
Thus, on the event $E \cap \bigcap_{k=1}^{\lceil \log_{2}(1/\eps) \rceil} E_{k}^{\prime} \cap E_{k}$,
in any case we have $\Px(\DIS(V_{\hat{k}})) \leq \eps$.
Furthermore, by the realizable case assumption, we have $\target_{\PXY} \in V_{0}$,
and if $\target_{\PXY} \in V_{k-1}$ in Step 9, then (on the event $E$) $\target_{\PXY} \in V_{k}$ as well.  Thus, by induction, on the event $E$, 
$\target_{\PXY} \in V_{\hat{k}}$.
In particular, this also implies $V_{\hat{k}} \neq \emptyset$ on $E$, so that there exist valid choices of $\hat{h}_{n}$ in $V_{\hat{k}}$ upon reaching the ``Return'' step 
(Step 3, if $\hat{k} < \lfloor n/m \rfloor$, or Step 10, if $\hat{k} = \lfloor n/m \rfloor$).
Thus, $\hat{h}_{n} \in V_{\hat{k}}$ as well on $E$, so that on the event $E$ we have $\left\{x : \hat{h}_{n}(x) \neq \target_{\PXY}(x)\right\} \subseteq \DIS(V_{\hat{k}})$.
Therefore, on the event $E \cap \bigcap_{k=1}^{\lceil \log_{2}(1/\eps) \rceil} E_{k}^{\prime} \cap E_{k}$,
we have 
\begin{equation*}
\er_{\PXY}(\hat{h}_{n}) = \Px\left( x : \hat{h}_{n}(x) \neq \target_{\PXY}(x) \right) \leq \Px\left( \DIS\left( V_{\hat{k}} \right) \right) \leq \eps.
\end{equation*}

Finally, by a union bound, the event $E \cap \bigcap_{k=1}^{\lceil \log_{2}(1/\eps) \rceil} E_{k}^{\prime} \cap E_{k}$ has probability at least
$1 - \lceil \log_{2}(1/\eps) \rceil 2 \conf^{\prime} = 1 - \conf$.
Noting that the above argument holds for any $\PXY \in \RE$,
and that the condition $\lfloor n/m \rfloor \geq \lceil \log_{2}(1/\eps) \rceil$ is satisfied for any $n \geq 9 \tilde{c}^{\prime} \s \Log(1/\eps)$,
this completes the proof that $\LC_{\RE}(\eps,\conf) \leq 9 \tilde{c}^{\prime} \s \Log(1/\eps) \lesssim \s \Log(1/\eps)$.
\end{proof}

\subsection{The Noisy Cases}
\label{app:noisy-proofs}

To extend the above ideas to noisy settings, 
we make use of a novel modification of a technique of \citet*{kaariainen:06}.
We first partition the data sequence into three parts.
For $m \in \nats$, let $X_{m}^{1} = X_{3(m-1)+1}$, $X_{m}^{2} = X_{3(m-1)+2}$, and let $X_{m}^{3} = X_{3m}$ and $Y_{m}^{3} = Y_{3m}$;
also denote $\DataX_{1} = \{X_{m}^{1}\}_{m=1}^{\infty}$, $\DataX_{2} = \{X_{m}^{2}\}_{m=1}^{\infty}$, $\DataX_{3} = \{X_{m}^{3}\}_{m=1}^{\infty}$, $\DataY_{3} = \{Y_{m}^{3}\}_{m=1}^{\infty}$,
and $\Data = \{(X_{m},Y_{m})\}_{m=1}^{\infty}$.
Additionally, it will simplify some of the proofs to further partition $\DataX_{3}$ and $\DataY_{3}$, as follows.
Fix any bijection $\phi : \nats^{2} \to \nats$, and for each $m,\ell \in \nats$, let $X_{m,\ell}^{3} = X_{\phi(m,\ell)}^{3}$ and $Y_{m,\ell}^{3} = Y_{\phi(m,\ell)}^{3}$.

Fix values $\eps,\conf \in (0,1)$, and let $\hat{\gamma}_{\eps}$ be a value in $[\eps/2,1]$.
Let $k_{\eps} = \lceil \log_{2}(8/\hat{\gamma}_{\eps}) \rceil$,
and for each $k \in \{2,\ldots,k_{\eps}\}$, define
\begin{equation*}
\tilde{m}_{k} = \left\lceil \frac{16\max\{c,8\} k_{\eps}}{2^{k}\eps} \left( \vc \Log\left(\frac{2 k_{\eps}}{\eps}\right) + \Log\left(\frac{64 k_{\eps}}{\conf}\right) \right) \right\rceil,
\end{equation*}
for $c$ as in Lemma~\ref{lem:vc-cover}.
Also define $\tilde{m}_{k_{\eps}+1} = 0$, $\tilde{m} = \tilde{m}_{2}$.
and $q_{\eps,\conf} = 2 + \left\lceil 2^{2k_{\eps}+4} \ln\left( \frac{32 \tilde{m} 2^{2k_{\eps}+3}}{\conf} \right) \right\rceil$.
Also, for each $m \in \{1,\ldots,\tilde{m}\}$, 
define $\tilde{k}_{m} = \max\left\{ k \in \{2,\ldots,k_{\eps}\} : m \leq \tilde{m}_{k} \right\}$
and let $\tilde{q}_{m} = 2^{3+2\tilde{k}_{m}} \ln(32 \tilde{m} q_{\eps,\conf} / \conf)$.
Fix a value $\tau = \frac{\conf \eps}{512 \tilde{m}}$.
Let $J_{\tau,\conf/2}$ be as in Lemma~\ref{lem:empirical-approx}, applied to the sequence $X_{m}^{\prime} = X_{m}^{1}$;
to simplify notation, in this section we abbreviate $J = J_{\tau,\conf/2}$.
Also, for each $x \in \X$, denote by $J(x)$ the (unique) set $A \in J$ with $x \in A$,
and for each $m \in \{1,\ldots,\tilde{m}\}$, we abbreviate $J_{m} = J(X_{m}^{2})$.
Now consider the following algorithm.

\begin{bigboxit}
\RQCAL\\
Input: label budget $n$\\
Output: classifier $\hat{h}_{n}$\\
{\vskip -2mm}\line(1,0){420}\\
0. $V_{0} \gets \C$, $t \gets 0$, $m \gets 0$\\
1. While $t < n$ and $m < \tilde{m}$\\
2. \quad $m \gets m+1$\\
3. \quad If $X_{m}^{2} \in \DIS(V_{m-1})$\\
4. \qquad Run \Filter~with arguments $(n-t,m)$; \\
\qquad\quad let $(q,y)$ be the returned values; let $t \gets t+q$\\
5. \qquad If $y \neq 0$ and $\exists h \in V_{m-1}$ with $h(X_{m}^{2}) = y$\\
6. \quad\qquad Let $V_{m} \gets \{h \in V_{m-1} : h(X_{m}^{2}) = y\}$\\
7. \qquad Else let $V_{m} \gets V_{m-1}$\\
8. \quad Else let $V_{m} \gets V_{m-1}$\\
9. Return any $\hat{h}_{n} \in V_{m}$
\end{bigboxit}

\begin{bigboxit}
\Filter\\
Input: label budget $n$, data point index $m$ \\
Output: query counter $q$, value $y$\\
{\vskip -2mm}\line(1,0){420}\\
0. $\sigma_{m,0} \gets 0$, $q \gets 0$, $\ell_{m,0} \gets 0$ \\  
1. Repeat\\
2. \quad Let $\ell_{m,q+1} \gets \min\{ \ell > \ell_{m,q} : X_{m,\ell}^{3} \in J_{m} \}$ (or $\ell_{m,q+1} \gets 1$ if this set is empty) \\
3. \quad Request the label $Y_{m,\ell_{m,q+1}}^{3}$; let $\sigma_{m,q+1} \gets \sigma_{m,q} + Y_{m,\ell_{m,q+1}}^{3}$; let $q \gets q+1$  \\
4. \quad If $|\sigma_{m,q}| \geq 3 \sqrt{ 2 q \ln( 32 \tilde{m} q_{\eps,\conf} / \conf ) }$\\
5. \qquad Return $(q,\sign(\sigma_{m,q}))$\\
6. \quad Else if $q \geq \min\{ n, \tilde{q}_{m} \}$\\
7. \qquad Return $(q,0)$
\end{bigboxit}

In this algorithm, the first part of the data (namely, $\DataX_{1}$) is used to partition the space via Lemma~\ref{lem:empirical-approx},
so that each cell of the partition has $\target_{\PXY}$ nearly-constant within it (assuming $\target_{\PXY} \in \C$).
The second part, $\DataX_{2}$, is used to simulate a commonly-studied active learning algorithm for the realizable case (namely, the algorithm of \citealp*{cohn:94}),
with two significant modifications.  First, instead of directly requesting the label of a point, we use samples
from the third part of the data (i.e., $\DataX_{3}$) that co-occur in the same cell of the partition as the would-be query point,
repeatedly requesting for labels from that cell and using the majority vote of these returned labels in place of
the label of the original point.  Second, we discard a point $X_{m}^{2}$  if we cannot identify a clear majority 
label within a certain number of queries, which decreases as the algorithm runs.  Since this second modification 
often ends up rejecting more samples in cells with higher noise rates than those with lower noise rates,
this effectively alters the marginal distribution over $\X$, shifting the distribution to favor less-noisy regions.

For the remainder of Appendix~\ref{app:noisy-proofs}, we fix an arbitrary probability measure $\PXY$ over $\X\times\Y$ with $\target_{\PXY} \in \C$,
and as usual, we denote by $\Px(\cdot) = \PXY(\cdot\times\Y)$ the marginal of $\PXY$ over $\X$.
For any $x \in \X$, define $\gamma_{x} = \left| \eta(x;\PXY) - \frac{1}{2} \right|$,
and define 
\begin{equation*}
\gamma_{\eps} = \sup\left\{ \gamma \in (0,1/2] : \gamma \Px(x : \gamma_{x} \leq \gamma) \leq \eps/2 \right\}.
\end{equation*}
Also, for the remainder of Appendix~\ref{app:noisy-proofs}, we suppose $\hat{\gamma}_{\eps}$ is chosen to be in the range $[\eps/2, \gamma_{\eps}]$.
For each $A \in J$, define 
\begin{equation*}
y_{A} = \argmax_{y \in \Y} \Px\left( x \in A : \target_{\PXY}(x) = y \right) = \sign\left( \int_{A} \target_{\PXY} {\rm d}\Px \right),
\end{equation*}
and if $\Px(A) > 0$, define $\eta(A;\PXY) = \PXY( A \times \{1\} | A \times \Y )$ (i.e., the average value of $\eta(x;\PXY)$ over $x \in A$),
and let $\gamma_{A} = \left| \eta(A;\PXY) - \frac{1}{2} \right|$.  For completeness, for any $A \in J$ with $\Px(A) = 0$, 
define $\eta(A;\PXY) = 1/2$ and $\gamma_{A} = 0$.
Additionally, for each $n \in \nats \cup \{\infty\}$ and $m \in \nats$,
let $(\hat{q}_{n,m},\hat{y}_{n,m})$ denote the return values of \Filter~when run with 
arguments $(n,m)$.

Denote by $E_{1}$ the $\DataX_{1}$-measurable event of probability at least $1-\conf/2$ implied by Lemma~\ref{lem:empirical-approx}, on which
every $h \in \C$ has
\begin{equation}
\label{eqn:abstract-repeated-queries-J-approx}
\sum_{A \in J} \min_{y \in \Y} \Px\left( x \in A : h(x) = y \right) \leq \tau
\end{equation}
and $\forall \gamma > 0$,
\begin{equation}
\label{eqn:abstract-repeated-queries-J-conditional-approx}
\Px\left( \bigcup \left\{ A \in J : \min_{y \in \Y} \Px\left( x \in A : h(x) = y \right) > \gamma \Px(A) \right\} \right) \leq \frac{\tau}{\gamma}.
\end{equation}

We now proceed to characterize the behaviors of \Filter~and \RQCAL~via the following sequence of lemmas.

\begin{lemma}
\label{lem:E0}
There exists a $(\DataX_{1},\DataX_{2},\DataX_{3})$-measurable event $E_{0}$ of probability $1$, on which $\forall m \in \{1,\ldots,\tilde{m}\}$, 
$\Px(J_{m}) > 0$ and $|\{ \ell \in \nats : X_{m,\ell}^{3} \in J_{m} \}| = \infty$.
\end{lemma}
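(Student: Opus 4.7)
The plan is to handle the two claims separately, with both following from elementary measure-theoretic arguments once one notes that the partition $J$ is finite. Specifically, $J$ is determined by $\DataX_{1}$, and since $\C[(X_{1}^{1},\ldots,X_{m_{\tau,\conf/2}}^{1})]$ is a finite set (of cardinality at most $2^{m_{\tau,\conf/2}}$, or more tightly $(em_{\tau,\conf/2}/\vc)^{\vc}$ by the VC-Sauer lemma), the partition $J$ consists of at most $2^{|\C[\ldots]|}$ cells. The whole argument conditions on $\DataX_{1}$ and treats $J$ as a fixed finite partition.

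For the first claim, I would split $J$ into cells of positive $\Px$-measure and cells of zero $\Px$-measure. Because there are only finitely many cells, the union of the zero-measure cells is itself $\Px$-null, so by independence of $X_{m}^{2}$ from $\DataX_{1}$, the conditional probability that $X_{m}^{2}$ lands in a zero-measure cell is $0$. Hence $\Px(J_{m}) > 0$ almost surely, and a union bound over the finite range $m \in \{1,\ldots,\tilde{m}\}$ yields a $(\DataX_{1},\DataX_{2})$-measurable event $E_{0}^{(1)}$ of full probability on which $\Px(J_{m}) > 0$ for every such $m$.

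For the second claim, I would exploit the independence of $\DataX_{3}$ from $(\DataX_{1},\DataX_{2})$ together with the bijection $\phi$: for each fixed $m$, the sequence $\{X_{m,\ell}^{3}\}_{\ell \geq 1}$ is i.i.d.\ $\Px$-distributed and independent of $(\DataX_{1},\DataX_{2})$. Conditional on $(\DataX_{1},\DataX_{2})$ and restricting to the event $E_{0}^{(1)}$, each $J_{m}$ is a fixed (measurable) set with $\Px(J_{m}) > 0$, so the indicator events $\{X_{m,\ell}^{3} \in J_{m}\}$ are conditionally i.i.d.\ Bernoulli$(\Px(J_{m}))$ trials. The second Borel--Cantelli lemma (equivalently, the strong law of large numbers applied to these indicators) then yields $|\{\ell \in \nats : X_{m,\ell}^{3} \in J_{m}\}| = \infty$ conditionally almost surely. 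Intersecting the resulting almost-sure events over the finite index $m \in \{1,\ldots,\tilde{m}\}$, then with $E_{0}^{(1)}$, and finally integrating out $\DataX_{1}$, produces the required $(\DataX_{1},\DataX_{2},\DataX_{3})$-measurable event $E_{0}$ of probability $1$.

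There is no real obstacle here: the lemma is a measurability and bookkeeping step preparing the way for the substantive analysis of \Filter~and \RQCAL. The only subtlety worth flagging is that the re-indexing via $\phi$ must preserve both the i.i.d.\ structure of $\DataX_{3}$ and its independence from $(\DataX_{1},\DataX_{2})$, but this is immediate since $\phi$ is a deterministic bijection on $\nats^{2}$ applied to an i.i.d.\ sequence that is itself independent of $(\DataX_{1},\DataX_{2})$ by the partitioning of the original sample sequence $\{(X_{i},Y_{i})\}$.
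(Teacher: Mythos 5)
Your proposal is correct and takes essentially the same approach as the paper: finiteness of the partition $J$ plus a union bound gives $\Px(J_{m})>0$ almost surely, and the strong law of large numbers (the paper's choice) or second Borel--Cantelli (equivalent here) applied under the conditional distribution given the cell $J_{m}$ gives infinitely many hits, followed by a union bound over the finitely many $m$.
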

\begin{proof}
For each $m$, since each $A \in J$ with $\Px(A) = 0$ has $\P( X_{m}^{2} \in A ) = 0$,
and $J$ has finite size, a union bound implies $\P( \Px(J_{m})=0 ) = 0$.
The strong law of large numbers (applied under the conditional distribution given $J_{m}$) and 
the law of total probability implies that $\frac{1}{\ell} \sum_{j=1}^{\ell} \ind_{J_{m}}(X_{m,j}^{3}) \to \Px(J_{m})$ with probability $1$,
so that when $\Px(J_{m}) > 0$, $\sum_{j=1}^{\ell} \ind_{J_{m}}(X_{m,j}^{3}) \to \infty$.
Finally, a union bound implies $\P( \exists m \leq \tilde{m} : \Px(J_{m})=0 \text{ or } |\{\ell \in \nats : X_{m,\ell}^{3} \in J_{m}\}| < \infty) 
\leq \sum_{m=1}^{\tilde{m}} \P( \Px(J_{m}) = 0 ) + \P( \Px(J_{m})>0 \text{ and } |\{\ell \in \nats : X_{m,\ell}^{3} \in J_{m}\}| < \infty) = 0$.
\end{proof}

\begin{lemma}
\label{lem:E2}
There exists a $(\DataX_{1},\DataX_{2})$-measurable event $E_{2}$ of probability at least $1-\tau \tilde{m} \geq 1-\conf/512$ such that, on $E_{1} \cap E_{2}$,
every $m \in \{1,\ldots,\tilde{m}\}$ has $\target_{\PXY}(X_{m}^{2}) = y_{J_{m}}$.
\end{lemma}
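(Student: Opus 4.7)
The proof is essentially a direct application of Lemma~\ref{lem:empirical-approx} combined with a union bound, exploiting the fact that $\target_{\PXY} \in \C$ (which follows from the standing assumption in Appendix~\ref{app:noisy-proofs}). The plan is as follows.

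First, I would apply the partition guarantee from Lemma~\ref{lem:empirical-approx} (embodied in the event $E_{1}$) to the specific classifier $h = \target_{\PXY}$. Since $y_{A}$ is defined as the majority label of $\target_{\PXY}$ on the cell $A$, we have the identity $\Px(x \in A : \target_{\PXY}(x) \neq y_{A}) = \min_{y \in \Y} \Px(x \in A : \target_{\PXY}(x) = y)$ for every $A \in J$. Summing over $A \in J$ and invoking \eqref{eqn:abstract-repeated-queries-J-approx} with $h = \target_{\PXY}$, this yields, on the event $E_{1}$,
\begin{equation*}
\Px\bigl( x : \target_{\PXY}(x) \neq y_{J(x)} \bigr) \;=\; \sum_{A \in J} \min_{y \in \Y} \Px(x \in A : \target_{\PXY}(x) = y) \;\leq\; \tau.
\end{equation*}
Because $E_{1}$ is $\DataX_{1}$-measurable (and $J$ is a function of $\DataX_{1}$), while $X_{m}^{2}$ is independent of $\DataX_{1}$ with marginal $\Px$, I would then conclude that for each fixed $m \in \{1,\ldots,\tilde{m}\}$,
\begin{equation*}
\P\bigl( \target_{\PXY}(X_{m}^{2}) \neq y_{J_{m}} \,\big|\, \DataX_{1} \bigr) \;\leq\; \tau \quad \text{on } E_{1}.
\end{equation*}

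Next, I would define the event
\begin{equation*}
E_{2} \;=\; E_{1}^{c} \;\cup\; \bigl\{\, \forall\, m \in \{1,\ldots,\tilde{m}\},\ \target_{\PXY}(X_{m}^{2}) = y_{J_{m}}\,\bigr\},
\end{equation*}
which is $(\DataX_{1},\DataX_{2})$-measurable (since $J$ depends only on $\DataX_{1}$ and the $X_{m}^{2}$ are measurable with respect to $\DataX_{2}$), and on $E_{1} \cap E_{2}$ the desired conclusion holds by construction. A union bound over $m \leq \tilde{m}$, applied under the conditional distribution given $\DataX_{1}$ and then integrated using the law of total probability, gives
\begin{equation*}
\P(E_{2}^{c}) \;=\; \P\bigl( E_{1} \cap \{\exists\, m \leq \tilde{m} : \target_{\PXY}(X_{m}^{2}) \neq y_{J_{m}} \} \bigr) \;\leq\; \tilde{m}\, \tau.
\end{equation*}
Finally, plugging in the chosen value $\tau = \conf \eps/(512 \tilde{m})$ and using $\eps < 1$ yields $\tilde{m}\tau = \conf \eps/512 \leq \conf/512$, completing the bound.

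There is no real obstacle here: the argument is a one-line consequence of the partition property applied to $h = \target_{\PXY}$, followed by a union bound over the at most $\tilde{m}$ indices. The only subtlety worth flagging in the write-up is the measurability structure — in particular, packaging $E_{2}$ so that it is $(\DataX_{1},\DataX_{2})$-measurable and giving an unconditional probability bound, rather than only a conditional one given $E_{1}$.
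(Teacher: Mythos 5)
Your proof is correct and follows essentially the same route as the paper's: apply \eqref{eqn:abstract-repeated-queries-J-approx} with $h = \target_{\PXY}$ to get $\Px(x : \target_{\PXY}(x) \neq y_{J(x)}) \leq \tau$ on $E_{1}$, then take a union bound over $m \leq \tilde{m}$. The extra care you take in packaging $E_{2}$ so that it is $(\DataX_{1},\DataX_{2})$-measurable and in making the conditional-to-unconditional step explicit is a reasonable elaboration of what the paper leaves implicit.
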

\begin{proof}
Noting that, on $E_{1}$, \eqref{eqn:abstract-repeated-queries-J-approx} implies that
\begin{align*}
\Px\left( x : \target_{\PXY}(x) \neq y_{J(x)} \right) 
& = \sum_{A \in J} \Px\left( x \in A : \target_{\PXY}(x) \neq y_{A} \right) 
\\ & = \sum_{A \in J} \min_{y \in \Y} \Px\left( x \in A : \target_{\PXY}(x) = y \right)
\leq \tau,
\end{align*}
the result follows by a union bound.
\end{proof}

\begin{lemma}
\label{lem:E3}
There exists a $(\DataX_{1},\DataX_{2})$-measurable event $E_{3}$ of probability at least $1-\frac{128 \tau}{\eps}\tilde{m} \geq 1 - \conf/4$ such that, on $E_{1} \cap E_{3}$,
every $m \in \{1,\ldots,\tilde{m}\}$ has $\Px\left( x \in J_{m} : \target_{\PXY}(x) \neq y_{J_{m}} \right) \leq \frac{\eps}{128} \Px( J_{m})$.
\end{lemma}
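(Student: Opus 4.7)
The plan is to derive this directly from Lemma~\ref{lem:empirical-approx}, applied to the classifier $\target_{\PXY}$ itself (which is permitted since the standing assumption is $\target_{\PXY} \in \C$). By the definition of $y_{A}$ as the $\target_{\PXY}$-majority label of $A$, for each $A \in J$ we have
\[
\min_{y \in \Y} \Px\bigl(x \in A : \target_{\PXY}(x) = y\bigr) \;=\; \Px\bigl(x \in A : \target_{\PXY}(x) \neq y_{A}\bigr),
\]
so the conclusion we want for $J_{m}$ is exactly that $J_{m}$ avoids the ``bad'' cells picked out by the second conclusion of Lemma~\ref{lem:empirical-approx}. Accordingly, I would let
\[
B \;=\; \bigcup\left\{ A \in J \,:\, \min_{y \in \Y} \Px(x \in A : \target_{\PXY}(x) = y) > \tfrac{\eps}{128} \Px(A) \right\},
\]
which is a $\DataX_{1}$-measurable set. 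Lemma~\ref{lem:empirical-approx}, invoked with $h = \target_{\PXY}$ and $\gamma = \eps/128$, then guarantees that on the event $E_{1}$ we have $\Px(B) \le 128\tau/\eps$.

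Next I would define
\[
E_{3} \;=\; E_{1}^{c} \;\cup\; \bigl\{ X_{m}^{2} \notin B \text{ for every } m \in \{1,\ldots,\tilde{m}\} \bigr\},
\]
which is manifestly $(\DataX_{1},\DataX_{2})$-measurable. On $E_{1} \cap E_{3}$, the first clause fails, so $X_{m}^{2} \notin B$ for every $m$, whence $J_{m} \notin B$ and therefore $\Px(x \in J_{m} : \target_{\PXY}(x) \neq y_{J_{m}}) \le (\eps/128) \Px(J_{m})$, as desired. To bound $\P(E_{3}^{c})$, I would condition on $\DataX_{1}$: since the $X_{m}^{2}$ are i.i.d.\ $\Px$-distributed and independent of $\DataX_{1}$, a union bound over $m$ gives $\P(\exists m \leq \tilde{m} : X_{m}^{2} \in B \mid \DataX_{1}) \leq \tilde{m}\, \Px(B)$. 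Integrating out $\DataX_{1}$ on $E_{1}$ (where $\Px(B) \leq 128\tau/\eps$) yields
\[
\P(E_{3}^{c}) \;=\; \P\bigl( E_{1} \cap \{\exists m \leq \tilde{m} : X_{m}^{2} \in B\} \bigr) \;\leq\; \E\bigl[ \ind_{E_{1}}\, \tilde{m}\,\Px(B) \bigr] \;\leq\; \tfrac{128\tau}{\eps} \tilde{m}.
\]
Substituting $\tau = \conf \eps/(512 \tilde{m})$ shows this is at most $\conf/4$, completing the proof.

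There is no real obstacle: the lemma is essentially a packaging of Lemma~\ref{lem:empirical-approx} plus a union bound. The only point to be careful about is the bookkeeping around $E_{1}$---one must include $E_{1}^{c}$ inside $E_{3}$ (equivalently, only ``charge'' the union bound on the event $E_{1}$) so that the probability estimate $1 - 128\tau\tilde{m}/\eps$ holds unconditionally, rather than merely conditionally on $E_{1}$. Everything else is arithmetic in the choice of constants.
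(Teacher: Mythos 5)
Your proof is correct and matches the paper's argument exactly: apply the second conclusion of Lemma~\ref{lem:empirical-approx} with $h=\target_{\PXY}$ and $\gamma=\eps/128$, observe that $\Px(x\in A:\target_{\PXY}(x)\neq y_A)=\min_y\Px(x\in A:\target_{\PXY}(x)=y)$, and union-bound over the $\tilde m$ draws $X_m^2$. The paper compresses all of this into ``the result follows by a union bound,'' so your version is simply a more explicit writeup of the same proof, and your care about folding $E_1^c$ into $E_3$ (so that the probability bound is unconditional) is a point the paper leaves implicit.
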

\begin{proof}
Noting that, on $E_{1}$, \eqref{eqn:abstract-repeated-queries-J-conditional-approx} implies that
\begin{align*}
& \Px\left( x : \Px\left( x^{\prime} \in J(x) : \target_{\PXY}(x^{\prime}) \neq y_{J(x)} \right) > \frac{\eps}{128} \Px(J(x)) \right) 
\\ & = \Px\left( \bigcup \left\{ A \in J : \Px\left( x^{\prime} \in A : \target_{\PXY}(x^{\prime}) \neq y_{A} \right) > \frac{\eps}{128} \Px(A) \right\} \right)
\\ & =  \Px\left( \bigcup \left\{ A \in J : \min_{y \in \Y} \Px\left( x^{\prime} \in A : \target_{\PXY}(x^{\prime}) = y \right) > \frac{\eps}{128} \Px(A) \right\} \right)
\leq \frac{128 \tau}{\eps},
\end{align*}
the result follows by a union bound.
\end{proof}

\begin{lemma}
\label{lem:maj-label-noisy-vote}
$\forall A \in J$, 
\begin{equation*}
\PXY\left( A \times \{ y_{A} \} \right)
\geq \frac{1}{2} \Px(A) + \int_{A} \gamma_{x} \Px( {\rm d} x ) - \Px\left( x \in A : \target_{\PXY}(x) \neq y_{A} \right).
\end{equation*}
\end{lemma}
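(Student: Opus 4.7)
The plan is to do a direct conditional-probability computation, splitting $A$ along whether the Bayes optimal classifier agrees with the majority label $y_A$. Write $A^{+} = \{x \in A : \target_{\PXY}(x) = y_{A}\}$ and $A^{-} = \{x \in A : \target_{\PXY}(x) \neq y_{A}\}$, so that $\Px(A^{-}) = \Px(x \in A : \target_{\PXY}(x) \neq y_{A})$ and $\int_{A} \gamma_{x} \Px({\rm d}x) = \int_{A^{+}} \gamma_{x} \Px({\rm d}x) + \int_{A^{-}} \gamma_{x} \Px({\rm d}x)$.

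Next I would express the conditional probability of the ``correct'' label pointwise. Using the definitions of $\target_{\PXY}$ and $\gamma_{x}$, one sees that $\P(Y = y_{A} \mid X = x) = \tfrac{1}{2} + y_{A}(\eta(x;\PXY) - 1/2)$, which equals $\tfrac{1}{2} + \gamma_{x}$ when $x \in A^{+}$ (since then $y_{A}$ matches the sign of $2\eta(x;\PXY)-1$) and equals $\tfrac{1}{2} - \gamma_{x}$ when $x \in A^{-}$. Integrating against $\Px$ over $A$ and combining gives
\begin{equation*}
\PXY(A \times \{y_{A}\}) = \tfrac{1}{2}\Px(A) + \int_{A^{+}} \gamma_{x} \Px({\rm d}x) - \int_{A^{-}} \gamma_{x} \Px({\rm d}x) = \tfrac{1}{2}\Px(A) + \int_{A} \gamma_{x} \Px({\rm d}x) - 2 \int_{A^{-}} \gamma_{x} \Px({\rm d}x).
\end{equation*}

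Finally, since $\gamma_{x} \leq 1/2$ for every $x$, I would bound $2\int_{A^{-}}\gamma_{x}\Px({\rm d}x) \leq \Px(A^{-}) = \Px(x \in A : \target_{\PXY}(x) \neq y_{A})$, which yields the claimed inequality. There is no real obstacle here beyond bookkeeping on the signs of $\eta - 1/2$ relative to $y_{A}$; the case $\Px(A) = 0$ is handled trivially since both sides vanish (given the convention $\gamma_{A} = 0$, though the present statement is pointwise in $x$ rather than in $A$ so even this is automatic).
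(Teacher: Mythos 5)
Your proof is correct and is essentially the same argument as the paper's: both rest on the pointwise identity $\P(Y = y_A \mid X = x) = 1/2 + \gamma_x$ on $\{\target_{\PXY} = y_A\}$ and then absorb the contribution from $\{\target_{\PXY} \neq y_A\}$ into the $-\Px(x\in A : \target_{\PXY}(x)\neq y_A)$ term using a bound on $\gamma_x$ (you use $\gamma_x \le 1/2$; the paper equivalently uses $1/2+\gamma_x \le 1$ after first dropping the nonnegative contribution over $A^-$). The slight reorganization — writing the exact equality $\PXY(A\times\{y_A\}) = \tfrac12\Px(A) + \int_A\gamma_x\,\Px(\mathrm{d}x) - 2\int_{A^-}\gamma_x\,\Px(\mathrm{d}x)$ before bounding — is a clean presentational choice but yields the identical final inequality.
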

\begin{proof}
Any $A \in J$ has
\begin{align*}
\PXY\left( A \times \{ y_{A} \} \right)
& \geq 
\int_{A} \ind[\target_{\PXY}(x) = y_{A}] \left( \frac{1}{2} + \gamma_{x} \right) \Px( {\rm d} x )
\\ & \geq  
\int_{A} \left( \frac{1}{2} + \gamma_{x} \right) \Px( {\rm d} x )
- \Px\left( x \in A : \target_{\PXY}(x) \neq y_{A} \right) \notag
\\ & = \frac{1}{2} \Px(A) + \int_{A} \gamma_{x} \Px( {\rm d} x ) - \Px\left( x \in A : \target_{\PXY}(x) \neq y_{A} \right).
\end{align*}
\end{proof}

\begin{lemma}
\label{lem:good-gammas}
On the event $E_{0} \cap E_{1} \cap E_{3}$, 
every $m \in \{1,\ldots,\tilde{m}\}$ with $\gamma_{J_{m}} > \eps/128$ has $\PXY(J_{m} \times \{y_{J_{m}}\}) > \PXY( J_{m} \times \{-y_{J_{m}}\} )$,
and every $m \in \{1,\ldots,\tilde{m}\}$ with $\int_{J_{m}} \gamma_{x} \Px( {\rm d} x ) > (\eps/2) \Px(J_{m})$ has 
\begin{equation}
\label{eqn:abstract-repeated-queries-gamma-batching}
\int_{J_{m}} \gamma_{x} \Px({\rm d}x) 
\geq \gamma_{J_{m}}\Px(J_{m}) 
\geq \frac{63}{64} \int_{J_{m}} \gamma_{x} \Px({\rm d}x) 
> \frac{63}{128} \eps \Px(J_{m}).
\end{equation}
\end{lemma}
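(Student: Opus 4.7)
The plan is to reduce both claims to the decomposition $\eta(x;\PXY) - 1/2 = \target_{\PXY}(x)\gamma_{x}$, which holds pointwise (using the sign convention that $\sign(t)=+1$ when $t\geq0$). This lets me write $\gamma_{J_m}\Px(J_m) = \left|\int_{J_m}\target_{\PXY}(x)\gamma_{x}\Px({\rm d}x)\right|$. Splitting the integral over the sets $A_{+}=\{x\in J_m : \target_{\PXY}(x)=y_{J_m}\}$ and $A_{-}=\{x\in J_m : \target_{\PXY}(x)=-y_{J_m}\}$, and setting $a=\int_{A_+}\gamma_x\Px({\rm d}x)$, $b=\int_{A_-}\gamma_x\Px({\rm d}x)$, I get $\int_{J_m}\gamma_x\Px({\rm d}x)=a+b$ and $\gamma_{J_m}\Px(J_m)=|a-b|$, which immediately gives the first inequality $\int_{J_m}\gamma_x\Px({\rm d}x)\geq \gamma_{J_m}\Px(J_m)$ in \eqref{eqn:abstract-repeated-queries-gamma-batching} for free. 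On $E_1\cap E_3$, Lemma~\ref{lem:E3} bounds $\Px(A_-)\leq(\eps/128)\Px(J_m)$, and since $\gamma_x\leq 1/2$ pointwise, this forces $b\leq(\eps/256)\Px(J_m)$; this small upper bound on $b$ is the workhorse for both remaining pieces.

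For the first claim, I invoke Lemma~\ref{lem:maj-label-noisy-vote} together with the Jensen bound already established: on $E_1\cap E_3$ with $\gamma_{J_m}>\eps/128$,
\begin{equation*}
\PXY(J_m\times\{y_{J_m}\}) \geq \tfrac{1}{2}\Px(J_m) + \int_{J_m}\gamma_x\Px({\rm d}x) - \tfrac{\eps}{128}\Px(J_m) \geq \tfrac{1}{2}\Px(J_m) + \gamma_{J_m}\Px(J_m) - \tfrac{\eps}{128}\Px(J_m) > \tfrac{1}{2}\Px(J_m).
\end{equation*}
On $E_0$ we have $\Px(J_m)>0$, so $\PXY(J_m\times\{y_{J_m}\})+\PXY(J_m\times\{-y_{J_m}\})=\Px(J_m)$, and the strict inequality just shown places $\PXY(J_m\times\{y_{J_m}\})$ strictly above $\Px(J_m)/2$, hence strictly above $\PXY(J_m\times\{-y_{J_m}\})$.

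For the second and third inequalities in \eqref{eqn:abstract-repeated-queries-gamma-batching}, I combine the hypothesis $a+b>(\eps/2)\Px(J_m)$ with the bound $b\leq(\eps/256)\Px(J_m)$. This gives $a>(127\eps/256)\Px(J_m)$, so in particular $a>b$ and thus $|a-b|=a-b$. Then a short calculation yields $(a-b)/(a+b)=1-2b/(a+b)\geq 1-2\cdot(\eps/256)/(\eps/2)=63/64$, which is precisely the second inequality $\gamma_{J_m}\Px(J_m)\geq(63/64)\int_{J_m}\gamma_x\Px({\rm d}x)$. The final strict inequality is then just $(63/64)(\eps/2)=63\eps/128$ inserted into the hypothesis.

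The only subtlety I expect is keeping the decomposition $\eta-1/2=\target\gamma$ consistent at the measure-zero set where $\eta=1/2$; this is harmless because $\gamma_x=0$ there, and the sign convention in the definition of $\target_{\PXY}$ makes the identity valid everywhere. Otherwise, every step is a short estimate once $b\leq(\eps/256)\Px(J_m)$ is in hand, and no step relies on anything beyond Lemmas~\ref{lem:maj-label-noisy-vote}, \ref{lem:E3}, and \ref{lem:E0}.
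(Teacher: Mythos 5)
Your proof is correct. It takes a related but somewhat more explicit route than the paper's, so a brief comparison is warranted.

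For the first claim (the vote comparison), your argument is essentially the paper's: both combine Lemma~\ref{lem:maj-label-noisy-vote} with the Jensen-type bound $\gamma_{J_m}\Px(J_m)\leq\int_{J_m}\gamma_x\Px({\rm d}x)$ and the Lemma~\ref{lem:E3} control on $\Px(x\in J_m:\target_{\PXY}(x)\neq y_{J_m})$, then invoke $\Px(J_m)>0$ from Lemma~\ref{lem:E0}. The real divergence is in the middle inequality of \eqref{eqn:abstract-repeated-queries-gamma-batching}. The paper establishes the abstract inequality $\gamma_A\Px(A)\geq\PXY(A\times\{y_A\})-\tfrac12\Px(A)$, applies Lemma~\ref{lem:maj-label-noisy-vote} a second time to get $\gamma_{J_m}\Px(J_m)\geq\int_{J_m}\gamma_x\Px({\rm d}x)-\Px(x\in J_m:\target_{\PXY}(x)\neq y_{J_m})$, and then bounds the subtracted term by $\frac{1}{64}\int_{J_m}\gamma_x\Px({\rm d}x)$ via Lemma~\ref{lem:E3} and the hypothesis. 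You instead unfold $\gamma_{J_m}\Px(J_m)$ as $|a-b|$ via the pointwise identity $\eta-1/2=\target_{\PXY}\gamma_x$, bound $b\leq\frac12\Px(A_-)\leq\frac{\eps}{256}\Px(J_m)$ (picking up a free factor of $1/2$ from $\gamma_x\leq1/2$), show $a>b$ so the absolute value drops, and finish with the ratio $1-\tfrac{2b}{a+b}$. The two routes are the same underlying calculation — your $a-b=\int\gamma_x-2b$ is the quantity the paper lower-bounds by $\int\gamma_x-\Px(A_-)$, with your bound slightly sharper since $2b\leq\Px(A_-)$ — but yours sidesteps the second invocation of Lemma~\ref{lem:maj-label-noisy-vote} entirely and makes the sign bookkeeping explicit, which is arguably easier to verify, at the cost of carrying the $a,b$ decomposition throughout. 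Neither version changes the final constant $63/64$.
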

\begin{proof}
Jensen's inequality implies we always have $\gamma_{A} \Px(A) \leq \int_{A} \gamma_{x} \Px( {\rm d} x )$.
In particular, this implies that any $A \in J$ with $\Px(A) > 0$ and $\Px( x \in A : \target_{\PXY}(x) \neq y_{A} ) \leq \frac{\eps}{128} \Px(A)$
and $\gamma_{A} > \eps/128$ has $\int_{A} \gamma_{x} \Px( {\rm d} x ) - \Px( x \in A : \target_{\PXY}(x) \neq y_{A} ) \geq \gamma_{A} \Px(A) - \Px( x \in A : \target_{\PXY}(x) \neq y_{A} ) > (\eps/128) \Px(A) - (\eps/128) \Px(A) = 0$,
so that Lemma~\ref{lem:maj-label-noisy-vote} implies $\PXY( A \times \{y_{A}\} ) > \frac{1}{2}\Px(A)$, and therefore $\PXY( A \times \{y_{A}\} ) > \PXY( A \times \{-y_{A}\} )$.
Since Lemmas~\ref{lem:E0} and \ref{lem:E3} imply that, on $E_{0} \cap E_{1} \cap E_{3}$, 
for every $m \in \{1,\ldots,\tilde{m}\}$, $\Px(J_{m}) > 0$ and $\Px(x \in J_{m} : \target_{\PXY}(x) \neq y_{J_{m}} ) \leq \frac{\eps}{128} \Px(J_{m})$,
we have established the first claim in the lemma statement.

For the second claim, the first inequality follows by Jensen's inequality.
For the second inequality, note that any $A \in J$ has $\gamma_{A} \Px(A) \geq \PXY(A \times \{y_{A}\}) - \frac{1}{2}\Px(A)$,
so that Lemma~\ref{lem:maj-label-noisy-vote} implies $\gamma_{A} \Px(A) \geq \int_{A} \gamma_{x} \Px( {\rm d} x ) - \Px( x \in A : \target_{\PXY}(x) \neq y_{A} )$.
Therefore, since Lemma~\ref{lem:E3} implies that, on $E_{1} \cap E_{3}$, every $m \in \{1,\ldots,\tilde{m}\}$ has $\Px( x \in J_{m} : \target_{\PXY}(x) \neq y_{J_{m}} ) \leq \frac{\eps}{128} \Px(J_{m})$,
we have that on $E_{1} \cap E_{3}$, any $m \in \{1,\ldots,\tilde{m}\}$ with $\int_{J_{m}} \gamma_{x} \Px( {\rm d} x ) > (\eps/2) \Px(J_{m})$
has $\Px( x \in J_{m} : \target_{\PXY}(x) \neq y_{J_{m}} ) \leq \frac{1}{64} \int_{J_{m}} \gamma_{x} \Px( {\rm d} x )$,
so that $\gamma_{J_{m}} \Px(J_{m}) \geq \int_{J_{m}} \gamma_{x} \Px( {\rm d} x ) - \Px( x \in J_{m} : \target_{\PXY}(x) \neq y_{J_{m}} ) \geq \frac{63}{64} \int_{J_{m}} \gamma_{x} \Px( {\rm d} x )$.
The final inequality then follows by the assumption that $\int_{J_{m}} \gamma_{x} \Px( {\rm d} x ) > (\eps/2) \Px(J_{m})$.
\end{proof}

\begin{lemma}
\label{lem:bucketed-gamma-probs}
On $E_{1}$, $\forall \gamma > (1/4)\gamma_{\eps}$,
\begin{equation*}
\Px\left( \bigcup \left\{ A \in J : \gamma_{A} \leq \gamma \right\} \right) 
\leq 3 \Px(x : \gamma_{x} < 4 \gamma),
\end{equation*}
and $\forall \gamma \in (0, (1/4)\gamma_{\eps}]$,
\begin{equation*}
\Px\left( \bigcup \left\{ A \in J : \gamma_{A} \leq \gamma \right\} \right) 
\leq \frac{3 \eps}{2 \gamma_{\eps}}.
\end{equation*}
\end{lemma}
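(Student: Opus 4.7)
The plan is to partition $\{A \in J : \gamma_{A} \leq \gamma\}$ into two pieces based on whether $\target_{\PXY}$'s labels are balanced on each cell. For a threshold parameter $\gamma^{\star}$---taken to be $\gamma$ for the first bound and $\gamma_{\eps}/4$ for the second---set
\[
\mathcal{A}_{\text{bal}} = \left\{A \in J : \gamma_{A} \leq \gamma,\ \min_{y \in \Y} \Px(x \in A : \target_{\PXY}(x) = y) > \gamma^{\star} \Px(A)\right\},
\]
with complement $\mathcal{A}_{\text{unb}}$ inside $\{A \in J : \gamma_{A} \leq \gamma\}$. Since $\target_{\PXY} \in \C$ throughout this appendix, inequality~\eqref{eqn:abstract-repeated-queries-J-conditional-approx} applied to $h = \target_{\PXY}$ immediately gives $\Px(\bigcup \mathcal{A}_{\text{bal}}) \leq \tau/\gamma^{\star}$.

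For a cell $A \in \mathcal{A}_{\text{unb}}$, I would assume without loss of generality that $y_{A} = +1$, so $\Px(A^{-}) \leq \gamma^{\star}\Px(A)$ where $A^{\pm} = \{x \in A : \target_{\PXY}(x) = \pm 1\}$. Starting from $\bigl|\int_{A}(\eta - 1/2)\,{\rm d}\Px\bigr| = \gamma_{A}\Px(A) \leq \gamma \Px(A)$, the decomposition $\int_{A}(\eta-1/2)\,{\rm d}\Px = \int_{A^{+}}\gamma_{x}\,{\rm d}\Px - \int_{A^{-}}\gamma_{x}\,{\rm d}\Px$, and the crude bound $\int_{A^{-}}\gamma_{x}\,{\rm d}\Px \leq (1/2)\Px(A^{-}) \leq (\gamma^{\star}/2)\Px(A)$, I would deduce $\int_{A^{+}}\gamma_{x}\,{\rm d}\Px \leq (\gamma + \gamma^{\star}/2)\Px(A)$. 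Markov's inequality then caps $\Px(x \in A^{+} : \gamma_{x} \geq T)$ by $(\gamma + \gamma^{\star}/2)\Px(A)/T$, and for the specific pairs $(\gamma^{\star},T) = (\gamma,4\gamma)$ (with $\gamma \leq 1/8$) and $(\gamma^{\star},T) = (\gamma_{\eps}/4,\gamma_{\eps})$ (using $\gamma_{\eps} \leq 1/2$), arithmetic yields $\Px(x \in A : \gamma_{x} < T) \geq (1/2)\Px(A)$, hence $\Px(A) \leq 2\Px(x \in A : \gamma_{x} < T)$. Summing over the disjoint $A$ gives $\Px(\bigcup \mathcal{A}_{\text{unb}}) \leq 2\Px(x : \gamma_{x} < T)$.

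For the first stated bound, the case $\gamma > 1/8$ is trivial since $\gamma_{x} \leq 1/2 < 4\gamma$ forces $\Px(x : \gamma_{x} < 4\gamma) = 1$; for $\gamma \in (\gamma_{\eps}/4, 1/8]$, the above with $\gamma^{\star} = \gamma$, $T = 4\gamma$ yields $\Px(\bigcup \mathcal{A}_{\text{unb}}) \leq 2\Px(x : \gamma_{x} < 4\gamma)$. To absorb the $\tau/\gamma$ contribution from $\mathcal{A}_{\text{bal}}$, I would appeal to the definition of $\gamma_{\eps}$: since $4\gamma > \gamma_{\eps}$, every $\gamma' > \gamma_{\eps}$ satisfies $\Px(x : \gamma_{x} \leq \gamma') > \eps/(2\gamma')$, and right-continuity of $t \mapsto \Px(x : \gamma_{x} \leq t)$ at $\gamma' \downarrow \gamma_{\eps}$ yields $\Px(x : \gamma_{x} < 4\gamma) \geq \Px(x : \gamma_{x} \leq \gamma_{\eps}) \geq \eps/(2\gamma_{\eps})$. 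Meanwhile $\tau/\gamma < 4\tau/\gamma_{\eps} = \conf \eps/(128\tilde{m}\gamma_{\eps})$, which is dwarfed by $\eps/(2\gamma_{\eps})$ for $\conf \leq 1$ and $\tilde{m} \geq 1$. Summing the two contributions delivers the claimed $3\Px(x : \gamma_{x} < 4\gamma)$.

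For the second bound ($\gamma \leq \gamma_{\eps}/4$), the choice $\gamma^{\star} = \gamma_{\eps}/4$, $T = \gamma_{\eps}$ in the unbalanced analysis gives $\Px(\bigcup \mathcal{A}_{\text{unb}}) \leq 2\Px(x : \gamma_{x} < \gamma_{\eps})$; taking $\gamma' \uparrow \gamma_{\eps}$ in the defining inequality $\gamma' \Px(x : \gamma_{x} \leq \gamma') \leq \eps/2$ (valid for all $\gamma' < \gamma_{\eps}$ by monotonicity of $\gamma' \mapsto \gamma'\Px(x : \gamma_{x} \leq \gamma')$) yields $\Px(x : \gamma_{x} < \gamma_{\eps}) \leq \eps/(2\gamma_{\eps})$, so this piece contributes at most $\eps/\gamma_{\eps}$. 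The balanced term contributes $4\tau/\gamma_{\eps} \leq \eps/(2\gamma_{\eps})$, and the two combine to give exactly $3\eps/(2\gamma_{\eps})$. The main obstacle I anticipate is being careful about strict-versus-non-strict inequalities around $\gamma_{\eps}$ and the one-sided CDF limits of $t \mapsto \Px(x : \gamma_{x} \leq t)$; once those are handled, the quantitative constants fall out cleanly from the smallness of $\tau$ relative to $\eps/\gamma_{\eps}$.
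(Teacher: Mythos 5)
Your proof is correct. The overall strategy matches the paper's: both isolate a low-measure set of cells where $\target_{\PXY}$ is not nearly constant, handled by inequality~\eqref{eqn:abstract-repeated-queries-J-conditional-approx}, and run a Markov argument on $\gamma_{x}$ over the remaining cells, closing via the defining property of $\gamma_{\eps}$. The bookkeeping differs, though. You partition $\{A \in J:\gamma_{A}\leq\gamma\}$ by an explicit threshold $\gamma^{\star}$ and work directly with the identity $\int_{A}(\eta(x;\PXY)-1/2)\,\Px({\rm d}x)=\int_{A^{+}}\gamma_{x}\,\Px({\rm d}x)-\int_{A^{-}}\gamma_{x}\,\Px({\rm d}x)$; the paper instead passes through the inequality $\gamma_{A}\Px(A)\geq\int_{A}\gamma_{x}\,\Px({\rm d}x)-\Px(x\in A:\target_{\PXY}(x)\neq y_{A})$ (a consequence of Lemma~\ref{lem:maj-label-noisy-vote}) followed by a union-bound containment, which avoids your sign/WLOG case analysis at the cost of invoking that lemma. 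Also, for the regime $\gamma\leq\gamma_{\eps}/4$ you rerun the Markov step with a separately tuned pair $(\gamma^{\star},T)=(\gamma_{\eps}/4,\gamma_{\eps})$, whereas the paper establishes one estimate, $\Px(\bigcup\{A\in J:\gamma_{A}\leq\gamma\})\leq 2\Px(x:\gamma_{x}<4\gamma)+\eps/(8\gamma)$, valid for every $\gamma>0$, and obtains the second claim from it by monotonicity of the left-hand side, evaluated at $\gamma=\gamma_{\eps}/4$. Both routes deliver the same constants; the paper's monotonicity step is a little leaner, while your one-sided CDF limits make explicit the two inequalities $\gamma_{\eps}\Px(x:\gamma_{x}\leq\gamma_{\eps})\geq\eps/2$ and $\gamma_{\eps}\Px(x:\gamma_{x}<\gamma_{\eps})\leq\eps/2$ that the paper treats as immediate consequences of the definition of $\gamma_{\eps}$.
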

\begin{proof}
By Markov's inequality, for any $\gamma > 0$, any $A \in J$ with $\int_{A} \gamma_{x} \Px( {\rm d} x ) \leq \gamma \Px(A)$ 
must have $\Px( x \in A : \gamma_{x} \geq 2\gamma ) \leq \frac{1}{2}\Px(A)$, which implies $\Px( x \in A : \gamma_{x} < 2\gamma ) \geq \frac{1}{2}\Px(A)$.
Therefore, 
\begin{multline}
\label{eqn:int-gamma-margin}
\Px\left( \bigcup \left\{ A \in J : \int_{A} \gamma_{x} \Px( {\rm d} x ) \leq \gamma \Px(A) \right\} \right)
\\ \leq \Px\left( \bigcup \left\{ A \in J : \Px(x \in A : \gamma_{x} < 2\gamma) \geq \frac{1}{2}\Px(A) \right\} \right)
\leq 2\Px(x : \gamma_{x} < 2\gamma),
\end{multline}
where the last inequality is due to Markov's inequality.

Also, for every $\gamma > 0$,
since $\gamma_{A} \Px(A) \geq \PXY(A \times \{y_{A}\}) - \frac{1}{2}\Px(A)$,
\begin{align*}
\Px\left( \bigcup \left\{ A \in J : \gamma_{A} \leq \gamma \right\} \right) 
& = \Px\left( \bigcup \left\{ A \in J : \gamma_{A} \Px(A) \leq \gamma \Px(A) \right\} \right)
\\ & \leq \Px\left( \bigcup \left\{ A \in J : \PXY(A \times \{y_{A}\}) - \frac{1}{2}\Px(A) \leq \gamma \Px(A) \right\} \right).
\end{align*}
Lemma~\ref{lem:maj-label-noisy-vote} implies $\PXY(A \times \{y_{A}\}) - \frac{1}{2}\Px(A) \geq \int_{A} \gamma_{x} \Px( {\rm d} x ) - \Px( x \in A : \target_{\PXY}(x) \neq y_{A} )$,
so that the above is at most
\begin{equation*}
\Px\left( \bigcup \left\{ A \in J : \int_{A} \gamma_{x} \Px( {\rm d} x ) \leq \gamma \Px(A) + \Px( x \in A : \target_{\PXY}(x) \neq y_{A} ) \right\} \right).
\end{equation*}
By a union bound, this is at most
\begin{multline}
\label{eqn:bucketed-gamma-probs-intermediate}
\Px\left( \bigcup \left\{ A \in J : \int_{A} \gamma_{x} \Px( {\rm d} x ) \leq 2 \gamma \Px(A) \right\} \right)
\\ + \Px\left( \bigcup \left\{ A \in J : \Px( x \in A : \target_{\PXY}(x) \neq y_{A} ) > \gamma \Px(A) \right\} \right).
\end{multline}
On $E_{1}$, \eqref{eqn:abstract-repeated-queries-J-conditional-approx} implies that
\begin{equation*}
\Px\left( \bigcup \left\{ A \in J : \Px( x \in A : \target_{\PXY}(x) \neq y_{A} ) > \gamma \Px(A) \right\} \right)
\leq \frac{\tau}{\gamma}
< \frac{\eps}{8\gamma}.
\end{equation*}
Furthermore, by \eqref{eqn:int-gamma-margin}, 
\begin{equation*}
\Px\left( \bigcup \left\{ A \in J : \int_{A} \gamma_{x} \Px( {\rm d} x ) \leq 2 \gamma \Px(A) \right\} \right)
\leq 2 \Px(x : \gamma_{x} < 4 \gamma).
\end{equation*}
Using these two inequalities to bound the two terms in \eqref{eqn:bucketed-gamma-probs-intermediate}, we have that
\begin{equation*}
\Px\left( \bigcup \left\{ A \in J : \gamma_{A} \leq \gamma \right\} \right)
\leq 2 \Px(x : \gamma_{x} < 4 \gamma) + \frac{\eps}{8\gamma}.
\end{equation*}
By definition of $\gamma_{\eps}$, if $\gamma > (1/4) \gamma_{\eps}$,
we must have $4 \gamma \Px(x : \gamma_{x} < 4 \gamma) \geq \gamma_{\eps}\Px(x : \gamma_{x} \leq \gamma_{\eps}) \geq \eps/2$,
so that $\frac{\eps}{8\gamma} \leq \Px(x : \gamma_{x} < 4 \gamma)$,
which implies 
\begin{equation*}
2 \Px(x : \gamma_{x} < 4 \gamma) + \frac{\eps}{8\gamma}
\leq 3 \Px(x : \gamma_{x} < 4 \gamma),
\end{equation*}
which establishes the first claim.
On the other hand, if $0 < \gamma \leq (1/4) \gamma_{\eps}$, we have
$4 \gamma \Px(x : \gamma_{x} < 4 \gamma) \leq \eps/2$, 
so that $2 \Px(x : \gamma_{x} < 4 \gamma) \leq \frac{\eps}{4 \gamma}$,
which implies
\begin{equation*}
2 \Px(x : \gamma_{x} < 4 \gamma) + \frac{\eps}{8\gamma}
\leq \frac{3 \eps}{8 \gamma}.
\end{equation*}
This establishes the second claim, since 
(combined with monotonicity of probabilities) it implies
\begin{equation*}
\Px\left( \bigcup \left\{ A \in J : \gamma_{A} \leq \gamma \right\} \right)
\leq \Px\left( \bigcup \left\{ A \in J : \gamma_{A} \leq (1/4) \gamma_{\eps} \right\} \right)
\leq \frac{3 \eps}{2 \gamma_{\eps}}.
\end{equation*}
\end{proof}

\begin{lemma}
\label{lem:excess-error-gamma-swap}
On $E_{1}$, $\forall h \in \C$, 
\begin{equation*}
\er_{\PXY}(h) - \er_{\PXY}(\target_{\PXY})
\leq 5\tau + \int \ind[ h(x) \neq \target_{\PXY}(x) ] 2\gamma_{J(x)} \Px({\rm d}x).
\end{equation*}
\end{lemma}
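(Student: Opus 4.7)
The plan is to reduce everything to a per-cell comparison, using the fact that on $E_{1}$, both $\target_{\PXY}$ and any given $h \in \C$ are ``nearly constant'' on all but a small-probability set of cells $A \in J$ (via \eqref{eqn:abstract-repeated-queries-J-approx}).

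First I would rewrite $\er_{\PXY}(h) - \er_{\PXY}(\target_{\PXY}) = \int |f(x)| \ind[h(x) \neq \target_{\PXY}(x)] \Px({\rm d}x)$ with $f(x) = 2\eta(x;\PXY)-1$, and split the integral as a sum over $A \in J$. Next, for each $A$, I would classify the cell as \emph{good} if $\target_{\PXY}$ and $h$ share the same majority label on $A$ (i.e.\ $y_{A} = y_{A}^{h}$, where $y_{A}^{h}$ is defined analogously to $y_{A}$), and \emph{bad} otherwise. Letting $\delta_{A} = \min_{y} \Px(x \in A : h(x) = y) + \min_{y} \Px(x \in A : \target_{\PXY}(x) = y)$, on $E_{1}$ we have $\sum_{A} \delta_{A} \leq 2\tau$ by applying \eqref{eqn:abstract-repeated-queries-J-approx} to both $h$ and $\target_{\PXY}$ (both are in $\C$).

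For a good cell, the disagreement set $\{h \neq \target_{\PXY}\} \cap A$ is contained in the union of the two minority sets, so directly $\int_{A} |f| \ind[h \neq \target_{\PXY}] \Px({\rm d}x) \leq \Px(A \cap \{h \neq \target_{\PXY}\}) \leq \delta_{A}$; since $\int_{A} 2\gamma_{A} \ind[h \neq \target_{\PXY}] \Px({\rm d}x) \geq 0$, this cell contributes at most $\delta_{A}$ to the discrepancy. For a bad cell, instead the \emph{agreement} set $\{h = \target_{\PXY}\} \cap A$ is contained in the union of minorities, so $\Px(A \cap \{h = \target_{\PXY}\}) \leq \delta_{A}$. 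I would then bound $\int_{A} |f| \ind[h \neq \target_{\PXY}] \Px({\rm d}x) \leq \int_{A} |f| \Px({\rm d}x)$ and compare $\int_{A} |f| \Px({\rm d}x)$ with $|\int_{A} f \Px({\rm d}x)| = 2\gamma_{A} \Px(A)$: since $\target_{\PXY}(x) = \sign(f(x))$, we have $\int_{A} |f| \Px({\rm d}x) - |\int_{A} f \Px({\rm d}x)| \leq 2\int_{A \cap \{\target_{\PXY} \neq y_{A}\}} |f| \Px({\rm d}x) \leq 2 \min_{y} \Px(x \in A : \target_{\PXY}(x) = y)$. Finally, $2\gamma_{A} \Px(A) = \int_{A} 2\gamma_{A} \ind[h \neq \target_{\PXY}] \Px({\rm d}x) + 2\gamma_{A} \Px(A \cap \{h = \target_{\PXY}\}) \leq \int_{A} 2\gamma_{A} \ind[h \neq \target_{\PXY}] \Px({\rm d}x) + \delta_{A}$, using $2\gamma_{A} \leq 1$.

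Summing over all cells, the total extra contribution is at most $\sum_{A} \delta_{A} + 2 \sum_{A} \min_{y} \Px(x \in A : \target_{\PXY}(x) = y) \leq 2\tau + 2\tau = 4\tau < 5\tau$, yielding the claim. The main delicate step is the bad-cell bound: one cannot naively replace $|f(x)|$ by $2\gamma_{A}$ pointwise (since $|f(x)|$ may be much larger than $2\gamma_{A}$ due to cancellation inside $A$), so the argument has to pass through the aggregated inequality $\int_{A}|f|\Px({\rm d}x) \approx |\int_{A} f \Px({\rm d}x)|$, which is only valid because $\sign(f) = \target_{\PXY}$ is nearly constant on $A$ on the event $E_{1}$. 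Everything else is bookkeeping using $|f| \leq 1$ and $2\gamma_{A} \leq 1$.
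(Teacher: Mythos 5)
Your proof is correct, and it uses the same core ingredients as the paper's: the good/bad cell classification based on whether $y_{A}=y_{A}^{h}$, a Jensen-type step passing from the pointwise $\gamma_{x}$ to the cell average $\gamma_{A}$ (your identity $\int_{A}|f|-|\int_{A}f|\leq 2\int_{A\cap\{\target_{\PXY}\neq y_{A}\}}|f|$ is equivalent to the paper's $\gamma_{A}\Px(A)\geq\int_{A\cap\{\target_{\PXY}=y_{A}\}}\gamma_{x}\Px({\rm d}x)-\frac{1}{2}\Px(x\in A:\target_{\PXY}(x)\neq y_{A})$), and the minority-set bounds from $E_{1}$ applied to both $h$ and $\target_{\PXY}$. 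The organization is different: you work purely cell-by-cell and, for bad cells, bound $\Px(A\cap\{h=\target_{\PXY}\})$ by the union of minority sets, while the paper first extracts the global minority contributions, reduces to $\sum_{\mathrm{bad}\,A}2\gamma_{A}\Px(A)$, and only then re-inserts the $\ind[h\neq\target_{\PXY}]$ factor by decomposing $\Px(A)$. Your bookkeeping avoids one double-count and gives the slightly sharper constant $4\tau$ in place of $5\tau$, which of course still implies the stated inequality.
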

\begin{proof}
For any $h \in \C$, we generally have
\begin{equation*}
\er_{\PXY}(h) - \er_{\PXY}(\target_{\PXY})
= \int \ind[ h(x) \neq \target_{\PXY}(x) ] 2 \gamma_{x} \Px({\rm d}x).
\end{equation*}
For each $A \in J$, let $y_{A}^{h} = \argmax_{y \in \Y} \Px(x : h(x)=y)$.
$\forall x \in \X$, $\ind[ h(x) \neq \target_{\PXY}(x)] 2\gamma_{x} \leq 1$.  Therefore, 
\begin{multline}
\label{eqn:excess-error-gamma-swap-1}
\int \ind[ h(x) \neq \target_{\PXY}(x) ] 2 \gamma_{x} \Px({\rm d}x)
 \leq \Px\left(x : h(x) \neq y_{J(x)}^{h} \text{ or } \target_{\PXY}(x) \neq y_{J(x)}\right)
\\+ \int_{\left\{x : h(x) = y_{J(x)}^{h}, \target_{\PXY}(x) = y_{J(x)}\right\}} \ind[ y_{J(x)}^{h} \neq y_{J(x)} ] 2 \gamma_{x} \Px({\rm d}x).
\end{multline}
By a union bound, 
\begin{equation*}
\Px\left(x : h(x) \neq y_{J(x)}^{h} \text{ or } \target_{\PXY}(x) \neq y_{J(x)}\right)
\leq \Px\left(x : h(x) \neq y_{J(x)}^{h}\right) + \Px\left(x : \target_{\PXY}(x) \neq y_{J(x)}\right).
\end{equation*}
Furthermore, on $E_{1}$, \eqref{eqn:abstract-repeated-queries-J-approx} implies
the right hand side is at most $2\tau$.
Combining this with \eqref{eqn:excess-error-gamma-swap-1} implies
\begin{equation}
\label{eqn:excess-error-gamma-swap-2}
\er_{\PXY}(h) - \er_{\PXY}(\target_{\PXY}) \leq 2\tau + \int_{\left\{x : h(x) = y_{J(x)}^{h}, \target_{\PXY}(x) = y_{J(x)}\right\}} \ind[ y_{J(x)}^{h} \neq y_{J(x)} ] 2 \gamma_{x} \Px({\rm d}x).
\end{equation}
Also, 
\begin{align*}
& \int_{\left\{x : h(x) = y_{J(x)}^{h}, \target_{\PXY}(x) = y_{J(x)}\right\}} \ind[ y_{J(x)}^{h} \neq y_{J(x)} ] 2 \gamma_{x} \Px({\rm d}x)
\\ & = \sum_{A \in J : y_{A}^{h} \neq y_{A}} \int_{\left\{x \in A : h(x) = y_{A}^{h}, \target_{\PXY}(x)=y_{A}\right\}} 2 \gamma_{x} \Px({\rm d}x)
\leq \!\!\sum_{A \in J : y_{A}^{h} \neq y_{A}} \int_{\left\{x \in A : \target_{\PXY}(x)=y_{A}\right\}} 2 \gamma_{x} \Px({\rm d}x).
\end{align*}
Since $\target_{\PXY}(x) = \sign(2 \eta(x;\PXY) - 1)$ for every $x \in \X$, 
any measurable $C \subseteq \X$ has 
\begin{equation*}
\PXY\left( (x,y) : x \in C, y = \target_{\PXY}(x) \right) = \int_{C} \left( \frac{1}{2} + \gamma_{x} \right) \Px({\rm d}x).
\end{equation*}
Therefore, for each $A \in J$, 
\begin{align*}
\gamma_{A} \Px(A) 
& \geq \PXY(A \times \{y_{A}\}) - \frac{1}{2}\Px(A)
\geq \PXY\left( \left\{ x \in A : \target_{\PXY}(x) = y_{A} \right\} \times \{y_{A}\} \right) - \frac{1}{2}\Px(A)
\\ & = \int_{\left\{x \in A : \target_{\PXY}(x)=y_{A}\right\}} \left(\frac{1}{2}+\gamma_{x}\right)\Px({\rm d}x) - \frac{1}{2}\Px(A)
\\ & = \int_{\left\{x \in A : \target_{\PXY}(x)=y_{A}\right\}} \gamma_{x}\Px({\rm d}x) - \frac{1}{2}\Px\left(x \in A : \target_{\PXY}(x) \neq y_{A}\right).
\end{align*}
Therefore, 
\begin{equation*}
\sum_{A \in J : y_{A}^{h} \neq y_{A}} \int_{\left\{x \in A : \target_{\PXY}(x)=y_{A}\right\}} 2 \gamma_{x} \Px({\rm d}x)
\leq \sum_{A \in J : y_{A}^{h} \neq y_{A}} \Px\left(x \in A : \target_{\PXY}(x) \neq y_{A}\right) + 2 \gamma_{A} \Px(A).
\end{equation*}
On $E_{1}$, \eqref{eqn:abstract-repeated-queries-J-approx} implies that the right hand side is at most
\begin{equation*}
\tau + \sum_{A \in J : y_{A}^{h} \neq y_{A}} 2 \gamma_{A} \Px(A).
\end{equation*}
Combining this with \eqref{eqn:excess-error-gamma-swap-2}, we have that on $E_{1}$,
\begin{equation}
\label{eqn:excess-error-gamma-swap-3}
\er_{\PXY}(h) - \er_{\PXY}(\target_{\PXY}) \leq 3\tau + \sum_{A \in J : y_{A}^{h} \neq y_{A}} 2 \gamma_{A} \Px(A).
\end{equation}
For each $A \in J$ and $x \in A$, if $y_{A}^{h} \neq y_{A}$, then either $h(x) \neq \target_{\PXY}(x)$ holds,
or else one of $h(x) \neq y_{A}^{h}$ or $\target_{\PXY}(x) \neq y_{A}$ holds.
Thus, any $A \in J$ with $y_{A}^{h} \neq y_{A}$ has
\begin{align*}
& \Px(A) 
\leq \int_{A} \left( \ind\left[h(x) \neq \target_{\PXY}(x)\right] + \ind\left[h(x) \neq y_{A}^{h}\right] + \ind\left[\target_{\PXY}(x) \neq y_{A}\right] \right) \Px({\rm d}x)
\\ & = \Px\left(x \in A : h(x) \neq y_{A}^{h}\right) + \Px\left(x \in A : \target_{\PXY}(x) \neq y_{A}\right) + \int_{A} \ind\left[h(x) \neq \target_{\PXY}(x)\right] \Px({\rm d}x).
\end{align*}
Combined with \eqref{eqn:excess-error-gamma-swap-3}, this implies that on $E_{1}$,
\begin{multline*}
\er_{\PXY}(h) - \er_{\PXY}(\target_{\PXY})
\\ \leq 3\tau + \sum_{A \in J : y_{A}^{h} \neq y_{A}} 2 \gamma_{A} \bigg( \Px\left(x \in A : h(x) \neq y_{A}^{h}\right) + \Px\left(x \in A : \target_{\PXY}(x) \neq y_{A}\right) 
\\ + \int_{A} \ind\left[h(x) \neq \target_{\PXY}(x)\right] \Px({\rm d}x) \bigg).
\end{multline*}
Since $2\gamma_{A} \leq 1$, the right hand side is at most
\begin{multline*}
3\tau + \sum_{A \in J} \Px\left(x \in A : h(x) \neq y_{A}^{h}\right) + \sum_{A \in J} \Px\left(x \in A : \target_{\PXY}(x) \neq y_{A}\right) 
\\ + \sum_{A \in J : y_{A}^{h} \neq y_{A}} 2 \gamma_{A} \int_{A} \ind\left[h(x) \neq \target_{\PXY}(x)\right] \Px({\rm d}x),
\end{multline*}
and on $E_{1}$, \eqref{eqn:abstract-repeated-queries-J-approx} implies this is at most
\begin{align*}
& 5\tau + \!\!\!\!\sum_{A \in J : y_{A}^{h} \neq y_{A}} 2 \gamma_{A} \int_{A} \ind\left[h(x) \neq \target_{\PXY}(x)\right] \Px({\rm d}x)
\\ & \leq 5 \tau + \sum_{A \in J} \int_{A} \ind\left[h(x) \neq \target_{\PXY}(x)\right] 2 \gamma_{A} \Px({\rm d}x)
= 5 \tau + \int \ind\left[h(x) \neq \target_{\PXY}(x)\right] 2 \gamma_{J(x)} \Px({\rm d}x).
\end{align*}
\end{proof}

\begin{lemma}
\label{lem:filter-good-labels}
There is a $\Data$-measurable event $E_{4}$ of probability at least $1-\conf/32$ such that,
on $\bigcap_{j=0}^{4} E_{j}$, $\forall k \in \{2,\ldots,k_{\eps}\}$,
$\forall m \in \left\{ \tilde{m}_{k+1}+1, \ldots, \tilde{m}_{k}\right\}$,
$\forall n \in \nats\cup\{\infty\}$, $\hat{y}_{n,m} \in \{0,\target_{\PXY}(X_{m}^{2})\}$,
$\hat{q}_{n,m} \leq \left\lceil \frac{8}{\max\{ \gamma_{J_{m}}^{2}, 2^{-2k} \}} \ln\left(\frac{32 \tilde{m} q_{\eps,\conf}}{\conf}\right) \right\rceil$,
and if $\gamma_{J_{m}} \geq 2^{-k}$ then $\hat{y}_{\infty,m} = \target_{\PXY}(X_{m}^{2})$.
\end{lemma}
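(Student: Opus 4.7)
The plan is to exploit the fact that, conditional on $X_m^2$ (which determines $J_m$), the labels $\{Y_{m,\ell_{m,q}}^{3}\}_{q\geq 1}$ collected by \Filter~form an i.i.d. sequence of $\pm 1$ random variables with mean $2\eta(J_m;\PXY)-1$, whose magnitude is $2\gamma_{J_m}$ and whose sign is $y_{J_m}$; on $E_1 \cap E_2$, Lemma~\ref{lem:E2} identifies $y_{J_m}$ with $\target_{\PXY}(X_m^2)$. Hence $\E[\sigma_{m,q}\mid X_m^2] = 2q\gamma_{J_m}y_{J_m}$, and the event $E_0$ guarantees the indices $\ell_{m,q}$ are all well-defined. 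I will first construct $E_4$ by applying Hoeffding's inequality (conditional on $X_m^2$) and union-bounding over $m \in \{1,\ldots,\tilde m\}$ and $q \in \{1,\ldots,q_{\eps,\conf}\}$: writing $L = \ln(32\tilde m q_{\eps,\conf}/\conf)$, the resulting event $E_4$ has probability at least $1-\conf/32$ and guarantees $|\sigma_{m,q} - 2q\gamma_{J_m}y_{J_m}| < \sqrt{2qL}$ for all such $(m,q)$. The definition of $q_{\eps,\conf}$ is calibrated precisely to absorb the $\tilde m q_{\eps,\conf}$ factor from the union bound.

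Given $E_4$, claim 1 is immediate: if \Filter~returns a nonzero value at some step $q = \hat q_{n,m}$, then $|\sigma_{m,q}| \geq 3\sqrt{2qL}$ while $|\sigma_{m,q} - 2q\gamma_{J_m}y_{J_m}| < \sqrt{2qL}$, and these two inequalities together force $\sign(\sigma_{m,q}) = y_{J_m} = \target_{\PXY}(X_m^2)$ (a wrong-sign value would make the deviation at least $|\sigma_{m,q}|$, contradicting $E_4$).

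For claim 2, fix $k$ and $m$ as in the statement, so $\tilde k_m = k$ and $\tilde q_m = 8 \cdot 4^k L$, and set $q_0 = \lceil 8L/\max\{\gamma_{J_m}^{2}, 2^{-2k}\}\rceil$. If $\gamma_{J_m}^{2} < 2^{-2k}$, then $q_0 \geq \tilde q_m$ and the algorithm's stopping rule already gives $\hat q_{n,m} \leq \tilde q_m \leq q_0$. Otherwise $q_0 = \lceil 8L/\gamma_{J_m}^{2}\rceil \leq \tilde q_m$, and the defining inequality $q_0 \gamma_{J_m}^{2} \geq 8L$ yields $2q_0\gamma_{J_m} \geq 4\sqrt{2q_0 L}$; combined with the $E_4$ estimate this gives $|\sigma_{m,q_0}| \geq 2q_0\gamma_{J_m} - \sqrt{2q_0 L} \geq 3\sqrt{2q_0 L}$, so the threshold is necessarily crossed by step $q_0$ (if $n < q_0$ the desired bound $\hat q_{n,m} \leq n \leq q_0$ is trivial). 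Claim 3 then falls out immediately: $\gamma_{J_m} \geq 2^{-k}$ puts us in the second case, so $\hat q_{\infty,m} \leq q_0 \leq \tilde q_m$ and \Filter~must have terminated via the threshold clause (not the budget clause $q \geq \tilde q_m$, which is checked only after the threshold check fails), hence by claim 1 we get $\hat y_{\infty,m} = y_{J_m} = \target_{\PXY}(X_m^2)$.

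The main obstacle is the measurability bookkeeping: the Hoeffding bound must be applied conditionally on $X_m^2$ and then integrated, while the stopping time $\hat q_{n,m}$ is itself random. I will handle this by establishing the deviation bound deterministically over the entire range $q \leq q_{\eps,\conf}$ before invoking it at the random value $\hat q_{n,m}$. The only other subtlety is a careful arithmetic verification that the choice of $q_{\eps,\conf}$ is large enough both to exceed $\tilde q_m$ and to deliver the $\conf/32$ probability budget via the one-sided Hoeffding tail $\exp(-L)$; this is routine but requires tracking the constants through the definitions of $k_\eps$, $\tilde m_k$, and $\tilde q_m$.
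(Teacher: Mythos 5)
Your proposal has the right mechanics for constructing $E_4$ via conditional Hoeffding and a union bound, and your arguments for the query-count bound (claim 2) and the termination-via-threshold argument (claim 3) are sound modulo claim 1. But claim 1 rests on a false premise, which is exactly the subtlety that the paper's proof takes several steps to resolve.

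You assert as a background fact that the i.i.d.\ labels $\{Y^3_{m,\ell_{m,q}}\}$ have mean $2\eta(J_m;\PXY)-1$ ``whose sign is $y_{J_m}$,'' and hence $\E[\sigma_{m,q}\mid X_m^2] = 2q\gamma_{J_m}y_{J_m}$. The first part is true, but the second is not: $y_{J_m}$ is defined as $\argmax_{y}\Px(x\in J_m : \target_{\PXY}(x)=y)$, the $\Px$-majority of the Bayes-optimal classification over the cell, whereas $\sign(2\eta(J_m;\PXY)-1)$ is the Bayes-optimal label for the cell's aggregate. These can disagree --- for instance, a cell where a large $\Px$-fraction has $\eta(x;\PXY)$ slightly below $1/2$ (so $\target_{\PXY}=-1$) and a small fraction has $\eta(x;\PXY)$ near $1$ has $y_{J_m}=-1$ but $\sign(2\eta(J_m;\PXY)-1)=+1$. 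Your deviation bound therefore only establishes $\sign(\sigma_{m,q})=\sign(2\eta(J_m;\PXY)-1)$; the link to $y_{J_m}$ (and then, via Lemma~\ref{lem:E2}, to $\target_{\PXY}(X_m^2)$) is the missing step. The paper closes this gap by first deducing from the $E_4$ deviation estimate and the threshold trigger that $\gamma_{J_m}\geq\sqrt{(2/q)\ln(32\tilde m q_{\eps,\conf}/\conf)}\geq 2^{-k_\eps-2}>\eps/128$ (the definition of $q_{\eps,\conf}$ plays a second role here, beyond the union bound, in making this lower bound hold for \emph{all} $q$ the algorithm could ever reach), and only then invoking Lemma~\ref{lem:good-gammas} --- which is where $E_0$, $E_1$, and $E_3$ enter the argument --- to conclude $\sign(2\eta(J_m;\PXY)-1)=y_{J_m}$. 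Your sketch never uses $E_1$ or $E_3$ and never cites Lemma~\ref{lem:good-gammas}, which is a signal that something essential was skipped. To repair it, you would need to insert exactly this intermediate step: show the threshold trigger forces $\gamma_{J_m}>\eps/128$, then apply Lemma~\ref{lem:good-gammas} before applying Lemma~\ref{lem:E2}.
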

\begin{proof}
Since $\hat{q}_{n,m} \leq \hat{q}_{\infty,m}$, and $\hat{y}_{n,m} = 0$ whenever $\hat{q}_{n,m} < \hat{q}_{\infty,m}$,
it suffices to show the claims hold for $\hat{q}_{\infty,m}$ and $\hat{y}_{\infty,m}$ for each $m \in \left\{1,\ldots,\tilde{m}\right\}$.

For each $m \in \left\{1,\ldots,\tilde{m}\right\}$,
let $\ell_{m,1},\ell_{m,2},\ldots$ denote the increasing infinite subsequence of values $\ell \in \nats$ with $X_{m,\ell}^{3} \in J_{m}$,
guaranteed to exist by Lemma~\ref{lem:E0} on $E_{0}$;
also, for each $q \in \nats$, define $\sigma_{m,q} = \sum_{j=1}^{q} Y_{m,\ell_{m,j}}^{3}$.
Note that these definitions of $\ell_{m,q}$ and $\sigma_{m,q}$ agree with those defined in \Filter~for each $q \leq \hat{q}_{\infty,m}$.
Let $E_{4}$ denote the event that $E_{0}$ occurs and that $\forall m \in \left\{1,\ldots,\tilde{m}\right\}$, $\forall q \in \{1,\ldots,q_{\eps,\conf}\}$,
\begin{equation}
\label{eqn:sigma-concentration}
\left| \sigma_{m,q} - q (2 \eta(J_{m};\PXY) - 1)\right| \leq \sqrt{2q\ln\left(\frac{32 \tilde{m} q_{\eps,\conf}}{\conf}\right)}.
\end{equation}
For each $m \in \left\{1,\ldots,\tilde{m}\right\}$ and $q \in \{1,\ldots,q_{\eps,\conf}\}$,
Lemma~\ref{lem:E0} and Hoeffding's inequality imply that \eqref{eqn:sigma-concentration} holds with conditional probability (given $J_{m}$) 
at least $1 - \conf / (32 \tilde{m} q_{\eps,\conf})$.
The law of total probability and a union bound over values of $m$ and $q$ then imply that $E_{4}$ has probability at least 
$1 - \conf / 32$.

Now fix any $k \in \{2,\ldots,k_{\eps}\}$ and $m \in \left\{\tilde{m}_{k+1}+1,\ldots,\tilde{m}_{k}\right\}$.
Since $\tilde{k}_{m} = k$,
the condition in Step 6 guarantees
$\hat{q}_{\infty,m} \leq \left\lceil 2^{2k+3} \ln\left(\frac{32 \tilde{m} q_{\eps,\conf}}{\conf}\right) \right\rceil$.
Furthermore, if $\gamma_{J_{m}} \geq 2^{-k}$,
then for 
\begin{equation*}
q = \left\lceil \frac{8}{\gamma_{J_{m}}^{2}} \ln\left(\frac{32 \tilde{m} q_{\eps,\conf}}{\conf}\right) \right\rceil,
\end{equation*} 
we have 
\begin{equation*}
2q \gamma_{m} \geq 4 \sqrt{ 2 q \ln\left(\frac{32 \tilde{m} q_{\eps,\conf}}{\conf}\right) }.
\end{equation*}
In particular, recalling that $2q\gamma_{J_{m}} = |q(2\eta(J_{m};\PXY)-1)|$,
we have
\begin{equation}
\label{eqn:eta-Jm-lower-bound}
|q(2\eta(J_{m};\PXY)-1)| \geq 4\sqrt{ 2 q \ln\left(\frac{32 \tilde{m} q_{\eps,\conf}}{\conf}\right) }.
\end{equation}
Since $q_{\eps,\conf} \geq \left\lceil 2^{2k+3} \ln\left(\frac{32 \tilde{m} q_{\eps,\conf}}{\conf}\right) \right\rceil \geq q$,
the event $E_{4}$ implies that \eqref{eqn:sigma-concentration} holds, so that 
\begin{equation*}
\sigma_{m,q}
\geq q(2\eta(J_{m};\PXY)-1) - \sqrt{2q\ln\left(\frac{32 \tilde{m} q_{\eps,\conf}}{\conf}\right)}.
\end{equation*}
Thus, if $q(2\eta(J_{m};\PXY)-1) \geq 4\sqrt{2q\ln\left(\frac{32 \tilde{m} q_{\eps,\conf}}{\conf}\right)}$, 
the condition in Step 4 will imply $\hat{q}_{\infty,m} \leq q$, and since $q \leq \tilde{q}_{m}$, that $\hat{y}_{\infty,m} \in \Y$.
Likewise, \eqref{eqn:sigma-concentration} implies 
\begin{equation*}
\sigma_{m,q} \leq q(2\eta(J_{m};\PXY)-1) + \sqrt{2q\ln\left(\frac{32 \tilde{m} q_{\eps,\conf}}{\conf}\right)},
\end{equation*}
so that $q(2\eta(J_{m};\PXY)-1) \leq -4\sqrt{2q\ln\left(\frac{32 \tilde{m} q_{\eps,\conf}}{\conf}\right)}$
would also suffice to imply $\hat{q}_{\infty,m} \leq q$ and $\hat{y}_{\infty,m} \in \Y$ via the condition in Step 4.
Thus, since \eqref{eqn:eta-Jm-lower-bound} implies one of these two conditions holds,
we have that on $E_{4}$, 
if $\gamma_{J_{m}} \geq 2^{-k}$ then
$\hat{q}_{\infty,m} \leq \left\lceil \frac{8}{\gamma_{J_{m}}^{2}} \ln\left(\frac{32 \tilde{m} q_{\eps,\conf}}{\conf}\right) \right\rceil$
and $\hat{y}_{\infty,m} \in \Y$.

It remains only to show that $\hat{y}_{\infty,m} \in \{0,\target_{\PXY}(X_{m}^{2})\}$.
This clearly holds if the return value originates in Step 7, so we need only consider
the case where \Filter~reaches Step 5.  Due to the condition in Step 6, this cannot
occur for a value of $q > q_{\eps,\conf}$ (since $\tilde{q}_{m} \leq \tilde{q}_{1} \leq q_{\eps,\conf}$), 
so let us consider any value of $q \in \{1,\ldots,q_{\eps,\conf}\}$,
and suppose $|\sigma_{m,q}| \geq 3\sqrt{2q\ln\left(\frac{32 \tilde{m} q_{\eps,\conf}}{\conf}\right)}$.
On the event $E_{4}$, \eqref{eqn:sigma-concentration} implies
that if $\sigma_{m,q} \geq 3\sqrt{2q\ln\left(\frac{32 \tilde{m} q_{\eps,\conf}}{\conf}\right)}$,
then $q (2 \eta(J_{m};\PXY)-1) \geq \sigma_{m,q} - \sqrt{2q\ln\left(\frac{32 \tilde{m} q_{\eps,\conf}}{\conf}\right)} \geq 2\sqrt{2q\ln\left(\frac{32 \tilde{m} q_{\eps,\conf}}{\conf}\right)} > 0$,
and likewise if $\sigma_{m,q} \leq -3\sqrt{2q\ln\left( \frac{32 \tilde{m} q_{\eps,\conf}}{\conf}\right)}$, 
then $q (2 \eta(J_{m};\PXY)-1) \leq \sigma_{m,q} + \sqrt{2q\ln\left(\frac{32 \tilde{m} q_{\eps,\conf}}{\conf}\right)} \leq -2\sqrt{2q\ln\left(\frac{32 \tilde{m} q_{\eps,\conf}}{\conf}\right)} < 0$;
thus, since $|2\eta(J_{m};\PXY)-1|=2\gamma_{J_{m}}$,
if $|\sigma_{m,q}| \geq 3\sqrt{2q\ln\left(\frac{32 \tilde{m} q_{\eps,\conf}}{\conf}\right)}$,
then 
\begin{equation}
\label{eqn:gamma-Jm-lower-bound}
\gamma_{J_{m}} \geq \sqrt{\frac{2}{q}\ln\left(\frac{32 \tilde{m} q_{\eps,\conf}}{\conf}\right)}
\end{equation}
and $\sign(2\eta(J_{m};\PXY)-1) = \sign(\sigma_{m,q})$.
In particular, since $q \leq q_{\eps,\conf} \leq 2^{2k_{\eps}+5} \ln\left( \frac{32 \tilde{m} q_{\eps,\conf}}{\conf} \right)$, 
this implies
\begin{equation*}
\gamma_{J_{m}} \geq \sqrt{\frac{2}{q_{\eps,\conf}}\ln\left(\frac{32 \tilde{m} q_{\eps,\conf}}{\conf}\right)}
\geq 2^{-k_{\eps}-2} > \eps/128.
\end{equation*}
Therefore, Lemma~\ref{lem:good-gammas} implies that on $\bigcap_{j=0}^{4} E_{j}$, $\sign(2\eta(J_{m};\PXY)-1)=y_{J_{m}}$;
combined with the above, this implies $\sign(\sigma_{m,q}) = y_{J_{m}}$.
Furthermore, Lemma~\ref{lem:E2} implies that on $\bigcap_{j=0}^{4} E_{j}$, $y_{J_{m}} = \target_{\PXY}(X_{m}^{2})$,
so that $\sign(\sigma_{m,q}) = \target_{\PXY}(X_{m}^{2})$.
In particular, recall that if $\hat{y}_{\infty,m} \in \Y$, then 
$|\sigma_{m,\hat{q}_{\infty,m}}| \geq 3\sqrt{2\hat{q}_{\infty,m}\ln\left(\frac{32 \tilde{m} q_{\eps,\conf}}{\conf}\right)}$.
Thus, since the condition in Step 6 implies $\hat{q}_{\infty,m} \leq \tilde{q}_{m} \leq q_{\eps,\conf}$, 
we have that on $\bigcap_{j=0}^{4} E_{j}$, if $\hat{y}_{\infty,m} \in \Y$, 
then $\hat{y}_{\infty,m} = \target_{\PXY}(X_{m}^{2})$.  
This completes the proof that $\hat{y}_{\infty,m} \in \{0,\target_{\PXY}(X_{m}^{2})\}$
on $\bigcap_{j=0}^{4} E_{j}$.
Since we established above that $\hat{y}_{\infty,m} \in \Y$ if $\gamma_{J_{m}} \geq 2^{-k}$ on $E_{4}$,
this also completes the proof that $\hat{y}_{\infty,m} = \target_{\PXY}(X_{m}^{2})$ when $\gamma_{J_{m}} \geq 2^{-k}$ on $\bigcap_{j=0}^{4} E_{j}$.
\end{proof}

\begin{lemma}
\label{lem:lb-fraction-in-gamma-range}
There exists an $(\DataX_{1},\DataX_{2})$-measurable event $E_{5}$ of probability at least $1-\conf/64$ such that, 
on $E_{5}$, 
for every $k \in \{2,\ldots,k_{\eps}\}$ with $\Px\left( \bigcup \left\{ A \in J : \gamma_{A} \in \left[2^{-k}, 2^{1-k}\right] \right\} \right) \geq 2^{k-3} \eps / k_{\eps}$, 
\begin{equation*}
\left|\left\{ m \in \left\{1,\ldots,\tilde{m}_{k}\right\} : \gamma_{J_{m}} \in \left[2^{-k}, 2^{1-k}\right] \right\}\right|
\geq (1/2) \tilde{m}_{k} \Px\left( \bigcup \left\{ A \in J : \gamma_{A} \in \left[2^{-k}, 2^{1-k}\right] \right\} \right).
\end{equation*}
\end{lemma}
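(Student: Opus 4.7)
}

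The plan is to condition on $\DataX_{1}$ (which determines the partition $J$ and the probabilities $\Px(A)$ and $\gamma_{A}$ for each $A \in J$), then apply a standard multiplicative Chernoff bound to each $k$ separately, and finally union-bound over $k \in \{2,\ldots,k_{\eps}\}$. Once $\DataX_{1}$ is fixed, the random variables $\ind[X_{m}^{2} \in A]$ (for $m \in \nats$ and $A \in J$) are determined by the i.i.d.\ $\Px$-distributed sequence $\DataX_{2}$, so for each $k$ the count
\[
Z_{k} \;=\; \sum_{m=1}^{\tilde{m}_{k}} \ind\!\left[\gamma_{J_{m}} \in \left[2^{-k},2^{1-k}\right]\right] \;=\; \sum_{m=1}^{\tilde{m}_{k}} \ind\!\left[X_{m}^{2} \in S_{k}\right]
\]
(where $S_{k} = \bigcup\{A \in J : \gamma_{A} \in [2^{-k},2^{1-k}]\}$) is conditionally Binomial with mean $\mu_{k} = \tilde{m}_{k}\Px(S_{k})$.

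First I will restrict attention to those $k \in \{2,\ldots,k_{\eps}\}$ for which $\Px(S_{k}) \geq 2^{k-3}\eps / k_{\eps}$. For such $k$, plugging in the definition of $\tilde{m}_{k}$ gives
\[
\mu_{k} \;\geq\; \frac{16\max\{c,8\}\, k_{\eps}}{2^{k}\eps}\!\left(\vc\Log\!\left(\tfrac{2k_{\eps}}{\eps}\right) + \Log\!\left(\tfrac{64k_{\eps}}{\conf}\right)\right) \cdot \frac{2^{k-3}\eps}{k_{\eps}} \;\geq\; 16\, \Log\!\left(\tfrac{64 k_{\eps}}{\conf}\right).
\]
Then I apply the standard Chernoff inequality $\Pr[Z_{k} < (1/2)\mu_{k} \mid \DataX_{1}] \leq \exp(-\mu_{k}/8)$, which by the above bound on $\mu_{k}$ is at most $\bigl(\conf/(64 k_{\eps})\bigr)^{2} \leq \conf/(64\, k_{\eps}^{2})$.

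Next I union-bound over the at most $k_{\eps}-1 < k_{\eps}$ values of $k \in \{2,\ldots,k_{\eps}\}$: the conditional probability (given $\DataX_{1}$) that some $k$ with $\Px(S_{k}) \geq 2^{k-3}\eps/k_{\eps}$ has $Z_{k} < \mu_{k}/2$ is at most $\conf/64$. Since this bound holds for every realization of $\DataX_{1}$, integrating out $\DataX_{1}$ via the law of total probability yields an event $E_{5}$ of unconditional probability at least $1 - \conf/64$ on which, for every $k$ with $\Px(S_{k}) \geq 2^{k-3}\eps/k_{\eps}$, we have $Z_{k} \geq \mu_{k}/2 = (1/2)\tilde{m}_{k}\Px(S_{k})$, which is exactly the claim. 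Measurability of $E_{5}$ with respect to $(\DataX_{1},\DataX_{2})$ is immediate from the construction, since $J$ is $\DataX_{1}$-measurable and each $J_{m}$ is $(\DataX_{1},X_{m}^{2})$-measurable.

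There is no real obstacle here beyond arithmetic bookkeeping: the definition of $\tilde{m}_{k}$ was tuned precisely so that the product $\tilde{m}_{k} \cdot (2^{k-3}\eps/k_{\eps})$ dominates $\Log(k_{\eps}/\conf)$ by a comfortable constant margin, which is exactly what the multiplicative Chernoff bound requires in order to absorb the union bound.
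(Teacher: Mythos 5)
Your proposal is correct and follows essentially the same route as the paper's proof: condition on the first part of the data (which fixes $J$), apply a lower-tail multiplicative Chernoff bound to the binomial count $Z_{k}$ with mean $\mu_{k}=\tilde{m}_{k}\Px(S_{k})$ using the hypothesis $\Px(S_{k}) \geq 2^{k-3}\eps/k_{\eps}$ to control the exponent, then union-bound over $k$ and invoke the law of total probability. The paper bounds the failure probability for each $k$ by $\conf/(64 k_{\eps})$ directly rather than going through $(\conf/(64k_{\eps}))^{2}$, but this is an inessential bookkeeping difference; the key inequality $\tilde{m}_{k}\cdot 2^{k-3}\eps/k_{\eps} \geq 16\Log(64 k_{\eps}/\conf)$ from the definition of $\tilde{m}_{k}$ is identical.
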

\begin{proof}
Fix any $k \in \{2,\ldots, k_{\eps}\}$.
First, note that a Chernoff bound (under the conditional distribution given $J$) implies that,
with conditional probability (given $J$) at least 
\begin{equation*}
1 - \exp\left\{ - \frac{\tilde{m}_{k}}{8} \Px\left( \bigcup\left\{ A \in J : \gamma_{A} \in \left[2^{-k},2^{1-k}\right] \right\} \right) \right\}, 
\end{equation*}
we have
\begin{equation}
\label{eqn:lb-fraction-in-gamma-range-tilde-m}
\left| \left\{ m \in \left\{1,\ldots,\tilde{m}_{k}\right\} : \gamma_{J_{m}} \in \left[2^{-k},2^{1-k}\right] \right\} \right| 
\geq \frac{\tilde{m}_{k}}{2} \Px\left( \bigcup \left\{ A \in J : \gamma_{A} \in \left[2^{-k},2^{1-k}\right] \right\} \right).
\end{equation}
If $\Px\left( \bigcup\left\{ A \in J : \gamma_{A} \in \left[2^{-k},2^{1-k}\right] \right\} \right) \geq 2^{k-3} \eps / k_{\eps}$, 
then 
\begin{align*}
&\exp\left\{ - \frac{\tilde{m}_{k}}{8} \Px\left( \bigcup\left\{ A \in J : \gamma_{A} \in \left[2^{-k},2^{1-k}\right] \right\} \right) \right\}
\\ & \leq \exp\left\{ - \frac{8 k_{\eps}}{2^{k}\eps} \Log\left( \frac{64 k_{\eps}}{\conf} \right) 2^{k-3} \eps / k_{\eps} \right\}
= \exp\left\{ - \Log\left( \frac{64 k_{\eps}}{\conf} \right) \right\}
= \frac{\conf}{64 k_{\eps}}.
\end{align*}
Thus, by the law of total probability, there is an event $G_{5}(k)$ of probability at least $1 - \conf / (64 k_{\eps})$ such that, 
on $G_{5}(k)$, if $\Px\left( \bigcup\left\{ A \in J : \gamma_{A} \in \left[2^{-k},2^{1-k}\right] \right\} \right) \geq 2^{k-3} \eps / k_{\eps}$,
then \eqref{eqn:lb-fraction-in-gamma-range-tilde-m} holds.
This holds for all $k \in \{2,\ldots,k_{\eps}\}$ on the event $E_{5} = \bigcap_{k=2}^{k_{\eps}} G_{5}(k)$,
which has probability at least $1-\conf/64$ by a union bound. 
\end{proof}

We are now ready to apply the above results to characterize the behavior of \RQCAL.
For simplicity, we begin with the case of an infinite budget $n$, so that the algorithm proceeds until $m = \tilde{m}$;
later, we discuss sufficient finite sizes of $n$ to retain this behavior.

\begin{lemma}
\label{lem:V-consistent}
Consider running \RQCAL~with budget $\infty$.
On the event $\bigcap_{j=0}^{4} E_{j}$, 
$\forall k \!\in\! \{2,\ldots,k_{\eps}\}$, $\forall m \in \left\{ 1,\ldots,\tilde{m}_{k} \right\}$,
$\target_{\PXY} \in V_{m}$ and 
\begin{equation*}
V_{m} \subseteq \left\{ h \in \C : \forall m^{\prime} \leq m \text{ with } \gamma_{J_{m^{\prime}}} \geq 2^{-k}, h(X_{m^{\prime}}^{2}) = \target_{\PXY}(X_{m^{\prime}}^{2}) \right\}.
\end{equation*}
\end{lemma}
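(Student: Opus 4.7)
The plan is to prove both claims simultaneously by induction on $m$, with the budget $n=\infty$ ensuring the outer loop of \RQCAL{} processes every $m \in \{1,\ldots,\tilde{m}\}$. The base case $m=0$ is trivial since $V_{0}=\C$ contains $\target_{\PXY}$ and the index set $\{m^{\prime} \leq 0 : \gamma_{J_{m^{\prime}}} \geq 2^{-k}\}$ is empty, so the inclusion is vacuous. Everything then flows from the behavior of \Filter{} as characterized in Lemma~\ref{lem:filter-good-labels}, together with the monotonicity observation that $m^{\prime} \leq m \leq \tilde{m}_{k}$ forces $\tilde{k}_{m^{\prime}} \geq k$ and hence $2^{-\tilde{k}_{m^{\prime}}} \leq 2^{-k}$.

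For the inductive step at a fixed $m$, I would split on whether $X_{m}^{2} \in \DIS(V_{m-1})$. If not, then $V_{m}=V_{m-1}$ by Step~8, and since by the inductive hypothesis $\target_{\PXY} \in V_{m-1}$, every $h \in V_{m-1}$ agrees with $\target_{\PXY}$ at $X_{m}^{2}$ by definition of $\DIS$, so the additional index $m^{\prime}=m$ is handled automatically and the inductive hypothesis transfers. If instead $X_{m}^{2} \in \DIS(V_{m-1})$, then \Filter{} runs with arguments $(\infty,m)$ and returns $(\hat{q}_{\infty,m}, \hat{y}_{\infty,m})$. Lemma~\ref{lem:filter-good-labels} gives $\hat{y}_{\infty,m} \in \{0, \target_{\PXY}(X_{m}^{2})\}$ on $\bigcap_{j=0}^{4} E_{j}$.

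The two sub-cases are then straightforward. If $\hat{y}_{\infty,m}=\target_{\PXY}(X_{m}^{2}) \in \Y$, the condition in Step~5 is satisfied (with $\target_{\PXY} \in V_{m-1}$ serving as the witness $h$), so Step~6 sets $V_{m} = \{h \in V_{m-1} : h(X_{m}^{2}) = \target_{\PXY}(X_{m}^{2})\}$; this clearly retains $\target_{\PXY}$ and strengthens the inclusion to cover $m^{\prime}=m$ as well. If $\hat{y}_{\infty,m}=0$, Step~7 sets $V_{m}=V_{m-1}$, so $\target_{\PXY} \in V_{m}$ by the inductive hypothesis; and the inclusion at $m^{\prime}=m$ is vacuous for every relevant $k$ because the second clause of Lemma~\ref{lem:filter-good-labels} (contrapositive) forces $\gamma_{J_{m}} < 2^{-\tilde{k}_{m}} \leq 2^{-k}$ whenever $\hat{y}_{\infty,m}=0$, which violates the hypothesis $\gamma_{J_{m^{\prime}}} \geq 2^{-k}$ in the inclusion at $m^{\prime}=m$.

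The only subtle point—and the step I would draft most carefully—is verifying that the third, awkward branch inside Step~5 (where $\hat{y}_{\infty,m} \in \Y$ but no $h \in V_{m-1}$ realizes this label) never actually occurs under the inductive hypothesis: since $\target_{\PXY} \in V_{m-1}$ and $\hat{y}_{\infty,m} = \target_{\PXY}(X_{m}^{2})$, the classifier $\target_{\PXY}$ itself witnesses the existential, so this branch is impossible. This rules out the only way the argument could lose $\target_{\PXY}$ from $V_{m}$, and the induction closes.
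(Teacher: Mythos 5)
Your proposal is correct and follows essentially the same path as the paper's proof: fix $k$, induct on $m$, split on whether $X_m^2 \in \DIS(V_{m-1})$, and in the filter case use Lemma~\ref{lem:filter-good-labels} together with the observation that $m \le \tilde{m}_k$ implies $\tilde{k}_m \ge k$. Your attention to the Step~5/Step~7 branching (that the inductive hypothesis $\target_{\PXY} \in V_{m-1}$ guarantees the existential is satisfied whenever $\hat{y}_{\infty,m} \in \Y$) is exactly the point the paper also relies on.
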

\begin{proof}
Fix any $k \in \{2,\ldots,k_{\eps}\}$.
We proceed by induction.  The claim is clearly satisfied for $V_{0} = \C$.
Now take as the inductive hypothesis that, for some $m \in \left\{1,\ldots,\tilde{m}_{k}\right\}$,
$\target_{\PXY} \in V_{m-1} \subseteq \left\{h \in \C : \forall m^{\prime} \leq m-1 \text{ with } \gamma_{J_{m^{\prime}}} \geq 2^{-k}, h(X_{m^{\prime}}^{2}) = \target_{\PXY}(X_{m^{\prime}}^{2})\right\}$.

If $X_{m}^{2} \notin \DIS(V_{m-1})$, then we have $V_{m} = V_{m-1}$, so that $\target_{\PXY} \in V_{m}$ as well.
Furthermore, since $\target_{\PXY} \in V_{m-1}$, the fact that $X_{m}^{2} \notin \DIS(V_{m-1})$ implies 
that every $h \in V_{m}$ has $h(X_{m}^{2}) = \target_{\PXY}(X_{m}^{2})$.  Therefore, 
\begin{align*}
& V_{m} 
= V_{m-1} \cap \left\{h \in \C : h(X_{m}^{2}) = \target_{\PXY}(X_{m}^{2}) \right\} 
\\ & \subseteq \left\{h \in \C : \forall m^{\prime} \leq m-1 \text{ with } \gamma_{J_{m^{\prime}}} \geq 2^{-k}, h(X_{m^{\prime}}^{2}) = \target_{\PXY}(X_{m^{\prime}}^{2})\right\} 
\\ & {\hskip 5.4cm}\cap \left\{h \in \C : h(X_{m}^{2}) = \target_{\PXY}(X_{m}^{2})\right\}
\\ & \subseteq \left\{h \in \C : \forall m^{\prime} \leq m \text{ with } \gamma_{J_{m^{\prime}}} \geq 2^{-k}, h(X_{m^{\prime}}^{2}) = \target_{\PXY}(X_{m^{\prime}}^{2})\right\}.
\end{align*}

Next, consider the case that $X_{m}^{2} \in \DIS(V_{m-1})$.
Lemma~\ref{lem:filter-good-labels} implies that on $\bigcap_{j=0}^{4} E_{j}$, 
$\hat{y}_{\infty,m} \in \{0,\target_{\PXY}(X_{m}^{2})\}$.  If $\hat{y}_{\infty,m} = 0$,
then $V_{m} = V_{m-1}$, so that $\target_{\PXY} \in V_{m}$ by the inductive hypothesis.
Furthermore, since $k \leq \tilde{k}_{m}$, Lemma~\ref{lem:filter-good-labels} implies that on $\bigcap_{j=0}^{4} E_{j}$,
if $\gamma_{J_{m}} \geq 2^{-k}$ then $\hat{y}_{\infty,m} \neq 0$; thus, if $\hat{y}_{\infty,m} = 0$,
we have $\gamma_{J_{m}} < 2^{-k}$, so that
\begin{align*}
V_{m} = V_{m-1} 
& \subseteq \left\{h \in \C : \forall m^{\prime} \leq m-1 \text{ with } \gamma_{J_{m^{\prime}}} \geq 2^{-k}, h(X_{m^{\prime}}^{2}) = \target_{\PXY}(X_{m^{\prime}}^{2})\right\}
\\ & = \left\{h \in \C : \forall m^{\prime} \leq m \text{ with } \gamma_{J_{m^{\prime}}} \geq 2^{-k}, h(X_{m^{\prime}}^{2}) = \target_{\PXY}(X_{m^{\prime}}^{2})\right\}.
\end{align*}

On the other hand, if $\hat{y}_{\infty,m} = \target_{\PXY}(X_{m}^{2})$, then since
$\target_{\PXY} \in V_{m-1}$ by the inductive hypothesis, the condition in Step 5 
will be satisfied, so that we have $V_{m} = \left\{h \!\in\! V_{m-1} : h(X_{m}^{2}) = \target_{\PXY}(X_{m}^{2})\right\}$.
In particular, this implies $\target_{\PXY} \in V_{m}$ as well, and combined with the 
inductive hypothesis,
we have 
\begin{align*}
V_{m} & = V_{m-1} \cap \left\{h \in \C : h(X_{m}^{2}) = \target_{\PXY}(X_{m}^{2})\right\}
\\ & \subseteq \left\{h \in \C : \forall m^{\prime} \leq m \text{ with } \gamma_{J_{m^{\prime}}} \geq 2^{-k}, h(X_{m^{\prime}}^{2}) = \target_{\PXY}(X_{m^{\prime}}^{2})\right\}.
\end{align*}
The result follows by the principle of induction.
\end{proof}

In particular, this implies the following result.

\begin{lemma}
\label{lem:infinite-budget-error}
There exists an event $E_{6}$ of probability at least $1-\conf/64$ such that,
on $\bigcap_{j=0}^{6} E_{j}$,
the classifier $\hat{h}_{\infty}$ produced by running \RQCAL~with budget $\infty$
has $\er_{\PXY}(\hat{h}_{\infty}) - \er_{\PXY}(\target_{\PXY}) \leq \eps$.
\end{lemma}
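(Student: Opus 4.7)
The plan is to bound the excess error of $\hat h_\infty$ via Lemma~\ref{lem:excess-error-gamma-swap}, which reduces the task to controlling the weighted disagreement $\int \ind[\hat h_\infty \neq \target_{\PXY}]\,2\gamma_{J(x)}\,\Px({\rm d}x)$, and then to handle this integral by partitioning $\X$ into geometric noise scales and running a separate VC-relative $\eps$-net argument at each scale.

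For each $k \in \{2,\ldots,k_\eps\}$ I would set $U^{(k)} = \bigcup\{A \in J : \gamma_A \geq 2^{-k}\}$ and define $E_6$ as the intersection over $k$ of the events, conditionally on $\DataX_1$, furnished by Corollary~\ref{cor:vc-relative} applied to the collection $\T_k = \{\{h \neq g\} \cap U^{(k)} : h, g \in \C\}$ (which has VC dimension $\leq 10\vc$ by Lemma~\ref{lem:vc-of-pairs}, since $U^{(k)}$ is a fixed measurable set once $\DataX_1$ is fixed) with the samples $X^2_1,\ldots,X^2_{\tilde m_k}$, precision $\eps'_k$ of order $2^{k}\eps/k_\eps$, and confidence $\conf/(64 k_\eps)$. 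The factors $16 \max\{c,8\}$ and $\vc \Log(2k_\eps/\eps) + \Log(64 k_\eps/\conf)$ in the definition of $\tilde m_k$ are chosen precisely to satisfy the sample-size requirement of Corollary~\ref{cor:vc-relative} at these parameters. A union bound over $k$ together with the law of total probability then gives $\P(E_6) \geq 1 - \conf/64$. The upshot of $E_6$ is that, for each $k$, any $h \in \C$ with $h(X^2_m) = \target_{\PXY}(X^2_m)$ for all $m \leq \tilde m_k$ with $X^2_m \in U^{(k)}$ satisfies $\Px(\{x \in U^{(k)} : h(x) \neq \target_{\PXY}(x)\}) \leq \eps'_k$.

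On $\bigcap_{j=0}^6 E_j$, Lemma~\ref{lem:V-consistent} says exactly that $\hat h_\infty \in V_{\tilde m_k}$ agrees with $\target_{\PXY}$ on every $X^2_m \in U^{(k)}$ with $m \leq \tilde m_k$, so $\Px(\{x \in U^{(k)} : \hat h_\infty \neq \target_{\PXY}\}) \leq \eps'_k$ for each $k$. Plugging this into the decomposition of the Lemma~\ref{lem:excess-error-gamma-swap} integral over the bands $U_k = \bigcup\{A \in J : 2^{-k} \leq \gamma_A < 2^{1-k}\} \subseteq U^{(k)}$ for $k \in \{2,\ldots,k_\eps\}$ together with the deep-noise region $\{A : \gamma_A < 2^{-k_\eps}\}$: on $U_k$, $2\gamma_A < 2^{2-k}$, so the contribution of each band is at most $2^{2-k} \eps'_k = O(\eps/k_\eps)$, summing over the $k_\eps - 1$ levels to $O(\eps)$. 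For the deep-noise region, the inequalities $2^{-k_\eps} \leq \hat\gamma_\eps/8 \leq \gamma_\eps/4$ together with the second claim of Lemma~\ref{lem:bucketed-gamma-probs} bound the mass by $3\eps/(2\gamma_\eps)$, giving total contribution at most $2 \cdot 2^{-k_\eps} \cdot 3\eps/(2\gamma_\eps) \leq 3\eps/8$. The additive $5\tau = 5\conf\eps/(512\tilde m)$ is negligible, and choosing constants carefully shows the overall bound is at most $\eps$.

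The main obstacle is the multi-scale bookkeeping: one must coordinate the algorithm's per-level budget $\tilde m_k \propto 2^{-k}$, the slack $\eps'_k \propto 2^{k}$ at which the VC-relative inequality is applied, and the integrand weight $2\gamma_A \leq 2^{2-k}$ on $U_k$ so that each of the $k_\eps$ levels contributes only $O(\eps/k_\eps)$, while the deep-noise tail is controlled separately via Lemma~\ref{lem:bucketed-gamma-probs}. The specific constants built into $\tilde m_k$, $\tau$, $k_\eps$, and $q_{\eps,\conf}$ in the algorithm's definitions are calibrated precisely for this telescoping argument to close.
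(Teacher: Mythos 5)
Your proposal is correct and follows the same overall strategy as the paper's proof: reduce via Lemma~\ref{lem:excess-error-gamma-swap} to a weighted-disagreement integral, decompose into geometric noise bands indexed by $k$, bound each band's disagreement using a VC-type uniform bound at a band-dependent precision $\propto 2^k\eps/k_\eps$, and handle the deep-noise tail via the second claim of Lemma~\ref{lem:bucketed-gamma-probs}. The telescoping of the per-band weight $2\gamma_A < 2^{2-k}$ against the band precision $\eps'_k \propto 2^k\eps/k_\eps$ to give $O(\eps/k_\eps)$ per band, the $3\eps/(2\gamma_\eps)$ tail bound, and the $5\tau$ slack are all the same as in the paper.

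The one genuine difference is the technical route used to define $E_6$. The paper conditions on the samples $X^2_m$ that land in the noise band $\bigcup\{A\in J : \gamma_A \in [2^{-k},2^{1-k}]\}$, treats them as a fresh conditionally-i.i.d.\ sample, and applies Lemma~\ref{lem:vc-cover} to it; this requires a lower bound on how many samples land in the band, which is exactly what $E_5$ (Lemma~\ref{lem:lb-fraction-in-gamma-range}) supplies. You instead keep all $\tilde m_k$ unconditional samples, restrict the index collection to $\T_k = \{\DIS(\{h,g\})\cap U^{(k)} : h,g\in\C\}$ (still VC dimension $\le 10\vc$, since intersecting every set with a fixed $U^{(k)}$ can only shrink shattered sets), and apply Corollary~\ref{cor:vc-relative} directly at precision $\eps'_k$. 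This sidesteps $E_5$ entirely for this lemma, at the cost of a slightly coarser restriction (to $U^{(k)}$ rather than a single band) that works here because $U_k\subseteq U^{(k)}$ and the band weight already absorbs the scaling. Both versions close with the same constants built into $\tilde m_k$, since $16\max\{c,8\} \ge 160 c_0$ covers the $10\vc$ VC factor with $\eps'_k = 2^k\eps/(8k_\eps)$. Your proof in fact establishes the conclusion on the slightly larger event $\bigcap_{j\in\{0,\dots,4,6\}}E_j$, which of course still gives it on $\bigcap_{j=0}^6 E_j$ as required.
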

\begin{proof}
Fix any $k \in \{2,\ldots,k_{\eps}\}$
and let $\hat{\ell}_{k} = \left\lceil (1/2) \tilde{m}_{k} \Px\left( \bigcup\left\{ A \in J : \gamma_{A} \in \left[2^{-k},2^{1-k}\right] \right\} \right) \right\rceil$.
Note that 
\begin{align*}
\hat{\ell}_{k} \geq \frac{8 c k_{\eps} \Px\left( \bigcup\left\{A \!\in\! J : \gamma_{A} \in \left[2^{-k},2^{1-k}\right]\right\}\right) }{2^{k} \eps} 
\Bigg( \vc \Log&\left( \frac{8 k_{\eps} \Px\left( \bigcup\left\{A \!\in\! J : \gamma_{A} \in \left[2^{-k},2^{1-k}\right]\right\}\right) }{2^{k} \eps} \right) 
\\ & + \Log\left( \frac{64 k_{\eps}}{\conf} \right) \Bigg),
\end{align*}
for $c$ as in Lemma~\ref{lem:vc-cover}.
Let $\hat{m}_{k} = \min\left\{ m \in \nats : \sum_{m^{\prime} = 1}^{m} \ind_{\left[2^{-k},2^{1-k}\right]}(\gamma_{J_{m^{\prime}}}) = \hat{\ell}_{k} \right\} \cup \{\infty\}$.
Note that, if $\hat{m}_{k} < \infty$, then the sequence $\left\{ X_{m}^{2} : 1 \leq m \leq \hat{m}_{k}, \gamma_{J_{m}} \in \left[2^{-k},2^{1-k}\right] \right\}$ is
conditionally i.i.d. (given $J$ and $\hat{m}_{k}$), with conditional distributions $\Px\left( \cdot \Big| \bigcup \left\{ A \in J : \gamma_{A} \in \left[2^{-k},2^{1-k}\right] \right\} \right)$.
Applying Lemma~\ref{lem:vc-cover} to these samples implies that there exists an event of conditional probability (given $J$ and $\hat{m}_{k}$) at least $1-\conf / (64 k_{\eps})$ on which,
if $\hat{m}_{k} < \infty$ and $\Px\left( \bigcup \left\{ A \in J : \gamma_{A} \in \left[2^{-k},2^{1-k}\right]\right\}\right) > \frac{2^{k} \eps}{8 k_{\eps}}$, 
letting 
\begin{equation*}
\H_{k} = \left\{ h \in \C : \forall m \leq \hat{m}_{k} \text{ with } \gamma_{J_{m}} \in \left[2^{-k},2^{1-k}\right], h(X_{m}^{2}) = \target_{\PXY}(X_{m}^{2}) \right\},
\end{equation*}
every $h \in \H_{k}$ has
\begin{equation*}
\Px\left(x : h(x) \neq \target_{\PXY}(x) \Big| \gamma_{J(x)} \in \left[2^{-k},2^{1-k}\right]\right)
\leq \frac{2^{k} \eps}{8 k_{\eps} \Px\left( \bigcup\left\{ A \in J : \gamma_{A} \in \left[2^{-k},2^{1-k}\right]\right\} \right)},
\end{equation*}
which implies
\begin{equation*}
\Px\left(x : h(x) \neq \target_{\PXY}(x) \text{ and } \gamma_{J(x)} \in \left[2^{-k},2^{1-k}\right]\right)
\leq \frac{2^{k} \eps}{8 k_{\eps}}.
\end{equation*}
By the law of total probability and a union bound, there exists an event $E_{6}$ of probability at least $1-\conf/64$ 
on which this holds for every $k \in \{2,\ldots,k_{\eps}\}$.

Lemma~\ref{lem:V-consistent} implies that, on $\bigcap_{j=0}^{4} E_{j}$, $\forall k \in \{2,\ldots,k_{\eps}\}$,
\begin{equation*}
V_{\tilde{m}} \subseteq V_{\tilde{m}_{k}} \subseteq \left\{ h \in \C : \forall m \leq \tilde{m}_{k} \text{ with } \gamma_{J_{m}} \geq 2^{-k}, h(X_{m}^{2}) = \target_{\PXY}(X_{m}^{2}) \right\}.
\end{equation*}
Lemma~\ref{lem:lb-fraction-in-gamma-range} implies that, on $E_{5}$, $\forall k \in \{2,\ldots,k_{\eps}\}$,
if $\Px\left(\bigcup \left\{A \in J : \gamma_{A} \in \left[2^{-k},2^{1-k}\right]\right\}\right) > \frac{2^{k} \eps}{8 k_{\eps}}$,
then
$\left|\left\{ m \in \left\{1,\ldots,\tilde{m}_{k}\right\} : \gamma_{J_{m}} \in \left[2^{-k}, 2^{1-k}\right] \right\}\right|
\geq (1/2) \tilde{m}_{k} \Px\left( \bigcup \left\{ A \in J : \gamma_{A} \in \left[2^{-k}, 2^{1-k}\right] \right\} \right)$,
so that $\hat{m}_{k} \leq \tilde{m}_{k}$.  In particular, this implies $\hat{m}_{k} < \infty$ and 
\begin{equation*}
\left\{ h \in \C : \forall m \leq \tilde{m}_{k} \text{ with } \gamma_{J_{m}} \geq 2^{-k}, h(X_{m}^{2}) = \target_{\PXY}(X_{m}^{2}) \right\} \subseteq \H_{k}.
\end{equation*}
Combining the above three results, we have that on $\bigcap_{j=0}^{6} E_{j}$, 
for every $k \in \{2,\ldots,k_{\eps}\}$ with $\Px\left(\bigcup \left\{A \in J : \gamma_{A} \in \left[2^{-k},2^{1-k}\right]\right\}\right) > \frac{2^{k} \eps}{8 k_{\eps}}$,
$V_{\tilde{m}} \subseteq \H_{k}$, and therefore every $h \in V_{\tilde{m}}$ has 
\begin{equation*}
\Px\left(x : h(x) \neq \target_{\PXY}(x) \text{ and } \gamma_{J(x)} \in \left[2^{-k},2^{1-k}\right]\right) \leq \frac{2^{k} \eps}{8 k_{\eps}}.
\end{equation*}
Furthermore, for every $k \in \{2,\ldots,k_{\eps}\}$ with $\Px\left(\bigcup \left\{A \in J : \gamma_{A} \in \left[2^{-k},2^{1-k}\right]\right\}\right) \leq \frac{2^{k} \eps}{8 k_{\eps}}$,
we also have that every $h \in V_{\tilde{m}}$ satisfies
\begin{multline*}
\Px\left(x : h(x) \neq \target_{\PXY}(x) \text{ and } \gamma_{J(x)} \in \left[2^{-k},2^{1-k}\right]\right) 
\\ \leq \Px\left(\bigcup \left\{A \in J : \gamma_{A} \in \left[2^{-k},2^{1-k}\right]\right\}\right) \leq \frac{2^{k} \eps}{8 k_{\eps}}.
\end{multline*}
Combined with Lemma~\ref{lem:excess-error-gamma-swap}, we have that on $\bigcap_{j=0}^{6} E_{j}$, every $h \in V_{\tilde{m}}$ has
\begin{align}
& \er_{\PXY}(h) - \er_{\PXY}(\target_{\PXY})
\leq 5\tau + \int \ind\left[h(x) \neq \target_{\PXY}(x)\right] 2\gamma_{J(x)} \Px({\rm d}x) \notag
\\ & 
\leq 5\tau + 2^{1-k_{\eps}} \Px\left( x : h(x) \neq \target_{\PXY}(x) \text{ and } \gamma_{J(x)} \leq 2^{-k_{\eps}} \right) \notag
\\ & \phantom{\leq } +
\sum_{k=2}^{k_{\eps}} 2^{2-k} \Px\left( x : h(x) \neq \target_{\PXY}(x) \text{ and } \gamma_{J(x)} \in \left[2^{-k},2^{1-k}\right] \right) \notag
\\ & \leq
5\tau + 2^{1-k_{\eps}} \Px\left( \bigcup \left\{ A \in J : \gamma_{A} \leq 2^{-k_{\eps}} \right\} \right) 
+ \sum_{k=2}^{k_{\eps}} 2^{2-k} \frac{2^{k} \eps}{8 k_{\eps}}. \label{eqn:excess-error-epsilon-prebound}
\end{align}
Next, note that $\sum_{k=2}^{k_{\eps}} 2^{2-k} \frac{2^{k} \eps}{8 k_{\eps}} = (k_{\eps}-1) \frac{\eps}{2 k_{\eps}} \leq \frac{\eps}{2}$.
Furthermore, since $2^{-k_{\eps}} \leq \hat{\gamma}_{\eps}/8 < \gamma_{\eps}/4$,
Lemma~\ref{lem:bucketed-gamma-probs} implies that, on $E_{1}$, 
\begin{equation*}
\Px\left( \bigcup \left\{ A \in J : \gamma_{A} \leq 2^{-k_{\eps}} \right\} \right)
\leq \frac{3 \eps}{2 \gamma_{\eps}}.
\end{equation*}
Plugging these facts into \eqref{eqn:excess-error-epsilon-prebound} reveals that, on $\bigcap_{j=0}^{6} E_{j}$, $\forall h \in V_{\tilde{m}}$,
\begin{equation*}
\er_{\PXY}(h) - \er_{\PXY}(\target_{\PXY})
\leq 5\tau + 2^{1-k_{\eps}} \frac{3 \eps}{2 \gamma_{\eps}} +  \frac{\eps}{2}
\leq 
5\tau + \frac{3}{8} \eps + \frac{\eps}{2}
\leq \frac{453}{512} \eps < \eps.
\end{equation*}
The result follows by noting that, when the budget is set to $\infty$,
\RQCAL~definitely reaches $m=\tilde{m}$ before halting, so that $\hat{h}_{\infty} \in V_{\tilde{m}}$.
\end{proof}

The only remaining question is how many label requests the algorithm makes 
in the process of producing this $\hat{h}_{\infty}$, so that taking a budget $n$ of at least
this size is equivalent to having an infinite budget.  
This question is addressed by the following sequence of lemmas.

\begin{lemma}
\label{lem:rc-simulated-query-bound-basic}
Consider running \RQCAL~with budget $\infty$.
There exists
an event $E_{7}$ of probability at least $1-\conf/64$ such that,
on $E_{1} \cap E_{7}$, $\forall k \in \{2,\ldots,k_{\eps}\}$,
\begin{multline*}
\left| \left\{ m \in \left\{1,\ldots,\tilde{m}_{k}\right\} : \gamma_{J_{m}} \leq 2^{1-k}, X_{m}^{2} \in \DIS(V_{m-1}) \right\}\right|
\\ \leq
17 \max\left\{\Px\left(x : \gamma_{x} < 2^{3-k}\right), \frac{\eps}{2\hat{\gamma}_{\eps}}\right\} \tilde{m}_{k}.
\end{multline*}
\end{lemma}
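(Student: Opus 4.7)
The plan is to drop the additional constraint $X_{m}^{2} \in \DIS(V_{m-1})$, which only shrinks the set, and instead bound the (larger) count
\begin{equation*}
Y_{k} = \left|\left\{ m \in \{1,\ldots,\tilde{m}_{k}\} : \gamma_{J_{m}} \leq 2^{1-k} \right\}\right|
= \sum_{m=1}^{\tilde{m}_{k}} \ind\!\left[ X_{m}^{2} \in B_{k} \right], \quad B_{k} = \bigcup\{ A \in J : \gamma_{A} \leq 2^{1-k}\}.
\end{equation*}
Since $J$ is $\DataX_{1}$-measurable, the $X_{m}^{2}$ are i.i.d. $\Px$-distributed and independent of $\DataX_{1}$, so conditionally on $\DataX_{1}$ we have $Y_{k} \sim \text{Binomial}(\tilde{m}_{k}, p_{k})$ with $p_{k} = \Px(B_{k})$.

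The key estimate on $p_{k}$ comes from Lemma~\ref{lem:bucketed-gamma-probs} applied with $\gamma = 2^{1-k}$: on $E_{1}$, if $2^{1-k} > \gamma_{\eps}/4$ then $p_{k} \leq 3\Px(x:\gamma_{x} < 2^{3-k})$, and otherwise $p_{k} \leq 3\eps/(2\gamma_{\eps}) \leq 3\eps/(2\hat{\gamma}_{\eps})$ (using $\hat{\gamma}_{\eps} \leq \gamma_{\eps}$). Thus, writing $p_{k}' = \max\{\Px(x:\gamma_{x} < 2^{3-k}),\, \eps/(2\hat{\gamma}_{\eps})\}$, we have $p_{k} \leq 3 p_{k}'$ on $E_{1}$ for every $k \in \{2,\ldots,k_{\eps}\}$.

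A multiplicative Chernoff bound then yields, conditionally on $\DataX_{1}$,
\begin{equation*}
\P\!\left( Y_{k} \geq 17 \tilde{m}_{k} p_{k}' \,\big|\, \DataX_{1} \right)
\;\leq\; \left(\frac{e \cdot 3 p_{k}' \tilde{m}_{k}}{17 p_{k}' \tilde{m}_{k}}\right)^{17 \tilde{m}_{k} p_{k}'}
\;\leq\; e^{-c\, \tilde{m}_{k} p_{k}'}
\end{equation*}
for some fixed constant $c > 0$ (since $17/3 > e$). The bookkeeping on $\tilde{m}_{k} p_{k}'$ is the only slightly delicate calculation: in the worst case only the $\eps/(2\hat{\gamma}_{\eps})$ term is active, and using $2^{k} \leq 2^{k_{\eps}} \leq 16/\hat{\gamma}_{\eps}$ together with the definition of $\tilde{m}_{k}$ gives
\begin{equation*}
\tilde{m}_{k} p_{k}' \;\geq\; \frac{16 k_{\eps}}{2^{k}\eps}\Log\!\left(\frac{64 k_{\eps}}{\conf}\right) \cdot \frac{\eps}{2\hat{\gamma}_{\eps}}
\;\geq\; \frac{k_{\eps}}{2}\Log\!\left(\frac{64 k_{\eps}}{\conf}\right),
\end{equation*}
which exceeds the threshold needed to make the Chernoff upper bound at most $\conf/(64 k_{\eps})$.

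Take $E_{7}$ to be the intersection, over $k \in \{2,\ldots,k_{\eps}\}$, of the events $\{Y_{k} \leq 17 \tilde{m}_{k} p_{k}'\}$. By the law of total probability (integrating out $\DataX_{1}$) and a union bound, $\P(E_{7}) \geq 1 - k_{\eps} \cdot \conf/(64 k_{\eps}) = 1 - \conf/64$. On $E_{1} \cap E_{7}$, the constrained count in the lemma statement is bounded by $Y_{k} \leq 17 \tilde{m}_{k} p_{k}'$, which is exactly the claimed inequality. The only real obstacle is the $\tilde{m}_{k} p_{k}'$ lower bound above; everything else is routine.
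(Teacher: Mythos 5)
Your overall strategy is the same as the paper's: drop the $\DIS(V_{m-1})$ constraint, recognize the resulting count as a conditional binomial given $\DataX_{1}$, control the success probability $p_k$ via Lemma~\ref{lem:bucketed-gamma-probs}, apply a Chernoff bound, and union-bound over $k$. The algebra to verify that $\tilde{m}_{k}p_{k}'$ is large enough (using $2^{k_{\eps}} \leq 16/\hat{\gamma}_{\eps}$) is also correct and parallels what the paper does with its additive $\log_{2}$ slack term.

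However, there is a genuine gap in the final probability accounting. You bound $\P\left(Y_{k} \geq 17\tilde{m}_{k}p_{k}' \mid \DataX_{1}\right)$ by replacing the (random) mean $\tilde{m}_{k}p_{k}$ with $3\tilde{m}_{k}p_{k}'$, but the inequality $p_{k} \leq 3p_{k}'$ is only available on $E_{1}$, which is a proper subset of the $\DataX_{1}$-sample space. Consequently, the law of total probability does not give $\P\left(Y_{k} > 17\tilde{m}_{k}p_{k}'\right) \leq \conf/(64k_{\eps})$ unconditionally; it only gives $\P\left(Y_{k} > 17\tilde{m}_{k}p_{k}', E_{1}\right) \leq \conf/(64k_{\eps})$, and in general $\P\left(Y_{k} > 17\tilde{m}_{k}p_{k}'\right)$ could be as large as $\conf/(64k_{\eps}) + \P(\neg E_{1})$. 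So the claim $\P(E_{7}) \geq 1-\conf/64$ does not follow from the argument as written. The paper sidesteps this by defining its Chernoff event $G_{7}(k)$ in terms of the random quantity $\Px\left(\bigcup\{A \in J : \gamma_{A} \leq 2^{1-k}\}\right)$ itself (so the Chernoff conclusion holds with the required probability for every realization of $J$), and then only invokes Lemma~\ref{lem:bucketed-gamma-probs} when restricting to $E_{1} \cap G_{7}(k)$. Your proof can be repaired in the same spirit: redefine $E_{7}$ as the intersection over $k$ of $\{Y_{k} \leq 17\tilde{m}_{k}p_{k}'\} \cup \{p_{k} > 3p_{k}'\}$, so that the complement of each piece is contained in the event where the conditional Chernoff bound actually applies (and thus has probability at most $\conf/(64k_{\eps})$), while on $E_{1} \cap E_{7}$ the condition $p_{k} \leq 3p_{k}'$ from $E_{1}$ forces $Y_{k} \leq 17\tilde{m}_{k}p_{k}'$. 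With this fix the argument is complete.
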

\begin{proof}
Fix any $k \in \{2,\ldots,k_{\eps}\}$.
By a Chernoff bound (applied under the conditional given $J$) and the law of total probability, 
there is an event $G_{7}(k)$ of probability at least $1 - \frac{\conf}{64 k_{\eps}}$, on which
\begin{equation*}
\left| \left\{ m \in \left\{1,\ldots,\tilde{m}_{k}\right\} : \gamma_{J_{m}} \leq 2^{1-k} \right\} \right|
\leq \log_{2}\left(\frac{64 k_{\eps}}{\conf}\right) + 2e \Px\left( \bigcup\left\{ A \in J : \gamma_{A} \leq 2^{1-k} \right\} \right) \tilde{m}_{k}.
\end{equation*}
Lemma~\ref{lem:bucketed-gamma-probs} implies that, on $E_{1}$, 
\begin{equation*}
\Px\left( \bigcup\left\{ A \in J : \gamma_{A} \leq 2^{1-k}\right\}\right) 
\leq \max\left\{ 3 \Px\left(x : \gamma_{x} < 2^{3-k}\right), \frac{3\eps}{2\gamma_{\eps}} \right\}.
\end{equation*}
Therefore, on $E_{1} \cap G_{7}(k)$, 
\begin{align}
& \left| \left\{ m \in \left\{1,\ldots,\tilde{m}_{k}\right\} : \gamma_{J_{m}} \leq 2^{1-k}, X_{m}^{2} \in \DIS(V_{m-1}) \right\} \right|
\leq \left| \left\{ m \in \left\{1,\ldots,\tilde{m}_{k}\right\} : \gamma_{J_{m}} \leq 2^{1-k} \right\} \right| \notag
\\ & \leq \log_{2}\left(\frac{64 k_{\eps}}{\conf}\right) + 6 e \max\left\{ \Px\left(x : \gamma_{x} < 2^{3-k}\right), \frac{\eps}{2\gamma_{\eps}} \right\} \tilde{m}_{k}.
\label{eqn:rc-simulated-query-bound-basic-1}
\end{align}
Furthermore, since $\hat{\gamma}_{\eps} \leq \gamma_{\eps}$, and
\begin{equation*}
\frac{\eps}{2\hat{\gamma}_{\eps}} \tilde{m}_{k}
\geq \frac{64}{2^{k_{\eps}} \hat{\gamma}_{\eps}} \Log\left( \frac{64 k_{\eps}}{\conf} \right)
\geq 4 \Log\left( \frac{64 k_{\eps}}{\conf} \right)
\geq 2 \log_{2}\left( \frac{64 k_{\eps}}{\conf} \right),
\end{equation*}
\eqref{eqn:rc-simulated-query-bound-basic-1} is at most
\begin{equation*}
\left(6 e + \frac{1}{2}\right) \max\left\{ \Px\left(x : \gamma_{x} < 2^{3-k}\right), \frac{\eps}{2\hat{\gamma}_{\eps}} \right\} \tilde{m}_{k}
\leq 17 \max\left\{ \Px\left(x : \gamma_{x} < 2^{3-k}\right), \frac{\eps}{2\hat{\gamma}_{\eps}} \right\} \tilde{m}_{k}.
\end{equation*} 
Defining $E_{7} = \bigcap_{k=2}^{k_{\eps}} G_{7}(k)$, a union bound implies $E_{7}$ has probability at least $1-\conf/64$,
and the result follows.
\end{proof}

\begin{lemma}
\label{lem:rc-simulated-query-bound-star}
Consider running \RQCAL~with budget $\infty$.
There exists
an event $E_{8}$ of probability at least $1-\conf/64$ such that,
on $E_{8} \cap \bigcap_{j=0}^{4} E_{j}$, 
$\forall \bar{k} \in \{3,\ldots,k_{\eps}\}$, $\forall k \in \{2,\ldots,\bar{k}-1\}$, 
\begin{multline*}
\left| \left\{ m \in \left\{1,\ldots,\tilde{m}_{k}\right\} : X_{m}^{2} \in \DIS(V_{m-1}) \right\} \right|
\\\leq 
6e \max\left\{ \Px\left( x : \gamma_{x} < 2^{2-\bar{k}} \right), \frac{\eps}{2\gamma_{\eps}} \right\} \tilde{m}_{k}
\\ + 91 \tilde{c} \left( 2^{1+\bar{k}-k} + \Log\left( \frac{64 c}{\eps} \right) \right) \left( 6 \s \Log\left( \frac{128 c}{\eps} \right) + \Log\left(\frac{1}{\conf}\right) \right),
\end{multline*}
for $c$ as in Lemma~\ref{lem:vc-cover} and $\tilde{c}$ as in Lemma~\ref{lem:compression-bound}.
\end{lemma}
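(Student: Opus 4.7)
The plan is to split the indices $m \in \{1,\ldots,\tilde{m}_k\}$ with $X_m^2 \in \DIS(V_{m-1})$ into two groups based on the noise level $\gamma_{J_m}$ of the cell, and to bound each group separately. Concretely, let $M_1 = \{m \in \{1,\ldots,\tilde{m}_k\} : X_m^2 \in \DIS(V_{m-1}), \gamma_{J_m} \leq 2^{1-\bar{k}}\}$ and $M_2 = \{m \in \{1,\ldots,\tilde{m}_k\} : X_m^2 \in \DIS(V_{m-1}), \gamma_{J_m} > 2^{1-\bar{k}}\}$. The first group, $M_1$, collects the ``high-noise'' queries that get filtered out or absorbed into the noise-dependent term; the second group, $M_2$, corresponds to queries on cells where the filter can (with high probability) correctly identify $\target_{\PXY}(X_m^2)$, so those queries behave analogously to CAL's queries in the realizable case.

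For $|M_1|$, I would reuse the argument of Lemma~\ref{lem:rc-simulated-query-bound-basic}, applied at threshold $\bar{k}$: a Chernoff bound controls the number of $m \leq \tilde{m}_k$ whose cell satisfies $\gamma_{J_m} \leq 2^{1-\bar{k}}$ in terms of $\tilde{m}_k \cdot \Px(\bigcup\{A \in J : \gamma_A \leq 2^{1-\bar{k}}\})$, and then Lemma~\ref{lem:bucketed-gamma-probs} bounds this in turn by $\max\{3\Px(x : \gamma_x < 2^{3-\bar{k}}), 3\eps/(2\gamma_\eps)\}$ on $E_1$. Cleaning up the constants (using that $\tilde{m}_k$ absorbs the additive $\log$ slack in the Chernoff bound because $\tilde{m}_k \cdot \eps/(2\hat\gamma_\eps)$ already dominates $\log(1/\conf)$) delivers a term of order $6e\max\{\Px(x : \gamma_x < 2^{2-\bar{k}}), \eps/(2\gamma_\eps)\}\tilde{m}_k$, which is exactly the first summand in the claimed bound.

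For $|M_2|$, the key observation is that whenever $m \in M_2$ and $\tilde{k}_m \geq \bar{k}-1$, Lemma~\ref{lem:filter-good-labels} guarantees $\hat{y}_{\infty,m} = \target_{\PXY}(X_m^2)$ on $\bigcap_{j=0}^{4} E_j$, so the update at Step~6 shrinks the version space \emph{as if} we were running CAL on the labeled sequence $(X_m^2, \target_{\PXY}(X_m^2))$. (For the finitely many $m \in M_2$ with $\tilde{k}_m < \bar{k}-1$, i.e.\ $m > \tilde{m}_{\bar{k}-1}$, there is at most $\tilde{m}_k - \tilde{m}_{\bar{k}-1} \lesssim 2^{\bar{k}-k}$ of them, which is absorbed into the $2^{1+\bar{k}-k}$ factor of the second term.) Treating the remaining sub-sequence as a realizable-case CAL run, I would invoke the compression-based $\eps$-net bound of Lemma~\ref{lem:compression-bound} with the compression function $\phi_\s$ from the proof of Theorem~\ref{thm:realizable}, using that every $\DIS(V_{\U,\target_\PXY})$ is expressible as $\phi_\s(\cdot)$ of $\s$ points by Lemma~\ref{lem:spec-star-set} and Theorem~\ref{thm:xtd}. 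A geometric/halving argument on the probability of $\DIS(V_m)$ (as in the realizable-case proof of Theorem~\ref{thm:realizable}) then caps the count at $\lesssim \tilde{c}(2^{1+\bar{k}-k} + \Log(1/\eps))(\s\Log(1/\eps) + \Log(1/\conf))$ on an event $E_8$ of probability at least $1-\conf/64$ provided by the lemma. A union bound over $k$ and $\bar{k}$ (absorbed into $\log$ factors) gives the second summand.

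The main obstacle will be bookkeeping the interaction of the three randomness sources (the partition $J$, the samples $X_m^2$, and the labels obtained via \Filter) so that the compression-net argument on $M_2$ actually applies conditionally on the partition and on the high-probability event that filter outputs agree with $\target_\PXY$ on the low-noise cells, and then globalizing via the law of total probability. Subordinately, one must carefully match the constants in the geometric halving argument so the coefficient $91\tilde{c}$ on the star-number term and the $6\s\Log(128c/\eps)$ inside it come out in the stated form; this is purely arithmetic once the main structural decomposition and the events $E_1,\ldots,E_4,E_8$ are in place.
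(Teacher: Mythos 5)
Your decomposition into ``high-noise'' cells ($\gamma_{J_m}$ small) and ``low-noise'' cells ($\gamma_{J_m}$ large) is exactly the structure the paper uses, and your treatment of the high-noise part is essentially right: a Chernoff bound conditionally on $J$ counting the $m$ whose cells are noisy, converted into a bound involving $\Px(x : \gamma_x < 2^{2-\bar{k}})$ via Lemma~\ref{lem:bucketed-gamma-probs}.

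The low-noise part has a genuine gap, however, in the parenthetical step where you dismiss the indices $m > \tilde{m}_{\bar{k}-1}$ as ``finitely many'' of order $2^{\bar{k}-k}$. The claim $\tilde{m}_k - \tilde{m}_{\bar{k}-1} \lesssim 2^{\bar{k}-k}$ is false: since $\tilde{m}_k \propto 2^{-k}\eps^{-1}(\vc\Log(1/\eps)+\Log(1/\conf))$ (up to $k_\eps$ and constant factors), we have $\tilde{m}_k/\tilde{m}_{\bar{k}-1} \approx 2^{\bar{k}-1-k}$, so $\tilde{m}_k - \tilde{m}_{\bar{k}-1}$ is on the order of $\tilde{m}_k$ itself, which can be enormously larger than $2^{\bar{k}-k}$. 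The real issue is subtler: for $m > \tilde{m}_{\bar{k}}$ one has $\tilde{k}_m < \bar{k}$, so the filter may return $y=0$ on cells with $2^{-\bar{k}} \leq \gamma_{J_m} < 2^{-\tilde{k}_m}$, and hence the version space is not guaranteed to shrink in response to these queries --- so the CAL/compression-net halving argument cannot be applied to them. Instead, the count of such $m$ with $X_m^2 \in \DIS(V_{m-1})$ must be controlled by \emph{freezing} the version space at $V_{\tilde{m}_{\bar{k}}}$ and arguing via a Chernoff bound that the number of subsequent low-noise samples landing in $\DIS(V_{\tilde{m}_{\bar{k}}})$ is roughly $\tilde{m}_k \cdot \Px(\DIS(V_{\tilde{m}_{\bar{k}}}) \cap \bigcup\{A \in J : \gamma_A \geq 2^{-\bar{k}}\})$. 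The $2^{\bar{k}-k}$ factor then arises because the star-number/compression bound shows that this probability is $\lesssim \frac{\s\Log(\cdot) + \Log(1/\conf)}{\tilde{m}_{\bar{k}}}$, so the product is $\lesssim (\tilde{m}_k/\tilde{m}_{\bar{k}})(\s\Log(\cdot)+\Log(1/\conf)) \approx 2^{\bar{k}-k}(\s\Log(\cdot)+\Log(1/\conf))$. This bound on $\Px(\DIS(V_{\tilde{m}_{\bar{k}}}) \cap \{\text{low noise}\})$ is a nontrivial ingredient you have omitted.

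A secondary gap: for the $m \leq \tilde{m}_{\bar{k}}$ portion, the analysis cannot literally reuse the batched $\eps$-net halving from the realizable-case proof of Theorem~\ref{thm:realizable}, because here the low-noise samples arrive sequentially interleaved with high-noise ones and the query count must be accumulated one at a time. The paper instead sets up an auxiliary version-space sequence $\truV_i$ driven only by the conditionally i.i.d. low-noise samples (so that $V_{\ell_i-1} \subseteq \truV_{i-1}$ by Lemma~\ref{lem:V-consistent}), applies the compression bound to show $\Px(\DIS(\truV_{M_j})\mid\text{low noise})$ decays geometrically, and then aggregates the query count via Bernstein's inequality for \emph{martingales} (since $\ind_{\DIS(\truV_{i-1})}(X_{\ell_i}^2)$ has a martingale-difference structure). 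Your proposal's ``geometric/halving argument\ldots as in the realizable-case proof'' glosses over this, and the martingale concentration step is what actually converts the pointwise geometric decay into a total query-count bound of the right order.
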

\begin{proof}
The claim trivially holds if $\s = \infty$, so for the remainder of the proof we suppose $\s < \infty$.
Fix any $\bar{k} \in \{3,\ldots,k_{\eps}\}$ and $k \in \{2,\ldots,\bar{k}-1\}$, and note that
\begin{align}
\big| \big\{ m \in \left\{1,\ldots,\tilde{m}_{k}\right\} & : X_{m}^{2} \in \DIS(V_{m-1}) \big\} \big| \notag
\\ \leq & \left| \left\{ m \in \left\{1,\ldots,\tilde{m}_{k}\right\} : \gamma_{J_{m}} \leq 2^{-\bar{k}}, X_{m}^{2} \in \DIS(V_{m-1}) \right\} \right| \notag
\\ + & \left| \left\{ m \in \left\{1,\ldots,\tilde{m}_{k}\right\} : \gamma_{J_{m}} \geq 2^{-\bar{k}}, X_{m}^{2} \in \DIS(V_{m-1}) \right\} \right|.
\label{eqn:rc-sim-queries-bark-split}
\end{align}
We proceed to bound each term on the right hand side.
A Chernoff bound (applied under the conditional distribution given $J$) and the law of total probability imply that, 
on an event $G_{8}^{(i)}(\bar{k},k)$ of probability at least $1-\frac{\conf}{256 k_{\eps}^{2}}$,
\begin{multline*}
\left| \left\{ m \in \left\{1,\ldots,\tilde{m}_{k}\right\} : \gamma_{J_{m}} \leq 2^{-\bar{k}}, X_{m}^{2} \in \DIS(V_{m-1}) \right\} \right|
\\ \leq \log_{2}\left(\frac{256 k_{\eps}^{2}}{\conf}\right) + 2e \Px\left( \bigcup\left\{ A \in J : \gamma_{A} \leq 2^{-\bar{k}} \right\}\right) \tilde{m}_{k},
\end{multline*}
and Lemma~\ref{lem:bucketed-gamma-probs} implies that, on $E_{1}$, this is at most
\begin{equation*}
\log_{2}\left(\frac{256 k_{\eps}^{2}}{\conf}\right) + 6e \max\left\{ \Px\left( x : \gamma_{x} < 2^{2-\bar{k}} \right), \frac{\eps}{2\gamma_{\eps}} \right\} \tilde{m}_{k}.
\end{equation*}

Now we turn to bounding the second term on the right hand side of \eqref{eqn:rc-sim-queries-bark-split}.
We proceed in two steps, noting that monotonicity of $m \mapsto \DIS(V_{m})$ implies
\begin{align}
\Big| \Big\{ m \in \left\{1,\ldots,\tilde{m}_{k}\right\} & : \gamma_{J_{m}} \geq 2^{-\bar{k}}, X_{m}^{2} \in \DIS(V_{m-1}) \Big\} \Big| \notag
\\ \leq & \left| \left\{ m \in \left\{1,\ldots,\tilde{m}_{\bar{k}}\right\} : \gamma_{J_{m}} \geq 2^{-\bar{k}}, X_{m}^{2} \in \DIS(V_{m-1})\right\}\right| \notag
\\ + & \left| \left\{ m \in \left\{\tilde{m}_{\bar{k}}+1,\ldots,\tilde{m}_{k}\right\} : \gamma_{J_{m}} \geq 2^{-\bar{k}}, X_{m}^{2} \in \DIS(V_{\tilde{m}_{\bar{k}}})\right\}\right|.
\label{eqn:rc-sim-queries-mbark-split}
\end{align}
We start with the first term on the right side of \eqref{eqn:rc-sim-queries-mbark-split}.
Let 
$L = \left| \left\{ m \in \left\{1,\ldots,\tilde{m}_{\bar{k}}\right\} : \gamma_{J_{m}} \geq 2^{-\bar{k}} \right\} \right|$,
and let $\ell_{1},\ldots,\ell_{L}$ denote the increasing subsequence of values $\ell \in \{1,\ldots,\tilde{m}_{\bar{k}}\}$ 
with $\gamma_{J_{\ell}} \geq 2^{-\bar{k}}$.
Also, let $\tilde{j}_{\bar{k}} = \max\left\{1, \left\lceil \log_{2}\left( \tilde{m}_{\bar{k}} / (\s + \Log(1/\conf)) \right) \right\rceil\right\}$,
let $M_{0} = 0$, and for each $j \in \nats$, let
\begin{equation*}
M_{j} = \left\lceil \tilde{c} 2^{j} \left( \s \Log\left( 2^{j} \right) + \Log\left(\frac{256 k_{\eps}^{2} \tilde{j}_{\bar{k}}}{\conf}\right) \right) \right\rceil,
\end{equation*}
for $\tilde{c}$ as in Lemma~\ref{lem:compression-bound}.
Let $\truV_{0} = \C$, and for each $i \leq L$, let 
\begin{equation*}
\truV_{i} = \left\{ h \in \C : \forall j \in \{1,\ldots,i\}, h(X_{\ell_{j}}^{2}) = \target_{\PXY}(X_{\ell_{j}}^{2}) \right\}.
\end{equation*}
Let $\phi_{\s}$ be the function mapping any $\U \in \X^{\s}$ to the set $\DIS(\{h \in \C : \forall x \in \U, h(x) = \target_{\PXY}(x)\})$.
Fix any $j \in \nats$.
By Theorem~\ref{thm:xtd}, if $M_{j} \leq L$, then there exist $i_{1},\ldots,i_{\s} \in \{1,\ldots,M_{j}\}$ 
such that $\{h \in \C : \forall r \in \{1,\ldots,\s\}, h(X_{\ell_{i_{r}}}^{2}) = \target_{\PXY}(X_{\ell_{i_{r}}}^{2})\} = \truV_{M_{j}}$
(see the discussion in Section~\ref{sec:hatn}).
In particular, for this choice of $i_{1},\ldots,i_{\s}$, we have $\phi_{\s}(X_{\ell_{i_{1}}}^{2},\ldots,X_{\ell_{i_{\s}}}^{2}) = \DIS(\truV_{M_{j}})$;
furthermore, since $\phi_{\s}$ is permutation-invariant, we can take $i_{1} \leq \cdots \leq i_{\s}$ without loss of generality.
Also note that $X_{\ell_{1}}^{2},\ldots,X_{\ell_{M_{j} \land L}}^{2}$ are conditionally independent (given $L$ and $J$),
each with conditional distribution $\Px\left(\cdot \Big| \bigcup\left\{ A \in J : \gamma_{A} \geq 2^{-\bar{k}} \right\} \right)$.
Since (when $M_{j} \leq L$) $\{X_{\ell_{1}}^{2},\ldots,X_{\ell_{M_{j}}}^{2}\} \cap \phi_{\s}(X_{\ell_{i_{1}}}^{2},\ldots,X_{\ell_{i_{\s}}}^{2}) = \{X_{\ell_{1}}^{2},\ldots,X_{\ell_{M_{j}}}^{2}\} \cap \DIS(\truV_{M_{j}}) = \emptyset$,
Lemma~\ref{lem:compression-bound} (applied under the conditional distribution given $L$ and $J$) and the law of total probability imply that,
on an event $G_{8}^{(ii)}(\bar{k},k,j)$ of probability at least $1-\frac{\conf}{256 k_{\eps}^{2} \tilde{j}_{\bar{k}}}$,
if $M_{j} \leq L$, then 
\begin{equation}
\label{eqn:rc-sim-PDIS-bound}
\Px\left( \DIS(\truV_{M_{j}}) \Big| \bigcup\left\{ A \in J : \gamma_{A} \geq 2^{-\bar{k}} \right\} \right)
\leq 2^{-j}.
\end{equation}
Furthermore, this clearly holds for $j=0$ as well.
Since $\Px\left(\DIS\left(\truV_{i-1}\right)\Big|\bigcup\left\{A \in J : \gamma_{A} \geq 2^{-\bar{k}}\right\}\right)$ is nonincreasing in $i$,
for every $j \geq 0$ with $M_{j} < L$, and every $i \in \{M_{j}+1,\ldots,M_{j+1} \land L\}$,
on $G_{8}^{(ii)}(\bar{k},k,j)$,
$\Px\left(\DIS\left(\truV_{i-1}\right)\Big|\bigcup\left\{A \in J : \gamma_{A} \geq 2^{-\bar{k}}\right\}\right) \leq 2^{-j}$.
Since every $j \geq \tilde{j}_{\bar{k}}$ has $M_{j} \geq \tilde{m}_{\bar{k}} \geq L$,
this holds simultaneously for every $j$ with $M_{j} < L$ on $\bigcap_{j=1}^{\tilde{j}_{\bar{k}}-1} G_{8}^{(ii)}(\bar{k},k,j)$.

Now note that, conditioned on $J$ and $L$, 
\begin{equation*}
\left\{ \ind_{\DIS(\truV_{i-1})}\left(X_{\ell_{i}}^{2}\right) - \Px\left(\DIS\left(\truV_{i-1}\right) \Big| \bigcup \left\{A \in J : \gamma_{A} \geq 2^{-\bar{k}}\right\}\right) \right\}_{i=1}^{L}
\end{equation*}
is a martingale difference sequence with respect to $X_{\ell_{1}}^{2},\ldots,X_{\ell_{L}}^{2}$.  
Therefore, Bernstein's inequality for martingales 
\citep*[e.g.,][Theorem 3.12]{mcdiarmid:98},
applied under the conditional distribution given $J$ and $L$, 
along with the law of total probability, imply that there exists an event $G_{8}^{(iii)}(\bar{k},k)$
of probability at least $1 - \frac{\conf}{256 k_{\eps}^{2}}$
such that, on $G_{8}^{(iii)}(\bar{k},k) \cap \bigcap_{j=1}^{\tilde{j}_{\bar{k}}-1} G_{8}^{(ii)}(\bar{k},k,j)$,
\begin{align*}
\sum_{i=1}^{L} \ind_{\DIS(\truV_{i-1})}\left(X_{\ell_{i}}^{2}\right)
& \leq \log_{2}\left(\frac{256 k_{\eps}^{2}}{\conf}\right) + 2 e \sum_{j  = 0}^{\tilde{j}_{\bar{k}}-1} 2^{-j} (M_{j+1} - M_{j})
\\ & \leq \log_{2}\left(\frac{256 k_{\eps}^{2}}{\conf}\right) + 4 e
+ 4 e \tilde{c} \left( \s \Log\left( 2^{\tilde{j}_{\bar{k}}}\right) + \Log\left(\frac{256 k_{\eps}^{2} \tilde{j}_{\bar{k}}}{\conf}\right) \right) \tilde{j}_{\bar{k}}
\\ & \leq 8 e \tilde{c} \left( \s \tilde{j}_{\bar{k}} + \Log\left(\frac{256 k_{\eps}^{2}}{\conf}\right) \right) \tilde{j}_{\bar{k}}.
\end{align*}
By Lemma~\ref{lem:V-consistent}, on $\bigcap_{j=0}^{4} E_{j}$, $\forall m \in \left\{1,\ldots,\tilde{m}_{\bar{k}}\right\}$,
\begin{equation*}
V_{m} \subseteq \left\{h \in \C : \forall m^{\prime} \leq m \text{ with } \gamma_{J_{m^{\prime}}} \geq 2^{-\bar{k}}, h(X_{m^{\prime}}^{2}) = \target_{\PXY}(X_{m^{\prime}}^{2})\right\}.
\end{equation*}
In particular, this implies $V_{\ell_{i}-1} \subseteq \truV_{i-1}$ for all $i \leq L$.
Therefore, on $\bigcap_{j=0}^{4} E_{j} \cap G_{8}^{(iii)}(\bar{k},k) \cap \bigcap_{j=1}^{\tilde{j}_{\bar{k}}-1} G_{8}^{(ii)}(\bar{k},k,j)$,
\begin{multline}
\label{eqn:rc-sim-queries-mbark-queries-bound}
\left| \left\{ m \in \left\{1,\ldots,\tilde{m}_{\bar{k}}\right\} : \gamma_{J_{m}} \geq 2^{-\bar{k}}, X_{m}^{2} \in \DIS(V_{m-1}) \right\} \right|
= \sum_{i=1}^{L} \ind_{\DIS(V_{\ell_{i}-1})}\left(X_{\ell_{i}}^{2}\right)
\\ \leq \sum_{i=1}^{L} \ind_{\DIS(\truV_{i-1})}\left(X_{\ell_{i}}^{2}\right)
\leq 8 e \tilde{c} \left( \s \tilde{j}_{\bar{k}} + \Log\left(\frac{256 k_{\eps}^{2}}{\conf}\right) \right) \tilde{j}_{\bar{k}}.
\end{multline}

Next, we turn to bounding the second term on the right hand side of \eqref{eqn:rc-sim-queries-mbark-split}.
A Chernoff bound (applied under the conditional distribution given $V_{\tilde{m}_{\bar{k}}}$ and $J$) and the law of total probability imply that
there is an event $G_{8}^{(iv)}(\bar{k},k)$ of probability at least $1 - \frac{\conf}{256 k_{\eps}^{2}}$, on which
\begin{multline}
\label{eqn:rc-sim-queries-frozen-dis-bound}
\left|\left\{m \in \left\{\tilde{m}_{\bar{k}}+1,\ldots,\tilde{m}_{k}\right\} : \gamma_{J_{m}} \geq 2^{-\bar{k}}, X_{m}^{2} \in \DIS(V_{\tilde{m}_{\bar{k}}})\right\}\right|
\\ \leq \log_{2}\left(\frac{256 k_{\eps}^{2}}{\conf}\right) + 2 e \Px\left(\DIS\left(V_{\tilde{m}_{\bar{k}}}\right) \cap \bigcup\left\{ A \in J : \gamma_{A} \geq 2^{-\bar{k}} \right\}\right) \tilde{m}_{k}.
\end{multline}
Also, by a Chernoff bound (applied under the conditional distribution given $J$), with probability at least
\begin{equation*}
1 - \exp\left\{ - (1/8) \Px\left( \bigcup\left\{ A \in J : \gamma_{A} \geq 2^{-\bar{k}} \right\}\right) \tilde{m}_{\bar{k}} \right\},
\end{equation*}
we have 
\begin{equation}
\label{eqn:rc-sim-queries-L-lower-bound}
L \geq (1/2) \tilde{m}_{\bar{k}} \Px\left( \bigcup\left\{ A \in J : \gamma_{A} \geq 2^{-\bar{k}} \right\} \right).
\end{equation}
If $\Px\left( \bigcup\left\{ A \in J : \gamma_{A} \geq 2^{-\bar{k}} \right\}\right) \geq \frac{8}{\tilde{m}_{\bar{k}}} \Log\left( \frac{256 k_{\eps}}{\conf} \right)$,
then
\begin{equation*}
\exp\left\{ - (1/8) \Px\left( \bigcup\left\{ A \in J : \gamma_{A} \geq 2^{-\bar{k}} \right\}\right) \tilde{m}_{\bar{k}} \right\}
\leq \frac{\conf}{256 k_{\eps}}.
\end{equation*}
Thus, by the law of total probability, there is an event $G_{8}^{(v)}(\bar{k})$ of probability at least $1 - \frac{\conf}{256 k_{\eps}}$,
on which, if $\Px\left( \bigcup\left\{ A \in J : \gamma_{A} \geq 2^{-\bar{k}} \right\}\right) \geq \frac{8}{\tilde{m}_{\bar{k}}} \Log\left( \frac{256 k_{\eps}}{\conf} \right)$,
then \eqref{eqn:rc-sim-queries-L-lower-bound} holds.
Let
\begin{equation*}
\hat{j} = \max\left\{ j \in \left\{0,1,\ldots,\tilde{j}_{\bar{k}}-1\right\} : M_{j} \leq (1/2) \tilde{m}_{\bar{k}} \Px\left( \bigcup\left\{ A \in J : \gamma_{A} \geq 2^{-\bar{k}} \right\} \right) \right\},
\end{equation*}
and note that
\begin{equation}
\label{eqn:hatj-lb}
\hat{j} \geq \left\lfloor \log_{2}\left(   \frac{\tilde{m}_{\bar{k}}\Px\left( \bigcup\left\{ A \in J : \gamma_{A} \geq 2^{-\bar{k}} \right\} \right)}{4\tilde{c}\left( 2 \s \Log\left( 2^{\tilde{j}_{\bar{k}}} \right) + \Log\left(\frac{256 k_{\eps}^{2}}{\conf}\right) \right)}\right) \right\rfloor.
\end{equation}
\eqref{eqn:rc-sim-PDIS-bound} implies that on $\bigcap_{j=1}^{\tilde{j}_{\bar{k}}-1} G_{8}^{(ii)}(\bar{k},k,j)$,
if \eqref{eqn:rc-sim-queries-L-lower-bound} holds,
we have 
\begin{equation*}
\Px\left( \DIS\left( \truV_{L} \right) \Big| \bigcup \left\{ A \in J : \gamma_{A} \geq 2^{-\bar{k}} \right\} \right) \leq 2^{-\hat{j}}.
\end{equation*}
Furthermore, Lemma~\ref{lem:V-consistent} implies that, on $\bigcap_{j=0}^{4} E_{j}$, 
$V_{\tilde{m}_{\bar{k}}} \subseteq \truV_{L}$.
Altogether, on $\bigcap_{j=0}^{4} E_{j} \cap G_{8}^{(v)}(\bar{k}) \cap \bigcap_{j=1}^{\tilde{j}_{\bar{k}}-1} G_{8}^{(ii)}(\bar{k},k,j)$,
if $\Px\left( \bigcup\left\{ A \in J : \gamma_{A} \geq 2^{-\bar{k}} \right\}\right) \geq \frac{8}{\tilde{m}_{\bar{k}}} \Log\left( \frac{256 k_{\eps}}{\conf} \right)$,
then
\begin{align*}
\Px\left( \DIS\left( V_{\tilde{m}_{\bar{k}}} \right) \cap \bigcup\left\{ A \in J : \gamma_{A} \geq 2^{-\bar{k}} \right\}\right)
& \leq 2^{-\hat{j}} \Px\left( \bigcup \left\{ A \in J : \gamma_{A} \geq 2^{-\bar{k}} \right\} \right)
\\ & \leq \frac{8\tilde{c}}{\tilde{m}_{\bar{k}}}\left( 2 \s \Log\left( 2^{\tilde{j}_{\bar{k}}} \right) + \Log\left(\frac{256 k_{\eps}^{2}}{\conf}\right) \right),
\end{align*}
where the last inequality is by \eqref{eqn:hatj-lb}.
Otherwise, if $\Px\left( \bigcup\left\{ A \in J : \gamma_{A} \geq 2^{-\bar{k}} \right\}\right) < \frac{8}{\tilde{m}_{\bar{k}}} \Log\left( \frac{256 k_{\eps}}{\conf} \right)$,
then in any case we have
\begin{multline*}
\Px\left( \DIS\left( V_{\tilde{m}_{\bar{k}}} \right) \cap \bigcup\left\{ A \in J : \gamma_{A} \geq 2^{-\bar{k}} \right\}\right)
\leq \Px\left( \bigcup\left\{ A \in J : \gamma_{A} \geq 2^{-\bar{k}} \right\}\right)
\\ < \frac{8}{\tilde{m}_{\bar{k}}} \Log\left( \frac{256 k_{\eps}}{\conf} \right)
\leq \frac{8\tilde{c}}{\tilde{m}_{\bar{k}}}\left( 2 \s \Log\left( 2^{\tilde{j}_{\bar{k}}}\right) + \Log\left(\frac{256 k_{\eps}^{2}}{\conf}\right) \right).
\end{multline*}
Combined with \eqref{eqn:rc-sim-queries-frozen-dis-bound}, this implies that on 
$\bigcap_{j=0}^{4} E_{j} \cap G_{8}^{(iv)}(\bar{k},k) \cap G_{8}^{(v)}(\bar{k}) \cap \bigcap_{j=1}^{\tilde{j}_{\bar{k}}-1} G_{8}^{(ii)}(\bar{k},k,j)$,
\begin{align*}
& \left|\left\{m \in \left\{\tilde{m}_{\bar{k}}+1,\ldots,\tilde{m}_{k}\right\} : \gamma_{J_{m}} \geq 2^{-\bar{k}}, X_{m}^{2} \in \DIS(V_{\tilde{m}_{\bar{k}}})\right\}\right|
\\ & \leq 
\log_{2}\left(\frac{256 k_{\eps}^{2}}{\conf}\right) + 
16 e \tilde{c} \frac{\tilde{m}_{k}}{\tilde{m}_{\bar{k}}} \left( 2 \s \Log\left( 2^{\tilde{j}_{\bar{k}}} \right) + \Log\left(\frac{256 k_{\eps}^{2}}{\conf}\right) \right)
\\ & \leq 
32 e \tilde{c} \frac{\tilde{m}_{k}}{\tilde{m}_{\bar{k}}} \left( \s \tilde{j}_{\bar{k}} + \Log\left(\frac{256 k_{\eps}^{2}}{\conf}\right) \right)
\leq 
64 e \tilde{c} 2^{\bar{k}-k} \left( \s \tilde{j}_{\bar{k}} + \Log\left(\frac{256 k_{\eps}^{2}}{\conf}\right) \right).
\end{align*}
Plugging this and \eqref{eqn:rc-sim-queries-mbark-queries-bound} into \eqref{eqn:rc-sim-queries-mbark-split},
we have that on $\bigcap_{j=0}^{4} E_{j} \cap G_{8}^{(iii)}(\bar{k},k) \cap G_{8}^{(iv)}(\bar{k},k) \cap G_{8}^{(v)}(\bar{k}) \cap \bigcap_{j=1}^{\tilde{j}_{\bar{k}}-1} G_{8}^{(ii)}(\bar{k},k,j)$,
\begin{align*}
& \left| \left\{ m \in \left\{1,\ldots,\tilde{m}_{k}\right\} : \gamma_{J_{m}} \geq 2^{-\bar{k}}, X_{m}^{2} \in \DIS(V_{m-1}) \right\} \right|
\\ & \leq 8 e \tilde{c} \left( \s \tilde{j}_{\bar{k}} + \Log\left(\frac{256 k_{\eps}^{2}}{\conf}\right) \right) \tilde{j}_{\bar{k}}
+ 64 e \tilde{c} 2^{\bar{k}-k} \left( \s \tilde{j}_{\bar{k}} + \Log\left(\frac{256 k_{\eps}^{2}}{\conf}\right) \right)
\\ & = 8 e \tilde{c} \left( 2^{3+\bar{k}-k} + \tilde{j}_{\bar{k}} \right) \left( \s \tilde{j}_{\bar{k}} + \Log\left(\frac{256 k_{\eps}^{2}}{\conf}\right) \right).
\end{align*}
Combined with the above result bounding the first term in \eqref{eqn:rc-sim-queries-bark-split},
we have that on $\bigcap_{j=0}^{4} E_{j} \cap G_{8}^{(i)}(\bar{k},k)$ $\cap G_{8}^{(iii)}(\bar{k},k) \cap G_{8}^{(iv)}(\bar{k},k) \cap G_{8}^{(v)}(\bar{k}) \cap \bigcap_{j=1}^{\tilde{j}_{\bar{k}}-1} G_{8}^{(ii)}(\bar{k},k,j)$,
\begin{align}
& \left| \left\{ m \in \left\{1,\ldots,\tilde{m}_{k}\right\} : X_{m}^{2} \in \DIS(V_{m-1}) \right\} \right| \notag
\\ & \leq \log_{2}\left(\frac{256 k_{\eps}^{2}}{\conf}\right) + 6e \max\left\{ \Px\left( x : \gamma_{x} < 2^{2-\bar{k}} \right), \frac{\eps}{2\gamma_{\eps}} \right\} \tilde{m}_{k} \notag
\\ & {\hskip 2.85cm}+ 8 e \tilde{c} \left( 2^{3+\bar{k}-k} + \tilde{j}_{\bar{k}} \right) \left( \s \tilde{j}_{\bar{k}} + \Log\left(\frac{256 k_{\eps}^{2}}{\conf}\right) \right) \notag
\\ & \leq 6e \max\!\left\{ \Px\!\left( x : \gamma_{x} < 2^{2-\bar{k}} \right), \frac{\eps}{2\gamma_{\eps}} \right\} \tilde{m}_{k}
+ \left(1+8 e \tilde{c}\right) \left( 2^{3+\bar{k}-k} + \tilde{j}_{\bar{k}} \right) \left( \s \tilde{j}_{\bar{k}} + \Log\!\left(\frac{256 k_{\eps}^{2}}{\conf}\right) \right)\!.
\label{eqn:rc-sim-queries-star-prefinal-bound}
\end{align}
Noting that $\s \geq \vc$, a bit of algebra reveals that 
\begin{equation*}
\frac{\tilde{m}_{\bar{k}}}{\s+\Log(1/\conf)} \leq 
\frac{32 c k_{\eps}}{\eps} \Log\left(\frac{128 k_{\eps}^{2}}{\eps}\right)
\leq \frac{2^{9} c k_{\eps}^{2}}{\eps^{3/2}},
\end{equation*}
so that 
\begin{equation*}
\tilde{j}_{\bar{k}} 
\leq \log_{2}\left( \frac{2^{10} c k_{\eps}^{2}}{\eps^{3/2}} \right)
\leq \frac{3}{2} \Log\left( \frac{2^{10} c k_{\eps}^{2}}{\eps^{3/2}} \right),
\end{equation*}
and therefore
\begin{align*}
& \left(1+8 e \tilde{c}\right) \left( 2^{3+\bar{k}-k} + \tilde{j}_{\bar{k}} \right) \left( \s \tilde{j}_{\bar{k}} + \Log\left(\frac{256 k_{\eps}^{2}}{\conf}\right) \right)
\\ & \leq \left(1+8 e \tilde{c}\right) \left( 2^{3+\bar{k}-k} + \frac{3}{2} \Log\left( \frac{2^{10} c k_{\eps}^{2}}{\eps^{3/2}} \right) \right) \left( \frac{3}{2} \s \Log\left( \frac{2^{10} c k_{\eps}^{2}}{\eps^{3/2}} \right) + \Log\left(\frac{256 k_{\eps}^{2}}{\conf}\right) \right)
\\ & \leq \left(1+8 e \tilde{c}\right) \left( 2^{3+\bar{k}-k} + \frac{3}{2} \Log\left( \frac{2^{10} c k_{\eps}^{2}}{\eps^{3/2}} \right) \right) \left( \frac{3}{2} \s \Log\left( \frac{2^{16} c k_{\eps}^{4}}{\eps^{3/2}} \right) + \Log\left(\frac{1}{\conf}\right) \right).
\end{align*}
Furthermore, since $k_{\eps} \leq \sqrt{32/\eps}$, this is at most
\begin{align*}
& \left(1+8 e \tilde{c}\right) \left( 2^{3+\bar{k}-k} + \frac{3}{2} \Log\left( \frac{2^{15} c}{\eps^{5/2}} \right) \right) \left( \frac{3}{2} \s \Log\left( \frac{2^{26} c}{\eps^{7/2}} \right) + \Log\left(\frac{1}{\conf}\right) \right)
\\ & \leq 91 \tilde{c} \left( 2^{1+\bar{k}-k} + \Log\left( \frac{64 c}{\eps} \right) \right) \left( 6 \s \Log\left( \frac{128 c}{\eps} \right) + \Log\left(\frac{1}{\conf}\right) \right).
\end{align*}
Plugging this into \eqref{eqn:rc-sim-queries-star-prefinal-bound}, we have that on
$\bigcap_{j=0}^{4} E_{j} \cap G_{8}^{(i)}(\bar{k},k) \cap G_{8}^{(iii)}(\bar{k},k) \cap G_{8}^{(iv)}(\bar{k},k) \cap G_{8}^{(v)}(\bar{k}) \cap \bigcap_{j=1}^{\tilde{j}_{\bar{k}}-1} G_{8}^{(ii)}(\bar{k},k,j)$,
\begin{multline}
\label{eqn:rc-sim-queries-star-final-bound} 
\left| \left\{ m \in \left\{1,\ldots,\tilde{m}_{k}\right\} : X_{m}^{2} \in \DIS(V_{m-1}) \right\} \right|
\\ \leq 
6e \max\left\{ \Px\left( x : \gamma_{x} < 2^{2-\bar{k}} \right), \frac{\eps}{2\gamma_{\eps}} \right\} \tilde{m}_{k}
\\ + 91 \tilde{c} \left( 2^{1+\bar{k}-k} + \Log\left( \frac{64 c}{\eps} \right) \right) \left( 6 \s \Log\left( \frac{128 c}{\eps} \right) + \Log\left(\frac{1}{\conf}\right) \right).
\end{multline}

Letting 
\begin{equation*}
E_{8} = \bigcap_{\bar{k}=3}^{k_{\eps}} \left(G_{8}^{(v)}(\bar{k}) \cap \bigcap_{k=2}^{\bar{k}-1} G_{8}^{(i)}(\bar{k},k) \cap G_{8}^{(iii)}(\bar{k},k) \cap G_{8}^{(iv)}(\bar{k},k) \cap \bigcap_{j=1}^{\tilde{j}_{\bar{k}}-1} G_{8}^{(ii)}(\bar{k},k,j)\right),
\end{equation*}
we have that \eqref{eqn:rc-sim-queries-star-final-bound} holds for all $\bar{k} \in \{3,\ldots,k_{\eps}\}$ and $k \in \{2,\ldots,\bar{k}-1\}$ 
on the event $E_{8} \cap \bigcap_{j=0}^{4} E_{j}$.
A union bound implies that $E_{8}$ has probability at least 
\begin{multline*}
1 - \sum_{\bar{k}=3}^{k_{\eps}} \left( \frac{\conf}{256 k_{\eps}} + \sum_{k=2}^{\bar{k}-1} \left( 3\frac{\conf}{256 k_{\eps}^{2}} + \sum_{j=1}^{\tilde{j}_{\bar{k}}-1} \frac{\conf}{256 k_{\eps}^{2} \tilde{j}_{\bar{k}}} \right)\right)
\\ \geq
1 - \frac{\conf}{256} - \sum_{\bar{k}=3}^{k_{\eps}} (\bar{k}-2) \frac{\conf}{64 k_{\eps}^{2}}
\geq 1 - \frac{\conf}{256} - \frac{\conf}{128}
> 1 - \frac{\conf}{64}.
\end{multline*}
\end{proof}

We can now state a sufficient size on the budget $n$ so that, with high probability,
\RQCAL~reaches $m=\tilde{m}$, so that the returned $\hat{h}_{n}$ is equivalent 
to the $\hat{h}_{\infty}$ classifier from Lemma~\ref{lem:infinite-budget-error},
which therefore satisfies the same guarantee on its error rate.

\begin{lemma}
\label{lem:total-queries}
There exists a finite universal constant $\bar{c} \geq 1$ such that,
on the event $\bigcap_{j=0}^{8} E_{j}$, for any $\bar{k} \in \{2,\ldots,k_{\eps}\}$,
for any $n$ of size at least
\begin{align}
& \bar{c} \ind[\bar{k}>2] 2^{2\bar{k}} \left( \s \Log\left( \frac{1}{\eps} \right) + \Log\left(\frac{1}{\conf}\right) \right) \Log\left(\frac{\vc}{\eps\conf}\right) \Log\left( \frac{1}{\eps} \right) \notag
\\ & + \bar{c} \sum_{k=\bar{k}}^{k_{\eps}} \max\left\{\Px\left(x : \gamma_{x} < 2^{3-k}\right), \frac{\eps}{\hat{\gamma}_{\eps}}\right\} \frac{2^{k}}{\eps} \left( \vc \Log\left(\frac{1}{\eps} \right) + \Log\left(\frac{1}{\conf}\right)\right) \Log\left(\frac{\vc}{\eps\conf}\right) \Log\left(\frac{1}{\hat{\gamma}_{\eps}}\right), \label{eqn:total-queries}
\end{align}
running \RQCAL~with budget $n$ results in at most $n$ label requests, and the returned classifier 
$\hat{h}_{n}$ satisfies 
$\er_{\PXY}(\hat{h}_{n}) - \er_{\PXY}(\target_{\PXY}) \leq \eps$.
Furthermore, the event $\bigcap_{j=0}^{8} E_{j}$ has probability at least $1-\conf$.
\end{lemma}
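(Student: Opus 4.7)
The strategy is to reduce the statement to an upper bound on the total number of label requests made by \RQCAL\ in the (hypothetical) infinite-budget execution. If this total is at most $n$, then on every iteration both the outer loop (Step 1) and the \Filter\ subroutine (Steps 6 and 9) always halt for the same reasons they would with budget $\infty$, so the algorithm's trajectory -- and in particular $V_{\tilde{m}}$ and the returned $\hat{h}_n$ -- coincides with that of the infinite-budget run. Lemma~\ref{lem:infinite-budget-error} then supplies the excess error guarantee on $\bigcap_{j=0}^{6} E_j$. Thus it suffices to show that, on $\bigcap_{j=0}^{8} E_j$, the total query count
$Q = \sum_{m=1}^{\tilde{m}}\ind[X_m^2\in\DIS(V_{m-1})]\,\hat{q}_{\infty,m}$
is at most \eqref{eqn:total-queries} for any choice of $\bar{k}\in\{2,\ldots,k_\eps\}$.

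The plan is to decompose $Q$ along the tier structure $\tilde{k}_m$. By Lemma~\ref{lem:filter-good-labels}, every $m$ with $\tilde{k}_m=k$ has $\hat{q}_{\infty,m}\lesssim 2^{2k}\Log(\tilde{m}q_{\eps,\conf}/\conf)$, and the $\Log$ factor expands into the polylog pieces of \eqref{eqn:total-queries}. Write $Q=Q_1+Q_2$, where $Q_1$ collects the tiers $k\in\{2,\ldots,\bar{k}-1\}$ (empty when $\bar{k}=2$, matching the $\ind[\bar{k}>2]$ factor) and $Q_2$ collects $k\in\{\bar{k},\ldots,k_\eps\}$. For $Q_1$, I will bound the tier-$k$ simulated-query count by the count in $\{1,\ldots,\tilde{m}_k\}$ and invoke Lemma~\ref{lem:rc-simulated-query-bound-star} with its parameter equal to our $\bar{k}$; this produces a ``star'' term $B_k(\bar{k})\lesssim(2^{1+\bar{k}-k}+\Log(1/\eps))(\s\Log(1/\eps)+\Log(1/\conf))$ and a ``probability-mass'' term $A_k(\bar{k})\lesssim\max\{\Px(x:\gamma_x<2^{2-\bar{k}}),\eps/\gamma_\eps\}\tilde{m}_k$. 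Multiplying by the per-query cost $2^{2k}$ and summing the $B_k$ part via the geometric identity $\sum_{k=2}^{\bar{k}-1}2^{2k}(2^{1+\bar{k}-k}+\Log(1/\eps))\lesssim 2^{2\bar{k}}\Log(1/\eps)$ reproduces (up to the ambient polylog factors) the first summand of \eqref{eqn:total-queries}; the $A_k(\bar{k})$ contributions, after using $\tilde{m}_k\lesssim k_\eps/(2^k\eps)\cdot\mathrm{polylog}$ so that $2^{2k}\tilde{m}_k\lesssim k_\eps 2^k/\eps\cdot\mathrm{polylog}$, are monotone-absorbed into the $k=\bar{k}$ term of the second sum of \eqref{eqn:total-queries} since $\Px(x:\gamma_x<2^{2-\bar{k}})\leq\Px(x:\gamma_x<2^{3-\bar{k}})$ and $\eps/(2\gamma_\eps)\leq\eps/\hat{\gamma}_\eps$.

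For $Q_2$, I will split each tier $k\in\{\bar{k},\ldots,k_\eps\}$ into an ``expensive'' portion with $\gamma_{J_m}\leq 2^{1-k}$ and a ``cheap'' portion with $\gamma_{J_m}>2^{1-k}$. The expensive portion is controlled by Lemma~\ref{lem:rc-simulated-query-bound-basic}, bounding its count in $\{1,\ldots,\tilde{m}_k\}$ by $\lesssim\max\{\Px(x:\gamma_x<2^{3-k}),\eps/\hat{\gamma}_\eps\}\tilde{m}_k$; multiplying by the per-query cost $2^{2k}$ yields exactly the $k$-th summand of the second sum of \eqref{eqn:total-queries}. For the cheap portion, the per-query cost sharpens to $\lesssim\gamma_{J_m}^{-2}\Log$; I handle it by re-slicing $\gamma_{J_m}$ into dyadic bands $[2^{-j},2^{1-j})$ with $j<k$ and applying Lemma~\ref{lem:rc-simulated-query-bound-basic} with parameter $j$ to each band (bands with $j<\bar{k}$ can alternatively be handled by Lemma~\ref{lem:rc-simulated-query-bound-star}), so that each band's contribution telescopes into the existing summands of \eqref{eqn:total-queries}. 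Finally, the probability of $\bigcap_{j=0}^{8}E_j$ is at least $1-\conf$ by a union bound over the nine failure probabilities $0$, $\conf/2$, $\conf/512$, $\conf/4$, $\conf/32$, $\conf/64$, $\conf/64$, $\conf/64$, $\conf/64$, which sum to strictly less than $\conf$.

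The main obstacle will be the cheap-query portion of $Q_2$, since there is no single off-the-shelf bound that simultaneously gives a tier-specific count and the $\max\{\Px(x:\gamma_x<2^{3-k}),\eps/\hat{\gamma}_\eps\}$ form; the dyadic re-slicing must be carried out carefully so that the accumulated contribution across bands remains absorbed in the already-present $k$-indexed sum rather than inflating its polylog or introducing any extra factor of $\vc$. A secondary difficulty is verifying uniformity of the universal constant $\bar{c}$ across the full range of $\bar{k}\in\{2,\ldots,k_\eps\}$ while tracking the logarithmic factors from Lemmas~\ref{lem:filter-good-labels}, \ref{lem:rc-simulated-query-bound-basic}, and \ref{lem:rc-simulated-query-bound-star} into the single $\mathrm{polylog}$ envelope of \eqref{eqn:total-queries}.
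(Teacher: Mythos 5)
Your proposal is sound at the level of the reduction (bound the infinite-budget query count and invoke Lemma~\ref{lem:infinite-budget-error}), and it correctly identifies the three workhorse lemmas (\ref{lem:filter-good-labels}, \ref{lem:rc-simulated-query-bound-basic}, \ref{lem:rc-simulated-query-bound-star}) and the final union bound. The difference from the paper is purely organizational, but it matters for how cleanly the bookkeeping closes. You decompose $Q$ by the tier index $\tilde{k}_m$ first, and then for the ``cheap'' part of each tier you propose a second, nested dyadic re-slicing by the $\gamma_{J_m}$ band. The paper instead avoids the second slicing entirely by first overestimating the per-point cost as a telescoping sum over $k\le\tilde{k}_m$,
\[
\hat{q}_{\infty,m}\ \lesssim\ \sum_{k=2}^{\tilde{k}_m}\ind\!\left[\gamma_{J_m}\le 2^{1-k}\right]2^{2k}\,\Log\!\left(\tfrac{\tilde{m}q_{\eps,\conf}}{\conf}\right),
\]
and then \emph{exchanging} the order of the $m$- and $k$-sums. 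Since $k\le\tilde{k}_m$ is equivalent to $m\le\tilde{m}_k$, this yields directly
\[
Q\ \lesssim\ \sum_{k=2}^{k_\eps}2^{2k}\,\Log(\cdot)\,\bigl|\{m\le\tilde{m}_k:\gamma_{J_m}\le 2^{1-k},\ X_m^2\in\DIS(V_{m-1})\}\bigr|,
\]
which is exactly the quantity Lemma~\ref{lem:rc-simulated-query-bound-basic} controls for $k\ge\bar{k}$ and that Lemma~\ref{lem:rc-simulated-query-bound-star} controls (after dropping the $\gamma_{J_m}$ constraint) for $k<\bar{k}$. There is no separate ``cheap-band'' case to reconcile, and in particular no danger of mixing up ranges like $\tilde{m}_j$ versus $\tilde{m}_{\bar{k}}$ for $j<\bar{k}$.

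Your route can be made to work, but the step you flag as the main obstacle is a genuine one, and the naive version you describe first -- applying Lemma~\ref{lem:rc-simulated-query-bound-basic} with parameter $j<\bar{k}$ -- does \emph{not} ``telescope into the existing summands,'' because it produces a term proportional to $2^j/\eps\cdot\max\{\Px(x:\gamma_x<2^{3-j}),\eps/\hat{\gamma}_\eps\}$, and for $j<\bar{k}$ this is incomparable to the $\bar{k}$-th summand (smaller $2^j$ but larger $\Px$-mass). The fallback you mention, using Lemma~\ref{lem:rc-simulated-query-bound-star} with parameter $\bar{k}-1$ for the bands $j<\bar{k}$, does close the gap and reproduces both the first term of \eqref{eqn:total-queries} (from the star contribution) and the $\bar{k}$-th summand of the second sum (from the mass contribution). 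So there is no fatal flaw, but the exchange-of-summation trick is the intended way to make the accounting drop out without case analysis on which lemma to invoke per band; I would encourage you to adopt it, since it also makes the uniformity of $\bar{c}$ over $\bar{k}\in\{2,\ldots,k_\eps\}$ transparent.
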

\begin{proof}
The value of $t$ keeps the running total of the number of label requests made by the algorithm after each call to \Filter.
Furthermore, within each execution of \Filter, the value $t+q$ represents the running total of the number of label requests 
made by the algorithm so far.  Since the $n-t$ budget argument to \Filter~ensures that it halts (in Step 6) if ever $t+q = n$,
and since the first condition in Step 1 of \RQCAL~ensures that \RQCAL~halts if ever $t=n$, we are guaranteed that the 
algorithm never requests a number of labels larger than the budget $n$.  

We will show that taking $n$ of the stated size suffices for the result by showing that this size 
suffices to reproduce the behavior of the infinite budget execution of \RQCAL.
Due to the condition $m < \tilde{m}$ in Step 1 of \RQCAL,
the final value of $t$ obtained when running \RQCAL~with budget $\infty$ may be expressed as
\begin{equation*}
\sum_{m=1}^{\tilde{m}} \hat{q}_{\infty,m} \ind_{\DIS(V_{m-1})}\left(X_{m}^{2}\right).
\end{equation*}
Lemma~\ref{lem:filter-good-labels} implies that, on $\bigcap_{j=0}^{8} E_{j}$, this is at most
\begin{align*}
& \sum_{m=1}^{\tilde{m}} \left\lceil \frac{8}{\max\{ \gamma_{J_{m}}^{2}, 2^{-2 \tilde{k}_{m}} \}} \ln\left(\frac{32 \tilde{m} q_{\eps,\conf}}{\conf}\right) \right\rceil \ind_{\DIS(V_{m-1})}\left(X_{m}^{2}\right)
\\ & \leq \sum_{m=1}^{\tilde{m}} \sum_{k=2}^{\tilde{k}_{m}} \ind\left[ \gamma_{J_{m}} \leq 2^{1-k} \right] 2^{2k+4} \ln\left(\frac{32 \tilde{m} q_{\eps,\conf}}{\conf}\right) \ind_{\DIS(V_{m-1})}\left(X_{m}^{2}\right).
\end{align*}
The summation in this last expression is over all $m \in \left\{1,\ldots,\tilde{m}\right\}$ and $k \in \{2,\ldots,k_{\eps}\}$ such that $k \leq \tilde{k}_{m}$,
which is equivalent to those $m \in \left\{1,\ldots,\tilde{m}\right\}$ and $k \in \{2,\ldots,k_{\eps}\}$ such that $m \leq \tilde{m}_{k}$.
Therefore, exchanging the order of summation, this expression is equal to
\begin{align}
& \sum_{k=2}^{k_{\eps}} \sum_{m=1}^{\tilde{m}_{k}} \ind\left[ \gamma_{J_{m}} \leq 2^{1-k} \right] 2^{2k+4} \ln\left(\frac{32 \tilde{m} q_{\eps,\conf}}{\conf}\right) \ind_{\DIS(V_{m-1})}\left(X_{m}^{2}\right) \notag
\\ & = \sum_{k=2}^{k_{\eps}} 2^{2k+4} \ln\left(\frac{32 \tilde{m} q_{\eps,\conf}}{\conf}\right) \left| \left\{ m \in \left\{1,\ldots,\tilde{m}_{k}\right\} : \gamma_{J_{m}} \leq 2^{1-k}, X_{m}^{2} \in \DIS(V_{m-1}) \right\} \right|. \label{eqn:total-queries-basic-sum}
\end{align}
Fix any value $\bar{k} \in \{2,\ldots,k_{\eps}\}$.
For any $k \in \left\{\bar{k},\ldots,k_{\eps}\right\}$, 
Lemma~\ref{lem:rc-simulated-query-bound-basic} implies that, on $\bigcap_{j=0}^{8} E_{j}$,
\begin{multline*}
\left| \left\{ m \in \left\{1,\ldots,\tilde{m}_{k}\right\} : \gamma_{J_{m}} \leq 2^{1-k}, X_{m}^{2} \in \DIS(V_{m-1}) \right\} \right|
\\ \leq 
17 \max\left\{\Px\left(x : \gamma_{x} < 2^{3-k}\right), \frac{\eps}{2\hat{\gamma}_{\eps}}\right\} \tilde{m}_{k}.
\end{multline*}
This implies
\begin{align}
& \sum_{k=\bar{k}}^{k_{\eps}} 2^{2k+4} \ln\left(\frac{32 \tilde{m} q_{\eps,\conf}}{\conf}\right)\left| \left\{ m \in \left\{1,\ldots,\tilde{m}_{k}\right\} : \gamma_{J_{m}} \leq 2^{1-k}, X_{m}^{2} \in \DIS(V_{m-1}) \right\} \right| \notag
\\ & \leq \sum_{k=\bar{k}}^{k_{\eps}} 2^{2k+9} \ln\left(\frac{32 \tilde{m} q_{\eps,\conf}}{\conf}\right) \max\left\{\Px\left(x : \gamma_{x} < 2^{3-k}\right), \frac{\eps}{2\hat{\gamma}_{\eps}}\right\} \tilde{m}_{k} \notag
\\ & \leq \sum_{k=\bar{k}}^{k_{\eps}} \max\!\left\{\!\Px\!\left(x : \gamma_{x} < 2^{3-k}\right), \frac{\eps}{2\hat{\gamma}_{\eps}}\!\right\} \frac{2^{k+17} c k_{\eps}}{\eps} \left(\! \vc \Log\!\left(\frac{2 k_{\eps}}{\eps} \right) \!+\! \Log\!\left(\frac{64 k_{\eps}}{\conf}\right)\!\right)
\!\Log\!\left(\frac{32 \tilde{m} q_{\eps,\conf}}{\conf}\right)\notag
\\ & \leq \sum_{k=\bar{k}}^{k_{\eps}} \max\!\left\{\!\Px\!\left(x : \gamma_{x} < 2^{3-k}\right), \frac{\eps}{2\hat{\gamma}_{\eps}}\!\right\} \frac{2^{k+25} c \Log\!\left(\frac{1}{\hat{\gamma}_{\eps}}\right)}{\eps} \left(\! \vc \Log\!\left(\frac{64}{\eps} \right) \!+\! \Log\!\left(\frac{1}{\conf}\right)\!\right) \!\Log\!\left(\frac{32 c \vc}{\eps\conf}\right),
\label{eqn:total-queries-basic-high-k}
\end{align}
where this last inequality is based on the fact that $k_{\eps} \leq \sqrt{32/\eps}$,
combined with some simple algebra.
If $\bar{k} > 2$, for any $k \in \left\{2,\ldots,\bar{k}-1\right\}$, 
Lemma~\ref{lem:rc-simulated-query-bound-star} implies that, 
on $\bigcap_{j=0}^{8} E_{j}$, 
\begin{multline*}
\left| \left\{ m \in \left\{1,\ldots,\tilde{m}_{k}\right\} : \gamma_{J_{m}} \leq 2^{1-k}, X_{m}^{2} \in \DIS(V_{m-1}) \right\} \right|
\\ \leq 6e \max\left\{ \Px\left( x : \gamma_{x} < 2^{2-\bar{k}} \right), \frac{\eps}{2\gamma_{\eps}} \right\} \tilde{m}_{k}
\\ + 91\tilde{c}\left( 2^{1+\bar{k}-k} + \Log\left( \frac{64 c}{\eps} \right) \right) \left( 6 \s \Log\left( \frac{128 c}{\eps} \right) + \Log\left(\frac{1}{\conf}\right) \right). 
\end{multline*}
This implies
\begin{align*}
& \sum_{k=2}^{\bar{k}-1} 2^{2k+4} \ln\left(\frac{32 \tilde{m} q_{\eps,\conf}}{\conf}\right)\left| \left\{ m \in \left\{1,\ldots,\tilde{m}_{k}\right\} : \gamma_{J_{m}} \leq 2^{1-k}, X_{m}^{2} \in \DIS(V_{m-1}) \right\} \right|
\\ & \leq \sum_{k=2}^{\bar{k}-1} 2^{2k+9} \ln\left(\frac{32 \tilde{m} q_{\eps,\conf}}{\conf}\right) \max\left\{ \Px\left( x : \gamma_{x} < 2^{2-\bar{k}} \right), \frac{\eps}{2\gamma_{\eps}} \right\} \tilde{m}_{k}
\\ & + \sum_{k=2}^{\bar{k}-1} 2^{2k+11} \tilde{c} \ln\left(\frac{32 \tilde{m} q_{\eps,\conf}}{\conf}\right) \left( 2^{1+\bar{k}-k} + \Log\left( \frac{64 c}{\eps} \right) \right) \left( 6 \s \Log\left( \frac{128 c}{\eps} \right) + \Log\left(\frac{1}{\conf}\right) \right).
\end{align*}
Since
\begin{align*}
\sum_{k=2}^{\bar{k}-1} 2^{2k} \tilde{m}_{k}
& \leq \sum_{k=2}^{\bar{k}-1} \frac{2^{k+8} c k_{\eps}}{\eps} \left( \vc \Log\left(\frac{2 k_{\eps}}{\eps}\right) + \Log\left(\frac{64 k_{\eps}}{\conf}\right) \right)
\\ & \leq \frac{2^{\bar{k}+8} c k_{\eps}}{\eps} \left( \vc \Log\left(\frac{2 k_{\eps}}{\eps}\right) + \Log\left(\frac{64 k_{\eps}}{\conf}\right) \right)
\\ & \leq \frac{2^{\bar{k}+12} c \Log(1/\hat{\gamma}_{\eps})}{\eps} \left( \vc \Log\left(\frac{64}{\eps}\right) + \Log\left(\frac{1}{\conf}\right) \right)
\end{align*}
and
\begin{equation*}
\sum_{k=2}^{\bar{k}-1} 2^{2k} \left( 2^{1+\bar{k}-k} + \Log\left( \frac{64 c}{\eps} \right) \right) 
\leq 2^{2\bar{k}} \left( 2 + \Log\left( \frac{64 c}{\eps} \right)\right)
\leq 2^{2\bar{k}+1} \Log\left( \frac{64 c}{\eps} \right),
\end{equation*}
we have that
\begin{align*}
& \sum_{k=2}^{\bar{k}-1} 2^{2k+4} \ln\left(\frac{32 \tilde{m} q_{\eps,\conf}}{\conf}\right)\left| \left\{ m \in \left\{1,\ldots,\tilde{m}_{k}\right\} : \gamma_{J_{m}} \leq 2^{1-k}, X_{m}^{2} \in \DIS(V_{m-1}) \right\} \right|
\\ & \leq 
2^{9} \ln\left(\frac{32 \tilde{m} q_{\eps,\conf}}{\conf}\right) \max\left\{ \Px\left( x : \gamma_{x} < 2^{2-\bar{k}} \right), \frac{\eps}{2\gamma_{\eps}} \right\}\sum_{k=2}^{\bar{k}-1} 2^{2k} \tilde{m}_{k}
\\ & \phantom{\leq} + 2^{11} \tilde{c} \ln\left(\frac{32 \tilde{m} q_{\eps,\conf}}{\conf}\right) \left( 6 \s \Log\left( \frac{128 c}{\eps} \right) + \Log\left(\frac{1}{\conf}\right) \right)\sum_{k=2}^{\bar{k}-1} 2^{2k} \left( 2^{1+\bar{k}-k} + \Log\left( \frac{64 c}{\eps} \right) \right) 
\\ & \leq 2^{9} \ln\!\left(\frac{32 \tilde{m} q_{\eps,\conf}}{\conf}\right) \max\!\left\{\! \Px\!\left( x : \gamma_{x} < 2^{2-\bar{k}} \right), \frac{\eps}{2\gamma_{\eps}} \!\right\} \frac{2^{\bar{k}+12} c \Log\!\left(\frac{1}{\hat{\gamma}_{\eps}}\right)}{\eps} \left(\! \vc \Log\!\left(\frac{64}{\eps}\right) + \Log\!\left(\frac{1}{\conf}\right) \!\right)
\\ & \phantom{\leq} + 2^{11} \tilde{c} \ln\left(\frac{32 \tilde{m} q_{\eps,\conf}}{\conf}\right) \left( 6 \s \Log\left( \frac{128 c}{\eps} \right) + \Log\left(\frac{1}{\conf}\right) \right) 2^{2\bar{k}+1} \Log\left( \frac{64 c}{\eps} \right)
\\ & \leq 
\max\left\{ \Px\left( x : \gamma_{x} < 2^{2-\bar{k}} \right), \frac{\eps}{2\gamma_{\eps}} \right\} \frac{2^{\bar{k}+25} c \Log\!\left(\frac{1}{\hat{\gamma}_{\eps}}\right)}{\eps} \left( \vc \Log\!\left(\frac{64}{\eps}\right) + \Log\!\left(\frac{1}{\conf}\right) \right) \Log\left(\frac{32 c \vc}{\eps\conf}\right)
\\ & \phantom{\leq} + 2^{2\bar{k}+16} \tilde{c} \left( 6 \s \Log\left( \frac{128 c}{\eps} \right) + \Log\left(\frac{1}{\conf}\right) \right) \Log\left( \frac{64 c}{\eps} \right) \Log\left(\frac{32 c \vc}{\eps\conf}\right).
\end{align*}
Plugging this and \eqref{eqn:total-queries-basic-high-k} into \eqref{eqn:total-queries-basic-sum} reveals that, on $\bigcap_{j=0}^{8} E_{j}$, if $\bar{k} > 2$,

\begin{align*}
& \sum_{m=1}^{\tilde{m}} \hat{q}_{\infty,m} \ind_{\DIS(V_{m-1})}\left(X_{m}^{2}\right)
\\ & \leq 
\max\left\{ \Px\left( x : \gamma_{x} < 2^{2-\bar{k}} \right), \frac{\eps}{2\gamma_{\eps}} \right\} \frac{2^{\bar{k}+25} c \Log\left(\frac{1}{\hat{\gamma}_{\eps}}\right)}{\eps} \left( \vc \Log\left(\frac{64}{\eps}\right) + \Log\left(\frac{1}{\conf}\right) \right) \Log\left(\frac{32 c \vc}{\eps\conf}\right)
\\ & + 2^{2\bar{k}+16} \tilde{c} \left( 6 \s \Log\left( \frac{128 c}{\eps} \right) + \Log\left(\frac{1}{\conf}\right) \right) \Log\left( \frac{64 c}{\eps} \right) \Log\left(\frac{32 c \vc}{\eps\conf}\right)
\\ & + \sum_{k=\bar{k}}^{k_{\eps}} \max\!\left\{\!\Px\!\left(x : \gamma_{x} < 2^{3-k}\right), \frac{\eps}{2\hat{\gamma}_{\eps}}\!\right\} \frac{2^{k+25} c \Log\left(\frac{1}{\hat{\gamma}_{\eps}}\right)}{\eps} \left(\! \vc \Log\!\left(\frac{64}{\eps} \right) \!+\! \Log\!\left(\frac{1}{\conf}\right)\!\right) \!\Log\!\left(\frac{32 c \vc}{\eps\conf}\right).
\\ & \leq \bar{c} 2^{2\bar{k}} \left( \s \Log\left( \frac{1}{\eps} \right) + \Log\left(\frac{1}{\conf}\right) \right) \Log\left(\frac{\vc}{\eps\conf}\right) \Log\left( \frac{1}{\eps} \right)
\\ & + \bar{c} \sum_{k=\bar{k}}^{k_{\eps}} \max\left\{\Px\left(x : \gamma_{x} < 2^{3-k}\right), \frac{\eps}{\hat{\gamma}_{\eps}}\right\} \frac{2^{k}}{\eps} \left( \vc \Log\left(\frac{1}{\eps} \right) + \Log\left(\frac{1}{\conf}\right)\right) \Log\left(\frac{\vc}{\eps\conf}\right) \Log\left(\frac{1}{\hat{\gamma}_{\eps}}\right),
\end{align*}
for an appropriate finite universal constant $\bar{c} \geq 1$.
Furthermore, if $\bar{k} = 2$, \eqref{eqn:total-queries-basic-high-k} and \eqref{eqn:total-queries-basic-sum} already imply that, on $\bigcap_{j=0}^{8} E_{j}$, 
\begin{align*}
& \sum_{m=1}^{\tilde{m}} \hat{q}_{\infty,m} \ind_{\DIS(V_{m-1})}\left(X_{m}^{2}\right)
\\ & \leq \bar{c} \sum_{k=\bar{k}}^{k_{\eps}} \max\left\{\Px\left(x : \gamma_{x} < 2^{3-k}\right), \frac{\eps}{\hat{\gamma}_{\eps}}\right\} \frac{2^{k}}{\eps} \left( \vc \Log\left(\frac{1}{\eps} \right) + \Log\left(\frac{1}{\conf}\right)\right) \Log\left(\frac{\vc}{\eps\conf}\right) \Log\left(\frac{1}{\hat{\gamma}_{\eps}}\right),
\end{align*}
again for $\bar{c} \geq 1$ chosen appropriately large.

Therefore, for a choice of $\bar{c}$ as above, on $\bigcap_{j=0}^{8} E_{j}$, for any $\bar{k} \in \{2,\ldots,k_{\eps}\}$, 
the final value of $t$ obtained when running \RQCAL~with budget $\infty$ is at most \eqref{eqn:total-queries}.
Since running \RQCAL~with a finite budget $n$ only returns a different $\hat{h}_{n}$ from the $\hat{h}_{\infty}$
returned by the infinite-budget execution if $t$ would exceed $n$ in the infinite-budget execution,
this implies that taking any $n$ of size at least \eqref{eqn:total-queries} suffices to produce identical 
output to the infinite-budget execution, on the event $\bigcap_{j=0}^{8} E_{j}$:
that is, $\hat{h}_{n} = \hat{h}_{\infty}$.
Therefore, since Lemma~\ref{lem:infinite-budget-error} implies that, on $\bigcap_{j=0}^{8} E_{j}$, $\er_{\PXY}(\hat{h}_{\infty}) - \er_{\PXY}(\target_{\PXY}) \leq \eps$,
we conclude that for $n$ of size at least \eqref{eqn:total-queries}, on $\bigcap_{j=0}^{8} E_{j}$, $\er_{\PXY}(\hat{h}_{n}) - \er_{\PXY}(\target_{\PXY}) \leq \eps$.

Finally, by a union bound, the event $\bigcap_{j=0}^{8} E_{j}$ has probability at least
\begin{equation*}
1 - 0 - \frac{\conf}{2} - \frac{\conf}{512} - \frac{\conf}{4} - \frac{\conf}{32} - 4\frac{\conf}{64}
> 1 - \conf.
\end{equation*}
\end{proof}

We can obtain the upper bounds for Theorems~\ref{thm:bounded}, \ref{thm:tsybakov}, and \ref{thm:benign} 
from Section~\ref{sec:main} by straightforward applications of Lemma~\ref{lem:total-queries}.  
Note that, due to the choice of $\hat{\gamma}_{\eps}$ in each of these proofs, \RQCAL~is not 
adaptive to the noise parameters.  It is conceivable that this dependence can be removed by 
a model selection procedure \citep*[see][for discussions related to this]{hanneke:12c,hanneke:11a}.
However, we do not discuss this further here, leaving this important issue for future work.
The upper bounds for 
Theorems~\ref{thm:diameter-localization} and \ref{thm:agnostic} are based on known results for other 
algorithms in the literature, though the lower bound for Theorem~\ref{thm:diameter-localization} is
new here.  The remainder of this section provides the details of these proofs.

\begin{proof}[of Theorem~\ref{thm:bounded}]
Fix any $\bound \in [0,1/2)$, $\eps,\conf \in (0,1)$, and $\PXY \in \BN(\bound)$.
Any $\gamma < 1/2-\bound$ has $\Px(x : \gamma_{x} \leq \gamma) = 0$,
and since we always have $\gamma_{\eps} \geq \eps/2$,
we must have $\gamma_{\eps} \geq \max\{1/2-\bound,\eps/2\}$.  We may therefore
take $\hat{\gamma}_{\eps} = \max\{1/2-\bound,\eps/2\}$.  
Therefore, taking $\bar{k} = k_{\eps}$ in Lemma~\ref{lem:total-queries},
the first term in \eqref{eqn:total-queries} is at most
\begin{equation*}
\frac{2^{10} \bar{c}}{(1-2\bound)^{2}} \left( \s \Log\left(\frac{1}{\eps}\right) + \Log\left(\frac{1}{\conf}\right) \right) \Log\left(\frac{\vc}{\eps\conf}\right)\Log\left(\frac{1}{\eps}\right),
\end{equation*}
while the second term in \eqref{eqn:total-queries} is at most
\begin{equation*}
\bar{c} \max\left\{ \Px\left(x : \gamma_{x} < \hat{\gamma}_{\eps} \right), \frac{\eps}{\hat{\gamma}_{\eps}} \right\} \frac{16}{\hat{\gamma}_{\eps} \eps} \left( \vc \Log\left(\frac{1}{\eps}\right) + \Log\left(\frac{1}{\conf}\right)\right) \Log\left(\frac{\vc}{\eps\conf}\right)\Log\left(\frac{1}{\hat{\gamma}_{\eps}}\right).
\end{equation*}
Since $\Px\left(x : \gamma_{x} < 1/2-\bound \right) = 0 < \frac{\eps}{1/2-\bound}$
and $\Px\left(x : \gamma_{x} < \eps/2\right) \leq 1 < 2 = \frac{\eps}{\eps/2}$,
we have that $\Px\left(x : \gamma_{x} < \hat{\gamma}_{\eps}\right) < \frac{\eps}{\hat{\gamma}_{\eps}}$,
so that the above is at most
\begin{equation*}
\frac{64 \bar{c}}{(1-2\bound)^{2}} \left( \vc \Log\left(\frac{1}{\eps}\right) + \Log\left(\frac{1}{\conf}\right)\right) \Log\left(\frac{\vc}{\eps\conf}\right)\Log\left(\frac{2}{(1-2\bound) \lor \eps}\right).
\end{equation*}
Therefore, recalling that $\s \geq \vc$,
since Lemma~\ref{lem:total-queries} implies that, with any budget $n$ at least the size of the sum of these two terms, 
\RQCAL~produces a classifier $\hat{h}_{n}$ with $\er_{\PXY}(\hat{h}_{n})-\er_{\PXY}(\target_{\PXY}) \leq \eps$ with probability at least $1-\conf$,
and requests a number of labels at most $n$, we have that
\begin{align*}
\LC_{\BN(\bound)}(\eps,\conf) & \leq
\frac{2^{10} \bar{c}}{(1-2\bound)^{2}} \left( \s \Log\left(\frac{1}{\eps}\right) + \Log\left(\frac{1}{\conf}\right) \right) \Log\left(\frac{\vc}{\eps\conf}\right)\Log\left(\frac{1}{\eps}\right)
\\ & + \frac{64 \bar{c}}{(1-2\bound)^{2}} \left( \vc \Log\left(\frac{1}{\eps}\right) + \Log\left(\frac{1}{\conf}\right)\right) \Log\left(\frac{\vc}{\eps\conf}\right)\Log\left(\frac{2}{(1-2\bound)\lor\eps}\right)
\\ & \lesssim \frac{1}{(1-2\bound)^{2}} \left( \s \Log\left(\frac{1}{\eps}\right) + \Log\left(\frac{1}{\conf}\right) \right) \Log\left(\frac{\vc}{\eps\conf}\right)\Log\left(\frac{1}{\eps}\right).
\end{align*}

On the other hand, \citet*{gine:06} have shown that for the passive learning method 
of \emph{empirical risk minimization}, producing the classifier 
$\check{h}_{n} = \argmin_{h \in \C} \sum_{m=1}^{n} \ind[ h(X_{m}) \neq Y_{m} ]$,
if $n$ is of size at least
\begin{equation*}
\frac{\check{c}}{(1-2\bound) \eps} \left( \vc \Log\left( \dc_{\PXY}\left( \frac{\eps}{1-2\bound} \right) \right) + \Log\left(\frac{1}{\conf}\right) \right),
\end{equation*}
for an appropriate finite universal constant $\check{c}$,
then with probability at least $1-\conf$, $\er_{\PXY}(\check{h}_{n}) - \er_{\PXY}(\target_{\PXY}) \leq \eps$.
Therefore, since Theorem~\ref{thm:dc-star} implies $\dc_{\PXY}(\eps / (1-2\bound)) \leq \dc_{\PXY}((\eps / (1-2\bound)) \land 1) \leq \min\left\{ \s, \frac{1-2\bound}{\eps} \lor 1\right\}$, we have
\begin{equation*}
\LC_{\BN(\bound)}(\eps,\conf) \lesssim \frac{1}{(1-2\bound) \eps} \left( \vc \Log\left( \min\left\{ \s,  \frac{1-2\bound}{\eps} \right\} \right) + \Log\left(\frac{1}{\conf}\right) \right).
\end{equation*}
Together, these two bounds on $\LC_{\BN(\bound)}(\eps,\conf)$ imply the following upper bound,
simply by choosing whichever of these two methods has the smaller corresponding bound for the given values of $\eps$, $\conf$, $\bound$, $\vc$, and $\s$.
\begin{equation*}
\LC_{\BN(\bound)}(\eps,\conf) \lesssim 
\min\begin{cases}
\frac{1}{(1-2\bound)^{2}} \left( \s \Log\left(\frac{1}{\eps}\right) + \Log\left(\frac{1}{\conf}\right) \right) \Log\left(\frac{\vc}{\eps\conf}\right)\Log\left(\frac{1}{\eps}\right)
\\ \frac{1}{(1-2\bound) \eps} \left( \vc \Log\left( \min\left\{ \s,  \frac{1-2\bound}{\eps} \right\} \right) + \Log\left(\frac{1}{\conf}\right) \right)
\end{cases}.
\end{equation*}
The statement of the upper bound in Theorem~\ref{thm:bounded} represents a relaxation of this,
in that it is slightly larger (in the logarithmic factors), the intention being that it is a simpler expression to state.
To arrive at this relaxation, we note that $\s \Log\left(\frac{1}{\eps}\right) + \Log\left(\frac{1}{\conf}\right) \leq \s \Log\left(\frac{1}{\eps\conf}\right)$,
and $\vc\Log\left(\min\left\{\s, \frac{1-2\bound}{\eps}\right\}\right) + \Log\left(\frac{1}{\conf}\right) \leq \vc \Log\left(\frac{1}{\eps\conf}\right) \Log\left(\frac{\vc}{\eps\conf}\right) \Log\left(\frac{1}{\eps}\right)$,
so that the above is at most
\begin{equation*}
\frac{1}{(1-2\bound)^{2}} \min\left\{ \s, \frac{(1-2\bound) \vc}{\eps} \right\} \Log\left(\frac{\vc}{\eps\conf}\right) \Log\left(\frac{1}{\eps\conf}\right) \Log\left(\frac{1}{\eps}\right).
\end{equation*}

Next, we turn to establishing the lower bound.
Fix $\eps \in (0,(1-2\bound)/24)$ and $\conf \in (0,1/24]$.
First note that taking $\zeta = \frac{2\eps}{1-2\bound}$ and $k = \min\left\{\s-1,\lfloor 1/\zeta \rfloor\right\}$ in Lemma~\ref{lem:rr11-star},
we have $\RR(k,\zeta,\bound) \subseteq \BN(\bound)$, so that Lemma~\ref{lem:rr11-star} implies
\begin{multline}
\label{eqn:bounded-rr-lb}
\LC_{\BN(\bound)}(\eps,\conf) 
\geq \LC_{\RR(k,\zeta,\bound)}(\eps,\conf) 
= \LC_{\RR(k,\zeta,\bound)}((\zeta/2)(1-2\bound),\conf) 
\geq \frac{\bound (k-1) \ln\left(\frac{1}{4\conf}\right)}{3(1-2\bound)^{2}}
\\ \geq \min\left\{ \s - 2, \frac{1-2\zeta}{\zeta}\right\} \frac{\bound \ln\left(\frac{1}{4\conf}\right)}{3(1-2\bound)^{2}}
= \frac{\bound}{(1-2\bound)^{2}} \min\left\{ \s-2, \frac{1-2\bound-4\eps}{2\eps} \right\} \ln\left(\frac{1}{4\conf}\right).
\\ \geq \frac{\bound}{8(1-2\bound)^{2}} \min\left\{ \s-2, \frac{1-2\bound}{\eps} \right\} \Log\left(\frac{1}{\conf}\right).
\end{multline}

Additionally, based on techniques of \citet*{kaariainen:06,beygelzimer:09,hanneke:11a},
the recent article of \citet*{hanneke:survey} contains the following lower bound (in the proof of Theorem 4.3 there),
for $\eps \in (0,(1-2\bound)/24)$ and $\conf \in (0,1/24]$.
\begin{align*}
\LC_{\BN(\bound)}(\eps,\conf) 
& \geq \max\left\{ 2 \left\lfloor \frac{1 - (1-2\bound)^{2}}{2(1-2\bound)^{2}} \ln\left( \frac{1}{8\conf(1-2\conf)} \right) \right\rfloor, \frac{\vc-1}{6} \left\lfloor \frac{1-(1-2\bound)^{2}}{2(1-2\bound)^{2}} \ln\left(\frac{9}{8}\right) \right\rfloor \right\}
\\ & \geq \max\left\{ 2 \left\lfloor \frac{\bound}{(1-2\bound)^{2}} \Log\left( \frac{1}{8\conf} \right) \right\rfloor, \frac{\vc-1}{6} \left\lfloor \frac{\bound}{10(1-2\bound)^{2}} \right\rfloor \right\}.
\end{align*}
If $\frac{\bound}{(1-2\bound)^{2}} \Log\left(\frac{1}{8\conf}\right) \geq 1$, then 
$2 \left\lfloor \frac{\bound}{(1-2\bound)^{2}} \Log\left( \frac{1}{8\conf} \right) \right\rfloor \geq \frac{\bound}{(1-2\bound)^{2}} \Log\left(\frac{1}{8\conf}\right) \geq \frac{\bound}{3 (1-2\bound)^{2}} \Log\left(\frac{1}{\conf}\right)$,
so that $\LC_{\BN(\bound)}(\eps,\conf) \gtrsim \frac{\bound}{(1-2\bound)^{2}} \Log\left(\frac{1}{\conf}\right)$.
Otherwise, if $\frac{\bound}{(1-2\bound)^{2}} \Log\left(\frac{1}{8\conf}\right) < 1$,
then since $\RE \subseteq \BN(\bound)$, and $|\C| \geq 2$ implies $\vc \geq 1 > \frac{\bound}{(1-2\bound)^{2}} \Log\left(\frac{1}{8\conf}\right)$, 
Theorem~\ref{thm:realizable} (proven above) implies we still have
$\LC_{\BN(\bound)}(\eps,\conf) \geq \LC_{\RE}(\eps,\conf) \gtrsim \frac{\bound}{(1-2\bound)^{2}} \Log\left(\frac{1}{\conf}\right)$
in this case.
When $\vc = 1$, these observations further imply $\LC_{\BN(\bound)} \gtrsim \frac{\vc \bound}{(1-2\bound)^{2}}$.
On the other hand, if $\vc > 1$, 
and if $\frac{\bound}{10(1-2\bound)^{2}} \geq 1$, then 
$\frac{\vc-1}{6} \left\lfloor \frac{\bound}{10(1-2\bound)^{2}} \right\rfloor \geq \frac{\vc}{240} \frac{\bound}{(1-2\bound)^{2}}$,
so that $\LC_{\BN(\bound)}(\eps,\conf) \gtrsim \frac{\vc \bound}{(1-2\bound)^{2}}$.
Otherwise, if $\frac{\bound}{10(1-2\bound)^{2}} < 1$, then since $\RE \subseteq \BN(\bound)$, 
Theorem~\ref{thm:realizable} implies we still have
$\LC_{\BN(\bound)}(\eps,\conf) \geq \LC_{\RE}(\eps,\conf) \gtrsim \vc \gtrsim \frac{\vc \bound}{(1-2\bound)^{2}}$
in this case as well.
If $\bound > 1/4$, then $\frac{\vc \bound}{(1-2\bound)^{2}} \geq \frac{\vc}{4 (1-2\bound)^{2}} \gtrsim \frac{\vc}{(1-2\bound)^{2}}$, so that $\LC_{\BN(\bound)}(\eps,\conf) \gtrsim \frac{\vc}{(1-2\bound)^{2}}$.
Otherwise, if $\bound \leq 1/4$, then $\frac{1}{(1-2\bound)^{2}} \leq 4$, so that Theorem~\ref{thm:realizable} implies
$\LC_{\BN(\bound)}(\eps,\conf) \geq \LC_{\RE}(\eps,\conf) \gtrsim \vc \gtrsim \frac{\vc}{(1-2\bound)^{2}}$.
Altogether, we have that
\begin{equation}
\label{eqn:bounded-survey-lb}
\LC_{\BN(\bound)}(\eps,\conf) \gtrsim \frac{1}{(1-2\bound)^{2}} \max\left\{ \bound \Log\left(\frac{1}{\conf}\right), \vc \right\}.
\end{equation}

When $\s \leq 2$, $\min\left\{ \s, \frac{1-2\bound}{\eps} \right\} \leq 2$, so that \eqref{eqn:bounded-survey-lb} trivially implies
\begin{equation}
\label{eqn:bn-lb-conclusion}
\LC_{\BN(\bound)}(\eps,\conf) \gtrsim \frac{1}{(1-2\bound)^{2}} \max\left\{ \min\left\{ \s, \frac{1-2\bound}{\eps} \right\} \bound \Log\left(\frac{1}{\conf}\right), \vc \right\}.
\end{equation}
Otherwise, when $\s \geq 3$, we have $\s - 2 \geq \s/3$, so that $\min\left\{ \s-2, \frac{1-2\bound}{\eps} \right\} \geq \frac{1}{3} \min\left\{ \s, \frac{1-2\bound}{\eps} \right\}$.
Combined with \eqref{eqn:bounded-rr-lb} and \eqref{eqn:bounded-survey-lb}, this implies \eqref{eqn:bn-lb-conclusion} holds
in this case as well.
\end{proof}

\begin{proof}[of Theorem~\ref{thm:tsybakov}]
We begin with the upper bounds.
Fix any $\tsybca \in [1,\infty)$, $\tsyba \in (0,1)$, $\eps,\conf \in (0,1)$, and $\PXY \in \TN(\tsybca,\tsyba)$.
For any $\gamma \leq \left(\frac{\eps}{2\tsybca^{\prime}}\right)^{1-\tsyba}$, by definition of $\TN(\tsybca,\tsyba)$, we have
$\gamma \Px\left( x : \gamma_{x} \leq \gamma \right) \leq \tsybca^{\prime} \gamma^{1/(1-\tsyba)} \leq \eps/2$.
Therefore, since we always have $\gamma_{\eps} \geq \eps/2$, we have
$\gamma_{\eps} \geq \max\left\{ \left(\frac{\eps}{2\tsybca^{\prime}}\right)^{1-\tsyba}, \frac{\eps}{2} \right\}$,
so that we can take $\hat{\gamma}_{\eps} = \max\left\{\left(\frac{\eps}{2\tsybca^{\prime}}\right)^{1-\tsyba}, \frac{\eps}{2}\right\}$.

Therefore, taking $\bar{k} = 2$ in Lemma~\ref{lem:total-queries} implies that, with any budget $n$ of size at least
\begin{equation}
\label{eqn:tsybakov-basic-initial}
\bar{c} \sum_{k=2}^{k_{\eps}} \max\!\left\{\min\!\left\{\tsybca^{\prime} 2^{(3-k) \tsyba / (1-\tsyba)},1\right\}, \frac{\eps}{\hat{\gamma}_{\eps}}\right\} \frac{2^{k}}{\eps} \left( \vc \Log\!\left(\frac{1}{\eps} \right) + \Log\!\left(\frac{1}{\conf}\right)\right) \Log\!\left(\frac{\vc}{\eps\conf}\right) \Log\!\left(\frac{1}{\hat{\gamma}_{\eps}}\right),
\end{equation}
\RQCAL~produces a classifier $\hat{h}_{n}$ with $\er_{\PXY}(\hat{h}_{n}) - \er_{\PXY}(\target_{\PXY}) \leq \eps$
with probability at least $1-\conf$, and requests a number of labels at most $n$.
This implies $\LC_{\TN(\tsybca,\tsyba)}(\eps,\conf)$ is at most \eqref{eqn:tsybakov-basic-initial}.

First note that
\begin{align}
\sum_{k=2}^{k_{\eps}} \frac{\eps}{\hat{\gamma}_{\eps}} \frac{2^{k}}{\eps}
& \leq \frac{2^{1+k_{\eps}}}{\hat{\gamma}_{\eps}}
= \frac{2^{ \left\lceil \log_{2}\left( 16 / \hat{\gamma}_{\eps} \right) \right\rceil }}{\hat{\gamma}_{\eps}}
\leq \frac{32}{\hat{\gamma}_{\eps}^{2}}
\leq 32 \min\left\{ \left( 2 \tsybca^{\prime} \right)^{2-2\tsyba} \eps^{2\tsyba-2}, 4 \eps^{-2} \right\} \notag
\\ & = 32 \min\left\{ (2-2\tsyba)^{2-2\tsyba} (2\tsyba)^{2\tsyba} \tsybca^{2} \eps^{2\tsyba-2}, 4 \eps^{-2} \right\}
\leq 128 \min\left\{ \tsybca^{2} \eps^{2\tsyba-2}, \eps^{-2} \right\}. \label{eqn:tsybakov-first-note}
\end{align}
Furthermore, since $\eps^{-2} < \tsybca^{2} \eps^{2\tsyba-2}$ only if $\eps > \tsybca^{-1/\tsyba}$, 
this is at most $128 \min\left\{ \tsybca^{2} \eps^{2\tsyba-2}, \tsybca^{1/\tsyba} \eps^{-1} \right\}$.
Also, for $\tsyba \geq 1/2$, letting $k_{(\tsybca,\tsyba)} = \left\lceil \log_{2}\left( 8 \left(\tsybca^{\prime}\right)^{(1-\tsyba)/\tsyba} \right) \right\rceil$,
we have $k_{(\tsybca,\tsyba)} \geq 2$.
Additionally, for $\tsyba \geq 1/2$, $2^{k \frac{1-2\tsyba}{1-\tsyba}}$ is nonincreasing in $k$.
In particular, if $k_{(\tsybca,\tsyba)} = 2$, then 
\begin{equation*}
\sum_{k=2}^{k_{\eps}} \min\left\{ \tsybca^{\prime} 2^{(3-k) \tsyba / (1-\tsyba)}, 1 \right\} \frac{2^{k}}{\eps}
\leq \sum_{k=k_{(\tsybca,\tsyba)}}^{k_{\eps}} \frac{8 \tsybca^{\prime}}{\eps} 2^{(k-3) \frac{1-2\tsyba}{1-\tsyba}}
\leq \frac{8 k_{\eps}}{\eps} (\tsybca^{\prime})^{\frac{1-\tsyba}{\tsyba}}.
\end{equation*}
Otherwise, if $k_{(\tsybca,\tsyba)} \geq 3$, then
\begin{align*}
\sum_{k=2}^{k_{\eps}} \min\left\{ \tsybca^{\prime} 2^{(3-k) \tsyba / (1-\tsyba)}, 1 \right\} \frac{2^{k}}{\eps}
& \leq \sum_{k=2}^{k_{(\tsybca,\tsyba)}-1} \frac{2^{k}}{\eps}
+ \sum_{k=k_{(\tsybca,\tsyba)}}^{k_{\eps}} \frac{8 \tsybca^{\prime}}{\eps} 2^{(k-3) \frac{1-2\tsyba}{1-\tsyba}}.
\\ & \leq \frac{16}{\eps} (\tsybca^{\prime})^{\frac{1-\tsyba}{\tsyba}} + \frac{8 (k_{\eps}-2)}{\eps} (\tsybca^{\prime})^{\frac{1-\tsyba}{\tsyba}} 
= \frac{8 k_{\eps}}{\eps} (\tsybca^{\prime})^{\frac{1-\tsyba}{\tsyba}}.
\end{align*}
Furthermore, since $(1-\tsyba)^{\frac{1-\tsyba}{\tsyba}} \leq 1$, we have 
\begin{equation*}
\frac{8 k_{\eps}}{\eps} (\tsybca^{\prime})^{\frac{1-\tsyba}{\tsyba}}
= \frac{8 k_{\eps}}{\eps} (1-\tsyba)^{\frac{1-\tsyba}{\tsyba}} (2\tsyba) \tsybca^{1/\tsyba}
\leq \frac{16 k_{\eps}}{\eps} \tsybca^{1/\tsyba}.
\end{equation*}
Therefore, in either case, when $\tsyba \geq 1/2$, \eqref{eqn:tsybakov-basic-initial} is at most
\begin{align*}
& \bar{c} \left( 16 k_{\eps} \tsybca^{1/\tsyba} \eps^{-1} + 128 \tsybca^{1/\tsyba} \eps^{-1}\right) \left( \vc \Log\left(\frac{1}{\eps} \right) + \Log\left(\frac{1}{\conf}\right)\right) \Log\left(\frac{\vc}{\eps\conf}\right) \Log\left(\frac{1}{\hat{\gamma}_{\eps}}\right)
\\ & \leq 
767 \bar{c} \frac{\tsybca^{1/\tsyba}}{\eps}\left( \vc \Log\left(\frac{1}{\eps} \right) + \Log\left(\frac{1}{\conf}\right)\right) \Log\left(\frac{\vc}{\eps\conf}\right) \Log^{2}\left(\frac{1}{\eps}\right),
\end{align*}
which is therefore an upper bound on $\LC_{\TN(\tsybca,\tsyba)}(\eps,\conf)$ in this case.

Otherwise, if $\tsyba \leq 1/2$, then $2^{k \frac{1-2\tsyba}{1-\tsyba}}$ is nondecreasing in $k$, so that
\begin{multline*}
\sum_{k=2}^{k_{\eps}} \min\left\{ \tsybca^{\prime} 2^{(3-k) \tsyba / (1-\tsyba)}, 1 \right\} \frac{2^{k}}{\eps}
\leq \sum_{k=2}^{k_{\eps}} 8 \tsybca^{\prime} 2^{(k-3) \frac{1-2\tsyba}{1-\tsyba}} \frac{1}{\eps}
\leq (k_{\eps}-1) 8 \tsybca^{\prime} 2^{(k_{\eps}-3) \frac{1-2\tsyba}{1-\tsyba}} \frac{1}{\eps}
\\ \leq (k_{\eps}-1) 8 \tsybca^{\prime} \left(\frac{2}{\hat{\gamma}_{\eps}}\right)^{\frac{1-2\tsyba}{1-\tsyba}} \frac{1}{\eps}
\leq (k_{\eps}-1) 8 \tsybca^{\prime} 2^{\frac{1-2\tsyba}{1-\tsyba}} \left( 2 \tsybca^{\prime} \right)^{1-2\tsyba} \left(\frac{1}{\eps}\right)^{2-2\tsyba}
\leq (k_{\eps}-1) 32 \left(\frac{\tsybca^{\prime}}{\eps}\right)^{2-2\tsyba}
\\ = (k_{\eps}-1) 32 (1-\tsyba)^{2-2\tsyba} (2\tsyba)^{2\tsyba} \tsybca^{2} \eps^{2\tsyba-2}
\leq (k_{\eps}-1) 32 \tsybca^{2} \eps^{2\tsyba-2}.
\end{multline*}
Therefore, \eqref{eqn:tsybakov-basic-initial} is at most
\begin{align*}
& \bar{c} \left( (k_{\eps}-1) 32 \tsybca^{2} \eps^{2\tsyba-2} + 128 \tsybca^{2} \eps^{2\tsyba-2}\right) \left( \vc \Log\left(\frac{1}{\eps} \right) + \Log\left(\frac{1}{\conf}\right)\right) \Log\left(\frac{\vc}{\eps\conf}\right) \Log\left(\frac{1}{\hat{\gamma}_{\eps}}\right)
\\ & \leq 
832 \bar{c} \tsybca^{2} \eps^{2\tsyba-2} \left( \vc \Log\left(\frac{1}{\eps} \right) + \Log\left(\frac{1}{\conf}\right)\right) \Log\left(\frac{\vc}{\eps\conf}\right) \Log^{2}\left(\frac{1}{\eps}\right).
\end{align*}
In particular, this implies $\LC_{\TN(\tsybca,\tsyba)}(\eps,\conf)$ is at most this large when $\tsyba \leq 1/2$.
Furthermore, this completes the proof of the upper bound for the cases where either $\tsyba \leq 1/2$, 
or $\tsyba \geq 1/2$ and $\frac{\s}{\vc} \geq \frac{1}{\tsybca^{1/\tsyba}\eps}$.

Next, consider the remaining case that $\tsyba \geq 1/2$ and $\frac{\s}{\vc} < \frac{1}{\tsybca^{1/\tsyba}\eps}$.
In particular, this requires that $\s < \infty$, and since $\s \geq \vc$, that $\eps < \tsybca^{-1/\tsyba}$.
In this case, let us take
\begin{equation*}
\bar{k} = 3+\left\lceil (1-\tsyba) \log_{2}\left( \frac{k_{\eps} \tsybca^{\prime}}{8 \eps} \frac{\vc\Log\left(\frac{1}{\eps}\right)+\Log\left(\frac{1}{\conf}\right)}{\s\Log\left(\frac{1}{\eps}\right)+\Log\left(\frac{1}{\conf}\right)} \right) \right\rceil.
\end{equation*}
Since $\s \geq \vc$, we have 
$\frac{\s \Log\left(\frac{1}{\eps}\right)+\Log\left(\frac{1}{\conf}\right)}{\vc\Log\left(\frac{1}{\eps}\right)+\Log\left(\frac{1}{\conf}\right)} \leq \frac{\s \Log\left(\frac{1}{\eps}\right)}{\vc \Log\left(\frac{1}{\eps}\right)} = \frac{\s}{\vc}$, 
so that, since $\frac{\s}{\vc} < \frac{1}{\tsybca^{1/\tsyba}\eps}$, 
we have $\frac{\s \Log\left(\frac{1}{\eps}\right)+\Log\left(\frac{1}{\conf}\right)}{\vc\Log\left(\frac{1}{\eps}\right)+\Log\left(\frac{1}{\conf}\right)}< \frac{1}{\tsybca^{1/\tsyba}\eps}$.
A bit of algebra reveals that, in this case, $\bar{k} \geq 2$.
Therefore, in this case, Lemma~\ref{lem:total-queries} implies that, with any budget $n$ of size at least
\begin{align}
& \bar{c} 2^{2\bar{k}} \left( \s \Log\left( \frac{1}{\eps} \right) + \Log\left(\frac{1}{\conf}\right) \right) \Log\left(\frac{\vc}{\eps\conf}\right) \Log\left( \frac{1}{\eps} \right) + \notag
\\ & \bar{c} \sum_{k=\bar{k}}^{k_{\eps}} \max\!\left\{\min\left\{ \tsybca^{\prime} 2^{(3-k) \tsyba / (1-\tsyba)}, 1 \right\}, \frac{\eps}{\hat{\gamma}_{\eps}}\right\} \frac{2^{k}}{\eps} \left( \vc \Log\!\left(\frac{1}{\eps} \right) + \Log\!\left(\frac{1}{\conf}\right)\right) \Log\!\left(\frac{\vc}{\eps\conf}\right) \Log\!\left(\frac{1}{\hat{\gamma}_{\eps}}\right), \label{eqn:tsybakov-star-initial}
\end{align}
\RQCAL~produces a classifier $\hat{h}_{n}$ with $\er_{\PXY}(\hat{h}_{n}) - \er_{\PXY}(\target_{\PXY}) \leq \eps$ with probability at least $1-\conf$,
and requests a number of labels at most $n$.  This implies $\LC_{\TN(\tsybca,\tsyba)}(\eps,\conf)$ is at most \eqref{eqn:tsybakov-star-initial}.

Now note that
\begin{align*}
& 2^{2\bar{k}} \left( \s \Log\left( \frac{1}{\eps} \right) + \Log\left(\frac{1}{\conf}\right) \right)
\\ & \leq 256 \left(\frac{k_{\eps} \tsybca^{\prime}}{8 \eps} \frac{\vc\Log\left(\frac{1}{\eps}\right)+\Log\left(\frac{1}{\conf}\right)}{\s\Log\left(\frac{1}{\eps}\right)+\Log\left(\frac{1}{\conf}\right)}\right)^{2-2\tsyba} \left( \s \Log\left( \frac{1}{\eps} \right) + \Log\left(\frac{1}{\conf}\right) \right)
\\ & \leq 1024 \tsybca^{2} \left(\frac{1}{\eps}\right)^{2-2\tsyba} \left(\frac{\s\Log\left(\frac{1}{\eps}\right)+\Log\left(\frac{1}{\conf}\right)}{\vc\Log\left(\frac{1}{\eps}\right)+\Log\left(\frac{1}{\conf}\right)}\right)^{2\tsyba-1} \left( \vc \Log\left( \frac{1}{\eps} \right) + \Log\left(\frac{1}{\conf}\right) \right) \Log^{2-2\tsyba}\left( \frac{1}{\eps} \right).
\end{align*}
Also, since $\tsyba \geq 1/2$, $2^{k \frac{1-2\tsyba}{1-\tsyba}}$ is nonincreasing in $k$, so that 
\begin{align*}
& \sum_{k=\bar{k}}^{k_{\eps}} \tsybca^{\prime} 2^{(3-k) \tsyba / (1-\tsyba)} \frac{2^{k}}{\eps} \left( \vc \Log\left(\frac{1}{\eps} \right) + \Log\left(\frac{1}{\conf}\right)\right)
\\ & \leq \frac{8 \tsybca^{\prime} k_{\eps}}{\eps} 2^{(\bar{k}-3) \frac{1-2\tsyba}{1-\tsyba}} \left( \vc \Log\left(\frac{1}{\eps} \right) + \Log\left(\frac{1}{\conf}\right)\right)
\\ & \leq \frac{8 \tsybca^{\prime} k_{\eps}}{\eps} \left( \frac{k_{\eps} \tsybca^{\prime}}{8 \eps} \frac{\vc\Log\left(\frac{1}{\eps}\right)+\Log\left(\frac{1}{\conf}\right)}{\s\Log\left(\frac{1}{\eps}\right)+\Log\left(\frac{1}{\conf}\right)} \right)^{1-2\tsyba} \left( \vc \Log\left(\frac{1}{\eps} \right) + \Log\left(\frac{1}{\conf}\right)\right)
\\ & \leq 256 \tsybca^{2}\left(\frac{1}{\eps}\right)^{2-2\tsyba} \left( \frac{\s\Log\left(\frac{1}{\eps}\right)+\Log\left(\frac{1}{\conf}\right)}{\vc\Log\left(\frac{1}{\eps}\right)+\Log\left(\frac{1}{\conf}\right)} \right)^{2\tsyba-1} \left( \vc \Log\left(\frac{1}{\eps} \right) + \Log\left(\frac{1}{\conf}\right)\right) \Log^{2-2\tsyba}\left(\frac{1}{\eps}\right).
\end{align*}
Furthermore, by \eqref{eqn:tsybakov-first-note},
\begin{align*}
& \bar{c} \sum_{k=\bar{k}}^{k_{\eps}} \frac{\eps}{\hat{\gamma}_{\eps}} \frac{2^{k}}{\eps} \left(\vc \Log\left(\frac{1}{\eps}\right) + \Log\left(\frac{1}{\conf}\right)\right)
\leq 128 \tsybca^{2} \left(\frac{1}{\eps}\right)^{2-2\tsyba} \left(\vc \Log\left(\frac{1}{\eps}\right) + \Log\left(\frac{1}{\conf}\right)\right)
\\ & \leq 128 \tsybca^{2} \left(\frac{1}{\eps}\right)^{2-2\tsyba} \left(\frac{\s\Log\left(\frac{1}{\eps}\right)+\Log\left(\frac{1}{\conf}\right)}{\vc\Log\left(\frac{1}{\eps}\right)+\Log\left(\frac{1}{\conf}\right)}\right)^{2\tsyba-1} \left(\vc\Log\left(\frac{1}{\eps}\right)+\Log\left(\frac{1}{\conf}\right)\right)\Log^{2-2\tsyba}\left(\frac{1}{\eps}\right).
\end{align*}
Therefore, since $\Log\left(\frac{1}{\hat{\gamma}_{\eps}}\right) \leq \Log\left(\frac{2}{\eps}\right) \leq 2\Log\left(\frac{1}{\eps}\right)$, 
\eqref{eqn:tsybakov-star-initial} is at most
\begin{equation}
\label{eqn:tsybakov-star-final-refined}
2^{11} \bar{c} \tsybca^{2} \left(\frac{1}{\eps}\right)^{2-2\tsyba}\!\!\left(\frac{\s\Log\!\left(\frac{1}{\eps}\right)+\Log\!\left(\frac{1}{\conf}\right)}{\vc\Log\!\left(\frac{1}{\eps}\right)+\Log\!\left(\frac{1}{\conf}\right)}\right)^{2\tsyba-1}\!\!\!\left(\!\vc\Log\!\left(\frac{1}{\eps}\right)\!+\!\Log\!\left(\frac{1}{\conf}\right)\!\right)\Log\!\left(\frac{\vc}{\eps\conf}\right)\Log^{3-2\tsyba}\!\left(\frac{1}{\eps}\right).
\end{equation}
The upper bound for the case $\tsyba \geq 1/2$ and $\frac{\s}{\vc} < \frac{1}{\tsybca^{1/\tsyba}\eps}$ then follows by further relaxing this (purely to simplify the theorem statement),
noting that $\Log^{3-2\tsyba}\left(\frac{1}{\eps}\right) \leq \Log^{2}\left(\frac{1}{\eps}\right)$, and
$\frac{\s\Log\left(\frac{1}{\eps}\right)+\Log\left(\frac{1}{\conf}\right)}{\vc\Log\left(\frac{1}{\eps}\right)+\Log\left(\frac{1}{\conf}\right)} \leq \frac{\s}{\vc}$.

Next, we turn to establishing the lower bound.
Fix any $\tsybca \in [4,\infty)$, $\tsyba \in (0,1)$,
$\conf \in (0,1/24]$,
and $\eps \in \left(0,1/(24 \tsybca^{1/\tsyba})\right)$.
For this range of values,
the recent article of \citet*{hanneke:survey} proves a lower bound of
\begin{equation*}
\LC_{\TN(\tsybca,\tsyba)}(\eps,\conf) \gtrsim \tsybca^{2} \left(\frac{1}{\eps}\right)^{2-2\tsyba} \left( \vc + \Log\left(\frac{1}{\conf}\right) \right),
\end{equation*}
based on techniques of \citet*{kaariainen:06,beygelzimer:09,hanneke:11a}.
It remains only to establish the remaining term in the lower bound for the case when $\tsyba > 1/2$, via Lemma~\ref{lem:rr11-star}.
In the cases that $\s \leq 2$, this term is implied by the above $\tsybca^{2} \eps^{2\tsyba-2} \Log\left(\frac{1}{\conf}\right)$ lower bound.
For the remainder of the proof, suppose $\s \geq 3$ and $\tsyba > 1/2$.
Let 
\begin{equation*}
k = \min\left\{\s-1,\left\lfloor \frac{\left(\tsybca^{\prime}\right)^{\frac{\tsyba-1}{\tsyba}}}{\eps} \right\rfloor, \left\lfloor \frac{\tsybca^{\prime}}{\eps} 4^{-\frac{1}{1-\tsyba}} \right\rfloor \right\},
\end{equation*}
$\bound = \frac{1}{2} - \left(\frac{k \eps}{\tsybca^{\prime}}\right)^{1-\tsyba}$,
and $\zeta = \frac{2\eps}{1-2\bound}$;
note that $\zeta \in (0,1]$, $\bound \in [0,1/2)$, and $2 \leq k \leq \min\!\left\{ \s\!-\!1, \left\lfloor 1/\zeta \right\rfloor \right\}$;
in particular, the fact that $k \leq \lfloor 1/\zeta \rfloor$ is established by concavity of the $x \mapsto \frac{\left(\tsybca^{\prime}\right)^{\tsyba-1}}{\eps^{\tsyba}} x^{1-\tsyba}$ function,
which equals $x$ at both $x=0$ and $x = x_{0} = \frac{\left(\tsybca^{\prime}\right)^{\frac{\tsyba-1}{\tsyba}}}{\eps}$; since this function is $1/\zeta$ at $x=k$,
and $0 < k \leq x_{0}$, concavity of the function implies $1/\zeta \geq k$, and integrality of $k$ implies $\lfloor 1/\zeta \rfloor \geq k$ as well.
Also note that any $\PXY \in \RR(k,\zeta,\bound)$ 
has a marginal distribution $\Px$ such that
\begin{align*}
\Px\left(x : \left|\eta(x;\PXY)-1/2\right| \leq 1/2-\bound\right) 
& = k \zeta 
= k\eps \frac{2}{1-2\bound}
\\ & = \tsybca^{\prime} \left(1/2 - \bound\right)^{\frac{1}{1-\tsyba}} \frac{2}{1-2\bound}
= \tsybca^{\prime} \left(1/2 - \bound\right)^{\frac{\tsyba}{1-\tsyba}}.
\end{align*}
Since every point $x$ in the support of $\Px_{k,\zeta}$ has either $|\eta(x;\PXY)-1/2| = 1/2-\bound$ or $|\eta(x;\PXY)-1/2|=1/2$,
this implies that any $\gamma \in [1/2-\bound,1/2)$ has
$\Px\left( x : \left| \eta(x;\PXY) - 1/2 \right| \leq \gamma \right) = \Px\left( x : \left| \eta(x;\PXY) - 1/2 \right| \leq 1/2-\bound \right) = \tsybca^{\prime} \left(1/2-\bound\right)^{\tsyba/(1-\tsyba)} \leq \tsybca^{\prime} \gamma^{\tsyba/(1-\tsyba)}$,
while any $\gamma \geq 1/2$ always has $\Px\left( x : \left| \eta(x;\PXY) - 1/2 \right| \leq \gamma \right) = 1 \leq \tsybca^{\prime} \gamma^{\tsyba/(1-\tsyba)}$. 
Furthermore, any $\gamma \in (0,1/2-\bound)$ has $\Px(x : |\eta(x;\PXY)-1/2| \leq \gamma) = 0 \leq \tsybca^{\prime} \gamma^{\tsyba/(1-\tsyba)}$.
Thus, $\PXY \in \TN(\tsybca,\tsyba)$ as well.  Since this holds for every $\PXY \in \RR(k,\zeta,\bound)$, this implies $\RR(k,\zeta,\bound) \subseteq \TN(\tsybca,\tsyba)$.
Therefore, Lemma~\ref{lem:rr11-star} implies 
\begin{align}
\LC_{\TN(\tsybca,\tsyba)}(\eps,\conf) 
\geq \LC_{\RR(k,\zeta,\bound)}(\eps,\conf) 
& = \LC_{\RR(k,\zeta,\bound)}((\zeta/2)(1-2\bound),\conf) \notag 
\\ & \geq \frac{\bound (k-1) \ln\left(\frac{1}{4\conf}\right)}{3(1-2\bound)^{2}}
\gtrsim \frac{\bound (k-1)}{(1-2\bound)^{2}} \Log\left(\frac{1}{\conf}\right). \label{eqn:tsybakov-pre-instantiation-lb}
\end{align}
Finally, note that
\begin{align}
& \frac{\bound (k-1)}{(1-2\bound)^{2}}
= \left( \frac{1}{2} - \left(\frac{k\eps}{\tsybca^{\prime}}\right)^{1-\tsyba}\right) \frac{1}{4} \left(\frac{\tsybca^{\prime}}{k \eps}\right)^{2-2\tsyba} (k-1)
\geq
\frac{1}{16} \left(\frac{\tsybca^{\prime}}{\eps}\right)^{2-2\tsyba} k^{2\tsyba-2} (k-1) \notag
\\ & \geq \frac{1}{32} \left(\frac{\tsybca^{\prime}}{\eps}\right)^{2-2\tsyba} (k-1)^{2\tsyba-1}
\geq \frac{\tsybca^{2}}{64} \left(\frac{1}{\eps}\right)^{2-2\tsyba} (k-1)^{2\tsyba-1}. \label{eqn:tsybakov-near-final}
\end{align}
Since $\tsybca \geq 4$,
\begin{align*}
\left(\tsybca^{\prime}\right)^{\frac{\tsyba-1}{\tsyba}} 
& = \tsybca^{\prime} \left(\tsybca^{\prime}\right)^{-1/\tsyba} 
= \tsybca^{\prime} (1-\tsyba)^{-1/\tsyba} (2\tsyba)^{-1/(1-\tsyba)} \tsybca^{- \frac{1}{\tsyba(1-\tsyba)}}
\\ & \leq \tsybca^{\prime} (1-\tsyba)^{-1/\tsyba} (2\tsyba)^{-1/(1-\tsyba)} 4^{-\frac{1}{\tsyba(1-\tsyba)}}
= \tsybca^{\prime} \left(4^{1/\tsyba} (1-\tsyba)^{(1-\tsyba)/\tsyba} (2\tsyba) \right)^{-1/(1-\tsyba)}.
\end{align*}
One can easily verify that 
$4^{1/\tsyba} (1-\tsyba)^{(1-\tsyba)/\tsyba} (2\tsyba) \geq 6$ for $\tsyba \in (1/2,1)$ (with minimum achieved at $\tsyba=3/4$),
so that 
$\tsybca^{\prime} \left(4^{1/\tsyba} (1-\tsyba)^{(1-\tsyba)/\tsyba} (2\tsyba) \right)^{-1/(1-\tsyba)}
\leq \tsybca^{\prime} 6^{-1/(1-\tsyba)} 
\leq \tsybca^{\prime} 4^{-1/(1-\tsyba)}$.
Thus, $\frac{\left(\tsybca^{\prime}\right)^{\frac{\tsyba-1}{\tsyba}}}{\eps} \leq \frac{\tsybca^{\prime}}{\eps} 4^{-\frac{1}{1-\tsyba}}$,
so that the third term in the definition of $k$ is redundant.
Therefore, \eqref{eqn:tsybakov-near-final} is at least
\begin{align*}
& \frac{\tsybca^{2}}{64} \left(\frac{1}{\eps}\right)^{2-2\tsyba} \min\left\{\s-2,\frac{\left(\tsybca^{\prime}\right)^{\frac{\tsyba-1}{\tsyba}}}{\eps} - 2 \right\}^{2\tsyba-1}
\geq \frac{\tsybca^{2}}{64} \left(\frac{1}{\eps}\right)^{2-2\tsyba} \min\left\{\s-2,\frac{1}{2 \tsybca^{1/\tsyba} \eps} - 2 \right\}^{2\tsyba-1}
\\ & \geq \frac{\tsybca^{2}}{64} \left(\frac{1}{\eps}\right)^{2-2\tsyba} \min\left\{\frac{\s}{3},\frac{1}{3 \tsybca^{1/\tsyba} \eps} \right\}^{2\tsyba-1}
\geq \frac{\tsybca^{2}}{192} \left(\frac{1}{\eps}\right)^{2-2\tsyba} \min\left\{\s,\frac{1}{\tsybca^{1/\tsyba} \eps} \right\}^{2\tsyba-1}.
\end{align*}
Plugging this into \eqref{eqn:tsybakov-pre-instantiation-lb} completes the proof.
\end{proof}

As an aside, we note that it is possible to improve the logarithmic factors in the upper bound in Theorem~\ref{thm:tsybakov}.
One clear refinement comes from using \eqref{eqn:tsybakov-star-final-refined} directly (rather than relaxing the factor depending on $\s$).
We can further reduce the bound by another logarithmic factor when $\tsyba$ is bounded away from $1/2$
by noting that the summations of terms $2^{(k-3)\frac{1-2\tsyba}{1-\tsyba}}$ in the above proof are geometric in that case.
We also note that, for very large values of $\tsybca$, the bounds (proven below) for 
$\LC_{\BE(1/2)}(\eps,\conf)$ may be more informative than those derived above.

\begin{proof}[of Theorem~\ref{thm:diameter-localization}]
The technique leading to Lemma~\ref{lem:total-queries} does not apply to $\DL(\tsybca,\tsyba)$,
since we are not guaranteed $\target_{\PXY} \in \C$ for $\PXY \in \DL(\tsybca,\tsyba)$.
We therefore base the upper bounds in Theorem~\ref{thm:diameter-localization} directly on 
existing results in the literature, in combination with Theorem~\ref{thm:dc-star}.  Thus,
the proof of this upper bound does not provide any new insights on improving the design of active learning algorithms
for distributions in $\DL(\tsybca,\tsyba)$.  Rather, it merely re-expresses the known results, 
in terms of the star number instead of a distribution-dependent complexity measure.
The lower bounds are directly inherited from Theorem~\ref{thm:tsybakov}.

Fix any $\tsybca \in [1,\infty)$, $\tsyba \in [0,1]$, and $\eps,\conf \in (0,1)$. 
Following the work of \citet*{hanneke:09a,hanneke:11a} and \citet*{koltchinskii:10}, 
the recent work of \citet*{hanneke:12b} studies an algorithm proposed by \citet*{hanneke:12a} 
(a modified variant of the $A^2$ algorithm of \citealp*{balcan:06,balcan:09}),
and shows that there exists a finite universal constant $\mathring{c} \geq 1$ such that,
for any $\PXY \in \DL(\tsybca,\tsyba)$, for any budget $n$ of size at least
\begin{equation}
\label{eqn:dl-dc-bound}
\mathring{c} \tsybca^{2} \left(\frac{1}{\eps}\right)^{2-2\tsyba} \dc_{\PXY}\left( \tsybca \eps^{\tsyba} \right) \left( \vc \Log\left( \dc_{\PXY}\left( \tsybca \eps^{\tsyba} \right) \right) + \Log\left(\frac{\Log(1/\eps)}{\conf}\right) \right) \Log\left(\frac{1}{\eps}\right),
\end{equation}
the algorithm produces a classifier $\hat{h}_{n}$ with $\er_{\PXY}(\hat{h}_{n}) - \inf_{h \in \C} \er_{\PXY}(h) \leq \eps$
with probability at least $1-\conf/4$, and requests a number of labels at most $n$
(see also \citealp*{hanneke:thesis,hanneke:09a,hanneke:11a,hanneke:12a,hanneke:survey,koltchinskii:10}, for similar results for related methods).
By Theorem~\ref{thm:dc-star}, when $\tsybca \eps^{\tsyba} \leq 1$, \eqref{eqn:dl-dc-bound} is at most
\begin{equation}
\label{eqn:dl-star-bound}
\mathring{c} \tsybca^{2} \left(\frac{1}{\eps}\right)^{2-2\tsyba} \min\left\{ \s, \frac{1}{\tsybca \eps^{\tsyba}} \right\} \left( \vc \Log\left( \min\left\{ \s, \frac{1}{\tsybca \eps^{\tsyba}} \right\} \right) + \Log\left( \frac{\Log(1/\eps)}{\conf}\right)\right) \Log\left(\frac{1}{\eps}\right),
\end{equation}
which is therefore an upper bound on $\LC_{\DL(\tsybca,\tsyba)}(\eps,\conf)$.
We can also extend this to the case $\tsybca \eps^{\tsyba} > 1$ as follows.
\citet*{vapnik:71,vapnik:82,vapnik:98} have proven that the sample complexity of passive learning satisfies
\begin{equation*}
\SC_{\AG(1)}(\eps,\conf) 
\lesssim \frac{1}{\eps^{2}} \left( \vc \Log\left(\frac{1}{\eps}\right) + \Log\left(\frac{1}{\conf}\right) \right).
\end{equation*}
In the case $\tsybca \eps^{\tsyba} > 1$, this is at most
\begin{align*}
& \tsybca \left(\frac{1}{\eps}\right)^{2-\tsyba} \left( \vc \Log\left(\frac{1}{\eps}\right) + \Log\left(\frac{1}{\conf}\right) \right)
\\ & = \tsybca^{2} \left( \frac{1}{\eps} \right)^{2-2\tsyba} \min\left\{ \s, \frac{1}{\tsybca \eps^{\tsyba}} \right\} \left( \vc \Log\left(\frac{1}{\eps}\right) + \Log\left(\frac{1}{\conf}\right) \right)
\\ & \leq \tsybca^{2} \left( \frac{1}{\eps} \right)^{2-2\tsyba} \min\left\{ \s, \frac{1}{\tsybca \eps^{\tsyba}} \right\} \left( \vc \Log\left(\min\left\{\s, \frac{1}{\tsybca\eps^{\tsyba}}\right\}\right) + \Log\left(\frac{\Log(1/\eps) }{\conf}\right) \right) \Log\left(\frac{1}{\eps}\right).
\end{align*}
Therefore, since $\LC_{\AG(1)}(\eps,\conf) \leq \SC_{\AG(1)}(\eps,\conf)$ and $\DL(\tsybca,\tsyba) \subseteq \AG(1)$, 
we may conclude that, regardless of whether $\tsybca \eps^{\tsyba}$ is greater than or less than $1$, 
we have that $\LC_{\DL(\tsybca,\tsyba)}(\eps,\conf)$ is bounded by a value proportional to \eqref{eqn:dl-star-bound}.
To match the form of the upper bound stated in Theorem~\ref{thm:diameter-localization}, we can simply relax this, noting that 
$\vc \Log\left(\min\left\{\s, \frac{1}{\tsybca \eps^{\tsyba}}\right\}\right) + \Log\left(\frac{\Log(1/\eps)}{\conf}\right)
\leq 2\vc \Log\left( \frac{1}{\eps} \right) + \Log\left(\frac{1}{\conf}\right)
\leq 2\vc\Log\left(\frac{1}{\eps\conf}\right)$.

Next, turning to the lower bound,
recall that $\TN(\tsybca,\tsyba) \subseteq \DL(\tsybca,\tsyba)$,
so that $\LC_{\TN(\tsybca,\tsyba)}(\eps,\conf) \leq \LC_{\DL(\tsybca,\tsyba)}(\eps,\conf)$ \citep*{mammen:99,tsybakov:04}.
Thus, the lower bound in Theorem~\ref{thm:tsybakov} (proven above) for $\LC_{\TN(\tsybca,\tsyba)}(\eps,\conf)$
also applies to $\LC_{\DL(\tsybca,\tsyba)}(\eps,\conf)$.
\end{proof}

\begin{proof}[of Theorem~\ref{thm:benign}]
Again, we begin with the upper bound.
Fix any $\nu \in [0,1/2]$, $\eps,\conf \in (0,1)$, and $\PXY \in \BE(\nu)$.
The case $\nu = 0$ is already addressed by the upper bound in Theorem~\ref{thm:realizable};
we therefore focus the remainder of the proof on the case of $\nu > 0$.
For $(X,Y) \sim \PXY$, any $x \in \X$ has $1 - 2\P( Y \neq \target_{\PXY}(X) | X = x ) = 2\gamma_{x}$.
Therefore, for any $\gamma \in [0,1/2)$, any $x \in \X$ with $\gamma_{x} \leq \gamma$ has $\P( Y \neq \target_{\PXY}(X) | X = x ) \geq 1/2 - \gamma$.
Thus, Markov's inequality implies 
\begin{equation}
\label{eqn:benign-margin-condition}
\Px(x : \gamma_{x} \leq \gamma) 
\leq \Px(x : \P( Y \neq \target_{\PXY}(X) | X = x ) \geq 1/2 - \gamma ) 
\leq \frac{2}{1-2\gamma} \er_{\PXY}(\target_{\PXY}) 
\leq \frac{2\nu}{1-2\gamma}.
\end{equation}
In particular, this implies that for $\gamma \leq \frac{\eps}{4\nu+2\eps}$,
$\gamma \Px(x : \gamma_{x} \leq \gamma) \leq \frac{2\nu\gamma}{1-2\gamma} \leq \frac{2\nu / (2\nu+\eps)}{1-\eps/(2\nu+\eps)} \frac{\eps}{2} = \frac{\eps}{2}$.
Thus, $\gamma_{\eps} \geq \frac{\eps}{4\nu+2\eps}$.
We can therefore take $\hat{\gamma}_{\eps} = \max\left\{ \frac{\eps}{4\nu+2\eps}, \frac{\eps}{2} \right\}$.

Also note that any $\gamma \geq 0$ has $\Px(x : \gamma_{x} \leq \gamma) \leq 1$, so that together with \eqref{eqn:benign-margin-condition}, we have
$\Px(x : \gamma_{x} \leq \gamma) \leq \frac{2\nu}{1-\min\{2\gamma,1-2\nu\}}$.
Now taking $\bar{k} = 2$, Lemma~\ref{lem:total-queries} implies that, with any budget $n$ of size at least
\begin{equation}
\label{eqn:benign-basic-initial}
\bar{c} \sum_{k=2}^{k_{\eps}} \max\left\{ \frac{2\nu}{1-\min\left\{2^{4-k},1-2\nu\right\}}, \frac{\eps}{\hat{\gamma}_{\eps}} \right\} \frac{2^{k}}{\eps} \left(\vc \Log\left(\frac{1}{\eps}\right) + \Log\left(\frac{1}{\conf}\right)\right)\Log\left(\frac{\vc}{\eps\conf}\right)\Log\left(\frac{1}{\hat{\gamma}_{\eps}}\right)\!,
\end{equation}
\RQCAL~produces a classifier $\hat{h}_{n}$ with $\er_{\PXY}(\hat{h}_{n}) - \er_{\PXY}(\target_{\PXY}) \leq \eps$ with probability at least $1-\conf$,
and requests a number of labels at most $n$.  This implies $\LC_{\BE(\nu)}(\eps,\conf)$ is at most \eqref{eqn:benign-basic-initial}.
Now note that
\begin{equation}
\label{eqn:benign-basic-eps-gamma}
\sum_{k=2}^{k_{\eps}} \frac{\eps}{\hat{\gamma}_{\eps}} \frac{2^{k}}{\eps}
\leq \frac{1}{\hat{\gamma}_{\eps}} 2^{1+k_{\eps}} 
\leq 512 \left(\frac{\nu+\eps}{\eps}\right)^{2}.
\end{equation}
Next, we have
\begin{align*}
& \sum_{k=2}^{k_{\eps}} \frac{2\nu}{1-\min\left\{2^{4-k},1-2\nu\right\}} \frac{2^{k}}{\eps}
\leq \frac{28}{\eps} + \sum_{k=5}^{k_{\eps}} \frac{2\nu}{1-2^{4-k}} \frac{2^{k}}{\eps}
\leq \frac{28}{\eps} + \sum_{k=5}^{k_{\eps}} \frac{4\nu}{\eps} 2^{k}
\\ & \leq \frac{28}{\eps} + \frac{4\nu}{\eps} 2^{1+k_{\eps}}
\leq \frac{28}{\eps} + \frac{128\nu}{\eps \hat{\gamma}_{\eps}}
\leq \frac{28}{\eps} + 512 \left(\frac{\nu + \eps}{\eps}\right)^{2}.
\end{align*}
Therefore, \eqref{eqn:benign-basic-initial} is at most
\begin{multline}
\label{eqn:benign-basic-final}
2^{10} \bar{c} \left( \left(\frac{\nu+\eps}{\eps}\right)^{2} + \frac{1}{\eps} \right) \left(\vc \Log\left(\frac{1}{\eps}\right) + \Log\left(\frac{1}{\conf}\right)\right)\Log\left(\frac{\vc}{\eps\conf}\right)\Log\left(\frac{1}{\hat{\gamma}_{\eps}}\right)
\\ \leq 2^{10} 3 \bar{c} \left( \left(\frac{\nu+\eps}{\eps}\right)^{2} + \frac{1}{\eps} \right) \left(\vc \Log\left(\frac{1}{\eps}\right) + \Log\left(\frac{1}{\conf}\right)\right)\Log\left(\frac{\vc}{\eps\conf}\right)\Log\left(\frac{\nu+\eps}{\eps}\right).
\end{multline}

Next, consider taking $\bar{k} = 5$. Lemma~\ref{lem:total-queries} implies that, with any budget $n$ of size at least
\begin{multline}
\label{eqn:benign-star-initial}
2^{10} \bar{c} \left( \s \Log\left(\frac{1}{\eps}\right) + \Log\left(\frac{1}{\conf}\right)\right) \Log\left(\frac{\vc}{\eps\conf}\right)\Log\left(\frac{1}{\eps}\right)
\\ + \bar{c} \sum_{k=5}^{k_{\eps}} \max\left\{ \frac{2\nu}{1-2^{4-k}}, \frac{\eps}{\hat{\gamma}_{\eps}} \right\} \frac{2^{k}}{\eps} \left(\vc \Log\left(\frac{1}{\eps}\right) + \Log\left(\frac{1}{\conf}\right)\right)\Log\left(\frac{\vc}{\eps\conf}\right)\Log\left(\frac{1}{\hat{\gamma}_{\eps}}\right),
\end{multline}
\RQCAL~produces a classifier $\hat{h}_{n}$ with $\er_{\PXY}(\hat{h}_{n}) - \er_{\PXY}(\target_{\PXY}) \leq \eps$ with probability at least $1-\conf$,
and requests a number of labels at most $n$.  This implies $\LC_{\BE(\nu)}(\eps,\conf)$ is at most \eqref{eqn:benign-star-initial}.
As above, we have
\begin{equation*}
\sum_{k=5}^{k_{\eps}} \frac{2\nu}{1-2^{4-k}} \frac{2^{k}}{\eps}
\leq 512 \left(\frac{\nu+\eps}{\eps}\right)^{2}.
\end{equation*}
Combined with \eqref{eqn:benign-basic-eps-gamma}, this implies \eqref{eqn:benign-star-initial} is at most
\begin{align}
& 2^{10} \bar{c} \left( \s \Log\left(\frac{1}{\eps}\right) + \Log\left(\frac{1}{\conf}\right)\right) \Log\left(\frac{\vc}{\eps\conf}\right)\Log\left(\frac{1}{\eps}\right) \notag
\\ & + 2^{10} \bar{c} \left(\frac{\nu+\eps}{\eps}\right)^{2} \left(\vc \Log\left(\frac{1}{\eps}\right) + \Log\left(\frac{1}{\conf}\right)\right)\Log\left(\frac{\vc}{\eps\conf}\right)\Log\left(\frac{1}{\hat{\gamma}_{\eps}}\right) \notag
\\ & \leq 2^{10} \bar{c} \left( \s \Log\left(\frac{1}{\eps}\right) + \Log\left(\frac{1}{\conf}\right)\right) \Log\left(\frac{\vc}{\eps\conf}\right)\Log\left(\frac{1}{\eps}\right) \notag
\\ & + 2^{10} 3 \bar{c} \left(\frac{\nu+\eps}{\eps}\right)^{2} \left(\vc \Log\left(\frac{1}{\eps}\right) + \Log\left(\frac{1}{\conf}\right)\right)\Log\left(\frac{\vc}{\eps\conf}\right)\Log\left(\frac{\nu+\eps}{\eps}\right). \label{eqn:benign-star-final}
\end{align}
In particular, when $\left(\s \Log\left(\frac{1}{\eps}\right)+\Log\left(\frac{1}{\conf}\right)\right)\Log\left(\frac{1}{\eps}\right) < \frac{3}{\eps} \left(\vc \Log\left(\frac{1}{\eps}\right)+\Log\left(\frac{1}{\conf}\right)\right)\Log\left(\frac{\nu+\eps}{\eps}\right)$,
this is smaller than \eqref{eqn:benign-basic-final}.  Thus, the minimum of these two expressions upper bounds $\LC_{\BE(\nu)}(\eps,\conf)$.

To simplify the expression of this bound into the form given in the statement of Theorem~\ref{thm:benign},
we note that $\vc \Log\left(\frac{1}{\eps}\right) + \Log\left(\frac{1}{\conf}\right) \leq \vc \Log\left(\frac{1}{\eps\conf}\right)$, $\s\Log\left(\frac{1}{\eps}\right)+\Log\left(\frac{1}{\conf}\right) \leq \s\Log\left(\frac{1}{\eps\conf}\right)$,
$\Log\left(\frac{\nu+\eps}{\eps}\right) \leq \Log\left(\frac{1}{\eps}\right)$,$\left(\frac{\nu+\eps}{\eps}\right)^{2} \leq 4 \frac{\max\{\nu,\eps\}^{2}}{\eps^{2}} \leq 4 \left( \frac{\nu^{2}}{\eps^{2}} + 1 \right)$,
and $\vc \leq \min\left\{ \s, \frac{\vc}{\eps} \right\}$,
so that the minimum of \eqref{eqn:benign-basic-final} and \eqref{eqn:benign-star-final} is at most
\begin{multline*}
2^{12} 3 \bar{c} \left(  \left(\frac{\nu^{2}}{\eps^{2}}+1\right) \vc + \min\left\{\s,\frac{\vc}{\eps}\right\} \right)\Log\left(\frac{\vc}{\eps\conf}\right)\Log\left(\frac{1}{\eps\conf}\right)\Log\left(\frac{1}{\eps}\right)
\\ \leq 2^{13} 3 \bar{c} \left(  \frac{\nu^{2}}{\eps^{2}} \vc + \min\left\{\s,\frac{\vc}{\eps}\right\} \right)\Log\left(\frac{\vc}{\eps\conf}\right)\Log\left(\frac{1}{\eps\conf}\right)\Log\left(\frac{1}{\eps}\right).
\end{multline*}
This completes the proof of the upper bound.

Next, we turn to establishing the lower bound. 
Fix $\nu \in [0,1/2)$, $\eps \in (0,(1-2\nu)/24)$, and $\conf \in (0,1/24]$.
Based on the works of \citet*{kaariainen:06,hanneke:07a,beygelzimer:09},
the recent article of \citet*{hanneke:survey} contains the following lower bound (in the proof of Theorem 4.3 there),
letting $\gamma = \frac{12\eps}{\nu+12\eps}$.
\begin{align}
\LC_{\BE(\nu)}(\eps,\conf) 
& \geq \max\left\{2 \left\lfloor \frac{1-\gamma^{2}}{2\gamma^{2}} \ln\left(\frac{1}{8\conf(1-2\conf)}\right) \right\rfloor, \frac{\vc-1}{6} \left\lfloor \frac{1-\gamma^{2}}{2\gamma^{2}} \ln\left(\frac{9}{8}\right) \right\rfloor \right\} \notag
\\ & \geq \max\left\{2 \left\lfloor \frac{1-\gamma^{2}}{2\gamma^{2}} \ln\left(\frac{1}{8\conf}\right) \right\rfloor, \frac{\vc-1}{6} \left\lfloor \frac{1-\gamma^{2}}{17 \gamma^{2}} \right\rfloor\right\}
\label{eqn:benign-lower-basic}
\end{align}
If $\frac{1-\gamma^{2}}{2\gamma^{2}} \ln\left(\frac{1}{8\conf}\right) \geq 1$, then 
$2 \left\lfloor \frac{1-\gamma^{2}}{2\gamma^{2}} \ln\left(\frac{1}{8\conf}\right) \right\rfloor \geq \frac{1-\gamma^{2}}{2\gamma^{2}} \ln\left(\frac{1}{8\conf}\right)$,
so that \eqref{eqn:benign-lower-basic} implies $\LC_{\BE(\nu)}(\eps,\conf) \gtrsim \frac{1-\gamma^{2}}{\gamma^{2}} \Log\left(\frac{1}{\conf}\right)$.
Otherwise, if $\frac{1-\gamma^{2}}{2\gamma^{2}} \ln\left(\frac{1}{8\conf}\right) < 1$, then since $\RE \subseteq \BE(\nu)$, 
and $|\C| \geq 2$ implies $\vc \geq 1 > \frac{1-\gamma^{2}}{2\gamma^{2}} \ln\left(\frac{1}{8\conf}\right)$,
Theorem~\ref{thm:realizable} (proven above) implies $\LC_{\BE(\nu)}(\eps,\conf) \geq \LC_{\RE}(\eps,\conf) \gtrsim \vc \gtrsim \frac{1-\gamma^{2}}{\gamma^{2}} \Log\left(\frac{1}{\conf}\right)$ in this case as well.
If $\vc = 1$, these observations further imply $\LC_{\BE(\nu)}(\eps,\conf) \gtrsim \vc \frac{1-\gamma^{2}}{\gamma^{2}}$.
On the other hand, if $\vc \geq 2$, and if $\frac{1-\gamma^{2}}{17 \gamma^{2}} \geq 1$, then 
$\frac{\vc-1}{6} \left\lfloor \frac{1-\gamma^{2}}{17 \gamma^{2}} \right\rfloor \geq \frac{\vc}{408} \frac{1-\gamma^{2}}{\gamma^{2}}$,
so that \eqref{eqn:benign-lower-basic} implies $\LC_{\BE(\nu)}(\eps,\conf) \gtrsim \vc \frac{1-\gamma^{2}}{\gamma^{2}}$.
Otherwise, if $\frac{1-\gamma^{2}}{17 \gamma^{2}} < 1$, then since $\RE \subseteq \BE(\nu)$, Theorem~\ref{thm:realizable} implies 
we still have $\LC_{\BE(\nu)}(\eps,\conf) \geq \LC_{\RE}(\eps,\conf) \gtrsim \vc \gtrsim \vc \frac{1-\gamma^{2}}{\gamma^{2}}$ in this case as well.
Altogether, we have that 
\begin{equation}
\label{eqn:benign-lower-simple}
\LC_{\BE(\nu)}(\eps,\conf) 
\gtrsim \frac{1-\gamma^{2}}{\gamma^{2}} \max\left\{ \vc, \Log\left(\frac{1}{\conf}\right) \right\} 
\gtrsim \frac{1-\gamma^{2}}{\gamma^{2}} \left( \vc + \Log\left(\frac{1}{\conf}\right) \right).
\end{equation}
When $\nu \geq 12\eps$, $\gamma \leq 1/2$, so that \eqref{eqn:benign-lower-simple} implies
\begin{equation*}
\LC_{\BE(\nu)}(\eps,\conf) 
\gtrsim \frac{1}{\gamma^{2}} \left( \vc + \Log\left( \frac{1}{\conf} \right) \right)
= \left( \frac{\nu+12\eps}{12\eps} \right)^{2} \left( \vc + \Log\left(\frac{1}{\conf}\right) \right)
\gtrsim \frac{\nu^{2}}{\eps^{2}} \left( \vc + \Log\left(\frac{1}{\conf}\right) \right).
\end{equation*}
Otherwise, if $\nu < 12\eps$, then 
\begin{equation}
\label{eqn:benign-small-nu-lower}
\frac{1-\gamma^{2}}{\gamma^{2}}
= \frac{(1-\gamma)(1+\gamma)}{\gamma^{2}}
= \left(\frac{\nu+12\eps}{12\eps}\right)^{2} \left(\frac{\nu}{\nu+12\eps}\right)\left(\frac{\nu+24\eps}{\nu+12\eps}\right)
\geq \frac{\nu}{\nu+12\eps}
\geq \frac{\nu}{12\eps}
\geq \frac{\nu^{2}}{144 \eps^{2}}.
\end{equation}
Therefore, if $\nu < 12\eps$, \eqref{eqn:benign-lower-simple} implies that
$\LC_{\BE(\nu)}(\eps,\conf) \gtrsim \frac{\nu^{2}}{\eps^{2}} \left( \vc + \Log\left( \frac{1}{\conf} \right) \right)$
in this case as well.
It remains only to establish the final term in the lower bound.
For this, we simply note that $\RE \subseteq \BE(\nu)$, so that Theorem~\ref{thm:realizable}
implies $\LC_{\BE(\nu)}(\eps,\conf) \geq \LC_{\RE}(\eps,\conf) \gtrsim \min\left\{\s,\frac{1}{\eps}\right\}$.
Combining these results implies
\begin{equation*}
\LC_{\BE(\nu)}(\eps,\conf) \gtrsim \max\!\left\{ \frac{\nu^{2}}{\eps^{2}} \left( \vc + \Log\!\left( \frac{1}{\conf} \right) \right), \min\!\left\{ \s, \frac{1}{\eps} \right\} \right\}
\gtrsim \frac{\nu^{2}}{\eps^{2}} \left( \vc + \Log\!\left( \frac{1}{\conf} \right) \right) + \min\!\left\{ \s, \frac{1}{\eps} \right\}.
\end{equation*}
\end{proof}

Examining the proof of the lower bound for $\LC_{\BE(\nu)}(\eps,\conf)$, we note that this argument also establishes a slightly stronger lower bound in the case $\eps > \nu$.
Specifically, if we use the expression just left of the right-most inequality in \eqref{eqn:benign-small-nu-lower}, rather than the right-most expression, 
we find that we can add a term $\frac{\nu}{\eps} \Log\left(\frac{1}{\conf}\right)$ to the stated lower bound.  This term can be larger than 
the stated term $\frac{\nu^{2}}{\eps^{2}} \Log\left(\frac{1}{\conf}\right)$ when $\eps > \nu$.  Additionally, since $\RE \subseteq \BE(\nu)$, 
we can of course also add a term $\vc$ to the stated lower bound, which again would increase the bound when $\eps > \nu$.

\begin{proof}[of Theorem~\ref{thm:agnostic}]
Again, we begin with the upper bounds.  As with the proof of Theorem~\ref{thm:diameter-localization},
we cannot use the technique leading to Lemma~\ref{lem:total-queries}; we turn instead to a simple
combination of an upper bound from the literature, combined with Theorem~\ref{thm:dc-star}.

Fix any $\nu \in [0,1]$ and $\eps,\conf \in (0,1)$.
Following the work of \citet*{hanneke:07b,dasgupta:07,koltchinskii:10},
the recent work of \citet*{hanneke:survey} studies a modified variant of the $A^2$ algorithm of \citet*{balcan:06,balcan:09},
showing that there exists a finite universal constant $\ddot{c} \geq 1$ such that,
for any $\PXY \in \AG(\nu)$, for any budget $n$ of size at least
\begin{equation}
\label{eqn:ag-dc-bound}
\ddot{c} \dc_{\PXY}\left( \nu + \eps \right) \left( \frac{\nu^{2}}{\eps^{2}} + \Log\left(\frac{1}{\eps}\right) \right) \left( \vc \Log\left( \dc_{\PXY}\left( \nu + \eps \right) \right) + \Log\left(\frac{\Log(1/\eps)}{\conf}\right)\right),
\end{equation}
the algorithm produces a classifier $\hat{h}_{n}$ with $\er_{\PXY}(\hat{h}_{n}) - \inf_{h \in \C} \er_{\PXY}(h) \leq \eps$
with probability at least $1-\conf$, and requests a number of labels at most $n$
(see also \citealp*{dasgupta:07,beygelzimer:09}, for similar results for related methods).
By Theorem~\ref{thm:dc-star}, 
\begin{equation*}
\dc_{\PXY}(\nu+\eps) = \dc_{\PXY}( (\nu+\eps) \land 1) \leq \min\!\left\{ \s, \frac{1}{(\nu+\eps) \!\land\! 1} \right\} \leq \min\!\left\{ \s, \frac{2}{\nu+\eps} \right\} \leq 2 \min\!\left\{ \s, \frac{1}{\nu+\eps} \right\}\!,
\end{equation*}
while $\Log\left( \dc_{\PXY}( \nu+\eps ) \right) \leq \Log\left( \min\left\{ \s, \frac{1}{\nu+\eps} \right\} \lor 1 \right) = \Log\left( \min\left\{ \s, \frac{1}{\nu+\eps} \right\} \right)$.
Therefore, \eqref{eqn:ag-dc-bound} is at most
\begin{equation*}
2 \ddot{c} \min\left\{ \s, \frac{1}{\nu+\eps} \right\} \left( \frac{\nu^{2}}{\eps^{2}} + \Log\left(\frac{1}{\eps}\right) \right) \left( \vc \Log\left( \min\left\{ \s, \frac{1}{\nu+\eps} \right\} \right) + \Log\left(\frac{\Log(1/\eps)}{\conf}\right)\right),
\end{equation*}
which is therefore an upper bound on $\LC_{\AG(\nu)}(\eps,\conf)$.
To match the form of the upper bound stated in Theorem~\ref{thm:agnostic}, we can relax this by noting that
$\vc \Log\left( \min\left\{ \s, \frac{1}{\nu+\eps} \right\} \right) + \Log\left(\frac{\Log(1/\eps)}{\conf}\right)
\leq 2 \vc \Log\left( \frac{1}{\eps} \right)$ $+ \Log\left(\frac{1}{\conf}\right)
\leq 2 \vc \Log\left(\frac{1}{\eps\conf}\right)$,
while $\frac{\nu^{2}}{\eps^{2}} + \Log\left(\frac{1}{\eps}\right) \leq \left(\frac{\nu^{2}}{\eps^{2}} + 1\right) \Log\left(\frac{1}{\eps}\right)$.

To prove the lower bound in Theorem~\ref{thm:agnostic}, we note that $\BE(\nu) \subseteq \AG(\nu)$ for $\nu \in [0,1/2)$,
so that $\LC_{\BE(\nu)}(\eps,\conf) \leq \LC_{\AG(\nu)}(\eps,\conf)$.  Thus, the lower bound on $\LC_{\BE(\nu)}(\eps,\conf)$ 
in Theorem~\ref{thm:benign} (proven above) also applies to $\LC_{\AG(\nu)}(\eps,\conf)$.
\end{proof}

\section{Proofs for Results in Section~\ref{sec:complexities}}
\label{app:complexities}

This section provides proofs of the equivalences between complexity measures stated in Section~\ref{sec:complexities}.

\subsection{The Disagreement Coefficient}
\label{app:dc}

Here we present the proof of Theorem~\ref{thm:dc-star}.
First, we have a helpful lemma, which allows us to restrict focus to \emph{finitely discrete} probability measures.
Let $\Pi$ denote the set of probability measures $\Px$ on $\X$ such that 
$\exists m \in \nats$ and a sequence $\{z_{i}\}_{i=1}^{m}$ in $\X$
for which $\Px(\{z_{i} : i \in \{1,\ldots,m\}\}) = 1$.

\begin{lemma}
\label{lem:dc-star-finitely-discrete}
If $\s < \infty$, then $\forall \eps \in (0,1]$, $\supdc(\eps) = \sup_{\Px \in \Pi} \sup_{h \in \C} \dc_{h,\Px}(\eps)$.
\end{lemma}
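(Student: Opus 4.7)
The $\leq$ direction is immediate since $\Pi$ is a subset of all probability measures on $\X$, so the task is to establish the reverse. The plan is to show that for any probability measure $\Px$ on $\X$ and any $h \in \C$, the value $\dc_{h,\Px}(\eps)$ can be approximated arbitrarily well by $\dc_{h,\hat\Px}(\eps)$, where $\hat\Px$ is the empirical measure associated with a sufficiently large i.i.d.\ sample from $\Px$. Since empirical measures are supported on finitely many points, $\hat\Px \in \Pi$, and this suffices. The assumption $\s < \infty$ is convenient because (combined with the trivial bound $\dc_{h,\Px}(\eps) \leq 1/\eps$) it ensures the quantities we are comparing are finite.

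Fix $\Px$ and $h \in \C$, and let $C$ be any value with $C < \dc_{h,\Px}(\eps)$. By definition of $\dc_{h,\Px}(\eps)$, there exists $r > \eps$ such that $\Px(\DIS(\Ball_\Px(h,r))) \geq Cr$. Draw i.i.d.\ samples $X_1,\ldots,X_m$ from $\Px$ and let $\hat\Px$ denote the associated empirical measure. For a parameter $\delta > 0$ to be chosen, the key approximations I need are:
\begin{enumerate}
\item[(a)] $\sup_{f,g \in \C} |\hat\Px(\DIS(\{f,g\})) - \Px(\DIS(\{f,g\}))| \leq \delta$;
\item[(b)] $|\hat\Px(B) - \Px(B)| \leq \delta$, where $B = \DIS(\Ball_\Px(h,r))$ is a fixed measurable set.
\end{enumerate}
For (a), apply Lemma~\ref{lem:vc-relative} to the collection $\{\DIS(\{f,g\}) : f,g \in \C\}$, whose VC dimension is at most $10\vc$ by Lemma~\ref{lem:vc-of-pairs}; this yields uniform additive convergence at rate $O(\sqrt{\vc/m})$, so for $m$ sufficiently large (depending on $\vc$, $\delta$, and the confidence parameter), (a) holds with probability, say, at least $3/4$. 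For (b), the standard Hoeffding inequality applied to the fixed set $B$ yields the bound with probability at least $3/4$ for large $m$. Union bounding gives both events simultaneously with positive probability, so there exists a realization of $X_1,\ldots,X_m$ on which (a) and (b) hold; fix one such realization.

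On this realization, any $g \in \Ball_\Px(h,r)$ satisfies $\hat\Px(h \neq g) \leq \Px(h \neq g) + \delta \leq r + \delta$ by (a), so $g \in \Ball_{\hat\Px}(h, r+\delta)$. Monotonicity of the map $V \mapsto \DIS(V)$ then gives
\begin{equation*}
\DIS(\Ball_\Px(h,r)) \;\subseteq\; \DIS(\Ball_{\hat\Px}(h, r+\delta)),
\end{equation*}
and combining with (b) yields
\begin{equation*}
\hat\Px\bigl(\DIS(\Ball_{\hat\Px}(h, r+\delta))\bigr) \;\geq\; \hat\Px(B) \;\geq\; \Px(B) - \delta \;\geq\; Cr - \delta.
\end{equation*}
Since $r + \delta > \eps$, the definition of the disagreement coefficient gives
\begin{equation*}
\dc_{h,\hat\Px}(\eps) \;\geq\; \frac{Cr - \delta}{r+\delta}.
\end{equation*}

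Taking $\delta$ small enough (and $m$ correspondingly large) makes $(Cr-\delta)/(r+\delta)$ exceed $C - \eta$ for any prescribed $\eta > 0$. Because $\hat\Px \in \Pi$ and $h \in \C$, this shows $\sup_{\Px' \in \Pi}\sup_{h' \in \C} \dc_{h',\Px'}(\eps) \geq C - \eta$; letting $\eta \to 0$ and $C \nearrow \dc_{h,\Px}(\eps)$, then taking the supremum over $h$ and $\Px$, yields the reverse inequality $\sup_{\Px' \in \Pi}\sup_{h' \in \C} \dc_{h',\Px'}(\eps) \geq \supdc(\eps)$. The main obstacle is the uniform convergence in step~(a), which must hold simultaneously over all pairs $f, g$ in the possibly uncountable class $\C$; this is exactly where Lemma~\ref{lem:vc-of-pairs} (bounding the VC dimension of the pair-disagreement class by $10\vc$) is essential, since without it there is no reason the empirical balls $\Ball_{\hat\Px}(h,\cdot)$ would approximate the true balls $\Ball_\Px(h,\cdot)$.
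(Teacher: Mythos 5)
Your proof is correct and follows essentially the same mechanism as the paper's: draw a finite i.i.d.\ sample, use uniform convergence over the pair-disagreement class (whose VC dimension is controlled by Lemma~\ref{lem:vc-of-pairs}) together with a pointwise Hoeffding bound for the fixed set $\DIS(\Ball_{\Px}(\cdot,\cdot))$, and conclude that the empirical ball contains the population ball so the disagreement coefficient transfers with only a small, vanishing loss. The one substantive difference is at the starting line: you fix an arbitrary $h \in \C$ and bound $\dc_{h,\Px}(\eps)$, which means you are implicitly invoking the identity $\supdc(\eps) = \sup_{\Px}\sup_{h\in\C}\dc_{h,\Px}(\eps)$ from Section~\ref{sec:dc} (attributed there to a theorem of the survey). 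The paper's proof does not take that for granted: it works directly with the classifier $h^{*}_{\PXY}$ from Definition~\ref{def:dc}, which need not lie in $\C$, and bridges the gap by choosing $h \in \C$ with $\Px(x : h(x) \neq h^{*}_{\PXY}(x)) \leq \gamma_{3} r_{\eps}$ (possible since $\inf_{g \in \C}\Px(x : g(x)\neq h^{*}_{\PXY}(x))=0$), then establishing $\Ball_{\Px}(h^{*}_{\PXY},r_{\eps}) \subseteq \Ball_{\hat{\Px}}(h,(1+\gamma_{3}+\gamma_{2})r_{\eps})$ and sending $\gamma_{3}\to 0$ at the end. Your route is shorter and perfectly valid given the cited identity; to make the argument self-contained you would need to add exactly this approximating step from $h^{*}_{\PXY}$ to an element of $\C$, which is the role of $\gamma_{3}$ in the paper. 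Also, your claim that the ``$\leq$ direction is immediate since $\Pi$ is a subset'' again presumes the stated identity; without it, that direction requires observing that each $\Px\in\Pi$ and $h\in\C$ corresponds to a realizable $\PXY$ with $\dc_{\PXY}(\eps)=\dc_{h,\Px}(\eps)$, which is how the paper handles it. Worth stating explicitly, but not a gap.
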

\begin{proof}
Suppose $\s < \infty$, and fix any $\eps \in (0,1]$.
Since $\PXY$ ranges over all probability measures over $\X \times \Y$ in the definition of $\supdc(\eps)$, including all those in $\RE$ with marginal $\Px$ over $\X$ contained in $\Pi$
(in which case, $\dc_{\PXY}(\eps) = \dc_{\target_{\PXY},\Px}(\eps)$),
we always have $\sup_{P \in \Pi} \sup_{h \in \C} \dc_{h,P}(\eps) \leq \supdc(\eps)$.  
Thus, it suffices to show that we also have $\sup_{\Px \in \Pi} \sup_{h \in \C} \dc_{h,\Px}(\eps) \geq \supdc(\eps)$.

The result trivially holds if $\supdc(\eps) = 1$, since \emph{every} $\Px$ and $h$ have $\dc_{h,\Px}(\eps) \geq 1$.
To address the nontrivial case, suppose $\supdc(\eps) > 1$.
Fix any $\gamma_{1}, \gamma_{2},\gamma_{3} \in (0,1)$.
Fix any $\PXY$ with $\dc_{\PXY}(\eps) > 1$, and as usual denote $\Px(\cdot) = \PXY(\cdot \times \Y)$.
Also let $h^{*}_{\PXY}$ be as in Definition~\ref{def:dc}, so that $\dc_{\PXY}(\eps) = \dc_{h^{*}_{\PXY},\Px}(\eps)$.
Let $r_{\eps} \in (\eps,1]$ be such that 
$\frac{\Px(\DIS(\Ball_{\Px}(h_{\PXY}^{*},r_{\eps})))}{r_{\eps}} \geq (1-\gamma_{1}) \dc_{\PXY}(\eps)$
(which exists, by the definition of the supremum, combined with the fact that $1 < \dc_{\PXY}(\eps) \leq 1/\eps < \infty$).
Also let $h \in \C$ have $\Px(x : h(x) \neq h^{*}_{\PXY}(x)) \leq \gamma_{3} r_{\eps}$, which exists by the definition of $h^{*}_{\PXY}$.

Let $m = \left\lceil \frac{8}{\gamma_{2}^{2} r_{\eps}^{2}} \left( 10 \vc \Log\left(\frac{8 e}{\gamma_{2}^{2} r_{\eps}^{2}}\right) + \Log(24) \right) \right\rceil$,
which is a finite natural number, since $\vc \leq \s < \infty$.
It follows from Lemma~\ref{lem:vc-of-pairs} and Lemma~\ref{lem:vc-relative} that, for $X_{1}^{\prime},\ldots,X_{m}^{\prime}$ independent $\Px$-distributed 
random variables, with probability at least $2/3$, every $g \in \C$ has $\frac{1}{m} \sum_{i=1}^{m} \ind_{\DIS(\{h,g\})}(X_{i}^{\prime}) \leq \Px(x : h(x) \neq g(x)) + \gamma_{2} r_{\eps} \leq \Px(x : h^{*}_{\PXY}(x) \neq g(x)) + (\gamma_{3} +\gamma_{2}) r_{\eps}$.
Furthermore, by Hoeffding's inequality, we also have that with probability at least $2/3$, 
$\frac{1}{m} \sum_{i=1}^{m} \ind_{\DIS(\Ball_{\Px}(h^{*}_{\PXY}, r_{\eps}))}(X_{i}^{\prime}) \geq \Px(\DIS(\Ball_{\Px}(h^{*}_{\PXY},r_{\eps}))) - \gamma_{2} r_{\eps}$.
By a union bound, both of these events happen with probability at least $1/3$.  In particular, this implies $\exists z_{1},\ldots,z_{m} \in \X$
such that, letting $\hat{\Px}$ be the probability measure with $\hat{\Px}(A) = \frac{1}{m} \sum_{i=1}^{m} \ind_{A}(z_{m})$ for all measurable $A \subseteq \X$,
we have, $\forall g \in \C$, $\hat{\Px}(\DIS(\{h,g\})) \leq \Px(\DIS(\{h^{*}_{\PXY},g\})) + (\gamma_{3}+\gamma_{2}) r_{\eps}$, 
and furthermore $\hat{\Px}(\DIS(\Ball_{\Px}(h^{*}_{\PXY},r_{\eps}))) \geq \Px(\DIS(\Ball_{\Px}(h^{*}_{\PXY},r_{\eps}))) - \gamma_{2} r_{\eps}$.
This further implies that $\Ball_{\Px}(h^{*}_{\PXY}, r_{\eps}) \subseteq \Ball_{\hat{\Px}}(h, (1+\gamma_{3}+\gamma_{2}) r_{\eps})$,
and thus 
\begin{align*}
\hat{\Px}(\DIS(\Ball_{\hat{\Px}}(h,(1+\gamma_{3}+\gamma_{2}) r_{\eps}))) 
& \geq \hat{\Px}(\DIS(\Ball_{\Px}(h^{*}_{\PXY},r_{\eps}))) 
\geq \Px(\DIS(\Ball_{\Px}(h^{*}_{\PXY},r_{\eps}))) - \gamma_{2} r_{\eps} 
\\ & \geq (1-\gamma_{1}) \dc_{\PXY}(\eps) r_{\eps} - \gamma_{2} r_{\eps}
\geq (1 - \gamma_{1} - \gamma_{2}) \dc_{\PXY}(\eps) r_{\eps}.
\end{align*}
Therefore,
\begin{equation*}
\dc_{h,\hat{\Px}}(\eps) 
\geq \frac{\hat{\Px}(\DIS(\Ball_{\hat{\Px}}(h,(1+\gamma_{3}+\gamma_{2}) r_{\eps})))}{ (1+\gamma_{3}+\gamma_{2}) r_{\eps} }
\geq \frac{1 - \gamma_{1} - \gamma_{2}}{1 + \gamma_{3} + \gamma_{2}} \dc_{\PXY}(\eps).
\end{equation*}
Noting that $\hat{\Px}(\{z_{1},\ldots,z_{m}\}) = 1$, so that $\hat{\Px} \in \Pi$, 
since $\PXY$ was arbitrary, we have established that $\forall \PXY$, $\exists P \in \Pi$ and $h \in \C$ such that
$\dc_{h,P}(\eps) \geq \frac{1-\gamma_{1}-\gamma_{2}}{1+\gamma_{3}+\gamma_{2}} \dc_{\PXY}(\eps)$.
Since this holds for any choices of $\gamma_{1},\gamma_{2},\gamma_{3} \in (0,1)$, taking the limits as $\gamma_{1} \to 0$, $\gamma_{3} \to 0$, and $\gamma_{2} \to 0$, 
we have $\sup_{P \in \Pi} \sup_{h \in \C} \dc_{h,P}(\eps) \geq \supdc(\eps)$.
\end{proof}

In fact, it is easy to show (based on the first part of the proof below) that the 
``$\s < \infty$'' constraint is unnecessary in Lemma~\ref{lem:dc-star-finitely-discrete},
though this is not important for our purposes.
We are now ready for the proof of Theorem~\ref{thm:dc-star}.

\begin{proof}[of Theorem~\ref{thm:dc-star}]
First, we prove $\supdc(\eps) \geq \s \land \frac{1}{\eps}$.
Toward this end, let $\{x_{i}\}_{i=1}^{\s}$ and $\{h_{i}\}_{i=0}^{\s}$ be as in Definition~\ref{def:star},
and let $m = \s \land \left\lceil \frac{1}{\eps} \right\rceil$.
Let $\Px$ be a probability measure on $\X$ with $\Px(\{x_{i}\}) = 1/m$ for each $i \in \{1,\ldots,m\}$.
In particular, this implies that every $i \in \{1,\ldots,m\}$ has $\Px(x : h_{i}(x) \neq h_{0}(x)) = 1/m$,
so that $h_{i} \in \Ball_{\Px}(h_{0},1/m)$.  Since clearly $h_{0} \in \Ball_{\Px}(h_{0},1/m)$ as well,
and every $i \in \{1,\ldots,m\}$ has $x_{i} \in \DIS(\{h_{i},h_{0}\})$,
every $r > 1/m$ has $\Px(\DIS(\Ball_{\Px}(h_{0},r))) = \Px(\{ x_{i} : i \in \{1,\ldots,m\}\}) = 1$.
Therefore, letting $\PXY$ be the distribution in $\RE$ with $\target_{\PXY} = h_{0}$ and marginal $\Px$ over $\X$,
\begin{align*}
\supdc(\eps) 
& \geq \dc_{\PXY}(\eps) = \dc_{h_{0},\Px}(\eps) 
\geq \frac{\Px(\DIS(\Ball_{\Px}(h_{0},\max\{1/m,\eps\})))}{\max\{1/m,\eps\}}
\\ & = \frac{1}{\max\{1/m,\eps\}}
= m \land \frac{1}{\eps}
= \s \land \frac{1}{\eps}.
\end{align*}

Next, we prove that $\supdc(\eps) \leq \s \land \frac{1}{\eps}$.
That $\supdc(\eps) \leq \frac{1}{\eps}$ follows directly from the definition, and the fact that probabilities are at most $1$:
that is, any $\Px$ and $h$ have $\sup_{r > \eps} \frac{\Px(\DIS(\Ball_{\Px}(h,r)))}{r} \leq \sup_{r > \eps} \frac{1}{r} = \frac{1}{\eps}$.
Therefore, it remains only to show that $\supdc(\eps) \leq \s$ when $\s < \frac{1}{\eps}$.
Furthermore, Lemma~\ref{lem:dc-star-finitely-discrete} implies that it suffices to show that $\sup_{\Px \in \Pi} \sup_{h \in \C} \dc_{h,\Px}(\eps) \leq \s$ in this case.
Toward this end, suppose $\s < \frac{1}{\eps}$.  We first stratify the set $\Pi$ based on the size of the support, defining, 
for each $m \in \nats$, $\Pi_{m} = \{ \Px \in \Pi : \exists z_{1},\ldots,z_{m} \in \X \text{ s.t. } \Px(\{z_{1},\ldots,z_{m}\})=1 \}$.
Thus, $\Pi_{m}$ is the set of probability measures on $\X$ for which the support of the probability mass function has cardinality at most $m$.

We now proceed by induction on $m$.  As a base case, 
fix any $m \leq \s$, any classifier $h$, and any $\Px \in \Pi_{m}$,
and let $z_{1},\ldots,z_{m} \in \X$ be such that $\Px(\{z_{1},\ldots,z_{m}\}) = 1$.
For any $r \in [1/\s,1]$, 
$\Px(\DIS(\Ball_{\Px}(h,r))) / r \leq 1 / r \leq \s$.
Furthermore (following an argument of \citealp*{hanneke:survey}), 
for any $r \in (\eps,1/\s)$, for any $g \in \C$ with $\Px(x : g(x) \neq h(x)) \leq r$,
every $z \in \X$ with $\Px(\{z\}) > r$ has $\Px(x : g(x) \neq h(x)) < \Px(\{z\})$, so that $g(z) = h(z)$;
thus, $z \notin \DIS(\Ball_{\Px}(h,r))$.  We therefore have that 
$\Px(\DIS(\Ball_{\Px}(h,r))) 
\leq \Px(x : \Px(\{x\}) \leq r) 
= \sum_{i = 1}^{m} \ind\left[ \Px(\{z_{i}\}) \leq r \right] \Px(\{z_{i}\})
\leq r | \{ i \in \{1,\ldots,m\} : \Px(\{z_{i}\}) \leq r \} |$.
Therefore, 
$\frac{\Px(\DIS(\Ball_{\Px}(h,r)))}{r} \leq |\{i \in \{1,\ldots,m\} : \Px(\{z_{i}\}) \leq r \}| \leq m \leq \s$,
so that (since $\s \geq 1$, due to the assumption that $|\C| \geq 2$), we have $\dc_{h,\Px}(\eps) \leq \s$.

Now take as an inductive hypothesis that, for some $m \in \nats$ with $m > \s$, we have 
\begin{equation*}
\sup_{\Px \in \Pi_{m-1}} \sup_{h \in \C} \dc_{h,\Px}(\eps) \leq \s.
\end{equation*}
Fix any $h \in \C$, $r > \eps$, and $\Px \in \Pi_{m}$, and let $z_{1},\ldots,z_{m} \in \X$ be such that $\Px(\{z_{1},\ldots,z_{m}\}) = 1$.
If $\exists i,j \in \{1,\ldots,m\}$ with $i \neq j$ and $z_{i} = z_{j}$, or if some $j \in \{1,\ldots,m\}$ has $\Px(\{z_{j}\}) = 0$, 
then since either of these has $\Px(\{z_{k} : k \in \{1,\ldots,m\} \setminus \{j\}\}) = 1$, 
we would also have $\Px \in \Pi_{m-1}$, so that $\dc_{h,\Px}(\eps) \leq \s$ by the inductive hypothesis.  To handle the remaining nontrivial cases,
suppose the $z_{1},\ldots,z_{m}$ are all distinct, and $\min_{i \in \{1,\ldots,m\}} \Px(\{z_{i}\}) > 0$.
Furthermore, note that, since $m > \s$, $\{z_{1},\ldots,z_{m}\}$ cannot be a star set for $\C$.

We now consider three cases.  
First, consider the case that $\exists k \in \{1,\ldots,m\}$ with $z_{k} \notin \DIS(\Ball_{\Px}(h,r))$.
In this case, define a probability measure $\Px^{\prime}$ over $\X$ such that, 
for any measurable $A \subseteq \X \setminus \{z_{k}\}$, $\Px^{\prime}(A) = \Px^{\prime}(A \cup \{z_{k}\}) = \Px(A) / (1-\Px(\{z_{k}\}))$.
Note that this is a well-defined probability measure, since $m \geq 2$ and $\min_{i \in \{1,\ldots,m\}} \Px(\{z_{i}\}) > 0$, 
so that $\Px(\X \setminus \{z_{k}\}) = 1 - \Px(\{z_{k}\}) > 0$.
Also note that (since $h \in \Ball_{\Px}(h,r)$) any $g \in \Ball_{\Px}(h,r)$ has $g(z_{k}) = h(z_{k})$, 
so that $\Px^{\prime}( x : g(x) \neq h(x) ) = \Px( x : g(x) \neq h(x) ) / (1-\Px(\{z_{k}\})) \leq r / (1 - \Px(\{z_{k}\}))$.
Therefore, $\Ball_{\Px^{\prime}}(h,r / (1-\Px(\{z_{k}\}))) \supseteq \Ball_{\Px}(h,r)$, and since $z_{k} \notin \DIS(\Ball_{\Px}(h,r))$,
$\Px^{\prime}(\DIS(\Ball_{\Px^{\prime}}(h, r / (1-\Px(\{z_{k}\}))))) \geq \Px^{\prime}(\DIS(\Ball_{\Px}(h,r))) = \Px(\DIS(\Ball_{\Px}(h,r)))/(1-\Px(\{z_{k}\}))$.
Thus, 
\begin{equation}
\label{eqn:dc-star-first-case-bound}
\Px(\DIS(\Ball_{\Px}(h,r))) \leq (1-\Px(\{z_{k}\}))\Px^{\prime}(\DIS(\Ball_{\Px^{\prime}}(h, r / (1-\Px(\{z_{k}\}))))).
\end{equation}
Noting that $\Px^{\prime}( \{z_{i} : i \in \{1,\ldots,m\} \setminus \{k\}\} ) = \Px( \{z_{1},\ldots,z_{m}\} \setminus \{z_{k}\}) / (1 - \Px(\{z_{k}\})) = 1$,
we have that $\Px^{\prime} \in \Pi_{m-1}$.  Therefore, by the inductive hypothesis and the fact that $r / (1-\Px(\{z_{k}\})) > r > \eps$, 
\begin{align*}
\Px^{\prime}\left(\DIS\left(\Ball_{\Px^{\prime}}\left(h, \frac{r}{1-\Px(\{z_{k}\})} \right)\right)\right) 
& \leq \dc_{h,\Px^{\prime}}(\eps) \frac{r}{1-\Px(\{z_{k}\})}
\\ & \leq \sup_{P \in \Pi_{m-1}} \sup_{h^{\prime} \in \C} \dc_{h^{\prime},P}(\eps) \frac{r}{1-\Px(\{z_{k}\})}
\leq \frac{\s r}{1-\Px(\{z_{k}\})}.
\end{align*}
Combined with \eqref{eqn:dc-star-first-case-bound},
this further implies that $\Px(\DIS(\Ball_{\Px}(h,r))) \leq (1-\Px(\{z_{k}\})) \s r / (1-\Px(\{z_{k}\}))$ $= \s r$.

Next, consider a second case, where $\{z_{1},\ldots,z_{m}\} \subseteq \DIS(\Ball_{\Px}(h,r))$,
and $\exists j,k \in \{1,\ldots,m\}$ with $j \neq k$ such that,
$\forall g \in \Ball_{\Px}(h,r)$, $g(z_{k}) \neq h(z_{k}) \Rightarrow g(z_{j}) \neq h(z_{j})$.
In this case, define a probability measure $\Px^{\prime}$ over $\X$ such that, for any measurable $A \subseteq \X \setminus \{z_{j},z_{k}\}$, 
$\Px^{\prime}(A) = \Px(A)$, $\Px^{\prime}(A \cup \{z_{j}\}) = \Px(A)$, and $\Px^{\prime}(A \cup \{z_{k}\}) = \Px^{\prime}(A \cup \{z_{j},z_{k}\}) = \Px(A \cup \{z_{j},z_{k}\})$:
in other words, $\Px^{\prime}$ has a probability mass function $x \mapsto \Px^{\prime}(\{x\})$ equal to $x \mapsto \Px(\{x\})$ everywhere, 
except that $\Px^{\prime}(\{z_{j}\}) = 0$ and $\Px^{\prime}(\{z_{k}\}) = \Px(\{z_{j}\}) + \Px(\{z_{k}\})$.
Note that, for any $g \in \Ball_{\Px}(h,r)$ with $g(z_{k}) = h(z_{k})$, $\Px^{\prime}(x : g(x) \neq h(x)) = \Px(x : g(x) \neq h(x)) - \ind[ g(z_{j}) \neq h(z_{j}) ] \Px(\{z_{j}\}) \leq \Px(x : g(x) \neq h(x)) \leq r$.
Furthermore, any $g \in \Ball_{\Px}(h,r)$ with $g(z_{k}) \neq h(z_{k})$ also has $g(z_{j}) \neq h(z_{j})$, so that $\Px^{\prime}( x : g(x) \neq h(x) ) = \Px( x : g(x) \neq h(x) ) \leq r$.
Therefore, $\Ball_{\Px^{\prime}}(h,r) \supseteq \Ball_{\Px}(h,r)$.  Since $z_{j},z_{k} \in \DIS(\Ball_{\Px}(h,r))$, this further implies that $z_{j},z_{k} \in \DIS(\Ball_{\Px^{\prime}}(h,r))$.
Therefore, by definition of $\Px^{\prime}$ and monotonicity of measures, 
$\Px^{\prime}(\DIS(\Ball_{\Px^{\prime}}(h,r))) = \Px( \DIS(\Ball_{\Px^{\prime}}(h,r)) ) \geq \Px(\DIS(\Ball_{\Px}(h,r)))$.
Noting that $\Px^{\prime}(\{z_{i} : i \in \{1,\ldots,m\} \setminus \{j\}) = \Px(\{z_{1},\ldots,z_{m}\}) = 1$, we have $\Px^{\prime} \in \Pi_{m-1}$,
and therefore (by the inductive hypothesis), $\Px^{\prime}(\DIS(\Ball_{\Px^{\prime}}(h,r))) \leq \dc_{h,\Px^{\prime}}(\eps) r \leq \sup_{P \in \Pi_{m-1}} \sup_{h^{\prime} \in \C} \dc_{h^{\prime},P}(\eps) r \leq \s r$.
Thus, since we established above that $\Px(\DIS(\Ball_{\Px}(h,r))) \leq \Px^{\prime}(\DIS(\Ball_{\Px^{\prime}}(h,r)))$,
we have that $\Px(\DIS(\Ball_{\Px}(h,r))) \leq \s r$. 

Finally, consider a third case (the complement of the first and second), in which 
$\{z_{1},\ldots,z_{m}\} \subseteq \DIS(\Ball_{\Px}(h,r))$, 
but 
$\nexists j,k \in \{1,\ldots,m\}$ with $j \neq k$ such that,
$\forall g \in \Ball_{\Px}(h,r)$, $g(z_{k}) \neq h(z_{k}) \Rightarrow g(z_{j}) \neq h(z_{j})$.
In particular, note that the first condition (which is, in fact, redundant, but included for clarity) implies $\Px(\DIS(\Ball_{\Px}(h,r))) = 1$.
In this case, since (as above) $\{z_{1},\ldots,z_{m}\}$ is not a star set for $\C$,
$\exists i \in \{1,\ldots,m\}$ such that $\forall g \in \C$ with $g(z_{i}) \neq h(z_{i})$, $\exists j \in \{1,\ldots,m\} \setminus \{i\}$ with $g(z_{j}) \neq h(z_{j})$ as well;
fix any such $i \in \{1,\ldots,m\}$.
Since $\{z_{1},\ldots,z_{m}\} \subseteq \DIS(\Ball_{\Px}(h,r))$, we have $z_{i} \in \DIS(\Ball_{\Px}(h,r))$.
Thus, we may let $g_{i} \in \Ball_{\Px}(h,r)$ be such that $g_{i}(z_{i}) \neq h(z_{i})$,
and let $j \in \{1,\ldots,m\} \setminus \{i\}$ be such that $g_{i}(z_{j}) \neq h(z_{j})$ (which exists, by our choice of $i$).
Let $\Px^{\prime}$ be a probability measure over $\X$ such that,
for all measurable $A \subseteq \X \setminus \{z_{i},z_{j}\}$,
$\Px^{\prime}(A) = \Px(A)$, $\Px^{\prime}(A \cup \{z_{i}\}) = \Px(A)$, and $\Px^{\prime}(A \cup \{z_{j}\}) = \Px^{\prime}(A \cup \{z_{i},z_{j}\}) = \Px(A \cup \{z_{i},z_{j}\})$:
in other words, $\Px^{\prime}$ has a probability mass function $x \mapsto \Px^{\prime}(\{x\})$ equal to $x \mapsto \Px(\{x\})$ everywhere, 
except that $\Px^{\prime}(\{z_{i}\}) = 0$ and $\Px^{\prime}(\{z_{j}\}) = \Px(\{z_{i}\}) + \Px(\{z_{j}\})$.
Note that, for any measurable set $A \subseteq \X$ with $\{z_{i},z_{j}\} \subseteq A$, $\Px^{\prime}(A) = \Px(A)$.
In particular, since $\{z_{i},z_{j}\} \subseteq \DIS(\{g_{i},h\})$, $\Px^{\prime}(\DIS(\{g_{i},h\})) = \Px(\DIS(\{g_{i},h\})) \leq r$,
so that $g_{i} \in \Ball_{\Px^{\prime}}(h,r)$, and therefore (since $h \in \Ball_{\Px^{\prime}}(h,r)$ as well) $\{z_{i},z_{j}\} \subseteq \DIS(\Ball_{\Px^{\prime}}(h,r))$.
Furthermore, for any $k \in \{1,\ldots,m\} \setminus \{i,j\}$, by the property characterizing this third case, 
and since $z_{k} \in \DIS(\Ball_{\Px}(h,r))$,
$\exists g \in \Ball_{\Px}(h,r)$ with $g(z_{k}) \neq h(z_{k})$ and $g(z_{j}) = h(z_{j})$,
so that $\Px^{\prime}(\DIS(\{g,h\})) = \Px(\DIS(\{g,h\}) \setminus \{z_{i}\}) \leq \Px(\DIS(\{g,h\})) \leq r$
(i.e., $g \in \Ball_{\Px^{\prime}}(h,r)$),
and therefore (since $h \in \Ball_{\Px^{\prime}}(h,r)$ as well) $z_{k} \in \DIS(\Ball_{\Px^{\prime}}(h,r))$ as well.
Altogether, we have that $\{z_{1},\ldots,z_{m}\} \subseteq \DIS(\Ball_{\Px^{\prime}}(h,r))$.
Therefore, since $\{z_{i},z_{j}\} \subseteq \DIS(\Ball_{\Px^{\prime}}(h,r))$, the definition of $\Px^{\prime}$ implies
$\Px^{\prime}(\DIS(\Ball_{\Px^{\prime}}(h,r))) = \Px(\DIS(\Ball_{\Px^{\prime}}(h,r))) \geq \Px(\{z_{1},\ldots,z_{m}\}) = 1 = \Px(\DIS(\Ball_{\Px}(h,r)))$.
Noting that $\Px^{\prime}(\{z_{k} : k \in \{1,\ldots,m\} \setminus \{i\}\}) = \Px(\{z_{1},\ldots,z_{m}\}) = 1$, 
we have that $\Px^{\prime} \in \Pi_{m-1}$, and therefore (by the inductive hypothesis), 
$\Px^{\prime}(\DIS(\Ball_{\Px^{\prime}}(h,r))) \leq \dc_{h,\Px^{\prime}}(\eps) r \leq \sup_{P \in \Pi_{m-1}} \sup_{h^{\prime} \in \C} \dc_{h^{\prime},P}(\eps) r \leq \s r$.
Since $\Px^{\prime}(\DIS(\Ball_{\Px^{\prime}}(h,r))) = 1 = \Px(\DIS(\Ball_{\Px}(h,r)))$, 
we have that $\Px(\DIS(\Ball_{\Px}(h,r))) \leq \s r$ as well.

Thus, in all three cases, we have that $\Px(\DIS(\Ball_{\Px}(h,r))) \leq \s r$.
Since this holds for every $r > \eps$, and $|\C| \geq 2$ implies $\s \geq 1$, 
we have that $\dc_{h,\Px}(\eps) \leq \s$.
Since this holds for every $h \in \C$ and $\Px \in \Pi_{m}$, we have established
that $\sup_{\Px \in \Pi_{m}} \sup_{h \in \C} \dc_{h,\Px}(\eps) \leq \s$, which completes
the inductive step.  It follows by the principle of induction that $\sup_{\Px \in \Pi_{m}} \sup_{h \in \C} \dc_{h,\Px}(\eps) \leq \s$
for every $m \in \nats$, and therefore, since $\Pi = \bigcup_{m} \Pi_{m}$, 
$\sup_{\Px \in \Pi} \sup_{h \in \C} \dc_{h,\Px}(\eps) \leq \s$.

The claim that $\supdc(0) = \s$ follows as a limiting case, due to continuity of the supremum from below.
Specifically, fix any sequence $\{A_{n}\}_{n=1}^{\infty}$ of nonempty subsets of $\reals$.
For each $m \in \nats$, $\bigcup_{n} A_{n} \supseteq A_{m}$, so 
$\sup \bigcup_{n} A_{n} \geq \sup A_{m}$ (allowing the supremum to take the value $\infty$ where appropriate),
and since this holds for every such $m$, we have $\sup \bigcup_{n} A_{n} \geq \sup_{n} \sup A_{n}$
Furthermore, $\forall a \in \bigcup_{n} A_{n}$, $\exists m \in \nats$ s.t. $a \in A_{m}$, 
so that $\sup_{n} \sup A_{n} \geq \sup A_{m} \geq a$, 
and therefore (since this holds for every such $a$) $\sup_{n} \sup A_{n} \geq \sup \bigcup_{n} A_{n}$.
Thus, $\sup \bigcup_{n} A_{n} = \sup_{n} \sup A_{n}$.
In particular, taking (for each $n \in \nats$)
\begin{equation*}
A_{n} = \left\{ \frac{\Px(\DIS(\Ball_{\Px}(h^{*}_{\PXY},r)))}{r} \lor 1 : r > 1/n, \PXY \in \AG(1) \right\},
\end{equation*}
(where, as usual, $\Px(\cdot) = \PXY(\cdot \times \Y)$ denotes the marginal of $\PXY$ over $\X$),
and noting that $\sup \bigcup_{n} A_{n} = \supdc(0)$ and $\forall n \in \nats$, $\sup A_{n} = \supdc(1/n)$,
we have that $\supdc(0) = \sup_{n} \supdc(1/n) = \sup_{n} \s \land n = \s$.
\end{proof}

\subsection{The Splitting Index}
\label{app:splitting}

Here we present the proof of Theorem~\ref{thm:splitting-star}.
First, we introduce a quantity related to $\infsplit(\eps)$, but slightly simpler.
For $\eps,\tau \in (0,1]$ and any probability measure $\Px$ over $\X$, define
\begin{equation*}
\basicsplit_{\Px}(\eps;\tau) = \sup\left\{ \rho \in [0,1] : \C \text{ is } (\rho,\eps,\tau)\text{-splittable under } \Px \right\},
\end{equation*}
and let
\begin{equation*}
\basicsplit(\eps) = \inf_{P} \lim_{\tau \to 0} \basicsplit_{P}(\eps;\tau).
\end{equation*}
In the arguments below, we will see that $\left\lfloor 1/\basicsplit(\eps) \right\rfloor = \left\lfloor 1/\infsplit(\eps) \right\rfloor$,
so that it suffices to work with this simpler quantity.
We begin with a lemma which allows us to restrict our focus (in part of the proof) to finitely discrete probability measures.
Recall the definition of $\Pi$ from Appendix~\ref{app:dc} above.

\begin{lemma}
\label{lem:splitting-star-finitely-discrete}
If $\vc < \infty$, then $\forall \eps \in (0,1]$, 
$\basicsplit(\eps) \geq \lim\limits_{\gamma \to 0} \inf\limits_{P \in \Pi} \lim\limits_{\tau \to 0} \basicsplit_{P}((1-\gamma)\eps;\tau)$.
\end{lemma}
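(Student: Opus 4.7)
The plan is to show that for each probability measure $P$ on $\X$ and each $\gamma \in (0,1)$ one can exhibit a $P' \in \Pi$ satisfying $\lim_{\tau \to 0} \basicsplit_{P'}((1-\gamma)\eps;\tau) \leq \lim_{\tau \to 0} \basicsplit_{P}(\eps;\tau)$; since $\basicsplit(\eps) = \inf_{P} \lim_\tau \basicsplit_{P}(\eps;\tau)$, this yields $\inf_{P' \in \Pi} \lim_\tau \basicsplit_{P'}((1-\gamma)\eps;\tau) \leq \basicsplit(\eps)$ for every $\gamma > 0$, and the inner quantity is nondecreasing as $\gamma \downarrow 0$, so the claimed inequality follows from passing to the limit.

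To build such a $P'$, fix arbitrary $P$, $\gamma$, and any $\rho > \rho^{*} := \lim_{\tau \to 0} \basicsplit_{P}(\eps;\tau)$. By definition of $\rho^{*}$ one has $\basicsplit_{P}(\eps;\tau) < \rho$ for every $\tau > 0$, so for every $\tau > 0$ there exists a finite family $Q_\tau \subseteq \{\{h,g\} : P(\DIS(\{h,g\})) \geq \eps\}$ with $P(A_{Q_\tau}^{\rho}) < \tau$, where $A_{Q}^{\rho} := \{x : \Split(Q,x) \geq \rho|Q|\}$. Appealing to Lemma~\ref{lem:vc-of-pairs} combined with Lemma~\ref{lem:vc-relative}, I would first pick $m$ large enough in terms of $\vc$, $\gamma$, and $\eps$ so that, with probability at least $3/4$, an i.i.d.\ $P$-sample $X_{1},\ldots,X_{m}$ yields an empirical measure $\hat{P}_{m}$ obeying $\hat{P}_{m}(\DIS(\{h,g\})) \geq (1-\gamma) P(\DIS(\{h,g\}))$ uniformly over all $h,g \in \C$; then I would choose $\tau$ so small that $m \tau < 1/4$ and freeze the corresponding $Q_\tau$. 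Markov's inequality applied to $\sum_{i=1}^{m} \ind_{A_{Q_\tau}^{\rho}}(X_{i})$ gives $\P(\exists i : X_{i} \in A_{Q_\tau}^{\rho}) \leq m\tau < 1/4$, so a union bound supplies a sample realization on which both events occur simultaneously; let $P' \in \Pi$ be the resulting empirical measure.

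For this $P'$, every pair of $Q_\tau$ is a $(1-\gamma)\eps$-pair under $P'$ while $P'(A_{Q_\tau}^{\rho}) = 0$, so $Q_\tau$ itself certifies that $\C$ fails to be $(\rho,(1-\gamma)\eps,\tau')$-splittable under $P'$ for \emph{every} $\tau' > 0$; hence $\lim_{\tau' \to 0} \basicsplit_{P'}((1-\gamma)\eps;\tau') \leq \rho$, and letting $\rho \downarrow \rho^{*}$ closes the inner claim. The key conceptual observation — and the reason a single finite sample is enough — is that for a finitely supported $P'$ the requirement ``$\lim_{\tau' \to 0} \basicsplit_{P'}((1-\gamma)\eps;\tau') < \rho$'' collapses into the purely combinatorial demand that some $Q$ split no atom of $P'$ at level $\rho|Q|$, and the calibration $m\tau < 1/4$ is exactly what allows Markov to deliver $P'(A_{Q_\tau}^{\rho}) = 0$ \emph{exactly} rather than merely small, so that one $Q_\tau$ defeats all $\tau'$ at once. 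I expect the main obstacle to be the order-of-choice coordination: $m$ must be fixed first (this is where $\vc < \infty$ is used, through Lemma~\ref{lem:vc-of-pairs}) to secure uniform convergence on pair-disagreement probabilities, after which $\tau$ has to be driven below $1/(4m)$ so that the witness $Q_\tau$ inherited from $P$ simultaneously survives as a $(1-\gamma)\eps$-family under $P'$ and leaves the realized sample entirely outside $A_{Q_\tau}^{\rho}$.
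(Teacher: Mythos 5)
Your argument follows essentially the same route as the paper's proof: pick a witness family $Q$ certifying non-splittability of $\C$ under $P$ at a level slightly above $\lim_{\tau\to0}\basicsplit_P(\eps;\tau)$, use Lemma~\ref{lem:vc-of-pairs} together with Lemma~\ref{lem:vc-relative} to get a finite sample whose empirical measure preserves $\eps$-separation of every pair in $Q$ up to a $(1-\gamma)$ factor, and take $\tau$ so small that a union bound places the whole sample outside $\{x : \Split(Q,x)\geq\rho|Q|\}$, so the realized empirical measure $P'\in\Pi$ fails $(\rho,(1-\gamma)\eps,\tau')$-splittability for all $\tau'>0$. One small imprecision: the uniform claim $\hat{P}_m(\DIS(\{h,g\}))\geq(1-\gamma)P(\DIS(\{h,g\}))$ over \emph{all} $h,g\in\C$ is not what Lemma~\ref{lem:vc-relative} delivers (it cannot hold for pairs with arbitrarily small disagreement probability); what is actually available, and what the paper uses, is the additive guarantee $\hat{P}_m(\DIS(\{h,g\}))\geq P(\DIS(\{h,g\}))-\gamma\eps$, which suffices because the pairs in $Q_\tau$ all have $P(\DIS)\geq\eps$.
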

\begin{proof}
Suppose $\vc < \infty$, and fix any $\eps \in (0,1]$.  
Fix arbitrary values $\gamma_{1},\gamma_{2} \in (0,1)$, and let
\begin{equation*}
m = \left\lceil \frac{8}{\gamma_{2}^{2} \eps^{2}} \left( 10 \vc \Log\left( \frac{8 e}{\gamma_{2}^{2} \eps^{2}} \right) + \Log(24) \right) \right\rceil,
\end{equation*}
which is a finite natural number.
Fix any probability measure $\Px$ over $\X$, and any $\tau \in (0,1/(3m))$,
and note that $\tau^{\prime} \mapsto \basicsplit_{\Px}(\eps;\tau^{\prime})$ is nonincreasing, so that
$\basicsplit_{\Px}(\eps;\tau) \leq \lim_{\tau^{\prime} \to 0} \basicsplit_{\Px}(\eps;\tau^{\prime})$.
For brevity, denote $\basicsplit = \basicsplit_{\Px}(\eps;\tau)$.
Since $\C$ is not $(\gamma_{1} + \basicsplit,\eps,\tau)$-splittable under $\Px$,
let $Q \subseteq \{\{f,g\} \subseteq \C : \Px(x : f(x) \neq g(x)) \geq \eps\}$ be a finite set such that
$\Px\!\left(x : \Split(Q,x) \geq \left(\gamma_{1} + \basicsplit\right) |Q|\right) < \tau$.

Let $X_{1}^{\prime},\ldots,X_{m}^{\prime}$ be independent $\Px$-distributed random variables.
Lemma~\ref{lem:vc-of-pairs} and Lemma~\ref{lem:vc-relative} imply that, with probability at least $2/3$, 
$\forall f,g \in \C$, 
\begin{equation*}
\left| \Px(x : f(x) \neq g(x)) - \frac{1}{m} \sum_{i=1}^{m} \ind\!\left[ f(X_{i}^{\prime}) \neq g(X_{i}^{\prime}) \right]\right| \leq \gamma_{2} \eps.
\end{equation*}
Furthermore, by a union bound, with probability at least $1 - m \Px\!\left(x : \Split(Q,x) \geq \left(\gamma_{1} + \basicsplit\right) |Q|\right) > 1 - m \tau > 1 - m (1/(3m)) = 2/3$, 
every $i \in \{1,\ldots,m\}$ has $\Split(Q,X_{i}^{\prime}) < (\gamma_{1} + \basicsplit) |Q|$.
By a union bound, both of the above events occur with probability at least $1/3$.  
In particular, this implies $\exists z_{1},\ldots,z_{m} \in \X$ such that, letting $\hat{\Px}$ be the probability measure with $\hat{\Px}(A) = \frac{1}{m} \sum_{i=1}^{m} \ind_{A}(z_{m})$ for all measurable $A \subseteq \X$,
we have, $\forall f,g \in \C$, $\left| \Px(x : f(x) \neq g(x)) - \hat{\Px}(x : f(x) \neq g(x)) \right| \leq \gamma_{2} \eps$,
and $\hat{\Px}( x : \Split(Q,x) \geq (\gamma_{1} + \basicsplit) |Q| ) = 0$.

For any $\{f,g\} \in Q$, we have 
$\hat{\Px}(x : f(x) \neq g(x)) \geq \Px(x : f(x) \neq g(x)) - \gamma_{2} \eps \geq (1-\gamma_{2}) \eps$.
Therefore, $\C$ is not $(\gamma_{1} + \basicsplit, (1-\gamma_{2}) \eps, \tau^{\prime})$-splittable under $\hat{\Px}$ for any $\tau^{\prime} > 0$,
which implies $\lim_{\tau^{\prime} \to 0} \basicsplit_{\hat{\Px}}( (1-\gamma_{2})\eps;\tau^{\prime}) \leq \gamma_{1} + \basicsplit_{\Px}(\eps;\tau)$.
Since $\hat{\Px} \in \Pi$, we have 
\begin{equation*}
\inf_{P \in \Pi} \lim_{\tau^{\prime} \to 0} \basicsplit_{P}( (1-\gamma_{2}) \eps; \tau^{\prime} ) 
\leq \gamma_{1} + \basicsplit_{\Px}(\eps;\tau) 
\leq \gamma_{1} + \lim_{\tau^{\prime} \to 0} \basicsplit_{\Px}(\eps;\tau^{\prime}).
\end{equation*}
Since this holds for any $\gamma_{1} \in (0,1)$, taking the limit as $\gamma_{1} \to 0$ implies
\begin{equation*}
\inf_{P \in \Pi} \lim_{\tau^{\prime} \to 0} \basicsplit_{P}( (1-\gamma_{2}) \eps; \tau^{\prime} ) \leq \lim_{\tau^{\prime} \to 0} \basicsplit_{\Px}(\eps;\tau^{\prime}).
\end{equation*}
Furthermore, since this holds for any $\gamma_{2} \in (0,1)$ and any $\Px$, 
we have 
\begin{equation*}
\lim_{\gamma_{2} \to 0} \inf_{P \in \Pi} \lim_{\tau^{\prime} \to 0} \basicsplit_{P}( (1-\gamma_{2}) \eps; \tau^{\prime} ) 
\leq \inf_{P} \lim_{\tau^{\prime} \to 0} \basicsplit_{P}(\eps;\tau^{\prime})
= \basicsplit(\eps).
\end{equation*}
\end{proof}

We are now ready for the proof of Theorem~\ref{thm:splitting-star}.

\begin{proof}[of Theorem~\ref{thm:splitting-star}]
We first establish that $\s \land \left\lfloor \frac{1}{\eps} \right\rfloor \leq \left\lfloor \frac{1}{\infsplit(\eps)} \right\rfloor$ for any $\eps \in (0,1]$.
The proof of this fact was implicitly established in the original work of \citet*[][Corollary 3]{dasgupta:05},
but we include the argument here for completeness.
Let $\{x_{i}\}_{i=1}^{\s}$ and $\{h_{i}\}_{i=0}^{\s}$ be as in Definition~\ref{def:star},
and let $m = \s \land \left\lfloor \frac{1}{\eps} \right\rfloor$.
Let $\Delta = 1/m$, and note that $\Delta \geq 1 / \left\lfloor \frac{1}{\eps} \right\rfloor \geq \eps$.
As in the proof of Theorem~\ref{thm:dc-star}, 
let $\Px$ be a probability measure on $\X$ with $\Px(\{x_{i}\}) = 1/m$ for each $i \in \{1,\ldots,m\}$.
Thus, every $i \in \{1,\ldots,m\}$ has $\Px(x : h_{i}(x) \neq h_{0}(x)) = \Delta$, so that
$h_{i} \in \Ball_{\Px}(h_{0},\Delta) \subseteq \Ball_{\Px}(h_{0},4\Delta)$, and the finite set $Q = \{ \{h_{0},h_{i}\} : i \in \{1,\ldots,m\} \}$ 
satisfies $Q \subseteq \{ \{f,g\} \subseteq \Ball_{\Px}(h_{0},4\Delta) : \Px(x : f(x) \neq g(x)) \geq \Delta \}$. 
In particular, since $\Px(\X \setminus \{x_{1},\ldots,x_{m}\}) = 0$, and every $i \in \{1,\ldots,m\}$ has $\Split(Q,x_{i}) = 1 = \frac{1}{m} |Q|$,
we have $\Px\left(x : \Split(Q,x) > \frac{1}{m} |Q| \right) = 0$.  Thus, for any $\rho > \frac{1}{m}$, and any $\tau > 0$, $\Ball_{\Px}(h_{0},4\Delta)$ is not $(\rho,\Delta,\tau)$-splittable.
Therefore, $\infsplit(\eps) \leq \lim_{\tau \to 0} \rho_{h_{0},\Px}(\eps;\tau) \leq \frac{1}{m}$, which implies
$\frac{1}{\infsplit(\eps)} \geq m$; since $m \in \nats$, it follows that $\left\lfloor \frac{1}{\infsplit(\eps)} \right\rfloor \geq m$.

Next, we prove that $\left\lfloor \frac{1}{\infsplit(\eps)} \right\rfloor \leq \s \land \left\lfloor \frac{1}{\eps} \right\rfloor$ for any $\eps \in (0,1]$.
Since, for every $h \in \C$, every probability measure $\Px$ over $\X$, and every $\Delta \geq \eps$, 
every finite $Q \subseteq \{ \{f,g\} \subseteq \Ball_{\Px}(h,4\Delta) : \Px(x : f(x) \neq g(x)) \geq \Delta \}$
also has $Q \subseteq \{ \{f,g\} \subseteq \C : \Px(x : f(x) \neq g(x)) \geq \eps \}$, we have $\basicsplit(\eps) \leq \infsplit(\eps)$.
Thus, it suffices to show $\left\lfloor \frac{1}{\basicsplit(\eps)} \right\rfloor \leq \s \land \left\lfloor \frac{1}{\eps} \right\rfloor$.

That $\basicsplit(\eps) \geq \eps$ was established by \citet*[][Lemma 1]{dasgupta:05};
we repeat the argument here for completeness.  Fix any probability measure $\Px$ over $\X$ and any $\eps,\tau \in (0,1]$ with $\tau < \eps$.
Fix any finite set $Q \subseteq \{ \{f,g\} \subseteq \C : \Px(x : f(x) \neq g(x)) \geq \eps \}$.  If $Q = \emptyset$, then trivially $\Px(x : \Split(Q,x) \geq \eps |Q|) = 1 \geq \tau$.
Otherwise, if $Q \neq \emptyset$, letting $X \sim \Px$, 
\begin{equation*}
\E[ \Split(Q,X) ] 
\geq \E\left[ \sum_{\{f,g\} \in Q} \ind[ f(Z) \neq g(Z) ] \right] 
= \sum_{\{f,g\} \in Q} \Px(x : f(x) \neq g(x)) 
\geq |Q| \eps.
\end{equation*}
Furthermore, since $\Split(Q,x) \leq |Q|$,
\begin{align*}
& \E[ \Split(Q,X) ] 
\\ & = \E\left[ \ind[ \Split(Q,X) \geq (\eps - \tau) |Q| ] \Split(Q,X) \right] + \E\left[ \ind[ \Split(Q,X) < (\eps - \tau) |Q| ] \Split(Q,X) \right]
\\ & < \Px\left( x : \Split(Q,x) \geq (\eps - \tau) |Q| \right) |Q| + (\eps - \tau) |Q|.
\end{align*}
Together, these inequalities imply 
\begin{equation*}
|Q| \eps < \Px\left( x : \Split(Q,x) \geq ( \eps - \tau ) |Q| \right) |Q| + ( \eps - \tau ) |Q|.
\end{equation*}
Subtracting $(\eps-\tau)|Q|$ from both sides and dividing by $|Q|$, we have 
\begin{equation*}
\tau < \Px\left( x : \Split(Q,x) \geq (\eps-\tau) |Q| \right).
\end{equation*}
Since this holds for any such $Q$, we have that $\C$ is $( (\eps-\tau), \eps, \tau )$-splittable under $\Px$, so that $\basicsplit_{\Px}(\eps;\tau) \geq \eps-\tau$.
Since this holds for every choice of $\Px$, we have that 
\begin{equation*}
\basicsplit(\eps) = \inf_{\Px} \lim_{\tau \to 0} \basicsplit_{\Px}(\eps;\tau) \geq \lim_{\tau \to 0} \eps-\tau = \eps,
\end{equation*}
from which it immediately follows that $\left\lfloor \frac{1}{\basicsplit(\eps)} \right\rfloor \leq \left\lfloor \frac{1}{\eps} \right\rfloor$.

It remains only to show that $\left\lfloor \frac{1}{\basicsplit(\eps)} \right\rfloor \leq \s$.
In particular, since this trivially holds when $\s = \infty$, for the remainder of the proof we suppose $\s < \infty$.
As argued in Section~\ref{sec:star}, we have $\vc \leq \s$, so that this also implies $\vc < \infty$.
Thus, Lemma~\ref{lem:splitting-star-finitely-discrete} implies that
$\basicsplit(\eps) \geq  \lim_{\gamma \to 0} \inf_{\Px \in \Pi} \lim_{\tau \to 0} \basicsplit_{\Px}((1-\gamma)\eps;\tau)$.
Therefore, if we can establish that, for every $\eps \in (0,1]$ and $\Px \in \Pi$, $\lim_{\tau \to 0} \basicsplit_{\Px}(\eps;\tau) \geq 1/\s$, 
then we would have that for every $\eps \in (0,1]$,
\begin{equation*}
\left\lfloor \frac{1}{\basicsplit(\eps)} \right\rfloor
\leq \frac{1}{\basicsplit(\eps)} 
\leq \lim_{\gamma \to 0} \sup_{\Px \in \Pi} \frac{1}{\lim_{\tau \to 0} \basicsplit_{\Px}((1-\gamma)\eps;\tau)}
\leq \s,
\end{equation*}
which would thereby complete the proof.

Toward this end, fix any $\eps \in (0,1]$, and for each $\Px \in \Pi$, denote
$\tau_{\Px} = \min\{ \Px(\{x\}) : x \in \X, \Px(\{x\}) > 0\}$;
in particular, note that (since $\Px \in \Pi$) $0 < \tau_{\Px} \leq 1$,
and therefore also that, $\forall \eps \in (0,1]$, $\lim_{\tau \to 0} \basicsplit_{\Px}(\eps;\tau) \geq \basicsplit_{\Px}(\eps;\tau_{\Px})$ (in fact, they are equal).
Furthermore, denoting $\supp(\Px) = \{ x \in \X : \Px(\{x\}) > 0 \}$, every $x \in \supp(\Px)$ has $\Px(\{x\}) \geq \tau_{\Px}$,
while $\Px( \X \setminus \supp(\Px) ) = 0$.  Thus, for any finite $Q \subseteq \{\{f,g\} \subseteq \C : \Px(x : f(x) \neq g(x)) \geq \eps\}$, and any $\rho \in [0,1]$,
$\Px(x : \Split(Q,x) \geq \rho |Q|) \geq \tau_{\Px}$ if and only if $\max_{x \in \supp(\Px)} \Split(Q,x) \geq \rho |Q|$.
Furthermore, since $\Px( \X \setminus \supp(\Px) ) = 0$, 
for any $\eps \in (0,1]$, every $\{f,g\} \subseteq \C$ with $\Px(x : f(x) \neq g(x)) \geq \eps$ must have $\DIS(\{f,g\}) \cap \supp(\Px) \neq \emptyset$.
Thus, defining
\begin{multline*}
\finitesplit_{\Px} = \sup\bigg\{ \rho \in [0,1] : \forall \text{ finite } Q \subseteq \{\{f,g\} \subseteq \C : \DIS(\{f,g\}) \cap \supp(\Px) \neq \emptyset \}, 
\\ \max_{x \in \supp(\Px)} \Split(Q,x) \geq \rho |Q| \bigg\},
\end{multline*}
we have $\finitesplit_{\Px} \leq \basicsplit_{\Px}(\eps;\tau_{\Px})$ for all $\eps \in (0,1]$ (in fact, they are equal for $\eps \leq \tau_{\Px}$).
Thus, it suffices to show that $\inf_{\Px \in \Pi} \finitesplit_{\Px} \geq 1/\s$.
Now partition the set $\Pi$ by the sizes of the supports, defining, for each $m \in \nats$,
$\Pi_{m} = \{ \Px \in \Pi : |\supp(\Px)| = m \}$ (this is slightly different from the definition used in the proof of Theorem~\ref{thm:dc-star}).
Note that, for any $\Px \in \Pi$, the value of $\finitesplit_{\Px}$ is entirely determined by $\supp(\Px)$.
Thus, defining, $\forall m \in \nats$ with $m \leq |\X|$,
\begin{multline*}
\finitesplit_{m} = \inf_{\X_{m} \subseteq \X : |\X_{m}|=m} \sup\bigg\{ \rho \in [0,1] : \forall \text{ finite } Q \subseteq \{\{f,g\} \subseteq \C : \DIS(\{f,g\}) \cap \X_{m} \neq \emptyset \}, 
\\ \max_{x \in \X_{m}} \Split(Q,x) \geq \rho |Q| \bigg\},
\end{multline*}
we have $\inf_{\Px \in \Pi_{m}} \finitesplit_{\Px} \geq \finitesplit_{m}$ (in fact, they are equal).
Thus, since $\Pi = \bigcup_{m \in \nats} \Pi_{m}$, we have $\inf_{\Px \in \Pi} \finitesplit_{\Px} = \inf_{m \in \nats : m \leq |\X|} \inf_{\Px \in \Pi_{m}} \finitesplit_{\Px} \geq \inf_{m \in \nats : m \leq |\X|} \finitesplit_{m}$.
Therefore, it suffices to show that $\finitesplit_{m} \geq 1/\s$ for all $m \in \nats$ with $m \leq |\X|$.

We proceed by induction on $m \in \nats$ with $m \leq |\X|$, 
combined with a nested inductive argument on $Q$.
As base cases (for induction on $m$), consider any $m \leq \s$.
Fix any $\X_{m} \subseteq \X$ with $|\X_{m}| = m$ (noting that $m \leq \s$ implies $m \leq |\X|$, since $\s \leq |\X|$ immediately follows from Definition~\ref{def:star}).
Also fix any finite set $Q \subseteq \{ \{f,g\} \subseteq \C : \DIS(\{f,g\}) \cap \X_{m} \neq \emptyset \}$.
Since $\forall \{f,g\} \in Q$, $\exists x \in \X_{m}$ such that $f(x) \neq g(x)$,
the pigeonhole principle implies $\exists x \in \X_{m}$ with $|\{ \{f,g\} \in Q : f(x) \neq g(x) \}| \geq |Q|/|\X_{m}| = |Q|/m$.
For this $x$, we have $\Split(Q,x) \geq |\{ \{f,g\} \in Q : f(x) \neq g(x) \}| \geq (1/m) |Q| \geq (1/\s) |Q|$.
Since this holds for any such choice of $Q$ and $\X_{m}$, we have that $\finitesplit_{m} \geq 1/\s$.

If $|\X| = \s$, this completes the proof.
Otherwise, take as an inductive hypothesis that, for some $m \in \nats$ with $\s < m \leq |\X|$, 
$\finitesplit_{m-1} \geq 1/\s$.  Fix any $\X_{m} \subseteq \X$ with $|\X_{m}| = m$.
We now introduce a nested inductive argument on $Q$ (based on the partial ordering induced by the subset relation).
As a base case, if $Q = \emptyset$, then trivially $\max_{x \in \X_{m}} \Split(Q,x) = 0 = (1/\s) |Q|$.
Now take as a nested inductive hypothesis that, for some nonempty finite set
$Q \subseteq \left\{ \{f,g\} \subseteq \C : \DIS(\{f,g\}) \cap \X_{m} \neq \emptyset \right\}$,
for every strict subset $R \subset Q$, $\max_{x \in \X_{m}} \Split(R,x) \geq (1/\s) |R|$.

First, consider the case in which $\exists x \in \X_{m}$ such that $x \notin \bigcup_{\{f,g\} \in Q} \DIS(\{f,g\})$.
In this case, every $\{f,g\} \in Q$ has $\DIS(\{f,g\}) \cap (\X_{m} \setminus \{x\}) = \DIS(\{f,g\}) \cap \X_{m} \neq \emptyset$,
so that $Q \subseteq \{ \{f,g\} \subseteq \C : \DIS(\{f,g\}) \cap (\X_{m} \setminus \{x\}) \neq \emptyset \}$.
Therefore, since $|\X_{m} \setminus \{x\}| = m-1$, by definition of $\finitesplit_{m-1}$ we have
$\max_{x^{\prime} \in \X_{m}} \Split(Q,x^{\prime}) \geq \max_{x^{\prime} \in \X_{m} \setminus \{x\}} \Split(Q,x^{\prime}) \geq \finitesplit_{m-1} |Q|$.
Combined with the inductive hypothesis (for $m$), this implies $\max_{x^{\prime} \in \X_{m}} \Split(Q,x^{\prime}) \geq (1/\s) |Q|$.

Now consider the remaining case, in which $\forall x \in \X_{m}$, $\exists \{f_{x},g_{x}\} \in Q$ with $x \in \DIS(\{f_{x},g_{x}\})$.
Since $\{f_{x},g_{x}\} \notin Q_{x}^{y}$ for every $y \in \Y$ and $x \in \X_{m}$, we have $\max_{x \in \X_{m}} \Split(Q,x) \geq 1$.
We proceed by a kind of set-covering argument, as follows.
For each $x \in \X_{m}$, denote $y_{x} = \argmax_{y \in \Y} |Q_{x}^{y}|$ (breaking ties arbitrarily),
and denote $S_{x} = \{ x^{\prime} \in \X_{m} : \{f_{x},g_{x}\} \notin Q_{x^{\prime}}^{y_{x^{\prime}}} \}$.
Let $z_{1}$ be any element of $\X_{m}$.
Then, for integers $i \geq 2$, inductively define 
$z_{i}$ as any element of $\X_{m} \setminus \bigcup_{j = 1}^{i-1} S_{z_{j}}$,
up until the smallest index $i \in \nats$ for which $\X_{m} \setminus \bigcup_{j=1}^{i} S_{z_{i}} = \emptyset$;
denote by $I$ this smallest $i$ with $\X_{m} \setminus \bigcup_{j=1}^{i} S_{z_{i}} = \emptyset$.
Note that, since $\{f_{x},g_{x}\} \notin Q_{x}^{y_{x}}$ (and hence $x \in S_{x}$) for each $x \in \X_{m}$, 
every $z_{i}$ is distinct, which further implies that $I \leq m$ (and in particular, that $I$ exists).
Furthermore, since any $i \in \{1,\ldots,I\}$ and $x \in \X_{m}$ with $\{f_{x},g_{x}\} = \{f_{z_{i}},g_{z_{i}}\}$
have $S_{x} = S_{z_{i}}$, and therefore $x \in S_{z_{i}}$, $\nexists j > i$ with $z_{j} = x$.
Thus, we also have that $\{f_{z_{i}},g_{z_{i}}\} \neq \{f_{z_{j}},g_{z_{j}}\}$ for every $i,j \in \{1,\ldots,I\}$ with $i \neq j$.

Now let $i_{1} = I$, and for integers $k \geq 2$, inductively define
\begin{equation*}
i_{k} = \max\left\{ i \in \{1,\ldots,i_{k-1}-1\} : \left( S_{z_{i}} \setminus \bigcup_{j = 1}^{i-1} S_{z_{j}} \right) \setminus \bigcup_{j=1}^{k-1} S_{z_{i_{j}}} \neq \emptyset \right\},
\end{equation*}
up to the smallest index $k \in \nats$ with
$\left\{ i \in \{1,\ldots,i_{k}-1\} : \left( S_{z_{i}} \setminus \bigcup_{j = 1}^{i-1} S_{z_{j}} \right) \setminus \bigcup_{j=1}^{k} S_{z_{i_{j}}} \neq \emptyset \right\} = \emptyset$;
denote by $K$ this final value of $k$ (which must exist, since $i_{k+1} \in \nats$ is defined and strictly smaller than $i_{k}$ for any $k$ for which this set is nonempty; in particular, $1 \leq K \leq I$).
Finally, let $x_{1} = z_{i_{1}}$, and for each $k \in \{1,\ldots,K\}$,
let $x_{k}$ denote any element of $\left( S_{z_{i_{k}}} \setminus \bigcup_{j = 1}^{i_{k}-1} S_{z_{j}} \right) \setminus \bigcup_{j=1}^{k-1} S_{z_{i_{j}}}$,
which is nonempty by definition of $i_{k}$.

We first establish, by induction, that $\bigcup_{k=1}^{K} S_{z_{i_{k}}} = \X_{m}$.
By construction, we have $\bigcup_{i=1}^{I} S_{z_{i}} = \X_{m}$.
Furthermore, for any $i \in \{1,\ldots,I\}$, if $\bigcup_{j \leq i} S_{z_{j}} \cup \bigcup_{1 \leq k \leq K : i_{k} \geq i+1} S_{z_{i_{k}}} = \X_{m}$, 
then either $i \in \{i_{1},\ldots,i_{K}\}$, in which case 
$\bigcup_{j < i} S_{z_{j}} \cup \bigcup_{1 \leq k \leq K : i_{k} \geq i} S_{z_{i_{k}}} = \bigcup_{j \leq i} S_{z_{j}} \cup \bigcup_{1 \leq k \leq K : i_{k} \geq i+1} S_{z_{i_{k}}} = \X_{m}$, 
or else $i \notin \{i_{1},\ldots,i_{K}\}$, which (by definition of the $i_{k}$ sequence) implies $S_{z_{i}} \subseteq \bigcup_{j=1}^{i-1} S_{z_{j}} \cup \bigcup_{1 \leq k \leq K : i_{k} \geq i+1} S_{z_{i_{k}}}$,
so that 
$\bigcup_{j < i} S_{z_{j}} \cup \bigcup_{1 \leq k \leq K : i_{k} \geq i} S_{z_{i_{k}}} = \bigcup_{j < i} S_{z_{j}} \cup \bigcup_{1 \leq k \leq K : i_{k} \geq i+1} S_{z_{i_{k}}} = 
\bigcup_{j \leq i} S_{z_{j}} \cup \bigcup_{1 \leq k \leq K : i_{k} \geq i+1} S_{z_{i_{k}}} = \X_{m}$.
By induction, we have that $\bigcup_{k=1}^{K} S_{z_{i_{k}}} = \bigcup_{j < 1} S_{z_{j}} \cup \bigcup_{1 \leq k \leq K : i_{k} \geq 1} S_{z_{i_{k}}} = \X_{m}$.
In other words, $\forall x \in \X_{m}$, $\exists k(x) \in \{1,\ldots,K\}$ with $\{f_{z_{i_{k(x)}}},g_{z_{i_{k(x)}}}\} \notin Q_{x}^{y_{x}}$.

In particular, letting $R = Q \setminus \{ \{f_{z_{i_{k}}},g_{z_{i_{k}}}\} : k \in \{1,\ldots,K\} \}$, we have that $\forall x \in \X_{m}$, 
$\{f_{z_{i_{k(x)}}},g_{z_{i_{k(x)}}}\} \in (Q \setminus R) \setminus (Q_{x}^{y_{x}} \setminus R)$ while $Q_{x}^{y_{x}} \setminus R \subseteq Q \setminus R$,
so that $|Q \setminus R| - |Q_{x}^{y_{x}} \setminus R| \geq 1$.
Therefore, $\forall x \in \X_{m}$,
\begin{align}
\Split(R,x) 
& = |R| - \max_{y \in \Y} |R_{x}^{y}|
\leq |R| - |R_{x}^{y_{x}}|
= |R| - |R \cap Q_{x}^{y_{x}}| \notag
\\ & = (|Q| - |Q \setminus R|) - (|Q_{x}^{y_{x}}| - |Q_{x}^{y_{x}} \setminus R|)
= (|Q| - |Q_{x}^{y_{x}}|) - (|Q \setminus R| - |Q_{x}^{y_{x}} \setminus R|) \notag
\\ & \leq |Q| - |Q_{x}^{y_{x}}| - 1
= |Q| - \max_{y \in \Y} |Q_{x}^{y}| - 1
= \Split(Q,x) - 1. \label{eqn:split-reduced-by-one}
\end{align}
Since $K \geq 1$, we may note that $R$ is a strict subset of $Q$, so that the (nested) inductive hypothesis implies 
that $\max_{x \in \X_{m}} \Split(R,x) \geq (1/\s) |R|$.  Combined with \eqref{eqn:split-reduced-by-one}, this implies
\begin{equation}
\label{eqn:max-split-apply-IH}
\max_{x \in \X_{m}} \Split(Q,x) \geq \max_{x \in \X_{m}} \Split(R,x) + 1 \geq (1/\s) |R| + 1.
\end{equation}

Next, we argue that $K \leq \s$, by proving that $\{x_{1},\ldots,x_{K}\}$ is a star set for $\C$.
By definition of $z_{I}$, 
we have $z_{I} \in \X_{m} \setminus \bigcup_{j = 1}^{I-1} S_{z_{j}} \subseteq \X_{m} \setminus \bigcup_{k=2}^{K} S_{z_{i_{k}}}$.
Furthermore, $z_{I} \in S_{z_{I}}$, so that $z_{I} \in S_{z_{I}} \setminus \bigcup_{k=2}^{K} S_{z_{i_{k}}}$.
Since $x_{1} = z_{i_{1}} = z_{I}$, we have $x_{1} \in S_{z_{i_{1}}} \setminus \bigcup_{k=2}^{K} S_{z_{i_{k}}}$.
Also, for each $k \in \{2,\ldots,K\}$, by definition,
$x_{k} \in \left( S_{z_{i_{k}}} \setminus \bigcup_{j=1}^{i_{k}-1} S_{z_{j}} \right) \setminus \bigcup_{j=1}^{k-1} S_{z_{i_{j}}} 
\subseteq \left( S_{z_{i_{k}}} \setminus \bigcup_{j = k+1}^{K} S_{z_{i_{j}}} \right) \setminus \bigcup_{j=1}^{k-1} S_{z_{i_{j}}}
= S_{z_{i_{k}}} \setminus \bigcup_{1 \leq j \leq K : j \neq k} S_{z_{i_{j}}}$.
Therefore, every $k \in \{1,\ldots,K\}$ has $x_{k} \in S_{z_{i_{k}}} \setminus \bigcup_{1 \leq j \leq K : j \neq k} S_{z_{i_{j}}}$.
In particular, for every $k \in \{1,\ldots,K\}$, since $x_{k} \in S_{z_{i_{k}}}$, we have $\{f_{z_{i_{k}}},g_{z_{i_{k}}}\} \notin Q_{x_{k}}^{y_{x_{k}}}$, so that $\exists h_{k} \in \{f_{z_{i_{k}}},g_{z_{i_{k}}}\}$ with $h_{k}(x_{k}) \neq y_{x_{k}}$.
Furthermore, for every $j \in \{1,\ldots,K\} \setminus \{k\}$, since $x_{j} \notin S_{z_{i_{k}}}$, we have $\{f_{z_{i_{k}}},g_{z_{i_{k}}}\} \in Q_{x_{j}}^{y_{x_{j}}}$, so that $f_{z_{i_{k}}}(x_{j}) = g_{z_{i_{k}}}(x_{j}) = y_{x_{j}}$,
and in particular, $h_{k}(x_{j}) = y_{x_{j}}$.
Also, since we have chosen $x_{1} = z_{i_{1}}$, so that $x_{1} \in \DIS(\{f_{z_{i_{1}}},g_{z_{i_{1}}}\})$, $\exists h_{0} \in \{f_{z_{i_{1}}},g_{z_{i_{1}}}\}$ with $h_{0}(x_{1}) \neq h_{1}(x_{1})$: that is, $h_{0}(x_{1}) = y_{x_{1}}$.
Thus, since $f_{z_{i_{1}}}(x_{j}) = g_{z_{i_{1}}}(x_{j}) = y_{x_{j}}$ for every $j \in \{2,\ldots,K\}$, we have that $h_{0}(x_{k}) = y_{x_{k}}$ for every $k \in \{1,\ldots,K\}$.
Altogether, we have that every $k \in \{1,\ldots,K\}$ has $h_{k}(x_{k}) \neq h_{0}(x_{k})$, while every $j \in \{1,\ldots,K\} \setminus \{k\}$ has $h_{k}(x_{j}) = h_{0}(x_{j})$.
In other words, $\forall k \in \{1,\ldots,K\}$, $\DIS(\{h_{0},h_{k}\}) \cap \{x_{1},\ldots,x_{K}\} = \{x_{k}\}$:
that is, $\{x_{1},\ldots,x_{K}\}$ is a star set for $\C$, witnessed by $\{h_{0},h_{1},\ldots,h_{K}\}$.
In particular, this implies $K \leq \s$.  

Therefore, since $|Q \setminus R| = K$ (by distinctness of the pairs $\{f_{z_{i}},g_{z_{i}}\}$ argued above), \eqref{eqn:max-split-apply-IH} implies
\begin{equation*}
\max_{x \in \X_{m}} \Split(Q,x) 
\geq (1/\s) |R| + \frac{K}{\s}
= (1/\s) ( |R| + |Q \setminus R| )
= (1/\s) |Q|.
\end{equation*}
By the principle of induction (on $Q$), we have $\max_{x \in \X_{m}} \Split(Q,x) \geq (1/\s)|Q|$ for every finite set $Q \subseteq \{ \{f,g\} \subseteq \C : \DIS(\{f,g\}) \cap \X_{m} \neq \emptyset \}$.
Since this holds for any choice of $\X_{m}$ with $|\X_{m}|=m$, we have $\finitesplit_{m} \geq 1/\s$.
By the principle of induction (on $m$), we have established that $\finitesplit_{m} \geq 1/\s$ for every $m \in \nats$ with $m \leq |\X|$,
which completes the proof of the theorem.
\end{proof}

\subsection{The Teaching Dimension}
\label{app:xtd}

Here we give the proofs of results from Section~\ref{sec:xtd}.
We first prove that every minimal specifying set is a star set (Lemma~\ref{lem:spec-star-set}).
In fact, we establish a slightly stronger claim here (which also applies to local minima), stated formally as follows.

\begin{lemma}
\label{lem:local-min-spec-star-set}
Fix any $h : \X \to \Y$, $m \in \nats$, $\U \in \X^{m}$,
and any specifying set $S$ for $h$ on $\U$ with respect to $\C[\U]$.
If $\forall x \in S$, $S \setminus \{x\}$ is not a specifying set for $h$ on $\U$ with respect to $\C[\U]$,
then $S$ is a star set for $\C \cup \{h\}$ centered at $h$.
\end{lemma}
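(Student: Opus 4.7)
The plan is to directly unpack the definitions and extract, for each $x \in S$, a classifier in $\C$ that disagrees with $h$ precisely at $x$ within $S$.

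First I would handle the trivial case $S = \emptyset$, which is a star set by convention (as noted right before Definition~\ref{def:star}). Assuming $S$ is nonempty, I would write $S = \{x_1, \ldots, x_k\}$ and, for each $i \in \{1,\ldots,k\}$, invoke the hypothesis that $S \setminus \{x_i\}$ fails to specify $h$ with respect to $\C[\U]$. By definition of specifying set, this means $|V_{S \setminus \{x_i\}, h} \cap \C[\U]| \geq 2$, so I can pick two distinct classifiers $g, g' \in V_{S \setminus \{x_i\}, h} \cap \C[\U]$, each of which agrees with $h$ on every point of $S$ except possibly $x_i$.

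The key observation is then that, since $S$ itself \emph{is} a specifying set, $|V_{S,h} \cap \C[\U]| \leq 1$, so $g$ and $g'$ cannot both agree with $h$ on $x_i$ (otherwise both would lie in $V_{S,h} \cap \C[\U]$, contradicting distinctness). Hence at least one of them, call it $h_i \in \C[\U] \subseteq \C$, satisfies $h_i(x_i) \neq h(x_i)$ while still agreeing with $h$ on $S \setminus \{x_i\}$. This gives $\DIS(\{h, h_i\}) \cap S = \{x_i\}$.

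Collecting these witnesses, the classifiers $h_0 = h$ together with $h_1, \ldots, h_k \in \C$ verify that $S = \{x_1, \ldots, x_k\}$ is a star set for $\C \cup \{h\}$ centered at $h$, exactly in the sense of Definition~\ref{def:star}. There is no real obstacle here: the entire argument is a careful bookkeeping of what "specifying set" and "minimality" mean, combined with the pigeonhole step that at most one element of any pair in $V_{S\setminus\{x_i\},h} \cap \C[\U]$ can also live in $V_{S,h} \cap \C[\U]$. Lemma~\ref{lem:spec-star-set} then follows as the special case in which $S$ is of minimum size, since a minimum specifying set is automatically locally minimal.
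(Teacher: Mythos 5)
Your proof is correct and takes essentially the same approach as the paper: both arguments extract, for each $x \in S$, a witness classifier that agrees with $h$ on $S\setminus\{x\}$ but disagrees at $x$, using the fact that $S\setminus\{x\}$ fails to specify while $S$ does. The only cosmetic difference is that you find the witness via a pigeonhole count on $|V_{S\setminus\{x\},h}\cap\C[\U]| \geq 2$ versus $|V_{S,h}\cap\C[\U]| \leq 1$, whereas the paper argues by contrapositive that if no witness in $V_{S\setminus\{x\},h}$ disagreed at $x$ then $V_{S\setminus\{x\},h}=V_{S,h}$ and $S\setminus\{x\}$ would already be a specifying set.
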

\begin{proof}
Fix an arbitrary sequence $\U = \{x_{1},\ldots,x_{m}\} \in \X^{m}$ and any $h : \X \to \Y$.
Let $t \geq \TD(h,\C[\U],\U)$, and let $i_{1},\ldots,i_{t} \in \{1,\ldots,m\}$ be such that
$S = \{x_{i_{1}},\ldots,x_{i_{t}}\}$ is a specifying set for $h$ on $\U$ with respect to $\C[\U]$.
First note that, if $\exists j \in \{1,\ldots,t\}$ such that
every $g \in V_{S \setminus \{x_{i_{j}}\},h}$ has $g(x_{i_{j}}) = h(x_{i_{j}})$ (which includes the case $V_{S \setminus \{x_{i_{j}}\},h} = \emptyset$), 
then $V_{S \setminus \{x_{i_{j}}\},h} = V_{S,h}$, 
so that $|V_{S \setminus \{x_{i_{j}}\},h} \cap \C[\U]| = |V_{S,h} \cap \C[\U]| \leq 1$;
thus, $S \setminus \{x_{i_{j}}\}$ is also a specifying set for $h$ on $\U$ with respect to $\C[\U]$.

Therefore, if $S$ is such that $\forall j \leq t$, $S \setminus \{x_{i_{j}}\}$ is \emph{not} a specifying set for $h$ on $\U$ with respect to $\C[\U]$,
then $\forall j \in \{1,\ldots,t\}$, $\exists h_{j} \in V_{S \setminus \{x_{i_{j}}\},h}$ with $h_{j}(x_{i_{j}}) \neq h(x_{i_{j}})$;
noting that ``$h_{j} \in V_{S \setminus \{x_{i_{j}}\},h}$'' is equivalent to saying ``$h_{j}(x_{i_{k}}) = h(x_{i_{k}})$ for every $k \in \{1,\ldots,t\}\setminus\{j\}$,''
this precisely matches the definition of a star set in Section~\ref{sec:star}:
that is, we have proven that $\{x_{i_{1}},\ldots,x_{i_{t}}\}$ is a star set for $\C \cup \{h\}$, witnessed by $\{h,h_{1},\ldots,h_{t}\}$, 
and hence centered at $h$.
\end{proof}

\begin{proof}[of Lemma~\ref{lem:spec-star-set}]
Lemma~\ref{lem:spec-star-set} follows immediately from Lemma~\ref{lem:local-min-spec-star-set}
by noting that, for any \emph{minimal} specifying set $S$ for $h$ on $\U$ with respect to $\C[\U]$, 
$\forall x \in S$, $|S \setminus \{x\}| < \TD(h,\C[\U],\U)$, so that $S \setminus \{x\}$ cannot 
possibly be a specifying set for $h$ on $\U$ with respect to $\C[\U]$.
\end{proof}

We are now ready for the proof of Theorem~\ref{thm:xtd}.

\begin{proof}[of Theorem~\ref{thm:xtd}]
Fix any $m \in \nats$.
First, note that for $\{x_{i}\}_{i=1}^{\s}$ and $\{h_{i}\}_{i=0}^{\s}$ as in Definition~\ref{def:star}, 
letting $\U = \{x_{1},\ldots,x_{\min\{\s,m\}}\}$, for any positive integer $i \leq \min\{\s,m\}$, 
any subsequence $S \subseteq \U$ with $x_{i} \notin S$ has $\{h_{0},h_{i}\} \subseteq V_{S,h_{0}}$.
Thus, since $x_{i} \in \U$, and $h_{0}(x_{i}) \neq h_{i}(x_{i})$, we have $|V_{S,h_{0}} \cap \C[\U]| \geq 2$.
Since this is true for every such $i \leq \min\{\s,m\}$, every $S \subseteq \U$ without $\{x_{1},\ldots,x_{\min\{\s,m\}}\} \subseteq S$
has $|V_{S,h_{0}} \cap \C[\U]| \geq 2$.  Therefore, $\TD(h_{0},\C[\U],\U) \geq \min\{\s,m\}$.
Thus, by the definitions of $\XTD$ and $\TD$, 
monotonicity of maximization in the set maximized over,
and monotonicity of $t \mapsto \TD(\C,t)$,\footnote{$\forall S \in \X^{t}$, $\forall x \in S$, $\forall h$, $\TD(h,\C[S\cup\{x\}],S\cup\{x\}) = \TD(h,\C[S],S)$.
Thus, $\TD(\C,t+1) = \max_{h \in \C} \max_{S \in \X^{t}} \max_{x \in \X} \TD(h,\C[S\cup\{x\}],S\cup\{x\}) 
\geq \max_{h \in \C} \max_{S \in \X^{t}} \max_{x \in S} \TD(h, \C[S\cup\{x\}],S\cup\{x\}) = \max_{h \in \C} \max_{S \in \X^{t}} \TD(h,\C[S],S) = \TD(\C,t)$.}
we have
\begin{equation*}
\XTD(\C,m) \geq \TD(\C,m) \geq \TD(\C,\min\{\s,m\}) \geq \TD(h_{0},\C[\U],\U) \geq \min\{\s,m\}.
\end{equation*}
Furthermore, it follows immediately from the definition that $\XTD(\C,m) \leq m$.
Note that this completes the proof in the case that $\s \geq m$.  To address the remaining case,
for the remainder of the proof, we suppose $\s \leq m$, and focus on establishing $\XTD(\C,m) \leq \s$.

For this, we proceed by induction on $m$, taking as a base case the fact that $\XTD(\C,\s) \leq \s$, which trivially follows from the definition of $\XTD$.
Now take as an inductive hypothesis that for some $m > \s$, we have $\XTD(\C,m-1) \leq \s$.
Fix any sequence $\U_{m} = \{x_{1},\ldots,x_{m}\} \in \X^{m}$, and $h : \X \to \Y$, 
and denote $\U_{m-1} = \{x_{1},\ldots,x_{m-1}\}$.
Let $t \in \nats \cup \{0\}$ and $S \in \U_{m-1}^{t}$ be such that
$S$ is a minimal specifying set for $h$ on $\U_{m-1}$ with respect to $\C[\U_{m-1}]$.
If $|S| \geq \TD(h,\C[\U_{m}],\U_{m})$, then since $S$ is a \emph{minimal} specifying set for $h$ on $\U_{m-1}$ with respect to $\C[\U_{m-1}]$, 
we have $|S| = \TD(h,\C[\U_{m-1}],\U_{m-1}) \leq \XTD(\C,m-1) \leq \s$ by the inductive hypothesis; 
thus, in this case we have $\TD(h,\C[\U_{m}],\U_{m}) \leq |S| \leq \s$.
On the other hand, suppose $|S| < \TD(h,\C[\U_{m}],\U_{m})$.
In this case, since $S$ is a specifying set for $h$ on $\U_{m-1}$ with respect to $\C[\U_{m-1}]$, 
we have $\DIS(V_{S,h}) \cap \U_{m} \subseteq (\DIS(V_{S,h}) \cap \U_{m-1}) \cup \{x_{m}\} = \{x_{m}\}$.  
But since $|S| < \TD(h,\C[\U_{m}],\U_{m})$, $S$ cannot be a specifying set for $h$ on $\U_{m}$ with respect to $\C[\U_{m}]$, 
so that $\DIS(V_{S,h}) \cap \U_{m} \neq \emptyset$.  Therefore, $\DIS(V_{S,h}) \cap \U_{m} = \{x_{m}\}$.
In particular, this implies that $S \cup \{x_{m}\}$ is a specifying set for $h$ on $\U_{m}$ with respect to $\C[\U_{m}]$,
and in particular, must be a \emph{minimal} such specifying set, since $|S\cup\{x_{m}\}| = |S|+1 \leq \TD(h,\C[\U_{m}],\U_{m})$.
Therefore, Lemma~\ref{lem:spec-star-set} implies that $S\cup\{x_{m}\}$ is a star set for $\C \cup \{h\}$ centered at $h$.
If $h \in \C$, this already implies that $|S \cup \{x_{m}\}| \leq \s$; 
furthermore, we can argue that this remains the case even if $h \notin \C$, as follows.
Since $x_{m} \in \DIS(V_{S,h})$, we have $V_{S\cup\{x_{m}\},h} \neq \emptyset$, so that $\exists g_{0} \in \C$ 
such that $\forall x \in S \cup \{x_{m}\}$, $g_{0}(x) = h(x)$.
Therefore, $S \cup \{x_{m}\}$ is also a star set for $\C$ centered at $g_{0}$, so that $|S \cup \{x_{m}\}| \leq \s$.
In particular, since $S \cup \{x_{m}\}$ is a minimal specifying set for $h$ on $\U_{m}$ with respect to $\C[\U_{m}]$, 
we have $|S \cup \{x_{m}\}| = \TD(h,\C[\U_{m}],\U_{m})$, so that $\TD(h,\C[\U_{m}],\U_{m}) \leq \s$ in this case as well.
Thus, in either case, we have $\TD(h,\C[\U_{m}],\U_{m}) \leq \s$.
Maximizing over the choice of $h$ and $\{x_{1},\ldots,x_{m}\}$, we have $\XTD(\C,m) \leq \s$,
which completes the inductive step.
The result now follows by the principle of induction.
\end{proof}

Next, we prove Theorem~\ref{thm:xptd}.

\begin{proof}[of Theorem~\ref{thm:xptd}]
Fix any $m \in \nats$ and $\conf \in [0,1]$.
Let $\{x_{i}\}_{i=1}^{\s}$ and $\{h_{i}\}_{i=0}^{\s}$ be as in Definition~\ref{def:star}, 
and let $\U = \{x_{1},\ldots,x_{\min\{\s,m\}}\}$ and $\G = \{h_{i} : i \in \{0,\ldots,\min\{\s,m\}\}$.  
As in the proof of Theorem~\ref{thm:xtd}, for any positive integer $i \leq \min\{\s,m\}$, 
any subsequence $S \subseteq \U$ with $x_{i} \notin S$ has $\{h_{0},h_{i}\} \subseteq V_{S,h_{0}}$.
Thus, since $x_{i} \in \U$ for every $i \leq \min\{\s,m\}$, and every $h_{i}$ realizes a distinct classification of $\U$ ($i \leq \min\{\s,m\}$),
we have $|V_{S,h_{0}} \cap \G[\U]| \geq |\{i \in \{1,\ldots,\min\{\s,m\} \} : x_{i} \notin S\}| + 1 \geq \min\{\s,m\}-|S|+1$. 
In particular, to have $|V_{S,h_{0}} \cap \G[\U]| \leq \conf |\G[\U]|+1 = \conf (\min\{\s,m\}+1) + 1$, 
we must have $|S| \geq (1-\conf)\min\{\s,m\} - \conf$.
Therefore, $\XPTD(h_{0},\G[\U],\U,\conf) \geq (1-\conf) \min\{\s,m\}-\conf$.
By definition of $\XPTD(\H,m,\conf)$ and the fact that $\G \subseteq \C$, 
and since $t \mapsto \XPTD(\H,t,\conf)$ is nondecreasing
(since $\forall S \in \X^{t}$, $\forall x \in S$, $\forall h$, $\XPTD(h,\H[S\cup\{x\}],S\cup\{x\},\conf) = \XPTD(h,\H[S],S,\conf)$),
this further implies 
\begin{align*}
\max_{\H \subseteq \C} \XPTD(\H,m,\conf) 
& \geq \XPTD(\G,m,\conf) 
\geq \XPTD(\G,\min\{\s,m\},\conf) 
\\ & \geq \XPTD(h_{0},\G[\U],\U,\conf) 
\geq (1-\conf) \min\{\s,m\} - \conf 
\geq (1\!-\!2\conf)\min\{\s,m\},
\end{align*}
where this last inequality is due to the assumption that $|\C| \geq 3$ (Section~\ref{sec:definitions}), which implies $\s \geq 1$.
Since $\XPTD(\cdot,m,\conf) \in \nats \cup \{0\}$, this further implies $\max_{\H \subseteq \C} \XPTD(\H,m,\conf) \geq \lceil (1-2\conf) \min\{\s,m\} \rceil$
when $\conf \leq 1/2$.

To establish the right inequality, 
fix any $\H \subseteq \C$,
let $\U \in \X^{m}$ and $h : \X \to \Y$ be such that $\XPTD(h,\H[\U],\U,\conf) = \XPTD(\H,m,\conf)$,
and let $S \subseteq \U$ be a minimal specifying set for $h$ on $\U$ with respect to $\H[\U]$.
If $\conf = 0$ or $|S| < \frac{1+\conf}{\conf}$, then $|S|-1 < \left( 1 - \frac{\conf}{1+\conf} \right) |S| \leq |S|$,
so that $\XPTD(h,\H[\U],\U,\conf) \leq |S| = \left\lceil \left( 1 - \frac{\conf}{1+\conf} \right) |S| \right\rceil$.
Otherwise, suppose $\conf > 0$ and $|S| \geq \frac{1+\conf}{\conf}$, and 
let $k = \left\lfloor |S| / \left\lfloor \frac{\conf}{1+\conf} |S| \right\rfloor \right\rfloor$, and note that $k \geq 1$.
Let $R_{1},\ldots,R_{k}$ denote disjoint subsequences of $S$ with each $|R_{i}| = \left\lfloor \frac{\conf}{1+\conf} |S| \right\rfloor$,
which must exist since minimality of $S$ guarantees that its elements are distinct.
Note that, for each $i \in \{1,\ldots,k\}$, $(V_{S \setminus R_{i}, h} \setminus V_{S,h}) \cap \H[\U]$ is the set of 
classifiers $g$ in $\H[\U]$ with 
$\DIS(\{g,h\}) \cap (S \setminus R_{i}) = \emptyset$ but $\DIS(\{g,h\}) \cap R_{i} \neq \emptyset$;
in particular, for any $i,j \in \{1,\ldots,k\}$ with $i \neq j$, 
since $R_{j} \subseteq S \setminus R_{i}$ and $R_{i} \subseteq S \setminus R_{j}$,
$(V_{S \setminus R_{i}, h} \setminus V_{S,h}) \cap \H[\U]$ and $(V_{S \setminus R_{j},h} \setminus V_{S,h}) \cap \H[\U]$ are disjoint.
Thus, since $\H[\U] \supseteq (V_{S,h} \cap \H[\U]) \cup \bigcup_{i=1}^{k} (V_{S \setminus R_{i},h} \setminus V_{S,h}) \cap \H[\U]$,
we have
\begin{align*}
|\H[\U]| 
& \geq \left| (V_{S,h} \cap \H[\U]) \cup \bigcup_{i=1}^{k} (V_{S \setminus R_{i},h} \setminus V_{S,h}) \cap \H[\U] \right| &&
\\ & = |V_{S,h} \cap \H[\U]| + \sum_{i=1}^{k} \left| (V_{S \setminus R_{i},h} \setminus V_{S,h}) \cap \H[\U] \right|
&& \!\!\!\!\!\!\geq \sum_{i=1}^{k} \left| (V_{S \setminus R_{i},h} \setminus V_{S,h}) \cap \H[\U] \right|
\\ &&& \!\!\!\!\!\!\geq k \min_{i \in \{1,\ldots,k\}} \left| (V_{S \setminus R_{i},h} \setminus V_{S,h}) \cap \H[\U] \right|.
\end{align*}
Thus, letting $i^{*} = \argmin_{i \in \{1,\ldots,k\}} \left| (V_{S \setminus R_{i},h} \setminus V_{S,h}) \cap \H[\U] \right|$, 
we have $\left| (V_{S \setminus R_{i^{*}},h} \setminus V_{S,h}) \cap \H[\U] \right|$ $\leq \frac{1}{k} |\H[\U]|$.
Furthermore, since $S$ is a specifying set for $h$ on $\U$ with respect to $\H[\U]$, 
$|V_{S,h} \cap \H[\U]| \leq 1$, so that (since $V_{S,h} \subseteq V_{S \setminus R_{i^{*}},h}$)
\begin{align*}
\left| V_{S \setminus R_{i^{*}},h} \cap \H[\U] \right| 
& = \left| \left( \left(V_{S \setminus R_{i^{*}},h} \setminus V_{S,h}\right) \cap \H[\U]\right) \cup \left( V_{S,h} \cap \H[\U] \right) \right|
\\ & = \left| \left(V_{S \setminus R_{i^{*}},h} \setminus V_{S,h}\right) \cap \H[\U] \right| + \left| V_{S,h} \cap \H[\U] \right|
\leq \frac{1}{k} |\H[\U]| + 1.
\end{align*}
Also, since 
\begin{equation*}
\frac{1}{k} 
\leq \frac{1}{ \left\lfloor \frac{1+\conf}{\conf} \right\rfloor }
\leq \frac{1}{ \frac{1+\conf}{\conf} - 1 }
= \conf,
\end{equation*}
this implies $| V_{S \setminus R_{i^{*}},h} \cap \H[\U] | \leq \conf |\H[\U]| + 1$,
so that $\XPTD(h,\H[\U],\U,\conf) \leq |S \setminus R_{i^{*}}|$.
Furthermore, since $R_{i^{*}} \subseteq S$, 
$|S \setminus R_{i^{*}}| = |S| - |R_{i^{*}}| = |S| - \left\lfloor \frac{\conf}{1+\conf} |S| \right\rfloor = \left\lceil \left( 1 - \frac{\conf}{1+\conf} \right) |S| \right\rceil$.

Thus, for any $\conf \in [0,1]$ and regardless of the size of $|S|$, we have
$\XPTD(h,\H[\U],\U,\conf) \leq \left\lceil \left( 1 - \frac{\conf}{1+\conf} \right) |S| \right\rceil$.
Furthermore, since $S$ is a minimal specifying set for $h$ on $\U$ with respect to $\H[\U]$, 
we have $|S| \leq \XTD(\H,m) \leq \XTD(\C,m)$, and Theorem~\ref{thm:xtd} implies $\XTD(\C,m) = \min\{\s,m\}$.
Therefore, $\XPTD(h,\H[\U],\U,\conf) \leq \left\lceil \left( 1 - \frac{\conf}{1+\conf} \right) \min\{\s,m\} \right\rceil$.
Maximizing the left hand side over the choice of $h$, $\H$, and $\U$ completes the proof.
\end{proof}

\subsection{The Doubling Dimension}
\label{app:doubling}

We now present the proof of Theorem~\ref{thm:dd-star}

\begin{proof}[of Theorem~\ref{thm:dd-star}]
For the lower bound, fix any $\eps \in (0,1]$, and
take $\{x_{i}\}_{i=1}^{\s}$ and $\{h_{i}\}_{i=0}^{\s}$ as in Definition~\ref{def:star},
and let $m = \s \land \left\lfloor \frac{1}{\eps} \right\rfloor$.  Let $\Px$ be a probability measure on $\X$
with $\Px(\{x_{i}\}) = 1/m$ for each $i \in \{1,\ldots,m\}$.  Thus, $\{h_{0},h_{1},\ldots,h_{m}\} \subseteq \Ball_{\Px}(h_{0},1/m)$.
Furthermore, for any $i \in \{0,\ldots,m\}$ and any classifier $g$ with $\Px(x : g(x) \neq h_{i}(x)) \leq 1/(2m)$,
we must have $g(x_{j}) = h_{i}(x_{j})$ for every $j \in \{1,\ldots,m\}$.
Therefore, any $\frac{1}{2m}$-cover of $\Ball_{\Px}(h_{0},1/m)$ must contain classifiers 
$g_{0},\ldots,g_{m}$ with $\forall i \in \{0,\ldots,m\}$, $\forall j \in \{1,\ldots,m\}$, $g_{i}(x_{j}) = h_{i}(x_{j})$.
Thus, since each $h_{i}$ (with $i \leq m$) realizes a distinct classification of $\{x_{1},\ldots,x_{m}\}$, 
it follows that $\covering(1/(2m),\Ball_{\Px}(h_{0},1/m),\Px) \geq m+1$.
Noting that $1/m \geq \eps$, we have that
\begin{equation*}
\sup_{P} \sup_{h \in \C} \dd_{h,P}(\eps) \!\geq\! \dd_{h_{0},\Px}(\eps) \!\geq\! \log_{2}\!\left( \covering\!\left( \frac{1}{2m}, \Ball_{\Px}\!\left(h_{0},\frac{1}{m}\right)\!, \Px \right) \right)\! \geq\! \log_{2}(m+1) \!\geq\! \log_{2}\!\left( \s \land \frac{1}{\eps} \right)\!.
\end{equation*}

For the remaining term in the lower bound (i.e., $\vc$), we modify an argument of \citet*[][Proposition 3]{kulkarni:89}.
If $\vc < 5$, then $\vc \lesssim \Log\left( \s \land \frac{1}{\eps} \right)$, so that the lower bound follows from the above.
Otherwise, suppose $\vc \geq 5$.
We first let $\{x_{1}^{\prime},\ldots,x_{\vc}^{\prime}\}$ denote a set of $\vc$ points in $\X$ shattered by $\C$,
and we let $G$ denote the set of classifiers $g \in \C[ \{x_{1}^{\prime},\ldots,x_{\vc}^{\prime}\} ]$ with $g(x_{\vc}^{\prime}) = -1$ 
and $\sum_{i=1}^{\vc-1} \ind[ g(x_{i}^{\prime}) = +1 ] = \left\lfloor \frac{\vc-1}{4} \right\rfloor$.
For any $g \in G$, note that, if $H$ is a classifier sampled uniformly at random from $G$, a Chernoff bound (for sampling without replacement) implies 
\begin{equation*}
\P\left( \sum_{i=1}^{\vc-1} \ind[ H(x_{i}^{\prime}) = g(x_{i}^{\prime}) ] \geq \frac{\vc-1}{8} \right) 
\leq \exp\left\{ - \frac{\vc-1}{48} \right\}.
\end{equation*}
Thus, there are at most $|G| \exp\left\{ - \frac{\vc-1}{48} \right\}$
elements $h \in G$ with $\sum_{i=1}^{\vc-1} \ind[ h(x_{i}^{\prime}) = g(x_{i}^{\prime}) ] \geq \frac{\vc-1}{8}$.
Now take $\H_{0} = \{\}$,
and take as an inductive hypothesis that,
for some positive integer $k < 1+ \exp\left\{ \frac{\vc-1}{48} \right\}$,
there is a set $\H_{k-1} \subseteq G$ with $|\H_{k-1}| = k-1$ such that $\forall h,g \in \H_{k-1}$ with $h \neq g$, $\sum_{i=1}^{\vc-1} \ind[ h(x_{i}^{\prime}) = g(x_{i}^{\prime}) ] < \frac{\vc-1}{8}$.
Since $|\H_{k-1}| \cdot |G| \exp\left\{ - \frac{\vc-1}{48} \right\} < |G|$, $\exists g_{k} \in G$ such that $\forall h \in \H_{k-1}$, $\sum_{i=1}^{\vc-1} \ind[ h(x_{i}^{\prime}) = g_{k}(x_{i}^{\prime}) ] < \frac{\vc-1}{8}$.
Thus, defining $\H_{k} = \H_{k-1} \cup \{g_{k}\}$ extends the inductive hypothesis.
By induction, this establishes the existence of a set $\H \subseteq G$ with $|\H| \geq \exp\left\{ \frac{\vc-1}{48} \right\}$
such that $\forall h,g \in \H$ with $h \neq g$, $\sum_{i=1}^{\vc-1} \ind[ h(x_{i}^{\prime}) = g(x_{i}^{\prime}) ] < \frac{\vc-1}{8}$.
Fix any $\eps \in (0,1/4]$ and let $\Px$ denote a probability measure over $\X$ with $\Px(\{x_{i}^{\prime}\}) = \frac{4\eps}{\vc-1}$ for each $i \in \{1,\ldots,\vc-1\}$, and $\Px(\{x_{\vc}^{\prime}\}) = 1 - 4\eps$.
Note that any $h,g \in G$ with $\sum_{i=1}^{\vc-1} \ind[h(x_{i}^{\prime}) = g(x_{i}^{\prime})] < \frac{\vc-1}{8}$ have $\Px(x : h(x) \neq g(x)) > \frac{\vc-1}{4} \frac{4\eps}{\vc-1} = \eps$.
Thus, $\H$ is an \emph{$\eps$-packing} under the $L_{1}(\Px)$ pseudometric.  
Recall that this implies $|\H| \leq \covering(\eps/2, G, \Px)$ \citep*{kolmogorov:59,kolmogorov:61}.
Furthermore, note that any $g \in G$ has $\Px(x : g(x) = +1) = \left\lfloor \frac{\vc-1}{4} \right\rfloor \frac{4\eps}{\vc-1} \leq \eps$.
Thus, letting $h_{-} \in \C$ be such that $\forall i \in \{1,\ldots,\vc\}$, $h_{-}(x_{i}^{\prime}) = -1$ (which exists, by shatterability of $x_{1}^{\prime},\ldots,x_{\vc}^{\prime}$), 
we have $G \subseteq \Ball_{\Px}(h_{-},\eps)$.  Therefore, $\covering(\eps/2,G,\Px) \leq \covering(\eps/2, \Ball_{\Px}(h_{-},\eps), \Px)$.
Altogether, we have that 
\begin{equation*}
\vc \lesssim \frac{\vc-1}{48} \log_{2}(e) \leq \log_{2}( |\H| ) \leq \log_{2}\left( \covering(\eps/2, \Ball_{\Px}(h_{-},\eps), \Px) \right) \leq \dd_{h_{-},\Px}(\eps) \leq \sup_{P} \sup_{h \in \C} \dd_{h,P}(\eps).
\end{equation*}

For the upper bound, fix any $h \in \C$, any probability measure $\Px$ over $\X$, and any $\eps \in (0,1]$,
and fix any value $r \in [\eps,1]$.
Recall that any maximal subset $G_{r} \subseteq \Ball_{\Px}(h,r)$ of classifiers in $\Ball_{\Px}(h,r)$ with $\min_{f,g \in G_{r} : f \neq g} \Px(x : f(x) \neq g(x)) > r/2$
(called a maximal $(r/2)$-packing of $\Ball_{\Px}(h,r)$) is also an $(r/2)$-cover of $\Ball_{\Px}(h,r)$ \citep*[see e.g.,][]{kolmogorov:59,kolmogorov:61}.
Thus, 
we have that 
$\covering\left(\frac{r}{2},\Ball_{\Px}(h,r),\Px\right) \leq |G_{r}|$, for any such set $G_{r}$.
Let $m = \left\lceil \frac{4}{r} \ln( |G_{r}| ) \right\rceil$, and let $X_{1},X_{2},\ldots,X_{m}$ be independent $\Px$-distributed random variables.
Let $E_{1}$ denote the event that $\forall f,g \in G_{r}$ with $f \neq g$, $\exists i \in \{1,\ldots,m\}$ with $f(X_{i}) \neq g(X_{i})$.
For any $f,g \in G_{r}$ with $f \neq g$, 
$\P( \exists i \in \{1,\ldots,m\} : f(X_{i}) \neq g(X_{i}) ) 
= 1 - (1-\Px(x : f(x) \neq g(x)))^{m} 
> 1 - (1-r/2)^{m} > 1 - e^{-m r / 2} \geq 1 - 1/|G_{r}|^{2}$.
Therefore, by a union bound, $\P(E_{1}) > 1 - \binom{|G_{r}|}{2} \frac{1}{|G_{r}|^{2}} \geq \frac{1}{2}$.
In particular, note that on the event $E_{1}$, the elements of $G_{r}$ realize distinct classifications
of the sequence $(X_{1},\ldots,X_{m})$, so that (since $G_{r} \subseteq \Ball_{\Px}(h,r)$) 
$|G_{r}|$ is upper bounded by the number of distinct classifications of $(X_{1},\ldots,X_{m})$ 
realized by classifiers in $\Ball_{\Px}(h,r)$.
Furthermore, since all classifiers in $\Ball_{\Px}(h,r)$ agree on the classification of any points $X_{i} \notin \DIS(\Ball_{\Px}(h,r))$,
and $\Ball_{\Px}(h,r) \subseteq \C$,
we have that $|G_{r}|$ is upper bounded by the number of distinct classifications of $\{X_{1},\ldots,X_{m}\} \cap \DIS(\Ball_{\Px}(h,r))$
realized by classifiers in $\C$.

By a Chernoff bound, on an event $E_{2}$ of probability at least $1/2$, 
\begin{equation*}
|\{ X_{1},\ldots,X_{m} \} \cap \DIS(\Ball_{\Px}(h,r))| \leq 1 + 2 e \Px(\DIS(\Ball_{\Px}(h,r))) m.
\end{equation*}
By the definition of the disagreement coefficient, this is at most $1 + 2 e \dc_{h,\Px}(r) r m \leq 1 + 2 e+ 8 e \dc_{h,\Px}(r) \ln( |G_{r}| )$,
which, if $|G_{r}| \geq 3$, is at most $11 e \dc_{h,\Px}(r) \ln( |G_{r}| )$.
By a union bound, the
event $E_{1} \cap E_{2}$ has probability strictly greater than $0$. 
Thus, letting $m^{\prime} = \left\lceil 11 e \dc_{h,\Px}(r) \ln(|G_{r}|) \right\rceil$, there exists a sequence 
$x_{1},\ldots,x_{m^{\prime}} \in \X$ such that $|G_{r}|$ is at most the max of $2$ and the number of 
distinct classifications of $\{x_{1},\ldots,x_{m^{\prime}}\}$ realized by classifiers in $\C$.
In the case $|G_{r}| \geq 3$, this latter value 
is at most $\left( \frac{e m^{\prime}}{\vc} \right)^{\vc} \leq \left( \frac{22 e^{2} \dc_{h,\Px}(r) \ln(|G_{r}|)}{\vc} \right)^{\vc}$
by the VC-Sauer lemma \citep*{vapnik:71,sauer:72}.

Taking the logarithm, we have that
\begin{equation*}
\ln(|G_{r}|) \leq \max\left\{ \ln(2), \vc \ln\left( 22 e^{2} \dc_{h,\Px}(r) \right) + \vc \ln\left( \frac{\ln(|G_{r}|)}{\vc} \right) \right\},
\end{equation*}
which implies \citep*[see e.g.,][Corollary 4.1]{vidyasagar:03}
\begin{equation*}
\ln(|G_{r}|) 
< \max\left\{ 1, 2 \vc \ln\left(22 e^{2} \dc_{h,\Px}(r)\right)\right\} 
= 2 \vc \ln\left( 22 e^{2} \dc_{h,\Px}(r) \right).
\end{equation*}
Dividing both sides by $\ln(2)$, altogether we have that
\begin{align*}
\dd_{h,\Px}(\eps) & = \sup_{r \in [\eps,1]} \log_{2}\left( \covering\left( \frac{r}{2}, \Ball_{\Px}(h,r), \Px \right) \right)
\leq \sup_{r \in [\eps,1]} \log_{2}\left( |G_{r}| \right)
\\ & \leq \sup_{r \in [\eps,1]} 2 \vc \log_{2}\left( 22 e^{2} \dc_{h,\Px}(r) \right)
= 2 \vc \log_{2}\left( 22 e^{2} \dc_{h,\Px}(\eps) \right). 
\end{align*}
In particular, by Theorem~\ref{thm:dc-star}, this is at most $2 \vc \log_{2}\left( 22 e^{2} \left( \s \land \frac{1}{\eps} \right) \right)$,
so that maximizing the left hand side over the choice of $h \in \C$ and $\Px$ completes the proof.
\end{proof}

\section{Examples Spanning the Gaps} 
\label{app:gaps}

In this section, taking $\vc$ and $\s$ as fixed values in $\nats$ (with $\vc \geq 3$ and $\s \geq 4\vc$),
and taking $\X = \nats$,
we establish that the upper bounds in Theorems~\ref{thm:realizable}, \ref{thm:bounded}, \ref{thm:tsybakov}, and \ref{thm:benign}
are all tight (up to universal constant and logarithmic factors) when we take $\C = \{ x \mapsto 2\ind_{S}(x)-1 : S \subseteq \{1,\ldots,\s\}, |S| \leq \vc \}$,
and that the lower bounds in these theorems are all tight (up to logarithmic factors) when we take 
$\C = \{ x \mapsto 2\ind_{S}(x) - 1 : S \in 2^{\{1,\ldots,\vc\}} \cup \{ \{i\} : \vc+1 \leq i \leq \s \} \}$.
One can easily verify that, in both cases, the VC dimension is indeed $\vc$, and the star number is indeed $\s$.

\subsection{The Upper Bounds are Sometimes Tight}
\label{app:ub-tight}

We begin with the upper bounds.
In this case, 
take 
\begin{equation}
\label{eqn:C-spec-ub-tight}
\C = \{ x \mapsto 2\ind_{S}(x)-1 : S \subseteq \{1,\ldots,\s\}, |S| \leq \vc \}.
\end{equation}
For this hypothesis class, we argue that the lower bounds can be increased to match the upper bounds (up to logarithmic factors).
We begin with a general lemma.

For each $i \in \{1,\ldots,\vc\}$, let $\X_{i} = \left\{\lfloor \s / \vc \rfloor (i-1) + 1, \ldots, \lfloor \s / \vc \rfloor i \right\}$,
$\C_{i} = \{ x \mapsto 2 \ind_{\{t\}}(x) - 1 : t \in \X_{i} \} \cup \{ x \mapsto -1 \}$,
and let $\Dset_{i}$ be a finite nonempty set of probability measures $P_{i}$ on $\X \times \Y$ such that $P_{i}(\X_{i} \times \Y) = 1$ (i.e., with marginal over $\X$ supported only on $\X_{i}$).  
Let $\Dset = \left\{ \frac{1}{\vc} \sum_{i=1}^{\vc} P_{i} : \forall i \in \{1,\ldots,\vc\}, P_{i} \in \Dset_{i} \right\}$.
Note that for any choices of $P_{i} \in \Dset_{i}$ for each $i \in \{1,\ldots,\vc\}$, 
letting $P = \frac{1}{\vc} \sum_{i=1}^{\vc} P_{i}$, 
we have that $\forall i \in \{1,\ldots,\vc\}$, $\forall x \in \X_{i}$ with $P_{i}(\{x\} \times \Y) > 0$, 
\begin{align*}
P(\{(x,+1)\} | \{x\} \times \Y) 
& = \frac{P( \{ (x,+1) \} )}{P(\{x\} \times \Y)} 
= \frac{ \frac{1}{\vc} \sum_{j=1}^{\vc} P_{j}( \{ (x,+1) \} ) }{ \frac{1}{\vc} \sum_{j=1}^{\vc} P_{j}( \{x\} \times \Y ) } 
\\ & = \frac{P_{i}( \{ (x,+1) \} )}{P_{i}(\{x\} \times \Y)} 
= P_{i}(\{(x,+1)\} | \{x\} \times \Y),
\end{align*}
so that the conditional distribution of $Y$ given $X=x$ (for $(X,Y) \sim P$) is specified by the conditional of $Y^{\prime}$ given $X^{\prime} = x$ for $(X^{\prime},Y^{\prime}) \sim P_{i}$, for the value $i$ with $x \in \X_{i}$.
Furthermore, since any $x \in \X_{i}$ has $P(\{x\} \times \Y) = 0$ if and only if $P_{i}(\{x\} \times \Y) = 0$, without loss we may define $P(\{(x,+1)\}| \{x\} \times \Y) = P_{i}(\{(x,+1)\} | \{x\} \times \Y)$ for any such $x$.
For each $i \in \{1,\ldots,\vc\}$ and $\eps,\conf \in (0,1)$, let $\LC_{i}(\eps,\conf)$ denote the minimax label complexity under $\Dset_{i}$ with respect to $\C_{i}$
(i.e., the value of $\LC_{\Dset_{i}}(\eps,\conf)$ when $\C = \C_{i}$).
The value $\LC_{\Dset}(\eps,\conf)$ remains defined as usual (i.e., with respect to the set $\C$ specified in \eqref{eqn:C-spec-ub-tight}).

\begin{lemma}
\label{lem:general-gaps-reduction}
Fix any $\gamma \in (2/\vc,1)$, $\eps \in (0,\gamma/4)$, and $\conf \in \left(0,\frac{\gamma}{4-\gamma}\right)$.
If $\min\limits_{i \in \{1,\ldots,\vc\}} \LC_{i}((4/\gamma)\eps,\gamma) \geq 2$, then 
\begin{equation*}
\LC_{\Dset}(\eps,\conf) \geq (\gamma/4) \vc \min_{i \in \{1,\ldots,\vc\}} \LC_{i}((4/\gamma)\eps,\gamma).
\end{equation*}
\end{lemma}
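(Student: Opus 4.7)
The plan is a Yao-style reduction. Suppose toward contradiction that some active learner $\alg$ achieves $(n,\eps,\conf)$-PAC learning on $\Dset$ with $n < (\gamma/4)\vc m$, where $m := \min_i \LC_i((4/\gamma)\eps,\gamma)$. For any $(P_j)_{j=1}^{\vc}\in\prod_j\Dset_j$ the mixture $P := \tfrac{1}{\vc}\sum_j P_j$ lies in $\Dset$, and a $P$-distributed unlabeled sample can be generated by first drawing a block index $j\in\{1,\ldots,\vc\}$ uniformly and then sampling $X$ from the marginal of $P_j$. Label queries for points in block $j$ can therefore be served by a $P_j$-oracle alone. Let $Q_j$ denote the (random) number of label requests $\alg$ makes within block $j$, so $\sum_j Q_j\leq n$ deterministically.

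The key decomposition is $\er_P(\hat h) = \tfrac{1}{\vc}\sum_j\er_{P_j}(\hat h)$, while $\inf_{h\in\C}\er_P(h)\leq \tfrac{1}{\vc}\sum_j\inf_{g\in\C_j}\er_{P_j}(g)$ because concatenating one-positive-per-block choices from sub-problem optima yields a classifier in $\C$ (since $|S|\leq\vc$). On the success event of $\alg$ (probability $\geq 1-\conf$) this gives $\sum_j\Delta_j\leq \vc\eps$ with the nonnegative gaps $\Delta_j := \er_{P_j}(\hat h)-\inf_{g\in\C_j}\er_{P_j}(g)$. Two applications of Markov's inequality then produce $\sum_j\P(\Delta_j>(4/\gamma)\eps)\leq \gamma\vc/4+\vc\conf$ and $\sum_j\P(Q_j\geq m)\leq n/m$.

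To convert these averaged bounds into a single index working uniformly over $\Dset_{i^*}$, I would invoke Yao's minimax: for each $i$, pick a prior $\mu_i^*$ on the (finite) set $\Dset_i$ witnessing $\LC_i((4/\gamma)\eps,\gamma)\geq m$, so that every active learner of budget $m-1$ has $\E_{\mu_i^*}[\text{fail}] > \gamma$. Running $\alg$ with $P$ drawn from the product prior $\mu := \prod_i\mu_i^*$ and averaging the two Markov bounds over $j$ yields
\[
\tfrac{1}{\vc}\sum_{j=1}^{\vc}\P_\mu\!\left(\Delta_j>(4/\gamma)\eps\ \text{or}\ Q_j\geq m\right) \leq \gamma/4+\conf+n/(\vc m) < \gamma/2+\conf < \gamma,
\]
using $n<(\gamma/4)\vc m$ and $\conf<\gamma/(4-\gamma)<\gamma/2$. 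Hence some block $i^*$ has the inner event with probability exceeding $1-\gamma$ under $\mu$. Marginalizing out $(P_j)_{j\neq i^*}\sim\prod_{j\neq i^*}\mu_j^*$ and selecting a favorable realization turns the truncation of $\alg$ at $m-1$ queries to the $P_{i^*}$-oracle (outputting a default classifier if truncation triggers) into an active learner for $\Dset_{i^*}$ whose expected failure under $P_{i^*}\sim\mu_{i^*}^*$ is strictly less than $\gamma$, contradicting the defining property of $\mu_{i^*}^*$ and forcing $n\geq (\gamma/4)\vc m$.

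The main obstacle is securing $\Delta_j\geq 0$ in the decomposition. Because $|S|\leq\vc$ but nothing a priori forbids a classifier in $\C$ from placing several positives in one block (at the cost of leaving another block empty), the restriction $\hat h|_{\X_j}$ can lie outside $\C_j$ and beat every element of $\C_j$ on $P_j$, making $\Delta_j$ negative and destroying Markov control over $\sum_j\Delta_j^+$. The remedy is to apply the lemma only to sub-problem families $\Dset_j$ whose Bayes-optimal classifier already lies in $\C_j$ (equivalently, each $P_j$ has at most one $x\in\X_j$ with $\eta(x;P_j)>1/2$), so that $\er_{P_j}(\hat h)\geq \inf_{g\in\C_j}\er_{P_j}(g)$ for every classifier $\hat h$. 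This is precisely the structural property built into each of the noise models $\RE$, $\BN(\bound)$, $\TN(\tsybca,\tsyba)$, and $\BE(\nu)$ for which this lemma is invoked in Appendix~\ref{app:gaps}.
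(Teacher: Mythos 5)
Your proof follows the paper's in its essential structure: a block-structured reduction from $\Dset$ to the sub-problems, Markov's inequality applied twice (once on excess errors to bound the number of ``bad'' blocks, once on the per-block query counts), a minimax construction of hard priors, and a product-prior combination. The one detail where you diverge is in the minimax step: you invoke Yao's lemma on the game between budget-$(m-1)$ algorithms and $\Dset_i$, whereas the paper fixes the algorithm $\alg$ throughout and applies the minimax theorem only to the finite two-prior game between distributions over $\Dset_i$ and distributions over $\prod_{j\neq i}\Dset_j$; the latter packaging sidesteps any regularity concerns on the algorithm side, and it also explains why the correct conclusion is a witness achieving failure at least $\gamma$, not the strict ``$>\gamma$'' you asserted (an infimum over algorithms need not be attained or strict) --- harmless here, since your final contradiction only uses the non-strict bound.

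Your observation about $\Delta_j\geq 0$ is correct and worth emphasizing: the paper's comparison
\[
\frac{1}{\vc}\sum_{i=1}^{\vc}\left(\er_{P_i}\!\left(\hat h_P\right)-\er_{P_i}\!\left(g_i^*\right)\right) \;\geq\; \frac{1}{\vc}\sum_{i=1}^{\vc}\ind\!\left[\er_{P_i}\!\left(\hat h_P\right)-\er_{P_i}\!\left(g_i^*\right)>(4/\gamma)\eps\right](4/\gamma)\eps
\]
is term-by-term and requires the summands with vanishing indicator to be nonnegative, i.e.\ requires that no classifier whatsoever (whether in $\C$, in $\C_i$, or neither) beats $g_i^*$ on $P_i$, which in turn requires $\target_{P_i}\in\C_i$. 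This is not stated as a hypothesis of the lemma, but it holds in every application in Appendix~\ref{app:ub-tight}, where each $\Dset_i$ is constructed from $\RE$ or from $\RR(k,\zeta,\bound)$ with respect to $\C_i$, both of which place the Bayes-optimal classifier in $\C_i$ by construction; making the requirement explicit, as you propose, would tighten the lemma's statement without affecting any of its uses.
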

\begin{proof}
Fix any $n \in \nats$ with $n < (\gamma/4) \vc \min_{i \in \{1,\ldots,\vc\}} \LC_{i}((4/\gamma)\eps,\gamma)$.
Denote $n^{\prime} = \left\lceil \frac{n}{(\gamma/2) \vc} \right\rceil$, and note that $n^{\prime} \leq n$ and $n^{\prime} < \min_{i \in \{1,\ldots,\vc\}} \LC_{i}((4/\gamma)\eps,\gamma)$.
For each $i \in \{1,\ldots,\vc\}$, let $P_{i} \in \Dset_{i}$, and denote $g_{i}^{*} = \argmin_{g \in \C_{i}} \er_{P_{i}}(g)$ (breaking ties arbitrarily).
We will later optimize over the choice of these $P_{i}$.
Also let $g^{*} = \sum_{i=1}^{\vc} g_{i}^{*} \ind_{\X_{i}}$, the classifier that predicts with $g_{i}^{*}$ on each respective $\X_{i}$ set;
note that, since each $g_{i}^{*}$ classifies at most one point as $+1$, we have $g^{*} \in \C$.
Denote $P = \frac{1}{\vc} \sum_{i=1}^{\vc} P_{i}$.
Let $\hat{h}_{P}$ denote the (random) classifier produced by $\alg(n)$ when $\PXY = P$.
Note that if $\sum_{i=1}^{\vc} \ind\left[ \er_{P_{i}}\left(\hat{h}_{P}\right) - \er_{P_{i}}\left(g_{i}^{*}\right) > (4/\gamma) \eps \right] > (\gamma/4) \vc$, 
then 
\begin{align*}
\er_{P}\left(\hat{h}_{P}\right) - \inf_{h \in \C} \er_{P}(h)
& = \frac{1}{\vc} \sum_{i=1}^{\vc} \er_{P_{i}}\left(\hat{h}_{P}\right) - \inf_{h \in \C} \frac{1}{\vc} \sum_{i=1}^{\vc} \er_{P_{i}}(h)
\\ & \geq \frac{1}{\vc} \sum_{i=1}^{\vc} \er_{P_{i}}\left(\hat{h}_{P}\right) - \frac{1}{\vc} \sum_{i=1}^{\vc} \er_{P_{i}}\left( g^{*} \right)
= \frac{1}{\vc} \sum_{i=1}^{\vc} \left( \er_{P_{i}}\left(\hat{h}_{P}\right) - \er_{P_{i}}\left( g_{i}^{*} \right) \right)
\\ & \geq \frac{1}{\vc} \sum_{i=1}^{\vc} \ind\left[ \er_{P_{i}}\left(\hat{h}_{P}\right) - \er_{P_{i}}\left( g_{i}^{*} \right) > (4/\gamma) \eps \right] (4/\gamma) \eps 
> \eps.
\end{align*}
Therefore, 
\begin{align}
& \P\left( \er_{P}\left(\hat{h}_{P}\right) - \inf_{h \in \C} \er_{P}(h) > \eps \right) 
\geq \P\left( \sum_{i=1}^{\vc} \ind\left[ \er_{P_{i}}\left( \hat{h}_{P} \right) - \er_{P_{i}}\left( g_{i}^{*} \right) > (4/\gamma) \eps \right] > (\gamma/4) \vc \right) \notag
\\ & = 1 - \P\left( \sum_{i=1}^{\vc} \ind\left[ \er_{P_{i}}\left( \hat{h}_{P} \right) - \er_{P_{i}}\left( g_{i}^{*} \right) > (4/\gamma) \eps \right] \leq (\gamma/4) \vc \right) \notag
\\ & = 1 - \P\left( \sum_{i=1}^{\vc} \left(1-\ind\left[ \er_{P_{i}}\left( \hat{h}_{P} \right) - \er_{P_{i}}\left( g_{i}^{*} \right) > (4/\gamma) \eps \right]\right) \geq  (1-\gamma/4) \vc \right) \notag
\\ & \geq 1 - \frac{1}{(1-\gamma/4) \vc} \sum_{i=1}^{\vc} \left(1 - \P\left( \er_{P_{i}}\left( \hat{h}_{P} \right) - \er_{P_{i}}\left( g_{i}^{*} \right) > (4/\gamma) \eps \right) \right) \notag
\\ & = - \frac{\gamma}{4-\gamma} + \frac{4}{4-\gamma} \frac{1}{\vc} \sum_{i=1}^{\vc} \P\left( \er_{P_{i}}\left( \hat{h}_{P} \right) - \er_{P_{i}}\left( g_{i}^{*} \right) > (4/\gamma) \eps \right), \label{eqn:ub-tightness-avg-step}
\end{align}
where the second inequality is due to Markov's inequality and linearity of expectations.

Now note that there is a simple reduction from the problem of learning with $\C_{i}$ under $P_{i}$ to the problem of learning with $\C$ under $P$.
Specifically, for a given i.i.d. $P_{i}$-distributed sequence $(X_{i1},Y_{i1}),(X_{i2},Y_{i2}),\ldots$, 
we can construct an i.i.d. $P$-distributed sequence $(X_{1}^{\prime},Y_{1}^{\prime}),(X_{2}^{\prime},Y_{2}^{\prime}),\ldots$ as follows.
For each $j \in \{1,\ldots,\vc\} \setminus \{i\}$, let $(X_{j1},Y_{j1}),(X_{j2},Y_{j2}),\ldots \sim P_{j}$ be i.i.d., and independent over $j$, and all independent from the $(X_{it},Y_{it})$ sequence.
Let $j_{1},j_{2},\ldots$ be independent ${\rm Uniform}(\{1,\ldots,\vc\})$ random variables (also independent from the above sequences).
Then for each $t \in \nats$, let $r_{t} = \sum_{s = 1}^{t} \ind[j_{s} = j_{t}]$, and define $(X_{t}^{\prime},Y_{t}^{\prime}) = (X_{j_{t} r_{t}}, Y_{j_{t} r_{t}})$.
One can easily verify that this these $(X_{t}^{\prime},Y_{t}^{\prime})$ are independent and $P$-distributed.
Now we can construct an active learning algorithm for the problem of learning with $\C_{i}$ under $P_{i}$, given the budget $n^{\prime} \leq n$, as follows.
We execute the algorithm $\alg(n)$.  If at any time it requests the label $Y_{t}^{\prime}$ of some $X_{t}^{\prime}$ in the sequence such that $j_{t} \neq i$, 
then we simply use the value $Y_{t}^{\prime} = Y_{j_{t} r_{t}}$ (which, for the purpose of this reduction, is considered an accessible quantity).
Otherwise, if $\alg(n)$ requests the label $Y_{t}^{\prime}$ of some $X_{t}^{\prime}$ in the sequence such that $j_{t} = i$, then 
our algorithm will request the label $Y_{i r_{t}}$ and provide that as the value of $Y_{t}^{\prime}$ to be used in the execution of $\alg(n)$.
If at any time $\alg(n)$ has already requested $n^{\prime}$ labels $Y_{t}^{\prime}$ such that $j_{t} = i$, and attempts to request another label $Y_{t}^{\prime}$ with $j_{t} = i$, 
our algorithm simply returns an arbitrary classifier, and this is considered a ``failure'' event.  Otherwise, upon termination of $\alg(n)$, our algorithm halts and 
returns the classifier $\alg(n)$ produces.
Note that this is a valid active learning algorithm for the problem of learning $\C_{i}$ under $P_{i}$ with budget $n^{\prime}$,
since the algorithm requests at most $n^{\prime}$ labels from the $P_{i}$-distributed sequence.
In particular, in this reduction, we are thinking of the samples $(X_{t}^{\prime},Y_{t}^{\prime})$ with $j_{t} \neq i$ as simply part of the 
internal randomness of the learning algorithm.

Let $\hat{h}_{P,i}^{\prime}$ denote the classifier returned by the algorithm constructed via this reduction.
Furthermore, if we consider also the classifier $\hat{h}_{P,i}$ returned by $\alg(n)$ when run (unmodified) on the 
$P$-distributed sequence $(X_{1}^{\prime},Y_{1}^{\prime}),(X_{2}^{\prime},Y_{2}^{\prime}),\ldots$, and denote by $n_{P,i}^{\prime}$ the number of labels $Y_{t}^{\prime}$ 
with $j_{t} = i$ that this unmodified $\alg(n)$ requests, then on the event that $n_{P,i}^{\prime} \leq n^{\prime}$, we have
$\hat{h}_{P,i}^{\prime} = \hat{h}_{P,i}$.
Additionally, let $n_{P,i}$ denote the number of labels $Y_{t}$ requested by $\alg(n)$ with $X_{t} \in \X_{i}$ (when $\alg(n)$ is run with the sequence $\{(X_{t},Y_{t})\}_{t=1}^{\infty}$),
and note that the sequences $\{(X_{t}^{\prime},Y_{t}^{\prime})\}_{t=1}^{\infty}$ and $\{(X_{t},Y_{t})\}_{t=1}^{\infty}$ are distributionally equivalent,
so that $(\hat{h}_{P,i},n_{P,i}^{\prime})$ and $(\hat{h}_{P},n_{P,i})$ are distributionally equivalent as well.
Therefore,
\begin{align*}
& \P\left( \er_{P_{i}}\left( \hat{h}_{P} \right) - \er_{P_{i}}\left( g_{i}^{*} \right) > (4/\gamma) \eps \right)
\geq \P\left( \er_{P_{i}}\left( \hat{h}_{P} \right) - \er_{P_{i}}\left( g_{i}^{*} \right) > (4/\gamma) \eps \text{ and } n_{P,i} \leq n^{\prime} \right)
\\ & = \P\left( \er_{P_{i}}\left( \hat{h}_{P,i} \right) - \er_{P_{i}}\left( g_{i}^{*} \right) > (4/\gamma) \eps \text{ and } n_{P,i}^{\prime} \leq n^{\prime} \right)
\\ & = \P\left( \er_{P_{i}}\left( \hat{h}_{P,i}^{\prime} \right) - \er_{P_{i}}\left( g_{i}^{*} \right) > (4/\gamma) \eps \text{ and } n_{P,i}^{\prime} \leq n^{\prime} \right)
\\ & = \P\left( \er_{P_{i}}\left( \hat{h}_{P,i}^{\prime} \right) - \er_{P_{i}}\left( g_{i}^{*} \right) > (4/\gamma) \eps \right) - \P\left( \er_{P_{i}}\left( \hat{h}_{P,i}^{\prime} \right) - \er_{P_{i}}\left( g_{i}^{*} \right) > (4/\gamma) \eps \text{ and } n_{P,i}^{\prime} > n^{\prime} \right)
\\ & \geq \P\left( \er_{P_{i}}\left( \hat{h}_{P,i}^{\prime} \right) - \er_{P_{i}}\left( g_{i}^{*} \right) > (4/\gamma) \eps \right) - \P\left( n_{P,i}^{\prime} > n^{\prime} \right)
\\ & = \P\left( \er_{P_{i}}\left( \hat{h}_{P,i}^{\prime} \right) - \er_{P_{i}}\left( g_{i}^{*} \right) > (4/\gamma) \eps \right) - \P\left( n_{P,i} > n^{\prime} \right)
\\ & \geq \P\left( \er_{P_{i}}\left( \hat{h}_{P,i}^{\prime} \right) - \er_{P_{i}}\left( g_{i}^{*} \right) > (4/\gamma) \eps \right) - \frac{\E[ n_{P,i} ]}{n^{\prime}},
\end{align*}
where this last inequality is due to Markov's inequality.

Applying this to every $i \in \{1,\ldots,\vc\}$, this implies
\begin{multline*}
\frac{1}{\vc} \sum_{i=1}^{\vc} \P\left( \er_{P_{i}}\left( \hat{h}_{P} \right) - \er_{P_{i}}\left( g_{i}^{*} \right) > (4/\gamma) \eps \right)
\\ \geq - \frac{1}{\vc n^{\prime}} \sum_{i=1}^{\vc} \E[ n_{P,i} ] + \frac{1}{\vc} \sum_{i=1}^{\vc} \P\left( \er_{P_{i}}\left( \hat{h}_{P,i}^{\prime} \right) - \er_{P_{i}}\left( g_{i}^{*} \right) > (4/\gamma) \eps \right).
\end{multline*}
By linearity of the expectation, $\frac{1}{\vc n^{\prime}} \sum_{i=1}^{\vc} \E[ n_{P,i} ] = \frac{1}{\vc n^{\prime}} \E\left[ \sum_{i=1}^{\vc} n_{P,i} \right] \leq \frac{n}{\vc n^{\prime}} \leq \frac{\gamma}{2}$,
so that the above is at least
\begin{equation*}
- \frac{\gamma}{2} + \frac{1}{\vc} \sum_{i=1}^{\vc} \P\left( \er_{P_{i}}\left( \hat{h}_{P,i}^{\prime} \right) - \er_{P_{i}}\left( g_{i}^{*} \right) > (4/\gamma) \eps \right).
\end{equation*}
Plugging this into \eqref{eqn:ub-tightness-avg-step}, we have that
\begin{equation*}
\P\!\left( \er_{P}\!\left(\hat{h}_{P}\right) - \inf_{h \in \C} \er_{P}(h) > \eps \right) 
\geq - \frac{3\gamma}{4-\gamma} + \frac{4}{4-\gamma} \frac{1}{\vc} \sum_{i=1}^{\vc} \P\!\left( \er_{P_{i}}\!\left( \hat{h}_{P,i}^{\prime} \right) - \er_{P_{i}}\!\left( g_{i}^{*} \right) > (4/\gamma) \eps \right)\!.
\end{equation*}

The above strategy, producing $\hat{h}_{P,i}^{\prime}$, is a valid active learning algorithm (with budget $n^{\prime}$) for any choices of the probability measures $P_{j}$, $j \in \{1,\ldots,\vc\} \setminus \{i\}$.
We may therefore consider its behavior if we choose these at random.  Specifically, for any probability measure $\Pi^{\backslash i}$ over $\times_{j \neq i} \Dset_{j}$,
let $\{\tilde{P}_{j, \Pi^{\backslash i}}\}_{j \neq i} \sim \Pi^{\backslash i}$, and for any $P_{i} \in \Dset_{i}$, let $\tilde{P}_{\Pi^{\backslash i},P_{i}} = \frac{1}{\vc} P_{i} + \frac{1}{\vc} \sum_{j \neq i} \tilde{P}_{j,\Pi^{\backslash i}}$.
Then $\hat{h}_{\tilde{P}_{\Pi^{\backslash i},P_{i}},i}^{\prime}$ is the output of a valid active learning algorithm (with budget $n^{\prime}$); in particular, here we are considering
the $\tilde{P}_{j, \Pi^{\backslash i}}$ as internal random variables to the algorithm (along with their corresponding $(X_{jt},Y_{jt})$ samples used in the algorithm, which are now
considered conditionally independent given $\{\tilde{P}_{j,\Pi^{\backslash i}}\}_{j \neq i}$, where each $(X_{jt},Y_{jt})$ has conditional distribution $\tilde{P}_{j, \Pi^{\backslash i}}$):
that is, random variables that are independent from the data sequence $(X_{i1},Y_{i1}),(X_{i2},Y_{i2}),\ldots$.
Now note that,
since $n^{\prime} < \LC_{i}((4/\gamma)\eps,\gamma)$, 
\begin{equation}
\label{eqn:equilibrium-value-lb}
\max_{P_{i} \in \Dset_{i}} \P\left( \er_{P_{i}}\left( \hat{h}_{\tilde{P}_{\Pi^{\backslash i},P_{i}},i}^{\prime} \right) - \inf_{g \in \C_{i}} \er_{P_{i}}( g ) > (4/\gamma) \eps \right) > \gamma.
\end{equation}

For any given sequence $P_{1},\ldots,P_{\vc}$, with $P_{j} \in \Dset_{i}$ for each $j \in \{1,\ldots,\vc\}$,
for every $i \in \{1,\ldots,\vc\}$, denote 
$\psi_{i}(P_{i},\{P_{j}\}_{j \neq i}) = \P\left( \er_{P_{i}}\left( \hat{h}_{P,i}^{\prime} \right) - \inf_{g \in \C_{i}} \er_{P_{i}}( g ) > (4/\gamma) \eps \right)$,
where $P = \frac{1}{\vc} \sum_{j=1}^{\vc} P_{j}$ as above.
Then, by the law of total probability, \eqref{eqn:equilibrium-value-lb} may be restated as 
\begin{equation*}
\max_{P_{i} \in \Dset_{i}} \E\left[ \psi_{i}\left( P_{i}, \left\{ \tilde{P}_{j,\Pi^{\backslash i}} \right\}_{j \neq i} \right) \right] > \gamma.
\end{equation*}
Since this holds for every choice of $\Pi^{\backslash i}$, we have that 
\begin{equation*}
\inf_{\Pi^{\backslash i}} \max_{P_{i} \in \Dset_{i}} \E\left[ \psi_{i}\left( P_{i}, \left\{ \tilde{P}_{j,\Pi^{\backslash i}} \right\}_{j \neq i} \right) \right] \geq \gamma.
\end{equation*}
Since each $\Dset_{j}$ is finite, 
by the minimax theorem \citep*{von-neumann:28,von-neumann:44},
for each $i \in \{1,\ldots,\vc\}$, 
there exists a probability measure $\Pi_{i}$ over $\Dset_{i}$ such that, if $\tilde{P}_{i} \sim \Pi_{i}$ (independent from every $\{ \tilde{P}_{j,\Pi^{\backslash i}} \}_{j \neq i}$),
then 
\begin{equation*}
\inf_{\Pi^{\backslash i}} \E\left[  \psi_{i}\left( \tilde{P}_{i}, \left\{ \tilde{P}_{j,\Pi^{\backslash i}} \right\}_{j \neq i} \right) \right]
= \inf_{\Pi^{\backslash i}} \max_{P_{i} \in \Dset_{i}} \E\left[ \psi_{i}\left( P_{i}, \left\{ \tilde{P}_{j,\Pi^{\backslash i}} \right\}_{j \neq i} \right) \right].
\end{equation*}
In particular, taking these $\{\tilde{P}_{i}\}_{i=1}^{\vc}$ to be independent, we have that $\forall i \in \{1,\ldots,\vc\}$, 
\begin{equation*}
\E\!\left[ \psi_{i}\!\left( \tilde{P}_{i}, \left\{ \tilde{P}_{j} \right\}_{j \neq i} \right) \right]
\!\geq\! \inf_{\Pi^{\backslash i}} \E\!\left[  \psi_{i}\!\left( \tilde{P}_{i}, \left\{ \tilde{P}_{j,\Pi^{\backslash i}} \right\}_{j \neq i} \right) \right]
\!=\! \inf_{\Pi^{\backslash i}} \max_{P_{i} \in \Dset_{i}} \E\!\left[ \psi_{i}\!\left( P_{i}, \left\{ \tilde{P}_{j,\Pi^{\backslash i}} \right\}_{j \neq i} \right) \right] 
\!\geq\! \gamma.
\end{equation*}
Thus,
\begin{equation*}
\sup_{\substack{P_{i} \in \Dset_{i} : \\i \in \{1,\ldots,\vc\}}} \sum_{i=1}^{\vc} \psi_{i}\!\left( P_{i}, \{P_{j}\}_{j \neq i} \right)
\geq \E\!\left[ \sum_{i=1}^{\vc} \psi_{i}\!\left( \tilde{P}_{i}, \left\{ \tilde{P}_{j} \right\}_{j \neq i} \right) \right]
= \sum_{i=1}^{\vc} \E\!\left[ \psi_{i}\!\left( \tilde{P}_{i}, \left\{ \tilde{P}_{j} \right\}_{j \neq i} \right) \right]
\geq \gamma \vc.
\end{equation*}
Altogether, we have that
\begin{align*}
\sup_{\substack{P_{i} \in \Dset_{i} : \\ i \in \{1,\ldots,\vc\}}} \P\left( \er_{P}\left(\hat{h}_{P}\right) - \inf_{h \in \C} \er_{P}(h) > \eps \right) 
& \geq - \frac{3\gamma}{4-\gamma} + \frac{4}{4-\gamma} \frac{1}{\vc} \sup_{\substack{P_{i} \in \Dset_{i} : \\i \in \{1,\ldots,\vc\}}} \sum_{i=1}^{\vc} \psi_{i}\left( P_{i}, \{P_{j}\}_{j \neq i} \right)
\\ & \geq - \frac{3\gamma}{4-\gamma} + \frac{4 \gamma}{4-\gamma}
= \frac{\gamma}{4-\gamma} > \conf.
\end{align*}
Since this holds for any active learning algorithm $\alg$ and $n < (\gamma/4) \vc \min_{i \in \{1,\ldots,\vc\}} \LC_{i}((4/\gamma)\eps,\gamma)$,
the lemma follows.
\end{proof}

With this lemma in hand, we can now plug in various sets $\Dset_{i}$ to obtain lower bounds for learning with this set $\C$ under various noise models.
In particular, we can make use of the constructions of lower bounds on $\LC_{i}(\eps,\conf)$ given in the proofs of the theorems in Section~\ref{sec:main},
noting that the VC dimension of $\C_{i}$ is $1$, and the star number of $\C_{i}$ is $\lfloor \s / \vc \rfloor$.
Note that, in the case $\vc \lesssim 1$, the lower bounds in each of these theorems already match their respective upper bounds up to constant and logarithmic factors
(using the lower bound from Theorem~\ref{thm:realizable} as a lower bound on $\LC_{\BN(\bound)}(\eps,\conf)$ for $\bound$ near $0$).
We may therefore suppose $\vc \geq 32$ for the remainder of this subsection.

\paragraph{The realizable case:}
For the realizable case, 
for each $i \in \{1,\ldots,\vc\}$ and $t \in \{1,\ldots,\lfloor \s/\vc \rfloor \}$,
let $\Px_{it}$ be a uniform distribution on $\{ \lfloor \s/\vc \rfloor (i-1)+1,\ldots,\lfloor \s/\vc \rfloor (i-1) + t \} \subseteq \X_{i}$, 
and let $\Dset_{i}$ denote the set of probability measures $P_{i}$ in $\RE$ having marginal over $\X$ among $\{\Px_{it} : 1 \leq t \leq \lfloor \s/\vc \rfloor \}$ and having $\target_{P_{i}} \in \C_{i}$.
Noting that the star number of $\C_{i}$ is $\lfloor \s / \vc \rfloor$ and that $\X_{i}$ is a (maximal) star set for $\C_{i}$, 
and recalling that the first term in the ``$\max$'' in the lower bound of Theorem~\ref{thm:realizable}
was proven in Appendix~\ref{app:realizable} under the uniform marginal distribution on the first $t$ elements of a maximal star set 
(for an appropriate value of $t$, of size at least $1$ and at most the star number),
we have that for $\eps \in \left(0,\frac{1}{9 \cdot 16}\right)$,
\begin{equation*}
\LC_{i}(16 \eps, 1/4) \gtrsim \min\left\{ \frac{\s}{\vc}, \frac{1}{\eps} \right\}.
\end{equation*}
Therefore, Lemma~\ref{lem:general-gaps-reduction} (with $\gamma = 1/4$) implies that for 
$\Dset = \left\{ \frac{1}{\vc} \sum_{i=1}^{\vc} P_{i} : \forall i \!\in \!\{1,\ldots,\vc\}, P_{i} \!\in\! \Dset_{i} \right\}$,
$\forall \conf \in \left( 0, \frac{1}{15} \right)$,
\begin{equation*}
\LC_{\Dset}(\eps,\conf) \gtrsim \min\left\{ \s, \frac{\vc}{\eps} \right\}.
\end{equation*}
Furthermore, for each choice of $P_{1},\ldots,P_{\vc}$ (with each $P_{i} \in \Dset_{i}$),
by construction, every $i \in \{1,\ldots,\vc\}$ has at most one $x \in \X_{i}$ with $P_{i}( \{(x,+1)\} | \{x\}\times\Y ) = 1$,
and every other $x^{\prime}$ in  $\X_{i}$ has $P_{i}( \{(x^{\prime},+1)\} | \{x^{\prime}\}\times\Y ) = 0$.
Therefore, since $P( \{(x,+1)\} | \{x\}\times\Y ) = P_{i}( \{(x,+1)\} | \{x\}\times\Y )$ for every $x \in \X_{i}$, for $P = \frac{1}{\vc} \sum_{j=1}^{\vc} P_{j}$, 
we have that there are at most $\vc$ points $x$ in $\bigcup_{i=1}^{\vc} \X_{i}$ with $P( \{(x,+1)\} | \{x\}\times\Y ) = 1$,
and all other points $x$ in $\bigcup_{i=1}^{\vc} \X_{i}$ have $P(\{(x,+1)\} | \{x\}\times\Y) = 0$.
In particular, this implies that for $(X,Y) \sim P$, $\P( \target_{P}(X) \neq Y | X \in \bigcup_{i=1}^{\vc} \X_{i} ) = 0$.
Since we also have that $\forall t \in \nats \setminus \bigcup_{i=1}^{\vc} \X_{i}$, $P(\{t\} \times \Y) = 0$,
we can take $\target_{P}(x) = -1$ for every $x \in \X \setminus \bigcup_{i=1}^{\vc} \X_{i}$ while guaranteeing $\er_{P}(\target_{P}) = 0$.
Since $\bigcup_{i=1}^{\vc} \X_{i} \subseteq \{1,\ldots,\s\}$, we also have that $\target_{P} \in \C$.
Together, these facts imply $P \in \RE$.
Thus, $\Dset \subseteq \RE$, which implies $\LC_{\RE}(\eps,\conf) \geq \LC_{\Dset}(\eps,\conf)$, so that
\begin{equation*}
\LC_{\RE}(\eps,\conf) \gtrsim \min\left\{ \s, \frac{\vc}{\eps} \right\}
\end{equation*}
as well.
Since the upper bound in Theorem~\ref{thm:realizable} is within a factor proportional to $\Log(1/\eps)$ of this,\footnote{
Note that, although $\frac{\s \vc}{\Log(\s)}$ can sometimes be much smaller than $\s \land \frac{\vc}{\eps}$,
we always have $\s \land \frac{\vc}{\eps} \lesssim \frac{\s \vc}{\Log(\s)} \Log\left(\frac{1}{\eps}\right)$, so that this $\s \land \frac{\vc}{\eps}$
lower bound does not contradict the $\frac{\s \vc}{\Log(\s)} \Log\left(\frac{1}{\eps}\right)$ upper bound.}
this establishes that the upper bound is sometimes tight to within a factor proportional to $\Log(1/\eps)$.

\paragraph{Bounded noise:}
In the case of bounded noise,
fix any $\bound \in (0,1/2)$ and $\eps \in (0 , (1-2\bound)/(256 e))$.
Take $\zeta = \frac{32e\eps}{1-2\bound}$ and $k = \min\left\{ \lfloor \s/\vc \rfloor - 1, \lfloor 1/\zeta \rfloor \right\}$,
and for each $i \in \{1,\ldots,\vc\}$, let $\Dset_{i}$ be defined as the set $\RR(k,\zeta,\bound)$ in Lemma~\ref{lem:rr11-star},
as applied to the hypothesis class $\C_{i}$ with $\{x_{1},\ldots,x_{k+1}\} = \left\{ \lfloor \s/\vc \rfloor (i-1)+1,\ldots,  \lfloor \s/\vc \rfloor (i-1) + k+1 \right\}$,
$h_{0} = -1$, and $h_{j} = 2 \ind_{\{ \lfloor \s/\vc \rfloor (i-1) + j\}}-1$ for each $j \in \{1,\ldots,k\}$.
Then Lemma~\ref{lem:rr11-star} implies
\begin{equation*}
\LC_{i}(16e \eps, 1/(4e)) \geq \frac{\bound (k-1)}{3 (1-2\bound)^{2}} \gtrsim \frac{\bound}{(1-2\bound)^{2}} \min\left\{ \frac{\s}{\vc}, \frac{1-2\bound}{\eps} \right\}.
\end{equation*}
Furthermore, recall from the definition of $\RR(k,\zeta,\bound)$ in Section~\ref{sec:rr-lemma} that $\Dset_{i}$ 
is a finite set of probability measures, and every $P_{i} \in \Dset_{i}$ has $P_{i}((\X \setminus \{x_{1},\ldots,x_{k+1}\}) \times \Y) = 0$.
In particular, note that $\{x_{1},\ldots,x_{k+1}\} \subseteq \X_{i}$ in this case.
Furthermore, every $P_{i} \in \Dset_{i}$ has $\forall x \in \{x_{1},\ldots,x_{k}\}$, $P_{i}( \{(x,+1)\} | \{x\}\times\Y ) \in \{\bound, 1-\bound\}$,
and at most one $x \in \{x_{1},\ldots,x_{k}\}$ has $P_{i}( \{(x,+1)\} | \{x\}\times\Y ) = 1-\bound$,
while $P_{i}( \{(x_{k+1},+1)\} | \{x_{k+1}\} \times \Y) = 0$.
Thus, for any choices of $P_{i} \in \Dset_{i}$ for each $i \in \{1,\ldots,\vc\}$, 
the probability measure $P = \frac{1}{\vc} \sum_{i=1}^{\vc} P_{i}$ satisfies the property that,
$\forall x \in \X$ with $P(\{x\} \times \Y) > 0$, $P(\{(x,+1)\}|\{x\}\times\Y) \in \{0,\bound,1-\bound\}$, 
and there are at most $\vc$ values $x \in \X$ with $P(\{x\}\times\Y) > 0$ and $P(\{(x,+1)\}|\{x\}\times\Y) = 1-\bound$.
In particular, this implies that without loss, we can take $\target_{P} \in \C$, and furthermore that $P \in \BN(\bound)$.
Thus, for the set $\Dset = \left\{ \frac{1}{\vc} \sum_{i=1}^{\vc} P_{i} : \forall i \in \{1,\ldots,\vc\}, P_{i} \in \Dset_{i} \right\}$,
we have $\Dset \subseteq \BN(\bound)$.
Lemma~\ref{lem:general-gaps-reduction} (with $\gamma = 1/(4e)$) then implies that $\forall \conf \in \left( 0, \frac{1}{16 e - 1} \right)$, 
\begin{equation*}
\LC_{\BN(\bound)}(\eps,\conf) \geq \LC_{\Dset}(\eps,\conf) \gtrsim \vc \min_{i \in \{1,\ldots,\vc\}} \LC_{i}( 16 e \eps, 1/(4e) ) \gtrsim \frac{\bound}{(1-2\bound)^{2}} \min\left\{ \s, \frac{(1-2\bound) \vc}{\eps} \right\}. 
\end{equation*}
For $\bound$ bounded away from $0$, the upper bound in Theorem~\ref{thm:bounded} is within a $\polylog\left(\frac{\vc}{\eps\conf}\right)$ factor of this,
so that this establishes that the upper bound is sometimes tight to within logarithmic factors when $\bound$ is bounded away from $0$.
Furthermore, since $\RE \subseteq \BN(\bound)$, the above result for sometimes-tightness of the upper bound in the realizable case implies 
that the upper bound in Theorem~\ref{thm:bounded} is also sometimes tight to within logarithmic factors for any $\bound$ near $0$.

\paragraph{Tsybakov noise:}
For the case of Tsybakov noise, the tightness (up to logarithmic factors) of the upper bound for $\tsyba \leq 1/2$ is already 
established by the lower bound for that case in Theorem~\ref{thm:tsybakov}.  Thus, it remains only to consider $\tsyba \in (1/2,1)$.
Fix any values $\tsybca \in [4,\infty)$, $\tsyba \in (1/2,1)$, and $\eps \in \left(0,1/(2^{11} \tsybca^{1/\tsyba})\right)$,
let $\tsybca^{\prime}$ be as in the definition of $\TN(\tsybca,\tsyba)$,
and let 
\begin{equation*}
k = \min\left\{ \left\lfloor \frac{\s}{\vc} \right\rfloor - 1, \left\lfloor \frac{(\tsybca^{\prime})^{\frac{\tsyba-1}{\tsyba}}}{64\eps} \right\rfloor, \left\lfloor \frac{\tsybca^{\prime}}{64 \eps} 4^{-\frac{1}{1-\tsyba}} \right\rfloor \right\},
\end{equation*}
$\bound = \frac{1}{2} - \left(\frac{k 64 \eps}{\tsybca^{\prime}}\right)^{1-\tsyba}$, and $\zeta = \frac{128 \eps}{1-2\bound}$.
Note that $\zeta \in (0,1)$, $\bound \in [1/4,1/2)$, and $2 \leq k \leq \min\left\{ \lfloor \s/\vc \rfloor - 1, \lfloor 1/\zeta \rfloor \right\}$
(following the arguments from the proof of Theorem~\ref{thm:tsybakov}, with $\eps$ replaced by $64\eps$).
Furthermore, $\forall i \in \{1,\ldots,\vc\}$, let $\Dset_{i}$ be the set $\RR(k,\zeta,\bound)$ in Lemma~\ref{lem:rr11-star},
as applied to the class $\C_{i}$, with 
$\{x_{1},\ldots,x_{k+1}\} = \left\{ \lfloor \s/\vc \rfloor (i-1) + 1, \ldots,  \lfloor \s/\vc \rfloor (i-1) + k+1 \right\}$,
$h_{0} = -1$, and $h_{j} = 2 \ind_{\{ \lfloor \s/\vc \rfloor (i-1) + j\}}-1$ for each $j \in \{1,\ldots,k\}$.
Thus, by Lemma~\ref{lem:rr11-star}, 
\begin{align*}
& \LC_{i}( 64 \eps, 1/16 ) 
\geq \frac{\bound (k-1) \ln(4)}{3(1-2\bound)^{2}}
\gtrsim \left(\frac{\eps}{\tsybca^{\prime}}\right)^{2\tsyba-2} k^{2\tsyba-1}
\\ & \gtrsim \tsybca^{2} \left(\frac{1}{\eps}\right)^{2-2\tsyba} 
\min\left\{ \frac{\s}{\vc}, \frac{(\tsybca^{\prime})^{\frac{\tsyba-1}{\tsyba}}}{\eps}, \frac{\tsybca^{\prime}}{\eps} 4^{- \frac{1}{1-\tsyba}} \right\}^{2\tsyba-1}
\gtrsim \tsybca^{2} \left(\frac{1}{\eps}\right)^{2-2\tsyba} \min\left\{ \frac{\s}{\vc}, \frac{1}{\tsybca^{1/\tsyba} \eps} \right\}^{2\tsyba-1},
\end{align*}
where this last inequality relies on the fact (established in the proof of Theorem~\ref{thm:tsybakov}) that $(\tsybca^{\prime})^{\frac{\tsyba-1}{\tsyba}} \leq \tsybca^{\prime} 4^{-\frac{1}{1-\tsyba}}$.

We note that any $P_{i} \in \Dset_{i}$ has $P_{i}( ( \X \setminus \{ \lfloor \s/\vc \rfloor (i-1) + 1, \ldots,  \lfloor \s/\vc \rfloor (i-1) + k+1\} ) \times \Y ) = 0$.
Without loss of generality, suppose each $P_{i} \in \Dset_{i}$ has $\eta(x;P_{i}) = 0$ for every $x \in \X \setminus \left\{ \lfloor \s/\vc \rfloor (i-1) + 1, \ldots,  \lfloor \s/\vc \rfloor (i-1) + k+1 \right\}$.
As in the proof of the lower bound in Theorem~\ref{thm:tsybakov}, we note that any $P_{i} \in \Dset_{i}$ 
has $P_{i}( (x,y) : | \eta(x; P_{i}) - 1/2 | \leq t ) \leq \tsybca^{\prime} t^{\tsyba/(1-\tsyba)}$ for every $t > 0$,
and furthermore that $\target_{P_{i}}(\cdot) = \sign(2\eta(\cdot;P_{i})-1)$, which has at most one $x$ with $\target_{P_{i}}(x_{i}) = +1$ (by definition of $\RR(k,\zeta,\bound)$ in Section~\ref{sec:rr-lemma}).
This further implies that, for any choices of $P_{i} \in \Dset_{i}$ for each $i \in \{1,\ldots,\vc\}$, 
the probability measure $P = \frac{1}{\vc} \sum_{i=1}^{\vc} P_{i}$ has support for its marginal over $\X$ only in
$\bigcup_{i=1}^{\vc} \left\{ \lfloor \s/\vc \rfloor (i-1) + 1, \ldots,  \lfloor \s/\vc \rfloor (i-1) + k+1 \right\}$,
and for each $i \in \{1,\ldots,\vc\}$, $\forall x \in \left\{ \lfloor \s/\vc \rfloor (i-1) + 1, \ldots,  \lfloor \s/\vc \rfloor (i-1) + k+1 \right\}$, 
$\eta(x;P) = \eta(x;P_{i})$, while we may take $\eta(x;P) = 0$ for every $x \notin \bigcup_{i=1}^{\vc} \left\{ \lfloor \s/\vc \rfloor (i-1) + 1, \ldots,  \lfloor \s/\vc \rfloor (i-1) + k+1 \right\}$.
Therefore, $\target_{P}$ has at most $\vc$ points $x \in \bigcup_{i=1}^{\vc} \X_{i}$ with $\target_{P}(x) = +1$, and $\target_{P}(x) = -1$ for all other $x \in \X$: that is, $\target_{P} \in \C$.
Additionally, since the supports of the marginals of the $P_{i}$ distributions over $\X$ are disjoint, we have that $\forall t > 0$, 
\begin{align*}
& P\left( (x,y) : |\eta(x;P) - 1/2| \leq t\right)
= \frac{1}{\vc} \sum_{i=1}^{\vc} P_{i}\left( (x,y) : |\eta(x;P) - 1/2| \leq t \right)
\\ & = \frac{1}{\vc} \sum_{i=1}^{\vc} P_{i}\left( (x,y) : |\eta(x;P_{i}) - 1/2| \leq t \right)
\leq \frac{1}{\vc} \sum_{i=1}^{\vc} \tsybca^{\prime} t^{\tsyba/(1-\tsyba)}
= \tsybca^{\prime} t^{\tsyba/(1-\tsyba)}.
\end{align*}
Thus, the set $\Dset = \left\{ \frac{1}{\vc} \sum_{i=1}^{\vc} P_{i} : \forall i \in \{1,\ldots,\vc\}, P_{i} \in \Dset_{i} \right\}$
satisfies $\Dset \subseteq \TN(\tsybca,\tsyba)$.
Combined with the fact that each set $\Dset_{i}$ is finite (by the definition of $\RR(k,\zeta,\bound)$ in Section~\ref{sec:rr-lemma}), 
Lemma~\ref{lem:general-gaps-reduction} (with $\gamma=1/16$) implies that $\forall \conf \in \left(0, \frac{1}{63}\right)$,
\begin{equation*}
\LC_{\TN(\tsybca,\tsyba)}(\eps,\conf) \geq \LC_{\Dset}(\eps,\conf) \gtrsim \vc \min_{i \in \{1,\ldots,\vc\}} \LC_{i}( 64 \eps, 1/16 ) 
\gtrsim \tsybca^{2} \left(\frac{1}{\eps}\right)^{2-2\tsyba} \min\left\{ \frac{\s}{\vc}, \frac{1}{\tsybca^{1/\tsyba} \eps} \right\}^{2\tsyba-1} \vc.
\end{equation*}
Since this is within logarithmic factors of the upper bound of Theorem~\ref{thm:tsybakov}, this establishes that the upper bound
is sometimes tight to within logarithmic factors (for sufficiently small values of $\eps$).

\paragraph{Benign noise:}
We can establish that the upper bound in Theorem~\ref{thm:benign} is sometimes tight by reduction from the above problems.
Specifically, since $\RE \subseteq \BE(\nu)$ for every $\nu \in [0,1/2)$,
for the above choice of $\C$ we have that $\forall \nu \in [0,1/2]$, 
$\forall \eps \in \left(0,\frac{1}{9 \cdot 16}\right)$, $\forall \conf \in \left( 0, \frac{1}{15} \right)$,
\begin{equation*}
\LC_{\BE(\nu)}(\eps,\conf) \geq \LC_{\RE}(\eps,\conf) \gtrsim \min\left\{ \s, \frac{\vc}{\eps} \right\}.
\end{equation*}
Furthermore, the lower bound in Theorem~\ref{thm:benign} already implies that
$\forall \eps \in \left( 0, \frac{1-2\nu}{24} \right)$, $\forall \conf \in \left(0, \frac{1}{24} \right]$, 
\begin{equation*}
\LC_{\BE(\nu)}(\eps,\conf) \gtrsim \frac{\nu^{2}}{\eps^{2}} \vc.
\end{equation*}
Together, we have that $\forall \nu \in [0,1/2)$, $\forall \eps \in \left(0, \frac{1-2\nu}{9 \cdot 16} \right)$, $\forall \conf \in \left( 0, \frac{1}{24} \right]$,
\begin{equation*}
\LC_{\BE(\nu)}(\eps,\conf) \gtrsim \max\left\{ \frac{\nu^{2}}{\eps^{2}}\vc, \min\left\{ \s, \frac{\vc}{\eps}\right\} \right\}
\gtrsim \frac{\nu^{2}}{\eps^{2}} \vc + \min\left\{ \s, \frac{\vc}{\eps} \right\}.
\end{equation*}
Thus, the upper bound in Theorem~\ref{thm:benign} is sometimes tight to within logarithmic factors.

\subsection{The Lower Bounds are Sometimes Tight}
\label{app:lb-tight}

We now argue that the lower bounds in Theorems~\ref{thm:realizable}, \ref{thm:bounded}, \ref{thm:tsybakov}, and \ref{thm:benign} are sometimes tight (up to logarithmic factors).
First we have a general lemma.
Let $\X_{1} \subset \X$ and $\X_{2} = \X \setminus \X_{1}$,
and let $\C_{1},\C_{2}$ be hypothesis classes such that $\forall i \in \{1,2\}$,
$\forall h \in \C_{i}$, $\forall x \in \X \setminus \X_{i}$, $h(x) = -1$.
Further suppose that $\forall i \in \{1,2\}$, the all-negative classifier $x \mapsto h_{-}(x) = -1$ is in $\C_{i}$.
For each $i \in \{1,2\}$ and $\gamma \in [0,1]$, let $\Dset_{i}(\gamma)$ be a nonempty set of probability measures on $\X \times \Y$
such that $\forall P_{i} \in \Dset_{i}(\gamma)$, $P_{i}(\X_{i} \times \Y) = 1$; further suppose $\forall \gamma,\gamma^{\prime} \in [0,1]$ with $\gamma \leq \gamma^{\prime}$, $\Dset_{i}(\gamma) \supseteq \Dset_{i}(\gamma^{\prime})$.
Also, for each $i \in \{1,2\}$, $\gamma,\conf \in [0,1]$, and $\eps > 0$, let $\LC_{i,\gamma}(\eps,\conf)$ denote 
the minimax label complexity under $\Dset_{i}(\gamma)$ with respect to $\C_{i}$
(i.e., the value of $\LC_{\Dset_{i}(\gamma)}(\eps,\conf)$ when $\C = \C_{i}$).
Let $\Dset = \left\{ \gamma P_{1} + (1-\gamma) P_{2} : P_{1} \in \Dset_{1}(\gamma), P_{2} \in \Dset_{2}(1-\gamma),  \gamma \in [0,1] \right\}$.

\begin{lemma}
\label{lem:general-gaps-lb-tight}
For $\C = \C_{1} \cup \C_{2}$, $\forall \eps,\conf \in (0,1)$,
\begin{equation*}
\LC_{\Dset}(\eps,\conf) \leq 2 \!\sup_{\gamma \in [0,1]} \max\!\left\{ \LC_{1,(\gamma-\eps/8) \lor 0}\!\left(\frac{\eps}{2(\gamma+\eps/8)},\frac{\conf}{3}\right)\!, \LC_{2, (1-\gamma-\eps/8) \lor 0}\!\left(\frac{\eps}{2(1-\gamma+\eps/8)},\frac{\conf}{3}\right) \right\}\!.
\end{equation*}
\end{lemma}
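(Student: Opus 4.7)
The plan is to construct an active learner for $\Dset$ by reducing to the two sub-problems. For any $P = \gamma P_1 + (1-\gamma)P_2 \in \Dset$, the marginal of $P$ over $\X$ places mass $\gamma$ on $\X_1$ and $1-\gamma$ on $\X_2$ (since $P_i$ is supported on $\X_i \times \Y$), and the conditional of $P$ given $X \in \X_i$ equals $P_i$. Thus the algorithm may estimate $\gamma$ freely from unlabeled data, and may simulate an i.i.d.\ $P_i$-sequence by filtering the unlabeled stream to $\X_i$ and requesting labels only on those filtered points; the returned labels have exactly the $P_i$-conditional distribution. I would first use enough unlabeled samples to compute $\hat\gamma$ with $|\hat\gamma - \gamma| \leq \eps/8$ with probability at least $1-\conf/3$, via Hoeffding's inequality. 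Then I would invoke the minimax-optimal learner for $\C_1$ under $\Dset_1((\hat\gamma-\eps/8)\lor 0)$ with target accuracy $\delta_1 := \eps/(2(\hat\gamma+\eps/8))$ and confidence $\conf/3$, feeding it only points in $\X_1$; by monotonicity of $\gamma \mapsto \Dset_1(\gamma)$ and the fact that $\hat\gamma - \eps/8 \leq \gamma$ on the good event, $P_1 \in \Dset_1(\gamma) \subseteq \Dset_1((\hat\gamma-\eps/8)\lor 0)$, so the learner produces $\hat{h}_1 \in \C_1$ that is $\delta_1$-optimal in $\C_1$ under $P_1$ with probability $\geq 1-\conf/3$. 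Symmetrically, I would obtain $\hat{h}_2 \in \C_2$ with accuracy $\delta_2 := \eps/(2(1-\hat\gamma+\eps/8))$ from the $\X_2$-side. The total label budget is bounded by $\LC_{1,(\hat\gamma-\eps/8)\lor 0}(\delta_1,\conf/3) + \LC_{2,(1-\hat\gamma-\eps/8)\lor 0}(\delta_2,\conf/3) \leq 2\sup_{\gamma \in [0,1]} \max\{\LC_1,\LC_2\}$, uniformly in $\hat\gamma$.

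The output will be the piecewise classifier $\hat{h}$ defined by $\hat{h}(x)=\hat{h}_1(x)$ for $x\in\X_1$ and $\hat{h}(x)=\hat{h}_2(x)$ for $x\in\X_2$. The error analysis is a direct computation: since every $h \in \C_i$ equals $-1$ outside $\X_i$, one has
\begin{equation*}
\inf_{h \in \C_1}\er_{P}(h) = \gamma \inf_{h\in\C_1}\er_{P_1}(h) + (1-\gamma)\er_{P_2}(h_{-}) =: E_1,
\end{equation*}
and symmetrically $\inf_{h \in \C_2}\er_{P}(h) =: E_2 = \gamma\er_{P_1}(h_{-}) + (1-\gamma)\inf_{h\in\C_2}\er_{P_2}(h)$, so $\inf_{h \in \C}\er_{P}(h) = \min\{E_1,E_2\}$. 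Meanwhile $\er_{P}(\hat{h}) = \gamma\er_{P_1}(\hat{h}_1) + (1-\gamma)\er_{P_2}(\hat{h}_2)$. Using $\er_{P_i}(\hat{h}_i) \leq \inf_{h\in\C_i}\er_{P_i}(h) + \delta_i \leq \er_{P_i}(h_{-}) + \delta_i$ (the second inequality holds because $h_{-} \in \C_i$) in the complementary summand, both $\er_{P}(\hat{h}) - E_1$ and $\er_{P}(\hat{h}) - E_2$ are bounded by $\gamma\delta_1 + (1-\gamma)\delta_2$. Finally, $\hat\gamma + \eps/8 \geq \gamma$ on the good event gives $\gamma\delta_1 \leq \eps/2$, and similarly $(1-\gamma)\delta_2 \leq \eps/2$; a union bound over the three high-probability events then yields $\er_{P}(\hat{h}) - \inf_{h \in \C}\er_{P}(h) \leq \eps$ with probability $\geq 1-\conf$.

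The main conceptual point — the one that is easy to miss — is that the returned $\hat{h}$ need not lie in $\C$; Definition~\ref{def:LC} only demands $\er_{P}(\hat{h}) - \inf_{h \in \C}\er_{P}(h) \leq \eps$. This sidesteps what looks like the hard step, namely choosing between $\hat{h}_1$ and $\hat{h}_2$ without spending further labels to determine whether $E_1 \leq E_2$ or vice versa: the piecewise combination simultaneously beats both $E_1$ and $E_2$ by at most $\gamma\delta_1 + (1-\gamma)\delta_2$. The remaining ingredients — the Hoeffding bound for $\hat\gamma$, the two applications of monotonicity of $\gamma \mapsto \Dset_i(\gamma)$, and the arithmetic showing $\gamma\delta_1 + (1-\gamma)\delta_2 \leq \eps$ — are routine.
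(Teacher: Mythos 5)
Your proof is correct and takes essentially the same approach as the paper's: estimate the mixing weight from unlabeled data via Hoeffding, run the two sub-learners on the filtered substreams with accuracies $\delta_i$ scaled inversely to the estimated weights so that $\gamma\delta_1 + (1-\gamma)\delta_2 \le \eps$, and return the piecewise combination. The only cosmetic differences are that the paper uses one-sided underestimates $\gamma_j$ (so $\gamma_j \le \gamma$ and $\gamma_j \ge \gamma - \eps/8$) rather than your two-sided $\hat\gamma$, and it organizes the error bound by sub-additivity of $\inf$ rather than via $\min\{E_1,E_2\}$, but these are equivalent; your observation that the piecewise $\hat h$ need not lie in $\C$ is indeed the (implicit) reason the decomposition is painless, though the paper simply uses it without remark and also explicitly handles the degenerate cases $\gamma\in\{0,1\}$ where one substream is empty, a detail you should make explicit.
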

\begin{proof}
For each $i \in \{1,2\}$ and $\gamma \in [0,1]$, let $\alg_{\gamma,i}$ be an active learning algorithm
such that, for any integer $n \geq \LC_{i,\gamma}\left(\frac{\eps}{2(\gamma+\eps/8)},\frac{\conf}{3}\right)$, if $\PXY \in \Dset_{i}(\gamma)$,
then with probability at least $1-\conf/3$, the classifier $\hat{h}$ produced by $\alg_{\gamma,i}(n)$
satisfies $\er_{\PXY}(\hat{h}) - \inf_{h \in \C_{i}} \er_{\PXY}(h) \leq \frac{\eps}{2(\gamma+\eps/8)}$;
such an algorithm is guaranteed to exist by the definition of $\LC_{i,\gamma}(\cdot,\cdot)$.

Now suppose $\PXY \in \Dset$, so that $\PXY = \gamma P_{1} + (1-\gamma) P_{2}$ 
for some $\gamma \in [0,1]$, $P_{1} \in \Dset_{1}(\gamma)$, and $P_{2} \in \Dset_{2}(1-\gamma)$.
Let $(X_{1},Y_{1}),(X_{2},Y_{2}),\ldots$ be the data sequence, as usual (i.i.d. $\PXY$).
Consider an active learning algorithm $\alg$ defined as follows.
We first split the sequence of indices into three subsequences: $i_{0,k} = 2k-1$ for $k \in \nats$,
$i_{1,1},i_{1,2},\ldots$ is the increasing subsequence of indices $i$ such that $i/2 \in \nats$ and $X_{i} \in \X_{1}$,
and $i_{2,1},i_{2,2},\ldots$ is the remaining increasing subsequence (i.e., indices $i$ such that $i/2 \in \nats$ and $X_{i} \in \X_{2}$).
Given a budget $n \in \nats$, $\alg(n)$ proceeds as follows.
First, we let $m = \left\lceil \frac{128}{\eps^{2}}\ln\left(\frac{12}{\conf}\right) \right\rceil$,
$\gamma_{1} = \max\left\{\frac{1}{m} \sum_{k=1}^{m} \ind_{\X_{1}}( X_{i_{0,k}} ) - \frac{\eps}{16}, 0 \right\}$,
and $\gamma_{2} = \max\left\{ \frac{1}{m} \sum_{k=1}^{m} \ind_{\X_{2}}( X_{i_{0,k}} ) - \frac{\eps}{16}, 0 \right\}$.
By Hoeffding's inequality and a union bound, with probability at least $1-\conf/3$, $\forall i \in \{1,2\}$,
\begin{equation}
\label{eqn:gaps-lb-hoeffding}
\PXY(\X_{i} \times \Y) - \frac{\eps}{8} \leq \gamma_{i} \leq \PXY(\X_{i} \times \Y).
\end{equation}
Denote by $H$ this event.

Next, for each $j \in \{1,2\}$, if the subsequence $i_{j,1},i_{j,2},\ldots$ is infinite, 
then we run $\alg_{\gamma_{j},j}(\lfloor n/2 \rfloor)$ with the data subsequence $\{X_{k}^{(j)}\}_{k=1}^{\infty} = \{X_{i_{j,k}}\}_{k=1}^{\infty}$;
if the algorithm $\alg_{\gamma_{j},j}$ requests the label for an index $k$ (i.e., corresponding to $X_{k}^{(j)}$), then $\alg(n)$
requests the corresponding label $Y_{i_{j,k}}$ and provides this value to $\alg_{\gamma_{j},j}$ as the label of $X_{k}^{(j)}$.
Let $\hat{h}_{j}$ denote the classifier returned by this execution of $\alg_{\gamma_{j},j}(\lfloor n/2 \rfloor)$.
On the other hand, if the subsequence $i_{j,1},i_{j,2},\ldots$ is finite (or empty), then we let $\hat{h}_{j}$ denote an arbitrary classifier.
Finally, let $\alg(n)$ return the classifier $\hat{h} = \hat{h}_{1} \ind_{\X_{1}} + \hat{h}_{2} \ind_{\X_{2}}$.
In particular, note that this method requests at most $n$ labels, since all labels are requested by one of the $\alg_{\gamma_{j},j}$ algorithms,
each of which requests at most $\lfloor n/2 \rfloor$ labels.

For this method, we have that
\begin{multline*}
\er_{\PXY}(\hat{h}) - \inf_{h \in \C} \er_{\PXY}(h)
= \gamma \er_{P_{1}}( \hat{h}_{1} )+ (1-\gamma) \er_{P_{2}}( \hat{h}_{2} ) - \inf_{h \in \C} \left( \gamma \er_{P_{1}}(h)+ (1-\gamma) \er_{P_{2}}(h) \right)
\\ \leq \gamma \left( \er_{P_{1}}( \hat{h}_{1} ) - \inf_{h \in \C} \er_{P_{1}}(h)\right)
+ (1-\gamma) \left( \er_{P_{2}}( \hat{h}_{2} ) - \inf_{h \in \C} \er_{P_{2}}(h) \right).
\end{multline*}
For each $j \in \{1,2\}$, since every $h \in \C \setminus \C_{j}$ has $h(x) = h_{-}(x)$ for every $x \in \X_{j}$, and $h_{-} \in \C_{j}$, 
we have that $\inf_{h \in \C} \er_{P_{j}}(h) = \inf_{h \in \C_{j}} \er_{P_{j}}(h)$.  Thus, the above implies
\begin{equation}
\label{eqn:gaps-lb-split-bound}
\er_{\PXY}(\hat{h}) - \inf_{h \in \C} \er_{\PXY}(h) \leq 
\gamma \left( \er_{P_{1}}( \hat{h}_{1} ) - \inf_{h \in \C_{1}} \er_{P_{1}}(h)\right)
+ (1-\gamma) \left( \er_{P_{2}}( \hat{h}_{2} ) - \inf_{h \in \C_{2}} \er_{P_{2}}(h) \right)\!.
\end{equation}

If $\gamma = 0$, then with probability one, every $X_{i} \in \X_{2}$, 
and $\{(X_{i_{2,k}},Y_{i_{2,k}})\}_{k=1}^{\infty}$ is an infinite i.i.d. $P_{2}$-distributed sequence.
Furthermore, $1-\eps/8 < \gamma_{2} = 1-\eps/16 < 1$, so that $\PXY \in \Dset_{2}(\gamma_{2})$.
Thus, if $n \geq 2 \LC_{2,1-\eps/8}\left( \frac{\eps}{2(1+\eps/8)}, \frac{\conf}{3} \right)$,
then we also have $n \geq \LC_{2,\gamma_{2}}\left(\frac{\eps}{2(\gamma_{2}+\eps/8)}, \frac{\conf}{3} \right)$
(by monotonicity of $\Dset_{2}(\cdot)$ and the label complexity),
so that with probability at least $1-\conf/3$,
$\er_{P_{2}}(\hat{h}_{2}) - \inf_{h \in \C_{2}} \er_{P_{2}}(h) \leq \frac{\eps}{2(\gamma_{2}+\eps/8)} = \frac{\eps}{2(1+\eps/16)} < \frac{\eps}{2}$
(here we are evaluating the label complexity guarantee of $\alg_{\gamma_{2},2}$ under the conditional distribution given $\gamma_{2}$, and then invoking the law of total probability and intersecting with the above probability-one event).
Combined with \eqref{eqn:gaps-lb-split-bound}, this implies $\er_{\PXY}(\hat{h}) - \inf_{h \in \C} \er_{\PXY}(h) < \frac{\eps}{2}$.
If $\gamma = 1$, then a symmetric argument implies that if $n \geq 2 \LC_{1,1-\eps/8}\left( \frac{\eps}{2(1+\eps/8)},\frac{\conf}{3} \right)$,
then with probability at least $1-\conf/3$, 
$\er_{\PXY}(\hat{h}) - \inf_{h \in \C} \er_{\PXY}(h) < \frac{\eps}{2}$.

Otherwise, suppose $0 < \gamma < 1$.  
Note that, on the event $H$, 
$\gamma-\eps/8 \leq \gamma_{1} \leq \gamma$ and
$1-\gamma-\eps/8 \leq \gamma_{2} \leq 1-\gamma$,
so that
$\Dset_{1}(\gamma_{1}) \subseteq \Dset_{1}((\gamma-\eps/8) \lor 0)$
and $\Dset_{2}(\gamma_{2}) \subseteq \Dset_{2}((1-\gamma-\eps/8) \lor 0)$,
and hence that 
\begin{equation*}
\LC_{1,\gamma_{1}}\left( \frac{\eps}{2(\gamma_{1}+\eps/8)}, \frac{\conf}{3} \right) \leq \LC_{1,(\gamma-\eps/8) \lor 0}\left( \frac{\eps}{2(\gamma+\eps/8)}, \frac{\conf}{3} \right)
\end{equation*}
and 
\begin{equation*}
\LC_{2,\gamma_{2}}\left( \frac{\eps}{2(\gamma_{2}+\eps/8)}, \frac{\conf}{3} \right) \leq \LC_{2,(1-\gamma-\eps/8)\lor 0}\left( \frac{\eps}{2(1-\gamma+\eps/8)}, \frac{\conf}{3} \right).
\end{equation*}
In this case, by the strong law of large numbers, with probability one, 
$\forall j \in \{1,2\}$, the sequence $i_{j,1},i_{j,2},\ldots$ exists and is infinite.
Since the support of the marginal of $P_{j}$ over $\X$ is contained within $\X_{j}$, and $\X_{1}$ and $\X_{2}$ are disjoint,
we may note that $(X_{i_{j,1}},Y_{i_{j,1}}),(X_{i_{j,2}},Y_{i_{j,2}}),\ldots$ are independent $P_{j}$-distributed random variables.
In particular, if
\begin{equation*}
n \geq 2 \max\left\{  \LC_{1,(\gamma-\eps/8) \lor 0}\left( \frac{\eps}{2(\gamma+\eps/8)}, \frac{\conf}{3} \right), \LC_{2,(1-\gamma-\eps/8)\lor 0}\left( \frac{\eps}{2(1-\gamma+\eps/8)}, \frac{\conf}{3} \right) \right\},
\end{equation*}
then (by the label complexity guarantee of $\alg_{\gamma_{j},j}$ applied under the conditional distribution given $\gamma_{j}$, combined with the law of total probability, and intersecting with the above probability-one event)
there are events $H_{1}$ and $H_{2}$, each of probability at least $1-\conf/3$, such that on the event $H \cap H_{1}$,
$\er_{P_{1}}(\hat{h}_{1}) - \inf_{h \in \C_{1}} \er_{P_{1}}(h) \leq \frac{\eps}{2(\gamma_{1}+\eps/8)} \leq \frac{\eps}{2 \gamma}$,
and on the event $H \cap H_{2}$, 
$\er_{P_{2}}(\hat{h}_{2}) - \inf_{h \in \C_{2}} \er_{P_{2}}(h) \leq \frac{\eps}{2(\gamma_{2}+\eps/8)} \leq \frac{\eps}{2 (1-\gamma)}$.
Therefore, on the event $H \cap H_{1} \cap H_{2}$, 
the right hand side of \eqref{eqn:gaps-lb-split-bound} is at most 
$\gamma \frac{\eps}{2\gamma} + (1-\gamma) \frac{\eps}{2 (1-\gamma)} = \eps$,
so that $\er_{\PXY}(\hat{h}) - \inf_{h \in \C} \er_{\PXY}(h) \leq \eps$.
By a union bound, the probability of $H \cap H_{1} \cap H_{2}$ is at least $1-\conf$.
Since this holds for any $\PXY \in \Dset$, the result follows.
 \end{proof}

We can now apply this result with various choices of the sets $\Dset_{1}(\gamma)$ and $\Dset_{2}(\gamma)$ to obtain 
upper bounds for the above space $\C$, matching the lower bounds proven above for various noise 
models.
Specifically,
consider $\X = \nats$, $\X_{1} = \{1,\ldots,\vc\}$, $\X_{2} = \{\vc+1,\vc+2,\ldots\}$,
$\C_{1} = \left\{ x \mapsto 2\ind_{S}(x)-1 : S \subseteq \{1,\ldots,\vc\} \right\}$,
and $\C_{2} = \left\{ x \mapsto 2\ind_{\{t\}}(x) \!-\! 1 : t \in \{\vc+1,\vc+2,\ldots,\s\} \right\} \cup \{x \mapsto -1\}$.
Note that $\C_{1}$ and $\C_{2}$ satisfy the requirements specified above,
and also that the VC dimension of $\C_{1}$ is $\vc$ and the star number of $\C_{1}$ is $\vc$,
while the VC dimension of $\C_{2}$ is $1$ and the star number of $\C_{2}$ is $\s-\vc$.
Furthermore, take 
$\C = \{ x \mapsto 2\ind_{S}(x) - 1 : S \in 2^{\{1,\ldots,\vc\}} \cup \{ \{i\} : \vc+1 \leq i \leq \s \} \}$,
and note that this satisfies $\C = \C_{1} \cup \C_{2}$, and $\C$ has VC dimension $\vc$ and star number $\s$.

\paragraph{The realizable case:}
For the realizable case, we can in fact show that that lower bound in Theorem~\ref{thm:realizable} is sometimes tight 
up to \emph{universal constant} factors.
Specifically, let $\Dset_{i}$ denote the set of all $P_{i} \in \RE$ with $P_{i}(\X_{i} \times \Y) = 1$, for each $i \in \{1,2\}$.
For every $\gamma \in [0,1]$ and $i \in \{1,2\}$, define $\Dset_{i}(\gamma) = \Dset_{i}$.
In particular, note that for any $P \in \RE$, for any measurable $A \subseteq \X \times \Y$, 
$P(A) = P(\X_{1} \times \Y) P(A| \X_{1} \times \Y) + P(\X_{2} \times \Y) P(A | \X_{2} \times \Y)$.
Furthermore, note that any $i \in \{1,2\}$ with $P(\X_{i} \times \Y) > 0$ has $P(\cdot\times\Y | \X_{i} \times \Y)$
supported only in $\X_{i}$, and has $P(\cdot | \X_{i} \times \Y) \in \RE$, so that $P(\cdot | \X_{i} \times \Y) \in \Dset_{i}$.
Thus, $P \in \Dset = \{ \gamma P_{1} + (1-\gamma) P_{2} : P_{1} \in \Dset_{1}, P_{2} \in \Dset_{2}, \gamma \in [0,1] \}$.
Therefore, $\RE \subseteq \Dset$.
Together with Lemma~\ref{lem:general-gaps-lb-tight}, this implies $\forall \eps,\conf \in (0,1)$,
\begin{align*}
\LC_{\RE}(\eps,\conf) \leq \LC_{\Dset}(\eps,\conf) & \leq 2 \max\left\{ \LC_{1,0}\left(\frac{\eps}{2(1+\eps/8)},\frac{\conf}{3}\right), \LC_{2,0}\left(\frac{\eps}{2(1+\eps/8)},\frac{\conf}{2}\right) \right\}
\\ & \leq 2 \max\left\{ \LC_{1,0}\left(\frac{\eps}{3},\frac{\conf}{3}\right), \LC_{2,0}\left(\frac{\eps}{3},\frac{\conf}{2}\right) \right\},
\end{align*}
for $\LC_{i,0}(\cdot,\cdot)$ defined as above.

Now note that, since every $P_{1} \in \Dset_{1}$ has $P_{1}(\cdot \times \Y)$ supported only in $\X_{1}$, 
and $P_{1} \in \RE$, and since $\C_{1}$ contains classifiers realizing all $2^{\vc}$ distinct classifications of $\X_{1}$,
$\exists h_{P_{1}} \in \C_{1}$ with $\er_{P_{1}}(h_{P_{1}}) = 0$; thus, without loss, we can
take $\target_{P_{1}} = h_{P_{1}}$, so that $P_{1}$ is in the realizable case with respect to $\C_{1}$. 
In particular, since there are only $\vc$ points in $\X_{1}$,
if we consider the active learning algorithm that (given a budget $n \geq \vc$) simply requests $Y_{i}$ for exactly one $i$ s.t. $X_{i} = x$,
for each $x \in \X_{1}$ for which $\exists X_{i} = x$, and then returns any classifier $\hat{h}$ consistent
with these labels, if $\PXY \in \Dset_{1}$, with probability one every $x \in \X_{1}$ with $\PXY(\{x\}\times\Y) > 0$
has some $X_{i} = x$, so that $\er_{\PXY}(\hat{h}) = 0$.  Noting that this algorithm requests at most $\vc$ labels,
we have that $\forall \eps,\conf \in (0,1)$,
\begin{equation*}
\LC_{1,0}\left(\frac{\eps}{3},\frac{\conf}{3}\right) \leq \vc.
\end{equation*}

Similarly, since every $P_{2} \in \Dset_{2}$ has $P_{2}(\cdot \times \Y)$ supported only in $\X_{2}$,
and $P_{2} \in \RE$, $\target_{P_{2}}$ is either equal $-1$ with $P_{2}$-probability one,
or else $\exists x \in \{\vc+1,\ldots,\s\}$ with $\target_{P_{2}}(x) = +1$; in either case, $\exists h_{P_{2}} \in \C_{2}$
with $\er_{P_{2}}(h_{P_{2}}) = 0$; thus, without loss, we can take $\target_{P_{2}} = h_{P_{2}}$, so that $P_{2}$ is in the realizable case with respect to $\C_{2}$.
Now consider an active learning algorithm that first calculates the empirical frequency $\hat{\Px}(\{x\}) = \frac{1}{m} \sum_{i=1}^{m} \ind[ X_{i} = x ]$
for each $x \in \{\vc+1,\ldots,\s\}$
among the first $m = \left\lceil \frac{3^{4}}{2 \eps^{4}} \ln\left( \frac{3(\s-\vc)}{\conf} \right) \right\rceil$
unlabeled data points.  Then, for each $x \in \{\vc+1,\ldots,\s\}$, if $\hat{\Px}(\{x\}) > (1-\eps/3) \eps/3$, 
the algorithm requests $Y_{i}$ for the first $i \in \nats$ with $X_{i} = x$ (supposing the budget $n$ has not yet been reached).
If any requested value $Y_{i}$ equals $+1$, then for the $x \in \{\vc+1,\ldots,\s\}$ with $X_{i} = x$, 
the algorithm returns the classifier $x^{\prime} \mapsto 2 \ind_{\{x\}}(x^{\prime}) - 1$.
Otherwise, the algorithm returns the all-negative classifier: $x^{\prime} \mapsto -1$.
Denote by $\hat{h}$ the classifier returned by the algorithm.
By Hoeffding's inequality and a union bound, with probability at least $1-\conf/3$, every $x \in \{\vc+1,\ldots,\s\}$
has $\hat{\Px}(\{x\}) \geq \PXY(\{x\} \times \Y) - (\eps/3)^{2}$.  Also, if $\PXY \in \RE$, then with probability one, every $Y_{i} = \target_{\PXY}(X_{i})$.
Therefore, if $\PXY \in \Dset_{2}$, on these events, every $x \in \{\vc+1,\ldots,\s\}$ with $\PXY(\{x\} \times \Y) > \eps/3$
will have a label $Y_{i}$ with $X_{i} = x$ requested by the algorithm (supposing sufficiently large $n$), which implies $\hat{h}(x) = \target_{\PXY}(x)$.
Since $\target_{\PXY}$ has at most one $x \in \X_{2}$ with $\target_{\PXY}(x) = +1$, and if such an $x$ exists it must be in $\{\vc+1,\ldots,\s\}$,
if any requested $Y_{i} = +1$, we have $\er_{\PXY}(\hat{h}) = 0$, and otherwise either no $x \in \X_{2}$ has $\target_{\PXY}(x) = +1$ or else
the one such $x$ has $\PXY(\{x\} \times \Y) \leq \eps/3$; in either case, we have 
$\er_{\PXY}(\hat{h}) = \PXY(\{x : \target_{\PXY}(x) = +1\} \times \Y) \leq \eps/3$.
Thus, regardless of whether the algorithm requests a $Y_{i}$ with value $+1$, we have $\er_{\PXY}(\hat{h}) \leq \eps/3$.
By a union bound for the two events, we have that $\P( \er_{\PXY}(\hat{h}) > \eps/3 ) \leq \conf/3$ (given a sufficiently large $n$).
Furthermore, there are at most $\min\left\{ \s-\vc, \frac{1}{(1-\eps/3)\eps/3} \right\}$ points $x \in \{\vc+1,\ldots,\s\}$
with $\hat{\Px}(\{x\}) > (1-\eps/3) \eps/3$, and therefore at most this many labels $Y_{i}$ are requested by the algorithm.
Thus, a budget $n$ of at least this size suffices for this guarantee.
Since this holds for every $\PXY \in \Dset_{2}$, we have that
\begin{equation*}
\LC_{2,0}\left(\frac{\eps}{3},\frac{\conf}{3}\right) \leq \min\left\{ \s-\vc, \frac{1}{(1-\eps/3) \eps/3} \right\}
\lesssim \min\left\{ \s, \frac{1}{\eps} \right\}.
\end{equation*}

Altogether, we have that $\forall \eps,\conf \in (0,1)$,
\begin{equation*}
\LC_{\RE}(\eps,\conf) \lesssim \max\left\{ \min\left\{ \s, \frac{1}{\eps} \right\}, \vc \right\}.
\end{equation*}
Thus, the lower bound in Theorem~\ref{thm:realizable} is tight up to universal constant factors in this case.\footnote{The
term $\Log\left(\min\left\{\frac{1}{\eps},|\C|\right\}\right)$ in the lower bound is dominated by the other terms in this
example, so that this upper bound is still consistent with the existence of this term in the lower bound.}

\paragraph{Bounded noise:}
To prove that the lower bound in Theorem~\ref{thm:bounded} is sometimes tight, 
fix any $\bound \in (0,1/2)$, and 
let $\Dset_{i}$ denote the set of all $P_{i} \in \BN(\bound)$ with $P_{i}(\X_{i} \times \Y) = 1$, for each $i \in \{1,2\}$.
For all $\gamma \in [0,1]$ and $i \in \{1,2\}$, define $\Dset_{i}(\gamma) = \Dset_{i}$.
As above, note that for any $P \in \BN(\bound)$, for any measurable $A \subseteq \X \times \Y$, 
$P(A) = P(\X_{1} \times \Y) P(A| \X_{1} \times \Y) + P(\X_{2} \times \Y) P(A | \X_{2} \times \Y)$.
Furthermore, any $i \in \{1,2\}$ with $P(\X_{i} \times \Y) > 0$ has $P(\cdot\times\Y | \X_{i} \times \Y)$
supported only on $\X_{i}$, and since $\eta(x; P(\cdot | \X_{i} \times \Y)) = \eta(x; P)$ for every $x \in \X_{i}$,
we have $P(\cdot | \X_{i} \times \Y) \in \BN(\bound)$, so that $P(\cdot | \X_{i} \times \Y) \in \Dset_{i}$.
Thus, $P \in \Dset = \{ \gamma P_{1} + (1-\gamma) P_{2} : P_{1} \in \Dset_{1}, P_{2} \in \Dset_{2}, \gamma \in [0,1] \}$.
Therefore, $\BN(\bound) \subseteq \Dset$.
Together with Lemma~\ref{lem:general-gaps-lb-tight}, this implies $\forall \eps,\conf \in (0,1)$,
\begin{equation*}
\LC_{\BN(\bound)}(\eps,\conf) \leq \LC_{\Dset}(\eps,\conf) \leq 2 \max\left\{ \LC_{1,0}\left(\frac{\eps}{3},\frac{\conf}{3}\right), \LC_{2,0}\left(\frac{\eps}{3},\frac{\conf}{3}\right) \right\},
\end{equation*}
for $\LC_{i,0}(\cdot,\cdot)$ defined as above.

Now note that, for each $i \in \{1,2\}$, since every $P_{i} \in \Dset_{i}$ has $P_{i} \in \BN(\bound)$, 
we have $\target_{P_{i}} \in \C$.  Furthermore, since every $h \in \C \setminus \C_{i}$ has $h(x) = -1$ for every $x \in \X_{i}$, 
and the all-negative function $x \mapsto -1$ is contained in $\C_{i}$, and since $P_{i}(\X_{i} \times \Y) = 1$, 
without loss we can take $\target_{P_{i}} \in \C_{i}$ (i.e., there is a version of $\target_{P_{i}}$ contained in $\C_{i}$).
Together with the condition on $\eta(\cdot;P_{i})$ from the definition of $\BN(\bound)$, this implies each $P_{i}$
satisfies the bounded noise condition (with parameter $\bound$) with respect to $\C_{i}$.

Since this is true of every $P_{1} \in \Dset_{1}$, and the star number and VC dimension of $\C_{1}$ are both equal $\vc$,
the upper bound in Theorem~\ref{thm:bounded} implies
$\forall \eps \in (0,(1-2\bound)/8)$, $\conf \in (0,1/8]$, 
\begin{equation*}
\LC_{1,0}\left(\frac{\eps}{3},\frac{\conf}{3}\right) \lesssim \frac{1}{(1-2\bound)^{2}} \vc \cdot \polylog\left(\frac{\vc}{\eps\conf}\right).
\end{equation*}
Similarly, 
since every $P_{2} \in \Dset_{2}$ satisfies the bounded noise condition (with parameter $\bound$) with respect to $\C_{2}$, 
and the star number of $\C_{2}$ is $\s-\vc \leq \s$ while the VC dimension of $\C_{2}$ is $1$,
the upper bound in Theorem~\ref{thm:bounded} implies
$\forall \eps \in (0,(1-2\bound)/8)$, $\conf \in (0,1/8]$, 
\begin{equation*}
\LC_{2,0}\left(\frac{\eps}{3},\frac{\conf}{3}\right) \lesssim \frac{1}{(1-2\bound)^{2}} \min\left\{ \s, \frac{1-2\bound}{\eps} \right\} \polylog\left(\frac{1}{\eps\conf}\right).
\end{equation*}
Altogether, we have that
\begin{equation*}
\LC_{\BN(\bound)}(\eps,\conf) \lesssim \frac{1}{(1-2\bound)^{2}} \max\left\{\min\left\{ \s, \frac{1-2\bound}{\eps} \right\}, \vc \right\} \polylog\left(\frac{\vc}{\eps\conf}\right).
\end{equation*}
For $\bound$ bounded away from $0$, this is within logarithmic factors of the lower bound in Theorem~\ref{thm:bounded},
so that we may conclude that the lower bound is sometimes tight to within logarithmic factors in this case.
Furthermore, when $\bound$ is near $0$, it is within logarithmic factors of the lower bound in Theorem~\ref{thm:realizable}, 
which is also a lower bound on $\LC_{\BN(\bound)}(\eps,\conf)$ since $\RE \subseteq \BN(\bound)$; thus, this 
inherited lower bound on $\LC_{\BN(\bound)}(\eps,\conf)$ is also sometimes tight to within logarithmic factors
when $\bound$ is near $0$.

\paragraph{Tsybakov noise:}
The case of Tsybakov noise is slightly more involved than the above.  In this case, fix any $\tsybca \in [1,\infty)$, $\tsyba \in (0,1)$.
Since the upper bound in Theorem~\ref{thm:tsybakov} already matches the lower bound up to logarithmic factors
when $\tsyba \in (0,1/2]$, it suffices to focus on the case $\tsyba \in (1/2,1)$.
In this case, for $\gamma \in (0,1]$, 
let $\Dset_{i}(\gamma)$ denote the set of all $P_{i} \in \TN(\tsybca/\gamma^{1-\tsyba},\tsyba)$ with $P_{i}(\X_{i} \times \Y) = 1$, for each $i \in \{1,2\}$.
Also let $\Dset_{i}(0)$ denote the set of all probability measures $P_{i}$ with $P_{i}(\X_{i} \times \Y) = 1$, for each $i \in \{1,2\}$.
Again, for any $P \in \TN(\tsybca,\tsyba)$, $P(\cdot) = P(\X_{1} \times \Y) P(\cdot | \X_{1} \times \Y) + P(\X_{2} \times \Y) P(\cdot | \X_{2} \times \Y)$,
and for any $i \in \{1,2\}$ with $P(\X_{i} \times \Y) > 0$, $P(\cdot \times \Y | \X_{i} \times \Y)$ is supported 
only in $\X_{i}$, and $\eta(\cdot ; P(\cdot | \X_{i} \times \Y)) = \eta(\cdot; P)$ on $\X_{i}$,
so that for any $t > 0$, 
\begin{align*}
& P\left( \left\{x : \left|\eta(x; P(\cdot | \X_{i} \times \Y)) - 1/2\right| \leq t \right\} \times \Y \Big| \X_{i} \times \Y \right)
\\ & = \frac{1}{P(\X_{i} \times \Y)} P\left( \left\{ x \in \X_{i} : \left| \eta(x;P) - 1/2 \right| \leq t \right\} \times \Y \right)
\\ & \leq \frac{1}{P(\X_{i} \times \Y)} \tsybca^{\prime} t^{\tsyba/(1-\tsyba)}
= (1-\tsyba) (2\tsyba)^{\tsyba/(1-\tsyba)} \left(\frac{\tsybca}{P(\X_{i} \times \Y)^{1-\tsyba}}\right)^{1/(1-\tsyba)} t^{\tsyba/(1-\tsyba)}.
\end{align*}
Also, since $\target_{P} \in \C$, and $\eta(\cdot ; P(\cdot | \X_{i} \times \Y)) = \eta(\cdot;P)$ on $\X_{i}$, 
we can take $\target_{P(\cdot|\X_{i} \times \Y)}(x) = \target_{P}(x)$ for every $x \in \X_{i}$, 
so that there exists a version of $\target_{P(\cdot | \X_{i} \times \Y)}$ contained in $\C$.
Together, these imply that $P(\cdot | \X_{i} \times \Y) \in \Dset_{i}( P(\X_{i} \times \Y) )$.
We therefore have that $\forall P \in \TN(\tsybca,\tsyba)$, $P = \gamma P_{1} + (1-\gamma) P_{2}$ for some $\gamma \in [0,1]$, $P_{1} \in \Dset_{1}(\gamma)$, and $P_{2} \in \Dset_{2}(1-\gamma)$:
that is, $\TN(\tsybca,\tsyba) \subseteq \Dset$, for $\Dset$ as in Lemma~\ref{lem:general-gaps-lb-tight} (with respect to these definitions of $\Dset_{i}(\cdot)$).
Therefore, Lemma~\ref{lem:general-gaps-lb-tight} implies that $\forall \eps,\conf \in (0,1)$, 
\begin{multline}
\label{eqn:tn-lb-tight}
\LC_{\TN(\tsybca,\tsyba)}(\eps,\conf) \leq \LC_{\Dset}(\eps,\conf)
\\ \lesssim \sup_{\gamma \in [0,1]} \max\left\{ \LC_{1,(\gamma-\eps/8) \lor 0}\left( \frac{\eps}{2(\gamma+\eps/8)}, \frac{\conf}{3} \right), \LC_{2,(1-\gamma-\eps/8) \lor 0}\left( \frac{\eps}{2(1-\gamma+\eps/8)}, \frac{\conf}{3} \right) \right\}.
\end{multline}

First note that, for the case $\gamma \leq \eps/4$, we trivially have 
\begin{equation*}
\LC_{1,(\gamma-\eps/8) \lor 0}\left( \frac{\eps}{2(\gamma+\eps/8)},\frac{\conf}{3}\right) \leq \LC_{1,0}\left( \frac{\eps}{2(\gamma+\eps/4)}, \frac{\conf}{3} \right) \leq \LC_{1,0}\left( 1, \frac{\conf}{3} \right) = 0,
\end{equation*}
and similarly for the case $\gamma \geq 1-\eps/4$, we have $\LC_{2,(1-\gamma-\eps/8) \lor 0}\left(\frac{\eps}{2(1-\gamma+\eps/8)}, \frac{\conf}{3}\right) = 0$.

For the remaining cases, for any $\gamma \in (0,1]$, since every $P_{i} \in \Dset_{i}(\gamma)$
has $\target_{P_{i}} \in \C$, and every $h \in \C \setminus \C_{i}$ has $h(x) = -1$ for every $x \in \X_{i}$, 
and the all-negative function $x \mapsto -1$ is contained in $\C_{i}$, and $P_{i}(\X_{i} \times \Y) = 1$,
without loss we can take $\target_{P_{i}} \in \C_{i}$.  Together with the definition of $\Dset_{i}(\gamma)$,
we have that $\Dset_{i}(\gamma)$ is contained in the set of probability measures $P_{i}$ satisfying the 
Tsybakov noise condition with respect to the hypothesis class $\C_{i}$, with parameters
$\frac{\tsybca}{\gamma^{1-\tsyba}}$ and $\tsyba$.  Therefore, since the star number and VC dimension of $\C_{1}$ are both $\vc$, 
Theorem~\ref{thm:tsybakov} implies that for any $\gamma \in (\eps/4,1]$,\footnote{Recall that, as mentioned in Section~\ref{sec:main}, the 
upper bounds on the label complexities stated in Section~\ref{sec:main} hold without the stated restrictions
on the values $\eps,\conf \in (0,1)$ and $\tsybca$.}
\begin{align*}
& \LC_{1,\gamma-\eps/8}\left( \frac{\eps}{2(\gamma+\eps/8)}, \frac{\conf}{3} \right)
\leq \LC_{1,\gamma/2}\left( \frac{\eps}{3\gamma}, \frac{\conf}{3} \right)
\\ & \lesssim \left( \frac{\tsybca}{ \gamma^{1-\tsyba} } \right)^{2} \left(\frac{\gamma}{\eps}\right)^{2-2\tsyba} \vc \cdot \polylog\left(\frac{\vc}{\eps\conf}\right)
= \tsybca^{2} \left(\frac{1}{\eps}\right)^{2-2\tsyba} \vc \cdot \polylog\left(\frac{\vc}{\eps\conf}\right).
\end{align*}
Similarly, since the star number of $\C_{2}$ is $\s-\vc$ and the VC dimension of $\C_{2}$ is $1$, 
Theorem~\ref{thm:tsybakov} implies that for any $\gamma \in [0,1-\eps/4)$, 
\begin{align*}
& \LC_{2,1-\gamma-\eps/8}\left( \frac{\eps}{2(1-\gamma+\eps/8)}, \frac{\conf}{3} \right) 
\leq \LC_{2,(1-\gamma)/2}\left( \frac{\eps}{3(1-\gamma)}, \frac{\conf}{3} \right) 
\\ & \lesssim \left( \frac{\tsybca}{ (1-\gamma)^{1-\tsyba} } \right)^{2} \left(\frac{1-\gamma}{\eps}\right)^{2-2\tsyba} \min\left\{\s-\vc,\frac{(1-\gamma)^{1/\tsyba} (1-\gamma)}{\tsybca^{1/\tsyba} \eps}\right\}^{2\tsyba-1} \polylog\left(\frac{1}{\eps\conf}\right)
\\ & \leq \tsybca^{2} \left(\frac{1}{\eps}\right)^{2-2\tsyba} \min\left\{\s,\frac{1}{\tsybca^{1/\tsyba} \eps}\right\}^{2\tsyba-1} \polylog\left(\frac{1}{\eps\conf}\right).
\end{align*}
Plugging this into \eqref{eqn:tn-lb-tight}, we have that
\begin{equation*}
\LC_{\TN(\tsybca,\tsyba)}(\eps,\conf) \lesssim 
\tsybca^{2} \left(\frac{1}{\eps}\right)^{2-2\tsyba} \max\left\{\min\left\{\s,\frac{1}{\tsybca^{1/\tsyba} \eps}\right\}^{2\tsyba-1}, \vc \right\} \polylog\left(\frac{\vc}{\eps\conf}\right).
\end{equation*}
As claimed, this is within logarithmic factors of the lower bound in Theorem~\ref{thm:tsybakov} (for $1/2 < \tsyba < 1$, $\tsybca \geq 4$, $\eps \in (0,1/(24\tsybca^{1/\tsyba}))$, and $\conf \in (0,1/24]$),
so that, combined with the tightness (always) for the case $0 < \tsyba \leq 1/2$, we may conclude that the lower bounds in Theorem~\ref{thm:tsybakov} are sometimes tight to within logarithmic factors.

\paragraph{Benign noise:}
The case of benign noise proceeds analogously to the above.
Since $\BE(0) = \RE$, tightness of the lower bound for the case $\nu=0$ (up to constant factors) 
has already been addressed above (supposing we include the lower bound from Theorem~\ref{thm:realizable} as 
a lower bound on $\LC_{\BE(\nu)}(\eps,\conf)$ to strengthen the lower bound in Theorem~\ref{thm:benign}).
For the remainder, we suppose $\nu \in (0,1/2)$.
For $\gamma \in [0,1]$, 
let $\Dset_{i}(\gamma)$ denote the set of all $P_{i} \in \BE(\nu/(\gamma \lor 2\nu))$ with $P_{i}(\X_{i} \times \Y) = 1$, for each $i \in \{1,2\}$.
Again, for any $P \in \BE(\nu)$, $P(\cdot) = P(\X_{1} \times \Y) P(\cdot | \X_{1} \times \Y) + P(\X_{2} \times \Y) P(\cdot | \X_{2} \times \Y)$,
and for any $i \in \{1,2\}$ with $P(\X_{i} \times \Y) > 0$, $P(\cdot \times \Y | \X_{i} \times \Y)$ is supported 
only in $\X_{i}$, and $\eta(\cdot ; P(\cdot | \X_{i} \times \Y)) = \eta(\cdot; P)$ on $\X_{i}$,
so that we can take $\target_{P(\cdot | \X_{i} \times \Y)}(x) = \target_{P}(x)$ for every $x \in \X_{i}$;
thus, there is a version of $\target_{P(\cdot | \X_{i} \times \Y)}$ contained in $\C$.
Furthermore,
\begin{align*}
\er_{P(\cdot | \X_{i} \times \Y)}(\target_{P(\cdot | \X_{i} \times \Y)})
& = \frac{1}{P(\X_{i} \times \Y)} P\left( (x,y) : \target_{P}(x) \neq y \text{ and } x \in \X_{i} \right)
\\ & \leq \frac{1}{P(\X_{i} \times \Y)} P\left( (x,y) : \target_{P}(x) \neq y \right)
\leq \frac{\nu}{P(\X_{i} \times \Y)}.
\end{align*}
Also, since every $x \in \X_{i}$ has $\target_{P(\cdot | \X_{i} \times \Y)}(x) = \target_{P}(x) = \sign( 2 \eta(x;P) - 1 ) = \sign( 2 \eta(x;P(\cdot | \X_{i} \times \Y)) - 1 )$,
we have $P( (x,y) : \target_{P(\cdot | \X_{i} \times \Y)}(x) = y | x \in \X_{i} ) \geq 1/2$, so that $\er_{P(\cdot | \X_{i} \times \Y)}(\target_{P(\cdot | \X_{i} \times \Y)}) \leq 1/2$.
Together, these imply that $P(\cdot | \X_{i} \times \Y) \in \Dset_{i}( P(\X_{i} \times \Y) )$.
We therefore have that $\forall P \in \BE(\nu)$, $P = \gamma P_{1} + (1-\gamma) P_{2}$ for some $\gamma \in [0,1]$, $P_{1} \in \Dset_{1}(\gamma)$, and $P_{2} \in \Dset_{2}(1-\gamma)$:
that is, $\BE(\nu) \subseteq \Dset$, for $\Dset$ as in Lemma~\ref{lem:general-gaps-lb-tight} (with respect to these definitions of $\Dset_{i}(\cdot)$).
Therefore, Lemma~\ref{lem:general-gaps-lb-tight} implies that $\forall \eps,\conf \in (0,1)$, 
\begin{multline}
\label{eqn:be-lb-tight}
\LC_{\BE(\nu)}(\eps,\conf) \leq \LC_{\Dset}(\eps,\conf)
\\ \lesssim \sup_{\gamma \in [0,1]} \max\left\{ \LC_{1,(\gamma-\eps/8) \lor 0}\left( \frac{\eps}{2(\gamma+\eps/8)}, \frac{\conf}{3} \right), \LC_{2,(1-\gamma-\eps/8) \lor 0}\left( \frac{\eps}{2(1-\gamma+\eps/8)}, \frac{\conf}{3} \right) \right\}.
\end{multline}

First note that, as above, for the case $\gamma \leq \eps/4$, we trivially have 
\begin{equation*}
\LC_{1,(\gamma-\eps/8) \lor 0}\left( \frac{\eps}{2(\gamma+\eps/8)},\frac{\conf}{3}\right) \leq \LC_{1,0}\left( \frac{\eps}{2(\gamma+\eps/4)}, \frac{\conf}{3} \right) \leq \LC_{1,0}\left( 1, \frac{\conf}{3} \right) = 0,
\end{equation*}
and similarly for the case $\gamma \geq 1-\eps/4$, we have $\LC_{2,(1-\gamma-\eps/8) \lor 0}\left(\frac{\eps}{2(1-\gamma+\eps/8)}, \frac{\conf}{3}\right) = 0$.

For the remaining cases, for any $\gamma \in (0,1]$, since every $P_{i} \in \Dset_{i}(\gamma)$
has $\target_{P_{i}} \in \C$, and every $h \in \C \setminus \C_{i}$ has $h(x) = -1$ for every $x \in \X_{i}$, 
and the all-negative function $x \mapsto -1$ is contained in $\C_{i}$, and $P_{i}(\X_{i} \times \Y) = 1$,
without loss we can take $\target_{P_{i}} \in \C_{i}$.  Together with the definition of $\Dset_{i}(\gamma)$,
we have that $\Dset_{i}(\gamma)$ is contained in the set of probability measures $P_{i}$ satisfying the 
benign noise condition with respect to the hypothesis class $\C_{i}$, with parameter
$\frac{\nu}{\gamma} \land \frac{1}{2}$.
Therefore, since the star number and VC dimension of $\C_{1}$ are both $\vc$, 
Theorem~\ref{thm:benign} implies that for any $\gamma \in (\eps/4,1]$,\footnote{Again, as mentioned in Section~\ref{sec:main}, 
the restrictions on $\eps,\conf$ stated in Theorem~\ref{thm:benign} are only required for the lower bounds.}
\begin{align*}
\LC_{1,\gamma-\eps/8}\left( \frac{\eps}{2(\gamma+\eps/8)}, \frac{\conf}{3} \right)
\leq \LC_{1,\gamma/2}\left( \frac{\eps}{3\gamma}, \frac{\conf}{3} \right)
& \lesssim \left( \frac{(\nu/\gamma)^{2}}{(\eps/\gamma)^{2}} \vc + \vc \right) \polylog\left(\frac{\vc}{\eps\conf}\right)
\\ & \lesssim \left( \frac{\nu^{2}}{\eps^{2}} \lor 1 \right) \vc \cdot \polylog\left(\frac{\vc}{\eps\conf}\right).
\end{align*}
Similarly, since the star number of $\C_{2}$ is $\s-\vc$ and the VC dimension of $\C_{2}$ is $1$, 
Theorem~\ref{thm:benign} implies that for any $\gamma \in [0,1-\eps/4)$, 
\begin{align*}
& \LC_{2,1-\gamma-\eps/8}\left( \frac{\eps}{2(1-\gamma+\eps/8)}, \frac{\conf}{3} \right) 
\leq \LC_{2,(1-\gamma)/2}\left( \frac{\eps}{3(1-\gamma)}, \frac{\conf}{3} \right) 
\\ & \lesssim \left( \frac{(\nu/(1-\gamma))^{2}}{(\eps/(1-\gamma))^{2}} + \min\left\{ \s-\vc, \frac{1}{\eps} \right\} \right) \polylog\left(\frac{1}{\eps\conf}\right)
\lesssim \left( \frac{\nu^{2}}{\eps^{2}} \lor \min\left\{ \s, \frac{1}{\eps} \right\} \right) \polylog\left(\frac{1}{\eps\conf}\right).
\end{align*}
Plugging these into \eqref{eqn:be-lb-tight}, we have that for $\eps \in (0,\nu)$, 
\begin{equation*}
\LC_{\BE(\nu)}(\eps,\conf)
\lesssim \left( \frac{\nu^{2}}{\eps^{2}} \vc + \min\left\{ \s, \frac{1}{\eps} \right\} \right) \polylog\left(\frac{\vc}{\eps\conf}\right).
\end{equation*}
Again, this is within logarithmic factors of the lower bound in Theorem~\ref{thm:benign} (for $\eps \in (0,(1-2\nu)/24)$ and $\conf \in (0,1/24]$),
so that we may conclude that this lower bound is sometimes tight to within logarithmic factors when $\nu$ is not near $0$ (specifically, when $\eps < \nu$).
For $\nu \leq \eps$, the above implies 
\begin{equation*}
\LC_{\BE(\nu)}(\eps,\conf) \lesssim \max\left\{ \vc, \min\left\{ \s, \frac{1}{\eps} \right\} \right\} \polylog\left(\frac{\vc}{\eps\conf}\right),
\end{equation*}
which is within logarithmic factors of the lower bound in Theorem~\ref{thm:realizable} (for $\eps \in (0,1/9)$ and $\conf \in (0,1/3)$).  
Since $\RE \subseteq \BE(\nu)$, this is also a lower bound on $\LC_{\BE(\nu)}(\eps,\conf)$.  Thus, in this case, we may conclude that this 
inherited lower bound on $\LC_{\BE(\nu)}(\eps,\conf)$ is sometimes tight to within logarithmic factors, for $\nu$ near $0$ (specifically, when $\eps \geq \nu$).

\bibliography{learning}

\end{document}